\theoremstyle{plain}
\newtheorem{theorem}{Theorem}[chapter]
\newtheorem{corollary}[theorem]{Corollary}
\newtheorem{lemma}[theorem]{Lemma}
\newtheorem{proposition}[theorem]{Proposition}
\theoremstyle{definition}
\newtheorem{definition}{Definition}[chapter]
\theoremstyle{remark}
\newtheorem{remark}{Remark}[chapter]
\numberwithin{equation}{chapter}
\newtheorem{example}{Example}
\newtheorem{assumption}{Assumption}
\newcommand{\C}{\mathbb{C}} 
\newcommand{\E}{\mathbb{E}}
\newcommand{\II}{\mathbb{I}}
\newcommand{\N}{\mathbb{N}} 
\newcommand{\PP}{\mathbb{P}} 
\newcommand{\R}{\mathbb{R}} 
\newcommand{\SSS}{\mathbb{S}}
\newcommand{\bb}{\mathbf{b}}
\newcommand{\eb}{\mathbf{e}}
\newcommand{\gb}{\mathbf{g}}
\newcommand{\pb}{\mathbf{p}}
\newcommand{\qb}{\mathbf{q}}
\newcommand{\rb}{\mathbf{r}}
\renewcommand{\sb}{\mathbf{s}}
\newcommand{\ub}{\mathbf{u}}
\newcommand{\vb}{\mathbf{v}}
\newcommand{\wb}{\mathbf{w}}
\newcommand{\xb}{\mathbf{x}}
\newcommand{\yb}{\mathbf{y}}
\newcommand{\zb}{\mathbf{z}}
\newcommand{\Ab}{\mathbf{A}}
\newcommand{\Bb}{\mathbf{B}}
\newcommand{\Cb}{\mathbf{C}}
\newcommand{\Eb}{\mathbf{E}}
\newcommand{\Gb}{\mathbf{G}}
\newcommand{\Ib}{\mathbf{I}}
\newcommand{\Lb}{\mathbf{L}}
\newcommand{\Mb}{\mathbf{M}}
\newcommand{\Nb}{\mathbf{N}}
\newcommand{\Pb}{\mathbf{P}}
\newcommand{\Qb}{\mathbf{Q}}
\newcommand{\Rb}{\mathbf{R}}
\newcommand{\Sb}{\mathbf{S}}
\newcommand{\Tb}{\mathbf{T}}
\newcommand{\Ub}{\mathbf{U}}
\newcommand{\Vb}{\mathbf{V}}
\newcommand{\Wb}{\mathbf{W}}
\newcommand{\Xb}{\mathbf{X}}
\newcommand{\Yb}{\mathbf{Y}}
\newcommand{\Zb}{\mathbf{Z}}
\newcommand{\Acal}{\mathcal{A}}
\newcommand{\Bcal}{\mathcal{B}}
\newcommand{\Ccal}{\mathcal{C}}
\newcommand{\Dcal}{\mathcal{D}}
\newcommand{\Ecal}{\mathcal{E}}
\newcommand{\Fcal}{\mathcal{F}}
\newcommand{\Hcal}{\mathcal{H}}
\newcommand{\Ical}{\mathcal{I}}
\newcommand{\Ncal}{\mathcal{N}}
\newcommand{\Scal}{{\mathcal{S}}}
\newcommand{\Ucal}{\mathcal{U}}
\newcommand{\Vcal}{\mathcal{V}}
\newcommand{\Wcal}{\mathcal{W}}
\newcommand{\Xcal}{\mathcal{X}}
\newcommand{\Ycal}{\mathcal{Y}}
\newcommand{\Zcal}{\mathcal{Z}}
\newcommand{\Gfrak}{\mathfrak{G}}
\newcommand{\Rfrak}{\mathfrak{R}}
\newcommand{\eps}{\epsilon}
\newcommand{\betab}{\boldsymbol{\beta}}
\newcommand{\deltab}{\boldsymbol{\delta}}
\newcommand{\epsb}{\boldsymbol{\eps}}
\newcommand{\zetab}{\boldsymbol{\zeta}}
\newcommand{\thetab}{\boldsymbol{\theta}}
\newcommand{\vthetab}{\boldsymbol{\vartheta}}
\newcommand{\nub}{\boldsymbol{\nu}}
\newcommand{\sigmab}{\boldsymbol{\sigma}}
\newcommand{\Gammab}{\boldsymbol{\Gamma}}
\newcommand{\Deltab}{\boldsymbol{\Delta}}
\newcommand{\Pib}{\boldsymbol{\Pi}}
\newcommand{\Sigmab}{\boldsymbol{\Sigma}}
\newcommand{\Phib}{\boldsymbol{\Phi}}
\newcommand{\Omegab}{\boldsymbol{\Omega}}
\newcommand{\dfeq}{\triangleq}
\newcommand{\aleq}{\preccurlyeq}
\newcommand{\ageq}{\succcurlyeq}
\newcommand{\pinv}{\dagger}
\newcommand*{\poly}{\mathop{\mathrm{poly}}}
\newcommand*{\argmin}{\mathop{\mathrm{argmin}}}
\newcommand*{\argmax}{\mathop{\mathrm{argmax}}}
\newcommand{\sgn}{\mathop{\mathrm{sign}}}
\newcommand{\tr}{\mathop{\mathrm{tr}}}
\newcommand{\diag}{\mathop{\mathrm{diag}}}
\newcommand{\rank}{\mathop{\mathrm{rank}}}
\newcommand{\range}{\mathop{\mathrm{Range}}}
\newcommand{\nnz}{\mathop{\mathrm{nnz}}}
\newcommand{\spn}{\mathop{\mathrm{span}}}
\newcommand{\row}{\mathop{\mathrm{row}}}
\newcommand{\col}{\mathop{\mathrm{col}}}
\newcommand{\nucnorm}[1]{{\left\vert\kern-0.25ex\left\vert\kern-0.25ex\left\vert #1 
    \right\vert\kern-0.25ex\right\vert\kern-0.25ex\right\vert}}
\newcommand{\wh}[1]{\widehat{#1}}
\newcommand{\wt}[1]{\widetilde{#1}}
\newcommand{\eg}{\emph{e.g.}\xspace}
\newcommand{\ie}{\emph{i.e.}\xspace}
\newcommand{\iid}{\emph{i.i.d.}\xspace}
\newcommand{\cf}{\emph{cf.}\xspace}
\newcommand{\whp}{\emph{w.h.p.}\xspace}
\newcommand{\rbr}[1]{\left(#1\right)}
\newcommand{\sbr}[1]{\left[#1\right]}
\newcommand{\cbr}[1]{\left\{#1\right\}}
\newcommand{\nbr}[1]{\left\|#1\right\|}
\newcommand{\abbr}[1]{\left\vert#1\right\vert}
\newcommand{\abr}[1]{\langle#1\rangle}
\newcommand{\nnbr}[1]{{\left\vert\kern-0.25ex\left\vert\kern-0.25ex\left\vert #1 \right\vert\kern-0.25ex\right\vert\kern-0.25ex\right\vert}}
\newcommand{\rsep}[2]{\left(#1 \middle\vert #2 \right)}
\newcommand{\ssepp}[2]{\left[#1 ~\middle\vert~ #2 \right]}
\newcommand{\csepp}[2]{\left\{#1 ~\middle\vert~ #2 \right\}}
\newcommand{\cpar}[1]{\{#1\}}
\newcommand{\bpar}[1]{\Big( #1 \Big)}
\newcommand{\norm}[1]{\|#1\|}
\newcommand{\bmat}[1]{\begin{bmatrix} #1 \end{bmatrix}}
\definecolor{commentcolor}{RGB}{110,154,155}   
\definecolor{cyan}{rgb}{0.0, 1.0, 1.0}
\definecolor{magenta}{rgb}{0.79, 0.08, 0.48}
\definecolor{cssgreen}{rgb}{0.0, 0.5, 0.0}
\newcommand{\bs}[1]{\boldsymbol{#1}}
\renewcommand{\b}{\textbf}
\renewcommand{\t}[1]{\text{#1}}
\newcommand{\cid}[2]{\wh{#1}_{*,#2}}
\newcommand{\rid}[2]{\wh{#1}_{#2,*}}
\newcommand{\cur}[2]{\wt{#1}_{#2}}
\newcommand{\tsid}[2]{\wh{#1}_{#2}}
\newcommand{\oc}[1]{{#1^{\perp}}}
\newcommand{\pobx}{\Pb_{X}}
\newcommand{\pobc}{\Pb_{C}}
\newcommand{\pqr}{CPQR\xspace}
\newcommand{\plu}{LUPP\xspace}
\newcommand{\ortho}{\mathop{\mathrm{ortho}}}
\newcommand{\EER}{\t{EER}}
\newcommand{\dacop}{T^{dac}_{\widetilde \Acal, \Xb}}
\newcommand{\dau}{d_{\textit{aug}}}
\newcommand{\Aemp}{\Acal}
\newcommand{\hgdacfin}{\wh{h}^{dac}}
\newcommand{\nbh}{\mathit{NB}}
\newcommand{\Cnull}{C_{\Ncal}}
\newcommand{\herm}{da-erm}
\newcommand{\Rndac}{\Rfrak}
\newcommand{\covtr}{\Sigmab_\Xb}
\newcommand{\covaug}{\Sigmab_{\Deltab}}
\newcommand{\covaugwt}{\Sigmab_{\wt\Deltab}}
\newcommand{\covall}{\Sigmab_{\wt\Acal\rbr{\Xb}}}
\newcommand{\covs}{\Sigmab_{\Sb}}
\newcommand{\projnull}{\Pb^{\perp}_{\Deltab}}
\newcommand{\projrg}{\Pb_{\Deltab}}
\newcommand{\projAX}{\Pb_{\wt\Acal\rbr{\Xb}}}
\newcommand{\constmis}{C_{\textit{mis}}}
\newcommand{\Pgt}{P}
\newcommand{\Hred}{\Hcal_{\textit{dac}}}
\newcommand{\kernel}{\mathop{\mathrm{Null}}}
\newcommand{\dsc}{\textit{DSC}}
\newcommand{\iloc}[2]{{#1}_{[#2]}}
\newcommand{\iffun}[1]{\II\cbr{#1}}
\newcommand{\wdac}{\textit{WAC}}
\newcommand{\Fnb}[1]{\Fcal_{\theta^*}\rbr{#1}}
\newcommand{\lambdac}{\lambda_{\textit{AC}}}
\newcommand{\lossce}{\ell_{\textit{CE}}}
\newcommand{\lossdice}{\ell_{\textit{DICE}}}
\newcommand{\regdac}{\ell_{\textit{AC}}}
\newcommand{\ours}{\textit{AdaWAC}\xspace}
\newcommand{\best}[1]{\textbf{#1}}
\author{Yijun Dong}                        
\address{ydong@utexas.edu}                    
\title{Randomized Dimension Reduction with Statistical Guarantees}      
\begin{document} 

\copyrightpage

\commcertpage
\titlepage




\begin{acknowledgments} 
Words are insufficient to convey my appreciation to my advisors, Prof. Per-Gunnar Martinsson and Prof. Rachel Ward, for their guidance and support throughout my Ph.D. journey. Their knowledge and insights have been lighthouses that navigate me in the fascinating realms of numerical linear algebra and machine learning. They have provided invaluable advice for both my research and my career, beyond the scope of this thesis. They endow me with the horizon to broaden my view and explore different research areas. Their passion and devotion to research encourage me to follow and try pursuing an academic career.

In addition to my advisors, I have had the fortune to collaborate with and learn from some great applied mathematicians and computer scientists during my doctoral research, including Shuo Yang, Prof. Sujay Sanghavi, Prof. Inderjit Dhillon, Prof. Qi Lei, Yuege Xie, Prof. Yuji Nakatsukasa, Kevin Miller, Kate Pearce, and Chao Chen. Works presented in this thesis and beyond could not have been finished without their efforts. More importantly, their meticulousness and diligence motivate me as a researcher; while their erudition helps diversify my sight from various aspects.

In particular, I would like to give my profound thanks to Prof. Qi Lei, who has been an inspiring mentor, as well as a supportive friend, since the beginning of my graduate study, who introduced me to the splendid field of statistical learning theory, and who constantly influences me with her vision and attitude towards research.

Meanwhile, I am sincerely grateful for the constructive suggestions and generous help of Prof. Joseph Kileel, Prof. George Biros, and Prof. Yuji Nakatsukasa, together with my advisors, who kindly serve on my thesis committee. Especially, I would like to thank Prof. Yuji Nakatsukasa for the insightful discussions and guidance on several projects in numerical linear algebra.

I truly appreciate the experience in Prof. Martinsson's and Prof. Ward's research groups, both of which involve brilliant researchers and collaborative environments where I have learned millions. I am thankful to everyone in both groups, especially my collaborators among them: Yuege Xie, Kevin Miller, Kate Pearce, and Chao Chen; along with Anna Yesypenko, Ke Chen, Bowei Wu, Heather Wilber, Ruhui Jin, Amelia Henriksen, Xiaoxia Wu, and more for their enlightening advice at different stages of my graduate study. It is also my fortune to be a part of the broader Oden community, learning from its incredible variety of research directions while enjoying its inclusiveness.

Despite the fleeting time, my four years as an undergraduate student at Emory University before graduate school were irreplaceable for my professional and personal development. Looking back, I am genuinely grateful to my undergraduate advisors Prof. Effrosyni Seitaridou and Prof. Eric Weeks for patiently unveiling a corner of the research world to a curious undergraduate, as well as for encouraging me to follow my interests while pursuing doctoral study along a different direction. I was also fortunate enough to make some cherished friendships during my undergraduate, among which I am truly thankful to Xiaoyi Zhang whose wisdom and attitude on life have always inspired and motivated me since our paths first crossed. 

Finally, I would like to give my wholehearted gratitude to my parents. I owe them for being mostly away from home in the past nine years and probably more years to come, for the absence during their struggles in the COVID-19 pandemic, and everything. Their unconditional love and support have never been diminished by distance.

\end{acknowledgments}

\utabstract 
\indent
Large models and enormous data are essential driving forces of the unprecedented successes achieved by modern algorithms, especially in scientific computing and machine learning. Nevertheless, the growing dimensionality and model complexity, as well as the non-negligible workload of data pre-processing, also bring formidable costs to such successes in both computation and data aggregation. As the deceleration of Moore's Law slackens the cost reduction of computation from the hardware level, fast heuristics for expensive classical routines and efficient algorithms for exploiting limited data are increasingly indispensable for pushing the limit of algorithm potency. This thesis explores some of such algorithms for fast execution and efficient data utilization. 
\begin{enumerate}[nosep]
    \item From the \textbf{computational efficiency} perspective, we design and analyze fast randomized low-rank decomposition algorithms for large matrices based on ``matrix sketching'', which can be regarded as a dimension reduction strategy in the data space. These include the \textbf{randomized pivoting-based interpolative and CUR decomposition} discussed in \Cref{ch:cur} and the \textbf{randomized subspace approximations} discussed in \Cref{ch:svra}.
    \item From the \textbf{sample efficiency} perspective, we focus on learning algorithms with various incorporations of data augmentation that improve generalization and distributional robustness provably. Specifically, \Cref{ch:dac} presents a sample complexity analysis for \textbf{data augmentation consistency regularization} where we view sample efficiency from the lens of dimension reduction in the function space. 
    Then in \Cref{ch:adawac}, we introduce an \textbf{adaptively weighted data augmentation consistency regularization} algorithm for distributionally robust optimization with applications in medical image segmentation.
\end{enumerate}


\longtocentry                          
\tableofcontents
\listoftables
\listoffigures

\nobibliography*
\chapter{Overview}

\section{Computational Efficiency: Randomized Low-rank Decompositions}

Low-rank decompositions are dimension reduction techniques that unveil latent low-dimensional structures in large matrices, which are ubiquitous in various applications like principal component analysis and spectral clustering. However, to compute common matrix decompositions (\eg, SVD) of an $m \times n$ matrix, classical deterministic algorithms generally scale as $O(mn^2)$, making them untenable for large-scale problems. 

As a remedy, the ``matrix sketching'' framework~\citep{halko2011finding} embeds high-dimensional matrices into random low-dimensional subspaces via fast linear transforms, commonly known as randomized linear embeddings or (fast) Johnson-Lindenstrauss transforms. Some popular choices include Gaussian random matrices~\citep{indyk1998}, subsampled random trigonometric transforms~\citep{woolfe2008}, and sparse embeddings like count sketch~\citep{meng2013} and sparse sign matrices~\citep{clarkson2017}.
After such dimension reduction through randomized linear embedding, classical matrix decomposition algorithms can be executed efficiently, and low-rank approximations can be reconstructed without much compromise in accuracy. 

\Cref{ch:cur} and \Cref{ch:svra} of this thesis explore the potency of randomization with ``matrix sketching'' in two classical low-rank decomposition problems, namely the matrix skeletonization and the randomized subspace approximation.

\subsection{Randomized Pivoting-based Matrix Skeletonization}
Given a matrix $\Ab \in \R^{m \times n}$, the matrix skeletonization problem (\ie, interpolative decomposition (ID) and CUR decomposition) solves for low-rank ``natural bases'' formed by the original columns (and/or rows) of $\Ab$. Precisely, the goal is to identify column (or row) skeletons $\Cb = \Ab\rbr{:,J_s} \in \R^{m \times k}$ (or $\Rb = \Ab\rbr{I_s,:} \in \R^{k \times n}$, in MATLAB notation) indexed by $J_s \subset [n]$ (or $I_s \subset [m]$ where $\abbr{J_s} = \abbr{I_s} = k$) that serve as good bases,
\begin{align*}
    \nbr{\Ab - \Cb\Cb^\pinv \Ab}_F \le \rbr{1 + \eps} \min_{J_s \subset [n]} \nbr{\Ab - \Ab\rbr{:,J_s}\Ab\rbr{:,J_s}^\pinv \Ab}_F.
\end{align*}
Despite the NP-hardness~\citep{civril2013exponential} of identifying the nearly optimal skeleton selections like the row/column subset with the maximum spanning volume~\citep{goreinov1997}, there exists fast heuristics~\citep{mahoney2009,sorensen2014,voronin2017,derezinski2020dpp} that enjoys statistical guarantees and/or practical successes. 
In particular, randomization via ``matrix sketching'' plays a critical role in many of these heuristics~\citep{voronin2017,chen2020,dong2021simpler}. 

Randomized pivoting-based matrix skeletonization~\citep{sorensen2014,voronin2017,dong2021simpler} is a class of such fast heuristics widely used in scientific computing, whose general framework consists of two stages: 
\begin{enumerate}[label=(\roman*)]
    \item dimension reduction via sketching (\eg, constructing row sketch $\Xb = \Gammab \Ab \in \R^{l \times n}$ via a Gaussian random matrix $\Gammab$ with \iid~entries $\Gamma_{ij} \sim \Ncal\rbr{0,l^{-1}}$ and $l \ll m$) and
    \item greedy skeleton selection via pivoting on the reduced matrix sketch (\eg, applying LU with partial pivoting on $\Xb^\top$ and taking the first $\abbr{J_s}$ pivots as the column skeletonizations).
\end{enumerate}
In \Cref{ch:cur}, we first surveyed and compared different options for the two stages, \eg, sketching/randomized SVD~\citep{halko2011finding} for the dimension reduction stage and column pivoted QR (CPQR)/LU with partial pivoting (LUPP) for the pivoting stage.
Motivated by the systematic comparison, we then proposed a novel combination of sketching and LU with partial pivoting (LUPP) for efficient randomized matrix skeletonization. 
Compared to column pivoted QR (CPQR) commonly used in the existing algorithms, LUPP enjoys superior empirical efficiency and parallelizability~\citep{grigori2011calu, solomonik2011communication} while compromising the rank-revealing guarantees. Fortunately, for matrix skeletonization, such a trade-off between efficiency and rank-revealing property can be avoided via sketching.
In particular, we demonstrated that, instead of relying on rank-revealing properties of the pivoting scheme, \textit{the simple combination of sketching and LUPP exploits the spectrum-preserving capability of sketching and achieves considerable acceleration without compromising accuracy}.

\subsection{Randomized Subspace Approximations}
{Theoretical underpinnings of randomized subspace approximation} is another key aspect of analyzing randomized low-rank decompositions. A low-rank decomposition can be viewed as a bilinear combination of the associated low-rank bases for the column and row spaces that encapsulate key information for a wide range of tasks (\eg, canonical component analysis and leverage score sampling). Specifically, for a truncated SVD
\begin{align*}
    \underset{m \times n}{\Ab} \approx \Ab_k = \underset{m \times k}{\Ub_k}~~\underset{k \times k}{\Sigmab_k}~~\underset{k \times n}{\Vb_k^\top}, \quad
    \Ub_k^\top \Ub_k = \Vb_k^\top \Vb_k = \Ib_k, \quad
    \Sigmab_k = \diag\rbr{\sigma_1,\dots,\sigma_k},
\end{align*}
the left and right leading singular vectors $\Ub_k$ and $\Vb_k$ can be approximated efficiently via {randomized subspace approximations}.
In the basic version, randomized subspace approximations leverage the appealing property of randomized linear embeddings (\eg, a Gaussian embedding $\Omegab \in \R^{n \times l}$ with $\iid$ entries $\Omega_{ij} \sim \Ncal\rbr{0, l^{-1}}$) that, with moderate oversampling $l = k + O(1)$, sketching (with $q$ power iterations) $\Yb = (\Ab\Ab^\top)^q\Ab\Omegab$ captures the leading singular vectors $\Ub_k$ with high probability. Under orthonormalization at each iteration (commonly known as {randomized subspace iteration}~\citep[Algorithm 4.4]{halko2011finding}), $\range\rbr{\Yb}$ provides a numerical stable estimate for the leading singular subspace.

Canonical angles~\citep{golub2013} (formally defined in \Cref{def:canonical-angles}) are commonly used to quantify the difference between two subspaces of the same space, which provides natural error measures for the randomized subspace approximations. 
For instance, given arbitrary full-rank matrices $\Ub \in \R^{d \times l}$ and $\Vb \in \R^{d \times k}$ (assuming $k \leq l \leq d$ without loss of generality), the $k$ canonical angles $\angle\rbr{\Ub,\Vb} \triangleq \angle\rbr{\range(\Ub),\range(\Vb)}$ between their corresponding range subspaces in $\R^d$ are given by the spectra of $\Qb_{\Ub}^\top\Qb_{\Vb} \in \R^{l \times k}$ where the columns of $\Qb_{\Ub} \in \R^{d \times l}$ and $\Qb_{\Vb} \in \R^{d \times k}$ consist of orthonormal bases of $\Ub$ and $\Vb$, respectively. 
Precisely, for each $i \in [k]$, $\cos\angle_i\rbr{\Ub,\Vb} = \sigma_i\rbr{\Qb_{\Ub}^\top\Qb_{\Vb}}$, or equivalently, $\sin\angle_i\rbr{\Ub, \Vb} = \sigma_{k-i+1}\rbr{\rbr{\Ib-\Qb_{\Ub}\Qb_{\Ub}^\top}\Qb_{\Vb}}$ (\cf \cite{bjorck1973numerical} Section 3).

In \Cref{ch:svra}, we extended the existing analysis on the accuracy of singular vectors approximated by the randomized subspace iteration, in terms of the canonical angles $\angle\rbr{\Ub_k, \Yb}$ between the true and approximated leading singular subspaces $\range\rbr{\Ub_k}$ and $\range\rbr{\Yb}$. 
By casting a computational efficiency view on the bounds and estimates of canonical angles, we provided \textit{a set of prior probabilistic bounds that is not only asymptotically tight but also computable in linear time}. Moreover, we derived \textit{unbiased prior estimates}, along with \textit{residual-based posterior bounds}, of canonical angles that can be evaluated efficiently, while further demonstrating the empirical effectiveness of these bounds and estimates with numerical evidence.

\section{Sample Efficiency: Data Augmentation for Better Generalization}

Modern machine learning models, especially deep learning models, require substantially large amounts of samples for training. However, data collection and human annotation often come with non-negligible costs in practice. Therefore, \emph{sample efficiency} and \emph{generalization} are critical properties of learning algorithms. 
In the most basic setting, a learning algorithm is designed for recovering some unknown ground truths (\eg, descriptions of images) via sampling from some unknown distributions (\eg, images from the Internet with descriptions). The goal is to learn a prediction function (\eg, image captioning) that well approximates the ground truth by providing accurate predictions beyond the training samples (\eg, (in-distribution) generalization to unseen testing samples from the same distribution or out-of-distribution generalization to testing samples from related but different distributions), with as few training samples as possible (\ie, sample efficiency).

Since the seminal work \cite{krizhevsky2012imagenet}, \emph{data augmentation} has been a ubiquitous ingredient in many state-of-the-art machine learning algorithms~\citep{simard1998transformation,simonyan2014very,he2016deep,cubuk2019autoaugment,kuchnik2018efficient}. It started from simple transformations on samples (\eg, (random) perturbations, distortions, scales, crops, rotations, and horizontal flips on images) that roughly preserve the semantic information. More sophisticated variants were subsequently designed; a non-exhaustive list includes Mixup \citep{zhang2017mixup}, Cutout \citep{devries2017improved}, and Cutmix \citep{yun2019cutmix}.
Despite the known capability of improving generalization and sample efficiency empirically, the theoretical understanding of how data augmentation works remain limited due to the wide variety of domain-specific designs~\citep{sennrich2015improving,zhang2017mixup} and algorithmic choices of utilizing data augmentations~\citep{krizhevsky2012imagenet,sohn2020fixmatch,he2020momentum}. 
The classical wisdom interprets and leverages data augmentation as a natural expansion of the original training samples~\citep{krizhevsky2012imagenet, simard1998transformation, cubuk2019autoaugment, simonyan2014very, he2016deep}. 
However, simply enlargening the training set is not sufficient to explain the unprecedented successes (in comparison to the classical wisdom) recently achieved by an alternative line of data-augmentation-based learning algorithms --- \emph{data augmentation consistency regularization}~\citep{bachman2014learning, laine2016temporal, sajjadi2016regularization, sohn2020fixmatch, berthelot2021adamatch} --- that encourages similar predictions among the original sample and its augmentations.

\Cref{ch:dac} and \Cref{ch:adawac} of this thesis discuss the sample efficiency of data augmentation consistency regularization from the dimension reduction aspect, along with an application in medical image segmentation.

\subsection{Sample Efficiency of Data Augmentation Consistency Regularization}
In efforts to interpret the effect of different algorithmic choices on utilizing data augmentations, \Cref{ch:dac} conducts \emph{apple-to-apple comparisons} between two popular data-augmentation-based algorithms --- the empirical risk minimization on the augmented training set (DA-ERM) and the data augmentation consistency regularization (DAC) --- in the \emph{supervised} setting. 

Concretely, given a ground truth distribution $P: \Xcal \times \Ycal \to [0,1]$ and a well-specified function class $\Hcal \subseteq \csepp{h = f_h \circ \phi_h}{\phi_h: \Xcal \to \Wcal,~ f_h: \Wcal \to \Ycal}$ (\eg, a class of neural networks with a sufficiently expressive hidden-layer representation function $\phi_h\rbr{\cdot}$) such that for a given loss function $\ell: \Ycal \times \Ycal \to \R_{\ge 0}$, $h^* \dfeq \argmin_{h \in \Hcal} \E_{\rbr{\xb,y} \sim P} \sbr{\ell\rbr{h(\xb),y}} \in \Hcal$, let $\rbr{\xb_i, y_i}_{i \in [N]} \sim P\rbr{\xb, y}^N$ be a set of $N$ training samples drawn $\iid$ from $P$ and 
\begin{align*}
    \wt\Acal(\Xb) = \sbr{\xb_{1}; \cdots; \xb_{N}; \xb_{1,1}; \cdots; \xb_{N,1}; \cdots; \xb_{1, \alpha}; \cdots; \xb_{N, \alpha}} \in \Xcal^{(1+\alpha) N}
\end{align*}
be the features of its (random) augmentation (\ie, $\alpha \in \N$ additional augmentations per sample).
Considering the basic version of data augmentation which preserves the labels of original samples, DA-ERM directly includes the augmented samples in the training set and learns via empirical risk minimization (ERM),
\begin{align*}
    \wh{h}^{\herm} = \argmin_{h\in \Hcal} \sum_{i=1}^N \ell(h(\xb_i), y_i) + \sum_{i=1}^N\sum_{j=1}^{\alpha} \ell(h(\xb_{i,j}), y_i).
\end{align*}
Instead, DAC regularization rewards $h \in \Hcal$ that provides similar representations $\phi_h$ among data augmentations of the same sample,
\begin{align*}
    \wh{h}^{dac} = \argmin_{h\in\Hcal}\sum_{i=1}^{N}l(h(\xb_i), y_i) + \underbrace{\lambda\sum_{i=1}^N\sum_{j=1}^{\alpha} \varrho\rbr{\phi_h(\xb_i), \phi_h(\xb_{i, j})}}_{\textit{DAC regularization}},
\end{align*}
where $\varrho: \Wcal \times \Wcal \to \R_{\ge 0}$ is a metric associated with the metric space $\Wcal$ (\eg, the Euclidean distance).

Previous works~\citep{chen2020group,mei2021learning,bietti2021sample} generally view augmentations as groups endowed with Haar measures which inevitably assumed access to augmentations over the population. 
By contrast, we investigate a more realistic circumstance where both the training data and their augmentations are presented as finite random samples, $\rbr{\Xb, \yb} \sim P\rbr{\xb,y}^N$ and $\wt{\Acal}\rbr{\Xb} \in \Xcal^{(1+\alpha)N}$, by considering the DAC regularization as a reduction in the complexity of the function class $\Hcal$ (\eg, a dimension reduction in the linear regression setting). 
Apart from the well-known semi-supervised learning capability of DAC, in \Cref{ch:dac}, we demonstrated the \emph{intrinsic efficiency of DAC over DA-ERM in utilizing both samples and their augmentations}, even without unlabeled data, by establishing separations of sample complexities between DAC and DA-ERM in various settings, including different function classes (\eg, linear regression and neural networks), different data augmentations, as well as in-distribution and out-of-distribution generalization.

\subsection{Adaptively Weighted Data Augmentation Consistency Regularization}

Grounding the theoretical insight on data augmentation consistency regularization as guidance for algorithm design, in \Cref{ch:adawac}, we explore the \emph{combination of data augmentation consistency regularization} and \emph{sample reweighting} for a distributionally robust optimization setting commonly encountered in \emph{medical image segmentation}.

\emph{Concept shift} is a prevailing problem in natural tasks like medical image segmentation where samples usually come from different subpopulations with variant correlations between features and labels. A common type of concept shift in medical image segmentation is the ``information imbalance'' between the \emph{label-sparse} samples with few (if any) segmentation labels and the \emph{label-dense} ones with plentiful labeled pixels.
Existing distributionally robust algorithms have focused on adaptively truncating/down-weighting the ``less informative'' (\ie, label-sparse) samples.  
To exploit data features of label-sparse samples more efficiently, in \Cref{ch:adawac}, we propose an adaptively weighted online optimization algorithm --- \ours --- to incorporate data augmentation consistency regularization in sample reweighting. As a simplified overview, when learning from a function class parameterized by $\theta \in \Theta$, \ours introduces a set of trainable weights $\betab \in [0,1]^n$ to balance the supervised (cross-entropy) loss $\lossce\rbr{\theta;\rbr{\xb_i,\yb_i}}$ and unsupervised consistency regularization $\regdac\rbr{\theta;\xb_i, A_{i,1}, A_{i,2}}$\footnote{Here, $A_{i,1}$ and $A_{i,2}$ denote the (random) augmentations associated with $\xb_i$ for each $i \in [n]$ (refer to \Cref{sec:problem_setup_adawac} for formal definitions).} of each sample $\rbr{\xb_i, \yb_i}$ separately:
\begin{align*}
\begin{split}
    &\min_{\theta \in \Theta}~ \max_{\betab \in [0,1]^n}~
    \frac{1}{n} \sum_{i=1}^n \beta_{i} \cdot \lossce\rbr{\theta;\rbr{\xb_i,\yb_i}} + (1-\beta_{i}) \cdot \regdac\rbr{\theta;\xb_i,A_{i,1},A_{i,2}}. 
\end{split}
\end{align*}
At the saddle point $\rbr{\wh\theta, \wh\betab}$ of the underlying objective, the weights assign label-dense samples to the supervised loss (\ie, $\wh\beta_{i}= 1$) and label-sparse samples to the unsupervised consistency regularization (\ie, $\wh\beta_{i}= 0$).
We provide a convergence guarantee by recasting the optimization as online mirror descent on a saddle point problem. Our empirical results further demonstrate that \ours not only enhances the segmentation performance and sample efficiency but also improves the robustness to concept shift on various medical image segmentation tasks with different UNet-style backbones.

Overall, this thesis is based on my works \cite{dong2021simpler,dong2022efficient,yang2022sample,dong2022adawac}. Beyond the score of this thesis, some related topics that I have been working on include knowledge distillation~\citep{dong2023cluster} and blockwise adaptive sampling~\citep{dong2023robust}

\chapter{Randomized Pivoting Algorithms for CUR and Interpolative Decompositions}\label{ch:cur}

\subsection*{Abstract}
Matrix skeletonizations like the interpolative and CUR decompositions provide a framework for low-rank approximation in which subsets of a given matrix's columns and/or rows are selected to form approximate spanning sets for its column and/or row space.
Such decompositions that rely on ``natural'' bases have several advantages over traditional low-rank decompositions with orthonormal bases, including preserving properties like sparsity or non-negativity, maintaining semantic information in data, and reducing storage requirements.
Matrix skeletonizations can be computed using classical deterministic algorithms such as column-pivoted QR, which work well for small-scale problems in practice, but suffer from slow execution as the dimension increases and can be vulnerable to adversarial inputs.
More recently, randomized pivoting schemes have attracted much attention, as they have proven capable of accelerating practical speed, scale well with dimensionality, and sometimes also lead to better theoretical guarantees. 
This manuscript provides a comparative study of various randomized pivoting-based matrix skeletonization algorithms that leverage classical pivoting schemes as building blocks.
We propose a general framework that encapsulates the common structure of these randomized pivoting-based algorithms and provides an a-posteriori-estimable error bound for the framework. Additionally, we propose a novel concretization of the general framework and numerically demonstrate its superior empirical efficiency.\footnote{This chapter is based on the following published journal paper: \\
\bibentry{dong2021simpler}~\citep{dong2021simpler}.}

\section{Introduction}

The problem of computing a low-rank approximation to a matrix is a classical one that has drawn increasing attention due to its importance in the analysis of large data sets.
At the core of low-rank matrix approximation is the task of constructing bases that approximately span the column and/or row spaces of a given matrix.
This manuscript investigates algorithms for low-rank matrix approximations with ``natural bases'' of the column and row spaces -- bases formed by selecting subsets of the actual columns and rows of the matrix.
To be precise, given an $m \times n$ matrix $\Ab$ and a target rank $k < \min(m,n)$, we seek to determine an $m \times k$ matrix $\Cb$ holding $k$ of the columns of $\Ab$, and a $k \times n$ matrix $\Zb$ such that
\begin{equation}
\label{eq:colID}
\begin{array}{ccccccc}
\Ab &\approx& \Cb & \Zb.\\
m\times n && m\times k & k\times n
\end{array}
\end{equation}
We let $J_{\rm s}$ denote the index vector of length $k$ that
identifies the $k$ chosen columns, so that, in MATLAB notation,
\begin{equation}
\label{eq:Js}
\Cb = \Ab(\colon,J_{\rm s}).
\end{equation}
If we additionally identify an index vector $I_{\rm s}$ that marks a subset of the rows that forms an approximate basis for the row space of $\Ab$, we can then form the ``CUR'' decomposition
\begin{equation}
\label{eq:CUR}
\begin{array}{ccccccc}
\Ab & \approx & \Cb & \Ub & \Rb,\\
m\times n && m\times k & k\times k & k\times n
\end{array}
\end{equation}
where $\Ub$ is a $k\times k$ matrix, and
\begin{equation}
\label{eq:Is}
\Rb = \Ab(I_{\rm s},\colon).
\end{equation}
The decomposition (\ref{eq:CUR}) is also known as a ``matrix skeleton'' \cite{goreinov1997} approximation (hence  the subscript ``s'' for ``skeleton'' in $I_{\rm s}$ and $J_{\rm s}$).
Matrix decompositions of the form (\ref{eq:colID}) or (\ref{eq:CUR}) possess several compelling properties:
(i) Identifying $I_{\rm s}$ and/or $J_{\rm s}$ is often helpful in data interpretation.
(ii) The decompositions (\ref{eq:colID}) and (\ref{eq:CUR}) preserve important properties of the matrix $\Ab$. For instance, if $\Ab$ is sparse/non-negative, then $\Cb$ and $\Rb$ are also sparse/non-negative.
(iii) The decompositions (\ref{eq:colID}) and (\ref{eq:CUR}) are often memory efficient.
In particular, when the entries of $\Ab$ itself are available, or can inexpensively be computed or retrieved, then once $J_{\rm s}$ and $I_{\rm s}$ have been determined, there is no need to store $\Cb$ and $\Rb$ explicitly.

Deterministic techniques for identifying close-to-optimal index vectors $I_{\rm s}$ and $J_{\rm s}$ are well established. Greedy algorithms
such as the classical column pivoted QR (\pqr) \cite[Sec.~5.4.1]{golub2013},
and variations of LU with complete pivoting \cite{trefethen1997, zhao2005} 
often work well in practice. There also exist specialized pivoting schemes that come with strong theoretical performance guarantees \cite{gu1996}.

While effective for smaller dense matrices, classical techniques based on
pivoting become computationally inefficient as the matrix sizes grow. The
difficulty is that a global update to the matrix is in general required
before the next pivot element can be selected. The situation becomes
particularly dire for sparse matrices, as each update tends to create
substantial fill-in of zero entries.

To better handle large matrices, and huge sparse matrices in particular,
a number of algorithms based on
\textit{randomized sketching} have been proposed in recent years.
The idea is to extract a ``sketch'' $\Xb$ of the matrix that is
far smaller than the original matrix, yet contains enough information
that the index vectors $I_{\rm s}$ and/or $J_{\rm s}$ can be determined
by using the information in $\Xb$ alone. Examples include:

\begin{enumerate}
\item \textit{Discrete empirical interpolation method (DEIM):}
The sketching step consists of computing approximations to the dominant left and right
singular vectors of $\Ab$, for instance using the
randomized SVD (RSVD) \cite{halko2011finding,Liberty20167}.
Then a greedy pivoting-based scheme is used to pick the index sets $I_{\rm s}$ and $J_{\rm s}$
\cite{drmac2016qdeim,sorensen2014}.
\item \textit{Leverage score sampling:}
Again, the procedures start by computing approximations to the dominant left and right singular vectors of $\Ab$ through a randomized scheme. Then these approximations are used to compute probability distributions on the row and/or column indices, from which a random subset of columns and/or rows is sampled.
\item \textit{Pivoting on a random sketch:}
With a random matrix $\Gammab \in \mathbb{R}^{k\times m}$ drawn from some appropriate distribution, a sketch of $\Ab$ is formed via $\Xb = \Gammab\Ab$. Then, a classical pivoting strategy such as the \pqr is applied on $\Xb$ to identify a spanning set of columns.
\end{enumerate}

The existing literature \cite{anderson2017,chen2020,cohen2014,derezinski2020improved,derezinski2020dpp,drineas2012,drmac2016qdeim,gu1996,mahoney2009,sorensen2014,voronin2017} presents compelling evidence in support of each of these frameworks, in the form of mathematical theory and/or empirical numerical experiments.

The objective of the present manuscript is to organize different strategies and to conduct a systematic comparison, with a focus on their empirical accuracy and efficiency.
In particular, we compare different strategies for extracting a random sketch, such as techniques based on
Gaussian random matrices \cite{halko2011finding,indyk1998,martinsson2020,woodruff2015},
random fast transforms \cite{boutsidis2013srtt,halko2011finding,martinsson2020, rokhlin2008,tropp2011,woolfe2008}, and
random sparse embeddings \cite{clarkson2017,martinsson2020,meng2013, nelson2013,tropp2017a,woodruff2015}. We also compare different pivoting strategies
such as pivoted QR \cite{golub2013,voronin2017}
versus pivoted LU \cite{chen2020,sorensen2014}. Finally, we compare how well sampling-based schemes perform in relation to pivoting-based schemes.

In addition to providing a comparison of existing methods, the manuscript proposes a general framework that encapsulates the common structure shared by some popular randomized pivoting-based algorithms and presents an a-posteriori-estimable error bound for the framework. Moreover, the manuscript introduces a novel concretization of the general framework that is faster in execution than the schemes of \cite{sorensen2014, voronin2017} while picking equally close-to-optimal skeletons in practice.
In its most basic version, our simplified
method for finding a subset of $k$ columns of $\Ab$ works as follows:

\vspace{2mm}

\begin{center}
\begin{minipage}{0.9\textwidth}
\textit{Sketching step:} Draw $\Gammab \in \mathbb{R}^{k\times m}$
from a Gaussian distribution and form $\Xb = \Gammab\Ab$.

\vspace{2mm}

\textit{Pivoting step:} Perform a \textit{partially pivoted} LU decomposition of
$\Xb^{\top} \in \R^{n \times k}$. Collect the chosen pivot indices in the index vector $J_{\rm s}$.
(Since partially pivoted LU picks rows of $\Xb^{*}$, $J_{\rm s}$ indicates 
columns of $\Xb$.)
\end{minipage}
\end{center}

\vspace{2mm}

What is particularly interesting about this process is that while the
LU factorization with partial pivoting (\plu) is \textit{not}
rank revealing for a general matrix $\Ab$, the randomized mixing done in the sketching step makes \plu excel at picking spanning columns. Furthermore, the randomness introduced by sketching empirically serves as a remedy for the vulnerability of classical pivoting schemes like \plu to adversarial inputs (\eg, the Kahan matrix \cite{kahan1966}).
The scheme can be accelerated further by incorporating a structured
random embedding $\Gammab$. Alternatively, its accuracy can be
enhanced by incorporating one of two steps of power iteration when
building the sample matrix $\Xb$.

The manuscript is organized as follows:
\Cref{sec:background} provides a brief overview of the interpolative and CUR decompositions (\Cref{subsec:cur}), along with some essential building blocks of the randomized pivoting algorithms, including randomized linear embeddings (\Cref{subsec:embedding}), randomized low-rank SVD (\Cref{subsec:rsvd}), and matrix decompositions with pivoting (\Cref{subsec:pivoting_mat_decomp}).
\Cref{sec:exist_cur_algo} reviews existing algorithms for matrix skeletonizations (\Cref{subsec:sampling_cur}, \Cref{subsec:pivoting_cur}), and introduces a general framework that encapsulates the structures of some randomized pivoting-based algorithms.
In \Cref{sec:cur_rand_lupp}, we propose a novel concretization of the general framework, and provide an a-posteriori-estimable bound for the associated low-rank approximation error.
With the numerical results in \Cref{sec:cur_experiments}, we first compare the efficiency of various choices for the two building blocks in the general framework: randomized linear embeddings (\Cref{subsec:exp_embedding}) and matrix decompositions with pivoting (\Cref{subsec:exp_pivot}). Then, we demonstrate the empirical advantages of the proposed algorithm by investigating the accuracy and efficiency of assorted randomized skeleton selection algorithms for the CUR decomposition (\Cref{subsec:exp_cur}).

\section{Background}\label{sec:background}
We first introduce some closely related low-rank matrix decompositions that rely on ``natural'' bases, including the CUR decomposition, and the column, row, and two-sided interpolative decompositions (ID) in \Cref{subsec:cur}.
\Cref{subsec:embedding} describes techniques for computing randomized sketches of matrices, based on which \Cref{subsec:rsvd} discusses the randomized construction of low-rank SVD. \Cref{subsec:pivoting_mat_decomp} describes how these can be used to construct matrix decompositions.
While introducing the background, we include proofs of some well-established facts that provide key ideas but are hard to extract from the context of relevant references.

\subsection{Notation}\label{subsec:notation}
Let $\Ab \in \R^{m \times n}$ be an arbitrary given matrix of rank $r \le \min\cbr{m,n}$, whose SVD is given by
\begin{align*}
    \Ab = \underset{m \times r}{\Ub_{A}}\ \underset{r \times r}{\Sigmab_{A}}\ \underset{r \times n}{\Vb_{A}^{\top}} = \sbr{\ub_{A,1},\dots,\ub_{A,r}} \diag\rbr{\sigma_{A,1},\dots,\sigma_{A,r}} \sbr{\vb_{A,1},\dots,\vb_{A,r}}^{\top}
\end{align*}
such that for any rank parameter $k \leq r$, we denote $\Ub_{A,k} \triangleq \sbr{\ub_{A,1},\dots,\ub_{A,k}}$ and $\Vb_{A,k} \triangleq \sbr{\vb_{A,1},\dots,\vb_{A,k}}$ as the orthonormal bases of the dimension-$k$ leading left and right singular subspaces of $\Ab$, while $\oc{\Ub_{A,k}} \triangleq\sbr{\ub_{A,k+1},\dots,\ub_{A,r}}$ and $\oc{\Vb_{A,k}} \triangleq\sbr{\vb_{A,k+1},\dots,\vb_{A,r}}$ as the orthonormal bases of the respective orthogonal complements. The diagonal submatrices consisting of the spectrum, $\Sigmab_{A,k} \triangleq \diag\rbr{\sigma_{A,1}, \dots, \sigma_{A,k}}$ and $\oc{\Sigmab_{A,k}} \triangleq \diag\rbr{\sigma_{A,k+1}, \dots, \sigma_{A,r}}$, follow analogously.
We denote $\Ab_k \triangleq \Ub_{A,k} \Sigmab_{A,k} \Vb_{A,k}^{\top}$ as the rank-$k$ truncated SVD that minimizes rank-$k$ approximation error of $\Ab$ \cite{eckart1936}.
Furthermore, we denote the spectrum of $\Ab$, $\sigma\rbr{\Ab}$, as a $r \times r$ diagonal matrix, while for each $i=1,\dots,r$, let $\sigma_i(\Ab)$ be the $i$-th singular value of $\Ab$.

For the QR factorization, given an arbitrary rectangular matrix 
$\Mb \in \R^{d \times l}$ with full column rank ($d \geq l$), let $\Mb = \sbr{\Qb_{M},\oc{\Qb_{M}}} \sbr{\Rb_{M};\b{0}}$ be a full QR factorization of $\Mb$ such that $\Qb_{M} \in \R^{d \times l}$ and $\oc{\Qb_{M}} \in \R^{d \times (d-l)}$ consist of orthonormal bases of the subspace spanned by the columns of $\Mb$ and its orthogonal complement. We denote $\ortho: \csepp{\Mb \in \R^{d \times l}}{\rank\rbr{\Mb} = l} \to \R^{d \times l}$ ($d \ge l$) as a map that identifies an orthonormal basis (not necessarily unique) for $\Mb$, $\ortho(\Mb) = \Qb_M$.

We adopt MATLAB notation for matrices throughout this work. Unless specified otherwise ($\eg$, with subscripts), we use $\nbr{\cdot}$ to represent either the spectral norm or the Frobenius norm ($\ie$, holding simultaneously for both norms).

\subsection{Interpolative and CUR decompositions} \label{subsec:cur}
We first recall the definitions of the interpolative and CUR decompositions of a given $m\times n$ real matrix $\Ab$.
After providing the basic definitions, we discuss first how well it is theoretically possible to do
low-rank approximation under the constraint that natural bases must be used.
We then briefly describe the further suboptimality incurred by standard algorithms.

\subsubsection{Basic definitions}
We consider low-rank approximations for $\Ab$ with column and/or row subsets as bases. Given an arbitrary linearly independent column subset $\Cb = \Ab\rbr{:,J_s}$ ($J_s \subset [n]$), 
the rank-$\abbr{J_s}$ column ID of $\Ab$ with respect to column skeletons $J_s$ can be formulated as,
\begin{align}\label{eq:def_column_id}
    \cid{\Ab}{J_s} \triangleq \Cb \Cb ^{\pinv} \Ab,
\end{align}
where $\Cb \Cb ^{\pinv}$ is the orthogonal projector onto the spanning subspace of column skeletons.
Analogously, given any linearly independent row subset $\Rb = \Ab\rbr{I_s,:}$ ($I_s \subset [m]$), the rank-$\abbr{I_s}$ column ID of $\Ab$ with respect to row skeletons $I_s$ takes the form
\begin{align}\label{eq:def_row_id}
    \rid{\Ab}{I_s} \triangleq \Ab \Rb^{\pinv} \Rb,
\end{align}
where $\Rb^{\pinv} \Rb$ is the orthogonal projector onto the span of row skeletons.
While with both column and row skeletons, we can construct low-rank approximations for $\Ab$ in two forms -- the two-sided ID and CUR decomposition: with $\abbr{I_s}=\abbr{J_s}$, let $\Sb \triangleq \Ab\rbr{I_s,J_s}$ be an invertible two-sided skeleton of $\Ab$ such that
\begin{align}
    \label{eq:def_two_sided_id}
    &\t{Two-sided ID:}
    &&\tsid{\Ab}{I_s,J_s} \triangleq  \rbr{\Cb \Sb^{-1}} \Sb \rbr{\Cb^{\pinv} \Ab}
    \\
    \label{eq:def_cur}
    &\t{CUR decomposition:}
    &&\cur{\Ab}{I_s,J_s} \triangleq  \Cb \rbr{\Cb^\pinv \Ab \Rb^\pinv} \Rb
\end{align}
where in the exact arithmetic, since $\Sb^{-1} \Sb = \Ib$, the two-sided ID is equivalent to the column ID characterized by $\Cb$, \ie, $\tsid{\Ab}{I_s,J_s} = \cid{\Ab}{J_s}$. Nevertheless, the two-sided ID $\tsid{\Ab}{I_s,J_s}$ and CUR decomposition $\cur{\Ab}{I_s,J_s}$ differ in both suboptimality and conditioning.

\begin{remark}[Suboptimality of ID versus CUR]\label{remark:id_cur_suboptimality}
For any given column and row skeletons $\Cb$ and $\Rb$, 
\begin{align}\label{eq:id_cur_suboptimality}
    \nbr{\Ab - \Cb\Cb^\pinv\Ab} \le \nbr{\Ab - \Cb\Cb^\pinv\Ab\Rb^\pinv\Rb} \le \rbr{\nbr{\Ab - \Cb\Cb^\pinv\Ab}^2 + \nbr{\Ab - \Ab\Rb^\pinv\Rb}^2}^{\frac{1}{2}}.
\end{align}
\end{remark}

\begin{proof}[Rationale for \Cref{remark:id_cur_suboptimality}]
We observe the simple orthogonal decomposition 
\begin{align*}
    \Ab - \Cb\Cb^\pinv\Ab\Rb^\pinv\Rb = \rbr{\Ib_m - \Cb\Cb^\pinv}\Ab + \Cb\Cb^\pinv\rbr{\Ab - \Ab\Rb^\pinv\Rb}
\end{align*}
where $\rbr{\Ib_m - \Cb\Cb^\pinv}$ and $\Cb\Cb^\pinv$ are orthogonal projectors. With the Frobenius norm,
\begin{align*}
    \nbr{\Ab - \Cb\Cb^\pinv\Ab\Rb^\pinv\Rb}_F^2 
    = &\nbr{\Ab - \Cb\Cb^\pinv\Ab}_F^2 + \nbr{\Cb\Cb^\pinv\rbr{\Ab - \Ab\Rb^\pinv\Rb}}_F^2 
    \\
    \le &\nbr{\Ab - \Cb\Cb^\pinv\Ab}_F^2 + \nbr{\Ab - \Ab\Rb^\pinv\Rb}_F^2,
\end{align*}    
while with the spectral norm
\begin{align*}
    \nbr{\Ab - \Cb\Cb^\pinv\Ab\Rb^\pinv\Rb}_2^2  
    = \max_{\nbr{\vb}_2 \le 1} \nbr{\rbr{\Ab - \Cb\Cb^\pinv\Ab}\vb}_2^2 + \nbr{\Cb\Cb^\pinv\rbr{\Ab - \Ab\Rb^\pinv\Rb} \vb}_2^2
\end{align*}    
where 
\begin{align*}
    \max_{\nbr{\vb}_2 \le 1} \nbr{\rbr{\Ab - \Cb\Cb^\pinv\Ab}\vb}_2^2 = 
    \nbr{\Ab - \Cb\Cb^\pinv\Ab}_2^2
\end{align*}
and 
\begin{align*}
    &\max_{\nbr{\vb}_2 \le 1} \nbr{\rbr{\Ab - \Cb\Cb^\pinv\Ab}\vb}_2^2 + \nbr{\Cb\Cb^\pinv\rbr{\Ab - \Ab\Rb^\pinv\Rb} \vb}_2^2 \\
    \le &\max_{\nbr{\vb}_2 \le 1} \nbr{\rbr{\Ab - \Cb\Cb^\pinv\Ab}\vb}_2^2 + \max_{\nbr{\vb}_2 \le 1} \nbr{\Cb\Cb^\pinv\rbr{\Ab - \Ab\Rb^\pinv\Rb} \vb}_2^2 \\ 
    \le &\nbr{\Ab - \Cb\Cb^\pinv\Ab}_2^2 + \nbr{\Ab - \Ab\Rb^\pinv\Rb}_2^2.
\end{align*}
\end{proof}

\begin{remark}[Conditioning of ID versus CUR]\label{remark:id_cur_stability}
The construction of CUR decomposition tends to be more ill-conditioned than that of two-sided ID. 
Precisely, for properly selected column and row skeletons $J_s$ and $I_s$, the corresponding skeletons $\Sb$, $\Cb$, and $\Rb$ share similar spectrum decay as $\Ab$, which is usually ill-conditioned in the context.
In the CUR decomposition, both the bases $\Cb$, $\Rb$ and the small matrix $\Cb^\pinv \Ab \Rb^\pinv$ in the middle tend to suffer from large condition numbers as that of $\Ab$. 
In contrast, the only potentially ill-conditioned component in the two-sided ID is $\Sb$ ($\ie$, despite being expressed in $\Sb^{-1}$ and $\Cb^\pinv$, $\rbr{\Cb \Sb^{-1}}$ and $\rbr{\Cb^\pinv \Ab}$ in \Cref{eq:def_two_sided_id} are well-conditioned, and can be evaluated without direct inversions). 
\end{remark}

\begin{remark}[Stable CUR]\label{remark:stable_cur}
Numerically, the stable construction of a CUR decomposition $\cur{\Ab}{I_s,J_s}$ can be conducted via (unpivoted) QR factorization of $\Cb$ and $\Rb$ (\cite{anderson2015spectral}, Algorithm 2): let $\Qb_C \in \R^{m \times \abbr{J_s}}$ and $\Qb_R \in \R^{n \times \abbr{I_s}}$ be matrices from the QR whose columns form orthonormal bases for $\Cb$ and $\Rb^{\top}$, respectively, then
\begin{align}\label{eq:cur_stable}
    \cur{\Ab}{I_s,J_s} = \Qb_C \rbr{\Qb_C^{\top} \Ab \Qb_R} \Qb_R^{\top}.
\end{align}
\end{remark}    



\subsubsection{Notion of suboptimality}
\label{sec:cssp}

Both interpolative and CUR decompositions share the common goal of identifying proper column and/or row skeletons for $\Ab$ whose column and/or row spaces are well covered by the respective spans of these skeletons.
Without loss of generality, we consider the column skeleton selection problem: for a given rank $k < r$, we aim to find a proper column subset, $\Cb = \Ab\rbr{:,J_s}$ ($J_s \subset [n]$, $\abbr{J_s} = k$), such that
\begin{align}\label{eq:def_cssp_goal_close_to_opt_err}
    \nbr{\Ab - \cid{\Ab}{J_s}} \leq \phi\rbr{k,m,n} \nbr{\Ab - \Ab_k}
\end{align}
where common choices of the norm $\nbr{\cdot}$ include the spectral norm $\nbr{\cdot}_2$ and Frobenius norm $\nbr{\cdot}_F$; $\phi\rbr{k,m,n}$ is a function with $\phi\rbr{k,m,n} \geq 1$ for all $k,m,n$, and depends on the choice of $\nbr{\cdot}$; and we recall that $\Ab_k \triangleq \Ub_{A,k} \Sigmab_{A,k} \Vb_{A,k}^{\top}$ yields the optimal rank-$k$ approximation error. Meanwhile, similar low-rank approximation error bounds are desired for the row ID $\rid{\Ab}{I_s}$, two-sided ID $\tsid{\Ab}{I_s,J_s}$, and CUR decomposition $\cur{\Ab}{I_s,J_s}$.

\subsubsection{Suboptimality of matrix skeletonization algorithms}
The suboptimality of column subset selection, as well as the corresponding ID and CUR decomposition, has been widely studied in a variety of literature.

Specifically, with $\Sb = \Ab(I_s,J_s)$ ($\abbr{I_s}=\abbr{J_s}=k$) being the maximal-volume submatrix in $\Ab$, the corresponding CUR decomposition (called pseudoskeleton component in the original paper) satisfies \Cref{eq:def_cssp_goal_close_to_opt_err} in $\nbr{\cdot}_2$ with $\phi = O\rbr{\sqrt{k}\rbr{\sqrt{m}+\sqrt{n}}}$ \cite{goreinov1997}. However, it was also pointed out in \cite{goreinov1997} that skeletons associated with the maximal-volume submatrix are not guaranteed to minimize the low-rank approximation error in \Cref{eq:def_cssp_goal_close_to_opt_err}.
Moreover, identification of the maximal-volume submatrix is known to be NP-hard \cite{civril2013exponential,civril2009}.

Nevertheless, from the pivoting perspective, the existence of rank-$k$ column IDs with $\phi = \sqrt{1+k(n-k)}$ can be shown constructively via the strong rank-revealing QR factorization \cite{gu1996}, which further provides a polynomial-time relaxation for constructing IDs with $\phi = O\rbr{ \sqrt{k(n-k)}}$.
From the sampling perspective, the existence of a rank-$k$ column ID with $\phi=\sqrt{(k+1)(n-k)}$ for $\nbr{\cdot}_2$ and $\phi=\sqrt{1+k}$ for $\nbr{\cdot}_F$ can be shown by upper bounding the expectation of $\nbr{\Ab - \cid{\Ab}{J_s}}$ for volume sampling \cite{deshpande2006}. Later on, polynomial-time algorithms were proposed for selecting such column skeletons \cite{deshpande2010, cortinovis2019lowrank}.

Furthermore, it was unveiled in the recent work \cite{derezinski2020improved} that the suboptimality factor $\phi(k,m,n)$ can exhibit a multiple-descent trend with respect to $k$ where depending on spectrum decay, $\phi(k,m,n)$ can be as tight as $\phi = O\rbr{k^{1/4}}$ for small $k$s; while for larger $k$s that fall in certain intervals, $\phi = \Omega(\sqrt{k})$ \cite{derezinski2020improved}.

\subsection{Randomized linear embeddings}\label{subsec:embedding}

For a given matrix $\Ab_k \in \R^{m \times n}$ of rank $k \leq \min\rbr{m,n}$ (typically we consider $k \ll \min(m,n)$), and a distortion parameter $\epsilon \in (0,1)$, a linear map $\Gammab: \R^m \to \R^l$ ($\ie$, $\Gammab \in \R^{l \times m}$, typically we consider $l \ll m$ for embeddings) is called an \textit{$\ell_2$ linear embedding} of $\Ab_k$ with distortion $\epsilon$ if
\begin{align}\label{eq:def_l2_rand_linear_embedding}
    (1 - \epsilon) \norm{\Ab_k\xb}_2 \leq \norm{\Gammab\Ab_k\xb}_2 \leq (1 + \epsilon) \norm{\Ab_k\xb}_2 \quad \forall\ \xb \in \R^n.
\end{align}
A distribution $\Scal$ over linear maps $\R^m \to \R^l$ (or equivalently, over $\R^{l \times m}$) generates \textit{randomized oblivious $\ell_2$ linear embeddings} (abbreviated as \textit{randomized linear embeddings}) if over $\Gammab \sim \Scal$, \Cref{eq:def_l2_rand_linear_embedding} holds for all $\Ab_k$ with at least constant probability. Given $\Ab_k$ and a randomized linear embedding $\Gammab \sim \Scal$, $\Gammab\Ab_k$ provides a (row) sketch of $\Ab_k$, and the process of forming $\Gammab\Ab_k$ is known as sketching \cite{woodruff2015, martinsson2020}.

Randomized linear embeddings are closely related to various concepts like the Johnson-Lindenstrauss lemma and the restricted isometry property, and are studied in a broad scope of literature. Some popular randomized linear embeddings (\cf \cite{martinsson2020} Section 8, 9) include:
\begin{enumerate}
    \item Gaussian embeddings: $\Gammab \in \R^{l \times m}$ with $\iid$ Gaussian entries drawn from $\Ncal(0, 1/l)$ \cite{indyk1998, halko2011finding, woodruff2015, martinsson2020};
    \item subsampled randomized trigonometric transforms (SRTT):
    \begin{align*}
        \Gammab = \sqrt{\frac{m}{l}} \Pib_{m \to l} \Tb \Phib \Pib_{m \to m}
    \end{align*}
    where
    $\Pib_{m \to l} \in \R^{l \times m}$ is a uniformly random selection of $l$ out of $m$ rows;
    $\Tb$ is an $m \times m$ unitary trigonometric transform ($\eg$, discrete Hartley transform for $\R$, and discrete Fourier transform for $\C$);
    $\Phib \triangleq \diag\rbr{\varphi_1,\dots,\varphi_m}$ with $\iid$ Rademacher random variables $\cbr{\varphi_i}_{i \in [m]}$ flips signs randomly; and
    $\Pib_{m \to m}$ is a random permutation \cite{woolfe2008, halko2011finding, rokhlin2008, tropp2011, boutsidis2013srtt, martinsson2020}; and
    \item sparse sign matrices: $\Gammab = \sqrt{\frac{m}{\zeta}} \sbr{\sb_1,\dots,\sb_m}$ for some $2 \leq \zeta \leq l$, with $\iid$ $\zeta$-sparse columns $\cbr{\sb_j \in \R^l}_{j \in [m]}$ constructed such that each $\sb_j$ is filled with $\zeta$ independent Rademacher random variables at uniformly random coordinates \cite{meng2013, nelson2013, woodruff2015, clarkson2017, tropp2017a, martinsson2020}.
\end{enumerate}
\Cref{tab:embedding_summary} summarizes lower bounds on $l$s that provide theoretical guarantee for \Cref{eq:def_l2_rand_linear_embedding}, along with asymptotic complexities of sketching, denoted as $T_s(l,\Ab_k)$, for these randomized linear embeddings.
In spite of the weaker guarantees for structured randomized embeddings ($\ie$, SRTTs and sparse sign matrices) in the theory by a logarithmic factor, from the empirical perspective, $l = \Omega\rbr{k/\epsilon^2}$ is usually sufficient for all the embeddings in \Cref{tab:embedding_summary} when considering tasks such as constructing randomized rangefinders (which we subsequently leverage for fast skeleton selection). For instance, it was suggested in \cite{halko2011finding, martinsson2020} to take $l=k+\Omega(1)$ ($\eg$, $l=k+10$) for Gaussian embeddings, $l=\Omega\rbr{k}$ for SRTTs, and $l=\Omega\rbr{k}$, $\zeta = \min\rbr{l,8}$ for sparse sign matrices in practice \cite{tropp2019streaming}.

\begin{table}
    \centering
    \caption{Lower bounds of $l$s that provide theoretical guarantee for \Cref{eq:def_l2_rand_linear_embedding}, and asymptotic complexities of sketching, $T_s(l,\Ab_k)$, for some common randomized linear embeddings.}
    \label{tab:embedding_summary}
    \begin{tabular}{c|c|c}
    \toprule
    Randomized linear embedding & Theoretical best dimension reduction & $T_s(l,\Ab_k)$ \\
    \hline
    Gaussian embedding & $l = \Omega\rbr{k/\epsilon^2}$ & $O(\nnz(\Ab_k) l)$ \\
    SRTT & $l = \Omega\rbr{k \log k / \epsilon^2}$ & $O(mn \log l)$ \\
    Sparse sign matrix & $l = \Omega\rbr{k \log k / \epsilon^2}$, $\zeta = \Omega\rbr{\log k / \epsilon}$ & $O(\nnz(\Ab_k) \zeta)$\\
    \bottomrule
    \end{tabular}
\end{table}

\subsection{Randomized rangefinder and low-rank SVD}\label{subsec:rsvd}
Given $\Ab \in \R^{m \times n}$, the randomized rangefinder problem aims to construct a matrix $\Xb \in \R^{l \times n}$ such that the row space of $\Xb$ aligns well with the leading right singular subspace of $\Ab$ \cite{martinsson2020}: $\nbr{\Ab - \Ab \Xb^{\pinv} \Xb}$ is sufficiently small for some unitary invariant norm $\nbr{\cdot}$ ($\eg$, $\nbr{\cdot}_2$ or $\nbr{\cdot}_F$). 
When $\Xb$ admits full row rank, we call $\Xb$ a rank-$l$ row basis approximator of $\Ab$. The well-known optimality result from \cite{eckart1936} demonstrated that, for a fixed rank $k$, the optimal rank-$k$ row basis approximator of $\Ab$ is given by its leading $k$ right singular vectors: $\nbr{\Ab - \Ab \Vb_{A,k} \Vb_{A,k}^{\top}}_F^2 = \nbr{\Ab - \Ab_k}_F^2 = \sum_{i=k+1}^{\min\cbr{m,n}} \sigma_i\rbr{\Ab}^2$.

A row sketch $\Xb = \Gammab \Ab$ generated by some proper randomized linear embedding $\Gammab$ is known to serve as a good solution for the randomized rangefinder problem with high probability. For instance, with the Gaussian embedding, a small constant oversampling $l-k \geq 4$ is sufficient for a good approximation \cite{halko2011finding}:
\begin{align}\label{eq:rand_rangefinder_error_bound}
    \E \sbr{\nbr{\Ab - \Ab \Xb^{\pinv} \Xb}_F^2} \leq \frac{l-1}{l-k-1} \nbr{\Ab - \Ab_k}_F^2,
\end{align}
and moreover, $\nbr{\Ab - \Ab \Xb^{\pinv} \Xb}_F^2 \lesssim l (l-k) \log\rbr{l-k} \nbr{\Ab - \Ab_k}_F^2$ with high probability.
Similar guarantees hold for spectral norm (\cite{halko2011finding}, Section 10).
The randomized rangefinder error depends on the spectral decay of $\Ab$, and can be aggravated by a flat spectrum. In this scenario, power iterations (with proper orthogonalization, \cite{halko2011finding}), as well as Krylov and block Krylov subspace iterations (\cite{musco2015randomized}), may be incorporated after the initial sketching as a remedy. For example, with a randomized linear embedding $\Omegab$ of size $l \times n$, a row basis approximator with $q$ power iterations ($q \geq 1$) is given by
\begin{align}\label{eq:def_power_iter}
    \Xb = \Omegab \rbr{\Ab^{\top} \Ab}^{q}
,\end{align}
and takes $\Xb$ takes $O\rbr{T_s(l,\Ab) + (2q-1) \nnz(\Ab) l}$ operations to construct. However, such plain power iteration in \Cref{eq:def_power_iter} is numerically unstable and can lead to large errors when $\Ab$ is ill-conditioned and $q > 1$. For a stable construction, orthogonalization can be applied at each iteration:
\begin{align}\label{eq:ortho_power_iter}
    & \Yb^{(1)} = \Ab \Omegab^{\top} \nonumber
    \\
    & \Yb^{(i)} = \ortho\rbr{\Ab \ortho\rbr{\Ab^{\top} \Yb^{(i-1)}}} \ \forall\ i = 2,\dots,q\ (\t{if}\ q > 1) \nonumber
    \\
    & \Xb = \ortho\rbr{\Yb^{(q)}}^{\top} \Ab
\end{align}
with an additional cost of $O\rbr{q(m+n)l^2}$ overall.

In addition, with a proper $l$ that does not exceed the exact rank of $\Ab$, the row sketch $\Xb \in \R^{l \times n}$ has a full row rank almost surely.
Precisely, recall $r=\rank(\Ab) \le \min\cbr{m,n}$ from \Cref{subsec:notation}:
\begin{remark}\label{lemma:sketch_for_rand_rangefinder}
    For a Gaussian embedding $\Gammab \in \R^{l \times m}$ with \iid entries from $\Ncal\rbr{0,1/l}$ and $l \leq r$, the row sketch $\Xb = \Gammab \Ab$ has full row rank almost surely.
\end{remark}
\begin{proof}[Rationale for \Cref{lemma:sketch_for_rand_rangefinder}]
    Recall the reduced SVD of $\Ab$, $\Ab = \Ub_\Ab \Sigmab_\Ab \Vb_\Ab^\top$. Given $l \le r$, it is sufficient to show that $\rank\rbr{\Gammab \Ub_\Ab}=l$. Since $\Ub_{\Ab}$ consists of orthonormal columns, by the rotation invariance of Gaussian distribution, $\Gammab \Ub_\Ab \in \R^{l \times r}$ can also be viewed as a Gaussian random matrix with \iid entries from $\Ncal\rbr{0,1/l}$. 
    Since all square submatrices of a Gaussian random matrix are invertible almost surely \cite{davidson2001local,Rudelson2008invertibility}, we have $\rank\rbr{\Gammab \Ub_\Ab}=l$ almost surely.
\end{proof}

A low-rank row basis approximator $\Xb$ can be subsequently leveraged to construct a randomized rank-$l$ SVD.
Assuming $l$ is properly chosen such that $\Xb$ has full row rank, let $\Qb_X \in \R^{n \times l}$ be an orthonormal basis for the row space of $\Xb$.
The exact SVD of the smaller matrix $\Ab \Qb_X \in \R^{m \times l}$,
\begin{align}\label{eq:rsvd_procedures}
    \sbr{\underset{m \times l}{\wh \Ub_A}, \underset{l \times l}{\wh \Sigmab_A}, \underset{l \times l}{\wt \Vb_A}} = \t{svd} \rbr{\underbrace{\Ab \Qb_X}_{m \times l}, \t{`econ'}},
    \quad
    \underset{n \times l}{\wh\Vb_A} = \Qb_X \wt\Vb_A,
\end{align}
can be evaluated efficiently in $O\rbr{ml^2}$ operations (and $O\rbr{nl^2}$ additional operations for constructing $\wh\Vb_A$) such that $\Ab \approx \Ab \Xb^{\pinv} \Xb = \wh\Ub_A \wh\Sigmab_A \wh\Vb_A^{\top}$ \cite{halko2011finding}.

\subsection{Matrix decompositions with pivoting}\label{subsec:pivoting_mat_decomp}
We next briefly survey how pivoted QR and LU decompositions can be leveraged to
resolve the matrix skeleton selection problem.
In this section, $\Xb \in \R^{l \times n}$ denotes a matrix of full row rank
(that will typically arise as a ``row space approximator'').
Let $\Xb^{(t)} \in \R^{l \times n}$ be the resulted matrix after the $t$-th step of pivoting and matrix updating, so that $\Xb^{(0)} = \Xb$.

\subsubsection{Column pivoted QR (\pqr)}
Applying the \pqr to $\Xb$ gives:
\begin{align}\label{eq:def_QRCP}
    \Xb\ \underset{n \times n}{\Pib_n} = \Xb\ \underset{n \times l}{\left[\Pib_{n,1}\right.},\ \underset{n \times (n-l)}{\left.\Pib_{n,2}\right]} =  \underset{l \times l}{\Qb_l}\ \underset{l \times n}{\Rb^{QR}} = \Qb_l\ \underset{l \times l}{\left[\Rb^{QR}_{1}\right.},\ \underset{l \times (n-l)}{\left.\Rb^{QR}_{2}\right]},
\end{align}
where $\Qb_l$ is an orthogonal matrix; $\Rb^{QR}_{1}$ is upper triangular; and $\Pib_n \in \R^{n \times n}$ is a column permutation.
QR decompositions rank-$1$ update the active submatrix at each step for orthogonalization ($\eg$, \cite{householder1958hhqr}, \cite{trefethen1997}, Algorithm 10.1).
For each $t=0,\dots,l-2$, at the $(t+1)$-th step, the \pqr searches the entire active submatrix $\Xb^{(t)}\rbr{t+1:l, t+1:n}$ for the $(t+1)$-th column pivot with the maximal $\ell_2$-norm:
\begin{align*} 
    j_{t+1} = \argmax_{t+1 \leq j \leq n} \nbr{\Xb^{(t)}\rbr{t+1:l, j}}_2.
\end{align*}
As illustrated in  \cite{gu1996}, \pqr satisfies $\max_{i,j}\abbr{\rbr{\rbr{\Rb^{QR}_{1}}^{-1} \Rb^{QR}_{2}}_{ij}} \leq 2^{l-i}$; while the upper bound is tight with the classical Kahan matrix \cite{kahan1966}. Nevertheless, these adversarial inputs are scarce and sensitive to perturbations. The empirical success of \pqr also suggests that exponential growth with respect to $l$ almost never occurs in practice \cite{trefethen1990}.
Meanwhile, there exist more sophisticated variations of \pqr, like the rank-revealing \cite{chan1987, hong1992} and strong rank-revealing QR \cite{gu1996}, guaranteeing that $\max_{i,j} \abbr{\rbr{\rbr{\Rb^{QR}_{1}}^{-1} \Rb^{QR}_{2}}_{ij}}$ is upper bounded by some low-degree polynomial in $l$, but coming with higher complexities as trade-off.

\subsubsection{LU with partial pivoting (\plu)}
Applying the \plu to $\Xb^\top$ yields:
\begin{align}\label{eq:def_LUPP}
    \Xb\ \underset{n \times n}{\Pib_n} = \Xb\ \underset{n \times l}{\left[\Pib_{n,1}\right.},\ \underset{n \times (n-l)}{\left.\Pib_{n,2}\right]} =  \underset{l \times l}{\Lb_l}\ \underset{l \times n}{\Rb^{LU}} = \Lb_l\ \underset{l \times l}{\left[\Rb^{LU}_{1}\right.},\ \underset{l \times (n-l)}{\left.\Rb^{LU}_{2}\right]},
\end{align}
where $\Lb_l$ is lower triangular; $\Rb^{LU}_{1}$ is upper triangular; $\Rb_1^{LU}\rbr{i,i} = 1$ and $\abbr{\Rb^{LU}\rbr{i,j}} \leq 1$ for all $i \in [l]$, $i \leq j \leq n$; and $\Pib_n \in \R^{n \times n}$ is a column permutation.
LU decompositions update active submatrices via Shur complements ($\eg$, \cite{trefethen1997}, Algorithm 21.1): for $t=0,\dots,l-2$,
\begin{align*}
    \Xb^{(t+1)}\rbr{t+2:l,t+2:n} = &\Xb^{(t)}\rbr{t+2:l,t+2:n} \\
    - &\Xb^{(t)}\rbr{t+2:l,t} \Xb^{(t)}\rbr{t,t+2:n}/\Xb^{(t)}\rbr{t,t}.
\end{align*}
At the $(t+1)$-th step, the \plu on $\Xb^\top$ searches only the $(t+1)$-th row in the active submatrix and pivots
\begin{align*}
    j_{t+1} = \argmax_{t+1 \leq j \leq n} \abbr{\Xb^{(t)}\rbr{t+1, j}},
\end{align*}
such that $\Rb^{LU}\rbr{i,j} = \Xb^{(i-1)}\rbr{i, j} / \Xb^{(i)}\rbr{i, i}$ for all $i \in [l]$, $i+1 \leq j \leq n$ (except for $\Rb^{LU}\rbr{i,j_i} = \Xb^{(i-1)}\rbr{i, i} / \Xb^{(i)}\rbr{i, i}$), and therefore $\abbr{\Rb^{LU}\rbr{i,j}} \leq 1$.

Analogous to \pqr, the pivoting strategy of \plu leads to a loose, exponential upper bound:
\begin{remark}\label{lemma:LUPP_entry_bound}
The \plu in \Cref{eq:def_LUPP} satisfies that
\begin{align*}
    \max_{i,j} \abbr{\rbr{\rbr{\Rb^{LU}_{1}}^{-1} \Rb^{LU}_{2}}_{ij}} \leq 2^{l-i},
\end{align*}
where the upper bound is tight, for instance, when $\Rb^{LU}_{1}\rbr{i,j}=-1$ for all $i \in [l-1]$, $i+1 \leq j \leq l$ and $\Rb^{LU}_{2}\rbr{i,j} = 1$ for all $i \in [l]$, $j \in [n-l]$ ($\ie$, a Kahan-type matrix \cite{kahan1966, peters1975}). 
\end{remark}

\begin{proof}[Rationale for \Cref{lemma:LUPP_entry_bound}]
In reminiscence of the exponential worse-case growth factor of Gaussian elimination with partial pivoting (\eg, \cite{golub2013} Section 3.4.5), we start by observing the following recursive relations: for all $j=1,\dots,n-l$ and $i=l-1,\dots,1$,
\begin{align*}
    & \rbr{\rbr{\Rb^{LU}_{1}}^{-1} \Rb^{LU}_{2}}_{lj} = \Rb^{LU}_{2}\rbr{l,j}
    \\
    & \rbr{\rbr{\Rb^{LU}_{1}}^{-1} \Rb^{LU}_{2}}_{ij} = \Rb^{LU}_{2}\rbr{i,j} - \sum_{\iota=i+1}^l \Rb^{LU}_{1}\rbr{i,\iota} \rbr{\rbr{\Rb^{LU}_{1}}^{-1} \Rb^{LU}_{2}}_{\iota,j} 
\end{align*}
given $\Rb_1^{LU}\rbr{i,i} = 1$. Then both the upper bound and the adversarial examples of Kahan-type matrices follow from the fact that $\abbr{\Rb^{LU}\rbr{i,j}} \leq 1$ for all $i \in [l]$, $i \leq j \leq n$.
\end{proof}

In addition to the exponential worst-case scenario in \Cref{lemma:LUPP_entry_bound}, \plu is also vulnerable to rank deficiency since it only views one row for each pivoting step (in contrast to \pqr which searches the entire active submatrix). The advantage of the \plu type pivoting scheme is its superior empirical efficiency and parallelizability \cite{geist1988lu, kurzak2007, grigori2011calu, solomonik2011communication}.
Fortunately, as with \pqr, adversarial inputs for \plu are sensitive to perturbations ($\eg$, flip the signs of random off-diagonal entries in $\Rb_1^{LU}$), and are rarely encountered in practice.

\plu can be further stabilized with randomization \cite{pan2015rand, pan2017num, trefethen1990}. In terms of the worse-case exponential entry-wise bound in \Cref{lemma:LUPP_entry_bound}, the average-case growth factors of \plu on random matrices drawn from a variety of distributions ($\eg$, the Gaussian distribution, uniform distributions, Rademacher distribution, symmetry / Toeplitz matrices with Gaussian entries, and orthogonal matrices following Haar measure) were investigated in \cite{trefethen1990} where it was conjectured that the growth factor increases sublinearly with respect to the problem size in average cases.

\begin{remark}[Conjectured in \cite{trefethen1990}]\label{remark:lupp_stable_in_practice}
With randomized preprocessing like sketching, \plu is robust to adversarial inputs in practice, with $\max_{i,j} \abbr{\rbr{\rbr{\Rb^{LU}_{1}}^{-1} \Rb^{LU}_{2}}_{ij}} = O(l)$ in average cases.
\end{remark}

Some common alternatives to partial pivoting for LU decompositions include (adaptive) cross approximations \cite{Bebendorf2000ApproximationOB, tyrtyshnikov2000, zhao2005}, and complete pivoting.
Specifically, complete pivoting is a more robust ($\eg$, to rank deficiency) alternative to partial pivoting that searches the entire active submatrix, and permutes rows and columns simultaneously. Despite lacking theoretical guarantees for the plain complete pivoting, like for QR decompositions, there exists modified complete pivoting strategies for LU that come with better rank-revealing guarantees \cite{pan2000rrlu, gu2003srrlu,anderson2017, chen2020}, but higher computational cost as a trade-off.

\section{Summary of existing algorithms}\label{sec:exist_cur_algo}

A vast assortment of algorithms for interpolative and CUR decompositions have been proposed and analyzed in the past decades \cite{deshpande2006, deshpande2010, drineas2008relative, mahoney2009,  bien2010cur, wang2012cur, cohen2014, woodruff2014optimalCUR, boutsidis2014, sorensen2014, anderson2015spectral, aizenbud2016randomized, voronin2017, anderson2017, shabat2018randomized, cortinovis2019lowrank, chen2020, derezinski2020improved}. From the skeleton selection perspective, these algorithms broadly fall into two categories:
\begin{enumerate}
    \item sampling-based methods that draw matrix skeletons (directly, adaptively, or iteratively) from some proper distributions, and
    \item pivoting-based methods that pick matrix skeletons greedily by constructing low-rank matrix decompositions with pivoting.
\end{enumerate}
In this section, we discuss existing algorithms for matrix skeletonizations, with a focus on algorithms based on randomized linear embeddings and matrix decompositions with pivoting.

\subsection{Sampling based skeleton selection}\label{subsec:sampling_cur}
The idea of skeleton selection via sampling is closely related to various topics including graph sparsification \cite{batson2008} and volume sampling \cite{deshpande2006}.
Concerning volume sampling, adaptive sampling strategies \cite{deshpande2010, anderson2015spectral, cortinovis2019lowrank, derezinski2020improved} that lead to matrix skeletons with close-to-optimal error guarantees are reviewed in \Cref{sec:cssp}.
Meanwhile, leverage score sampling for constructing CUR decompositions, as well as efficient estimations for the leverage scores, are extensively studied in \cite{drineas2008relative, mahoney2009, bien2010cur, drineas2012}.
Furthermore, some sophisticated variations of sampling-based skeleton selection algorithms were proposed in \cite{woodruff2014optimalCUR, boutsidis2014, cohen2014, wang2012cur} where iterative sampling and/or combinations of different sampling schemes were incorporated.

\subsection{Skeleton selection via deterministic pivoting}\label{subsec:pivoting_cur}
Greedy algorithms based on column and row pivoting can also be used for matrix skeletonizations. For instance, with proper rank-revealing pivoting like the strong rank-revealing QR proposed in \cite{gu1996}, a rank-$k$ ($k < r$) column ID can be constructed with the first $k$ column pivots
\begin{align*}
    \Ab \underset{n \times l}{\left[\Pib_{n,1}\right.},\ \underset{n \times (n-l)}{\left.\Pib_{n,2}\right]} 
    = &\underset{m \times k}{\left[\Qb_{A,1}\right.},\ \underset{m \times (r-k)}{\left.\Qb_{A,2}\right]} \bmat{\Rb_{A,11} & \Rb_{A,12} \\ \b{0} & \Rb_{A,22}} \\
    \approx &\rbr{\Ab \Pib_{n,1}} \sbr{\Ib_k,\ \Rb_{A,11}^{-1} \Rb_{A,12}}
\end{align*}
where $\Pib_n = \sbr{\Pib_{n,1}, \Pib_{n,2}}$ is a permutation of columns; and $\Rb_{A,11}$ and $\Rb_{A,22}$ are non-singular and upper triangular. $\Cb = \rbr{\Ab \Pib_{n,1}}$ are the selected column skeletons that satisfies $\nbr{\Ab - \Cb \Cb^{\pinv} \Ab} = \nbr{\Rb_{A,22}} \lesssim \sqrt{k(n-k)} \nbr{\Ab - \Ab_k}$. 

As a more affordable alternative to the rank-revealing pivoting, the \pqr discussed in \Cref{subsec:pivoting_mat_decomp} also works well for skeleton selection in practice \cite{voronin2017}, despite the weaker theoretical guarantee due to the known existence of adversarial inputs ($\ie$, $\nbr{\Ab - \Cb \Cb^{\pinv} \Ab} \lesssim 2^k \nbr{\Ab - \Ab_k}$).

In addition to the QR-based pivoting schemes, (randomized) LU-based pivoting algorithms with rank-revealing guarantees \cite{pan2000rrlu, gu2003srrlu, aizenbud2016randomized, anderson2017, shabat2018randomized, chen2020} can also be leveraged for greedy matrix skeleton selection (as discussed in \Cref{subsec:pivoting_mat_decomp}). 
Alternatively, the DEIM skeleton selection algorithm \cite{sorensen2014, drmac2016qdeim} relaxes the rank-revealing requirements on pivoting schemes by applying \plu on the leading singular vectors of $\Ab$.

\subsection{Randomized pivoting-based skeleton selection}\label{subsec:rand_pivot_cur}
In comparison to the sampling-based skeleton selection, the deterministic pivoting-based skeleton selection methods suffer from two major drawbacks. First, pivoting is usually unaffordable for large-scale problems in common modern applications. Second, classical pivoting schemes like the \pqr and \plu are vulnerable to antagonistic inputs.
Fortunately, randomized pre-processing with sketching provides remedies to both problems:
\begin{enumerate}
    \item Faster execution speed is attained by executing classical pivoting schemes on a sketch
    $\Xb = \Gammab\Ab \in \R^{l \times n}$, for some randomized embedding $\Gammab$, instead on
    $\Ab$ directly.
    \item With randomization, classical pivoting schemes like the \pqr and \plu 
    are robust to adversarial inputs in practice (\Cref{remark:lupp_stable_in_practice}, \cite{trefethen1990}).
\end{enumerate}
\Cref{algo:sketch_pivot_CUR_general} describes a general framework for randomized pivoting-based skeleton selection. Grounding this framework down with different combinations of row basis approximators and pivoting schemes, it was proposed in \cite{voronin2017} to take $\Xb = \Gammab \Ab$ as a row sketch and apply \pqr to $\Xb$ for column skeleton selection.
Alternatively, the DEIM skeleton selection algorithm proposed in \cite{sorensen2014} can be accelerated by taking $\Xb$ as an approximation of the leading-$l$ right singular vectors of $\Ab$ (\Cref{eq:rsvd_procedures}), where \plu is applied for skeleton selection.

\begin{algorithm}[ht]
\caption{Randomized pivoting-based skeleton selection: a general framework}\label{algo:sketch_pivot_CUR_general}
\begin{algorithmic}[1]
\REQUIRE $\Ab \in \R^{m \times n}$ of rank $r$, rank $l \leq r$ (typically $l \ll \min(m,n)$).
\ENSURE Column and/or row skeleton indices, $J_s \subset [n]$ and/or $I_s \subset [m]$, $\abbr{J_s} = \abbr{I_s} = l$.

\STATE Draw an oblivious $\ell_2$-embedding $\Gammab \in \R^{l \times m}$.
\STATE Construct a row basis approximator $\Xb \in R^{l \times n}$ via sketching with $\Gammab$.\\
$\eg$, $\Xb$ can be 1) a row sketch or 2) approximations of right singular vectors.
\STATE Perform column-wise pivoting on $\Xb$. Let $J_s$ index the $l$ column pivots.
\STATE Perform row-wise pivoting on $\Cb = \Ab(:,J_s)$. Let $I_s$ index the $l$ row pivots.
\end{algorithmic}
\end{algorithm}

First, we recall from \Cref{lemma:sketch_for_rand_rangefinder} that when taking $\Xb$ as a row sketch, with Gaussian embeddings, $\Gammab$ and $\Xb = \Gammab \Ab$ are both full row rank with probability $1$. 
Moreover, when taking $\Xb$ as an approximation of right singular vectors constructed with a row basis approximator consisting of $l$ linearly independent rows, $\Xb$ also admits full row rank.

Second, when both column and row skeletons are inquired, \Cref{algo:sketch_pivot_CUR_general} selects the column skeletons first with randomized pivoting and subsequently identifies the row skeletons by pivoting on the selected columns. 
With $\Xb$ being full row rank (almost surely when $\Gammab$ is a Gaussian embedding), the column skeletons $\Cb$ are linearly independent. Therefore, the row-wise skeletonization of $\Cb$ is exact, without introducing additional errors. That is, the two-sided ID constructed by \Cref{algo:sketch_pivot_CUR_general} is equal to the associated column ID in exact arithmetic, $\tsid{\Ab}{I_s,J_s} = \cid{\Ab}{J_s}$.

\section{A simple but effective modification: \plu on sketches}\label{sec:cur_rand_lupp}
Inspired by the idea of pivoting on sketches \cite{voronin2017} and the remarkably competitive performance of \plu when applied to leading singular vectors \cite{sorensen2014}, we propose a simple but effective modification -- applying \plu directly to a sketch of $\Ab$. In terms of the general framework in \Cref{algo:sketch_pivot_CUR_general}, this corresponds to taking $\Xb$ as a row sketch of $\Ab$, and then selecting skeletons via \plu on $\Xb$ and $\Cb$, as summarized in \Cref{algo:rand-id-lupp}.

\begin{algorithm}[ht]
\caption{Randomized \plu skeleton selection}
\label{algo:rand-id-lupp}
\begin{algorithmic}[1]
\REQUIRE $\Ab \in \R^{m \times n}$ of rank $r$, rank $l \leq r$ (typically $l \ll \min(m,n)$).
\ENSURE Column and/or row skeleton indices, $J_s \subset [n]$ and/or $I_s \subset [m]$, $\abbr{J_s} = \abbr{I_s} = l$.

\STATE Draw an oblivious $\ell_2$-embedding $\Gammab \in \R^{l \times m}$.
\STATE Construct a row sketch $\Xb = \Gammab \Ab$.
\STATE Perform \plu on $\Xb^\top$. Let $J_s$ index the $l$ column pivots of $\Xb$ (\ie, row pivots of $\Xb^\top$).
\STATE Perform \plu on $\Cb = \Ab(:,J_s)$. Let $I_s$ index the $l$ row pivots.
\end{algorithmic}
\end{algorithm}

Comparing to pivoting with \pqr \cite{voronin2017}, \Cref{algo:rand-id-lupp} with \plu is empirically faster, as discussed in \Cref{subsec:pivoting_mat_decomp}, and illustrated in \Cref{fig:pivot-ps-time}.
Meanwhile, assuming that the true SVD of $\Ab$ is unavailable, in comparison to pivoting on the approximated leading singular vectors \cite{sorensen2014} from \Cref{eq:rsvd_procedures}, \Cref{algo:rand-id-lupp} saves the effort of constructing randomized SVD which takes $O\rbr{\nnz(\Ab)l + (m+n)l^2}$ additional operations. Additionally, with randomization, the stability of \plu conjectured in \cite{trefethen1990} (\Cref{remark:lupp_stable_in_practice}) applies, and \Cref{algo:rand-id-lupp} effectively circumvents the potential vulnerability of \plu to adversarial inputs in practice.
A formal error analysis of \Cref{algo:sketch_pivot_CUR_general} in general reflects these points:

\begin{theorem}[Column skeleton selection by pivoting on a row basis approximator]\label{thm:pivoting_on_rangeapprox_error_bound}
Given a row basis approximator $\Xb \in \R^{l \times n}$ ($l \leq r$) of $\Ab$ that admits full row rank, let $\Pib_n \in \R^{n \times n}$ be the resulted permutation after applying some proper column pivoting scheme on $\Xb$ that identifies $l$ linearly independent column pivots: for the $(l,n-l)$ column-wise partition $\Xb \Pib_n = \Xb \sbr{\Pib_{n,1}, \Pib_{n,2}} = \sbr{\Xb_{1}, \Xb_{2}}$, the first $l$ column pivots $\Xb_1 = \Xb\Pib_{n,1} \in \R^{l \times l}$ admits full column rank. 
Moreover, the rank-$l$ column ID $\cid{\Ab}{J_s} = \Cb \Cb^{\pinv} \Ab$, with linearly independent column skeletons $\Cb = \Ab \Pib_{n,1}$, satisfies that
\begin{align}\label{eq:pivoting_on_rangeapprox_error_bound}
    \nbr{\Ab - \Cb \Cb^{\pinv} \Ab} \leq \eta \nbr{\Ab - \Ab \Xb^{\pinv} \Xb},
\end{align}
where $\eta \leq \sqrt{1 + \nbr{\Xb_{1}^{\pinv} \Xb_{2}}_2^2}$, and $\nbr{\cdot}$ represents the spectral or Frobenius norm.
\end{theorem}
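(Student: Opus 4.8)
The plan is to reduce the bound to producing one convenient coefficient matrix $\Zb \in \R^{l \times n}$ with $\Cb\Zb$ close to $\Ab$, and to build that $\Zb$ from the column pivoting already carried out on $\Xb$. The reduction uses only that $\Cb\Cb^{\pinv}$ is the orthogonal projector onto $\col(\Cb)$: since $\rbr{\Ib_m - \Cb\Cb^{\pinv}}\Cb = \b{0}$ by the identity $\Cb\Cb^{\pinv}\Cb = \Cb$, we have $\rbr{\Ib_m - \Cb\Cb^{\pinv}}\Ab = \rbr{\Ib_m - \Cb\Cb^{\pinv}}\rbr{\Ab - \Cb\Zb}$ for every $\Zb$, whence $\nbr{\Ab - \Cb\Cb^{\pinv}\Ab} \le \nbr{\Ab - \Cb\Zb}$ because the orthogonal projector $\Ib_m - \Cb\Cb^{\pinv}$ has spectral norm at most $1$ and $\nbr{\Mb\Nb} \le \nbr{\Mb}_2\,\nbr{\Nb}$ holds for both the spectral and the Frobenius norm. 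It thus remains to choose $\Zb$ well.

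Since the pivoting produced an invertible square $\Xb_1 = \Xb\Pib_{n,1}$, I would take $\Zb \dfeq \sbr{\Ib_l,\ \Xb_1^{\pinv}\Xb_2}\Pib_n^{\top}$, the (permuted) column-ID interpolation matrix for $\Xb$, so that $\Xb_1\Zb = \Xb$ and $\Zb\Pib_n = \sbr{\Ib_l,\ \Xb_1^{\pinv}\Xb_2}$. Block-partitioning through $\Pib_n = \sbr{\Pib_{n,1},\ \Pib_{n,2}}$ and using $\Cb = \Ab\Pib_{n,1}$ gives $\rbr{\Ab - \Cb\Zb}\Pib_n = \sbr{\b{0},\ \Ab\Pib_{n,2} - \Cb\Xb_1^{\pinv}\Xb_2}$, so $\nbr{\Ab - \Cb\Zb} = \nbr{\Ab\Pib_{n,2} - \Cb\Xb_1^{\pinv}\Xb_2}$ by unitary invariance of both norms under the permutation, and a further rearrangement writes this residual as $\Ab\Pib_{n,2} - \Cb\Xb_1^{\pinv}\Xb_2 = \Ab\,\Wb$ with $\Wb \dfeq \Pib_n\mat{-\Xb_1^{\pinv}\Xb_2 \\ \Ib_{n-l}} \in \R^{n \times (n-l)}$.

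The key observation — the only step that is not bookkeeping — is that the columns of $\Wb$ lie in $\kernel(\Xb)$: indeed $\Xb\Wb = \Xb_1\rbr{-\Xb_1^{\pinv}\Xb_2} + \Xb_2 = \b{0}$ since $\Xb_1\Xb_1^{\pinv} = \Ib_l$. As $\Xb$ has full row rank, $\Ib_n - \Xb^{\pinv}\Xb$ is the orthogonal projector onto $\kernel(\Xb)$, so $\rbr{\Ib_n - \Xb^{\pinv}\Xb}\Wb = \Wb$ and hence $\Ab\Wb = \Ab\rbr{\Ib_n - \Xb^{\pinv}\Xb}\Wb = \rbr{\Ab - \Ab\Xb^{\pinv}\Xb}\Wb$; this is exactly what trades the uncontrolled factor $\Ab\Wb$ for the row-basis-approximator residual. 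Combining with $\nbr{\Mb\Nb} \le \nbr{\Mb}\,\nbr{\Nb}_2$ and the computation $\Wb^{\top}\Wb = \Ib_{n-l} + \rbr{\Xb_1^{\pinv}\Xb_2}^{\top}\rbr{\Xb_1^{\pinv}\Xb_2}$, \ie\ $\nbr{\Wb}_2 = \sqrt{1 + \nbr{\Xb_1^{\pinv}\Xb_2}_2^2}$, we obtain $\nbr{\Ab - \Cb\Cb^{\pinv}\Ab} \le \nbr{\Ab - \Cb\Zb} = \nbr{\rbr{\Ab - \Ab\Xb^{\pinv}\Xb}\Wb} \le \sqrt{1 + \nbr{\Xb_1^{\pinv}\Xb_2}_2^2}\,\nbr{\Ab - \Ab\Xb^{\pinv}\Xb}$, which is the asserted bound with $\eta \le \sqrt{1 + \nbr{\Xb_1^{\pinv}\Xb_2}_2^2}$.

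No single step is a serious obstacle; the one requiring a spark of insight is the insertion of the projector $\Ib_n - \Xb^{\pinv}\Xb$ via the kernel property of $\Wb$, and the mild care needed is to phrase the projector-norm, submultiplicativity, and unitary-invariance steps so that they hold simultaneously for $\nbr{\cdot}_2$ and $\nbr{\cdot}_F$ rather than treating the two norms separately. I would also note in passing that full column rank of $\Cb$ — needed for $\cid{\Ab}{J_s}$ to be a genuine rank-$l$ ID — follows from invertibility of $\Xb_1$ whenever $\row(\Xb) \subseteq \row(\Ab)$ (as for row sketches $\Xb = \Gammab\Ab$, where $\Xb_1 = \Gammab\Cb$, and for RSVD-based approximators), although the error estimate above does not use it.
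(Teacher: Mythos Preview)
Your proof is correct and is essentially the paper's argument in different clothing: your $\Cb\Zb$ coincides with the paper's $\Ab\pobx$ (where $\pobx = \Pib_{n,1}\Xb_1^{\pinv}\Xb$ is an oblique projector), and your $\Wb$ is precisely the nonzero block of $\Ib_n - \pobx$, so the kernel-insertion step $\Xb\Wb = \b{0}$ is the same identity $\Xb\pobx = \Xb$ used there. The only real difference is that the paper introduces a second oblique projector $\pobc = \Pib_{n,1}\Cb^{\pinv}\Ab$ and the identity $\pobc\pobx = \pobx$ to reach $\nbr{\Ab(\Ib_n - \pobc)} \le \nbr{\Ab(\Ib_n - \pobx)}$, whereas you bypass $\pobc$ entirely via the best-approximation inequality $\nbr{\Ab - \Cb\Cb^{\pinv}\Ab} \le \nbr{\Ab - \Cb\Zb}$; this makes your route slightly more elementary, at the cost of not exposing the projector structure that the paper's version highlights.
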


\Cref{thm:pivoting_on_rangeapprox_error_bound} states that when selecting column skeletons by pivoting on a row basis approximator, the low-rank approximation error of the resulting column ID is upper bounded by that of the associated row basis approximator up to a factor $\eta > 1$ that can be computed a posteriori efficiently in $O\rbr{l^2(n-l)}$ operations.
\Cref{eq:pivoting_on_rangeapprox_error_bound} essentially decouples the error from the row basis approximation with $\Xb$ ($\nbr{\Ab - \Ab \Xb^{\pinv} \Xb}$ corresponding to Line 1 and 2 of \Cref{algo:sketch_pivot_CUR_general}, as reviewed in \Cref{subsec:rsvd}) and that from the skeleton selection by pivoting on $\Xb$ ($\eta$ corresponding to Line 3 and 4 of \Cref{algo:sketch_pivot_CUR_general}).

Now we ground \Cref{thm:pivoting_on_rangeapprox_error_bound} with different choices of row basis approximation and pivoting strategies:
\begin{enumerate}
    \item With \Cref{algo:rand-id-lupp}, $\nbr{\Ab - \Ab \Xb^{\pinv} \Xb}$ is the randomized rangefinder error (\Cref{eq:rand_rangefinder_error_bound}, \cite{halko2011finding} Section 10), and $\eta \leq \sqrt{1+\nbr{\rbr{\Rb_1^{LU}}^{-1} \Rb_2^{LU}}_2^2}$ (recall \Cref{eq:def_LUPP}). Although in the worst-case scenario (where the entry-wise upper bound in \Cref{lemma:LUPP_entry_bound} is tight), $\eta = \Theta\rbr{2^l \sqrt{n-l}}$, with a randomized row sketch $\Xb$, assuming the stability of \plu conjectured in \cite{trefethen1990} holds (\Cref{remark:lupp_stable_in_practice}), $\eta = O\rbr{l^{3/2} \sqrt{n-l}}$.
    \item Skeleton selection with \pqr on row sketches ($\ie$, randomized \pqr proposed in \cite{voronin2017}) shares the same error bound as \Cref{algo:rand-id-lupp} ($\ie$, analogous arguments hold for $\nbr{\rbr{\Rb_1^{QR}}^{-1} \Rb_2^{QR}}_2^2$).
    \item When applying \plu on the true leading singular vectors ($\ie$, DEIM proposed in \cite{sorensen2014}, assuming that the true SVD is available), $\nbr{\Ab - \Ab \Xb^{\pinv} \Xb} = \nbr{\Ab - \Ab_l}$, but without randomization, LUPP is vulnerable to adversarial inputs which can lead to $\eta = \Theta\rbr{2^l \sqrt{n-l}}$ in the worse case.
    \item When applying \plu on approximations of leading singular vectors (constructed via \Cref{eq:rsvd_procedures}, $\ie$, randomized DEIM suggested in \cite{sorensen2014}), $\nbr{\Ab - \Ab \Xb^{\pinv} \Xb}$ corresponds to the randomized rangefinder error with power itertions (\cite{halko2011finding} Corollary 10.10), while $\eta$ follows the analogous analysis as for \Cref{algo:rand-id-lupp}.
\end{enumerate}

\begin{proof}[Proof of \Cref{thm:pivoting_on_rangeapprox_error_bound}]
We start by defining two oblique projectors
\begin{align*}
    \underset{n \times n}{\pobx} \triangleq \Pib_{n,1} \rbr{\Xb \Pib_{n,1}}^{\pinv} \Xb,
    \quad
    \underset{n \times n}{\pobc} \triangleq \Pib_{n,1} \rbr{\Cb^{\top} \Cb}^{\pinv} \Cb^{\top} \Ab,
\end{align*}
and observe that, since $\Cb$ consists of linearly independent columns, $\rbr{\Cb^{\top} \Cb}^{\pinv} \Cb^{\top} \Ab \Pib_{n,1} = \Ib_l$, and
\begin{align*}
    \pobc\pobx = \Pib_{n,1} \rbr{\Cb^{\top} \Cb}^{\pinv} \Cb^{\top} \Ab \Pib_{n,1} \rbr{\Xb \Pib_{n,1}}^{\pinv} \Xb = \pobx.
\end{align*}
With $\pobc$, we can express the column ID as
\begin{align*}
    \cid{\Ab}{J_s} = \Cb \Cb^{\pinv} \Ab = \Ab \rbr{\Pib_{n,1} \rbr{\Cb^{\top} \Cb}^{\pinv} \Cb^{\top} \Ab} = \Ab \pobc,
\end{align*}
Therefore, the low-rank approximation error of $\cid{\Ab}{J_s}$ satisfies
\begin{align*}
    \nbr{\Ab - \Cb \Cb^{\pinv} \Ab} = & \nbr{\Ab \rbr{\Ib - \pobc}}
    \\
    = & \nbr{\Ab \rbr{\Ib_n - \pobc} \rbr{\Ib_n - \pobx}}
    \\
    = & \nbr{\rbr{\Ib_m - \Cb \Cb^{\pinv}} \Ab \rbr{\Ib_n - \pobx}}
    \\
    \leq & \nbr{\Ib_m - \Cb \Cb^{\pinv}}_2 \nbr{\Ab \rbr{\Ib_n - \pobx}},
\end{align*}
where $\nbr{\Ib_m - \Cb \Cb^{\pinv}}_2 = 1$, and since $\Xb \pobx = \Xb_1 \Xb_1^{\pinv} \Xb = \Xb$ with $\Xb_1$ being full-rank,
\begin{align*}
    \nbr{\Ab \rbr{\Ib_n - \pobx}} = \nbr{\Ab \rbr{\Ib_n - \Xb^{\pinv} \Xb} \rbr{\Ib_n - \pobx}} = \nbr{\Ib_n - \pobx}_2 \nbr{\Ab \rbr{\Ib_n - \Xb^{\pinv} \Xb}}.
\end{align*}
As a result, we have
\begin{align*}
    \nbr{\Ab - \Cb \Cb^{\pinv} \Ab} \leq \nbr{\Ib_n - \pobx}_2 \nbr{\Ab \rbr{\Ib_n - \Xb^{\pinv} \Xb}},
\end{align*}
and it is sufficient to show that $\eta \triangleq \nbr{\Ib_n - \pobx}_2 \leq \sqrt{1 + \nbr{\Xb_{1}^{\pinv} \Xb_{2}}_2^2}$.
Indeed,
\begin{align*}
    \Ib_n - \pobx 
    = &\Pib_n^{\top} \Pib_n - \Pib_n^{\top} \Pib_{n,1} \rbr{\Xb \Pib_{n,1}}^{\pinv} \Xb \Pib_n \\
    = &\Ib_n - \bmat{\Ib_l \\ \b0} \Xb_1^{\pinv} \bmat{\Xb_1 \Xb_2} = \bmat{\b0 & - \Xb_1^{\pinv} \Xb_2 \\ \b0 & \Ib_{n-l}}
\end{align*}
such that $\eta = \nbr{\sbr{- \Xb_1^{\pinv} \Xb_2; \Ib_{n-l}}}_2 \leq \sqrt{1 + \nbr{\Xb_{1}^{\pinv} \Xb_{2}}_2^2}$.
\end{proof}

Here, the proof of \Cref{thm:pivoting_on_rangeapprox_error_bound} is reminiscent of \cite{sorensen2014}, while it generalizes the result for fixed right leading singular vectors to any proper row basis approximators of $\Ab$ ($\eg$, $\wh\Vb_A$ in \Cref{eq:rsvd_procedures}, or simply a row sketch). The generalization of \Cref{thm:pivoting_on_rangeapprox_error_bound} leads to a factor $\eta$ that is efficiently computable a posteriori, which can serve as an empirical replacement of the exponential upper bound induced by the scarce adversarial inputs.

In addition to the empirical efficiency and robustness discussed above, \Cref{algo:rand-id-lupp} has another potential advantage: the skeleton selection algorithm can be easily adapted to the streaming setting. 
The streaming setting considers $\Ab$ as a data stream that can only be accessed as a sequence of snapshots. Each snapshot of $\Ab$ can be viewed only once, and the storage of the entire matrix $\Ab$ is infeasible \cite{tropp2017a, tropp2019streaming, martinsson2020}.

\begin{algorithm}[ht]
\caption{Streaming \plu/\pqr skeleton selection}\label{algo:sketch_pivot_CUR_online}
\begin{algorithmic}[1]
\REQUIRE $\Ab \in \R^{m \times n}$ of rank $r$, rank $l \leq r$ (typically $l \ll \min(m,n)$).
\ENSURE Column and/or row skeleton indices, $J_s \subset [n]$ and/or $I_s \subset [m]$, $\abbr{J_s} = \abbr{I_s} = l$.

\STATE Draw independent oblivious $\ell_2$-embeddings $\Gammab \in \R^{l \times m}$ and $\Omegab \in \R^{l \times n}$.
\STATE Construct row and column sketches, $\Xb = \Gammab \Ab$ and $\Yb = \Ab \Omegab^{\top}$, in a single pass through $\Ab$.
\STATE Perform column-wise pivoting (\plu on $\Xb^\top$ or \pqr on $\Xb$). Let $J_s$ index the $l$ column pivots.
\STATE Perform row-wise pivoting (\plu on $\Yb$ or \pqr on $\Yb^\top$). Let $I_s$ index the $l$ row pivots.
\end{algorithmic}
\end{algorithm}

\begin{remark}
When only the column and/or row skeleton \textit{indices} are required (and not the explicit construction of the corresponding interpolative or CUR decomposition), \Cref{algo:rand-id-lupp} can be adapted to the streaming setting (as shown in \Cref{algo:sketch_pivot_CUR_online}) by sketching both sides of $\Ab$ independently in a single pass, and pivoting on the resulting column and row sketches.
Moreover, with the column and row skeletons $J_s$ and $I_s$ from \Cref{algo:sketch_pivot_CUR_online}, \Cref{thm:pivoting_on_rangeapprox_error_bound} and its row-wise analog together, along with \Cref{eq:id_cur_suboptimality}, imply that
\begin{align*}
    \nbr{\Ab - \cur{\Ab}{I_s,J_s}} \leq \eta_X \nbr{\Ab - \Ab \Xb^{\pinv} \Xb} + \eta_Y \nbr{\Ab - \Yb \Yb^{\pinv} \Ab},
\end{align*}
where $\eta_X$ and $\eta_Y$ are small in practice with pivoting on randomized sketches and have efficiently a posteriori computable upper bounds given $\Xb$ and $\Yb$, as discussed previously. $\nbr{\Ab - \Ab \Xb^{\pinv} \Xb}_F^2$ and $\nbr{\Ab - \Yb \Yb^{\pinv} \Ab}$ are the randomized rangefinder errors with well-established upper bounds (\cite{halko2011finding} Section 10).
\end{remark}

We point out that, although the column and row skeleton selection can be conducted in a streaming fashion, the explicit stable construction of ID or CUR requires two additional passes through $\Ab$: one pass for retrieving the skeletons $\Cb$ and/or $\Rb$, and the other pass to construct $\Cb^\pinv \Ab \Rb^\pinv$ for CUR, or $\Cb^\pinv \Ab$, $\Ab \Rb^\pinv$ for IDs.
In practice, for efficient estimations of the ID or CUR when revisiting $\Ab$ is expensive, it is possible to circumvent the second pass through $\Ab$ with compromise on accuracy and stability, albeit the inevitability of the first pass for skeleton retrieval. 

Precisely for the ID, $\Cb^\pinv \Ab$ (in \Cref{eq:def_column_id}) or $\Ab \Rb^\pinv$ (in \Cref{eq:def_row_id}) can be estimated without revisiting $\Ab$ leveraging the associated row and column sketches: 
\begin{align*}
    \Cb^\pinv \Ab \approx \Xb_1^\pinv \Xb,
    \quad 
    \Ab \Rb^\pinv \approx \Yb \Yb_1^\pinv,    
\end{align*}    
where $\Xb_1 = \Xb\rbr{:,J_s}$ and $\Yb_1 = \Yb\rbr{I_s,:}$ are the $l$ column and row pivots in $\Xb$ and $\Yb$, respectively.
Meanwhile for the CUR, by retrieving the skeletons $\Sb = \Ab\rbr{I_s,J_s}$, $\Cb=\Ab\rbr{:,J_s}$ and $\Rb=\Ab\rbr{I_s,:}$, we can construct a CUR decomposition $\Cb \Sb^{-1} \Rb$, despite the compromise on both accuracy and stability.

\section{Numerical experiments}\label{sec:cur_experiments}

In this section, we study the empirical performance of various randomized skeleton selection algorithms.
Starting with the randomized pivoting-based algorithms, we investigate the efficiency of two major components of \Cref{algo:sketch_pivot_CUR_general}: 
(1) the sketching step for row basis approximator construction, and 
(2) the pivoting step for greedy skeleton selection. 
Then we explore the suboptimality (in terms of low-rank approximation errors of the resulting CUR decompositions $\nbr{\Ab - \cur{\Ab}{I_s,J_s}}$), as well as the efficiency (in terms of empirical run time), of different randomized skeleton selection algorithms.

We conduct all the experiments, except for those in \Cref{fig:sketch-ps-time} on the efficiency of sketching, in MATLAB R2020a. In the implementation, the computationally dominant processes, including the sketching, \plu, \pqr, and SVD, are performed by the MATLAB built-in functions.
The experiments in \Cref{fig:sketch-ps-time} are conducted in Julia Version 1.5.3 with the JuliaMatrices/LowRankApprox.jl package \cite{lowrankjl2020}.

\subsection{Computational speeds of different embeddings}
\label{subsec:exp_embedding}
Here, we compare the empirical efficiency of constructing sketches with some common randomized embeddings listed in \Cref{tab:embedding_summary}. We consider applying an embedding $\Gammab$ of size $l \times m$ to a matrix $\Ab$ of size $m \times n$, which can be interpreted as embedding $n$ vectors in an ambient space $\R^m$ to a lower dimensional space $\R^l$. We scale the experiments with respect to the ambient dimension $m$, at several different embedding dimensions $l$, with a fixed number of repetitions $n=1000$.
Figure \ref{fig:sketch-ps-time} suggests that, with proper implementation, the sparse sign matrices are more efficient than the Gaussian embeddings and the SRTTs, especially for large-scale problems. The SRTTs outperform Gaussian embeddings in terms of efficiency, and such an advantage can be amplified as $l$ increases. These observations align with the asymptotic complexity in \Cref{tab:embedding_summary}. While we also observe that, with MATLAB default implementation, the Gaussian embeddings usually enjoy matching efficiency as sparse sign matrices for moderate-size problems, and are more efficient than SRTTs.

\begin{figure}[ht]
    \centering
    \begin{subfigure}{0.32\textwidth}
    \centering
    \includegraphics[width=\linewidth]{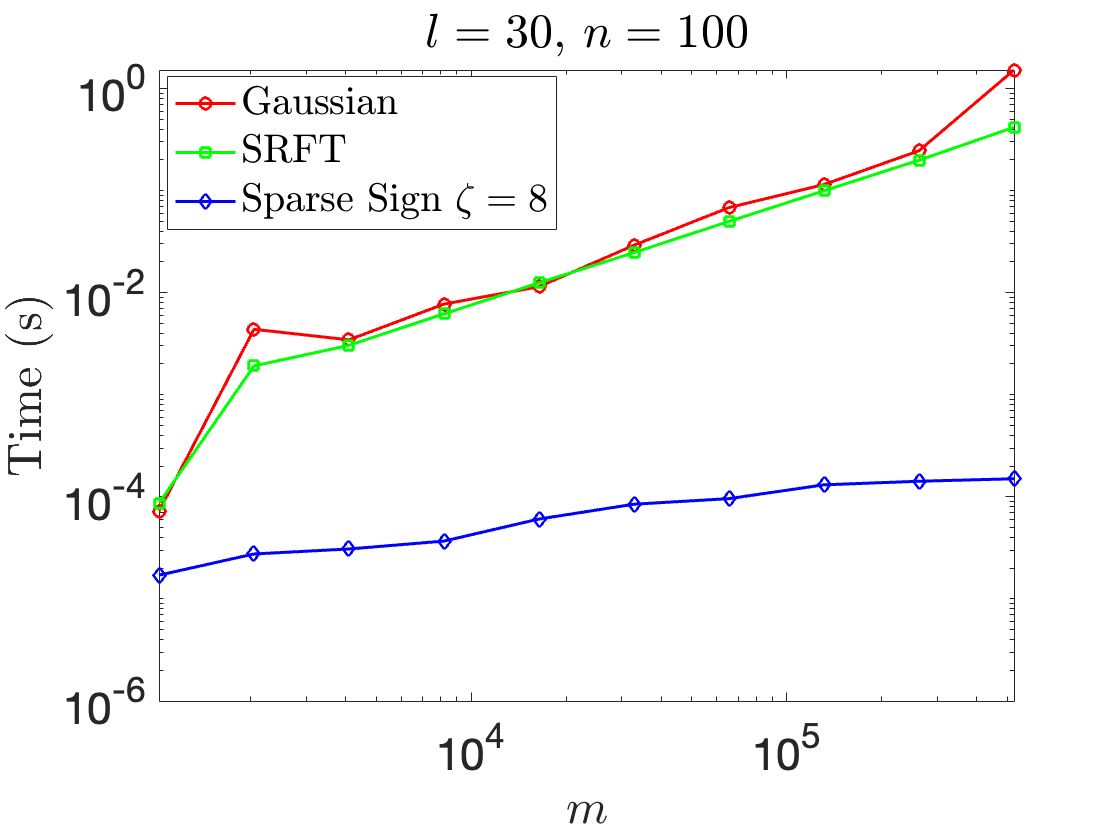}
    \end{subfigure}
    \begin{subfigure}{0.32\textwidth}
    \centering
    \includegraphics[width=\linewidth]{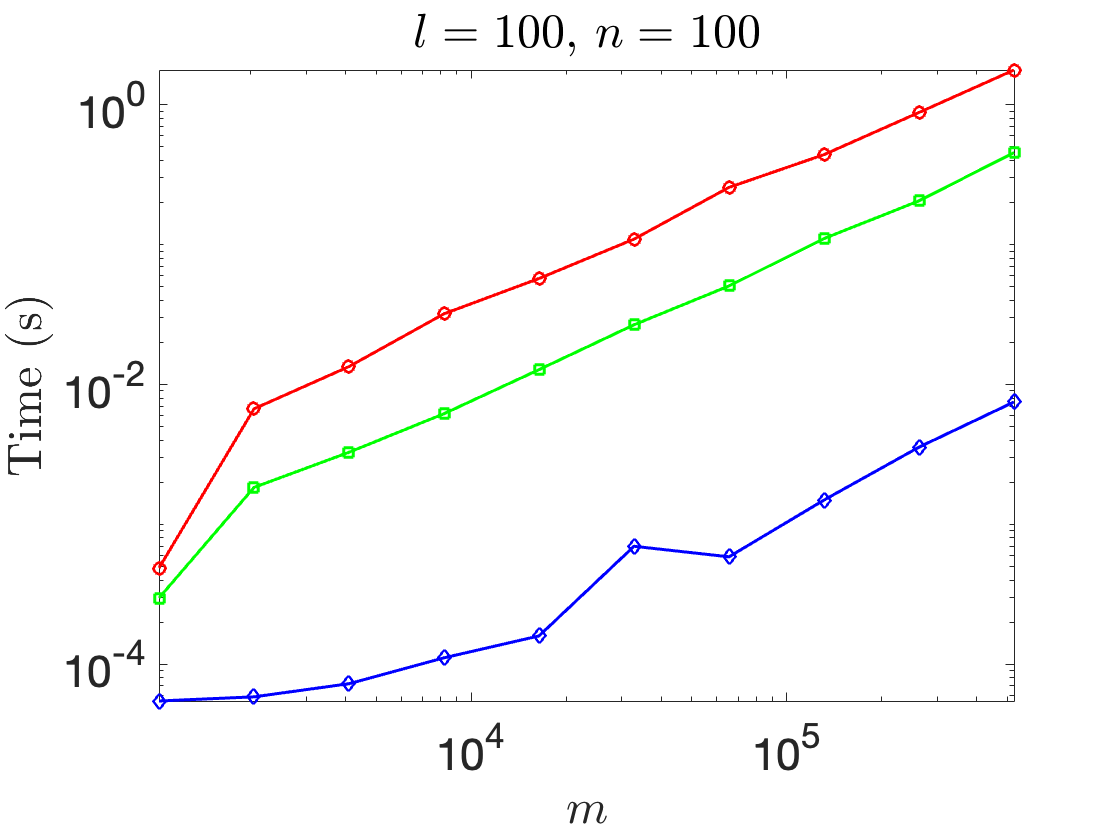}
    \end{subfigure}
    \begin{subfigure}{0.32\textwidth}
    \centering
    \includegraphics[width=\linewidth]{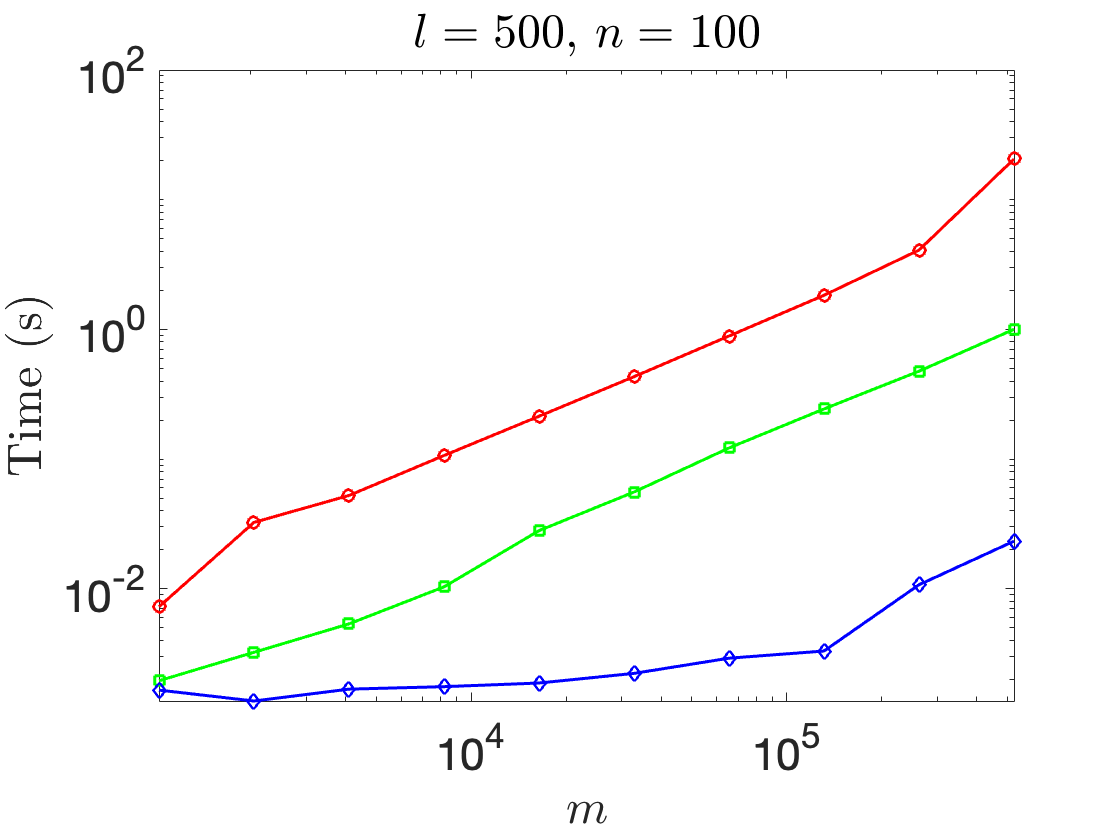}
    \end{subfigure}
    \caption{Run time of applying different randomized embeddings $\bs{\Gamma} \in \R^{l \times m}$ to some dense matrices of size $m \times n$, scaled with respect to the ambience dimension $m$, with different embedding dimension $l$, and a fixed number of embeddings $n=100$.}
    \label{fig:sketch-ps-time}
\end{figure}

\subsection{Computational speeds of different pivoting schemes}\label{subsec:exp_pivot}
Given a sketch of $\Ab$, we isolate different pivoting schemes in \Cref{algo:sketch_pivot_CUR_general} and compare their run time as the problem size $n$ increases.
Specifically, the \plu and \pqr pivot directly on the given row sketch $\Xb = \Gammab \Ab \in \R^{l \times n}$, while the DEIM involves one additional power iteration with orthogonalization (\Cref{eq:ortho_power_iter}) before applying the \plu ($\ie$, with a given column sketch $\Yb = \Ab \Omegab \in \R^{m \times l}$, for DEIM, we first construct an orthonormal basis $\Qb_Y \in \R^{m \times l}$ for columns of the sketch, and then we compute the reduced SVD for $\Qb_Y^{\top} \Ab \in l \times n$, and finally we column-wisely pivot on the resulting right singular vectors of size $l \times n$).
\begin{figure}[!ht]
    \centering
    \begin{subfigure}{0.32\textwidth}
    \centering
    \includegraphics[width=\linewidth]{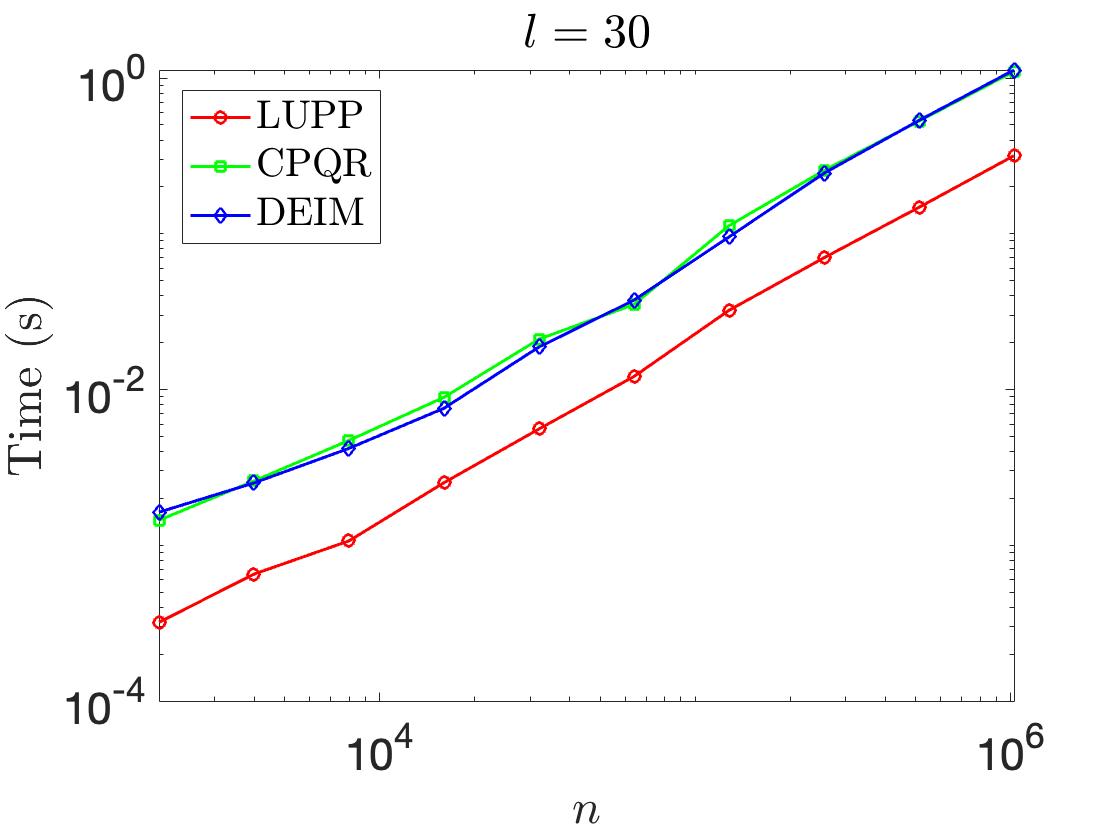}
    \end{subfigure}
    \begin{subfigure}{0.32\textwidth}
    \centering
    \includegraphics[width=\linewidth]{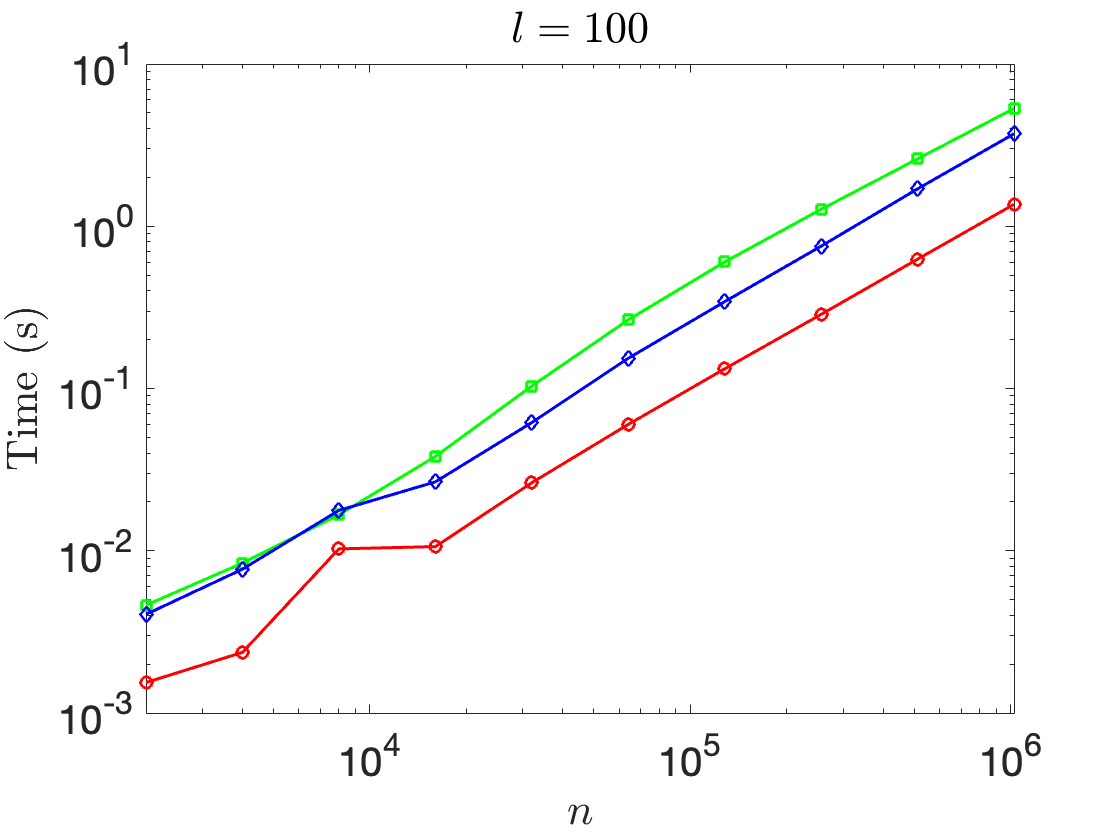}
    \end{subfigure}
    \begin{subfigure}{0.32\textwidth}
    \centering
    \includegraphics[width=\linewidth]{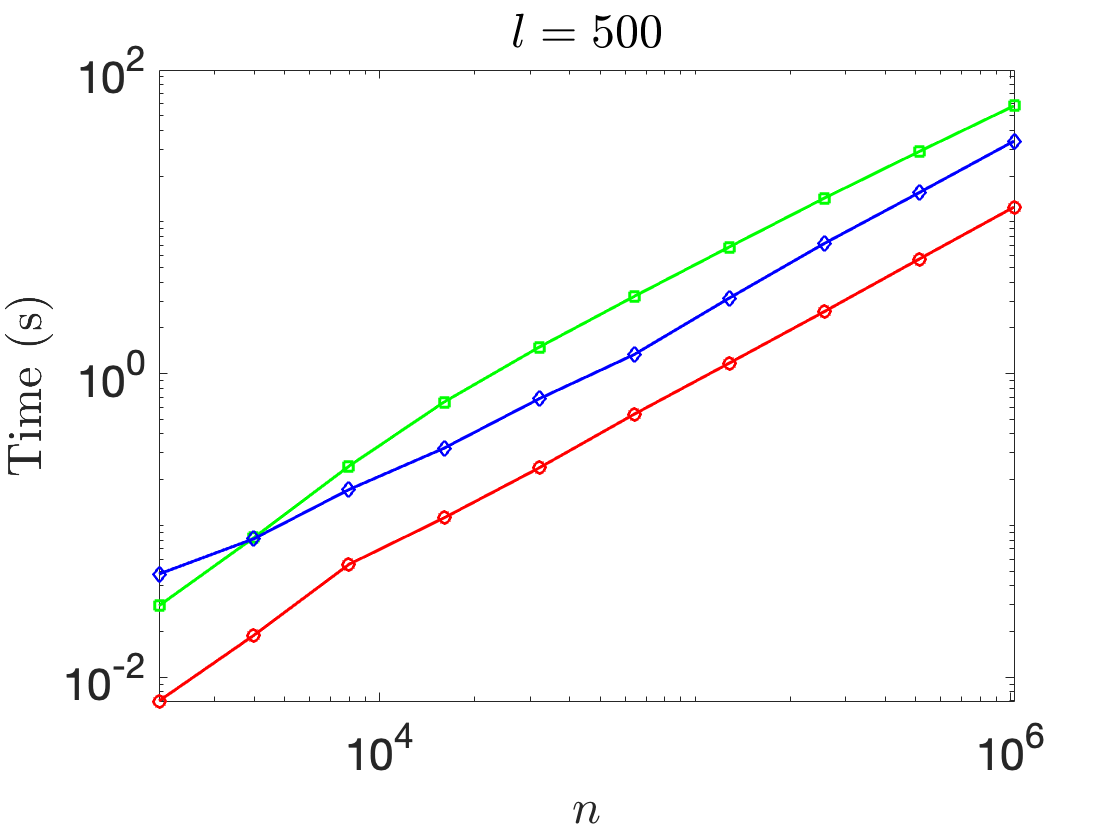}
    \end{subfigure}
    \caption{Run time of different pivoting schemes, scaled with respect to the problem size $n$, with different embedding dimension $l$.}
    \label{fig:pivot-ps-time}
\end{figure}
In Figure \ref{fig:pivot-ps-time}, we observe a considerable run time advantage of the \plu over the \pqr and DEIM, especially when $l$ is large. (Additionally, we see that DEIM slightly outperforms CPQR, which is perhaps surprising, given the substantially larger number of flops required by DEIM.)

\subsection{Randomized skeleton selection algorithms: accuracy and efficiency}\label{subsec:exp_cur}
As we move from measuring speed to measuring the precision of revealing the numerical rank of a matrix, 
the choice of test matrix becomes important. 
We consider four different classes of test matrices, including some synthetic random matrices with 
different spectral patterns, as well as some empirical datasets, as summarized below:
\begin{enumerate}
    \item \texttt{large}: a full-rank $4,282 \times 8,617$ sparse matrix with $20,635$ nonzero entries from the SuiteSparse matrix collection, generated by a linear programming problem sequence \cite{large}. 
    \item \texttt{YaleFace64x64}: a full-rank $165 \times 4096$ dense matrix, consisting of $165$ face images each of size $64 \times 64$. The flattened image vectors are centered and normalized such that the average image vector is zero, and the entries are bounded within $[-1,1]$.
    \item \texttt{MNIST} training set consists of $60,000$ images of hand-written digits from $0$ to $9$. Each image is of size $28 \times 28$. The images are flattened and normalized to form a full-rank matrix of size $N \times d$ where $N$ is the number of images and $d = 784$ is the size of the flattened images, with entries bounded in $[0,1]$. The nonzero entries take approximately $20\%$ of the matrix for both the training and the testing sets.
    \item Random \textit{sparse non-negative (SNN)} matrices are synthetic random sparse matrices used in \cite{voronin2017, sorensen2014} for testing skeleton selection algorithms. Given $s_1 \geq \dots \geq s_r > 0$, a random SNN matrix $\Ab$ of size $m \times n$ takes the form,
    \begin{equation}
        \label{eq:cur_snn_def}
        \Ab = \text{SNN}\bpar{\cpar{s_i}_{i=1}^r;\ m,n}:= \sum_{i=1}^r s_i \xb_i \yb_i^T
    \end{equation}
    where $\xb_i \in \R^m$, $\yb_i \in \R^{n}$, $i \in [r]$ are random sparse vectors with non-negative entries.
    In the experiments, we use two random SNN matrices of distinct sizes:
    \begin{enumerate}[label=(\roman*)]
        \item \texttt{SNN1e3} is a $1000 \times 1000$ SNN matrix with $r = 1000$, $s_i = \frac{2}{i}$ for $i=1,\dots,100$, and $s_i = \frac{1}{i}$ for $i=101,\dots,1000$;
        \item \texttt{SNN1e6} is a $10^6 \times 10^6$ SNN matrix with $r = 400$, $s_i = \frac{2}{i}$ for $i=1,\dots,100$, and $s_i = \frac{1}{i}$ for $i=101,\dots,400$.
    \end{enumerate}
\end{enumerate}

Scaled with respect to the approximation ranks $k$, we compare the accuracy and efficiency of the following randomized CUR algorithms:
\begin{enumerate}
    \item Rand-\plu (and Rand-\plu-1piter):
    \Cref{algo:sketch_pivot_CUR_general} with $\Xb = \Gammab \Ab$ being a row sketch (or with one plain power iteration as in \Cref{eq:def_power_iter}), and pivoting with \plu;
    \item Rand-\pqr (and Rand-\pqr-1piter): \Cref{algo:sketch_pivot_CUR_general} with $\Xb = \Gammab \Ab$ being a row sketch (or with one power iteration as in \Cref{eq:def_power_iter}), and pivoting with \pqr \cite{voronin2017};
    \item RSVD-DEIM: \Cref{algo:sketch_pivot_CUR_general} with $\Xb$ being an approximation of leading-$k$ right singular vectors (\Cref{eq:rsvd_procedures}), and pivoting with \plu \cite{sorensen2014};
    \item RSVD-LS: Skeleton sampling based on approximated leverage scores \cite{mahoney2009} from a rank-$k$ SVD approximation (\Cref{eq:rsvd_procedures});
    \item SRCUR: Spectrum-revealing CUR decomposition proposed in \cite{chen2020}.
\end{enumerate}
The asymptotic complexities of the first three randomized pivoting-based skeleton selection algorithms based on \Cref{algo:sketch_pivot_CUR_general} are summarized in \Cref{tab:complexity_rand_pivot}.
\begin{table}[!h]
    \centering
    \caption{Asymptotic complexities of various randomized pivoting-based skeleton selection algorithms based on \Cref{algo:sketch_pivot_CUR_general}.}
    \label{tab:complexity_rand_pivot}
    \begin{tabular}{c|c|c}
    \hline
        Algorithm & Row basis approximator construction (Line 1,2) & Pivoting (Line 3) \\
    \hline
        Rand-LUPP & $O(T_s(l,\Ab))$ & $O(n l^2)$ \\
        Rand-LUPP-1piter & $O(T_s(l,\Ab) + \nnz(\Ab) l)$ & $O(n l^2)$ \\
    \hline
        Rand-CPQR & $O(T_s(l,\Ab))$ & $O(n l^2)$ \\
        Rand-CPQR-1piter & $O(T_s(l,\Ab) + \nnz(\Ab) l)$ & $O(n l^2)$ \\
    \hline
        RSVD-DEIM & $O\rbr{T_s(l,\Ab) + (m+n)l^2 + \nnz(\Ab) l}$ & $O(nl^2)$ \\
    \hline
    \end{tabular}
\end{table}

For consistency, we use Gaussian embeddings for sketching throughout the experiments.
With the selected column and row skeletons, we leverage the stable construction in \Cref{eq:cur_stable} to form the corresponding CUR decompositions $\cur{\Ab}{I_s,J_s}$.
Although oversampling ($\ie$, $l > k$) is necessary for multiplicative error bounds with respect to the optimal rank-$k$ approximation error (\Cref{eq:rand_rangefinder_error_bound}, \Cref{thm:pivoting_on_rangeapprox_error_bound}), since oversampling can be interpreted as a shift of curves along the axis of the approximation rank, for the comparison purpose, we simply treat $l=k$, and compare the rank-$k$ approximation errors of the CUR decompositions against the optimal rank-$k$ approximation error $\norm{\Ab-\Ab_k}$.

\begin{figure}[!ht]
    \centering
    \begin{subfigure}{0.32\textwidth}
    \centering
    \includegraphics[width=\linewidth]{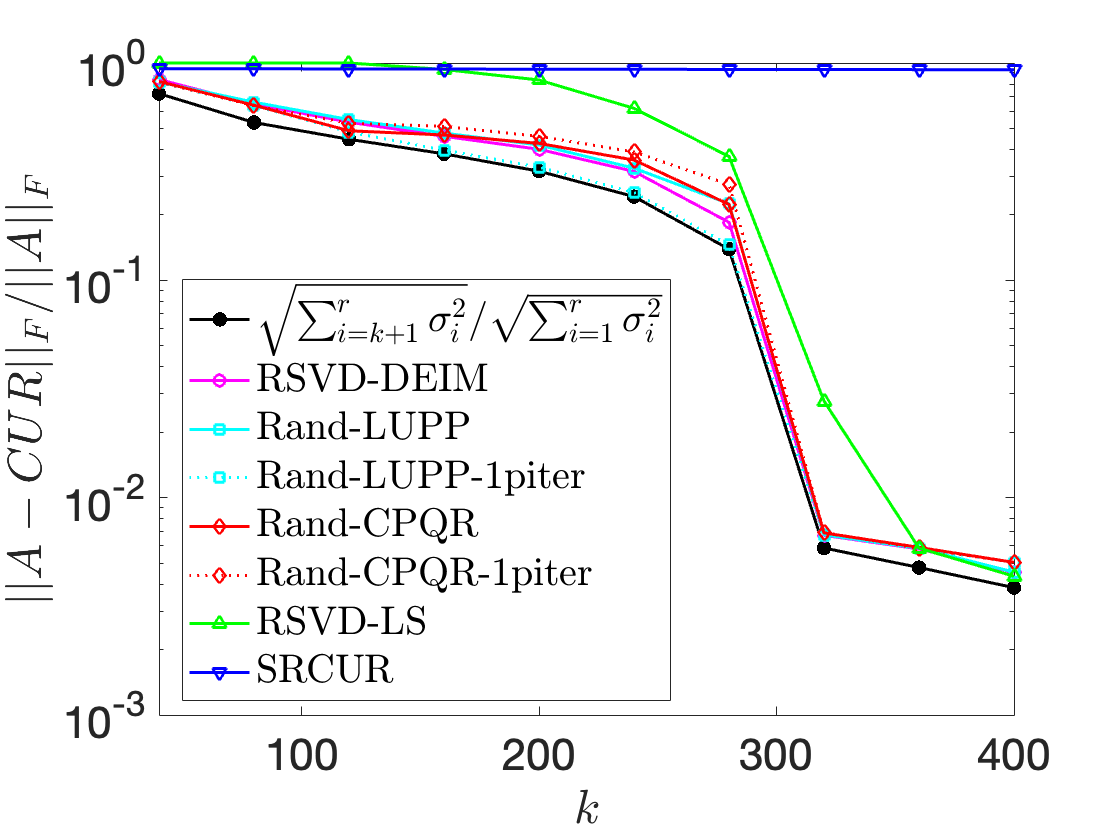}
    \caption{Frobenius norm error.}
    \label{fig:rand-errfro-rank_large}
    \end{subfigure}
    \begin{subfigure}{0.32\textwidth}
    \centering
    \includegraphics[width=\linewidth]{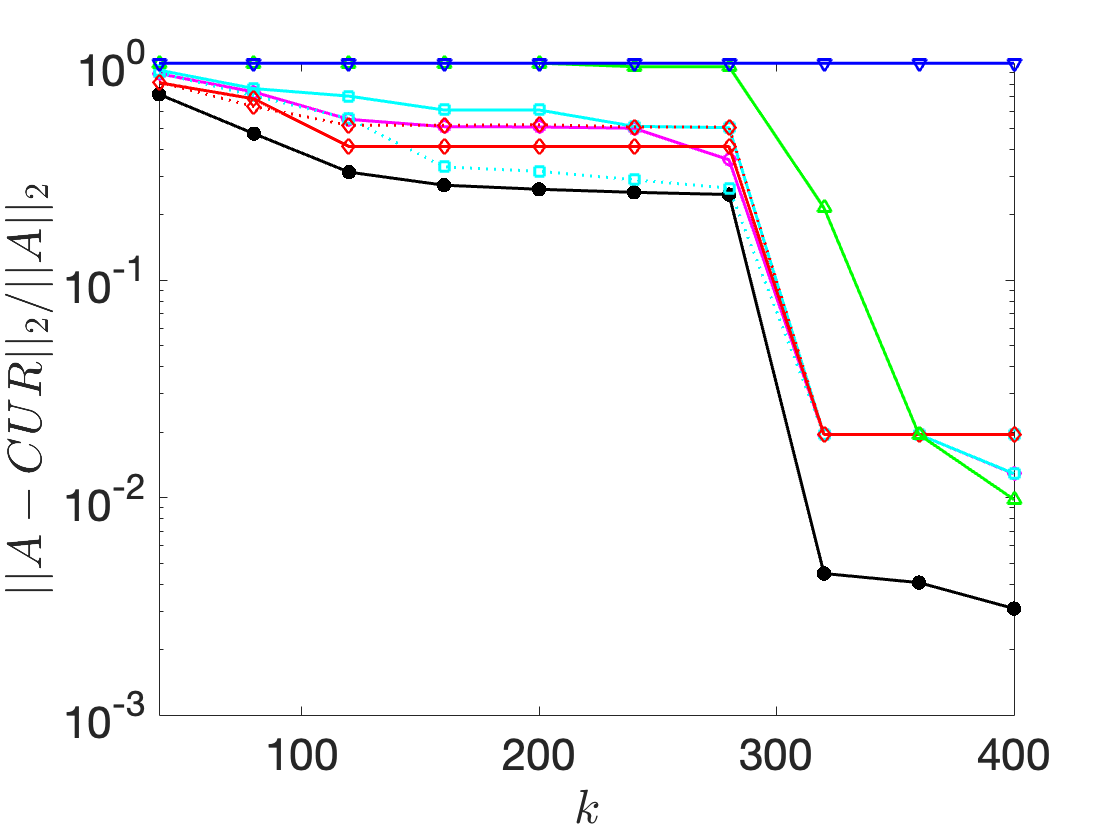}
    \caption{Spectral norm error.}
    \label{fig:rand-err2-rank_large}
    \end{subfigure}
    \begin{subfigure}{0.32\textwidth}
    \centering
    \includegraphics[width=\linewidth]{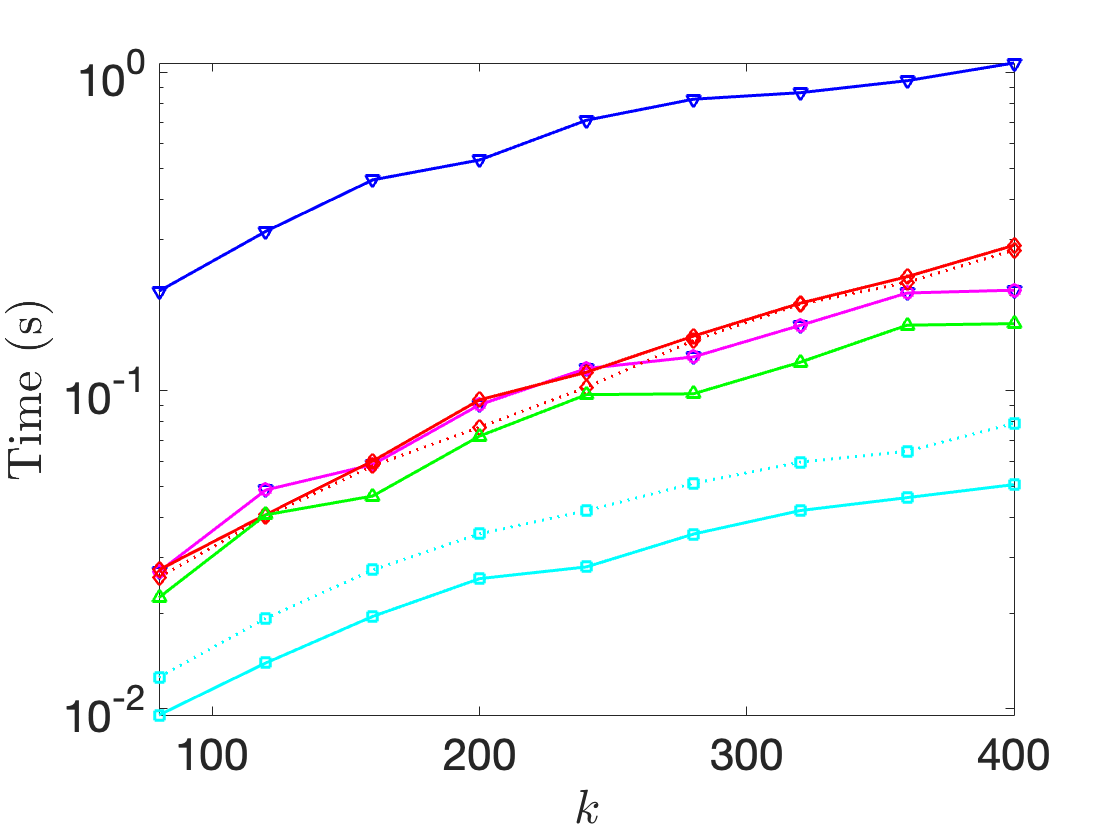}
    \caption{Runtime.}
    \label{fig:rand-time-rank_large}
    \end{subfigure}
    \caption{Relative error and run time of randomized skeleton selection on the \texttt{large} data set.}
    \label{fig:rand-err-rank_large}
\end{figure}

\begin{figure}[!ht]
    \centering
    \begin{subfigure}{0.32\textwidth}
    \centering
    \includegraphics[width=\linewidth]{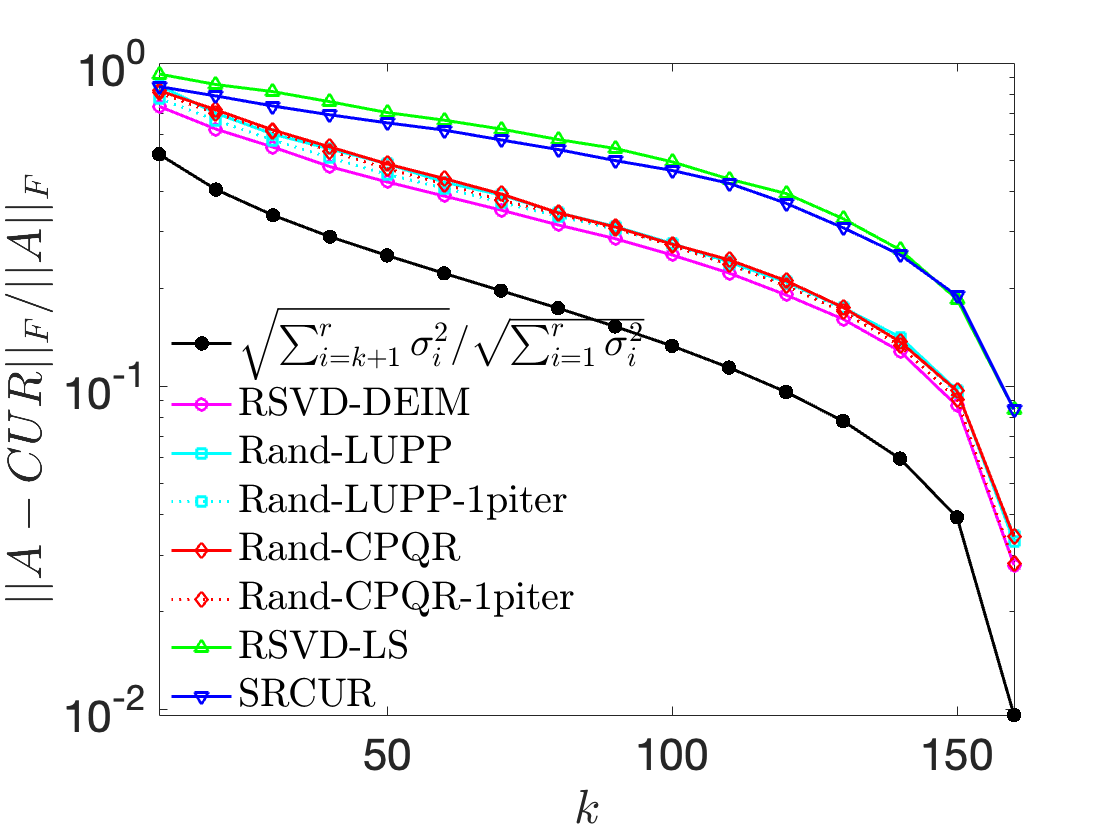}
    \caption{Frobenius norm error.}
    \label{fig:rand-errfro-rank_yaleface-64x64}
    \end{subfigure}
    \begin{subfigure}{0.32\textwidth}
    \centering
    \includegraphics[width=\linewidth]{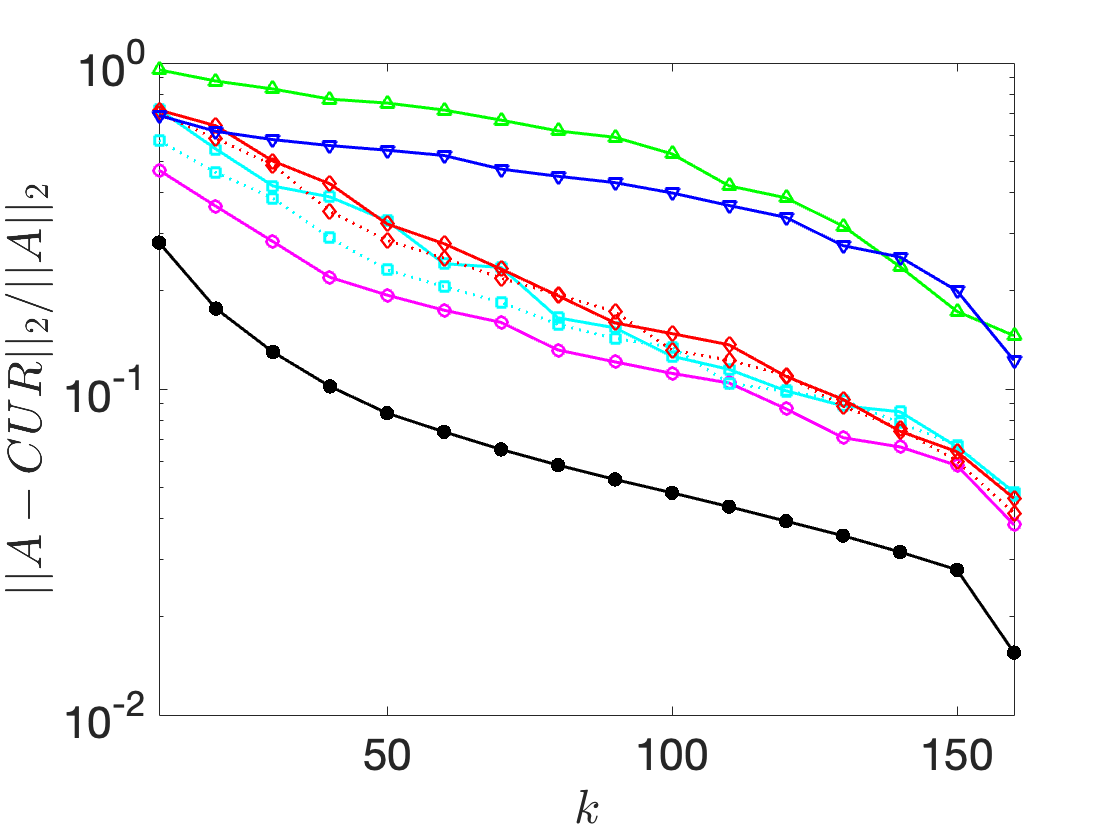}
    \caption{Spectral norm error.}
    \label{fig:rand-err2-rank_yaleface-64x64}
    \end{subfigure}
    \begin{subfigure}{0.32\textwidth}
    \centering
    \includegraphics[width=\linewidth]{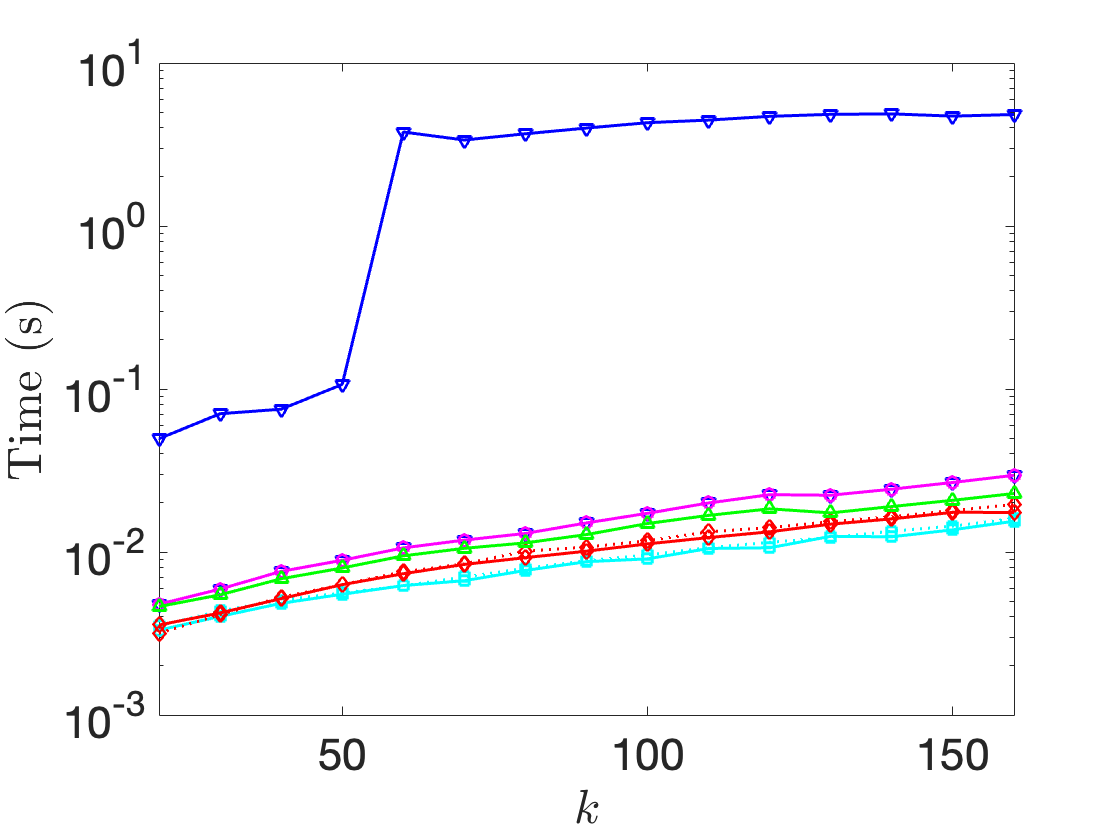}
    \caption{Runtime.}
    \label{fig:rand-time-rank_yaleface-64x64}
    \end{subfigure}
    \caption{Relative error and run time of randomized skeleton selection on the \texttt{YaleFace64x64} data set.}
    \label{fig:rand-err-rank_yaleface-64x64}
\end{figure}

\begin{figure}[!ht]
    \centering
    
    \begin{subfigure}{0.32\textwidth}
    \centering
    \includegraphics[width=\linewidth]{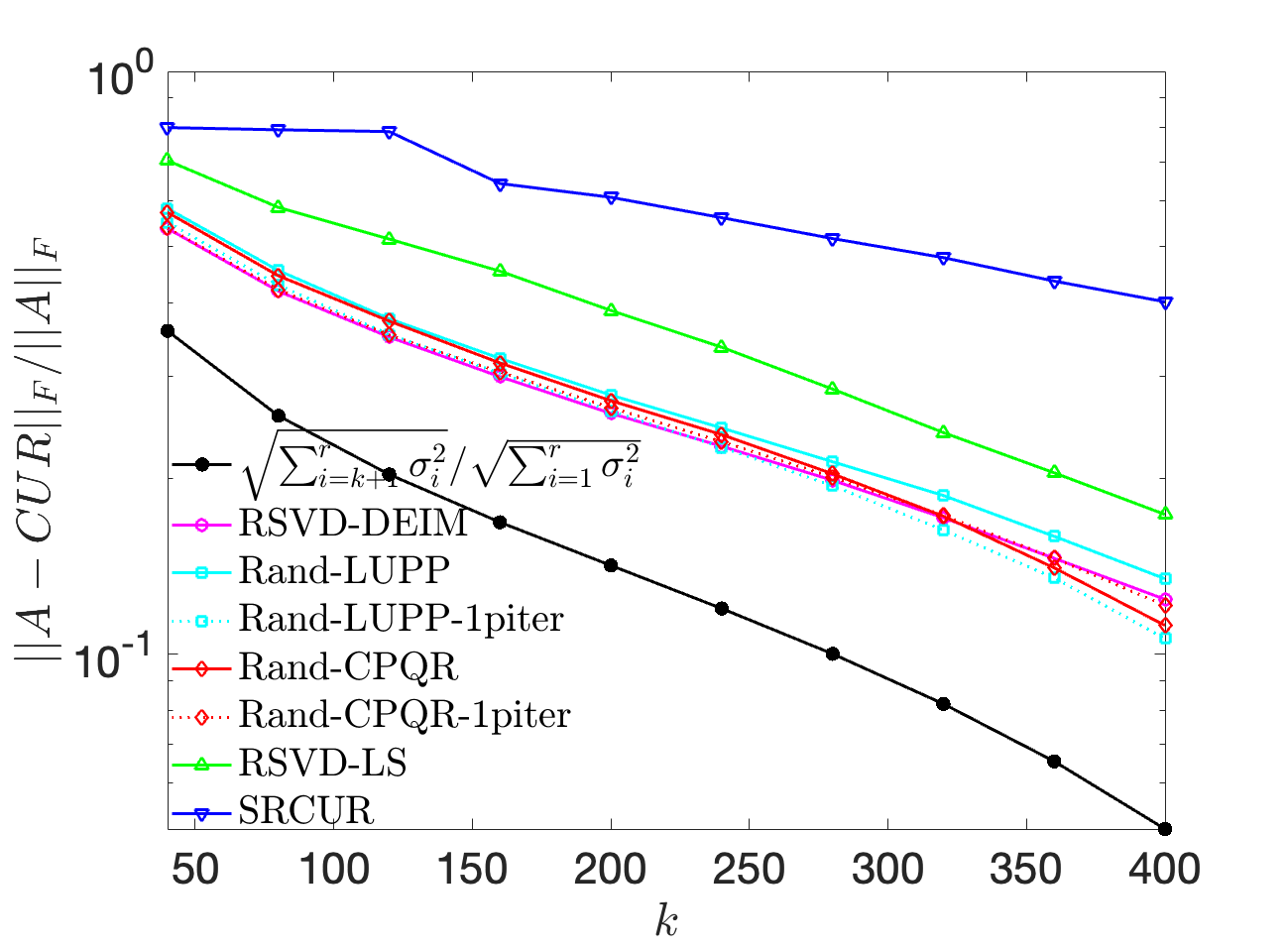}
    \caption{Frobenius norm error.}
    \label{fig:rand-errfro-rank_mnist-train}
    \end{subfigure}
    \begin{subfigure}{0.32\textwidth}
    \centering
    \includegraphics[width=\linewidth]{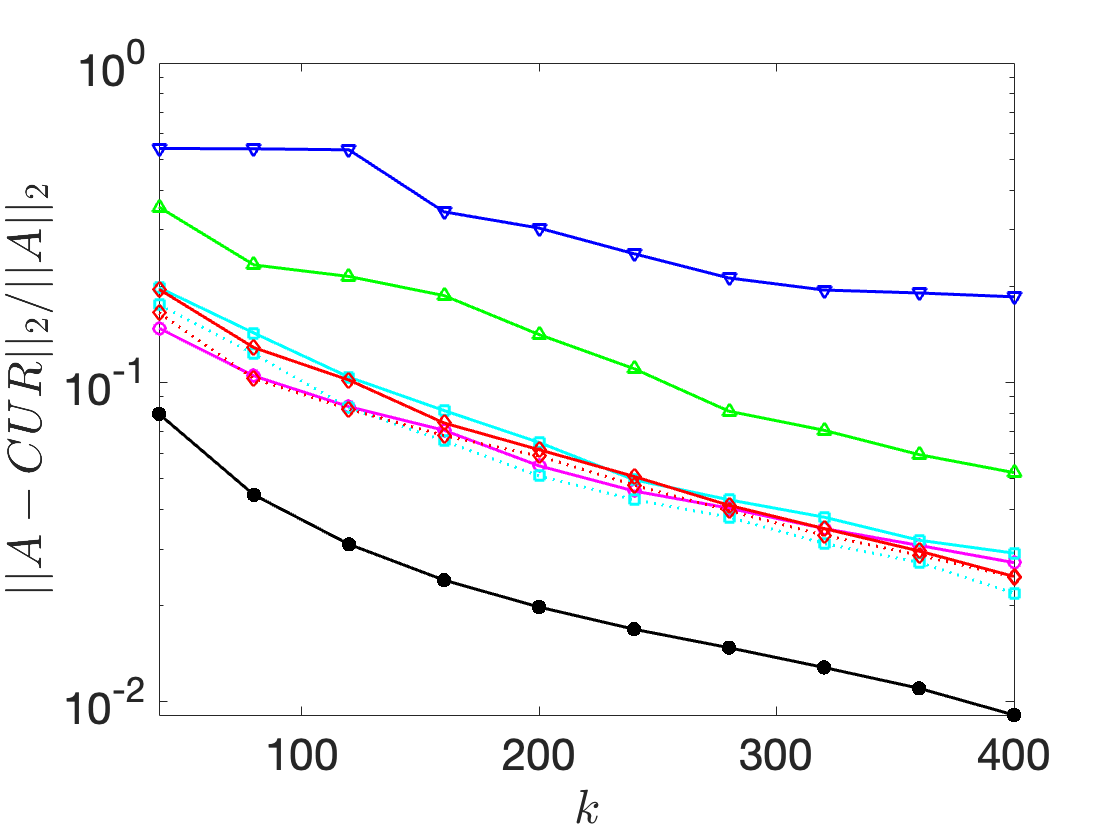}
    \caption{Spectral norm error.}
    \label{fig:rand-err2-rank_mnist-train}
    \end{subfigure}
    \begin{subfigure}{0.32\textwidth}
    \centering
    \includegraphics[width=\linewidth]{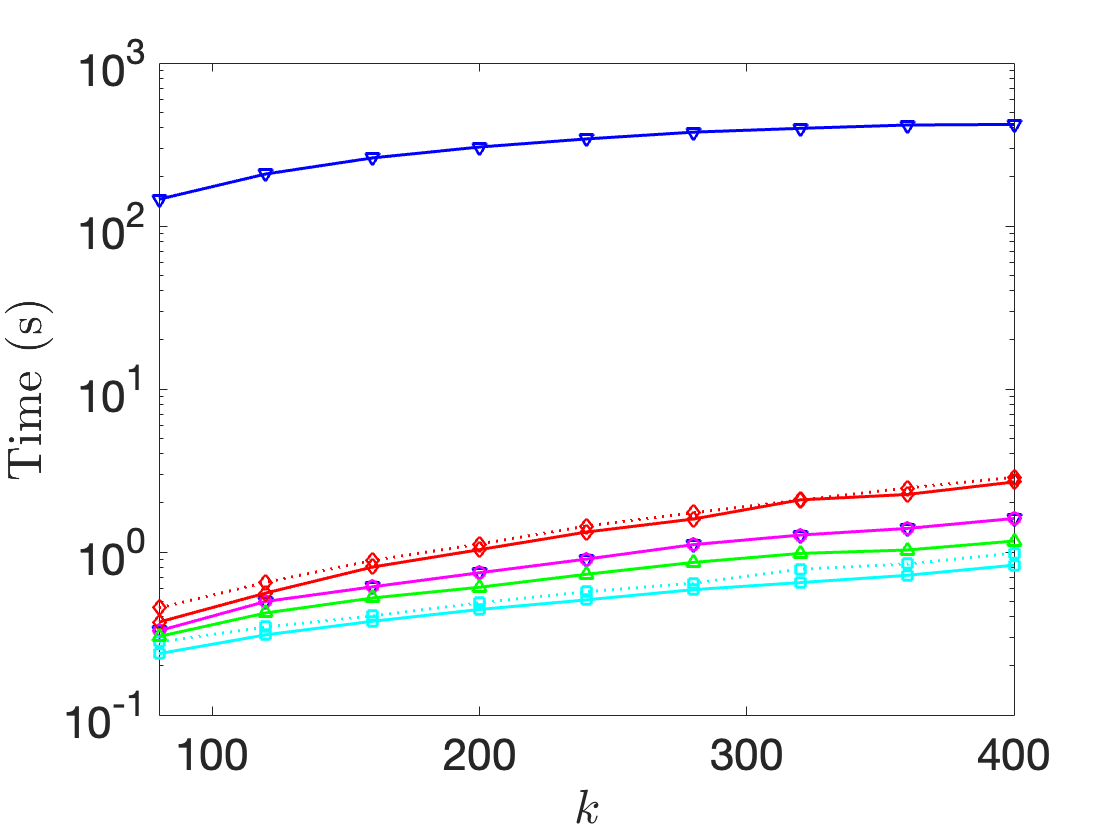}
    \caption{Runtime.}
    \label{fig:rand-time-rank_mnist-train}
    \end{subfigure}
    
    \caption{Relative error and run time of randomized skeleton selection on the training set of MNIST.}
    \label{fig:rand-err-rank_mnist-train}
\end{figure}

\begin{figure}[!ht]
    \centering
    
    \begin{subfigure}{0.32\textwidth}
    \centering
    \includegraphics[width=\linewidth]{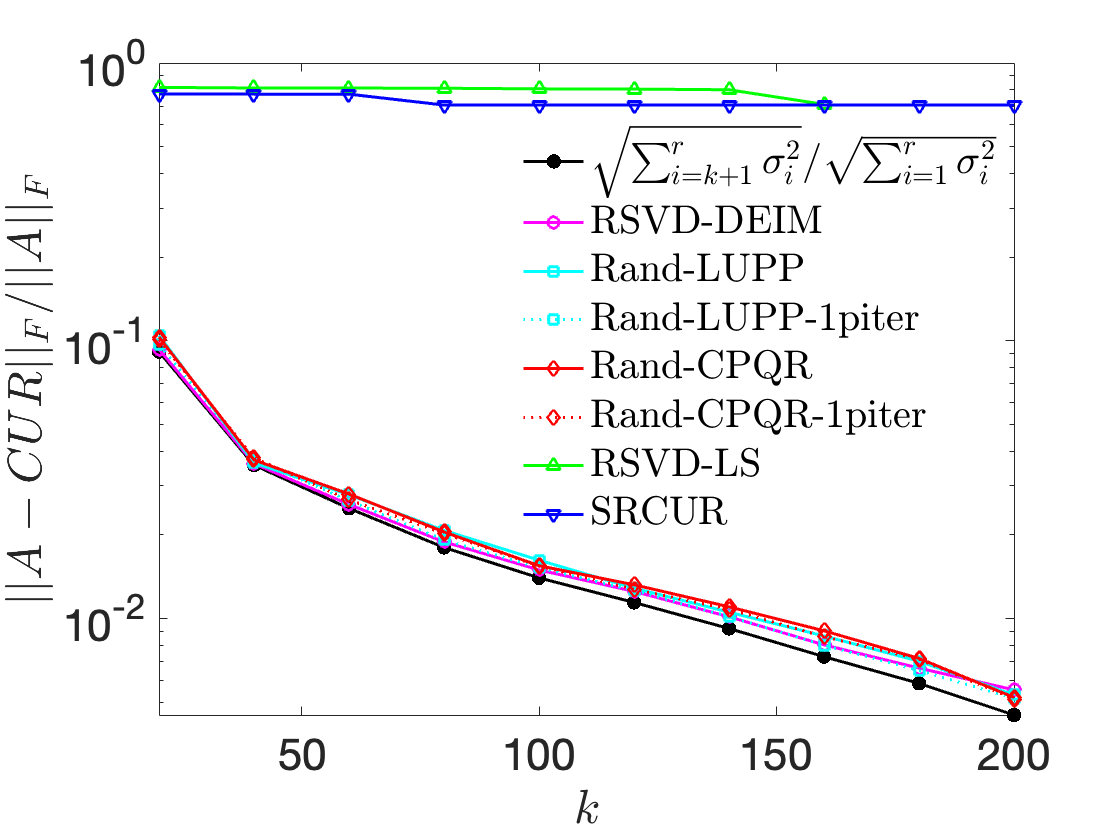}
    \caption{Frobenius norm error.}
    \label{fig:rand-errfro-rank_snn-1e3-1e3_a2b1_k100_r1e3_s1e-3}
    \end{subfigure}
    \begin{subfigure}{0.32\textwidth}
    \centering
    \includegraphics[width=\linewidth]{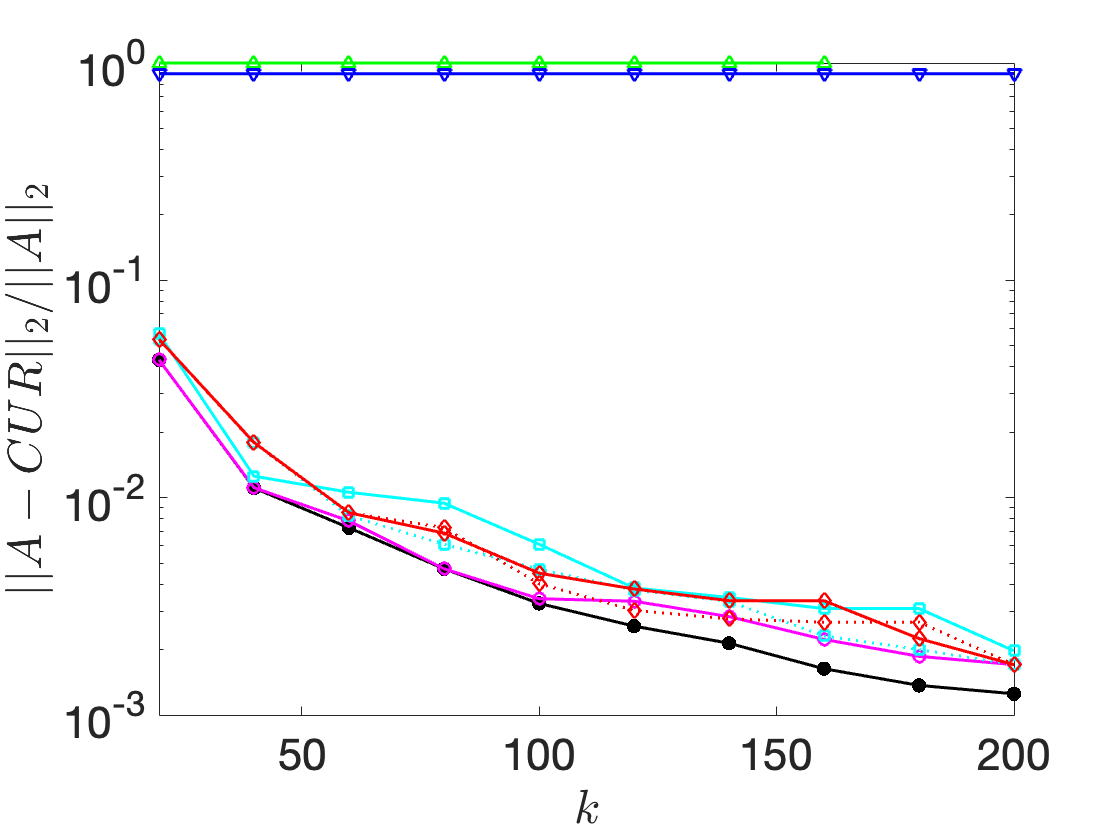}
    \caption{Spectral norm error.}
    \label{fig:rand-err2-rank_snn-1e3-1e3_a2b1_k100_r1e3_s1e-3}
    \end{subfigure}
    \begin{subfigure}{0.32\textwidth}
    \centering
    \includegraphics[width=\linewidth]{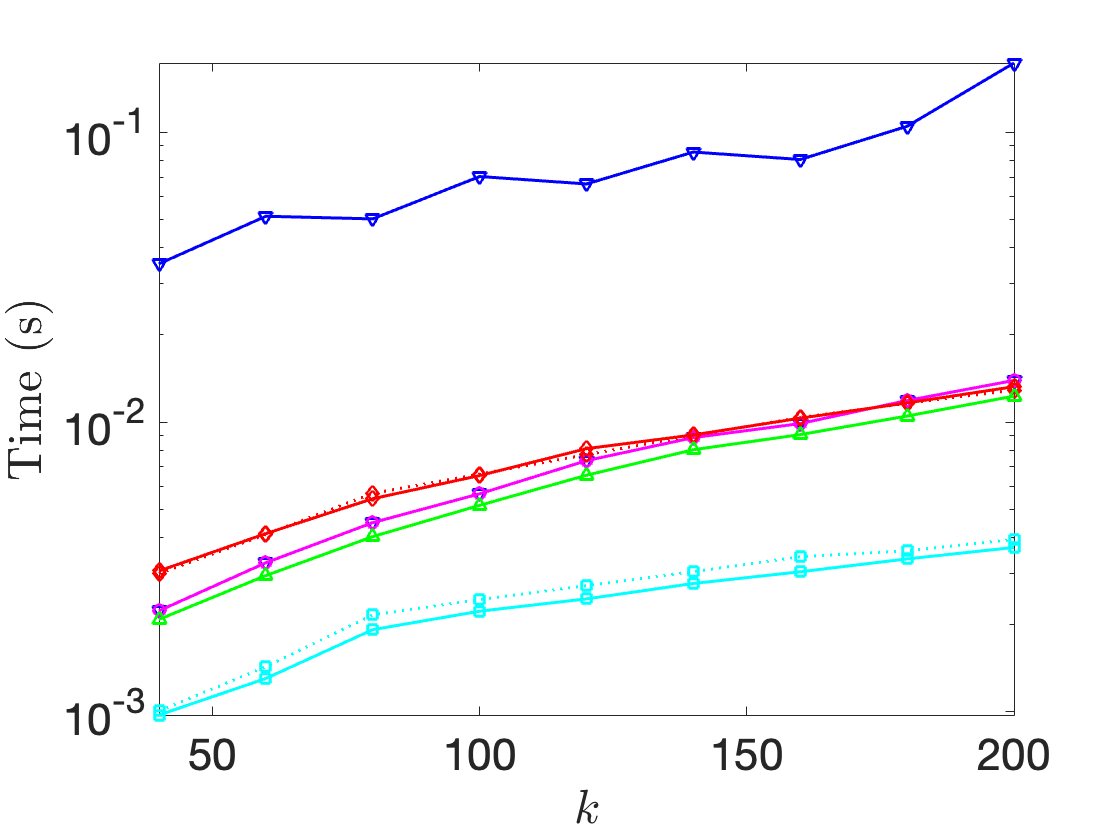}
    \caption{Runtime.}
    \label{fig:rand-time-rank_snn-1e3-1e3_a2b1_k100_r1e3_s1e-3}
    \end{subfigure}
    
    \caption{Relative error and run time of randomized skeleton selection on a $1000 \times 1000$ sparse non-negative random matrix, \texttt{SNN1e3}.}
    \label{fig:rand-err-rank_snn-1e3-1e3_a2b1_k100_r1e3_s1e-3}
\end{figure}

\begin{figure}[!ht]
    \centering
    
    \begin{subfigure}{0.32\textwidth}
    \centering
    \includegraphics[width=\linewidth]{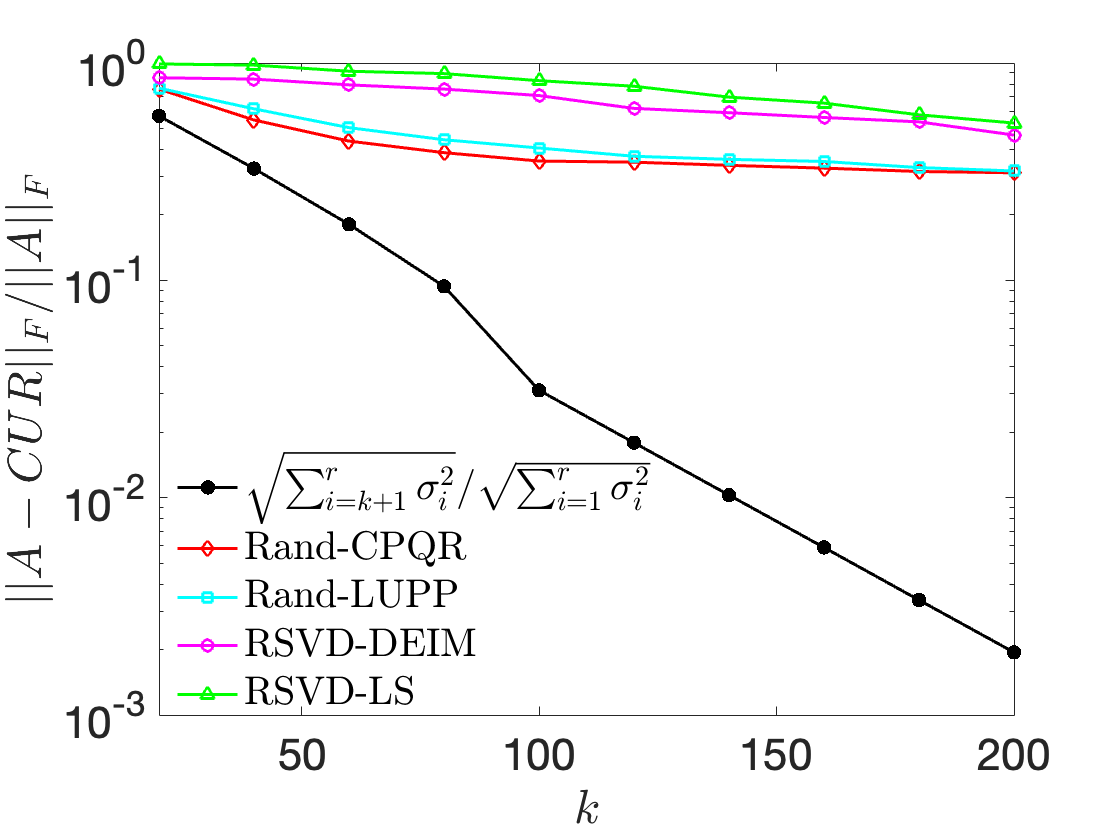}
    \caption{Frobenius norm error.}
    \label{fig:and-errref-rank_snn-1e6-1e6-a2b1-k100-r400-s2or}
    \end{subfigure}
    \begin{subfigure}{0.32\textwidth}
    \centering
    \includegraphics[width=\linewidth]{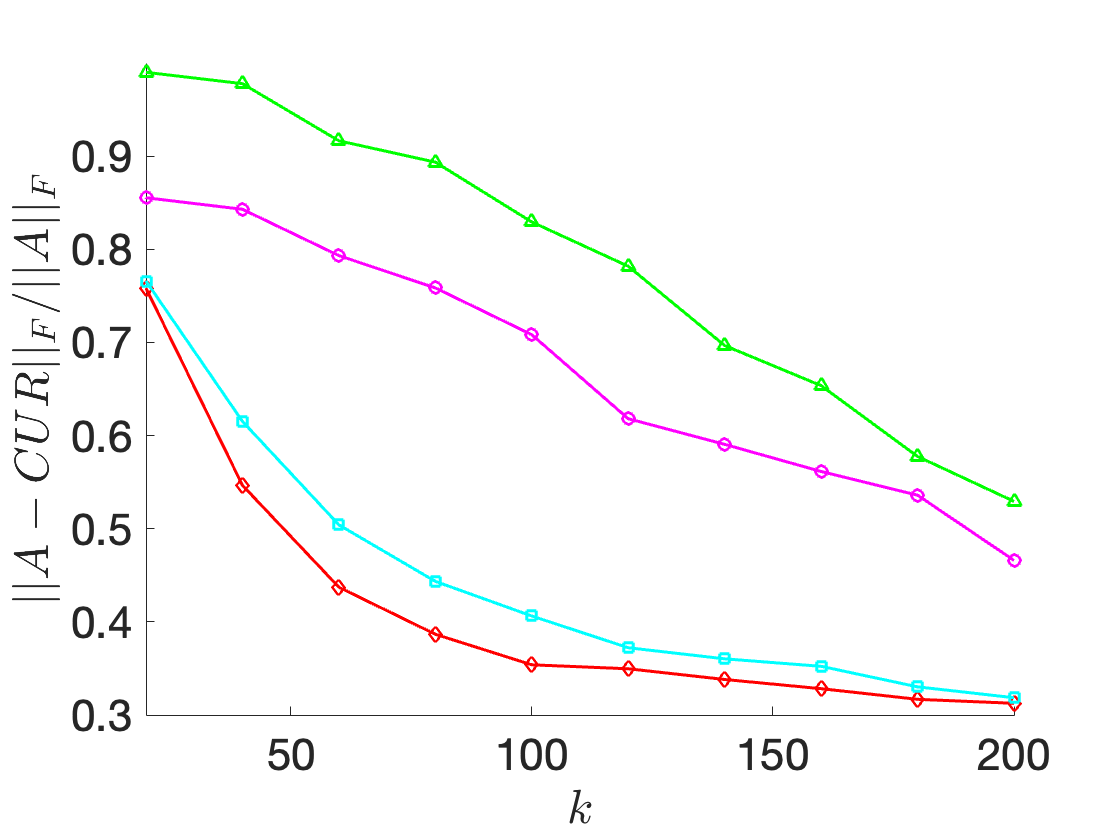}
    \caption{Frobenius norm error zoomed.}
    \label{fig:rand-errfro-rank_snn-1e6-1e6-a2b1-k100-r400-s2or}
    \end{subfigure}
    \begin{subfigure}{0.32\textwidth}
    \centering
    \includegraphics[width=\linewidth]{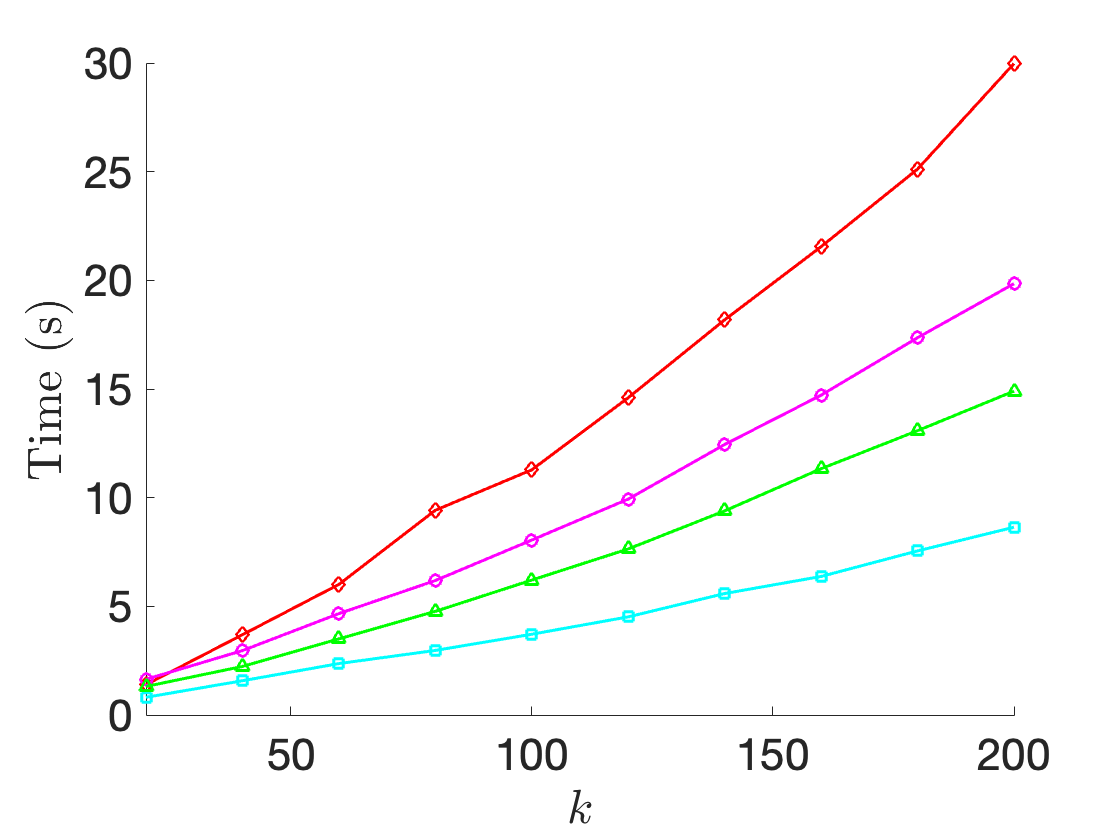}
    \caption{Runtime.}
    \label{fig:rand-time-rank_snn-1e6-1e6-a2b1-k100-r400-s2or}
    \end{subfigure}
    
    \caption{Relative error and run time of randomized skeleton selection on a $10^6 \times 10^6$ sparse non-negative random matrix, \texttt{SNN1e6}.}
    \label{fig:rand-err-rank_snn-1e6-1e6-a2b1-k100-r400-s2or}
\end{figure}

From Figure \ref{fig:rand-err-rank_large}-\ref{fig:rand-err-rank_snn-1e3-1e3_a2b1_k100_r1e3_s1e-3}, we observe that the randomized pivoting-based skeleton selection algorithms that fall in \Cref{algo:sketch_pivot_CUR_general} ($\ie$, Rand-\plu, Rand-\pqr, and RSVD-DEIM) share the similar approximation accuracy, which is considerably higher than that of the RSVD-LS and SRCUR. From the efficiency perspective, Rand-\plu provides the most competitive run time among all the algorithms, especially when $\Ab$ is sparse. Meanwhile, we observe that, for both Rand-\pqr and Rand-\plu, constructing the sketches with one plain power iteration ($\ie$, with \Cref{eq:def_power_iter}) can observably improve the accuracy, without sacrificing the efficiency significantly ($\eg$, in comparison to the randomized DEIM which involves one power iteration with orthogonalization as in \Cref{eq:ortho_power_iter}).

In Figure \ref{fig:rand-err-rank_snn-1e6-1e6-a2b1-k100-r400-s2or}, a similar performance is also observed on a synthetic large-scale problem, \texttt{SNN1e6}, where the matrix is only accessible as a fast matrix-vector multiplication (matvec) oracle such that each matvec takes $o(mn)$ (i.e., $O((m+n)r)$ in our construction) operations. It is worth noticing that the gap between the optimal rank-$k$ approximation error and the CUR approximation error consists of two components:
\begin{enumerate*}[label=(\roman*)]
    \item the suboptimality of skeleton selection from the algorithms, and
    \item the gap between the optimal rank-$k$ approximation error and the CUR approximation error with the optimal skeleton selection from the matrix skeletonization problem.
\end{enumerate*}
Despite the unknown optimal skeleton selection due to its intractability \cite{civril2013exponential}, intuitively, the suboptimality from the matrix skeletonization problem itself accounts for the larger CUR approximation error in Figure \ref{fig:rand-err-rank_snn-1e6-1e6-a2b1-k100-r400-s2or} (\cf Figure \ref{fig:rand-err-rank_large}-\ref{fig:rand-err-rank_snn-1e3-1e3_a2b1_k100_r1e3_s1e-3}) due to the significantly more challenging skeleton selection problem brought by the large matrix size $m=n=10^6 \gg r = \rank(\Ab) = 400$.

\chapter{Randomized Subspace Approximations: Efficient Bounds and Estimates for Canonical Angles}\label{ch:svra}

\subsection*{Abstract}
Randomized subspace approximation with ``matrix sketching'' is an effective approach for constructing approximate partial singular value decompositions (SVDs) of large matrices. 
The performance of such techniques has been extensively analyzed, and very precise estimates on the distribution of the residual errors have been derived.
However, our understanding of the accuracy of the computed singular vectors (measured in terms of the canonical angles between the spaces spanned by the exact and the computed singular vectors, respectively) remains relatively limited.
In this work, we present bounds and estimates for canonical angles of randomized subspace approximation that \emph{can be computed efficiently either a priori or a posteriori}. 
Under moderate oversampling in the randomized SVD, our prior probabilistic bounds are asymptotically tight and can be computed efficiently, while bringing a clear insight into the balance between oversampling and power iterations given a fixed budget on the number of matrix-vector multiplications.
The numerical experiments demonstrate the empirical effectiveness of these canonical angle bounds and estimates on different matrices under various algorithmic choices for the randomized SVD.\footnote{This chapter is based on the following arXiv paper: \\
\bibentry{dong2022efficient}~\citep{dong2022efficient}.}

\section{Introduction}

In light of the ubiquity of high-dimensional data in modern computation, dimension reduction tools like the low-rank matrix decompositions are becoming indispensable tools for managing large data sets. In general, the goal of low-rank matrix decomposition is to identify bases of proper low-dimensional subspaces that well encapsulate the dominant components in the original column and row spaces.
As one of the most well-established forms of matrix decompositions, the truncated singular value decomposition (SVD) is known to achieve the optimal low-rank approximation errors for any given ranks \cite{eckart1936}. Moreover, the corresponding left and right leading singular subspaces can be broadly leveraged for problems like principal component analysis, canonical correlation analysis, spectral clustering \cite{boutsidis2015spectral}, and leverage score sampling for matrix skeleton selection \cite{drineas2008relative,mahoney2009,drineas2012}.

However, for large matrices, the computational cost of classical algorithms for computing the SVD (\cf \cite[Lec.~31]{trefethen1997} or \cite[Sec.~8.6.3]{golub2013}) quickly becomes prohibitive. Fortunately, a randomization framework known as ``matrix sketching'' \cite{halko2011finding,woodruff2015} provides a simple yet effective remedy for this challenge by embedding large matrices to random low-dimensional subspaces where the classical SVD algorithms can be executed efficiently. 

Concretely, for an input matrix $\Ab \in \C^{m \times n}$ and a target rank $k \ll \min(m,n)$, the basic version of the randomized SVD \cite[Alg.~4.1]{halko2011finding} starts by drawing a Gaussian random matrix $\Omegab \in \C^{n \times l}$ for a sample size $l$ that is slightly larger than $k$ so that $k < l \ll \min(m,n)$. Then through a matrix-matrix multiplication $\Xb = \Ab\Omegab$ with $O(mnl)$ complexity, the $n$-dimensional row space of $\Ab$ is embedded to a random $l$-dimensional subspace. With the low-dimensional range approximation $\Xb$, a rank-$l$ randomized SVD $\wh\Ab_l = \wh\Ub_l \wh\Sigmab_l \wh\Vb_l^*$ can be constructed efficiently by computing the QR and SVD of small matrices in $O\rbr{(m+n)l^2}$ time. When the spectral decay in $\Ab$ is slow, a few power iterations $\Xb=\rbr{\Ab\Ab^*}^q\Ab\Omegab$ (usually $q=1,2$) can be incorporated to enhance the accuracy, 
\cf\cite{halko2011finding} Algorithms 4.3 and 4.4.

Let 
$\Ab=\Ub\Sigmab\Vb^*$\footnote{
Here $\Ub \in \C^{m \times r}$, $\Vb \in \C^{n \times r}$, and $\Sigmab \in \C^{r \times r}$. $\Sigmab$ is a diagonal matrix with positive non-increasing diagonal entries, and $r \le \min(m,n)$.
} denote the (unknown) full SVD of $\Ab$. In this work, we explore the alignment between the true leading rank-$k$ singular subspaces $\Ub_k,\Vb_k$ and their respective rank-$l$ approximations $\wh\Ub_l, \wh\Vb_l$ in terms of the canonical angles $\angle\rbr{\Ub_k,\wh\Ub_l}$ and $\angle\rbr{\Vb_k,\wh\Vb_l}$. We introduce prior statistical guarantees and unbiased estimates for these angles with respect to $\Sigmab$, as well as posterior deterministic bounds with the additional dependence on $\wh\Ab_l$, as synopsized below.

\subsection{Our Contributions}
\paragraph{Prior probabilistic bounds and estimates with insight on oversampling-power iteration balance.}
Evaluating the randomized SVD with a fixed budget on the number of matrix-vector multiplications, the computational resource can be leveraged in two ways -- oversampling (characterized by $l-k$) and power iterations (characterized by $q$). A natural question is \emph{how to distribute the computation between oversampling and power iterations for better subspace approximations}? 

Answers to this question are problem-dependent: when aiming to minimize the canonical angles between the true and approximated leading singular subspaces, the prior probabilistic bounds and estimates on the canonical angles provide primary insights. 
To be precise, with isotropic random subspace embeddings and sufficient oversampling, the accuracy of subspace approximations depends jointly on the spectra of the target matrices, oversampling, and the number of power iterations. 
In this work, we present a set of prior probabilistic bounds that precisely quantify the relative benefits of oversampling versus power iterations.
Specifically, the canonical angle bounds in \Cref{thm:space_agnostic_bounds}
\begin{enumerate}[label=(\roman*)]
    \item provide statistical guarantees that are asymptotically tight under sufficient oversampling (\ie, $l=\Omega\rbr{k}$),
    \item unveil a clear balance between oversampling and power iterations for random subspace approximations with given spectra,
    \item can be evaluated in $O(\rank\rbr{\Ab})$ time given access to the (true/estimated) spectra and provide valuable estimations for canonical angles in practice with moderate oversampling (\eg, $l \ge 1.6k$).
\end{enumerate} 
Further, inspired by the derivation of the prior probabilistic bounds, we propose unbiased estimates for the canonical angles with respect to given spectra that admit efficient evaluation and concentrate well empirically.

\paragraph{Posterior residual-based guarantees.}
Alongside the prior probabilistic bounds, we present two sets of posterior canonical angle bounds that hold in the deterministic sense and can be approximated efficiently based on the residuals and the spectrum of $\Ab$.

\paragraph{Numerical comparisons.}
With numerical comparisons among different canonical angle bounds on a variety of data matrices, we aim to explore the question on \emph{how the spectral decay and different algorithmic choices of randomized SVD affect the empirical effectiveness of different canonical angle bounds}.
In particular, our numerical experiments suggest that, for matrices with subexponential spectral decay, the prior probabilistic bounds usually provide tighter (statistical) guarantees than the (deterministic) guarantees from the posterior residual-based bounds, especially with power iterations. By contrast, for matrices with exponential spectral decay, the posterior residual-based bounds can be as tight as the prior probabilistic bounds, especially with large oversampling. 
The code for numerical comparisons is available at \href{https://github.com/dyjdongyijun/Randomized_Subspace_Approximation}{https://github.com/dyjdongyijun/Randomized\_Subspace\_Approximation}.

\subsection{Related Work}

The randomized SVD algorithm (with power iterations) \cite{martinsson2011randomized,halko2011finding} has been extensively analyzed as a low-rank approximation problem where the accuracy is usually measured in terms of residual norms, as well as the discrepancy between the approximated and true spectra \cite{halko2011finding,gu2014subspace,martinsson2019randomized,martinsson2020randomized}. For instance, 
\cite[Thm.~10.7, Thm.~10.8]{halko2011finding}
 show that for a given target rank $k$ (usually $k \ll \min\rbr{m,n}$ for the randomized acceleration to be useful), a small constant oversampling $l \ge 1.1 k$ is sufficient to guarantee that the residual norm of the resulting rank-$l$ approximation is close to the optimal rank-$k$ approximation (\ie, the rank-$k$ truncated SVD) error with high probability. Alternatively, \cite{gu2014subspace} investigates the accuracy of the individual approximated singular values $\wh\sigma_i$ and provides upper and lower bounds for each $\wh\sigma_i$ with respect to the true singular value $\sigma_i$.

In addition to providing accurate low-rank approximations, the randomized SVD algorithm also produces estimates of the leading left and right singular subspaces corresponding to the top singular values. When coupled with power iterations (\cite{halko2011finding} Algorithms 4.3 \& 4.4), such randomized subspace approximations are commonly known as randomized power (subspace) iterations. Their accuracy is explored in terms of canonical angles that measure differences between the unknown true subspaces and their random approximations \cite{boutsidis2015spectral,saibaba2018randomized,nakatsukasa2020sharp}. Generally, upper bounds on the canonical angles can be categorized into two types:
\begin{enumerate*}[label=(\roman*)]
    \item probabilistic bounds that establish prior statistical guarantees by exploring the concentration of the alignment between random subspace embeddings and the unknown true subspace, and
    \item residual-based bounds that can be computed a posteriori from the residual of the resulting approximation.
\end{enumerate*}

The existing prior probabilistic bounds on canonical angles~\cite{boutsidis2015spectral,saibaba2018randomized} mainly focus on the setting where the randomized SVD is evaluated without oversampling or with a small constant oversampling. Concretely, \cite{boutsidis2015spectral} derives guarantees for the canonical angles evaluated without oversampling (\ie, $l=k$) in the context of spectral clustering. Further, by taking a small constant oversampling (\eg, $l \ge k+2$) into account, Saibaba~\cite{saibaba2018randomized} provides a comprehensive analysis for an assortment of canonical angles between the true and approximated leading singular spaces. {Compared with} our results (\Cref{thm:space_agnostic_bounds}), in both no-oversampling and constant-oversampling regimes, the basic forms of the existing prior probabilistic bounds (\eg, \cite{saibaba2018randomized} Theorem 1) generally depend on the unknown singular subspace $\Vb$. Although such dependence is later lifted using the isotropicity and the concentration of the randomized subspace embedding $\Omegab$ (\eg, \cite{saibaba2018randomized} Theorem 6), the separated consideration on the spectra and the singular subspaces introduces unnecessary compromise to the upper bounds (as we will discuss in \Cref{remark:prior_bound_compare_exist}). In contrast, by allowing a more generous choice of multiplicative oversampling $l=\Omega\rbr{k}$, we present a set of space-agnostic bounds 
{(i.e., bounds that hold regardless of the singular vectors of $\Ab$)}
based on an integrated analysis of the spectra and the singular subspaces that appears to be tighter both from derivation and in practice. 

The classical Davis-Kahan $\sin\theta$ and $\tan\theta$ theorems~\cite{daviskahan} for eigenvector perturbation can be used to compute deterministic and computable bounds for the canonical angles. These bounds have the advantage that they give strict bounds (up to the estimation of the so-called gap) rather than estimates or bounds that hold with high probability (although, as we argue below, the failure probability can be taken to be negligibly low). The Davis-Kahan theorems have been extended to perturbation of singular vectors by Wedin~\cite{wedin1973perturbation}, and recent work~\cite{nakatsukasa2020sharp} derives perturbation bounds for singular vectors computed using a subspace projection method. In this work, we establish canonical angle bounds for the singular vectors in the context of (randomized) subspace iterations. Our results indicate clearly that the accuracy of the right and left singular vectors are usually not identical (i.e., $\Vb$ is more accurate with Algorithm~\ref{algo:rsvd_power_iterations}). 

As a roadmap, we formalize the problem setup in \Cref{sec:problem_setup}, including a brief review of the randomized SVD and canonical angles. In \Cref{sec:space_agnostic}, we present the prior probabilistic space-agnostic bounds. Subsequently, in \Cref{sec:space_agnostic_estimation}, we describe a set of unbiased canonical angle estimates that is closely related to the space-agnostic bounds. Then in \Cref{sec:separate_spec_bounds}, we introduce two sets of posterior residual-based bounds. Finally, in \Cref{sec:experiments}, we instantiate the insight cast by the space-agnostic bounds on the balance between oversampling and power iterations and demonstrate the empirical effectiveness of different canonical angle bounds and estimates with numerical comparisons.

\section{Problem Setup}\label{sec:problem_setup}

In this section, we first recapitulate the randomized SVD algorithm (with power iterations) \citep{halko2011finding} for which we analyze the accuracy of the resulting singular subspace approximations. Then, we review the notion of canonical angles \cite{golub2013} that quantify the difference between two subspaces of the same Euclidean space.

\subsection{Notation} 
We start by introducing notations for the SVD of a given matrix $\Ab \in \C^{m \times n}$ of rank $r$:
\begin{align*}
    \Ab 
    = 
    \underset{m \times r}{\Ub}\ \underset{r \times r}{\Sigmab}\ \underset{r \times n}{\Vb^*} 
    = 
    \bmat{\ub_{1} & \dots & \ub_{r}} \bmat{\sigma_{1} && \\ &\ddots& \\ && \sigma_{r}} \bmat{\vb_{1}^* \\ \vdots \\ \vb_{r}^*}.
\end{align*}
For any $1 \le k \le r$, we let $\Ub_{k} \triangleq \sbr{\ub_{1},\dots,\ub_{k}}$ and $\Vb_{k} \triangleq \sbr{\vb_{1},\dots,\vb_{k}}$ denote the orthonormal bases of the dimension-$k$ left and right singular subspaces of $\Ab$ corresponding to the top-$k$ singular values, while $\Ub_{r \setminus k} \triangleq\sbr{\ub_{k+1},\dots,\ub_{r}}$ and $\Vb_{r \setminus k} \triangleq\sbr{\vb_{k+1},\dots,\vb_{r}}$ are orthonormal bases of the respective orthogonal complements. The diagonal submatrices consisting of the spectrum, $\Sigmab_{k} \triangleq \diag\rbr{\sigma_{1}, \dots, \sigma_{k}}$ and $\Sigmab_{r \setminus k} \triangleq \diag\rbr{\sigma_{k+1}, \dots, \sigma_{r}}$, follow analogously.

Meanwhile, for the QR decomposition of an arbitrary matrix $\Mb \in \C^{d \times l}$ ($d \geq l$), we denote $\Mb = \sbr{\Qb_{\Mb},\Qb_{\Mb,\perp}} \bmat{\Rb_{\Mb} \\ \b{0}}$ such that $\Qb_{\Mb} \in \C^{d \times l}$ and $\Qb_{\Mb,\perp} \in \C^{d \times (d-l)}$ consist of orthonormal bases of the subspace spanned by the columns of $\Mb$ and its orthogonal complement.

Furthermore, we denote the spectrum of $\Mb$ by $\sigma\rbr{\Mb}$, a $\rank(\Mb) \times \rank(\Mb)$ diagonal matrix with singular values $\sigma_1\rbr{\Mb} \ge \dots \ge \sigma_{\rank(\Mb)}\rbr{\Mb} > 0$ on the diagonal. 

Generally, we adapt the MATLAB notation for matrix slicing throughout this work. For any $k \in \N$, we denote $[k]=\cbr{1,\dots,k}$.

\subsection{Randomized SVD and Power Iterations}

\begin{algorithm}
\caption{Randomized SVD (with power iterations) \citep{halko2011finding}}\label{algo:rsvd_power_iterations}
\begin{algorithmic}[1]
\REQUIRE $\Ab \in \C^{m \times n}$, power $q \in \cbr{0,1,2,\dots}$, oversampled rank $l \in \N$ ($l< r = \rank(\Ab)$)
\ENSURE $\wh\Ub_l \in \C^{m \times l}$, $\wh\Vb_l \in \C^{n \times l}$, $\wh\Sigmab_l \in \C^{l \times l}$ such that $\wh\Ab_l = \wh\Ub_l \wh\Sigmab_l \wh\Vb_l^*$

\STATE Draw $\Omegab \sim P\rbr{\C^{n \times l}}$ with $\Omega_{ij} \sim \Ncal\rbr{0,l^{-1}}~\iid$ such that $\E\sbr{\Omegab \Omegab^*} = \Ib_n$
\STATE $\Xb^{(q)} = \rbr{\Ab \Ab^*}^q \Ab \Omegab$
\STATE $\Qb_\Xb = \ortho\rbr{\Xb^{(q)}}$
\STATE $\sbr{\wt\Ub_l ,\wh\Sigmab_l, \wh\Vb_l} = \mathrm{svd}\rbr{\Ab^* \Qb_\Xb}$ (where $\wt\Ub_l \in \C^{l \times l}$)
\STATE $\wh\Ub_l = \Qb_\Xb \wt\Ub_l$
\end{algorithmic}   
\end{algorithm} 

As described in \Cref{algo:rsvd_power_iterations}, the randomized SVD provides a rank-$l$ ($l \ll \min(m,n)$) approximation of $\Ab \in \C^{m \times n}$ while grants provable acceleration to the truncated SVD evaluation -- $O\rbr{mnl(2q+1)}$ with the Gaussian random matrix\footnote{Asymptotically, there exist deterministic iterative algorithms for the truncated SVD (\eg, based on Lanczos iterations (\cite{trefethen1997} Algorithm 36.1)) that run in $O\rbr{mnl}$ time. However, compared with these inherently sequential iterative algorithms, the randomized SVD can be executed much more efficiently in practice, even with power iterations (\ie, $q>0$),  since the $O\rbr{mnl(2q+1)}$ computation bottleneck in \Cref{algo:rsvd_power_iterations} involves only matrix-matrix multiplications which are easily parallelizable and highly optimized.}. Such efficiency improvement is achieved by first embedding the high-dimensional row (column) space of $\Ab$ to a low-dimensional subspace via a Johnson-Lindenstrauss transform\footnote{Throughout this work, we focus on Gaussian random matrices (\Cref{algo:rsvd_power_iterations}, Line 1) in the sake of theoretical guarantees, \ie, $\Omegab$ being isotropic and rotationally invariant.} (JLT) $\Omegab$ (known as ``sketching''). Then, SVD of the resulting column (row) sketch $\Xb = \Ab\Omegab$ can be evaluated efficiently in $O\rbr{ml^2}$ time, and the rank-$l$ approximation can be constructed accordingly.

The spectral decay in $\Ab$ has a significant impact on the accuracy of the resulting low-rank approximation from \Cref{algo:rsvd_power_iterations} (as suggested in \cite{halko2011finding} Theorem 10.7 and Theorem 10.8). To remediate the performance of \Cref{algo:rsvd_power_iterations} on matrices with flat spectra, power iterations (\Cref{algo:rsvd_power_iterations}, Line 2) are usually incorporated to enhance the spectral decay. However, without proper orthogonalization, plain power iterations can be numerically unstable, especially for ill-conditioned matrices. For stable power iterations, starting with $\Xb = \Ab\Omegab \in \C^{m \times l}$ of full column rank (which holds almost surely for Gaussian random matrices), we incorporate orthogonalization in each power iteration via the reduced unpivoted QR factorization (each with complexity $O(ml^2)$). Let $\ortho\rbr{\Xb} = \Qb_\Xb \in \C^{m \times l}$ be an orthonormal basis of $\Xb$ produced by the QR factorization. Then, the stable evaluation of $q$ power iterations (\Cref{algo:rsvd_power_iterations}, Line 2) can be expressed as:
\begin{align}\label{eq:stable_power_iter}
    \Xb^{(0)} \gets \ortho\rbr{\Ab\Omegab}, \quad
    \Xb^{(i)} \gets \ortho\rbr{\Ab \ortho\rbr{\Ab^* \Xb^{(i-1)}}} \ \forall\ i \in [q].
\end{align}
 
Notice that in \Cref{algo:rsvd_power_iterations}, with $\Xb = \Xb^{(q)}$, the approximated rank-$l$ SVD of $\Ab$ can be expressed as $\wh\Ab_l = \wh\Ub_l \wh\Sigmab_l \wh\Vb_l^* = \Xb \Yb^*$ where $\Yb = \Xb^\dagger \Ab$.
With $\wh\Ub_l$ and $\wh\Vb_l$ characterizing the approximated $l$-dimensional left and right leading singular subspaces, $\wh\Ub_{m \setminus l} \in \C^{m \times (m-l)}$ and $\wh\Vb_{n \setminus l} \in \C^{n \times (n-l)}$ denote an arbitrary pair of their respective orthogonal complements. For any $1 \le k < l$, we further denote the partitions $\wh\Ub_l = \sbr{\wh\Ub_k, \wh\Ub_{l \setminus k}}$ and $\wh\Vb_l = \sbr{\wh\Vb_k, \wh\Vb_{l \setminus k}}$ where $\wh\Ub_k \in \C^{m \times k}$ and $\wh\Vb_k \in \C^{n \times k}$, respectively.

\subsection{Canonical Angles}

Now, we review the notion of canonical angles \cite{golub2013} that measure distances between two subspaces $\Ucal$, $\Vcal$ of an arbitrary Euclidean space $\C^d$.

\begin{definition}[Canonical angles, \cite{golub2013}]\label{def:canonical-angles}
Given two subspaces $\Ucal,\Vcal \subseteq \C^d$ with dimensions $\dim\rbr{\Ucal}=l$ and $\dim\rbr{\Vcal}=k$ (assuming $l \geq k$ without loss of generality), the canonical angles, denoted by $\angle\rbr{\Ucal,\Vcal}=\diag\rbr{\theta_1,\dots,\theta_k}$, consist of $k$ angles that measure the alignment between $\Ucal$ and $\Vcal$, defined recursively such that
\begin{align*}
    \ub_i, \vb_i ~\triangleq~
    &\argmax~\ub_i^*\vb_i \\
    \t{s.t.}~
    &\ub_i \in \rbr{\Ucal \setminus \spn\cbr{\ub_{\iota}}_{\iota=1}^{i-1}} \cap \SSS^{d-1},\\ 
    &\vb_i \in \rbr{\Vcal \setminus \spn\cbr{\vb_{\iota}}_{\iota=1}^{i-1}} \cap \SSS^{d-1}\\
    \cos \theta_i ~=~
    &\ub_i^* \vb_i \quad \forall~ i=1,\dots,k, \quad
    0 \leq \theta_1 \leq \dots \leq \theta_k \leq \pi/2.
\end{align*}
\end{definition}

For arbitrary full-rank matrices $\Ub \in \C^{d \times l}$ and $\Vb \in \C^{d \times k}$ (assuming $k \leq l \leq d$ without loss of generality), let $\angle\rbr{\Mb,\Nb} \triangleq \angle\rbr{\spn(\Mb),\spn(\Nb)}$ denote the canonical angles between the corresponding spanning subspaces in $\C^d$. For each $i \in [k]$, let $\angle_i\rbr{\Mb,\Nb}$ be the $i$-th (smallest) canonical angle such that $\cos\angle_i\rbr{\Mb,\Nb} = \sigma_i\rbr{\Qb_{\Mb}^*\Qb_{\Nb}}$ and $\sin\angle_i\rbr{\Mb,\Nb} = \sigma_{k-i+1}\rbr{\rbr{\Ib-\Qb_{\Mb}\Qb_{\Mb}^*}\Qb_{\Nb}}$ (\cf \cite{bjorck1973numerical} Section 3).

With the unknown true rank-$k$ truncated SVD $\Ab_k = \Ub_k \Sigmab_k \Vb_k^*$ and an approximated rank-$l$ SVD $\wh\Ab_l = \wh\Ub_l \wh\Sigmab_l \wh\Vb_l^*$ from \Cref{algo:rsvd_power_iterations}, in this work, we mainly focus on the prior and posterior guarantees for the canonical angles $\angle\rbr{\Ub_k,\wh\Ub_l}$ and $\angle\rbr{\Vb_k,\wh\Vb_l}$. Meanwhile, in \Cref{thm:separate_gap_bounds}, we present a set of posterior residual-based upper bounds for the canonical angles $\angle\rbr{\Ub_k,\wh\Ub_k}$ and $\angle\rbr{\Vb_k,\wh\Vb_k}$ as corollaries.

\section{Space-agnostic Bounds under Sufficient Oversampling}\label{sec:space_agnostic}

We start by pointing out the intuition that, under sufficient oversampling, with Gaussian random matrices whose distribution is orthogonally invariant, the alignment between the approximated and true subspaces are independent of the unknown true subspaces, \ie, the canonical angles are space-agnostic, as reflected in the following theorem.

\begin{theorem}\label{thm:space_agnostic_bounds}
For a rank-$l$ randomized SVD (\Cref{algo:rsvd_power_iterations}) with {a} Gaussian embedding {$\Omegab$} and $q \ge 0$ power iterations, when the oversampled rank $l$ satisfies $l = \Omega\rbr{k}$ (where $k$ is the target rank, $k < l < r = \rank(\Ab)$) and $q$ is reasonably small such that $\eta \dfeq \frac{\rbr{\sum_{j=k+1}^r \sigma_j^{4q+2}}^2}{\sum_{j=k+1}^r \sigma_j^{2(4q+2)}}$
\footnote{Notice that $1 < \eta \le r-k$. To the extremes, $\eta = r-k$ when the tail is flat $\sigma_{k+1}=\dots=\sigma_r$; while $\eta \to 1$ when $\sigma_{k+1} \gg \sigma_j$ for all $j=k+2,\dots,r$. In particular, with a relatively flat tail $\Sigmab_{r \setminus k}$ and a reasonably small $q$ (recall that $q=1,2$ is usually sufficient in practice), we have $\eta = \Theta\rbr{r-k}$, and the assumption can be simplified as $r-k=\Omega\rbr{l}$. Although exponential tail decay can lead to small $\eta$ and may render the assumption infeasible in theory, in practice, simply taking $r-k=\Omega\rbr{l}$, $l=\Omega\rbr{k}$, $\eps_1 = \sqrt{\frac{k}{l}}$ and $\eps_2=\sqrt{\frac{l}{r-k}}$ is sufficient to ensure the validity of upper bounds when $q \le 10$ even for matrices with rapid tail decay, as shown in \Cref{subsec:experiment_canonical_angle}.}
satisfies $\eta = \Omega\rbr{l}$, with high probability (at least $1-e^{-\Theta(k)}-e^{-\Theta(l)}$), there exist distortion factors $0< \eps_1, \eps_2 < 1$ such that
\begin{align}
    \label{eq:space_agnostic_left}
    &\sin\angle_i\rbr{\Ub_k, \wh\Ub_l} \le 
    \rbr{1+ \frac{1-\eps_1}{1+\eps_2} \cdot \frac{l}{\sum_{j=k+1}^r \sigma_j^{4q+2}} \cdot \sigma_i^{4q+2}}^{-\frac{1}{2}}
    \\
    \label{eq:space_agnostic_right}
    &\sin\angle_i\rbr{\Vb_k, \wh\Vb_l} \le 
    \rbr{1+ \frac{1-\eps_1}{1+\eps_2} \cdot \frac{l}{\sum_{j=k+1}^r \sigma_j^{4q+4}} \cdot \sigma_i^{4q+4}}^{-\frac{1}{2}}
\end{align}
for all $i \in [k]$, where $\eps_1 = \Theta\rbr{\sqrt{\frac{k}{l}}}$ and $\epsilon_2 = \Theta\rbr{\sqrt{\frac{l}{\eta}}}$.
Furthermore, both bounds are asymptotically tight:
\begin{align}
    \label{eq:space_agnostic_lower_left}
    &\sin\angle_i\rbr{\Ub_k, \wh\Ub_l} \ge \rbr{1+O\rbr{\frac{l \cdot \sigma_i^{4q+2}}{\sum_{j=k+1}^r \sigma_j^{4q+2}} }}^{-\frac{1}{2}}
    \\
    \label{eq:space_agnostic_lower_right}
    &\sin\angle_i\rbr{\Vb_k, \wh\Vb_l} \ge \rbr{1+O\rbr{\frac{l \cdot \sigma_i^{4q+4}}{\sum_{j=k+1}^r \sigma_j^{4q+4}} }}^{-\frac{1}{2}}
\end{align}
where $O\rbr{\cdot}$ suppresses the distortion factors $\frac{1+\eps_1}{1-\eps_2}$
\footnote{
    Despite the asymptotic tightness of \Cref{thm:space_agnostic_bounds} theoretically, in practice, we observe that the empirical validity of lower bounds is more restrictive on oversampling than that of upper bounds. In specific, the numerical observations in \Cref{subsec:experiment_canonical_angle} suggest that $l \ge 1.6 k$ is usually sufficient for the upper bounds to hold; whereas the empirical validity of lower bounds generally requires more aggressive oversampling of at least $l \ge 4 k$, also with slightly larger constants associated with $\eps_1$ and $\eps_2$, as demonstrated in \Cref{subapx:sup_exp_upper_lower_space_agnostic}.
}.
\end{theorem}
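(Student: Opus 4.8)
The plan is to exploit the rotational invariance of the Gaussian embedding to reduce $\angle\rbr{\Ub_k,\wh\Ub_l}$ and $\angle\rbr{\Vb_k,\wh\Vb_l}$ to canonical angles between a fixed coordinate subspace and the range of a diagonally weighted Gaussian matrix, to pin those down by an exact algebraic identity, and to control the resulting random matrix with two concentration estimates. First I would observe that the range approximator in \Cref{algo:rsvd_power_iterations} satisfies $\Xb^{(q)} = \rbr{\Ab\Ab^*}^q\Ab\Omegab = \Ub\,\Sigmab^{2q+1}\Gb$, where $\Gb \dfeq \Vb^*\Omegab \in \C^{r\times l}$ is again a Gaussian matrix with \iid $\Ncal\rbr{0,l^{-1}}$ entries; since $\Ub$ has orthonormal columns it preserves canonical angles, so $\angle\rbr{\Ub_k,\wh\Ub_l} = \angle\rbr{\spn(\Jb),\spn(\Sigmab^{2q+1}\Gb)}$ inside $\C^r$, where $\Jb \in \C^{r\times k}$ collects the first $k$ columns of $\Ib_r$ (so $\Ub_k = \Ub\Jb$). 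Likewise $\spn(\wh\Vb_l)$ is the column space of $\Ab^*\Qb_\Xb = \Vb\,\Sigmab^{2q+2}\Vb^*\Omegab$ up to an invertible factor, so $\angle\rbr{\Vb_k,\wh\Vb_l} = \angle\rbr{\spn(\Jb),\spn(\Sigmab^{2q+2}\Gb)}$. It therefore suffices to analyze $\angle_i\rbr{\spn(\Jb),\spn(\Db\Gb)}$ for a positive diagonal $\Db = \diag(d_1,\dots,d_r)$ with $d_j = \sigma_j^{2q+1}$ (left case) or $d_j = \sigma_j^{2q+2}$ (right case), partitioned conformally as $\Db = \diag(\Db_1,\Db_2)$, $\Db_1 \in \C^{k\times k}$, and $\Gb = \sbr{\Gb_1;\Gb_2}$, $\Gb_1 \in \C^{k\times l}$; note $\tr(\Db_2^2) = \sum_{j=k+1}^r\sigma_j^{4q+2}$ (resp.\ $\sum_{j=k+1}^r\sigma_j^{4q+4}$) and $\eta = \tr(\Db_2^2)^2/\tr(\Db_2^4)$.

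Next, assuming the generic position $k \le l \le r-k$ (so that, almost surely, $\Gb_1$ has full row rank and $\Bb \dfeq \Gb_2^*\Db_2^2\Gb_2$ is invertible), I would identify $\cos^2\angle_i$ with the $i$-th largest eigenvalue of the top-left $k\times k$ block $\Jb^*\bigl(\Db\Gb\rbr{\Gb^*\Db^2\Gb}^{-1}\Gb^*\Db\bigr)\Jb = \Db_1\Gb_1\rbr{\Gb_1^*\Db_1^2\Gb_1 + \Bb}^{-1}\Gb_1^*\Db_1$ of the orthogonal projector onto $\spn(\Db\Gb)$, and push $\Bb$ through the inverse via the Sherman--Morrison--Woodbury identity, which rewrites that block as $\Ib_k - \rbr{\Ib_k + \Pb}^{-1}$ with $\Pb \dfeq \Db_1\Gb_1\Bb^{-1}\Gb_1^*\Db_1$. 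This gives the exact formula
\[
    \sin^2\angle_i\rbr{\spn(\Jb),\spn(\Db\Gb)} \;=\; \bigl(1 + \lambda_i(\Pb)\bigr)^{-1}, \qquad i \in [k],
\]
with $\lambda_1(\Pb) \ge \cdots \ge \lambda_k(\Pb)$. Consequently the upper bounds \eqref{eq:space_agnostic_left}--\eqref{eq:space_agnostic_right} and the asymptotically matching lower bounds \eqref{eq:space_agnostic_lower_left}--\eqref{eq:space_agnostic_lower_right} all reduce to sandwiching $\lambda_i(\Pb)$ between $\tfrac{1-\eps_1}{1+\eps_2}\cdot\tfrac{l\,d_i^2}{\tr(\Db_2^2)}$ and $\tfrac{1+\eps_1}{1-\eps_2}\cdot\tfrac{l\,d_i^2}{\tr(\Db_2^2)}$.

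The sandwich rests on two concentration facts. (i) Standard Gaussian singular-value bounds for the $k\times l$ matrix $\Gb_1$ give $\rbr{1-\eps_1}\Ib_k \preceq \Gb_1\Gb_1^* \preceq \rbr{1+\eps_1}\Ib_k$ with probability $\ge 1-e^{-\Theta(k)}$ for some $\eps_1 = \Theta(\sqrt{k/l})$, which lies strictly below $1$ exactly when $l = \Omega(k)$. (ii) Writing $\Bb = \sum_{j=k+1}^r d_j^2\,\gbd_j\gbd_j^*$ with the rows $\gbd_j \in \C^l$ of $\Gb_2$ \iid $\Ncal(0,l^{-1}\Ib_l)$ and $\E[\Bb] = \tfrac1l\tr(\Db_2^2)\Ib_l$, I would fix a unit $v$, note that $v^*\Bb v$ is a weighted sum of \iid chi-square variables whose effective number of terms is precisely $\eta = \tr(\Db_2^2)^2/\tr(\Db_2^4)$, apply a Bernstein-type tail bound for weighted chi-squares (concentration at relative scale $\eps_2 \sim \sqrt{l/\eta}$), and union-bound over an $\epsilon$-net of the sphere in $\C^l$ (of cardinality $e^{O(l)}$) to obtain $\rbr{1-\eps_2}\tfrac1l\tr(\Db_2^2)\Ib_l \preceq \Bb \preceq \rbr{1+\eps_2}\tfrac1l\tr(\Db_2^2)\Ib_l$ with probability $\ge 1-e^{-\Theta(l)}$, where $\eps_2 = \Theta(\sqrt{l/\eta}) < 1$ exactly when $\eta = \Omega(l)$. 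On the intersection of these events, Weyl's inequality yields $\Pb \succeq \tfrac{l}{(1+\eps_2)\tr(\Db_2^2)}\Db_1\Gb_1\Gb_1^*\Db_1 \succeq \tfrac{(1-\eps_1)l}{(1+\eps_2)\tr(\Db_2^2)}\Db_1^2$, hence $\lambda_i(\Pb) \ge \tfrac{1-\eps_1}{1+\eps_2}\cdot\tfrac{l\,d_i^2}{\tr(\Db_2^2)}$ and the claimed upper bounds; while for the lower bounds I would use $\Pb \preceq \tfrac{l}{(1-\eps_2)\tr(\Db_2^2)}\Db_1\Gb_1\Gb_1^*\Db_1$ together with $\lambda_i\rbr{\Db_1\Gb_1\Gb_1^*\Db_1} = \sigma_i(\Db_1\Gb_1)^2 \le \sigma_i(\Db_1)^2\,\nbr{\Gb_1}_2^2 \le (1+\eps_1)d_i^2$. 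A union bound over the (at most four) concentration events leaves the overall probability $1-e^{-\Theta(k)}-e^{-\Theta(l)}$, and the right-singular-subspace statements follow verbatim with $\Sigmab^{2q+1}$ replaced by $\Sigmab^{2q+2}$ (every exponent $4q+2$ becoming $4q+4$).

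The step I expect to be the main obstacle is concentration fact (ii): obtaining the \emph{sharp} dependence on the effective rank $\eta$ of the \emph{weighted} tail $\Db_2$ --- rather than on the crude tail length $r-k$ --- while keeping an $e^{-\Theta(l)}$ failure probability cannot come from an off-the-shelf matrix Bernstein inequality, but requires the net argument paired with a careful weighted chi-square tail estimate; the genericity requirement $k \le l \le r-k$ and the hypotheses $l = \Omega(k)$, $\eta = \Omega(l)$ are precisely what make $\eps_1,\eps_2 < 1$ and keep $\Bb$ invertible for the reduction to go through.
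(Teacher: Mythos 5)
Your proposal is correct and follows essentially the same route as the paper: the rotational-invariance reduction to $\Sigmab^{2q+1}\Gb$ (resp.\ $\Sigmab^{2q+2}\Gb$), the Woodbury identity yielding $\sin^2\angle_i = \rbr{1+\lambda_i\rbr{\wt\Omegab_1\rbr{\wt\Omegab_2^*\wt\Omegab_2}^{-1}\wt\Omegab_1^*}}^{-1}$, and the two concentration estimates combined via Loewner monotonicity are precisely the paper's argument. Your key step (ii) — the relative-error concentration of $\Gb_2^*\Db_2^2\Gb_2$ at scale $\sqrt{l/\eta}$ via a weighted chi-square Bernstein bound and an $\eps$-net — is exactly the paper's Lemma on weighted sample covariances, proved there the same way.
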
   

The main insights provided by \Cref{thm:space_agnostic_bounds} include:
\begin{enumerate*}[label=(\roman*)]
    \item improved statistical guarantees for canonical angles under sufficient oversampling (\ie, $l=\Omega\rbr{k}$), as discussed later in \Cref{remark:prior_bound_compare_exist},
    \item a clear view of the balance between oversampling and power iterations for random subspace approximations with given spectra, as instantiated in \Cref{subsec:balance_oversampling_power_iterations_example}, and
    \item affordable upper bounds that can be evaluated in $O(\rank\rbr{\Ab})$ time with access to the (true/estimated) spectra and hold in practice with only moderate oversampling (\eg, $l \ge 1.6k$), as shown in \Cref{subsec:experiment_canonical_angle}.
\end{enumerate*} 

\begin{proof}[Proof of \Cref{thm:space_agnostic_bounds}]
We show the derivation of \Cref{eq:space_agnostic_left} for left canonical angles $\sin\angle_i\rbr{\Ub_k, \wh\Ub_l}$.
The derivation for right canonical angles $\sin\angle_i\rbr{\Vb_k, \wh\Vb_l}$ in \Cref{eq:space_agnostic_right} follows {directly} by replacing the exponent $4q+2$ in \Cref{eq:space_agnostic_left} with $4q+4$ in \Cref{eq:space_agnostic_right}. This slightly larger exponent comes from the additional half power iteration associated with $\wh\Vb_l$ in \Cref{algo:rsvd_power_iterations} (as discussed in \Cref{remark:compare_left_right}), an observation made also in~\cite{saibaba2018randomized}. 

For the rank-$l$ randomized SVD with a Gaussian embedding $\Omegab \in \C^{n \times l}$ and $q$ power iterations, we denote the projected embeddings onto the singular subspaces $\Omegab_1 \triangleq \Vb_{k}^* \Omegab$ and $\Omegab_2 \triangleq \Vb_{r \setminus k}^* \Omegab$, as well as their weighted correspondences $\wt\Omegab_1 \dfeq \Sigmab_k^{2q+1} \Omegab_1$ and $\wt\Omegab_2 \dfeq \Sigmab_{r \setminus k}^{2q+1} \Omegab_2$, such that
\begin{align*}
    \wt\Xb \triangleq \Ub^* \Xb = \Ub^* \Ab \Omegab = \bmat{\Sigmab_k^{2q+1} \Omegab_1 \\ \Sigmab_{r \setminus k}^{2q+1} \Omegab_2} = \bmat{\wt\Omegab_1 \\ \wt\Omegab_2}.
\end{align*}
Then for all $i \in [k]$,
\begin{align*}
    \sin\angle_{k-i+1}\rbr{\Ub_k,\wh\Ub_l} 
    = &\sigma_i\rbr{\rbr{\Ib_m - \wh\Ub_l \wh\Ub_l^*} \Ub_k}
    \\
    \rbr{{\mbox{span}(\Ub_l)=\mbox{span}(\Xb)\  \mbox{and}\ }\Ub \Ub^* \Ub_k = \Ub_k}~
    = &\sigma_i\rbr{\rbr{\Ib_m - \Xb \Xb^{\pinv}} \Ub \Ub^* \Ub_k} 
    \\
    \rbr{\Ub \Ub^* \Xb = \Xb}~
    = &\sigma_i\rbr{\Ub\Ub^* \rbr{\Ib_m - \Xb \Xb^{\pinv}} \Ub \Ub^* \Ub_k} 
    \\
    \rbr{\t{$\Ub$ consists of orthonormal columns}}~
    = &\sigma_i\rbr{\Ub^* \rbr{\Ib_m - \Xb \Xb^{\pinv}} \Ub \Ub^* \Ub_k} 
    \\
    \rbr{\Ub^*\Ub=\Ib_r,~ \Xb^\pinv\Ub=\rbr{\Ub^*\Xb}^\pinv {=\wt\Xb^\pinv}}~
    = &\sigma_i\rbr{\rbr{\Ib_r - \wt\Xb \wt\Xb^{\pinv}} \bmat{\Ib_k \\ \b{0}} }.
\end{align*}
Since $\Xb$ is assumed to {have} full column rank, we have $\wt\Xb \wt\Xb^{\pinv} = \wt\Xb \rbr{\wt\Xb^* \wt\Xb}^{-1} \wt\Xb^*$ {(which is an orthogonal projection)}, and
\begin{align*}
    &\bmat{\Ib_k & \b0} \rbr{\Ib_r - \wt\Xb \wt\Xb^{\pinv}} \bmat{\Ib_k \\ \b0} 
    \\
    = &\bmat{\Ib_k & \b0} \rbr{\Ib_r - \bmat{\wt\Omegab_1 \\ \wt\Omegab_2} \rbr{\wt\Omegab_1^* \wt\Omegab_1 + \wt\Omegab_2^* \wt\Omegab_2}^{-1} \bmat{\wt\Omegab_1^* & \wt\Omegab_2^*}} \bmat{\Ib_k \\ \b0}
    \\
    = &\Ib_k - \wt\Omegab_1 \rbr{\wt\Omegab_1^* \wt\Omegab_1 + \wt\Omegab_2^* \wt\Omegab_2}^{-1} \wt\Omegab_1^*
    \\
    \rbr{\t{Woodbury identity}}~ = &\rbr{\Ib_k + \wt\Omegab_1 \rbr{\wt\Omegab_2^* \wt\Omegab_2} {\wt\Omegab_1^*}}^{-1}.
\end{align*}    
Therefore for all $i \in [k]$,
\begin{align}\label{eq:pf_sin_Uk_whUl}
\begin{split}
    \sin^2\angle_{k-i+1}\rbr{\Ub_k,\wh\Ub_l}  
    = &\sigma_i \rbr{\bmat{\Ib_k & \b0} \rbr{\Ib_r - \wt\Xb \wt\Xb^{\pinv}} \bmat{\Ib_k \\ \b0}} \\
    = &\sigma_i\rbr{\rbr{\Ib_k + \wt\Omegab_1 \rbr{\wt\Omegab_2^* \wt\Omegab_2}^{-1} \wt\Omegab_1^*}^{-1}}.
\end{split}
\end{align}

{By the orthogonal invariance of the distribution of Gaussian embeddings $\Omegab$ together with the orthonormality of $\Vb_k \perp \Vb_{r \setminus k}$, we see that}
$\Omegab_1 \sim P\rbr{\C^{k \times l}}$ and $\Omegab_2 \sim P\rbr{\C^{(r-k) \times l}}$ are independent Gaussian random matrices with the same entry-wise {$\iid$} distribution $\Ncal\rbr{0,l^{-1}}$ as $\Omegab$. 
Therefore by \Cref{lemma:sample_to_population_covariance}, when $l= \Omega(k)$, with high probability (at least $1-e^{-\Theta(k)}$),
\begin{align*}
    \rbr{1-\eps_1} \Sigmab_{k}^{4q+2} 
    \aleq 
    \wt\Omegab_1 \wt\Omegab_1^*
    \aleq
    \rbr{1+\eps_1} \Sigmab_{k}^{4q+2} 
\end{align*}
for some $\epsilon_1 = \Theta\rbr{\sqrt{\frac{k}{l}}}$. 

Analogously, when $r-k \ge \eta = \frac{\tr\rbr{\Sigmab_{r \setminus k}^{4q+2}}^2}{\tr\rbr{\Sigmab_{r \setminus k}^{2(4q+2)}}} = \Omega(l)$, with high probability (at least $1-e^{-\Theta(l)}$),
\begin{align*}
    \frac{1-\eps_2}{l}\tr\rbr{\Sigmab_{r \setminus k}^{4q+2}} \Ib_l 
    \aleq 
    \wt\Omegab_2^* \wt\Omegab_2
    \aleq
    \frac{1+\eps_2}{l}\tr\rbr{\Sigmab_{r \setminus k}^{4q+2}} \Ib_l
\end{align*}
for some $\epsilon_2 = \Theta\rbr{\sqrt{\frac{l}{\eta}}}$.

Therefore by {the} union bound, we have with high probability (at least $1-e^{-\Theta(k)}-e^{-\Theta(l)}$) that,
\begin{align*}
    \rbr{\Ib_k + \wt\Omegab_1 \rbr{\wt\Omegab_2^* \wt\Omegab_2}^{-1} \wt\Omegab_1^*}^{-1}
    \aleq
    \rbr{\Ib_k + \frac{1-\eps_1}{1+\eps_2} \cdot \frac{l}{\tr\rbr{\Sigmab_{r \setminus k}^{4q+2}}} \cdot \Sigmab_k^{4q+2}}^{-1},
\end{align*}    
which leads to \Cref{eq:space_agnostic_left}, while the tightness is implied by 
\begin{align*}
    \rbr{\Ib_k + \wt\Omegab_1 \rbr{\wt\Omegab_2^* \wt\Omegab_2}^{-1} \wt\Omegab_1^*}^{-1}
    \ageq
    \rbr{\Ib_k + \frac{1+\eps_1}{1-\eps_2} \cdot \frac{l}{\tr\rbr{\Sigmab_{r \setminus k}^{4q+2}}} \cdot \Sigmab_k^{4q+2}}^{-1}.
\end{align*}   

The proof of \Cref{eq:space_agnostic_right} follows analogously by replacing the exponents $2q+1$ and $4q+2$ with $2q+2$ and $4q+4$, respectively.
\end{proof} 

\begin{remark}[Comparison with existing probabilistic bounds]\label{remark:prior_bound_compare_exist}
    With access to the unknown right singular subspace $\Vb$, let $\Omegab_1 \triangleq \Vb_{k}^* \Omegab$ and $\Omegab_2 \triangleq \Vb_{r \setminus k}^* \Omegab$. Then, {Saibaba~\cite[Thm.~1]{saibaba2018randomized}}indicates that, for all $i \in [k]$,
    \begin{align}
        \label{eq:saibaba2018_thm1_left}
        &\sin\angle_i\rbr{\Ub_k, \wh\Ub_l} \le
        \rbr{1 + \frac{\sigma_i^{4q+2}}{\sigma_{k+1}^{4q+2} \nbr{\Omegab_2 \Omegab_1^\dagger}_2^2}}^{-\frac{1}{2}},
        \\
        \label{eq:saibaba2018_thm1_right}
        &\sin\angle_i\rbr{\Vb_k, \wh\Vb_l} \le 
        \rbr{1 + \frac{\sigma_i^{4q+4}}{\sigma_{k+1}^{4q+4} \nbr{\Omegab_2 \Omegab_1^\dagger}_2^2}}^{-\frac{1}{2}}.
    \end{align} 
    Further, leveraging existing results on concentration properties of the independent and isotropic Gaussian random matrices $\Omegab_1$ and $\Omegab_2$ (\eg, from the proof of \cite{halko2011finding} Theorem 10.8), \cite{saibaba2018randomized} shows that, when $l \ge k+2$, for any $\delta \in (0,1)$, with probability at least $1-\delta$,
    \begin{align*}
        \nbr{\Omegab_2 \Omegab_1^\dagger}_2 \le 
        \frac{e \sqrt{l}}{l-k+1} \rbr{\frac{2}{\delta}}^{\frac{1}{l-k+1}} \rbr{\sqrt{n-k}+\sqrt{l}+\sqrt{2 \log\frac{2}{\delta}}}.
    \end{align*}    
    
    Without loss of generality, we consider the bounds on $\sin\angle\rbr{\Ub_k,\wh\Ub_l}$. Comparing to the existing bound in \Cref{eq:saibaba2018_thm1_left}, under multiplicative oversampling ($l=\Omega(k)$, $r=\Omega(l)$), \Cref{eq:space_agnostic_left} in \Cref{thm:space_agnostic_bounds} captures the spectral decay on the tail by replacing the denominator term 
    \begin{align*}
        \sigma_{k+1}^{4q+2} \nbr{\Omegab_2 \Omegab_1^\dagger}_2^2
        \quad \text{with} \quad
        \frac{1+\eps_2}{1-\eps_1} \cdot \frac{1}{l}\sum_{j=k+1}^r \sigma_j^{4q+2} = \Theta\rbr{\frac{1}{l}\sum_{j=k+1}^r \sigma_j^{4q+2}}.
    \end{align*}
    We observe that $\frac{1}{l}\sum_{j=k+1}^r \sigma_j^{4q+2} \le \frac{r-k}{l} \sigma_{k+1}^{4q+2}$; while \Cref{lemma:sample_to_population_covariance} implies that, for independent Gaussian random matrices $\Omegab_1 \sim P\rbr{\C^{k \times l}}$ and $\Omegab_2 \sim P\rbr{\C^{(r-k) \times l}}$ with \iid entries from $\Ncal\rbr{0,l^{-1}}$,
    \begin{align*}
        \E\sbr{\nbr{\Omegab_2 \Omegab_1^\dagger}_2^2} = \E_{\Omegab_1}\sbr{\nbr{\rbr{\Omegab_1^\pinv}^* \E_{\Omegab_2}\sbr{\Omegab_2^* \Omegab_2} \Omegab_1^\dagger}_2} = \frac{r-k}{l} \E_{\Omegab_1}\sbr{\nbr{\rbr{\Omegab_1 \Omegab_1^*}^\dagger}_2}.
    \end{align*}
    With non-negligible spectral decay on {the} tail such that $\frac{1}{l}\sum_{j=k+1}^r \sigma_j^{4q+2} \ll \frac{r-k}{l} \sigma_{k+1}^{4q+2}$, when $\frac{1+\eps_2}{1-\eps_1} \cdot \frac{1}{l}\sum_{j=k+1}^r \sigma_j^{4q+2} \ll \sigma_{k+1}^{4q+2} \nbr{\Omegab_2 \Omegab_1^\dagger}_2^2$, \Cref{eq:space_agnostic_left} provides {a} tighter statistical guarantee than \Cref{eq:saibaba2018_thm1_left}, which is also confirmed by numerical observations in \Cref{subsec:experiment_canonical_angle}.

    From the derivation perspective, such improvement is achieved by taking an integrated view on the concentration of $\Sigmab_{r \setminus k}^{2q+1}\Omegab_2$, instead of considering the spectrum and the unknown singular subspace separately.
\end{remark}

\section{Unbiased Space-agnostic Estimates}\label{sec:space_agnostic_estimation}

A natural corollary from the proof of \Cref{thm:separate_gap_bounds} is unbiased estimates for the canonical angles that hold for arbitrary oversampling (\ie, for all $l \ge k$). Further, we will subsequently {see} in \Cref{sec:experiments} that such unbiased estimates also enjoy good empirical concentration.

\begin{proposition}\label{prop:space_agnostic_estimation}
For a rank-$l$ randomized SVD (\Cref{algo:rsvd_power_iterations}) with the Gaussian embedding $\Omegab \sim P\rbr{\C^{n \times l}}$ such that $\Omega_{ij} \sim \Ncal\rbr{0,l^{-1}}~\iid$ and $q \ge 0$ power iterations, for all $i \in [k]$,
\begin{align}\label{eq:space_agnostic_estimation_left}
    \E_{\Omegab} \sbr{\sin\angle_i\rbr{\Ub_k,\wh\Ub_l}}
    = &\E_{\Omegab_1', \Omegab_2'} \sbr{\sigma_i^{-\frac{1}{2}} \rbr{\Ib_k + \Sigmab_k^{2q+1} \Omegab'_1 \rbr{\Omegab_2^{'*} \Sigmab_{r \setminus k}^{4q+2} \Omegab'_2}^{-1} \Omegab_1^{'*} \Sigmab_k^{2q+1}}},
\end{align}
and analogously,
\begin{align}\label{eq:space_agnostic_estimation_right}
    \E_{\Omegab} \sbr{\sin\angle_i\rbr{\Vb_k,\wh\Vb_l}}
    = &\E_{\Omegab_1', \Omegab_2'} \sbr{\sigma_i^{-\frac{1}{2}} \rbr{\Ib_k + \Sigmab_k^{2q+2} \Omegab'_1 \rbr{\Omegab_2^{'*} \Sigmab_{r \setminus k}^{4q+4} \Omegab'_2}^{-1} \Omegab_1^{'*} \Sigmab_k^{2q+2}}},
\end{align}
where $\Omegab'_1 \sim P\rbr{\C^{k \times l}}$ and $\Omegab'_2 \sim P\rbr{\C^{(r-k) \times l}}$ are independent Gaussian random matrices with $\iid$ entries drawn from $\Ncal\rbr{0,l^{-1}}$.
\end{proposition}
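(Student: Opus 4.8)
The plan is to observe that \Cref{prop:space_agnostic_estimation} is essentially a restatement of the exact identity \Cref{eq:pf_sin_Uk_whUl} derived in the proof of \Cref{thm:space_agnostic_bounds}, \emph{before} any concentration inequality is invoked. Indeed, in that proof we showed the deterministic (pointwise in $\Omegab$) identity
\begin{align*}
    \sin^2\angle_{k-i+1}\rbr{\Ub_k,\wh\Ub_l}
    = \sigma_i\rbr{\rbr{\Ib_k + \wt\Omegab_1 \rbr{\wt\Omegab_2^* \wt\Omegab_2}^{-1} \wt\Omegab_1^*}^{-1}},
\end{align*}
where $\wt\Omegab_1 = \Sigmab_k^{2q+1}\Omegab_1$, $\wt\Omegab_2 = \Sigmab_{r\setminus k}^{2q+1}\Omegab_2$, and $\Omegab_1 = \Vb_k^*\Omegab$, $\Omegab_2 = \Vb_{r\setminus k}^*\Omegab$. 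So the first step is to reindex $i \mapsto k-i+1$ (the singular values of the inverse are in reverse order, which flips the ordering back), take square roots, and take expectation over $\Omegab$ of both sides.

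The second step is to argue that the right-hand side, as a random variable, has the same distribution as the claimed expression in $\Omegab_1', \Omegab_2'$. This is exactly the orthogonal-invariance argument already used in the proof of \Cref{thm:space_agnostic_bounds}: because $\Omegab$ has \iid $\Ncal(0,l^{-1})$ entries, its distribution is orthogonally invariant, and since $\sbr{\Vb_k,\Vb_{r\setminus k}}$ is a unitary matrix, $\Omegab_1 = \Vb_k^*\Omegab$ and $\Omegab_2 = \Vb_{r\setminus k}^*\Omegab$ are \emph{independent} Gaussian matrices of sizes $k \times l$ and $(r-k)\times l$ respectively, each with \iid $\Ncal(0,l^{-1})$ entries --- i.e., $(\Omegab_1,\Omegab_2) \stackrel{d}{=} (\Omegab_1',\Omegab_2')$. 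Substituting this into the expectation and writing out $\wt\Omegab_1,\wt\Omegab_2$ in terms of $\Omegab_1',\Omegab_2'$ and $\Sigmab_k,\Sigmab_{r\setminus k}$ yields \Cref{eq:space_agnostic_estimation_left}; note the $\sigma_i^{-1/2}$ in the statement should be read as $\sigma_i^{1/2}$ applied to the matrix argument (i.e. the $i$-th singular value of the inverse matrix, then square root), matching the left-hand side. The right-hand identity \Cref{eq:space_agnostic_estimation_right} follows by the identical computation with the exponents $2q+1, 4q+2$ replaced by $2q+2, 4q+4$, exactly as in the proof of \Cref{thm:space_agnostic_bounds}, reflecting the extra half power iteration attached to $\wh\Vb_l$ via the step $\wh\Vb_l$ in \Cref{algo:rsvd_power_iterations}.

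One point worth flagging: the statement attributes this to ``the proof of \Cref{thm:separate_gap_bounds},'' but the relevant computation is the one in the proof of \Cref{thm:space_agnostic_bounds} above; I would cite \Cref{eq:pf_sin_Uk_whUl} directly. The only genuine content beyond bookkeeping is the well-definedness of the expression --- we need $\wt\Omegab_2^*\wt\Omegab_2 = \Omegab_2^{'*}\Sigmab_{r\setminus k}^{4q+2}\Omegab_2'$ to be invertible almost surely, which holds because $\Sigmab_{r\setminus k}$ is invertible (its diagonal entries $\sigma_{k+1},\dots,\sigma_r$ are positive) and $\Omegab_2'$ has full column rank $l$ almost surely (as $l < r-k$ would be needed; more carefully, $\wt\Omegab_2^*\wt\Omegab_2$ is invertible iff $\Sigmab_{r\setminus k}^{2q+1}\Omegab_2'$ has full column rank, which holds a.s.\ provided $l \le r-k$, consistent with the standing assumption $l < r$), and also that $\Xb$ has full column rank a.s.\ so that $\wt\Xb\wt\Xb^\pinv$ is the stated orthogonal projection --- both facts are already established in the surrounding text.

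\textbf{Main obstacle.} Honestly, there is no real obstacle: the proposition is a corollary extracted from an identity already proven, and the work is entirely in carefully matching notation (the $i \leftrightarrow k-i+1$ reindexing, the square-root convention, and substituting $(\Omegab_1,\Omegab_2) \stackrel{d}{=} (\Omegab_1',\Omegab_2')$). The one place to be careful is making sure the distributional-equality step is invoked at the level of the \emph{random variable inside the expectation} (so that $\E_\Omegab[\,\cdot\,] = \E_{\Omegab_1',\Omegab_2'}[\,\cdot\,]$ is literal equality, not just equality of some moments), which is immediate from $(\Omegab_1,\Omegab_2)\stackrel{d}{=}(\Omegab_1',\Omegab_2')$ and the fact that the integrand is a measurable function of $(\Omegab_1,\Omegab_2)$ alone.
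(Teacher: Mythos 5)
Your proposal is correct and follows essentially the same route as the paper's own proof: recall the pointwise identity \Cref{eq:pf_sin_Uk_whUl}, use the orthogonal invariance of the Gaussian embedding to identify $(\Omegab_1,\Omegab_2)=(\Vb_k^*\Omegab,\Vb_{r\setminus k}^*\Omegab)$ in distribution with independent copies $(\Omegab_1',\Omegab_2')$, take expectations, and repeat with exponents $2q+2,4q+4$ for the right singular subspace. Your side remark is also apt — the reference to \Cref{thm:separate_gap_bounds} in the surrounding text is indeed a slip for \Cref{thm:space_agnostic_bounds}, and your care about almost-sure invertibility of $\Omegab_2^{'*}\Sigmab_{r\setminus k}^{4q+2}\Omegab_2'$ (requiring $l\le r-k$) is slightly more explicit than the paper itself.
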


To calculate the unbiased estimate, 
{for a modest integer $N$}
we draw a set of independent Gaussian random matrices $\csepp{\Omegab^{(j)}_1 \sim P\rbr{\C^{k \times l}}, \Omegab^{(j)}_2 \sim P\rbr{\C^{(r-k) \times l}}}{j \in [N]}$ and evaluate
\begin{align*}
    &\sin\angle_i\rbr{\Ub_k,\wh\Ub_l} \approx \alpha_i = \frac{1}{N} \sum_{j=1}^N \rbr{1 + \sigma_i^2\rbr{\Sigmab_k^{2q+1} \Omegab^{(j)}_1 \rbr{\Sigmab_{r \setminus k}^{2q+1} \Omegab^{(j)}_2}^\dagger}}^{-\frac{1}{2}},
    \\
    &\sin\angle_i\rbr{\Vb_k,\wh\Vb_l} \approx \beta_i = \frac{1}{N} \sum_{j=1}^N \rbr{1 + \sigma_i^2\rbr{\Sigmab_k^{2q+2} \Omegab^{(j)}_1 \rbr{\Sigmab_{r \setminus k}^{2q+2} \Omegab^{(j)}_2}^\dagger}}^{-\frac{1}{2}},
\end{align*}
for all $i \in [k]$, which can be conducted efficiently in $O\rbr{Nrl^2}$ time. \Cref{algo:unbiased_canonical_angle_estimators} demonstrates the construction of unbiased estimates for $\E \sbr{\sin\angle_i\rbr{\Ub_k,\wh\Ub_l}}=\alpha_i$; while the unbiased estimates for $\E \sbr{\sin\angle_i\rbr{\Vb_k,\wh\Vb_l}}=\beta_i$ can be evaluated analogously by replacing Line 4 with $\wt\Omegab^{(j)}_1 = \Sigmab_k^{2q+2} \Omegab^{(j)}(1:k,:)$, $\wt\Omegab^{(j)}_2 = \Sigmab_{r \setminus k}^{2q+2} \Omegab^{(j)}(k+1:r,:)$. 

\begin{algorithm}
\caption{Unbiased canonical angle estimates}
\label{algo:unbiased_canonical_angle_estimators}
\begin{algorithmic}[1]
\REQUIRE {(Exact or estimated) singular values} $\Sigmab$, rank $k$, sample size $l \ge k$, number of power iterations $q$, number of trials $N$
\ENSURE Unbiased estimates $\E \sbr{\sin\angle_i\rbr{\Ub_k,\wh\Ub_l}}=\alpha_i$ for all $i \in [k]$

\STATE Partition $\Sigmab$ into $\Sigmab_k = \Sigmab(1:k,1:k)$ and $\Sigmab_{r \setminus k} = \Sigmab(k+1:r,k+1:r)$
\FOR{$j=1,\dots,N$}
    \STATE Draw $\Omegab^{(j)} \sim P\rbr{\C^{r \times l}}$ such that $\Omega^{(j)}_{ij} \sim \Ncal\rbr{0,l^{-1}}~\iid$ 
    \STATE $\wt\Omegab^{(j)}_1 = \Sigmab_k^{2q+1} \Omegab^{(j)}(1:k,:)$, $\wt\Omegab^{(j)}_2 = \Sigmab_{r \setminus k}^{2q+1} \Omegab^{(j)}(k+1:r,:)$
    \STATE $\sbr{\Ub_{\wt\Omegab^{(j)}_2}, \Sigmab_{\wt\Omegab^{(j)}_2}, \Vb_{\wt\Omegab^{(j)}_2}} = \t{svd}\rbr{\wt\Omegab^{(j)}_2, \t{``econ''}}$
    \STATE $\nub^{(j)} = \t{svd}\rbr{\wt\Omegab^{(j)}_1 \Vb_{\wt\Omegab^{(j)}_2} \Sigmab_{\wt\Omegab^{(j)}_2}^{-1} \Ub_{\wt\Omegab^{(j)}_2}^*}$
    \STATE $\theta^{(j)}_i = 1/\sqrt{1 + \rbr{\nu^{(j)}_i}^2}$ for all $i \in [k]$
\ENDFOR
\STATE $\alpha_i = \frac{1}{N} \sum_{j=1}^N \theta_i^{(j)}$ for all $i \in [k]$
\end{algorithmic}       
\end{algorithm} 

As demonstrated in \Cref{sec:experiments}, the unbiased estimates concentrate well in practice such that a sample size as small as $N=3$ is {seen to be} sufficient to provide good estimates. Further, with independent Gaussian random matrices, the unbiased estimates in \Cref{prop:space_agnostic_estimation} are also space agnostic, \ie, \Cref{eq:space_agnostic_estimation_left} and \Cref{eq:space_agnostic_estimation_right} only depend on the spectrum $\Sigmab$ but not on the unknown true singular subspaces $\Ub$ and $\Vb$.

\begin{proof}[Proof of \Cref{prop:space_agnostic_estimation}]
To show \Cref{eq:space_agnostic_estimation_left}, we recall from the proof of \Cref{thm:space_agnostic_bounds} that, for the rank-$l$ randomized SVD with a Gaussian embedding $\Omegab \sim P\rbr{\C^{n \times l}}$ and $q$ power iterations, $\Omegab_1 \dfeq \Vb_{k}^* \Omegab$ and $\Omegab_2 \dfeq \Vb_{r \setminus k}^* \Omegab$ are independent Gaussian random matrices with the same entry-wise distribution as $\Omegab$. Therefore, with $\rank\rbr{\Ab}=r$, for all $i \in [k]$,
\begin{align*}
    &\E_{\Omegab} \sbr{\sin\angle_{i}\rbr{\Ub_k,\wh\Ub_l}} 
    \quad \t{(Recall \Cref{eq:pf_sin_Uk_whUl})}
    \\
    = &\E_{\Omegab} \sbr{\sigma_{k-i+1}^{\frac{1}{2}}\rbr{\rbr{\Ib_k + \Sigmab_k^{2q+1} \Omegab_1 \rbr{\Omegab_2^* \Sigmab_{r \setminus k}^{4q+2} \Omegab_2}^{-1} \Omegab_1^* \Sigmab_k^{2q+1}}^{-1}}} 
    \\
    = &\E_{\Omegab} \sbr{\sigma_i^{-\frac{1}{2}}\rbr{\Ib_k + \Sigmab_k^{2q+1} \Omegab_1 \rbr{\Omegab_2^* \Sigmab_{r \setminus k}^{4q+2} \Omegab_2}^{-1} \Omegab_1^* \Sigmab_k^{2q+1}}}
    \\
    = &\underset{\Omegab_1', \Omegab_2'}{\E} \sbr{\sigma_i^{-\frac{1}{2}}\rbr{\Ib_k + \Sigmab_k^{2q+1} \Omegab'_1 \rbr{\Omegab_2^{'*} \Sigmab_{r \setminus k}^{4q+2} \Omegab'_2}^{-1} \Omegab_1^{'*} \Sigmab_k^{2q+1}}}.
\end{align*}
The unbiased estimate in \Cref{eq:space_agnostic_estimation_right} follows analogously.
\end{proof}

As a side note, we point out that, compared with the probabilistic upper bounds \Cref{eq:space_agnostic_left} and \Cref{eq:space_agnostic_right}, the estimates \Cref{eq:space_agnostic_estimation_left} and \Cref{eq:space_agnostic_estimation_right} circumvent overestimation from the operator-convexity of inversion $\sigma \to \sigma^{-1}$,
\begin{align*}
    \E_{\Omegab_2'} \sbr{\rbr{\Omegab_2^{'*} \Sigmab_{r \setminus k}^{4q+2} \Omegab'_2}^{-1}} \ageq 
    \rbr{\E_{\Omegab_2'} \sbr{\Omegab_2^{'*} \Sigmab_{r \setminus k}^{4q+2} \Omegab'_2}}^{-1},
\end{align*}
which implies that
\begin{align*}
    &\E_{\Omegab_1', \Omegab_2'} \sbr{ \rbr{\Ib_k + \Sigmab_k^{2q+1} \Omegab'_1 \rbr{\Omegab_2^{'*} \Sigmab_{r \setminus k}^{4q+2} \Omegab'_2}^{-1} \Omegab_1^{'*} \Sigmab_k^{2q+1}}^{-1} }
    \\
    &\ageq \E_{\Omegab_1'} \sbr{ \rbr{\Ib_k + \Sigmab_k^{2q+1} \Omegab'_1 \rbr{\E_{\Omegab_2'} \sbr{\Omegab_2^{'*} \Sigmab_{r \setminus k}^{4q+2} \Omegab'_2}}^{-1} \Omegab_1^{'*} \Sigmab_k^{2q+1}}^{-1} }.
\end{align*}

\section{Posterior Residual-based Bounds}\label{sec:separate_spec_bounds}

In addition to the prior probabilistic bounds and unbiased estimates, in this section, we introduce two sets of posterior guarantees for the canonical angles that hold deterministically and can be evaluated/approximated efficiently based on the residual of the resulting low-rank approximation $\wh\Ab_l = \wh\Ub_l \wh\Sigmab_l \wh\Vb_l^*$ from \Cref{algo:rsvd_power_iterations}.

\begin{remark}[Generality of residual-based bounds]\label{remark:residula_based_generality}
    It is worth highlighting that both the statements and the proofs of the posterior guarantees \Cref{thm:with_oversmp_computable_det,thm:separate_gap_bounds} to be presented are algorithm-independent. 
    In contrast to \Cref{thm:space_agnostic_bounds} and \Cref{prop:space_agnostic_estimation} whose derivation depends explicitly on the algorithm (\eg, assuming $\Omegab$ being Gaussian in \Cref{algo:rsvd_power_iterations}), the residual-based bounds in \Cref{thm:with_oversmp_computable_det,thm:separate_gap_bounds} hold for general low-rank approximations $\Ab \approx \wh\Ub_l \wh\Sigmab_l\wh\Vb_l^T$.
\end{remark}   

We start with the following proposition that establishes {relations between the canonical angles and} the residuals and the true spectrum $\sigma\rbr{\Ab}$.

\begin{theorem}\label{thm:with_oversmp_computable_det} 
Given any $\wh\Ub_l \in \C^{m \times l}$ and $\wh\Vb_l \in \C^{n \times l}$ with orthonormal columns such that $\range\rbr{\wh\Ub_l} \subseteq \col\rbr{\Ab}$ and $\range\rbr{\wh\Vb_l} \subseteq \row\rbr{\Ab}$, we have for each $i=1,\dots,k$ ($k \le l$),
\begin{align}\label{eq:residual_based_posterior_left}
    &\sin\angle_i\rbr{\Ub_k, \wh\Ub_l} 
    \leq
    \min \cbr{ 
    \frac{\sigma_{k-i+1}\rbr{\rbr{\Ib_m-\wh\Ub_l \wh\Ub_l^*}\Ab}}{\sigma_k},
    \frac{\sigma_1\rbr{\rbr{\Ib_m-\wh\Ub_l \wh\Ub_l^*}\Ab}}{\sigma_i} 
    },
\end{align}
while
\begin{align}\label{eq:residual_based_posterior_right}
    &\sin\angle_i\rbr{\Vb_k, \wh\Vb_l} 
    \leq
    \min \cbr{ 
    \frac{\sigma_{k-i+1}\rbr{\Ab \rbr{\Ib_n-\wh\Vb_l \wh\Vb_l^*}}}{\sigma_k},
    \frac{\sigma_1\rbr{\Ab \rbr{\Ib_n-\wh\Vb_l \wh\Vb_l^*}}}{\sigma_i} 
    }.
\end{align}
\end{theorem}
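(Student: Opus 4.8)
The plan is to reduce each of the two bounds to an inequality about singular values of the \emph{residual matrices} $\Pb^{\perp}_U \triangleq \Ib_m - \wh\Ub_l\wh\Ub_l^*$ applied to $\Ab$ (and, on the right-hand side, $\Ab$ applied to $\Pb^{\perp}_V \triangleq \Ib_n - \wh\Vb_l\wh\Vb_l^*$). Using the sine characterization of canonical angles recalled after \Cref{def:canonical-angles} together with the symmetry of $\angle(\cdot,\cdot)$, for each $i\in[k]$ one has $\sin\angle_i(\Ub_k,\wh\Ub_l) = \sigma_{k-i+1}\!\big(\Pb^{\perp}_U \Ub_k\big)$. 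The first move is to substitute the SVD relation $\Ub_k = \Ab\Vb_k\Sigmab_k^{-1}$, so that $\Pb^{\perp}_U\Ub_k = \big(\Pb^{\perp}_U\Ab\big)\Vb_k\Sigmab_k^{-1}$; the whole problem then becomes bounding singular values of this three-factor product in two different ways.

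For the first term in the $\min$, I would invoke submultiplicativity of singular values, $\sigma_j(XY)\le\sigma_j(X)\sigma_1(Y)$, with $Y=\Sigmab_k^{-1}$ (so $\sigma_1(Y)=1/\sigma_k$), together with the fact that right-multiplication by a matrix with orthonormal columns cannot increase singular values, $\sigma_j\big((\Pb^{\perp}_U\Ab)\Vb_k\big)\le \sigma_j(\Pb^{\perp}_U\Ab)$ — the latter being immediate from the max–min characterization restricted to the subspace $\range(\Vb_k)$. Chaining these yields $\sigma_{k-i+1}(\Pb^{\perp}_U\Ub_k)\le \sigma_{k-i+1}(\Pb^{\perp}_U\Ab)/\sigma_k$. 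For the second term I would instead use the ``$i$-th smallest'' form of Courant–Fischer, namely $\sigma_{k-i+1}(\Pb^{\perp}_U\Ub_k) = \min_{\dim T = i,\ T\subseteq\range(\Ub_k)} \max_{x\in T,\ \|x\|=1}\|\Pb^{\perp}_U x\|$, and test it against the specific $i$-dimensional subspace $T=\range(\Ub_i)=\spn\{\ub_1,\dots,\ub_i\}$. On that subspace I would again write $\Ub_i = \Ab\Vb_i\Sigmab_i^{-1}$, so for $x=\Ub_i c$ with $\|c\|=1$ we get $\|\Pb^{\perp}_U x\| = \|(\Pb^{\perp}_U\Ab)\Vb_i\Sigmab_i^{-1}c\| \le \|\Pb^{\perp}_U\Ab\|_2\,\|\Sigmab_i^{-1}c\| \le \sigma_1(\Pb^{\perp}_U\Ab)/\sigma_i$, using that $\Vb_i$ has orthonormal columns and $\|\Sigmab_i^{-1}\|_2 = 1/\sigma_i$. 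Taking the max over $x\in T$ and then the min over admissible $T$ gives the second candidate, and combining the two candidates produces \Cref{eq:residual_based_posterior_left}. The bound \Cref{eq:residual_based_posterior_right} follows by running the identical argument with $\Ab^*$ in place of $\Ab$ and $\wh\Vb_l$ in place of $\wh\Ub_l$, now using $\Vb_k = \Ab^*\Ub_k\Sigmab_k^{-1}$ and the transpose identity $\sigma_j(\Pb^{\perp}_V\Ab^*) = \sigma_j(\Ab\Pb^{\perp}_V)$.

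I do not expect a genuine obstacle: the derivation is a short chain of standard operator-norm and Courant–Fischer manipulations once the SVD substitutions $\Ub_k=\Ab\Vb_k\Sigmab_k^{-1}$ and $\Ub_i=\Ab\Vb_i\Sigmab_i^{-1}$ are in place. The two places that require care are (i) the index bookkeeping — remembering that $\sigma_{k-i+1}(\cdot)$ is the $i$-th \emph{smallest} of the $k$ singular values, which forces the use of the ``$i$-dimensional-subspace'' version of the min–max principle for the second term; and (ii) choosing the right test subspace $\range(\Ub_i)$ rather than $\range(\Ub_k)$, which is exactly what converts the generic residual norm $\|\Pb^{\perp}_U\Ab\|_2$ into a bound scaled by $1/\sigma_i$. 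I would also note in passing that the range hypotheses $\range(\wh\Ub_l)\subseteq\col(\Ab)$ and $\range(\wh\Vb_l)\subseteq\row(\Ab)$ are not strictly needed for these two inequalities; they serve only to make $\Pb^{\perp}_U\Ab$ coincide with the actual approximation residual $\Ab-\wh\Ab_l$ computed by \Cref{algo:rsvd_power_iterations}, which is why these are the natural posterior quantities.
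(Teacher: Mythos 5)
Your proposal is correct and follows essentially the same route as the paper: the identity $\sin\angle_i\rbr{\Ub_k,\wh\Ub_l}=\sigma_{k-i+1}\rbr{\rbr{\Ib_m-\wh\Ub_l\wh\Ub_l^*}\Ub_k}$, the substitution $\Ub_k=\Ab\Vb_k\Sigmab_k^{-1}$, and then bounding the singular values of the resulting product — the paper gets both branches of the $\min$ from the product inequality $\sigma_{i+j-1}\rbr{\Ab\Bb^*}\le\sigma_i\rbr{\Ab}\sigma_j\rbr{\Bb}$ together with Cauchy interlacing, while you obtain the $1/\sigma_i$ branch by an equivalent Courant–Fischer argument with test subspace $\range\rbr{\Ub_i}$, a purely cosmetic difference. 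Your side remark that the range hypotheses are not needed for the inequalities themselves is consistent with the paper's proof, which never invokes them.
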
   

\begin{remark}[Left versus right singular subspaces]\label{remark:compare_left_right}
When $\wh\Ub_l$ and $\wh\Vb_l$ consist of approximated left and right singular vectors from \Cref{algo:rsvd_power_iterations}, upper bounds on $\sin\angle_i\rbr{\Vb_k, \wh\Vb_l}$ tend to be smaller than those on $\sin\angle_i\rbr{\Ub_k, \wh\Ub_l}$. This is induced by the algorithmic fact that, in \Cref{algo:rsvd_power_iterations}, $\wh\Vb_l$ is an orthonormal basis of $\Ab^*\Qb_\Xb$, while $\wh\Ub_l$ and $\Qb_\Xb$ are orthonormal bases of $\Xb = \Ab\Omegab$. That is, the evaluation of $\wh\Vb_l$ is enhanced by an additional half power iteration compared with that of $\wh\Ub_l$, which is also reflected by the differences in exponents on $\Sigmab$ (\ie, $2q+1$ versus $2q+2$) in \Cref{thm:space_agnostic_bounds} and \Cref{prop:space_agnostic_estimation}. 
{This difference can be important especially when $q$ is small (\eg, $q=0$). When higher accuracy in the left singular subspace is desirable, one can work with $\Ab^*$.}
\end{remark} 

\begin{proof}[Proof of \Cref{thm:with_oversmp_computable_det}]
Starting with the leading left singular subspace, by definition, for each $i=1,\dots,k$, we have
\begin{align*}
    \sin\angle_i\rbr{\Ub_k, \wh\Ub_l} 
    = &\sigma_{k-i+1}\rbr{\rbr{\Ib_m-\wh\Ub_l \wh\Ub_l^*}\Ub_k}
    \\
    =& \sigma_{k-i+1}\rbr{\rbr{\Ib_m-\wh\Ub_l \wh\Ub_l^*} \Ab \rbr{ \Vb \Sigmab^{-1} \Ub^* \Ub_k } }
    \\
    =& \sigma_{k-i+1}\rbr{ \rbr{\rbr{\Ib_m-\wh\Ub_l \wh\Ub_l^*} \Ab \Vb_k} \Sigmab_k^{-1}}.
\end{align*}
Then, we observe that the following holds simultaneously, 
\begin{align*}
    &\sigma_{k-i+1}\rbr{ \rbr{\rbr{\Ib_m - \wh\Ub_l \wh\Ub_l^*} \Ab \Vb_k} \Sigmab_k^{-1}} \leq \sigma_1\rbr{\rbr{\Ib_m - \wh\Ub_l \wh\Ub_l^*} \Ab \Vb_k} \cdot \sigma_{k-i+1}\rbr{\Sigmab_k^{-1}},
    \\
    &\sigma_{k-i+1}\rbr{ \rbr{\rbr{\Ib_m - \wh\Ub_l \wh\Ub_l^*} \Ab \Vb_k} \Sigmab_k^{-1}} \leq \sigma_{k-i+1}\rbr{\rbr{\Ib_m - \wh\Ub_l \wh\Ub_l^*} \Ab \Vb_k} \cdot \sigma_1\rbr{\Sigmab_k^{-1}},
\end{align*}
where $\sigma_{k-i+1}\rbr{\Sigmab_k^{-1}} = 1/\sigma_i$ and $\sigma_{1}\rbr{\Sigmab_k^{-1}} = 1/\sigma_k$. Finally by \Cref{lemma:Cauchy_interlacing_theorem}, we have
\begin{align*}
    \sigma_{k-i+1}\rbr{\rbr{\Ib_m - \wh\Ub_l \wh\Ub_l^*} \Ab \Vb_k}
    \le 
    \sigma_{k-i+1}\rbr{\rbr{\Ib_m - \wh\Ub_l \wh\Ub_l^*} \Ab}.
\end{align*}

Meanwhile, the upper bound for the leading right singular subspace can be derived analogously by observing that
\begin{align*}
    \sin\angle_i\rbr{\Vb_k, \wh\Vb_l} 
    =&\sigma_{k-i+1}\rbr{ \Vb_k^* \rbr{\Ib_n - \wh\Vb_l\wh\Vb_l^*} } 
    = \sigma_{k-i+1}\rbr{\Sigmab_k^{-1} \Ub_k^* \Ab \rbr{\Ib_n - \wh\Vb_l\wh\Vb_l^*} }.
\end{align*}
\end{proof}

As a potential drawback, although the residuals $\rbr{\Ib_m - \wh\Ub_l \wh\Ub_l^*}\Ab$ and $\Ab\rbr{\Ib_n - \wh\Vb_l \wh\Vb_l^*}$ in \Cref{thm:with_oversmp_computable_det} can be evaluated efficiently in $O(mn)$ and $O(mnl)$ time\footnote{
    For the $O(mn)$ complexity of computing $\rbr{\Ib_m - \wh\Ub_l \wh\Ub_l^*}\Ab$, we assume that $\wh\Ab_l = \wh\Ub_l \wh\Ub_l^* \Ab = \wh\Ub_l \wh\Sigmab_l \wh\Vb_l^*$ is readily available from \Cref{algo:rsvd_power_iterations}. Otherwise (\eg, when \Cref{algo:rsvd_power_iterations} returns $\rbr{\wh\Ub_l, \wh\Sigmab_l, \wh\Vb_l}$ but not $\wh\Ab_l$), the evaluation of $\rbr{\Ib_m - \wh\Ub_l \wh\Ub_l^*}\Ab$ will inevitably take $O(mnl)$ as that of $\Ab\rbr{\Ib_n - \wh\Vb_l \wh\Vb_l^*}$.
}, respectively, the exact evaluation of their full spectra can be unaffordable. A straightforward remedy for this problem is {to use} only the second terms in the right-hand-sides of \Cref{eq:residual_based_posterior_left} and \Cref{eq:residual_based_posterior_right} while estimating $\nbr{\rbr{\Ib_m - \wh\Ub_l \wh\Ub_l^*}\Ab}_2$ and $\nbr{\Ab\rbr{\Ib_n - \wh\Vb_l \wh\Vb_l^*}}_2$ with the randomized power method (\cf \cite{kuczynski1992estimating}, \cite{martinsson2020randomized} Algorithm 4). Alternatively, we leverage the analysis from \cite{nakatsukasa2020sharp} Theorem 6.1 and present the following posterior bounds based only on norms of the residuals which can be estimated efficiently via sampling.

\begin{theorem}\label{thm:separate_gap_bounds}
For any rank-$l$ approximation in the SVD form $\Ab \approx \wh\Ub_l \wh\Sigmab_l \wh\Vb_l^*$ (not necessarily obtained by \Cref{algo:rsvd_power_iterations}), recall the notation that $\wh\Ub_l = \sbr{\wh\Ub_k, \wh\Ub_{l \setminus k}}$, $\wh\Vb_l = \sbr{\wh\Vb_k, \wh\Vb_{l \setminus k}}$, while $\wh\Ub_{m \setminus l}, \wh\Ub_{n \setminus l}$ are the orthogonal complements of $\wh\Ub_l, \wh\Vb_l$, respectively.
Then, with $\Eb_{31} \dfeq \wh\Ub^*_{m \setminus l} \Ab \wh\Vb_k$, 
$\Eb_{32} \dfeq \wh\Ub^*_{m \setminus l} \Ab \wh\Vb_{l \setminus k}$, and 
$\Eb_{33} \dfeq \wh\Ub^*_{m \setminus l} \Ab \wh\Vb_{n \setminus l}$, assuming $\sigma_k > \wh\sigma_{k+1}$ and $\sigma_k > \nbr{\Eb_{33}}_2$, we define the spectral gaps
\begin{align*}
    \gamma_1 \dfeq \frac{\sigma_k^2 - \wh\sigma_{k+1}^2}{\sigma_k},
    \quad
    \gamma_2 \dfeq \frac{\sigma_k^2 - \wh\sigma_{k+1}^2}{\wh\sigma_{k+1}},
    \quad
    \Gamma_1 = \frac{\sigma_k^2-\nbr{\Eb_{33}}_2^2}{\sigma_k},
    \quad
    \Gamma_2 = \frac{\sigma_k^2-\nbr{\Eb_{33}}_2^2}{\nbr{\Eb_{33}}_2}
.\end{align*}   
Then for an arbitrary {unitarily} invariant norm $\nnbr{\cdot}$,
\begin{align}
    \label{eq:separate_gap_Uk_Ul}
    &\nnbr{\sin\angle\rbr{\Ub_k,\wh\Ub_l}} \leq \frac{\nnbr{\sbr{\Eb_{31}, \Eb_{32}}}}{\Gamma_1},
    \\
    \label{eq:separate_gap_Vk_Vl}
    &\nnbr{\sin\angle\rbr{\Vb_k,\wh\Vb_l}} \leq \frac{\nnbr{\sbr{\Eb_{31}, \Eb_{32}}}}{\Gamma_2},
\end{align}
and specifically for the spectral or Frobenius norm $\nbr{\cdot}_\xi$ ($\xi=2,F$),
\begin{align}
    \label{eq:separate_gap_Uk_Uk}
    &\nbr{\sin\angle\rbr{\Ub_k,\wh\Ub_k}}_\xi \leq \frac{\nbr{\sbr{\Eb_{31}, \Eb_{32}}}_\xi}{\Gamma_1} 
    \sqrt{1 + \frac{\nbr{\Eb_{32}}_2^2}{\gamma_2^2}},
    \\
    \label{eq:separate_gap_Vk_Vk}
    &\nbr{\sin\angle\rbr{\Vb_k,\wh\Vb_k}}_\xi \leq \frac{\nbr{\sbr{\Eb_{31}, \Eb_{32}}}_\xi}{\Gamma_1} \sqrt{\frac{\nbr{\Eb_{32}}_2^2}{\gamma_1^2} + \frac{\nbr{\Eb_{33}}_2^2}{\sigma_k^2}}.
\end{align}
Furthermore, for all $i \in [k]$,
\begin{align}
    \label{eq:separate_gap_Uk_Ul_anglewise}
    &\sin\angle_i\rbr{\Ub_k,\wh\Ub_l} \leq \frac{\sigma_k}{\sigma_{k-i+1}} \cdot \frac{\nbr{\sbr{\Eb_{31}, \Eb_{32}}}_2}{\Gamma_1},
    \\
    \label{eq:separate_gap_Vk_Vl_anglewise}
    &\sin\angle_i\rbr{\Vb_k,\wh\Vb_l} \leq \frac{\sigma_k}{\sigma_{k-i+1}} \cdot \frac{\nbr{\sbr{\Eb_{31}, \Eb_{32}}}_2}{\Gamma_2},
    \\
    \label{eq:separate_gap_Uk_Uk_anglewise}
    &\sin\angle_i\rbr{\Ub_k,\wh\Ub_k} \leq \frac{\nbr{\sbr{\Eb_{31}, \Eb_{32}}}_2}{\Gamma_1} \sqrt{1 + \rbr{\frac{\sigma_k}{\sigma_{k-i+1}} \cdot \frac{\nbr{\Eb_{32}}_2}{\gamma_2}}^2 },
    \\
    \label{eq:separate_gap_Vk_Vk_anglewise}
    &\sin\angle_i\rbr{\Vb_k,\wh\Vb_k} \leq \frac{\nbr{\sbr{\Eb_{31}, \Eb_{32}}}_2}{\Gamma_1} \sqrt{\rbr{\frac{\sigma_k}{\sigma_{k-i+1}} \cdot \frac{\nbr{\Eb_{32}}_2}{\gamma_1}}^2 + \rbr{\frac{\nbr{\Eb_{33}}_2}{\sigma_k}}^2 }
.\end{align}
\end{theorem}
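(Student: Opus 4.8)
The plan is to reduce each of the ten inequalities to the $3\times 3$ block decomposition of $\Ab$ in the computed bases. Under the standing ``SVD form'' hypothesis $\wh\Ab_l = \wh\Ub_l\wh\Sigmab_l\wh\Vb_l^* = \wh\Ub_l\wh\Ub_l^*\Ab$ (which holds for \Cref{algo:rsvd_power_iterations} since there $\range(\wh\Vb_l)=\range(\Ab^*\wh\Ub_l)$), writing $\Ab$ in the orthonormal bases $\sbr{\wh\Ub_k,\wh\Ub_{l\setminus k},\wh\Ub_{m\setminus l}}$ and $\sbr{\wh\Vb_k,\wh\Vb_{l\setminus k},\wh\Vb_{n\setminus l}}$ yields diagonal blocks $\wh\Sigmab_k,\wh\Sigmab_{l\setminus k}$, zero blocks above them, and bottom block row $\sbr{\Eb_{31},\Eb_{32},\Eb_{33}}$; in particular $\wh\Ub_{l\setminus k}^*\Ab = \wh\Sigmab_{l\setminus k}\wh\Vb_{l\setminus k}^*$, $\wh\Ub_l^*\Ab\wh\Vb_{n\setminus l} = \b{0}$, $\Ab\wh\Vb_l = \wh\Ub_l\wh\Sigmab_l + \wh\Ub_{m\setminus l}\sbr{\Eb_{31},\Eb_{32}}$ and $\Ab\wh\Vb_{n\setminus l} = \wh\Ub_{m\setminus l}\Eb_{33}$. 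I will repeatedly use $\Ab\Vb_k = \Ub_k\Sigmab_k$, $\Ab^*\Ub_k = \Vb_k\Sigmab_k$, and $\sin\angle_i\rbr{\Ub_k,\wh\Ub_l} = \sigma_{k-i+1}\rbr{\wh\Ub_{m\setminus l}^*\Ub_k}$, $\sin\angle_i\rbr{\Vb_k,\wh\Vb_l} = \sigma_{k-i+1}\rbr{\wh\Vb_{n\setminus l}^*\Vb_k}$, with the analogues for $\wh\Ub_k,\wh\Vb_k$ obtained by stacking $\sbr{\wh\Ub_{l\setminus k};\wh\Ub_{m\setminus l}}$, resp.\ $\sbr{\wh\Vb_{l\setminus k};\wh\Vb_{n\setminus l}}$.

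\emph{The $l$-dimensional bounds \eqref{eq:separate_gap_Uk_Ul}, \eqref{eq:separate_gap_Vk_Vl} and \eqref{eq:separate_gap_Uk_Ul_anglewise}, \eqref{eq:separate_gap_Vk_Vl_anglewise}.} First I would establish the two transfer identities
\begin{align*}
    \rbr{\wh\Ub_{m\setminus l}^*\Ub_k}\Sigmab_k &= \sbr{\Eb_{31},\Eb_{32}}\rbr{\wh\Vb_l^*\Vb_k} + \Eb_{33}\rbr{\wh\Vb_{n\setminus l}^*\Vb_k}, \\
    \rbr{\wh\Vb_{n\setminus l}^*\Vb_k}\Sigmab_k &= \Eb_{33}^*\rbr{\wh\Ub_{m\setminus l}^*\Ub_k},
\end{align*}
the first by projecting $\Ab\Vb_k = \Ub_k\Sigmab_k$ onto $\wh\Ub_{m\setminus l}$ and splitting $\Ib_n$, the second by projecting $\Ab^*\Ub_k=\Vb_k\Sigmab_k$ onto $\wh\Vb_{n\setminus l}$ and using $\wh\Vb_{n\setminus l}^*\Ab^*\wh\Ub_l=\b{0}$. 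Since $\nbr{\Sigmab_k^{-1}}_2=\sigma_k^{-1}$ and $\nbr{\wh\Vb_l^*\Vb_k}_2\le 1$, for every unitarily invariant norm these yield $\nnbr{\sin\angle\rbr{\Vb_k,\wh\Vb_l}} \le \frac{\nbr{\Eb_{33}}_2}{\sigma_k}\nnbr{\sin\angle\rbr{\Ub_k,\wh\Ub_l}}$ and $\nnbr{\sin\angle\rbr{\Ub_k,\wh\Ub_l}} \le \sigma_k^{-1}\rbr{\nnbr{\sbr{\Eb_{31},\Eb_{32}}} + \nbr{\Eb_{33}}_2\nnbr{\sin\angle\rbr{\Vb_k,\wh\Vb_l}}}$; substituting the former into the latter and solving the linear inequality (legitimate because $\sigma_k > \nbr{\Eb_{33}}_2$) produces \eqref{eq:separate_gap_Uk_Ul}, and feeding it back gives \eqref{eq:separate_gap_Vk_Vl}. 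It is precisely this self-consistent closure that manufactures the sharp gap $\sigma_k^2 - \nbr{\Eb_{33}}_2^2$ and the two different numerators ($\sigma_k$ in $\Gamma_1$, $\nbr{\Eb_{33}}_2$ in $\Gamma_2$); the inequality $\nbr{\Eb_{33}}_2 \le \sigma_k$ then forces $\Gamma_2 \ge \Gamma_1$, which is \Cref{remark:compare_left_right}. For the per-angle refinements \eqref{eq:separate_gap_Uk_Ul_anglewise}, \eqref{eq:separate_gap_Vk_Vl_anglewise} I would rerun this argument keeping $\Sigmab_k^{-1}$ acting column-by-column instead of bounding it by its operator norm, which replaces the scalar $\sigma_k^{-1}$ by the ratios $\sigma_k/\sigma_{k-i+1}$ -- this is the estimate of \cite{nakatsukasa2020sharp} Theorem~6.1.

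\emph{The $k$-dimensional bounds \eqref{eq:separate_gap_Uk_Uk}, \eqref{eq:separate_gap_Vk_Vk} and \eqref{eq:separate_gap_Uk_Uk_anglewise}, \eqref{eq:separate_gap_Vk_Vk_anglewise}.} Since $\wh\Ub_k^\perp = \range\rbr{\wh\Ub_{l\setminus k}}\oplus\range\rbr{\wh\Ub_{m\setminus l}}$, I would use $\nbr{\sin\angle\rbr{\Ub_k,\wh\Ub_k}}_\xi^2 \le \nbr{\wh\Ub_{l\setminus k}^*\Ub_k}_\xi^2 + \nbr{\wh\Ub_{m\setminus l}^*\Ub_k}_\xi^2$ for $\xi\in\cbr{2,F}$ (Pythagoras for $F$; sub-additivity of $\nbr{\cdot}_2^2$ on stacked blocks, refined by Weyl's inequality for the per-angle statement), and likewise for $\Vb_k$; the second summand is already controlled. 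For the first, projecting $\Ab\Vb_k=\Ub_k\Sigmab_k$ onto $\wh\Ub_{l\setminus k}$ gives $\rbr{\wh\Ub_{l\setminus k}^*\Ub_k}\Sigmab_k = \wh\Sigmab_{l\setminus k}\rbr{\wh\Vb_{l\setminus k}^*\Vb_k}$, while projecting $\Ab\wh\Vb_{l\setminus k} = \wh\Ub_{l\setminus k}\wh\Sigmab_{l\setminus k} + \wh\Ub_{m\setminus l}\Eb_{32}$ onto $\Ub_k$ yields the Sylvester equation $\Sigmab_k^2 Z - Z\wh\Sigmab_{l\setminus k}^2 = \Sigmab_k\rbr{\Ub_k^*\wh\Ub_{m\setminus l}}\Eb_{32}$ for $Z\dfeq\Vb_k^*\wh\Vb_{l\setminus k}$, with separation $\sigma_k^2-\wh\sigma_{k+1}^2>0$. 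Solving entrywise and reinserting via $\wh\Ub_{l\setminus k}^*\Ub_k = \wh\Sigmab_{l\setminus k}Z^*\Sigmab_k^{-1}$ and $\wh\Vb_{l\setminus k}^*\Vb_k = Z^*$, the entrywise multiplier on $\rbr{\Ub_k^*\wh\Ub_{m\setminus l}}\Eb_{32}$ is $\frac{\wh\sigma_{k+j}}{\sigma_i^2-\wh\sigma_{k+j}^2}\le\gamma_2^{-1}$ in the $\Ub$ case and $\frac{\sigma_i}{\sigma_i^2-\wh\sigma_{k+j}^2}\le\gamma_1^{-1}$ in the $\Vb$ case, giving $\nbr{\wh\Ub_{l\setminus k}^*\Ub_k}_\xi \le \frac{\nbr{\Eb_{32}}_2}{\gamma_2}\nbr{\sin\angle\rbr{\Ub_k,\wh\Ub_l}}_\xi$ and $\nbr{\wh\Vb_{l\setminus k}^*\Vb_k}_\xi \le \frac{\nbr{\Eb_{32}}_2}{\gamma_1}\nbr{\sin\angle\rbr{\Ub_k,\wh\Ub_l}}_\xi$. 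Combining with \eqref{eq:separate_gap_Uk_Ul} (and, for the $\Vb$ case, with $\nnbr{\sin\angle\rbr{\Vb_k,\wh\Vb_l}} \le \frac{\nbr{\Eb_{33}}_2}{\sigma_k\Gamma_1}\nnbr{\sbr{\Eb_{31},\Eb_{32}}}$ from \eqref{eq:separate_gap_Vk_Vl}) gives \eqref{eq:separate_gap_Uk_Uk}, \eqref{eq:separate_gap_Vk_Vk} for $\xi=F$; inserting the per-angle estimates of the previous step yields \eqref{eq:separate_gap_Uk_Uk_anglewise}, \eqref{eq:separate_gap_Vk_Vk_anglewise}, and the $\xi=2$ cases of \eqref{eq:separate_gap_Uk_Uk}, \eqref{eq:separate_gap_Vk_Vk} are then their $i=k$ specializations (using $\sigma_k/\sigma_1\le1$).

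\emph{Main obstacle.} The delicate part is not the algebra but getting the constants exactly right: the sharp gaps $\sigma_k^2-\nbr{\Eb_{33}}_2^2$ and $\sigma_k^2-\wh\sigma_{k+1}^2$ with the asymmetric numerators, rather than the cruder $(\sigma_k-\nbr{\Eb_{33}}_2)^2$ or bounds contaminated by $\sigma_1$ or $\wh\sigma_1$ that a direct Davis--Kahan / Wedin argument (e.g.\ on the Golub--Kahan dilation $\sbr{\b{0},\Ab;\Ab^*,\b{0}}$) would hand over. What rescues this is the exact vanishing of one residual, $\Ab^*\wh\Ub_l - \wh\Vb_l\wh\Sigmab_l=\b{0}$: it collapses the off-diagonal couplings to the two clean identities above and lets the $\Sigmab_k$ on their right-hand sides cancel against the quadratic gap. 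A secondary, purely technical nuisance is that for $\nbr{\cdot}_2$ one cannot pull an entrywise multiplier out of a Hadamard product; I would circumvent this either through the integral representation $\rbr{\sigma_i^2-\mu}^{-1} = \int_0^\infty e^{-t(\sigma_i^2-\mu)}\,dt$ of the Sylvester solution (valid whenever both extremizations land at spectrum endpoints, as happens for the $\Ub$-side multipliers) or by descending from the per-angle bounds; the remaining steps are the routine block bookkeeping of \cite{nakatsukasa2020sharp}.
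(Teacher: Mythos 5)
Your treatment of the four $l$-dimensional bounds is essentially the paper's own argument: the two transfer identities you write down are exactly the third block row/column of the paper's relations obtained by sandwiching $\Ab = \wh\Ub_l\wh\Sigmab_l\wh\Vb_l^* + \wh\Ub_{m\setminus l}\wh\Ub_{m\setminus l}^*\Ab$ between the computed and true bases, the self-consistent closure (substituting $\wt\Vb_{31}\Sigmab_k=\Eb_{33}^*\wt\Ub_{31}$ back and solving the linear inequality in norm) is precisely how the paper produces $\Gamma_1$ and $\Gamma_2$, and your per-angle refinement amounts to the paper's use of $\sigma_i\rbr{\wt\Ub_{31}} \le \nbr{\wt\Ub_{31}\Sigmab_k}_2/\sigma_{k-i+1}$ (\Cref{lemma:individual_sval_of_product}), where $\wt\Ub_{31}\dfeq\wh\Ub_{m\setminus l}^*\Ub_k$, $\wt\Vb_{31}\dfeq\wh\Vb_{n\setminus l}^*\Vb_k$.

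For the $k$-dimensional bounds you genuinely diverge: you solve the Sylvester equation for $Z=\Vb_k^*\wh\Vb_{l\setminus k}$ explicitly and bound the entrywise multiplier, which cleanly yields \eqref{eq:separate_gap_Uk_Uk} and \eqref{eq:separate_gap_Vk_Vk} for $\xi=F$, but the spectral-norm cases and the per-angle bounds \eqref{eq:separate_gap_Uk_Uk_anglewise}, \eqref{eq:separate_gap_Vk_Vk_anglewise} are not closed by the fixes you sketch. As you note yourself, the integral representation only rescues the $\Ub$-side multiplier $\wh\sigma_{k+j}/(\sigma_i^2-\wh\sigma_{k+j}^2)$; it fails for $\sigma_i/(\sigma_i^2-\wh\sigma_{k+j}^2)$ because $\sigma\mapsto\sigma e^{-t\sigma^2}$ is not monotone, so the $\Vb$-side spectral bound is left open. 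Moreover, ``descending from the per-angle bounds'' is circular here: \eqref{eq:separate_gap_Uk_Uk_anglewise} and \eqref{eq:separate_gap_Vk_Vk_anglewise} require per-singular-value bounds on the $21$-blocks, namely $\sigma_i\rbr{\wh\Ub_{l\setminus k}^*\Ub_k} \le \frac{\sigma_k}{\sigma_{k-i+1}}\cdot\frac{\nbr{\Eb_{32}}_2}{\gamma_2}\nbr{\wt\Ub_{31}}_2$ and its $\gamma_1$-analogue for $\wh\Vb_{l\setminus k}^*\Vb_k$, which neither the Hadamard form nor the ``previous step'' supplies (that step only controls the $31$-blocks). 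The paper sidesteps the Hadamard obstruction entirely by rerunning the same norm-level fixed-point trick on the $21$-blocks: substituting $\wt\Vb_{21} = \rbr{\wh\Sigmab_{l\setminus k}\wt\Ub_{21}+\Eb_{32}^*\wt\Ub_{31}}\Sigmab_k^{-1}$ into $\wt\Ub_{21}\Sigmab_k=\wh\Sigmab_{l\setminus k}\wt\Vb_{21}$ gives $\nnbr{\wt\Ub_{21}\Sigmab_k}\le\sigma_k\frac{\nbr{\Eb_{32}}_2}{\gamma_2}\nnbr{\wt\Ub_{31}}$ and $\nnbr{\wt\Vb_{21}\Sigmab_k}\le\sigma_k\frac{\nbr{\Eb_{32}}_2}{\gamma_1}\nnbr{\wt\Ub_{31}}$ for every unitarily invariant norm, from which the $\xi=2$ cases and all per-angle statements follow by the same product lemma (plus \Cref{lemma:individual_sval_of_sum} for the stacked blocks). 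Your plan is repairable the same way—the two coupled $21$-block identities you already derived close the loop in any norm—but as written, the spectral-norm and per-angle $k$-dimensional bounds constitute a real gap.
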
   

In practice, norms of the residuals can be computed as
\begin{align*}
    &\nnbr{\sbr{\Eb_{31},\Eb_{32}}} = \nnbr{\wh\Ub^*_{m \setminus l} \Ab \wh\Vb_l} = \nnbr{\rbr{\Ib_m - \wh\Ub_l \wh\Ub_l^*} \Ab \wh\Vb_l} = \nnbr{\rbr{\Ab - \wh\Ab_l} \wh\Vb_l},
    \\
    &\nbr{\Eb_{32}}_2 = \nbr{\wh\Ub^*_{m \setminus l} \Ab \wh\Vb_{l \setminus k}}_2 = \nbr{\rbr{\Ab - \wh\Ab_l} \wh\Vb_{l \setminus k}}_2,
    \\
    &\nnbr{\Eb_{33}}_2 = \nbr{\wh\Ub^*_{m \setminus l} \Ab \wh\Vb_{n \setminus l}} = \nbr{\rbr{\Ab - \wh\Ab_l} \rbr{\Ib_n - \wh\Vb_l \wh\Vb_l^*}}_2 = \nbr{\Ab - \Ab \wh\Vb_l \wh\Vb_l^*}_2,
\end{align*}    
where the construction of $\rbr{\Ab - \wh\Ab_l} \wh\Vb_l$, $\rbr{\Ab - \wh\Ab_l} \wh\Vb_{l \setminus k}$, and $\Ab - \Ab \wh\Vb_l \wh\Vb_l^*$ takes $O\rbr{mnl}$ time, while the respective norms can be estimated efficiently via sampling (\cf \cite{martinsson2020randomized} Algorithm 1-4, \cite{meyer2020hutch} Algorithm 1-3, etc.). 

The proof of \Cref{thm:separate_gap_bounds} is 
{similar to that of \cite[Thm.~6.1]{nakatsukasa2020sharp}}.  

\begin{proof}[Proof of \Cref{thm:separate_gap_bounds}]
Let $\wt\Ub_{11} \dfeq \wh\Ub^*_{k} \Ub_k$, $\wt\Ub_{21} \dfeq \wh\Ub^*_{l \setminus k} \Ub_k$, $\wt\Ub_{31} \dfeq \wh\Ub^*_{m \setminus l} \Ub_k$, and $\wt\Vb_{11} \dfeq \wh\Vb^*_{k} \Vb_k$, $\wt\Vb_{21} \dfeq \wh\Vb^*_{l \setminus k} \Vb_k$, and $\wt\Vb_{31} \dfeq \wh\Vb^*_{n \setminus l} \Vb_k$.
We start by expressing the canonical angles in terms of $\wt\Ub_{31}$ and $\wt\Ub_{21}$:
\begin{align*}
    \sin\angle\rbr{\Ub_k, \wh\Ub_l} = 
    \sigma\rbr{\wh\Ub^*_{n \setminus l} \Ub_k} = 
    \sigma\rbr{\wt\Ub_{31}},
\end{align*}
\begin{align*}
    \sin\angle\rbr{\Vb_k, \wh\Vb_l} = 
    \sigma\rbr{\wh\Vb^*_{n \setminus l} \Vb_k} = 
    \sigma\rbr{\wt\Vb_{31}},
\end{align*}
\begin{align*}
    \sin\angle\rbr{\Ub_k, \wh\Ub_k} = 
    \sigma\rbr{\bmat{\wh\Ub^*_{l \setminus k} \\ \wh\Ub^*_{m \setminus l}} \Ub_k} = 
    \sigma\rbr{\bmat{\wt\Ub_{21} \\ \wt\Ub_{31}}},
\end{align*}
\begin{align*}
    \sin\angle\rbr{\Vb_k, \wh\Vb_k} = 
    \sigma\rbr{\bmat{\wh\Vb^*_{l \setminus k} \\ \wh\Vb^*_{n \setminus l}} \Vb_k} = 
    \sigma\rbr{\bmat{\wt\Vb_{21} \\ \wt\Vb_{31}}}.
\end{align*}
By observing that for any rank-$l$ approximation in the SVD form $\Ab \approx \wh\Ub_l \wh\Sigmab_l \wh\Vb_l^*$,
\begin{align*}
    \Ab = 
    \wh\Ub_l \wh\Sigmab_l \wh\Vb_l^* + \wh\Ub_{m \setminus l} \wh\Ub_{m \setminus l}^* \Ab = 
    \bmat{\wh\Ub_k & \wh\Ub_{l \setminus k} & \wh\Ub_{m \setminus l}}
    \bmat{\wh\Sigmab_k & \b0 & \b0 \\
    \b0 & \wh\Sigmab_{l \setminus k} & \b0 \\
    \Eb_{31} & \Eb_{32} & \Eb_{33}}
    \bmat{\wh\Vb_k^* \\ \wh\Vb_{l \setminus k}^* \\ \wh\Vb_{n \setminus l}^*}
,\end{align*}  
we left multiply $\wh\Ub^* = \sbr{\wh\Ub^*_k; \wh\Ub^*_{l \setminus k}; \wh\Ub^*_{m \setminus l}} \in \C^{m \times m}$ and right multiply $\Vb_k$ on both sides and get
\begin{align}\label{eq:proof_mixed_gap_bound_submatrix_relation_31}
    \bmat{\wt\Ub_{11} \\ \wt\Ub_{21} \\ \wt\Ub_{31}} \Sigmab_k = 
    \bmat{\wh\Sigmab_k & \b0 & \b0 \\
    \b0 & \wh\Sigmab_{l \setminus k} & \b0 \\
    \Eb_{31} & \Eb_{32} & \Eb_{33}}
    \bmat{\wt\Vb_{11} \\ \wt\Vb_{21} \\ \wt\Vb_{31}}
,\end{align}    
while left multiplying $\Ub_k^*$ and right multiplying $\wh\Vb = \sbr{\wh\Vb_k, \wh\Vb_{l \setminus k}, \wh\Vb_{n \setminus l}}$ yield
\begin{align}\label{eq:proof_mixed_gap_bound_submatrix_relation_13}
    \Sigmab_k \bmat{\wt\Vb_{11}^* & \wt\Vb_{21}^* & \wt\Vb_{31}^*} =
    \bmat{\wt\Ub_{11}^* & \wt\Ub_{21}^* & \wt\Ub_{31}^*}
    \bmat{\wh\Sigmab_k & \b0 & \b0 \\
    \b0 & \wh\Sigmab_{l \setminus k} & \b0 \\
    \Eb_{31} & \Eb_{32} & \Eb_{33}}
.\end{align}

\paragraph{Bounding $\sigma\rbr{\wt\Ub_{31}}$ and $\sigma\rbr{\wt\Vb_{31}}$.}
To bound $\sigma\rbr{\wt\Ub_{31}}$, we observe the following from the third row of \Cref{eq:proof_mixed_gap_bound_submatrix_relation_31} and the third column of \Cref{eq:proof_mixed_gap_bound_submatrix_relation_13}, 
\begin{align*}
    \wt\Ub_{31} \Sigmab_k = \Eb_{31} \wt\Vb_{11} + \Eb_{32}\wt\Vb_{21} + \Eb_{33}\wt\Vb_{31},
    \quad
    \wt\Ub_{31}^* \Eb_{33} = \Sigmab_k \wt\Vb_{31}^*.
\end{align*}
Noticing that $\sbr{\wt\Vb_{11}; \wt\Vb_{21}} = \wh\Vb_l^* \Vb_k$ and $\nbr{\wh\Vb_l^* \Vb_k}_2 \le 1$, we have
\begin{align*}
    \nnbr{\wt\Ub_{31} \Sigmab_k} 
    \le &\nnbr{\sbr{\Eb_{31},\Eb_{32}}} \nbr{\wh\Vb_l^* \Vb_k}_2 + \nnbr{\Eb_{33} \Eb_{33}^* \wt\Ub_{31} \Sigmab_k^{-1}}
    \\
    \le &\nnbr{\sbr{\Eb_{31},\Eb_{32}}} + \frac{\nbr{\Eb_{33}}_2^2}{\sigma_k^2} \nnbr{\wt\Ub_{31} \Sigmab_k}
\end{align*} 
for all $i \in [\min\rbr{k,m-l}]$, which implies that
\begin{align*}
    \nnbr{\wt\Ub_{31} \Sigmab_k} 
    \le \rbr{1-\frac{\nbr{\Eb_{33}}_2^2}{\sigma_k^2}}^{-1} \nnbr{\sbr{\Eb_{31},\Eb_{32}}}
    = \sigma_k \cdot \frac{\nnbr{\sbr{\Eb_{31},\Eb_{32}}}}{\Gamma_1},
\end{align*}
and leads to 
\begin{align*}
    &\nnbr{\wt\Ub_{31}} 
    \le \frac{1}{\sigma_k} \nnbr{\wt\Ub_{31} \Sigmab_k} 
    \le \frac{\nnbr{\sbr{\Eb_{31},\Eb_{32}}}}{\Gamma_1},
    \\
    &\sigma_i\rbr{\wt\Ub_{31}}
    \le \frac{1}{\sigma_{k-i+1}} \nbr{\wt\Ub_{31} \Sigmab_k}_2 
    \le \frac{\sigma_k}{\sigma_{k-i+1}} \cdot \frac{\nbr{\sbr{\Eb_{31}, \Eb_{32}}}_2}{\Gamma_1}
    \quad \forall~i \in [k],
\end{align*}
where the second line follows from \Cref{lemma:individual_sval_of_product}. These lead to \Cref{eq:separate_gap_Uk_Ul,eq:separate_gap_Uk_Ul_anglewise}

To bound $\sigma\rbr{\wt\Vb_{31}}$, we use the relation $\wt\Ub_{31}^* \Eb_{33} = \Sigmab_k \wt\Vb_{31}^*$,
\begin{align*}
    &\nnbr{\wt\Vb_{31}} \le 
    \frac{\nbr{\Eb_{33}}_2}{\sigma_k} \nnbr{\wt\Ub_{31}} \le 
    \frac{\nnbr{\sbr{\Eb_{31}, \Eb_{32}}}}{\Gamma_2},
    \\
    &\sigma_i\rbr{\wt\Vb_{31}} 
    \le \frac{1}{\sigma_k} \sigma_i\rbr{\Eb_{33}^* \wt\Ub_{31}}
    \le \frac{\nbr{\Eb_{33}}_2}{\sigma_k} \sigma_i\rbr{\wt\Ub_{31}}
    \le \frac{\sigma_k}{\sigma_{k-i+1}} \cdot \frac{\nbr{\sbr{\Eb_{31}, \Eb_{32}}}_2}{\Gamma_2}
    \quad \forall~i \in[k].
\end{align*}    
We therefore have upper bounds \Cref{eq:separate_gap_Vk_Vl,eq:separate_gap_Vk_Vl_anglewise}.

\paragraph{Bounding $\sigma\rbr{\wt\Ub_{21}}$ and $\sigma\rbr{\wt\Vb_{21}}$.}
To bound $\sigma\rbr{\wt\Ub_{21}}$, we leverage the second row of \Cref{eq:proof_mixed_gap_bound_submatrix_relation_31} and the second column of \Cref{eq:proof_mixed_gap_bound_submatrix_relation_13},
\begin{align*}
    \wt\Ub_{21} \Sigmab_k = \wh\Sigmab_{l \setminus k} \wt\Vb_{21}, 
    \quad
    \Sigmab_k \wt\Vb_{21}^* = \wt\Ub_{21}^* \wh\Sigmab_{l \setminus k} + \wt\Ub_{31}^* \Eb_{32}.
\end{align*}    
Up to rearrangement, we observe that
\begin{align*}
    \nnbr{\wt\Ub_{21} \Sigmab_k} =
    & \nnbr{\wh\Sigmab_{l \setminus k} \rbr{\wh\Sigmab_{l \setminus k} \wt\Ub_{21} + \Eb_{32}^* \wt\Ub_{31}} \Sigmab_k^{-1}}
    \\
    \le &\frac{\nbr{\wh\Sigmab_{l \setminus k}}_2^2}{\sigma_k^2} \nnbr{\wt\Ub_{21} \Sigmab_k} + \frac{\nbr{\wh\Sigmab_{l \setminus k}}_2}{\sigma_k} \nnbr{\Eb_{32}^* \wt\Ub_{31}},
\end{align*}
which implies that
\begin{align*}
    &\nnbr{\wt\Ub_{21} \Sigmab_k} 
    \le \rbr{1-\frac{\wh\sigma_{k+1}^2}{\sigma_k^2}}^{-1} \frac{\wh\sigma_{k+1}}{\sigma_k} \nnbr{\Eb_{32}^* \wt\Ub_{31}}
    \le \sigma_k \cdot \frac{\nbr{\Eb_{32}}_2}{\gamma_2} \nnbr{\wt\Ub_{31}},
\end{align*}
and therefore, with \Cref{lemma:individual_sval_of_product}, for all $i \in [k]$,
\begin{align*}
    &\nnbr{\wt\Ub_{21}}
    \le \frac{1}{\sigma_k} \nnbr{\wt\Ub_{21} \Sigmab_k} 
    \le \frac{\nbr{\Eb_{32}}_2}{\gamma_2} \nnbr{\wt\Ub_{31}},
    \\
    &\sigma_i\rbr{\wt\Ub_{21}} 
    \le \frac{1}{\sigma_{k-i+1}} \nbr{\wt\Ub_{21} \Sigmab_k}_2
    \le \frac{\sigma_k}{\sigma_{k-i+1}} \cdot \frac{\nbr{\Eb_{32}}_2}{\gamma_2} \nbr{\wt\Ub_{31}}_2.
\end{align*}
Then, with the stronger inequality for the spectral or Frobenius norm $\nbr{\cdot}_{\xi}$ ($\xi=2,F$), 
\begin{align*}
    \nbr{\bmat{\wt\Ub_{21} \\ \wt\Ub_{31}}}_\xi 
    \le &\sqrt{\nbr{\wt\Ub_{31}}_\xi^2 + \nbr{\wt\Ub_{21}}_\xi^2} 
    \le \nbr{\wt\Ub_{31}}_\xi \sqrt{1 + \frac{\nbr{\Eb_{32}}_2^2}{\gamma_2^2}}
    \\
    \le &\frac{\nbr{\sbr{\Eb_{31}, \Eb_{32}}}_\xi}{\Gamma_1} 
    \sqrt{1 + \frac{\nbr{\Eb_{32}}_2^2}{\gamma_2^2}}
\end{align*} 
leads to \Cref{eq:separate_gap_Uk_Uk}.
Meanwhile for \Cref{eq:separate_gap_Uk_Uk_anglewise}, the individual canonical angles are upper bounded by
\begin{align*}
    \sigma_i\rbr{\bmat{\wt\Ub_{21} \\ \wt\Ub_{31}}} 
    = &\sqrt{\sigma_i\rbr{\wt\Ub_{21}^* \wt\Ub_{21} + \wt\Ub_{31}^* \wt\Ub_{31}}}
    \\
    \rbr{\text{\Cref{lemma:individual_sval_of_sum}}} \quad
    \le &\sqrt{\nbr{\wt\Ub_{31}}_2^2 + \sigma_i\rbr{\wt\Ub_{21}}^2}
    \\
    \le &\nbr{\wt\Ub_{31}}_2 \sqrt{1 + \rbr{\frac{\sigma_k}{\sigma_{k-i+1}} \cdot \frac{\nbr{\Eb_{32}}_2}{\gamma_2}}^2 }
    \\
    \le &\frac{\nbr{\sbr{\Eb_{31}, \Eb_{32}}}_2}{\Gamma_1} \sqrt{1 + \rbr{\frac{\sigma_k}{\sigma_{k-i+1}} \cdot \frac{\nbr{\Eb_{32}}_2}{\gamma_2}}^2 }.
\end{align*}
Analogously, by observing that 
\begin{align*}
    \nnbr{\wt\Vb_{21} \Sigmab_k} =
    & \nnbr{\wh\Sigmab_{l \setminus k}^2 \wt\Vb_{21} \Sigmab_k^{-1} + \Eb_{32}^* \wt\Ub_{31}}
    \le \frac{\nbr{\wh\Sigmab_{l \setminus k}}_2^2}{\sigma_k^2} \nnbr{\wt\Vb_{21} \Sigmab_k} + \nnbr{\Eb_{32}^* \wt\Ub_{31}},
\end{align*}
we have that, by \Cref{lemma:individual_sval_of_product}, for all $i \in [k]$,
\begin{align*}
    &\nnbr{\wt\Vb_{21} \Sigmab_k} 
    \le \rbr{1-\frac{\wh\sigma_{k+1}^2}{\sigma_k^2}}^{-1} \nnbr{\Eb_{32}^* \wt\Ub_{31}}
    \le \sigma_k \cdot \frac{\nbr{\Eb_{32}}_2}{\gamma_1} \nnbr{\wt\Ub_{31}},
    \\
    &\nnbr{\wt\Vb_{21}} 
    \le \frac{1}{\sigma_k} \nnbr{\wt\Vb_{21} \Sigmab_k} 
    \le \frac{\nbr{\Eb_{32}}_2}{\gamma_1} \nnbr{\wt\Ub_{31}},
    \\
    &\sigma_i\rbr{\wt\Vb_{21}}
    \le \frac{1}{\sigma_{k-i+1}} \nbr{\wt\Vb_{21} \Sigmab_k}_2
    \le \frac{\sigma_k}{\sigma_{k-i+1}} \cdot \frac{\nbr{\Eb_{32}}_2}{\gamma_1} \nbr{\wt\Ub_{31}}_2,
\end{align*}
and therefore for the spectral or Frobenius norm $\nbr{\cdot}_{\xi}$ ($\xi=2,F$),
\begin{align*}
    \nbr{\bmat{\wt\Vb_{21} \\ \wt\Vb_{31}}}_\xi
    \le &\sqrt{\nbr{\wt\Vb_{21}}_\xi^2 + \nbr{\wt\Vb_{31}}_\xi^2}  
    \le \nbr{\wt\Ub_{31}}_\xi \sqrt{\frac{\nbr{\Eb_{32}}_2^2}{\gamma_1^2} + \frac{\nbr{\Eb_{33}}_2^2}{\sigma_k^2}}
    \\
    \le &\frac{\nbr{\sbr{\Eb_{31}, \Eb_{32}}}_\xi}{\Gamma_1} \sqrt{\frac{\nbr{\Eb_{32}}_2^2}{\gamma_1^2} + \frac{\nbr{\Eb_{33}}_2^2}{\sigma_k^2}},
\end{align*}  
which leads to \Cref{eq:separate_gap_Vk_Vk}.
Additionally for individual canonical angles $i \in [k]$,
\begin{align*}
    \sigma_i\rbr{\bmat{\wt\Vb_{21} \\ \wt\Vb_{31}}} 
    \le &\sqrt{\sigma_i\rbr{\wt\Vb_{21}}^2 + \nbr{\wt\Vb_{31}}_2^2}
    \quad \rbr{\text{\Cref{lemma:individual_sval_of_sum}}}
    \\
    \le &\nbr{\wt\Ub_{31}}_2 \sqrt{\rbr{\frac{\sigma_k}{\sigma_{k-i+1}} \cdot \frac{\nbr{\Eb_{32}}_2}{\gamma_1}}^2 + \rbr{\frac{\nbr{\Eb_{33}}_2}{\sigma_k}}^2 }
    \\
    \le &\frac{\nbr{\sbr{\Eb_{31}, \Eb_{32}}}_2}{\Gamma_1} \sqrt{\rbr{\frac{\sigma_k}{\sigma_{k-i+1}} \cdot \frac{\nbr{\Eb_{32}}_2}{\gamma_1}}^2 + \rbr{\frac{\nbr{\Eb_{33}}_2}{\sigma_k}}^2 }.
\end{align*}
{This yields \Cref{eq:separate_gap_Vk_Vk_anglewise} and completes the proof.}
\end{proof}

\section{Numerical Experiments}\label{sec:experiments}

First, we present numerical comparisons among different canonical angle upper bounds and the unbiased estimates on the left and right leading singular subspaces of various synthetic and real data matrices. We start by describing the target matrices in \Cref{subsec:target_matrix}.
In \Cref{subsec:experiment_canonical_angle}, we discuss the performance of the unbiased estimates, as well as the relative tightness of the canonical angle bounds, for different algorithmic choices based on the numerical observations.
Second, in \Cref{subsec:balance_oversampling_power_iterations_example}, we present an illustrative example that {provides} insight into the balance between oversampling and power iterations brought by the space-agnostic bounds.

\subsection{Target Matrices}\label{subsec:target_matrix}
We consider several different classes of target matrices, including some synthetic random matrices with 
different spectral patterns, as well as an empirical dataset, as summarized below:
\begin{enumerate}
    \item A random sparse non-negative (SNN) matrix~\cite{sorensen2016deim} $\Ab$ of size $m \times n$ takes the form,
    \begin{equation}
        \label{eq:snn-def}
        \Ab = \text{SNN}\rbr{a,r_1}:= \sum_{i=1}^{r_1} \frac{a}{i} \xb_i \yb_i^T + \sum_{i=r_1+1}^{\min\rbr{m,n}} \frac{1}{i} \xb_i \yb_i^T
    \end{equation}
    where $a>1$ and $r_1 < \min\rbr{m,n}$ control the spectral decay, and $\xb_i \in \C^m$, $\yb_i \in \C^{n}$ are random sparse vectors with non-negative entries.
    In the experiments, we test on two random SNN matrices of size $500 \times 500$ with $r_1 = 20$ and $a=1,100$, respectively.
    
    \item Gaussian dense matrices with controlled spectral decay are randomly generated via {a} similar construction {to} the SNN matrix, with $\xb_j \in \SSS^{m-1}$ and $\yb_j \in \SSS^{n-1}$ in (\ref{eq:snn-def}) replaced by uniformly random dense orthonormal vectors. The generating procedures for $\Ab \in \C^{m \times n}$ with rank $r \leq \min\rbr{m,n}$ can be summarized as following:
    \begin{enumerate}[label=(\roman*)]
        \item Draw Gaussian random matrices, $\Gb_m \in \C^{m \times r}$ and $\Gb_n \in \C^{m \times r}$.
        \item Compute $\Ub = \text{ortho}(\Gb_m) \in \C^{m \times r}$, $\Vb = \text{ortho}(\Gb_n) \in \C^{n \times r}$ as orthonormal bases. 
        \item Given the spectrum $\Sigmab = \diag\rbr{\sigma_1,\dots,\sigma_r}$, we construct $\Ab = \Ub \Sigmab \Vb^*$.
    \end{enumerate}
    In the experiments, we consider two types of spectral decay: 
    \begin{enumerate}[label=(\roman*)]
        \item slower decay with $r_1=20$, $\sigma_i = 1$ for all $i \in [r_1]$, $\sigma_i=1/\sqrt{i-r_1+1}$ for all $i=r_1+1,\dots,r$, and
        \item faster decay with $r_1=20$, $\sigma_i = 1$ for all $i \in [r_1]$, $\sigma_i=\max(0.99^{i-r_1}, 10^{-3})$ for all $i=r_1+1,\dots,r$.
    \end{enumerate}
    
    \item MNIST training set consists of $60,000$ images of hand-written digits from $0$ to $9$. Each image is of size $28 \times 28$. We form the target matrices by uniformly sampling $N = 800$ images from the MNIST training set. The images are flattened and normalized to form a full-rank matrix of size $N \times d$ where $d = 784$ is the size of the flattened images, with entries bounded in $[0,1]$. The nonzero entries take approximately $20\%$ of the matrix for both the training and the testing sets. 
\end{enumerate}

\subsection{Canonical Angle Bounds and Estimates}\label{subsec:experiment_canonical_angle}

Now we present numerical comparisons of the performance of the canonical angle bounds and the unbiased estimates under different algorithmic choices.
Considering the scenario where the true matrix spectra may not be available in practice, we calculate two sets of upper bounds, one from the true spectra $\Sigmab \in \C^{r \times r}$ and the other from the $l$ approximated singular values from \Cref{algo:rsvd_power_iterations}. 
For the later, we pad the approximated spectrum $\wh\Sigmab_l=\diag\rbr{\wh\sigma_1,\dots,\wh\sigma_l}$ with {$r-l$ copies of} $\wh\sigma_l$ and evaluate the canonical angle bounds and estimates with $\wt\Sigmab=\diag\rbr{\wh\sigma_1,\dots,\wh\sigma_l,\dots,\wh\sigma_l} \in \C^{r \times r}$.

\begin{figure}[!ht]
    \centering
    \includegraphics[width=\linewidth]{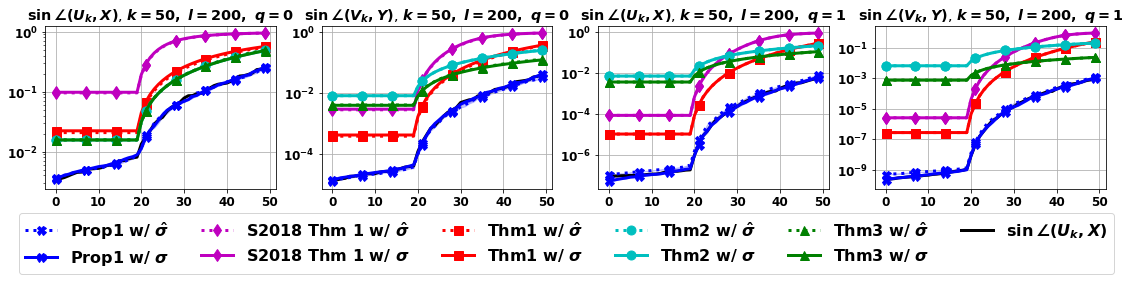}
    \caption{Synthetic Gaussian with the slower spectral decay. $k=50$, $l=200$, $q=0,1$.}
    \label{fig:Gaussian-poly1-kl_k50_l200}
\end{figure}

\begin{figure}[!ht]
    \centering
    \includegraphics[width=\linewidth]{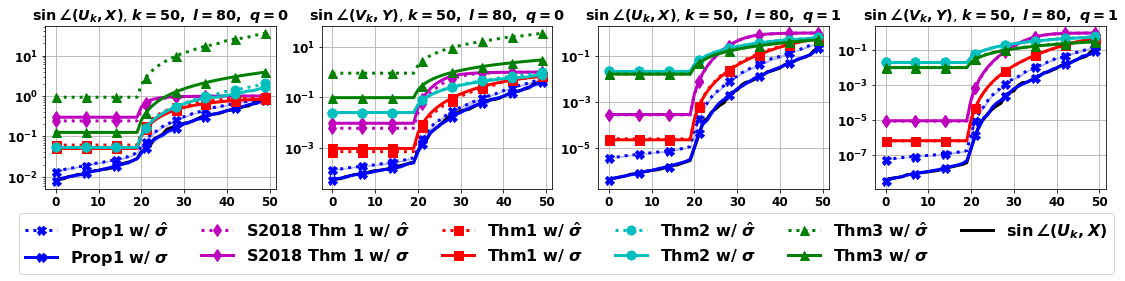}
    \caption{Synthetic Gaussian with the slower spectral decay. $k=50$, $l=80$, $q=0,1$.}
    \label{fig:Gaussian-poly1-kl_k50_l80}
\end{figure}

\begin{figure}[!ht]
    \centering
    \includegraphics[width=\linewidth]{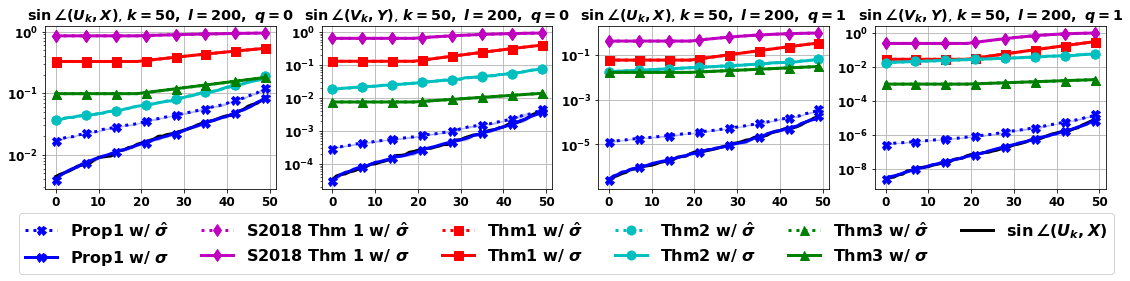}
    \caption{Synthetic Gaussian with the faster spectral decay. $k=50$, $l=200$, $q=0,1$.}
    \label{fig:Gaussian-exp-kl_k50_l200}
\end{figure}

\begin{figure}[!ht]
    \centering
    \includegraphics[width=\linewidth]{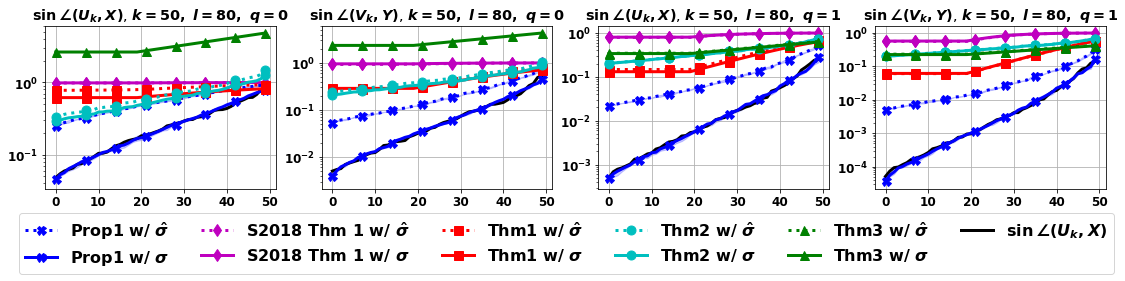}
    \caption{Synthetic Gaussian with the faster spectral decay. $k=50$, $l=80$, $q=0,1$.}
    \label{fig:Gaussian-exp-kl_k50_l80}
\end{figure}

\begin{figure}[!ht]
    \centering
    \includegraphics[width=\linewidth]{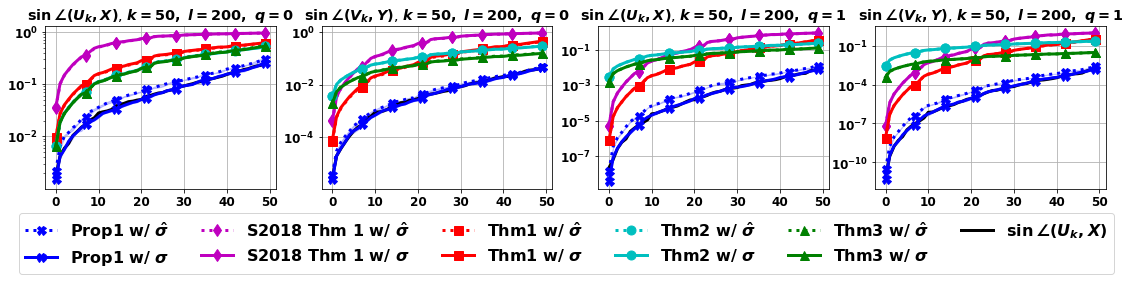}
    \caption{SNN with $r_1=20$, $a=1$. $k=50$, $l=200$, $q=0,1$.}
    \label{fig:SNN-m500n500r20a1-kl_k50_l200}
\end{figure}

\begin{figure}[!ht]
    \centering
    \includegraphics[width=\linewidth]{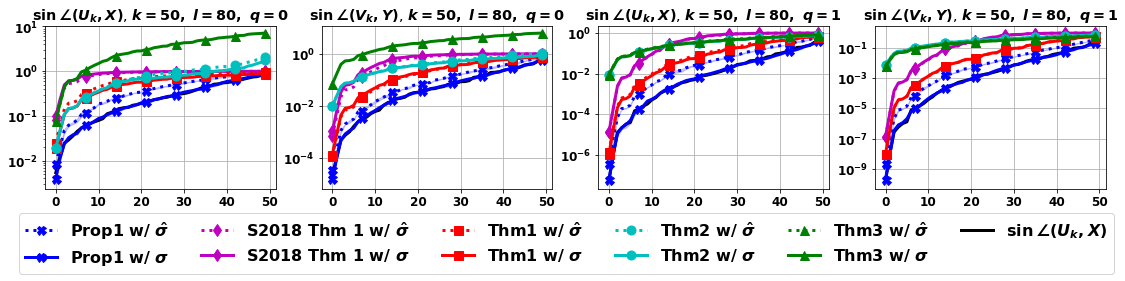}
    \caption{SNN with $r_1=20$, $a=1$. $k=50$, $l=80$, $q=0,1$.}
    \label{fig:SNN-m500n500r20a1-kl_k50_l80}
\end{figure}

\begin{figure}[!ht]
    \centering
    \includegraphics[width=\linewidth]{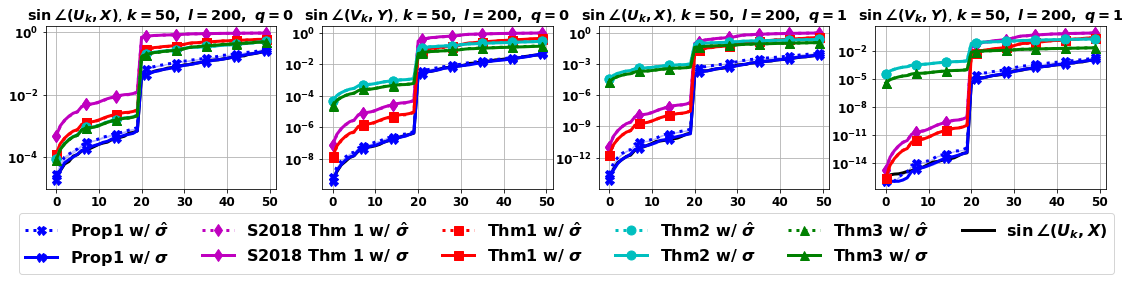}
    \caption{SNN with $r_1=20$, $a=100$. $k=50$, $l=200$, $q=0,1$.}
    \label{fig:SNN-m500n500r20a100-kl_k50_l200}
\end{figure}

\begin{figure}[!ht]
    \centering
    \includegraphics[width=\linewidth]{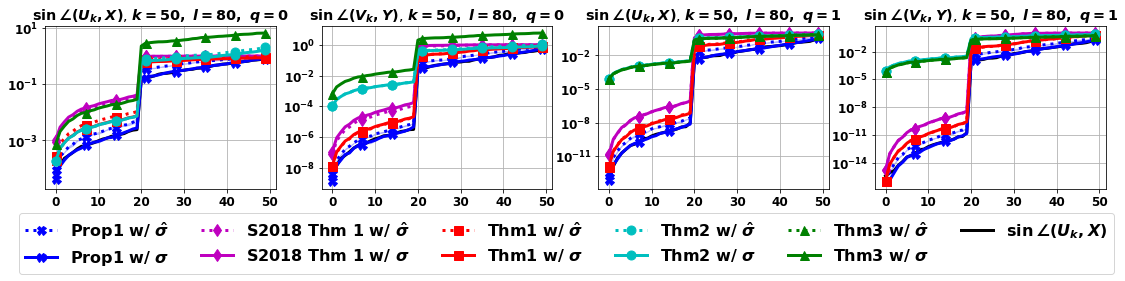}
    \caption{SNN with $r_1=20$, $a=100$. $k=50$, $l=80$, $q=0,1$.}
    \label{fig:SNN-m500n500r20a100-kl_k50_l80}
\end{figure}

\begin{figure}[!ht]
    \centering
    \includegraphics[width=\linewidth]{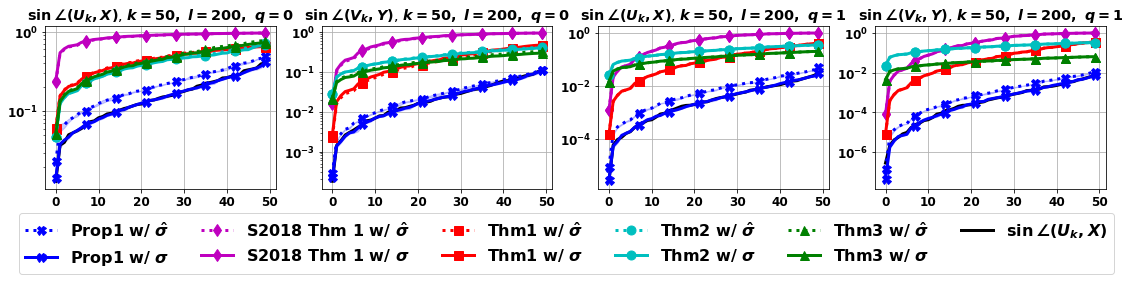}
    \caption{$800$ randomly sampled images from the MNIST training set. $k=50$, $l=200$, $q=0,1$.}
    \label{fig:mnist-train800-kl_k50_l200}
\end{figure}

\begin{figure}[!ht]
    \centering
    \includegraphics[width=\linewidth]{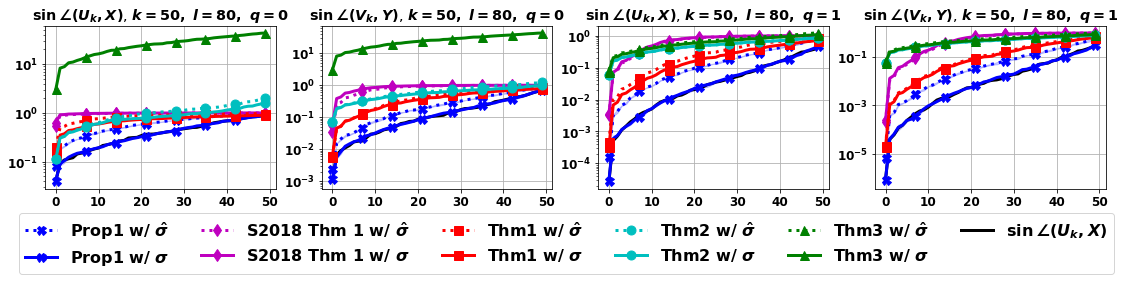}
    \caption{$800$ randomly sampled images from the MNIST training set. $k=50$, $l=80$, $q=0,1$.}
    \label{fig:mnist-train800-kl_k50_l80}
\end{figure}

\begin{figure}[!ht]
    \centering
    \includegraphics[width=\linewidth]{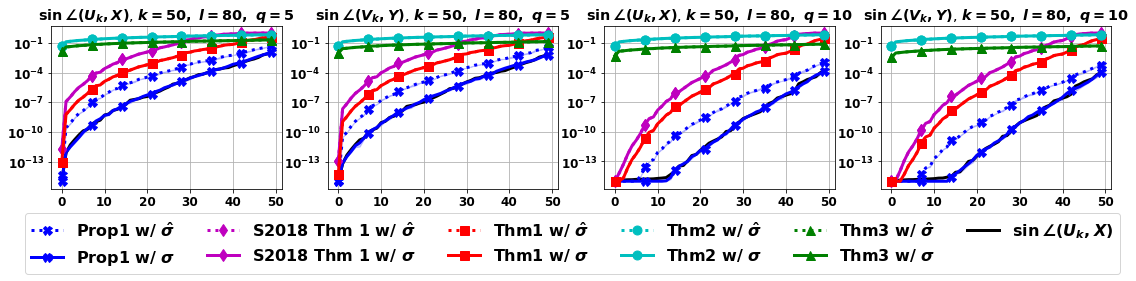}
    \caption{$800$ randomly sampled images from the MNIST training set. $k=50$, $l=80$, $q=5,10$.}
    \label{fig:mnist-train800-kl_k50_l80_p5-10}
\end{figure}

From \Cref{fig:Gaussian-poly1-kl_k50_l200} to \Cref{fig:mnist-train800-kl_k50_l80_p5-10},
\begin{enumerate}
    \item {
        Red lines and dashes (\b{Thm1 w/ $\sigmab$ and $\wh\sigmab$})
    } represent the space-agnostic probabilistic bounds in \Cref{thm:space_agnostic_bounds} evaluated with the true (lines) and approximated (dashes) singular values, $\Sigmab$, and $\wt\Sigmab$, respectively, where we simply ignore tail decay and suppress constants for the distortion factors and set $\eps_1 = \sqrt{\frac{k}{l}}$ and $\eps_2=\sqrt{\frac{l}{r-k}}$ in {\Cref{eq:space_agnostic_left,eq:space_agnostic_right}};
    
    \item {
        Blue lines and dashes (\b{Prop1 w/ $\sigmab$ and $\wh\sigmab$})
    } represent the unbiased space-agnostic estimates in \Cref{prop:space_agnostic_estimation} (averages of $N=3$ independent trials with 
    {blue shades} marking the corresponding minima and maxima in the trials) evaluated with the true (lines) and approximated (dashes) singular values, $\Sigmab$ and $\wt\Sigmab$, respectively;
    
    \item {
        Cyan lines and dashes (\b{Thm2 w/ $\sigmab$ and $\wh\sigmab$})
    } represent the posterior residual-based bounds in \Cref{thm:with_oversmp_computable_det} evaluated with the true (lines) and approximated (dashes) singular values, $\Sigmab$, and $\wt\Sigmab$, respectively;
    
    \item {
        Green lines and dashes (\b{Thm3 w/ $\sigmab$ and $\wh\sigmab$})
    } represent the posterior residual-based bounds \Cref{eq:separate_gap_Uk_Ul} and \Cref{eq:separate_gap_Vk_Vl} in \Cref{thm:separate_gap_bounds} evaluated with the true (lines) and approximated (dashes) singular values, $\Sigmab$, and $\wt\Sigmab$, respectively;
    
    \item {
        Magenta lines and dashes (\b{S2018 Thm1 w/ $\sigmab$ and $\wh\sigmab$})
    } represent the upper bounds in \cite{saibaba2018randomized} Theorem 1 (\ie, \Cref{eq:saibaba2018_thm1_left} and \Cref{eq:saibaba2018_thm1_right}) evaluated with the true (lines) and approximated (dashes) singular values, $\Sigmab$ and $\wt\Sigmab$, respectively, and the unknown true singular subspace such that $\Omegab_1 = \Vb_k^*\Omegab$ and $\Omegab_2 = \Vb_{r \setminus k}^*\Omegab$;
    
    \item {
        Black lines
    } mark the true canonical angles $\sin\angle\rbr{\Ub_k, \wh\Ub_l}$.
\end{enumerate}

We recall from \Cref{remark:compare_left_right} that, by the algorithmic construction of \Cref{algo:rsvd_power_iterations}, for given $q$, canonical angles of the right singular spaces $\sin\angle\rbr{\Vb_k, \wh\Vb_l}$ are evaluated with half more power iterations than those of the left singular spaces $\sin\angle\rbr{\Ub_k, \wh\Ub_l}$. 
That is, $\sin\angle\rbr{\Ub_k, \wh\Ub_l}$, $\sin\angle\rbr{\Vb_k, \wh\Vb_l}$ with $q=0,1$ in \Cref{fig:Gaussian-poly1-kl_k50_l200}-\Cref{fig:mnist-train800-kl_k50_l80} can be viewed as canonical angles of randomized subspace approximation with $q=0,0.5,1,1.5$ power iterations, respectively; while \Cref{fig:mnist-train800-kl_k50_l80_p5-10} corresponds to randomized subspace approximations constructed with $q=5,5.5,10,10.5$ power iterations analogously.

For each set of upper bounds/unbiased estimates, we observe the following.
\begin{enumerate}
    \item The {
        space-agnostic probabilistic bounds (\b{Thm1 w/ $\sigmab$ and $\wh\sigmab$})
    } in \Cref{thm:space_agnostic_bounds} provide tighter statistical guarantees for the canonical angles of all the tested target matrices in comparison to those from {
        \cite{saibaba2018randomized} Theorem 1 (\b{S2018 Thm1 w/ $\sigmab$ and $\wh\sigmab$})
    }, as explained in \Cref{remark:prior_bound_compare_exist}.
    
    \item The 
    {
    unbiased estimators (\b{Prop1 w/ $\sigmab$ and $\wh\sigmab$})} in \Cref{prop:space_agnostic_estimation} yield accurate approximations for the true canonical angles on all the tested target matrices with as few as $N=3$ trials, while enjoying good empirical concentrate. As a potential drawback, the accuracy of the unbiased estimates may be compromised when approaching the machine epsilon (as observed in \Cref{fig:SNN-m500n500r20a100-kl_k50_l200}, $\sin\angle\rbr{\Vb_k,\Yb},~q=1$).

    \item The {
        posterior residual-based bounds (\b{Thm2 w/ $\sigmab$ and $\wh\sigmab$})
    } in \Cref{thm:with_oversmp_computable_det} are relatively tighter among the compared bounds in the setting with larger oversampling ($l=4k$), and no power iterations ($\sin\angle\rbr{\Ub_k,\wh\Ub_l}$ with $q=0$) or exponential spectral decay (\Cref{fig:Gaussian-exp-kl_k50_l200})
    
    \item The {
        posterior residual-based bounds \Cref{eq:separate_gap_Uk_Ul,eq:separate_gap_Vk_Vl} (\b{Thm3 w/ $\sigmab$ and $\wh\sigmab$})
    } in \Cref{thm:separate_gap_bounds} share the similar relative tightness as the posterior residual-based bounds in \Cref{thm:with_oversmp_computable_det}, but are slightly more sensitive to power iterations. As shown in \Cref{fig:Gaussian-exp-kl_k50_l200}, on a target matrix with exponential spectral decay and large oversampling ($l=4k$), \Cref{thm:separate_gap_bounds} gives tighter posterior guarantees when $q>0$. However, with the addition assumptions $\sigma_k > \wh\sigma_{k+1}$ and $\sigma_k > \nbr{\Eb_{33}}_2$, \Cref{thm:separate_gap_bounds} usually requires large oversampling ($l=4k$) in order to provide non-trivial (\ie, {within} the range $[0,1]$) bounds.
\end{enumerate}

For target matrices with various patterns of spectral decay, with different combinations of oversampling ($l=1.6k,4k$) and power iterations ($q=0,1$), we make the following observations on the relative tightness of upper bounds in \Cref{thm:space_agnostic_bounds}, \Cref{thm:with_oversmp_computable_det}, and \Cref{thm:separate_gap_bounds}.
\begin{enumerate}
    \item For target matrices with subexponential spectral decay, the space-agnostic bounds in \Cref{thm:space_agnostic_bounds} are relatively tighter in most tested settings, except for the setting in \Cref{fig:Gaussian-exp-kl_k50_l200} with larger oversampling ($l=200$) and no power iterations ($q=0$).
    
    \item For target matrices with exponential spectral decay (\Cref{fig:Gaussian-exp-kl_k50_l200} and \Cref{fig:Gaussian-exp-kl_k50_l80}), the posterior residual-based bounds in \Cref{thm:with_oversmp_computable_det} and \Cref{thm:separate_gap_bounds} tend to be relatively tighter, especially with large oversampling (\Cref{fig:Gaussian-exp-kl_k50_l200} with $l=4k$). Meanwhile, with power iterations $q>0$, \Cref{thm:separate_gap_bounds} tend to be tighter than \Cref{thm:with_oversmp_computable_det}.
\end{enumerate} 

Furthermore, considering the scenario with an unknown true spectrum $\Sigmab$, we plot estimations for the upper bounds in \Cref{thm:space_agnostic_bounds}, \Cref{thm:with_oversmp_computable_det}, \Cref{thm:separate_gap_bounds}, and the unbiased estimates in \Cref{prop:space_agnostic_estimation}, evaluated with a padded approximation of the spectrum $\wt\Sigmab=\diag\rbr{\wh\sigma_1,\dots,\wh\sigma_l,\dots,\wh\sigma_l}$, which leads to mild overestimations, as marked in dashes from \Cref{fig:Gaussian-poly1-kl_k50_l200} to \Cref{fig:mnist-train800-kl_k50_l80_p5-10}.

\subsection{Balance between Oversampling and Power Iterations}\label{subsec:balance_oversampling_power_iterations_example}

To illustrate the insight cast by \Cref{thm:space_agnostic_bounds} on the balance between oversampling and power iterations, we consider the following synthetic example.
\begin{example}
Given a target rank $k \in \N$, we consider a simple synthetic matrix $\Ab \in \C^{r \times r}$ of size $r = (1+\beta)k$, consisting of random singular subspaces (generated by orthonormalizing Gaussian matrices) and a step spectrum:
\begin{align*}
    \sigma\rbr{\Ab}=\diag(\underbrace{\sigma_1,\dots,\sigma_1}_{\sigma_i=\sigma_1 ~\forall~ i \le k}, \underbrace{\sigma_{k+1},\dots,\sigma_{k+1}}_{\sigma_i=\sigma_{k+1} ~\forall~ i \ge k+1}).
\end{align*}
We fix a budget of $N=\alpha k$ matrix-vector multiplications with $\Ab$ in total. The goal is to distribute the computational budget between the sample size $l$ and the number of power iterations $q$ for the smaller canonical angles $\angle\rbr{\Ub_k, \wh\Ub_l}$.

Leveraging \Cref{thm:space_agnostic_bounds}, we start by fixing $\gamma>1$ associated with the constants $\eps_1=\gamma \sqrt{k/l}$ and $\eps_2=\gamma \sqrt{l/(r-k)}$ in \Cref{eq:space_agnostic_left} such that $l \ge \gamma^2 k$ and $2q+1 < \alpha/\gamma^2$. Characterized by $\gamma$, the right-hand-side of \Cref{eq:space_agnostic_left} under fixed budget $N$ (\ie, $N \ge l(2q+1)$) is defined as:
\begin{align}\label{eq:space_agnostic_example_rhs}
    \phi_\gamma\rbr{q} \dfeq &\rbr{1 + \frac{1-\eps_1}{1+\eps_2} \cdot \frac{l}{r-k} \rbr{\frac{\sigma_1}{\sigma_{k+1}}}^{4q+2}}^{-\frac{1}{2}}
    \\
    = &\rbr{1 + \frac{\alpha-\gamma\sqrt{\alpha (2q+1)}}{\beta(2q+1)+\gamma\sqrt{\alpha\beta(2q+1)}} \rbr{\frac{\sigma_1}{\sigma_{k+1}}}^{4q+2}}^{-\frac{1}{2}}. \nonumber
\end{align}
With the synthetic step spectrum, the dependence of \Cref{eq:space_agnostic_left} on $\sigma\rbr{\Ab}$ is reduced to the spectral gap $\sigma_1/\sigma_{k+1}$ in \Cref{eq:space_agnostic_example_rhs}. 

As a synopsis, \Cref{tab:space_agnostic_example_parameter_summary} summarizes the relevant parameters that characterize the problem setup.
\begin{table}[!h]
    \centering
    \caption{Given $\Ab \in \C^{r \times r}$ with a spectral gap $\sigma_1/\sigma_{k+1}$, a target rank $k$, and a budget of $N$ matrix-vector multiplications, we consider applying \Cref{algo:rsvd_power_iterations} with a sample size $l$ and $q$ power iterations.}
    \label{tab:space_agnostic_example_parameter_summary}
    \begin{tabular}{c|c|c}
    \toprule
        $\alpha$ & budget parameter & $N=\alpha k$ 
        \\
        $\beta$ & size parameter & $r=(1+\beta)k$ 
        \\
        $\gamma$ & oversampling parameter & $l \ge \gamma^2 k$ and $2q+1 \le \frac{\alpha}{\gamma^2}$
        \\
    \bottomrule
    \end{tabular}
\end{table} 

\begin{figure}[!ht]
    \centering
    \includegraphics[width=\textwidth]{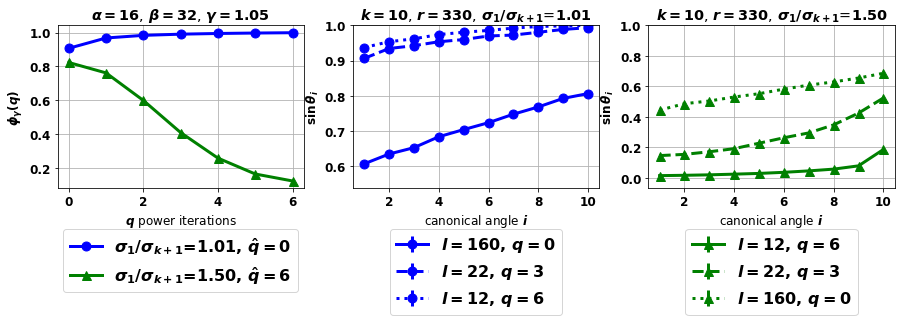}
    \caption{For $k=10$, $\alpha=16$, $\beta=32$, $\gamma=1.05$, the left figure marks $\phi_\gamma(q)$ (\ie, the right-hand side of \Cref{eq:space_agnostic_left} under the fixed budget $N$) with two different spectral gaps ($\wh q = \argmin_{1 \le 2q+1 \le \alpha/\gamma^2} \phi_\gamma\rbr{q}$), while the middle and the right figures demonstrate how the relative magnitudes of canonical angles $\sin\angle_i\rbr{\Ub_k,\wh\Ub_l}$ ($i \in [k]$) under different configurations (\ie, choices of $(l,q)$, showing the averages and ranges of $5$ trials)  align with the trends in $\phi_\gamma(q)$.}
    \label{fig:space-agno-ex_a16-b32-g1}
\end{figure}

\begin{figure}[!ht]
    \centering
    \includegraphics[width=\textwidth]{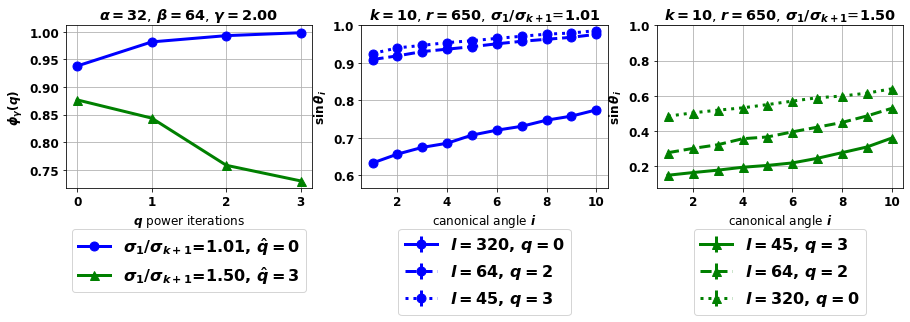}
    \caption{Under the same setup as \Cref{fig:space-agno-ex_a16-b32-g1}, for $k=10$, $\alpha=32$, $\beta=64$, $\gamma=2.00$, the trend in $\phi_\gamma(q)$ also aligns well with that in true canonical angles $\sin\angle_i\rbr{\Ub_k,\wh\Ub_l}$ ($i \in [k]$).}
    \label{fig:space-agno-ex_a32-b64-g2}
\end{figure}

With $k=10$, $\alpha=32$, $\beta=64$, and $\gamma \in \cbr{1.05, 2.00}$, \Cref{fig:space-agno-ex_a16-b32-g1} and \Cref{fig:space-agno-ex_a32-b64-g2} illustrate 
\begin{enumerate*}[label=(\roman*)]
    \item how the balance between oversampling and power iterations is affected by the spectral gap $\sigma_1/\sigma_{k+1}$, and more importantly, 
    \item how \Cref{eq:space_agnostic_example_rhs} unveils the trend in true canonical angles $\sin\angle_i\rbr{\Ub_k,\wh\Ub_l}$ among different configurations $\csepp{(l,q)}{l \ge \gamma^2k, 2q+1 \le \alpha/\gamma^2}$.
\end{enumerate*}

Concretely, both \Cref{fig:space-agno-ex_a16-b32-g1} and \Cref{fig:space-agno-ex_a32-b64-g2} imply that more oversampling (\eg, $q=0$) is preferred when the spectral gap is small (\eg, $\sigma_1/\sigma_{k+1}=1.01$); while more power iterations (\eg, $q=\lfloor \frac{\alpha/\gamma^2-1}{2} \rfloor$) are preferred when the spectral gap is large (\eg, $\sigma_1/\sigma_{k+1}=1.5$). Such trends are both observed in the true canonical angles $\sin\angle_i\rbr{\Ub_k,\wh\Ub_l}$ ($i \in [k]$) and well reflected by $\phi_\gamma\rbr{q}$.
\end{example}

\section{Discussion}
{We presented} prior and posterior bounds and estimates that can be computed efficiently for canonical angles of the randomized subspace approximation. 
Under moderate multiplicative oversampling, our prior probabilistic bounds are space-agnostic (\ie, independent of the unknown true subspaces), asymptotically tight, and can be computed in linear ($O(\rank\rbr{\Ab})$) time, while casting a clear {guidance} on the balance between oversampling and power iterations for a fixed budget of matrix-vector multiplications.
As corollaries of the prior probabilistic bounds, we introduce a set of unbiased canonical angle estimates that are efficiently computable and applicable to arbitrary choices of oversampling with good empirical concentrations.
In addition to the prior bounds and estimates, we further discuss two sets of posterior bounds that provide deterministic guarantees for canonical angles given the computed low-rank approximations.
With numerical experiments, we compare the empirical tightness of different canonical angle bounds and estimates on various data matrices under a diverse set of algorithmic choices for the randomized subspace approximation. 

\chapter{Sample Efficiency of Data Augmentation Consistency Regularization}\label{ch:dac}

\subsection*{Abstract}
Data augmentation is popular in the training of large neural networks; however, currently, theoretical understanding of the discrepancy between different algorithmic choices of leveraging augmented data remains limited.
In this paper, we take a step in this direction -- we first present a simple and novel analysis for linear regression with label invariant augmentations, demonstrating that data augmentation consistency (DAC) is intrinsically more efficient than empirical risk minimization on augmented data (DA-ERM). The analysis is then generalized to misspecified augmentations (i.e., augmentations that change the labels), which again demonstrates the merit of DAC over DA-ERM. 
Further, we extend our analysis to non-linear models (e.g., neural networks) and present generalization bounds. Finally, we perform experiments that make a clean and apples-to-apples comparison (i.e., with no extra modeling or data tweaks) between DAC and DA-ERM using CIFAR-100 and WideResNet; these together demonstrate the superior efficacy of DAC.\footnote{This chapter is based on the following published conference paper: \\
\bibentry{yang2022sample}~\citep{yang2022sample}.}

\section{Introduction}

Modern machine learning models, especially deep learning models, require abundant training samples. Since data collection and human annotation are expensive, data augmentation has become a ubiquitous practice in creating artificial labeled samples and improving generalization performance. This practice is corroborated by the fact that the semantics of images remain the same through simple translations like obscuring, flipping, rotation, color jitter, rescaling~\citep{shorten2019survey}. Conventional algorithms use data augmentation to expand the training data set~\citep{simard1998transformation,krizhevsky2012imagenet,simonyan2014very,he2016deep,cubuk2019autoaugment}. 

Data Augmentation Consistency (DAC) regularization, as an alternative, enforces the model to output similar predictions on the original and augmented samples and has contributed to many recent state-of-the-art supervised or semi-supervised algorithms. This idea was first proposed in~\cite{bachman2014learning} and popularized by~\cite{laine2016temporal,sajjadi2016regularization}, and gained more attention recently with the success of FixMatch~\citep{sohn2020fixmatch} for semi-supervised few-shot learning as well as AdaMatch~\citep{berthelot2021adamatch} for domain adaptation. DAC can utilize unlabeled samples, as one can augment the training samples and enforce consistent predictions without knowing the true labels. This bypasses the limitation of the conventional algorithms that can only augment labeled samples and add them to the training set (referred to as DA-ERM). However, it is not well-understood whether DAC has additional algorithmic benefits compared to DA-ERM. We are, therefore, seeking a theoretical answer. 

Despite the empirical success, the theoretical understanding of data augmentation (DA) remains limited. Existing work \citep{chen2020group,mei2021learning,lyle2019analysis} focused on establishing that augmenting data saves on the number of labeled samples needed for the same level of accuracy. However, none of these explicitly compare the efficacy (in terms of the number of augmented samples) between different algorithmic choices on {\em how to use the augmented samples} in an apples-to-apples way.

In this paper, we focus on the following research question:
\begin{center}

\textit{Is DAC intrinsically more efficient than DA-ERM (even without unlabeled samples)? }
\end{center}

We answer the question affirmatively. We show that DAC is intrinsically more efficient than DA-ERM with a simple and novel analysis for linear regression under label invariant augmentations. We then extend the analysis to misspecified augmentations (i.e., those that change the labels). We further provide generalization bounds under consistency regularization for non-linear models like two-layer neural networks and DNN-based classifiers with expansion-based augmentations. Intuitively, we show DAC is better than DA-ERM in the following sense: 1) DAC enforces stronger invariance in the learned models, yielding smaller estimation error; and 2) DAC better tolerates mis-specified augmentations and incurs smaller approximation error. Our theoretical findings can also explain and guide some technical choices, e.g. why we can use stronger augmentation in consistency regularization but only weaker augmentation when creating pseudo-labels~\citep{sohn2020fixmatch}.  

Specifically, our \textbf{main contributions} are:
\begin{itemize}
    \item \textbf{Theoretical comparisons between DAC and DA-ERM.} We first present a simple and novel result for linear regression, which shows that DAC yields a strictly smaller generalization error than DA-ERM using the same augmented data. Further, we demonstrate that with with the flexibility of hyper-parameter tuning, DAC can better handle data augmentation with small misspecification in the labels. 
    \item \textbf{Extended analysis for non-linear models.} 
    We derive generalization bounds for DAC under two-layer neural networks, and classification with expansion-based augmentations. 
    \item \textbf{Empirical comparisons between DAC and DA-ERM.} We perform experiments that make a clean and apples-to-apples comparison (i.e., with no extra modeling or data tweaks) between DAC and DA-ERM using CIFAR-100 and WideResNet. Our empirical results demonstrate the superior efficacy of DAC.
\end{itemize}

\section{Related Work}

\textbf{Empirical findings. }
Data augmentation (DA) is an essential ingredient for almost every state-of-the-art supervised learning algorithm since the seminal work of \cite{krizhevsky2012imagenet} (see reference therein \citep{simard1998transformation,simonyan2014very,he2016deep,cubuk2019autoaugment,kuchnik2018efficient}). It started from adding augmented data to the training samples via (random) perturbations, distortions, scales, crops, rotations, and horizontal flips. More sophisticated variants were subsequently designed; a non-exhaustive list includes Mixup \citep{zhang2017mixup}, Cutout \citep{devries2017improved}, and Cutmix \citep{yun2019cutmix}. The choice of data augmentation and their combinations require domain knowledge and experts' heuristics, which triggered some automated search algorithms to find the best augmentation strategies~\citep{lim2019fast,cubuk2019autoaugment}. The effects of different DAs are systematically explored in \cite{tensmeyer2016improving}. 

Recent practices not only add augmented data to the training set but also enforce similar predictions by adding consistency regularization~\citep{bachman2014learning,laine2016temporal,sohn2020fixmatch}. One benefit of DAC is the feasibility of exploiting unlabeled data. Therefore input consistency on augmented data also formed a major component to state-of-the-art algorithms for semi-supervised learning~\citep{laine2016temporal,sajjadi2016regularization,sohn2020fixmatch,xie2020self}, self-supervised learning~\citep{chen2020simple}, and unsupervised domain adaptation~\citep{french2017self,berthelot2021adamatch}.

\textbf{Theoretical studies. }
Many interpret the effect of DA as some form of regularization~\citep{he2019data}. Some work focuses on linear transformations and linear models \citep{wu2020generalization} or kernel classifiers \citep{dao2019kernel}. Convolutional neural networks by design enforce translation equivariance symmetry \citep{benton2020learning,li2019enhanced};  further studies have hard-coded CNN's invariance or equivariance to rotation~\citep{cohen2016group,marcos2017rotation,worrall2017harmonic,zhou2017oriented}, scaling~\citep{sosnovik2019scale,worrall2019deep} and other types of transformations.

Another line of works view data augmentation as invariant learning by averaging over group actions~\citep{lyle2019analysis,chen2020group,mei2021learning,wang2020squared,bietti2021sample,shao2022theory}. They consider an ideal setting that is equivalent to ERM with all possible augmented data, bringing a clean mathematical interpretation. 
In contrast, we are interested in a more realistic setting with limited augmented data. In this setting, it is crucial to utilize the limited data with proper training methods, the difference of which cannot be revealed under previously studied settings.  

Some more recent work investigates the feature representation learning procedure with DA for self-supervised learning tasks~\citep{garg2020functional,wen2021toward,haochen2021provable,von2021self}. \cite{cai2021theory,wei2021theoretical} studied the effect of data augmentation with label propagation. Data augmentation is also deployed to improve robustness \citep{rajput2019does}, to facilitate domain adaptation and domain generalization~\citep{cai2021theory,sagawa2019distributionally}.

\section{Problem Setup and Data Augmentation Consistency}\label{sec:general}

Consider the standard supervised learning problem setup: $\xb \in \Xcal$ is input feature, and $y \in \Ycal$ is its label (or response). Let $\Pgt$ be the true distribution of $\rbr{\xb, y}$ (i.e., the label distribution follows $y \sim \Pgt(y | \xb)$). We have the following definition for label invariant augmentation.

\begin{definition}[Label Invariant Augmentation] \label{def:causal_invar_data_aug}
For any sample $\xb \in \Xcal$, we say that a random transformation $A:\Xcal \to \Xcal$ is a label invariant augmentation if and only if $A\rbr{\xb}$ satisfies $\Pgt(y | \xb) = \Pgt(y | A(\xb))$.
\end{definition}

Our work largely relies on label invariant augmentation but also extends to augmentations that incur small misspecification in their labels. Therefore our results apply to the augmentations achieved via certain transformations (e.g., random cropping, rotation), and we do not intend to cover augmentations that can largely alter the semantic meanings (e.g., MixUp \citep{zhang2017mixup}). See examples of data augmentation in \Cref{sec:linear_regression_label_invariant,sec:beyond_linear}.

Now we introduce the learning problem on an augmented dataset. Let $(\Xb, \yb) \in \Xcal^N \times \Ycal^N$ be a training set consisting of $N$ $\iid$ samples. Besides the original $\rbr{\Xb,\yb}$, each training sample is provided with $\alpha$ augmented samples. The features of the augmented dataset $\widetilde\Acal(\xb) \in \Xcal^{(1+\alpha) N}$ is:
\begin{align*}
    \widetilde\Acal(\Xb) = \sbr{\xb_{1}; \cdots; \xb_{N}; \xb_{1,1}; \cdots; \xb_{N,1}; \cdots; \xb_{1, \alpha}; \cdots; \xb_{N, \alpha}} \in \Xcal^{(1+\alpha) N},
\end{align*}
where $\xb_i$ is in the original training set and $\xb_{i, j}, \forall j \in [\alpha]$ are the augmentations of $\xb_i$. The labels of the augmented samples are kept the same, which can be denoted as $\widetilde \Mb \yb \in \Ycal^{(1+\alpha)N}$, where $\wt\Mb \in \R^{(1+\alpha) N \times N}$ is a vertical stack of $(1+\alpha)$ identity mappings. 

\textbf{Data Augmentation Consistency Regularization.} Let $\Hcal = \cbr{h: \Xcal \rightarrow \Ycal}$ be a well-specified function class (e.g., for linear regression problems, $\exists h^* \in \Hcal$, s.t. $h^*(\xb) = \E[y |\xb]$) that we hope to learn from. Without loss of generality, we assume that each function $h \in \Hcal$ can be expressed as $h = f_h \circ \phi_h$, where $\phi_h \in \Phi = \cbr{\phi: \Xcal \rightarrow \Wcal}$ is a proper representation mapping and $f_h \in \Fcal = \cbr{f:\Wcal \rightarrow \Ycal}$ is a predictor on top of the learned representation. We tend to decompose $h$ such that $\phi_h$ is a powerful feature extraction function whereas $f_h$ can be as simple as a linear combiner. For instance, in a deep neural network, all the layers before the final layer can be viewed as feature extraction $\phi_h$, and the predictor $f_h$ is the final linear combination layer.

For a loss function $l: \Ycal \times \Ycal \rightarrow \R$ and a metric $\varrho$ properly defined on the representation space $\Wcal$, learning with data augmentation consistency (DAC) regularization is:
\begin{align}\label{eq:dac_soft}
    \argmin_{h\in\Hcal}\sum_{i=1}^{N}l(h(\xb_i), y_i) + \underbrace{\lambda\sum_{i=1}^N\sum_{j=1}^{\alpha} \varrho\rbr{\phi_h(\xb_i), \phi_h(\xb_{i, j})}}_{\textit{DAC regularization}}.
\end{align}

Note that the DAC regularization in \Cref{eq:dac_soft} can be easily implemented empirically as a regularizer. Intuitively, DAC regularization penalizes the representation difference between the original sample $\phi_h(\xb_i)$ and the augmented sample $\phi_h(\xb_{i,j})$, with the belief that similar samples (i.e., original and augmented samples) should have similar representations. When the data augmentations do not alter the labels, it is reasonable to enforce a strong regularization (i.e., $\lambda \rightarrow \infty$) -- since the conditional distribution of $y$ does not change. The learned function $\widehat h^{dac}$ can then be written as the solution of a constrained optimization problem:
\begin{align}\label{eq:dac_hard}
\begin{split}
    & \widehat h^{dac} \triangleq \argmin_{h \in \Hcal} \sum_{i=1}^N l(h(\xb_i), y_i)\quad \text{s.t.}\quad \phi_h(\xb_i) = \phi_h(\xb_{i,j}),~ \forall i \in [N], j \in [\alpha].
\end{split}
\end{align}

In the rest of the paper, we mainly focus on the data augmentations satisfying \Cref{def:causal_invar_data_aug} and our analysis relies on the formulation of \Cref{eq:dac_hard}. When the data augmentations alter the label distributions (i.e., not satisfying \Cref{def:causal_invar_data_aug}), it becomes necessary to adopt a finite $\lambda$ for \Cref{eq:dac_soft}, and such extension is discussed in \Cref{subsec:finite_lambda}.

\section{Linear Model and Label Invariant Augmentations}\label{sec:linear_regression_label_invariant}

In this section, we show the efficacy of DAC regularization with linear regression under label invariant augmentations (\Cref{def:causal_invar_data_aug}). 

To see the efficacy of DAC regularization (i.e., \Cref{eq:dac_hard}), we revisit a more commonly adopted training method here -- empirical risk minimization on augmented data (DA-ERM):
\begin{align}\label{eq:plain_erm}
    \wh h^{\herm} \triangleq \argmin_{h\in \Hcal} \sum_{i=1}^N l (h(\xb_i), y_i) + \sum_{i=1}^N\sum_{j=1}^{\alpha}l (h(\xb_{i,j}), y_i).
\end{align}
Now we show that the DAC regularization (\Cref{eq:dac_hard}) learns more efficiently than DA-ERM. Consider the following setting: given $N$ observations $\Xb \in \R^{N \times d}$, the responses $\yb \in \R^N$ are generated from a linear model $\yb = \Xb\thetab^* + \epsb$, where $\epsb\in \R^N$ is zero-mean noise with $\E\sbr{\epsb\epsb^\top} = \sigma^2 \Ib_N$. Recall that $\widetilde \Acal(\Xb)$ is the entire augmented dataset, and $\widetilde \Mb \yb$ corresponds to the labels. We focus on the fixed design excess risk of $\thetab$ on $\widetilde \Acal(\Xb)$, which is defined as $L(\thetab) \triangleq \frac{1}{(1+\alpha) N}\norm{\widetilde \Acal(\Xb)\thetab - \widetilde \Acal(\Xb)\thetab^*}_2^2$.

Let $\Deltab \triangleq \wt\Acal\rbr{\Xb} - \wt\Mb\Xb$ and $\dau \triangleq \rank\rbr{\Deltab}$ measure the number of dimensions in the row space of $\Xb$ perturbed by augmentations (which can be intuitively view as the ``strength'' of data augmentations where the larger $\dau$ implies the stronger perturbation brought by $\wt\Acal(\Xb)$ to $\Xb$). Assuming that $\wt\Acal(\Xb)$ has full column rank (such that the linear regression problem has a unique solution), we have the following result for learning by DAC versus DA-ERM.

\begin{theorem}[Informal result on linear regression (formally in \Cref{thm:formal_linear_regression})]\label{thm:informal_linear_regression}
    Learning with DAC regularization, 
    \begin{align*}
        \E_{\epsb}\sbr{L(\widehat \thetab^{dac}) - L(\thetab^*)} = \frac{(d - \dau)\sigma^2}{N},
    \end{align*}
    while learning with ERM directly on the augmented dataset, there exists $d' \in [0, \dau]$ such that
    \begin{align*}
        \E_{\epsb}\sbr{L(\widehat \thetab^{\herm}) - L(\thetab^*)} = \frac{(d - \dau + d')\sigma^2}{N}.
    \end{align*}
\end{theorem}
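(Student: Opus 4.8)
The plan is to treat both estimators as ordinary least-squares solutions on the augmented data and compute the fixed-design excess risk in closed form via projections. First I would set up notation: write $\wt\Acal(\Xb) \in \R^{(1+\alpha)N \times d}$ for the augmented design, $\wt\Mb\yb = \wt\Mb\Xb\thetab^* + \wt\Mb\epsb$ for the stacked labels, and recall $\Deltab = \wt\Acal(\Xb) - \wt\Mb\Xb$ with $\dau = \rank(\Deltab)$. Since $\wt\Acal(\Xb)$ has full column rank, the excess risk of any $\thetab$ equals $\frac{1}{(1+\alpha)N}\nbr{\wt\Acal(\Xb)(\thetab - \thetab^*)}_2^2$, and a standard OLS bias-variance decomposition on the augmented problem shows the bias term vanishes for both estimators (the true model is linear and $\E\sbr{\wt\Mb\epsb} = \b0$), so the excess risk is purely the variance term. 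Writing $\wt\Ub \dfeq \ortho(\wt\Acal(\Xb)) \in \R^{(1+\alpha)N \times d}$, the DA-ERM excess risk is $\frac{1}{(1+\alpha)N}\E_\epsb\nbr{\wt\Ub\wt\Ub^\top \wt\Mb\epsb}_2^2 = \frac{\sigma^2}{(1+\alpha)N}\nbr{\wt\Ub^\top \wt\Mb}_F^2$ using $\E\sbr{\epsb\epsb^\top} = \sigma^2\Ib_N$. The key geometric observation is that the column space of $\wt\Mb$ (the ``diagonal'' copies of $\R^N$) decomposes relative to $\range(\wt\Acal(\Xb))$: the component of $\range(\wt\Mb\Xb)$ unperturbed by augmentation lies inside $\range(\wt\Acal(\Xb))$, contributing exactly $d - \dau$ to $\nbr{\wt\Ub^\top\wt\Mb}_F^2 \cdot \frac{1}{1+\alpha}$, while the $\dau$ perturbed directions contribute an extra amount $d' \in [0,\dau]$ depending on the angle between $\range(\Deltab)$ and $\range(\wt\Acal(\Xb))$.

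For the DAC estimator, the plan is to use the constrained formulation in \Cref{eq:dac_hard}: the constraint $\phi_h(\xb_i) = \phi_h(\xb_{i,j})$ is linear, $\Deltab\thetab = \b0$, i.e. $\thetab \in \kernel(\Deltab)$, a subspace of dimension $d - \dau$. On this subspace the augmentations are redundant ($\wt\Acal(\Xb)\thetab = \wt\Mb\Xb\thetab$ for all feasible $\thetab$), so DAC reduces to OLS of $\yb$ on $\Xb$ restricted to $\kernel(\Deltab)$; this is a rank-$(d-\dau)$ least-squares problem, giving excess risk exactly $\frac{(d-\dau)\sigma^2}{N}$ by the classical OLS variance formula (the $(1+\alpha)$ factors cancel because the restricted augmented design is $(1+\alpha)$ identical copies). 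Comparing, $\E_\epsb\sbr{L(\wh\thetab^{\herm}) - L(\thetab^*)} = \frac{(d - \dau + d')\sigma^2}{N} \ge \frac{(d-\dau)\sigma^2}{N} = \E_\epsb\sbr{L(\wh\thetab^{dac}) - L(\thetab^*)}$, establishing the strict-when-$d'>0$ separation.

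The main obstacle is the careful bookkeeping for the DA-ERM variance term — specifically, proving that $\frac{1}{1+\alpha}\nbr{\wt\Ub^\top\wt\Mb}_F^2 = d - \dau + d'$ for some $d' \in [0,\dau]$. This requires decomposing $\R^d = \kernel(\Deltab) \oplus \kernel(\Deltab)^\perp$, arguing that directions in $\kernel(\Deltab)$ each contribute $1$ to the normalized trace (because there $\wt\Ub$ acts as an isometry on the diagonally-embedded copies up to the $(1+\alpha)$ normalization), and bounding the contribution of the $\dau$ directions in $\kernel(\Deltab)^\perp$ between $0$ and $1$ each — the lower bound $0$ is trivial, the upper bound $1$ follows from $\wt\Ub\wt\Ub^\top$ being an orthogonal projection and $\frac{1}{1+\alpha}\wt\Mb^\top\wt\Mb = \Ib_d$ on the relevant subspace... actually $\wt\Mb^\top\wt\Mb = (1+\alpha)\Ib_N$, so the normalization is what makes each diagonal direction contribute at most $1$. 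I would make this precise by choosing an orthonormal basis adapted to $\kernel(\Deltab)$ and computing the Gram matrix of $\wt\Ub^\top\wt\Mb$ blockwise; the off-diagonal coupling between perturbed and unperturbed directions is where one must be most careful, and it is exactly what is absorbed into the single scalar $d'$.
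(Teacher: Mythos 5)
Your plan is correct and takes essentially the same route as the paper's proof: both reduce each estimator's excess risk to the fixed-design variance $\frac{\sigma^2}{(1+\alpha)N}\tr\rbr{\wt\Mb^\top \projAX \wt\Mb}$, split it against the projector $\Pb_\Scal$ onto $\Scal$ (equivalently, the restriction to $\kernel\rbr{\Deltab}$, where the augmented design is just $(1+\alpha)$ identical copies), and identify the DAC estimator with the $(d-\dau)$-dimensional restricted regression, so your blockwise Gram computation is exactly the paper's $d' = \tr\rbr{\wt\Mb^\top\rbr{\projAX - \Pb_\Scal}\wt\Mb}/(1+\alpha)$ written in an adapted basis. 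The only point to make explicit is that label invariance gives $\Deltab\thetab^* = \b{0}$, which is what makes $\thetab^*$ feasible for the DAC constraint (and hence kills the bias term and ensures $\Scal \subseteq \range\rbr{\wt\Acal\rbr{\Xb}}$, so $\Pb_\Scal \aleq \projAX$).
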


Formally, we have $d' \triangleq \frac{\tr\rbr{ \rbr{\projAX - \Pb_{\Scal}} {\wt\Mb \widetilde \Mb^\top} }}{1+\alpha}$, where $\projAX \triangleq \widetilde \Acal(\Xb) \widetilde \Acal(\Xb)^\pinv$, and $\Pb_\Scal$ is the projector onto $\Scal \triangleq \cbr{\widetilde \Mb \Xb \thetab~|~\forall \thetab \in \R^d, s.t. \rbr{\widetilde\Acal(\Xb) - \widetilde \Mb \Xb}\thetab = 0}$. Under standard conditions (e.g., $\xb$ is sub-Gaussian and $N$ is not too small), it is not hard to extend \Cref{thm:informal_linear_regression} to random design (i.e., the more commonly acknowledged generalization bound) with the same order.

\begin{remark}[Why DAC is more effective]
    Intuitively, DAC reduces the dimensions from $d$ to $d - \dau$ by enforcing consistency regularization. DA-ERM, on the other hand, still learns in the original $d$-dimensional space. $d'$ characterizes such difference.
\end{remark}

Now we take a closer look at $d' \triangleq \frac{\tr\rbr{ \rbr{\projAX - \Pb_{\Scal}} {\wt\Mb \widetilde \Mb^\top} }}{1+\alpha}$ characterizing the discrepancy between DAC and DA-ERM. We first observe that $\sigma^2 \cdot {\wt \Mb \widetilde \Mb^\top}$ is the noise covariance matrix of the augmented dataset. $\tr\rbr{\Pb_\Scal {\wt\Mb \widetilde \Mb^\top}}$ represents the variance of $\widehat\thetab^{dac}$, while $\tr\rbr{\projAX {\wt\Mb \widetilde \Mb^\top}}$ denotes the variance of $\widehat\thetab^{\herm}$. Therefore, $d' \propto \tr\rbr{\rbr{\projAX - \Pb_\Scal} {\wt\Mb \widetilde \Mb^\top}}$ measures the excess variance of $\widehat\thetab^{\herm}$ in comparison to $\widehat\thetab^{dac}$. When $\projAX \neq \Pb_\Scal$ (a common scenario as instantiated in \Cref{example:linear_regression}), DAC is strictly better than DA-ERM.

\begin{figure}[!h]
    \centering
    \includegraphics[width=.6\linewidth]{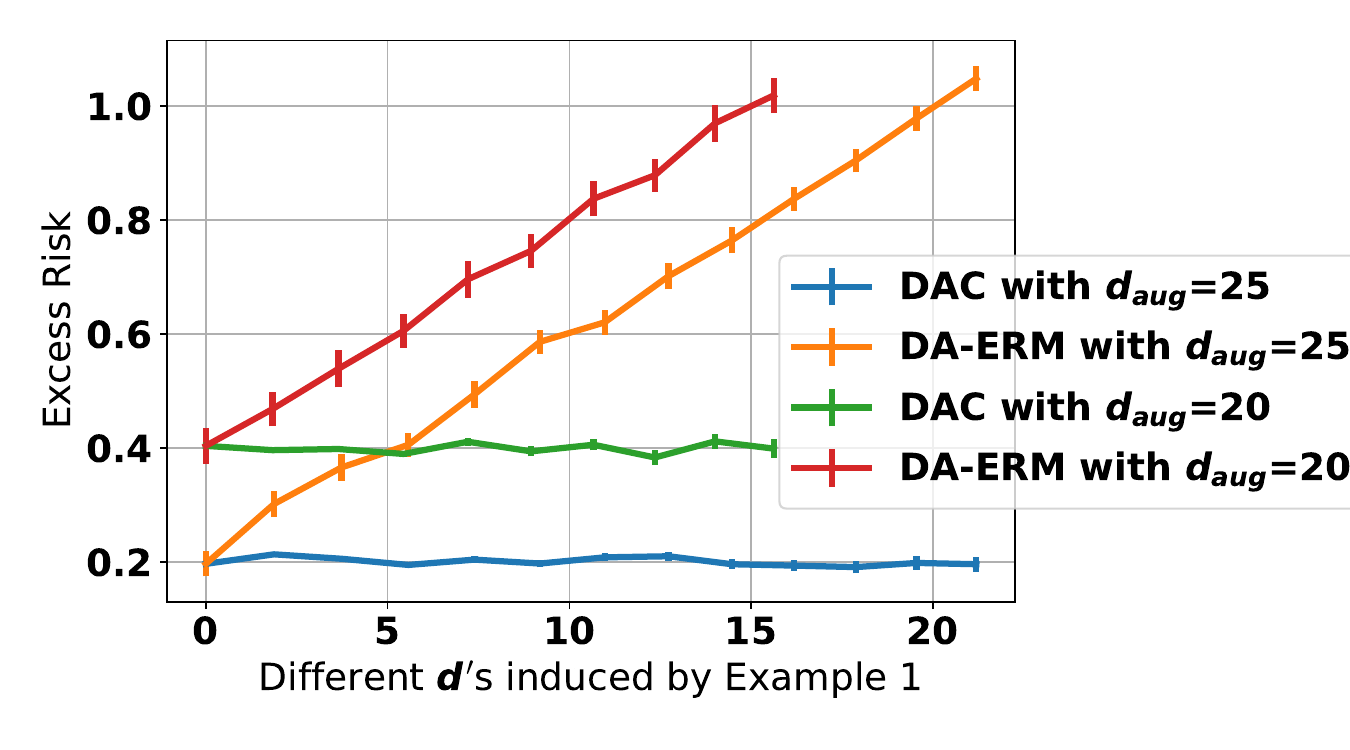}
    \caption{Comparison of DAC regularization and DA-ERM (\Cref{example:linear_regression}). The results precisely match \Cref{thm:informal_linear_regression}. DA-ERM depends on the $d'$ induced by different augmentations, while the DAC regularization works equally well for all $d'$ and better than the DA-ERM. Further, both DAC and DA-ERM are affected by $\dau$, the number of dimensions perturbed by $\wt\Acal(\Xb)$.}
    \label{fig:linear_regression}
\end{figure}

\begin{example}\label{example:linear_regression} Consider a 30-dimensional linear regression. The original training set contains 50 samples. The inputs $\xb_i$s are generated independently from $\Ncal(0, \Ib_{30})$ and we set $\thetab^* = [\thetab_c^*; \mathbf{0}]$ with $\thetab_c^* \sim \Ncal(0, \Ib_5)$  and $\mathbf{0} \in \R^{25}$. The noise variance $\sigma$ is set to $1$. We partition $\xb$ into 3 parts $[x_{c1}, x_{e_1}, x_{e2}]$ and take the following augmentations: $A([x_{c1}; x_{e1}; x_{e2}]) = [x_{c1}; 2x_{e1}; -x_{e2}], x_{c1}\in\R^{d_{c1}}, x_{e1}\in \R^{d_{e1}}, x_{e2}\in\R^{d_{e2}}$, where $d_{c1} + d_{e1} + d_{e2} = 30$. 

Notice that the augmentation perturbs $x_{e1}$ and $x_{e2}$ and leaving $x_{c1}$ unchanged, we therefore have $\dau = 30 - d_{c1}$. By changing $d_{c1}$ and $d_{e1}$, we can have different augmentations with different $\dau, d'$. The results for $\dau \in \cbr{20, 25}$ and various $d'$s are presented in \Cref{fig:linear_regression}. The excess risks precisely match \Cref{thm:informal_linear_regression}. It confirms that the DAC regularization is strictly better than DA-ERM for a wide variety of augmentations.
\end{example}

\section{Beyond Label Invariant Augmentation}\label{subsec:finite_lambda}

In this section, we extend our analysis to misspecified augmentations by relaxing the label invariance assumption (such that $\Pgt(y | \xb) \neq \Pgt(y | A(\xb))$). With an illustrative linear regression problem, we show that DAC also brings advantages over DA-ERM for misspecified augmentations. 

We first recall the linear regression setup: given a set of $N$ $\iid$ samples $\rbr{\Xb, \yb}$ that follows $\yb = \Xb \thetab^* + \epsb$ where $\epsb$ are zero-mean independent noise with $\E\sbr{\epsb \epsb^\top} = \sigma^2 \Ib_N$, we aim to learn the unknown ground truth $\thetab^*$. For randomly generated misspecified augmentations $\wt \Acal(\Xb)$ that alter the labels (\ie, $\wt\Acal\rbr{\Xb}\thetab^* \neq \wt\Mb\Xb\thetab^*$), a proper consistency constraint is $\norm{\phi_h(\xb_i) - \phi_h(\xb_{i, j})}_2 \le \constmis$ (where $\xb_{i, j}$ is an augmentation of $\xb_i$, noticing that $\constmis = 0$ corresponds to label invariant augmentations in \Cref{def:causal_invar_data_aug}). 
For $\constmis>0$, the constrained optimization is equivalent to:
\begin{align}\label{eq:dac_soft_reg}
\begin{split}
    \wh\thetab^{dac} = &\argmin_{\thetab \in \R^d} \frac{1}{N}\nbr{\Xb \thetab - \yb}_2^2 
    + \frac{\lambda }{\rbr{1+\alpha} N} \norm{\rbr{\wt\Acal\rbr{\Xb} - \wt\Mb\Xb} \thetab}_2^2
\end{split}
\end{align}
for some finite $0 < \lambda < \infty$. We compare $\wh\thetab^{dac}$ to the solution learned with ERM on augmented data (as in \Cref{eq:plain_erm}):
\begin{align*}
    \wh\thetab^{\herm} = \argmin_{\thetab \in \R^d} \frac{1}{\rbr{1+\alpha}N} \nbr{\wt\Acal\rbr{\Xb} \thetab - \wt\Mb \yb}_2^2.
\end{align*} 
Let $\covtr \triangleq \frac{1}{N} \Xb^\top \Xb$ and $\covall \triangleq \frac{1}{(1+\alpha)N}\wt\Acal\rbr{\Xb}^\top \wt\Acal\rbr{\Xb}$. 
With $\Sb = \frac{1}{1+\alpha}\wt\Mb^\top \wt\Acal\rbr{\Xb}$, $\Deltab \triangleq \wt\Acal\rbr{\Xb} - \wt\Mb\Xb$, and its reweighted analog $\wt\Deltab \triangleq \rbr{\wt\Mb \Xb} \wt\Acal\rbr{\Xb}^\pinv \Deltab$, we further introduce positive semidefinite matrices: $\covs \triangleq \frac{1}{N}\Sb^\top \Sb$, $\covaug \triangleq \frac{1}{(1+\alpha)N}\Deltab^\top \Deltab$, and $\covaugwt \triangleq \frac{1}{(1+\alpha)N} \wt\Deltab^\top \wt\Deltab$.
For demonstration purpose, we consider fixed $\Xb$ and $\wt\Acal\rbr{\Xb}$, with respect to which we introduce distortion factors $c_X, c_S>0$ as the minimum constants that satisfy $\covall \aleq c_X \covtr$ and $\covall \aleq c_S \covs$ (notice that such $c_X, c_S$ exist almost surely when $\Xb$ and $\wt\Acal\rbr{\Xb}$ are drawn from absolutely continuous marginal distributions). 

Recall $\dau \triangleq \rank\rbr{\Deltab}$ from \Cref{sec:linear_regression_label_invariant}. Let $\projrg \triangleq \Deltab^\pinv \Deltab$ denote the rank-$\dau$ orthogonal projector onto $\range\rbr{\Deltab^\top}$.
Then, for $L(\thetab) = \frac{1}{N}\nbr{\Xb\thetab - \yb}_2^2$, we have the following result:

\begin{figure}
    \begin{subfigure}{0.45\columnwidth}
    \centering
	\includegraphics[width=\linewidth]{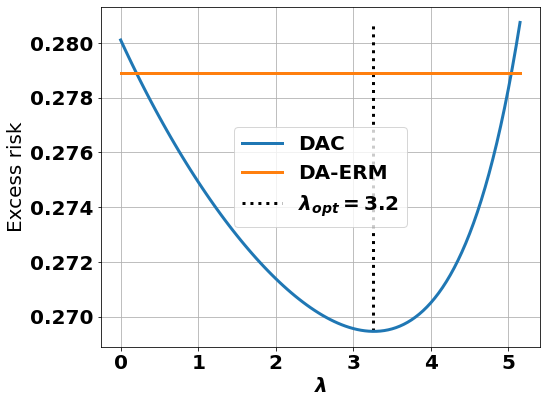}
	\vspace{-2em}
	\caption{Comparison of DAC with different $\lambda$ (optimal choice at $\lambda_{\textit{opt}}=3.2$) and DA-ERM in \Cref{example:misspec}, where $\dau=24$ and $\alpha=1$. The results demonstrate that, with a proper $\lambda$, DAC can outperform DA-ERM under misspecified augmentations.}
    \label{fig:misspec_lambda}
    \end{subfigure}
    \hfill
    \begin{subfigure}{0.45\columnwidth}
    \centering
	\includegraphics[width=\linewidth]{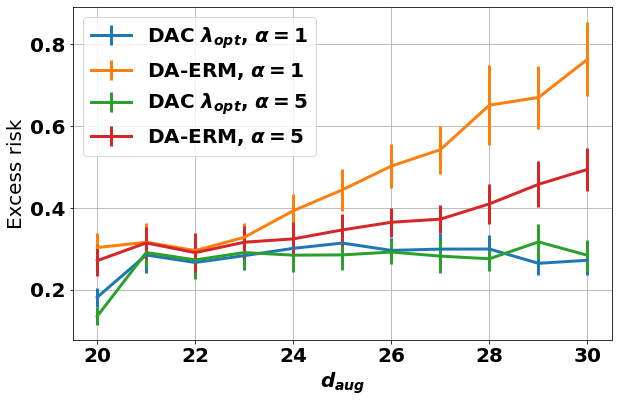}
	\vspace{-2em}
	\caption{Comparison of DAC with the optimal $\lambda$ and DA-ERM in \Cref{example:misspec} for different augmentation strength $\dau$. $\dau=20$ corresponds to the label-invariance augmentations, whereas increasing $\dau$ leads to more misspecification. 
	}
    \label{fig:misspec_daug}
    \end{subfigure}
    \caption{Comparisons of DAC and DA-ERM under misspecification.}
\end{figure}

\begin{theorem}\label{thm:formal_linear_regression_soft}
Learning with DAC regularization (\Cref{eq:dac_soft_reg}), we have that, at the optimal $\lambda$\footnote{A positive (semi)definite matrix $\Sigmab$ induces a (semi)norm: $\nbr{\ub}_{\Sigmab} = \rbr{\ub^{\top} \Sigmab \ub}^{1/2}$ for all conformable $\ub$.},
\begin{align*}
\begin{split}
    \E_{\epsb}\sbr{L(\wh\thetab^{dac}) - L\rbr{\thetab^*}} \leq \frac{\sigma^2 \rbr{d-\dau}}{N}
    \norm{\projrg \thetab^*}_{\covaug} \sqrt{\frac{\sigma^2}{N} \tr\rbr{\covtr \covaug^\pinv}}&,
\end{split}
\end{align*}
whereas learning with DA-ERM (\Cref{eq:plain_erm}),
\begin{align*}
    \E_{\epsb} \sbr{L(\wh\thetab^{\herm}) - L\rbr{\thetab^*}} \geq \frac{\sigma^2 d}{N c_X c_S} + \nbr{\projrg \thetab^*}_{\covaugwt}^2.
\end{align*}
Here, $\projrg \thetab^*$ measures the misspecification in $\thetab^*$ by the augmentations $\wt\Acal\rbr{\Xb}$.
\end{theorem}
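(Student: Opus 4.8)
\textbf{Proof proposal for \Cref{thm:formal_linear_regression_soft}.}

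The plan is to treat the two bounds separately, since the DAC estimator is a ridge-type estimator (Tikhonov regularization along the perturbed directions $\range\rbr{\Deltab^\top}$) while the DA-ERM estimator is a plain least-squares solution on the augmented design. For the DAC side, I would first write down the closed-form solution of \Cref{eq:dac_soft_reg}: with $\covtr = \frac{1}{N}\Xb^\top\Xb$ and $\covaug = \frac{1}{(1+\alpha)N}\Deltab^\top\Deltab$, the estimator is $\wh\thetab^{dac} = \rbr{\covtr + \lambda\covaug}^{-1}\rbr{\frac{1}{N}\Xb^\top\yb}$. Substituting $\yb = \Xb\thetab^* + \epsb$ and using $L\rbr{\thetab}-L\rbr{\thetab^*} = \nbr{\thetab-\thetab^*}_{\covtr}^2 + \t{(noise-independent term)}$ after taking $\E_{\epsb}$, the excess risk decomposes into a bias term (driven by $\lambda\covaug\thetab^*$, i.e. by the misspecification $\projrg\thetab^*$) and a variance term $\frac{\sigma^2}{N}\tr\rbr{\covtr\rbr{\covtr+\lambda\covaug}^{-1}\covtr\rbr{\covtr+\lambda\covaug}^{-1}}$. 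As $\lambda\to\infty$ the variance collapses onto the $\rbr{d-\dau}$-dimensional subspace $\kernel\rbr{\covaug}$ (the unperturbed directions), giving the $\frac{\sigma^2\rbr{d-\dau}}{N}$ leading term, while the bias is controlled by $\nbr{\projrg\thetab^*}_{\covaug}$; optimizing the bias--variance trade-off over $\lambda$ (a one-parameter scalar optimization, balancing a $\lambda^2$-type bias against a $\lambda^{-2}$-type excess variance along $\range\rbr{\Deltab^\top}$) produces the cross term $\nbr{\projrg\thetab^*}_{\covaug}\sqrt{\frac{\sigma^2}{N}\tr\rbr{\covtr\covaug^\pinv}}$. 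I expect this optimization and the bookkeeping of which subspace each trace lives on to be the technically delicate part.

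For the DA-ERM side, the goal is only a lower bound, which is more forgiving. I would start from $\wh\thetab^{\herm} = \wt\Acal\rbr{\Xb}^\pinv\wt\Mb\yb$ and split the excess risk into a variance part and a bias (approximation) part, both nonnegative, so it suffices to lower bound each separately. The bias part: since the augmentations are misspecified, $\wt\Acal\rbr{\Xb}\thetab^* \neq \wt\Mb\Xb\thetab^*$, so the population-optimal predictor on the augmented design is biased away from $\thetab^*$, and I would show this contributes at least $\nbr{\projrg\thetab^*}_{\covaugwt}^2$ by projecting the residual $\Deltab\thetab^*$ onto the appropriate subspace and using the reweighted deviation $\wt\Deltab = \rbr{\wt\Mb\Xb}\wt\Acal\rbr{\Xb}^\pinv\Deltab$. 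The variance part: $\E_{\epsb}\nbr{\wh\thetab^{\herm}-\E_{\epsb}\wh\thetab^{\herm}}_{\covtr}^2 = \frac{\sigma^2}{(1+\alpha)N}\tr\rbr{\covtr\wt\Acal\rbr{\Xb}^\pinv\wt\Mb\wt\Mb^\top\rbr{\wt\Acal\rbr{\Xb}^\pinv}^\top}$, which I would lower bound using $\covall \aleq c_X\covtr$ and $\covall\aleq c_S\covs$ to convert the trace over the $d$-dimensional column space of $\wt\Acal\rbr{\Xb}$ into $\frac{\sigma^2 d}{N c_X c_S}$; here the distortion factors $c_X, c_S$ are exactly the price paid for comparing the augmented-design covariance against $\covtr$ and $\covs$. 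Because DA-ERM learns in the full $d$-dimensional space with no shrinkage, this variance floor does not shrink with the augmentation strength, unlike the DAC bound.

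Putting the two bounds side by side then yields the stated separation: DAC pays $\frac{\sigma^2\rbr{d-\dau}}{N}$ plus a misspecification cross term that is small when $\projrg\thetab^*$ is small, whereas DA-ERM pays $\frac{\sigma^2 d}{N c_X c_S}$ plus a squared misspecification term $\nbr{\projrg\thetab^*}_{\covaugwt}^2$ with no dimension reduction. The main obstacle, as noted, is the DAC-side $\lambda$-optimization: one must carefully identify that the excess variance beyond the $\rbr{d-\dau}$-dimensional floor scales like $\lambda^{-2}$ along $\range\rbr{\Deltab^\top}$ with coefficient involving $\tr\rbr{\covtr\covaug^\pinv}$, balance it against the $\lambda^2\nbr{\projrg\thetab^*}_{\covaug}^2$-type bias, and verify that the minimizing $\lambda$ is finite and positive (so that the constrained formulation with $\constmis>0$ is genuinely in force); the DA-ERM lower bounds are comparatively routine once the right subspace decompositions and the definitions of $c_X, c_S$ are in place.
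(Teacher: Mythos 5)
Your proposal is correct and follows essentially the same route as the paper's proof: the closed-form ridge-type expression for $\wh\thetab^{dac}$, a bias--variance decomposition after diagonalizing $\covtr^{-1/2}\covaug\covtr^{-1/2}$ (yielding the $\rbr{d-\dau}$ variance floor), a scalar optimization over $\lambda$ producing the cross term, and, for DA-ERM, the exact bias $\nbr{\projrg\thetab^*}_{\covaugwt}^2$ together with the variance lower bound $\frac{\sigma^2 d}{N c_X c_S}$ obtained from $c_X, c_S$. One small correction: the paper does not balance literal $\lambda^{2}$ versus $\lambda^{-2}$ terms (with those coefficients that envelope is not valid for all $\lambda$); it first relaxes each per-direction term via $(1+\lambda\gamma_i)^2 \ge 2\lambda\gamma_i$, so the trade-off is $\frac{\sigma^2}{2N\lambda}\tr\rbr{\covtr\covaug^\pinv}$ against $\frac{\lambda}{2}\nbr{\projrg \thetab^*}_{\covaug}^2$, whose minimization at $\lambda = \sqrt{\sigma^2\tr\rbr{\covtr\covaug^\pinv}\big/\rbr{N\nbr{\projrg\thetab^*}_{\covaug}^2}}$ yields exactly the stated cross term.
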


One advantage of DAC regularization derives from its flexibility in choosing regularization parameter $\lambda$. With a proper $\lambda$ (\eg, see \Cref{fig:misspec_lambda}) that matches the misspecification $\constmis^2 = \frac{1}{\rbr{1+\alpha} N} \norm{\rbr{\wt\Acal\rbr{\Xb} - \wt\Mb\Xb} \thetab^*}^2_2 = \nbr{\projrg \thetab^*}^2_{\covaug}$, DAC effectively reduces the function class from $\R^d$ to $\csepp{\thetab}{\nbr{\projrg \thetab}_{\covaug} \le \constmis}$ and therefore improves the sample efficiency.

Another advantage of DAC is that, in contrast to DA-ERM, the consistency regularization term in \Cref{eq:dac_soft_reg} refrains from learning the original labels with misspecified augmentations $\E_{\epsb}\sbr{\wt\Mb\yb} \neq \wt\Acal\rbr{\Xb}\thetab^*$ when a suitable $\constmis$ is identified implicitly via $\lambda$. This allows DAC to learn from fewer but stronger (potentially more severely misspecified) augmentations (\eg, \Cref{fig:misspec_daug}). Specifically, as $N \to \infty$, the excess risk of DAC with the optimal $\lambda$ converges to zero by learning from unbiased labels $\E_{\epsb}\sbr{\yb} = \Xb\thetab^*$, whereas DA-ERM suffers from a bias term $\nbr{\projrg \thetab^*}_{\covaugwt}^2 > 0$ due to the bias from misspecified augmentations. 

\begin{example}\label{example:misspec} 
As in \Cref{example:linear_regression}, we consider a linear regression problem of dimension $d=30$ with $\alpha \ge 1$ misspecified augmentations on $N=50$ $\iid$ training samples drawn from $\Ncal(\b0, \Ib_{d})$.  
We aim to learn $\thetab^* = [\thetab_c^*; \mathbf{0}] \in \R^{d}$ (where $\thetab_c^* \in \cbr{-1,+1}^{d_c}$, $d_c=10$) under label noise $\sigma=0.1$. 
The misspecified augmentations mimic the effect of color jitter by adding $\iid$ Gaussian noise entry-wisely to the last $\dau$ feature coordinates: $\wt\Acal\rbr{\Xb} = \sbr{\Xb;\Xb'}$ where $\Xb'_{ij} = \Xb_{ij} + \Ncal\rbr{0, 0.1}$ for all $i \in [N]$, $d-\dau+1 \le j \le d$ -- such that $\dau=\rank\rbr{\Deltab}$ with probability $1$. The $(d-\dau+1),\dots,d_c$-th coordinates of $\thetab^*$ are misspecified by the augmentations.

As previously discussed on \Cref{thm:formal_linear_regression_soft}, DAC is more robust than DA-ERM to misspecified augmentations, and therefore can learn with fewer (smaller $\alpha$) and stronger (larger $\dau$) augmentations. In addition, DAC generally achieves better generalization than DA-ERM with limited samples.
\end{example}

\section{Beyond Linear Model}\label{sec:beyond_linear}

In this section, we extend our analysis of DAC regularization to non-linear models, including the two-layer neural networks, and DNN-based classifiers with expansion-based augmentations.  

Further, in addition to the popular in-distribution setting where we consider a unique distribution $\Pgt$ for both training and testing, DAC regularization is also known to improve out-of-distribution generalization for settings like domain adaptation. We defer detailed discussion on such advantage of DAC regularization for linear regression in the domain adaptation setting to \Cref{apx:case_ood}.

\subsection{Two-layer Neural Network} \label{sec:example_2relu}
We first generalize our analysis to an illustrative nonlinear model -- two-layer ReLU network.
With $\Xcal = \R^d$ and $\Ycal = \R$, we consider a ground truth distribution $\Pgt\rsep{y}{\xb}$ induced by $y = \rbr{\xb^{\top} \Bb^*}_+ \wb^* + \eps$. 
For the unknown ground truth function $h^*\rbr{\xb} \triangleq \rbr{\xb^{\top} \Bb^*}_+ \wb^*$, $(\cdot)_+ \triangleq \max(0,\cdot)$ denotes the element-wisely ReLU function; ${\Bb^*} = \bmat{\bb_1^* \dots \bb_k^* \dots \bb_q^*} \in \R^{d \times q}$ consists of $\bb_k^* \in \SSS^{d-1}$ for all $k \in [q]$; and $\eps \sim \Ncal\rbr{0, \sigma^2}$ is $\iid$ Gaussian noise.
In terms of the function class $\Hcal$, for some constant $C_w \geq \norm{\wb^*}_1$, let
\begin{align*}
    \Hcal = \csepp{h(\xb) = (\xb^{\top} \Bb)_+ \wb}{\Bb=[\bb_1 \dots \bb_q] \in \R^{d \times q}, \norm{\bb_k}_2=1\ \forall\ j \in [q], \norm{\wb}_1 \leq C_w},
\end{align*}
such that $h^* \in \Hcal$. For regression, we again consider square loss $l(h(\xb), y) = \frac{1}{2}(h(\xb)-y)^2$ and learn with DAC on the first layer: $\rbr{\xb_{i}^{\top} \Bb}_{+} = \rbr{\xb_{i,j}^{\top} \Bb}_{+}$.  

Let $\Deltab \triangleq \wt\Acal(\Xb)-\wt\Mb\Xb$, and $\projnull$ be the projector onto the null space of $\Deltab$. Under mild regularity conditions (\ie, $\alpha N$ being sufficiently large, $\xb$ being subgaussian, and distribution of $\Deltab$ being absolutely continuous, as specified in \Cref{apx:pf_case_2layer_relu}), regression over two-layer ReLU networks with the DAC regularization generalizes as following:
\begin{theorem}[Informal result on two-layer neural network with DAC (formally in \Cref{thm:case_2layer_relu_risk})]\label{thm:case_2layer_relu_risk_informal}
Conditioned on $\Xb$ and $\Deltab$, with $L(h) = \frac{1}{N}\nbr{h(\Xb) - h^*(\Xb)}_2^2$ and $\sqrt{\frac{1}{N}\sum_{i=1}^N \nbr{\projnull \xb_i}^2_2} \le \Cnull$, for any $\delta \in (0,1)$, with probability at least $1-\delta$ over $\epsb$, 
\begin{align*}
    L\rbr{\wh{h}^{dac}} - L\rbr{h^*}
    \lesssim 
    \sigma C_w \Cnull \rbr{\frac{1}{\sqrt{N}} + \sqrt{\frac{\log(1/\delta)}{N}}}.
\end{align*}
\end{theorem}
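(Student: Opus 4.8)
The plan is to reduce the generalization bound for the DAC-regularized two-layer ReLU network to the linear-regression argument from \Cref{sec:linear_regression_label_invariant}, exploiting that the hard DAC constraint $\rbr{\xb_i^\top \Bb}_+ = \rbr{\xb_{i,j}^\top \Bb}_+$ forces the feasible first-layer weights to act identically on $\xb_i$ and its augmentations, and hence to agree on the subspace $\range\rbr{\Deltab^\top}$. First I would observe that for any feasible $h = \rbr{\cdot^\top\Bb}_+ \wb \in \Hcal$ satisfying the DAC constraints, each column $\bb_k$ satisfies $\Deltab \bb_k = \b{0}$ in the relevant linear regime, or more carefully: the ReLU is piecewise linear, so on the finitely many samples the constraint $\rbr{\xb_i^\top\bb_k}_+ = \rbr{\xb_{i,j}^\top\bb_k}_+$ together with absolute continuity of the augmentation distribution means (with probability one) either both pre-activations are nonpositive or they are equal, and in either case the contribution of $\Deltab$-directions to $h$ on the training set is annihilated. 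This lets me write $h\rbr{\xb_i}$ as a function only of $\projnull \xb_i$, reducing the effective complexity.

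Next I would set up the excess-risk decomposition. Since $h^* \in \Hcal$ is feasible (the true augmentations are label-invariant in this subsection, modulo the misspecification constant, and $\wb^*$ has bounded $\ell_1$ norm $\le C_w$), the standard basic-inequality / localized-Rademacher argument applies: $L\rbr{\wh h^{dac}} - L\rbr{h^*}$ is controlled by a supremum over the constrained class of the empirical process $\frac{1}{N}\sum_i \eps_i \rbr{h\rbr{\xb_i} - h^*\rbr{\xb_i}}$. I would bound this via a Rademacher complexity computation for the class of functions $\xb \mapsto \rbr{\xb^\top\Bb}_+\wb$ restricted to inputs of the form $\projnull\xb_i$, using the contraction lemma for the $1$-Lipschitz ReLU, the $\ell_1$ bound $\nbr{\wb}_1 \le C_w$, and the $\ell_2$ bound $\sqrt{\frac1N\sum_i\nbr{\projnull\xb_i}_2^2}\le\Cnull$; these combine to give a Rademacher complexity of order $C_w\Cnull/\sqrt N$. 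The sub-Gaussian noise $\epsb$ then contributes the $\sigma$ factor and, via a Bernstein- or Hoeffding-type tail bound on the noise-weighted empirical process, the high-probability term $\sigma C_w\Cnull\sqrt{\log(1/\delta)/N}$.

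The key steps in order: (i) show feasibility forces $h\rbr{\xb_i}$ to depend only on $\projnull\xb_i$, using absolute continuity of $\Deltab$ and the piecewise-linear structure of ReLU; (ii) write the excess risk via the basic inequality against the feasible $h^*$; (iii) bound the resulting empirical Rademacher complexity of the restricted ReLU class by $O\rbr{C_w\Cnull/\sqrt N}$ via Ledoux--Talagrand contraction plus the standard $\ell_1$-ball / linear-class bound; (iv) assemble the in-expectation and high-probability pieces, inserting $\sigma$ from the noise scale. I would invoke the subgaussianity of $\xb$ and largeness of $\alpha N$ only to guarantee $\Deltab$ has the generic rank and that empirical quantities concentrate around their population analogs, exactly as in \Cref{thm:informal_linear_regression}.

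The main obstacle, I expect, is step (i): handling the ReLU nonlinearity rigorously in the constraint. Unlike the linear case where $\Deltab\thetab = \b0$ is a clean linear constraint, here $\rbr{\xb_i^\top\bb_k}_+ = \rbr{\xb_{i,j}^\top\bb_k}_+$ does not by itself force $\bb_k \in \kernel\rbr{\Deltab}$ — it only constrains behavior on the observed activation pattern. The resolution is to argue that, with probability one over the absolutely-continuous augmentation noise, whenever a feasible $\Bb$ produces differing pre-activations $\xb_i^\top\bb_k \ne \xb_{i,j}^\top\bb_k$ the equality of their positive parts can only hold if both are $\le 0$, so the augmented direction never ``activates'' on the training set, and hence $\wh h^{dac}\rbr{\xb_i}$ genuinely equals a function of $\projnull\xb_i$ for the purpose of fitting the training labels; a measure-zero exceptional set must be excluded, which is where the absolute-continuity hypothesis on the distribution of $\Deltab$ does its work. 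Once this reduction is secured, the remaining complexity bound is routine.
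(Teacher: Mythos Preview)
Your steps (ii)--(iv) track the paper closely: basic inequality against the feasible $h^*$, then a Gaussian/Rademacher complexity bound on the restricted class using contraction for the $1$-Lipschitz ReLU and the $\ell_1$ constraint on $\wb$, yielding $C_w\Cnull/\sqrt{N}$, followed by Gaussian concentration of the Lipschitz functional $\epsb\mapsto\sup_h\frac{1}{N}\epsb^\top h(\Xb)$ for the $\sqrt{\log(1/\delta)/N}$ term. That part is essentially the paper's argument.

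The gap is in step (i). Your dichotomy ``either both pre-activations are nonpositive or they are equal'' is a tautology about $(\cdot)_+$ and does not use absolute continuity; more importantly, it does \emph{not} yield that $h(\xb_i)$ depends only on $\projnull\xb_i$. What you actually get is: for each $(i,j)$, either $(\xb_i-\xb_{i,j})^\top\bb_k=0$ or $\xb_i^\top\bb_k\le 0$. This says $\bb_k$ is orthogonal to the rows of $\Deltab$ \emph{corresponding to active samples}, not to all of $\range(\Deltab^\top)$; without that, $(\xb_i^\top\bb_k)_+\neq((\projnull\xb_i)^\top\bb_k)_+$ in general and the complexity bound cannot be restricted to the null space. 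The paper closes this gap with two ingredients you did not identify: (a) a Chernoff argument using the symmetry of $\xb$ (so $\PP[\xb^\top\bb_k>0]=\tfrac12$) to show that for any fixed direction $\bb_k$ at least $N/4$ samples are active, hence at least $\alpha N/4\ge\dau$ active rows of $\Deltab$ are orthogonal to $\bb_k$; and (b) a total-invertibility lemma---\emph{this} is where absolute continuity of $\Deltab$ is used---showing that any $\dau$ rows of $\Deltab$ already span $\range(\Deltab^\top)$. Together these force $\projrg\bb_k=0$, which is strictly stronger than what your dichotomy yields and is what makes the Gaussian-width computation with $\projnull\xb_i$ legitimate. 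Your remark that $\alpha N$ large and absolute continuity are only needed ``to guarantee $\Deltab$ has the generic rank and that empirical quantities concentrate'' therefore misidentifies their role.
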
 

Recall $\dau = \rank(\Deltab)$. With a sufficiently large $N$ (as specified in \Cref{apx:pf_case_2layer_relu}), we have $\Cnull \lesssim \sqrt{d-\dau}$ with high probability\footnote{Here we only account for the randomness in $\Xb$ but not that in $\Deltab|\Xb$ which characterizes $\dau$ for conciseness. We refer the readers to \Cref{apx:pf_case_2layer_relu} for a formal tail bound on $\Cnull$.}. Meanwhile, applying DA-ERM directly on the augmented samples achieves no better than $L(\wh h^{\herm}) - L(h^{*}) \lesssim \sigma C_w {\max\rbr{\sqrt{\frac{d}{(\alpha+1) N}}, \sqrt{\frac{d - \dau}{N}}}}$, where the first term corresponds to the generalization bound for a $d$-dimensional regression with $(\alpha+1) N$ \emph{i.i.d.} samples (in contrast to augmented samples that are potentially dependent); and the second term follows as the augmentations $\wt\Acal\rbr{\Xb}$ keep a $(d - \dau)$-dimensional subspace (\ie, the null space of $\Deltab=\wt\Acal(\Xb)-\wt\Mb\Xb$) intact, in which DA-ERM can only rely on the $N$ original samples for learning. In specific, the first term will dominate the $\max$ with limited augmented data (i.e., $\alpha$ being small). 

Comparing the two, we see that DAC tends to be more efficient than DA-ERM, and such advantage is enhanced with strong but limited data augmentations (i.e., large $\dau$ and small $\alpha$). For instance, with $\alpha = 1$ and $\dau = d - 1$, the generalization error of DA-ERM scales as $\sqrt{{d}/{N}}$, while DAC yields a dimension-free $\sqrt{{1}/{N}}$ error.

As a synopsis for the regression cases in \Cref{sec:linear_regression_label_invariant}, \Cref{subsec:finite_lambda}, and \Cref{sec:example_2relu} generally, the effect of DAC regularization can be casted as a dimension reduction by $\dau$ -- dimension of the subspace perturbed by data augmentations where features contain scarce label information.

\subsection{Classification with Expansion-based Augmentations}\label{subsec:expansion_based}

A natural generalization of the dimension reduction viewpoint on DAC regularization in the regression setting is the complexity reduction for general function classes. Here we demonstrate the power of DAC on function class reduction in a DNN-based classification setting.

Concretely, we consider a multi-class classification problem: given a probability space $\Xcal$ with marginal distribution $\Pgt(\xb)$ and $K$ classes $\Ycal=[K]$, let $h^*: \Xcal \to [K]$ be the ground truth classifier, partitioning $\Xcal$ into $K$ disjoint sets $\cbr{\Xcal_k}_{k \in [K]}$ such that $\Pgt\rbr{y|\xb} = \b{1}\cbr{y = h^*\rbr{\xb}} = \b{1}\cbr{\xb \in \Xcal_y}$.
In the classification setting, we replace \Cref{def:causal_invar_data_aug} with the notion of \textit{expansion-based data augmentations} introduced in \cite{wei2021theoretical, cai2021theory}.

\begin{definition}[Expansion-based augmentations (formally in \Cref{def:generalized-causal-invariant-data-augmentation})]
\label{def:generalized-causal-invariant-data-augmentation_informal}
With respect to an augmentation function $\Acal:\Xcal \to 2^{\Xcal}$, let $\nbh(S) \triangleq \cup_{\xb \in S} \cbr{\xb' \in \Xcal ~\big|~ \Acal(\xb) \cap \Acal(\xb') \neq \emptyset}$ be the neighborhood of $S \subseteq \Xcal$.
For any $c>1$, we say that $\Acal$ induces $c$-expansion-based data augmentations if (a) $\cbr{\xb} \subsetneq \Acal(\xb) \subseteq \cbr{\xb' \in \Xcal ~|~ h^*(\xb) = h^*(\xb')}$ for all $\xb \in \Xcal$; and (b) for all $k \in [K]$, given any $S \subseteq \Xcal$ with $\Pgt\rbr{S \cap \Xcal_k} \leq \frac{1}{2}$, $\Pgt\rbr{\nbh\rbr{S} \cap \Xcal_k} \geq \min\cbr{c \cdot \Pgt\rbr{S \cap \Xcal_k},1}$.
\end{definition}

Particularly, \Cref{def:generalized-causal-invariant-data-augmentation_informal}(a) enforces that the ground truth classifier $h^*$ is invariant throughout each neighborhood. Meanwhile, the expansion factor $c$ in \Cref{def:generalized-causal-invariant-data-augmentation_informal}(b) serves as a quantification of augmentation strength -- a larger $c$ implies a stronger augmentation $\Acal$.

We aim to learn $h(\xb) \triangleq \argmax_{k \in [K]}\ f(\xb)_k$ with loss $l_{01}\rbr{h(\xb),y} = \b1\cbr{h(\xb) \neq y}$ from $\Hcal$ induced by the class of $p$-layer fully connected neural networks with maximum width $q$, $\Fcal = \csepp{f: \Xcal \to \R^K}{f = f_{2p-1} \circ \dots \circ f_1,}$ where $f_{2\iota-1}(\xb) = \Wb_{\iota} \xb,\ f_{2\iota}(\epsb)=\varphi(\epsb)$, $\Wb_{\iota} \in \R^{d_{\iota} \times d_{\iota-1}}$ $\forall \iota \in [p]$, $q \triangleq \max_{\iota\in[p]} d_{\iota}$, and $\varphi$ is the activation function. 

Over a general probability space $\Xcal$, DAC with expansion-based augmentations requires stronger conditions than merely consistent classification over $\Acal(\xb_i)$ for all labeled training samples $i \in [N]$. Instead, we enforce a large robust margin $m_{\Acal}(f,\xb^u)$ (adapted from \cite{wei2021theoretical}, see \Cref{apx:generalized_DAC}) over an finite set of unlabeled samples $\Xb^u$ that is independent of $\Xb$ and drawn $\iid$ from $P(\xb)$. Intuitively, $m_{\Acal}(f, \xb^u)$ measures the maximum allowed perturbation in all parameters of $f$ such that predictions remain consistent throughout $\Acal\rbr{\xb^u}$ ($\eg$, $m_{\Acal}(f,\xb^u) > 0$ is equivalent to enforcing consistent classification outputs).
For any $0< \tau \leq \max_{f \in \Fcal}\ \inf_{\xb^u \in \Xcal} m_{\Acal}(f, \xb^u)$, the DAC regularization reduces the function class $\Hcal$ to
\begin{align*}
    \Hred \triangleq \csepp{h \in \Hcal}{m_{\Acal}(f,\xb^u)> \tau \quad \forall\ \xb^u \in \Xb^u}.
\end{align*}
Then for $\hgdacfin = \argmin_{h \in \Hred} \frac{1}{N} \sum_{i=1}^N l_{01}\rbr{h(\xb_i),y_i}$, we have the following.

\begin{theorem}[Informal result on classification with DAC (formally in \Cref{thm:generalized-dac-finite-unlabeled})]
\label{thm:generalized-dac-finite-unlabeled_informal}
Given an augmentation function $\Acal$ that induces $c$-expansion-based data augmentations (\Cref{def:generalized-causal-invariant-data-augmentation_informal}) such that
\begin{align*}
    \mu \triangleq \sup_{h \in \Hred} \PP_{\Pgt} \sbr{\exists\ \xb' \in \Acal(\xb): h(\xb) \neq h(\xb')} \leq \frac{c-1}{4},
\end{align*}
for any $\delta \in (0,1)$, with probability at least $1-\delta$, we have $\mu \leq \wt O \rbr{\frac{\sum_{\iota=1}^p \sqrt{q} \norm{\Wb_{\iota}}_F}{\tau \sqrt{\abbr{\Xb^u}}} + \sqrt{\frac{p \log \abbr{\Xb^u}}{\abbr{\Xb^u}}}}$ such that
\begin{align*}
    L_{01}\rbr{\hgdacfin} - L_{01}\rbr{h^*} 
    \lesssim & \sqrt{\frac{K \log (N)}{N} + \frac{\mu}{\min\cbr{c-1,1}}} 
    + \sqrt{\frac{\log(1/\delta)}{N}}.
\end{align*}
\end{theorem}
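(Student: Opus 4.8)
\textbf{Proof proposal for Theorem~\ref{thm:generalized-dac-finite-unlabeled_informal} (classification with expansion-based DAC).}

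The plan is to decompose the excess risk of $\hgdacfin$ into three conceptual pieces and bound each separately: (i) a \emph{generalization} piece accounting for the finite labeled sample $\Xb$, (ii) an \emph{approximation/consistency} piece measuring how much the true classifier $h^*$ can fail to lie in the reduced class $\Hred$, which turns out to be controlled by $\mu$ via the expansion property, and (iii) a \emph{complexity} piece that bounds $\mu$ itself using the finite unlabeled sample $\Xb^u$ and the Rademacher complexity of the robust-margin-constrained network class. First I would recall from the expansion-based framework of \cite{wei2021theoretical,cai2021theory} the key deterministic lemma: if a classifier $h$ is consistent with $h^*$ on the labeled points and has population augmentation-disagreement probability $\mu = \PP_{\Pgt}[\exists\, \xb' \in \Acal(\xb): h(\xb) \neq h(\xb')] \le \frac{c-1}{4}$, then under $c$-expansion (\Cref{def:generalized-causal-invariant-data-augmentation_informal}) the population $0$-$1$ error of $h$ relative to $h^*$ is at most $O\rbr{\frac{\mu}{\min\cbr{c-1,1}}}$ plus the empirical error. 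The intuition is that the set of points where $h$ disagrees with $h^*$ must (after one neighborhood expansion) either shrink below the consistency region or be forced to expand by factor $c$, and a region that both has bounded augmentation-disagreement and is forced to expand cannot occupy too much mass in any class $\Xcal_k$.

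Next I would handle the generalization piece: since $\Hred \subseteq \Hcal$ and $\hgdacfin$ minimizes empirical $0$-$1$ risk over $\Hred$, a standard uniform-convergence argument over the $K$-class neural-network hypothesis class gives $L_{01}(\hgdacfin) - \wh L_{01}(\hgdacfin) \lesssim \sqrt{\frac{K \log N}{N}} + \sqrt{\frac{\log(1/\delta)}{N}}$, and since $h^* \in \Hcal$ realizes zero empirical risk (labels are noiseless, $\Pgt(y|\xb)=\b1\{\xb\in\Xcal_y\}$) while $\hgdacfin$ need not, I would instead carry $\wh L_{01}(\hgdacfin) \le \wh L_{01}(h^*)=0$ only if $h^* \in \Hred$; in general $h^*$ lies in $\Hred$ up to the robust-margin slack, so the empirical risk of the constrained minimizer is still $0$ as long as $\tau$ is chosen below $\inf_{\xb^u} m_{\Acal}(h^*,\xb^u)$, which the hypothesis $0<\tau \le \max_f \inf_{\xb^u} m_{\Acal}(f,\xb^u)$ permits. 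Combining with the expansion lemma yields the $\sqrt{\frac{K\log N}{N} + \frac{\mu}{\min\{c-1,1\}}} + \sqrt{\frac{\log(1/\delta)}{N}}$ bound, conditional on the stated bound on $\mu$.

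Finally, the bound on $\mu$ itself: $\mu = \sup_{h \in \Hred}\PP_{\Pgt}[\exists \xb'\in\Acal(\xb): h(\xb)\neq h(\xb')]$ is a supremum over the reduced class of a population quantity, and every $h \in \Hred$ satisfies $m_{\Acal}(f,\xb^u) > \tau$ on all of $\Xb^u$ by definition. I would invoke the all-layer-margin generalization bound of \cite{wei2021theoretical}: the population fraction of points with small robust margin is controlled by the empirical fraction (which is zero on $\Xb^u$ at threshold $\tau$) plus $\wt O\rbr{\frac{\sum_{\iota=1}^p \sqrt{q}\,\nbr{\Wb_\iota}_F}{\tau\sqrt{\abbr{\Xb^u}}} + \sqrt{\frac{p\log\abbr{\Xb^u}}{\abbr{\Xb^u}}}}$, where the first term is the Rademacher complexity of the margin functional for $p$-layer width-$q$ networks and the second is a low-order covering/union-bound term. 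Since $\xb^u$ having $m_{\Acal}(f,\xb^u) > \tau$ implies prediction consistency throughout $\Acal(\xb^u)$, this directly bounds $\mu$. The main obstacle I anticipate is \textbf{step (iii)} — correctly importing and adapting the all-layer (robust) margin machinery of \cite{wei2021theoretical} so that the empirical margin being bounded away from zero on the \emph{unlabeled} sample $\Xb^u$ (independent of $\Xb$) cleanly transfers to a population disagreement bound, and ensuring the constants and logarithmic factors in the Rademacher term are tracked consistently with the statement; the expansion lemma and the labeled-sample uniform convergence are comparatively routine.
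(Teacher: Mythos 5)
Your overall skeleton (labeled-sample generalization, expansion-based control via $\mu$, and an all-layer-margin bound on $\mu$ from the unlabeled set) matches the paper's architecture, and your step (iii) is exactly what the paper does: it is a black-box citation of \cite{wei2021theoretical} Theorem 3.7 / \cite{cai2021theory} Proposition 2.2 (\Cref{prop:non-robust-upper-bound}), so the piece you flag as the "main obstacle" requires no new work. The genuine gap is in your steps (i)--(ii). The $\sqrt{K\log N/N}$ term does \emph{not} come from "standard uniform convergence over the $K$-class neural-network hypothesis class": uniform convergence over $\Fcal$ would reintroduce the width/depth/weight-norm complexity of the network, which is precisely what the DAC reduction is designed to eliminate, and no such class has an $O(\sqrt{K\log N/N})$ rate. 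Likewise, your "expansion lemma" -- population error of $h$ relative to $h^*$ is at most empirical error plus $O(\mu/\min\{c-1,1\})$ -- is not what expansion gives. The paper's \Cref{lemma:minority-set-upper-bound-expansion-assumption} (from \cite{cai2021theory}) only bounds, for each $h\in\Hred$, the mass of its \emph{minority set} $M$, i.e., where $h$ disagrees with its own per-class majority label, by roughly $2\mu/\min\{c-1,1\}$; it says nothing about whether those majority labels coincide with the true class labels, and being correct on the $N$ labeled points does not by itself control that at the population level.

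The missing mechanism -- and the actual content of the paper's proof -- is the bound on the Rademacher complexity of the reduced class over the labeled sample. Because every $h\in\Hred$ is determined on a sample, up to its $K$ majority labels and a minority set of mass at most $\wt\mu$, the expected number of distinct behaviors of $l_{01}\circ\Hred$ on $N$ points is at most $(2N)^K e^{N\wt\mu}$; Massart's lemma then yields $\Rfrak_N\rbr{l_{01}\circ\Hred}\le\sqrt{\tfrac{2K\log(2N)}{N}+2\wt\mu}$, which is plugged into the standard excess-risk bound of \Cref{lemma:rademacher_generalization_bound} (valid since the choice of $\tau$ guarantees $h^*\in\Hred$, as you correctly note). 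This single counting step is what produces both the $K\log N/N$ term and the $\mu/\min\{c-1,1\}$ term, under one square root, and it simultaneously handles the "wrong majority label" issue you silently assume away. As written, your step (i) asserts a rate with no valid source and your step (ii) asserts a lemma stronger than expansion provides, so the proposal does not close without supplying this growth-function argument (or an equivalent uniform-convergence argument over the data-dependent majority-label assignments).
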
  

In particular, DAC regularization leverages the unlabeled samples $\Xb^u$ and effectively decouples the labeled sample complexity $N = \wt O\rbr{K}$ from the complexity of the function class $\Hcal$ (characterized by $\cbr{\Wb_\iota}_{\iota \in [p]}$ and $q$ and encapsulated in $\mu$) via the reduced function class $\Hred$. Notably, \Cref{thm:generalized-dac-finite-unlabeled_informal} is reminiscent of \cite{wei2021theoretical} Theorem 3.6, 3.7, and \cite{cai2021theory} Theorem 2.1, 2.2, 2.3. We unified the existing theories under our function class reduction viewpoint to demonstrate its generality.

\section{Experiments}\label{sec:dac_experiments}

In this section, we empirically verify that training with DAC learns more efficiently than DA-ERM. The dataset is derived from CIFAR-100, where we randomly select 10,000 labeled data as the training set (i.e., 100 labeled samples per class). During the training time, given a training batch, we generate augmentations by RandAugment \citep{cubuk2020randaugment}. We set the number of augmentations per sample to 7 unless otherwise mentioned.

The experiments focus on comparisons of 1) training with consistency regularization (DAC), and 2) empirical risk minimization on the augmented dataset (DA-ERM). We use the same network architecture (a WideResNet-28-2 \citep{zagoruyko2016wide}) and the same training settings (e.g., optimizer, learning rate schedule, etc) for both methods. We defer the detailed experiment settings to \Cref{apdx:exp_detail}. Our test set is the standard CIFAR-100 test set, and we report the average and standard deviation of the testing accuracy of 5 independent runs. The consistency regularizer is implemented as the $l_2$ distance of the model's predictions on the original and augmented samples.

\begin{table*}[!t]
\centering
\caption{Testing accuracy of DA-ERM and DAC with different $\lambda$'s (regularization coeff.).}
\label{table:different_lambda}
\begin{tabular}{c|ccccc}
\hline
\multirow{2}{*}{DA-ERM} & \multicolumn{5}{c}{DAC Regularization}                                              \\
                                       & $\lambda=0$ & $\lambda=1$ & $\lambda=5$ & $\lambda=10$ & $\lambda=20$            \\ \hline
$69.40 \pm 0.05$                                  & $62.82 \pm 0.21$       & $68.63 \pm 0.11$       & $\mathbf{70.56 \pm 0.07}$       & $\mathbf{70.52 \pm 0.14}$        & $68.65 \pm 0.27$     \\ \hline
\end{tabular}
\end{table*}

\textbf{Efficacy of DAC regularization.} We first show that the DAC regularization learns more efficiently than DA-ERM. The results are listed in \Cref{table:different_lambda}. In practice, the augmentations almost always alter the label distribution, we therefore follow the discussion in \cref{subsec:finite_lambda} and adopt a finite $\lambda$ (i.e., the multiplicative coefficient before the DAC regularization, see \Cref{eq:dac_soft}). With proper choice of $\lambda$, training with DAC significantly improves over DA-ERM.

\begin{table*}[!t]
\centering
\caption{Testing accuracy of DA-ERM and DAC with different numbers of augmentations.}
\label{table:different_number_of_aug}
\begin{tabular}{c|cccc}
\hline
Number of Augmentations & 1 & 3 & 7 & 15 \\ \hline
DA-ERM                     & $67.92 \pm 0.08$  & $69.04 \pm 0.05$  & $69.25 \pm 0.16$  &  $69.30 \pm 0.11$  \\
DAC ($\lambda=10$)       & $\mathbf{70.06 \pm 0.08}$  & $\mathbf{70.77 \pm 0.20}$  & $\mathbf{70.74 \pm 0.11}$  &  $\mathbf{70.31 \pm 0.12}$  \\ \hline
\end{tabular}
\end{table*}

\begin{table*}[!t]
\centering
\caption{Testing accuracy of ERM and DAC regularization with different numbers of labeled data.}
\label{table:different_labeled_samples}
\begin{tabular}{c|ccc}
\hline
Number of Labeled Data & 1000 & 10000 & 20000  \\ \hline
DA-ERM                    & $31.11 \pm 0.30$ & $68.89 \pm 0.07$ &  $\mathbf{76.79 \pm 0.13}$   \\
DAC ($\lambda=10$)      & $\mathbf{33.59 \pm 0.41}$ & $\mathbf{70.71 \pm 0.10}$  & $\mathbf{76.86 \pm 0.16}$   \\ \hline
\end{tabular}
\end{table*}

\begin{table*}[!t]
\centering
\caption{DAC performs well under misspecified augmentations after tuning $\lambda$.}
\label{table:misspecified_lambda}
\begin{tabular}{c|c|c|c|c}
\hline
No Augmentation & DA-ERM & DAC ($\lambda=0.1$) & DAC ($\lambda=1$) & DAC ($\lambda=10$) \\ \hline
$62.82 \pm 0.21$ & $61.35 \pm 0.27$   & $63.73 \pm 0.33$ & $\mathbf{64.30 \pm 0.20} $    & $64.00 \pm 0.26$    \\ \hline
\end{tabular}
\end{table*}

\begin{table}[t]
\centering
\caption{DAC helps FixMatch when the unlabeled data is scarce.}
\vspace{-0.5em}
\label{table:combining_with_SSL}
\begin{tabular}{c|ccc}
\hline
Number of Unlabeled Data     & 5000 & 10000 & 20000 \\ \hline
FixMatch                     &  67.74    &  69.23     & 70.76          \\
FixMatch + DAC ($\lambda=1$) &  \textbf{71.24}    &   \textbf{72.7}    &  \textbf{74.04}     \\ \hline
\end{tabular}
\end{table}

\textbf{DAC regularization helps more with limited augmentations.} Our theoretical results suggest that the DAC regularization learns efficiently with a limited number of augmentations. While keeping the number of labeled samples to be 10,000, we evaluate the performance of the DAC regularization and DA-ERM with different numbers of augmentations. The number of augmentations for each training sample ranges from 1 to 15, and the results are listed in \Cref{table:different_number_of_aug}. The DAC regularization offers a more significant improvement when the number of augmentations is small. This clearly demonstrates that the DAC regularization learns more efficiently than DA-ERM.

\begin{figure}[!t]
    \centering
	\includegraphics[clip, trim={50 60 50 60}, width=.6\linewidth]{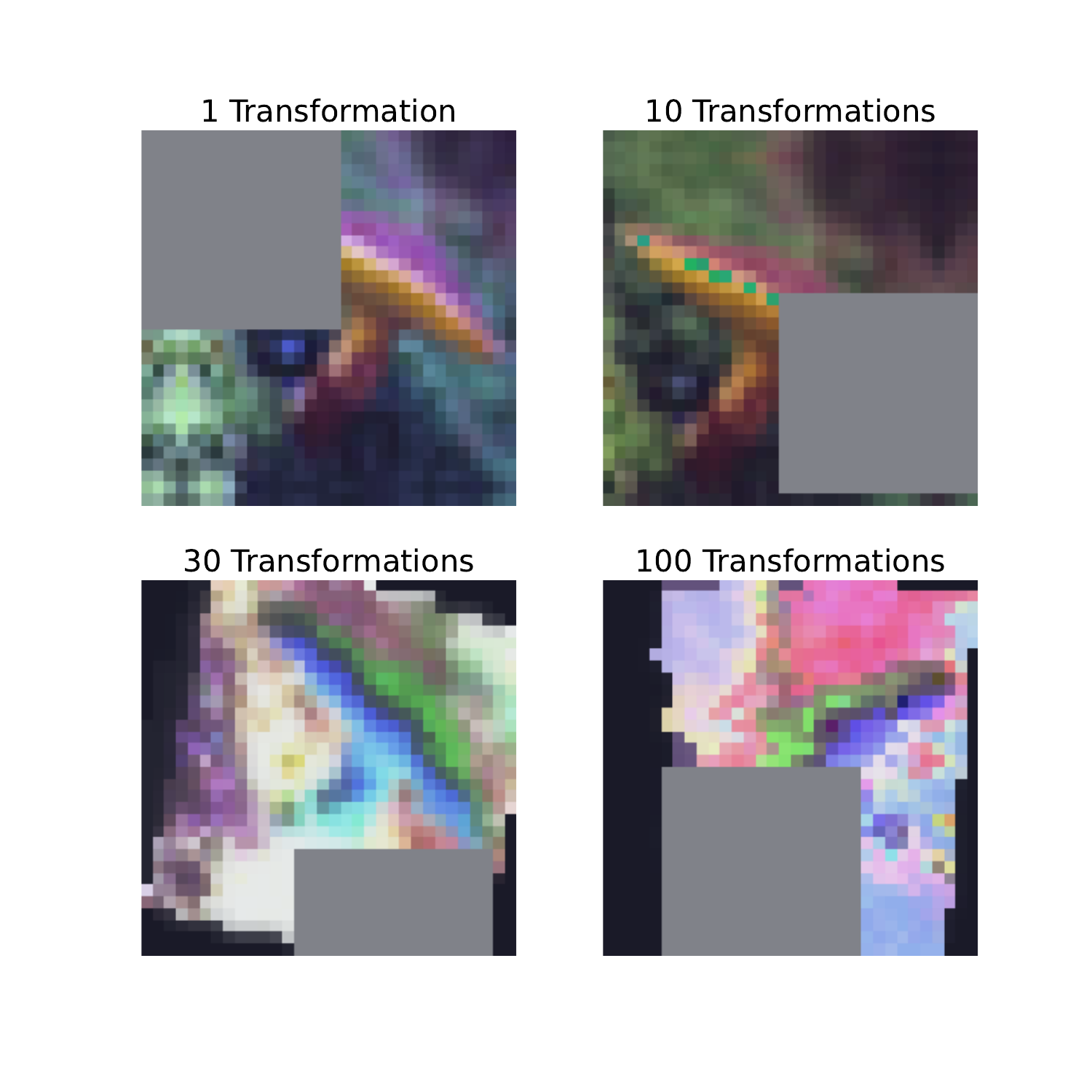}
	\vspace{-2em}
	\caption{Different numbers of transformations.}
    \label{fig:augmentation_strength}
\end{figure}

\textbf{DAC regularization helps more when data is scarce.} We conduct experiments with different numbers of labeled samples, ranging from 1,000 (i.e., 10 images per class) to 20,000 samples (i.e., 200 images per class). We generate 3 augmentations for each of the samples during the training time, and the results are presented in \Cref{table:different_labeled_samples}. Notice that the DAC regularization gives a bigger improvement over DA-ERM when the labeled samples are scarce. This matches the intuition that when there are sufficient training samples, data augmentation is less necessary. Therefore, the difference between different ways of utilizing the augmented samples becomes diminishing.

\textbf{DAC performs well under misspecified augmentations.}
As suggested by \Cref{thm:formal_linear_regression_soft}, DAC is more robust to misspecified augmentations with proper $\lambda$. We further empirically verify this result with misspecified augmentations - where the augmentations are generated by applying 100 random transformations. When too many transformations are applied (see illustration in \Cref{fig:augmentation_strength}), the augmentation will alter the label distribution and is thus misspecified. The results are presented in \Cref{table:misspecified_lambda}. Notice that with $\lambda = 1$, the DAC delivers the best accuracy, which supports our theoretical results.

Further, comparing the results of \Cref{table:different_lambda} and \Cref{table:misspecified_lambda}, we see that the optimal $\lambda$ is different when the augmentations are misspecified. Because of the flexibility in choosing $\lambda$, DAC is able to outperform DA-ERM, which matches the result in \Cref{thm:formal_linear_regression_soft}.

\textbf{Combining with a semi-supervised learning algorithm.} Here we show that the DAC regularization can be easily extended to the semi-supervised learning setting. We take the previously established semi-supervised learning method FixMatch \citep{sohn2020fixmatch} as the baseline and adapt the FixMatch by combining it with the DAC regularization. Specifically, besides using FixMatch to learn from the unlabeled data, we additionally generate augmentations for the labeled samples and apply DAC. In particular, we focus on the data-scarce regime by only keeping 10,000 labeled samples and at most 20,000 unlabeled samples. Results are listed in \Cref{table:combining_with_SSL}. We see that the DAC regularization also improves the performance of FixMatch when the unlabeled samples are scarce. This again demonstrates the efficiency of learning with DAC.

\section{Conclusion}

In this paper, we take a step toward understanding the statistical efficiency of DAC with limited data augmentations. At the core, DAC is statistically more efficient because it reduces problem dimensions by enforcing consistency regularization. 

We demonstrate the benefits of DAC compared to DA-ERM (expanding training set with augmented samples) both theoretically and empirically. Theoretically, we show a strictly smaller generalization error under linear regression, and explicitly characterize the generalization upper bound for two-layer neural networks and expansion-based data augmentations. We further show that DAC better handles the label misspecification caused by strong augmentations. Empirically, we provide apples-to-apples comparisons between DAC and DA-ERM. These together demonstrate the superior efficacy of DAC over DA-ERM.


\chapter{Adaptively Weighted Data Augmentation Consistency Regularization: Application in Medical Image Segmentation}\label{ch:adawac}

\subsection*{Abstract}
Concept shift is a prevailing problem in natural tasks like medical image segmentation where samples usually come from different subpopulations with variant correlations between features and labels. One common type of concept shift in medical image segmentation is the ``information imbalance'' between \emph{label-sparse} samples with few (if any) segmentation labels and \emph{label-dense} samples with plentiful labeled pixels.
Existing distributionally robust algorithms have focused on adaptively truncating/down-weighting the ``less informative'' (\ie, label-sparse in our context) samples.  
To exploit data features of label-sparse samples more efficiently, we propose an adaptively weighted online optimization algorithm --- \ours --- to incorporate data augmentation consistency regularization in sample reweighting. Our method introduces a set of trainable weights to balance the supervised loss and unsupervised consistency regularization of each sample separately. At the saddle point of the underlying objective, the weights assign label-dense samples to the supervised loss and label-sparse samples to the unsupervised consistency regularization.
We provide a convergence guarantee by recasting the optimization as online mirror descent on a saddle point problem. Our empirical results demonstrate that \ours not only enhances the segmentation performance and sample efficiency but also improves the robustness to concept shift on various medical image segmentation tasks with different UNet-style backbones.\footnote{This chapter is based on the following published conference paper: \\
\bibentry{dong2022adawac}~\citep{dong2022adawac}.}

\section{Introduction}
Modern machine learning is revolutionizing the field of medical imaging, especially in computer-aided diagnosis with computed tomography (CT) and magnetic resonance imaging (MRI) scans. 
However, classical learning objectives like empirical risk minimization (ERM) generally assume that training samples are independently and identically (\iid) distributed, whereas real-world medical image data rarely satisfy this assumption. 
\Cref{fig:synapse_case40_sparse_dense_ce_weights} instantiates a common observation in medical image segmentation where the segmentation labels corresponding to different cross-sections of the human body tend to have distinct proportions of labeled (\ie, non-background) pixels, which is accurately reflected by the evaluation of supervised cross-entropy loss during training. 
We refer to this as the ``information imbalance'' among samples, as opposed to the well-studied ``class imbalance''~\citep{wong2018segmentation,taghanaki2019combo,yeung2022unified} among the numbers of segmentation labels in different classes.
Such information imbalance induces distinct difficulty/paces of learning with the cross-entropy loss for different samples~\citep{wang2021survey, tullis2011effectiveness,tang2018attention,hacohen2019power}.
Specifically, we say a sample is \emph{label-sparse} when it contains very few (if any) segmentation labels; in contrast, a sample is \emph{label-dense} when its segmentation labels are prolific. Motivated by the information imbalance among samples, we explore the following questions:
\begin{center}
    \textit{
        What is the effect of separation between sparse and dense labels on segmentation?
        \\ 
        Can we leverage such information imbalance to improve the segmentation accuracy?
    }   
\end{center}

We formulate the mixture of label-sparse and label-dense samples as a concept shift --- a type of distribution shift in the conditional distribution of labels given features $P\rsep{\yb}{\xb}$. 
Coping with concept shifts, prior works have focused on adaptively truncating (hard-thresholding) the empirical loss associated with label-sparse samples.  These include the Trimmed Loss Estimator~\citep{shen2019learning}, MKL-SGD~\citep{pmlr-v108-shah20a}, Ordered SGD~\citep{kawaguchi2020ordered}, and the quantile-based Kacmarz algorithm~\citep{haddock2022quantile}. 
Alternatively, another line of works~\citep{wang2018iterative,sagawa2019distributionally} proposes to relax the hard-thresholding operation to soft-thresholding by down-weighting instead of truncating the less informative samples.  
However, diminishing sample weights reduces the importance of both the features and the labels simultaneously, which is still not ideal as the potentially valuable information in the features of the label-sparse samples may not be fully used.

\begin{figure}[ht]
    \centering
    \includegraphics[width=\linewidth]{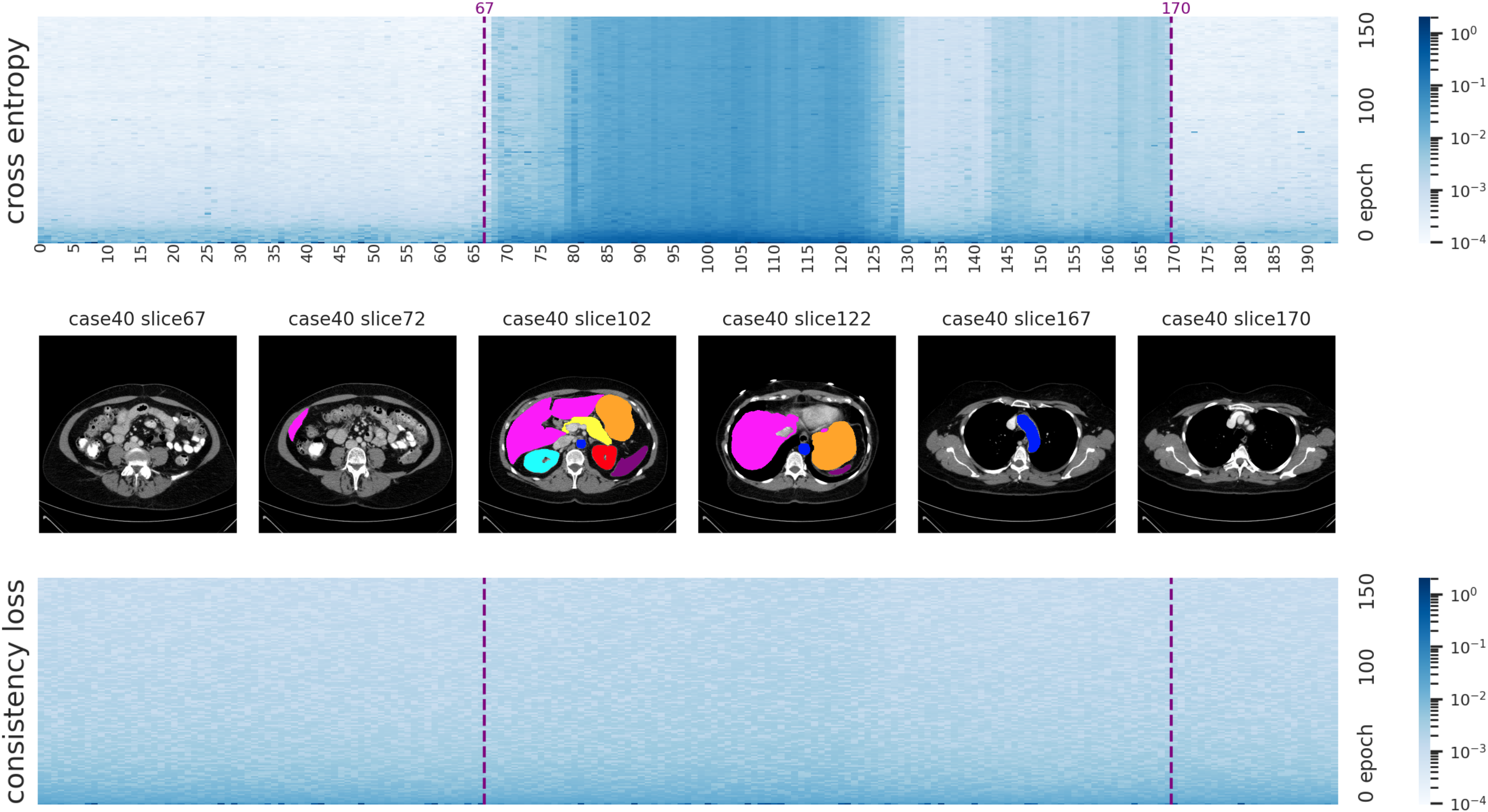}
    \caption{Evolution of cross-entropy losses versus consistency regularization terms for slices at different cross-sections of the human body in the Synapse dataset (described in \Cref{sec:experiments_adawac}) during training.}
    \label{fig:synapse_case40_sparse_dense_ce_weights}
\end{figure}

For further exploitation of the feature of training samples, we propose the incorporation of \emph{data augmentation consistency regularization} on label-sparse samples. 
As a prevalent strategy for utilizing unlabeled data, consistency regularization~\citep{bachman2014learning,laine2016temporal,sohn2020fixmatch} encourages data augmentations of the same samples to lie in the vicinity of each other on a proper manifold.  
For medical imaging segmentation, consistency regularization has been extensively studied in the semi-supervised learning setting~\citep{bortsova2019semi,zhao2019data,li2020transformation,wang2021deep,zhang2021multi,zhou2021ssmd,basak2022embarrassingly} as a strategy for overcoming label scarcity.
Nevertheless, unlike general vision tasks, for medical image segmentation, the scantiness of unlabeled image data can also be a problem due to regulations and privacy considerations~\cite{karimi2020deep}, which makes it worthwhile to reminisce the more classical supervised learning setting. 
In contrast to the aforementioned semi-supervised strategies, we  explore the potency of consistency regularization in the \emph{supervised learning} setting by leveraging the information in the features of label-sparse samples via data augmentation consistency regularization.

To naturally distinguish the label-sparse and label-dense samples, we make a key observation that the unsupervised consistency regularization on encoder layer outputs (of a UNet-style architecture) is much more uniform across different subpopulations than the supervised cross-entropy loss (as exemplified in \Cref{fig:synapse_case40_sparse_dense_ce_weights}).
Since the consistency regularization is characterized by the marginal distribution of features $P(\xb)$ but not labels, and therefore is less affected by the concept shift in $P\rsep{\yb}{\xb}$, it serves as a natural reference for separating the label-sparse and label-dense samples. 
In light of this observation, we present the \emph{weighted data augmentation consistency (WAC) regularization} --- a minimax formulation that reweights the cross-entropy loss versus the consistency regularization associated with each sample via a set of trainable weights. 
At the saddle point of this minimax formulation, the WAC regularization automatically separates samples from different subpopulations by assigning all weights to the consistency regularization for label-sparse samples, and all weights to the cross-entropy terms for label-dense samples.

We further introduce an adaptively weighted online optimization algorithm --- \ours --- for solving the minimax problem posed by the WAC regularization, which is inspired by a mirror-descent-based algorithm for distributionally robust optimization~\citep{sagawa2019distributionally}. By adaptively learning the weights between the cross-entropy loss and consistency regularization of different samples, \ours comes with both a convergence guarantee and empirical success.

The main contributions are summarized as follows:
\begin{itemize}
    \item We introduce the \emph{WAC regularization} that leverages the consistency regularization on the encoder layer outputs (of a UNet-style architecture) as a natural reference to distinguish the label-sparse and label-dense samples (\Cref{sec:wac}). 
    \item We propose an adaptively weighted online optimization algorithm --- \ours --- for solving the WAC regularization problem with a convergence guarantee (\Cref{sec:ada_wac}). 
    \item Through extensive experiments on different medical image segmentation tasks with different UNet-style backbone architectures, we demonstrate the effectiveness of \ours not only for enhancing the segmentation performance and sample efficiency but also for improving the robustness to concept shift (\Cref{sec:experiments_adawac}).
\end{itemize}

\subsection{Related Work}

\paragraph{Sample reweighting.} 
Sample reweighting is a popular strategy for dealing with distribution/subpopulation shifts in training data where different weights are assigned to samples from different subpopulations. 
In particular, the distributionally-robust optimization (DRO) framework~\citep{bental2013robust,duchi2016statistics,duchi2018learning,sagawa2019distributionally} considers a collection of training sample groups from different distributions.  With the explicit grouping of samples, the goal is to minimize the worst-case loss over the groups.
Without prior knowledge of sample grouping, importance sampling~\citep{needell2014stochastic,zhao2015stochastic,alain2015variance,loshchilov2015online,gopal2016adaptive,katharopoulos2018not}, iterative trimming~\citep{kawaguchi2020ordered,shen2019learning}, and empirical-loss-based reweighting~\citep{Wu_Xie_Du_Ward_2022} are commonly incorporated in the stochastic optimization process for adaptive reweighting and separation of samples from different subpopulations.

\paragraph{Data augmentation consistency regularization.}
As a popular way of exploiting data augmentations, consistency regularization encourages models to learn the vicinity among augmentations of the same sample based on the assumption that data augmentations generally preserve the semantic information in data and therefore lie closely on proper manifolds. Beyond being a powerful building block in semi-supervised~\citep{bachman2014learning,sajjadi2016regularization,laine2016temporal,sohn2020fixmatch,berthelot2019mixmatch} and self-supervised~\citep{wu2018unsupervised,he2020momentum,chen2020simple,grill2020bootstrap} learning, the incorporation of data augmentation and consistency regularization also provably improves generalization and feature learning even in the supervised learning setting~\citep{yang2022sample,shen2022data}.

For medical imaging, data augmentation consistency regularization is generally leveraged as a semi-supervised learning tool~\citep{bortsova2019semi,zhao2019data,li2020transformation,wang2021deep,zhang2021multi,zhou2021ssmd,basak2022embarrassingly}. In efforts to incorporate consistency regularization in segmentation tasks with augmentation-sensitive labels, \cite{li2020transformation} encourages transformation consistency between predictions with augmentations applied to the image inputs and the segmentation outputs.
\cite{basak2022embarrassingly} penalizes inconsistent segmentation outputs between teacher-student models, with MixUp \citep{zhang2017mixup} applied to image inputs of the teacher model and segmentation outputs of the student model.
Instead of enforcing consistency in the segmentation output space as above, we leverage the insensitivity of sparse labels to augmentations and encourage consistent encodings (in the latent space of encoder outputs) on label-sparse samples.

\section{Problem Setup}\label{sec:problem_setup_adawac}

\paragraph{Notation.} 
For any $K \in \N$, we denote $\sbr{K} = \cbr{1,\dots,K}$. 
We represent the elements and subtensors of an arbitrary tensor by adapting the syntax for Python slicing on the subscript (except counting from $1$). For example, $\iloc{\xb}{i,j}$ denotes the $(i,j)$-entry of the two-dimensional tensor $\xb$, and $\iloc{\xb}{i,:}$ denotes the $i$-th row.
Let $\II$ be a function onto $\cbr{0,1}$ such that, for any event $e$, $\iffun{e}=1$ if $e$ is true and $0$ otherwise.
For any distribution $P$ and $n \in \N$, we let $P^n$ denote the joint distribution of $n$ samples drawn \iid from $P$.
Finally, we say that an event happens with high probability (\whp) if the event takes place with probability $1-\Omega\rbr{\poly\rbr{n}}^{-1}$.

\subsection{Pixel-wise Classification with Sparse and Dense Labels}\label{subsec:pixel-wise-classification}

We consider medical image segmentation as a pixel-wise multi-class classification problem where we aim to learn a pixel-wise classifier $h:\Xcal \to [K]^d$ that serves as a good approximation to the ground truth $h^*: \Xcal \to [K]^d$. 

Recall the separation of cross-entropy losses between samples with different proportions of background pixels from \Cref{fig:synapse_case40_sparse_dense_ce_weights}.
We refer to a sample $\rbr{\xb,\yb} \in \Xcal \times [K]^d$ as \emph{label-sparse} if most pixels in $\yb$ are labeled as background; for these samples, the cross-entropy loss on $\rbr{\xb,\yb}$ converges rapidly in the early stage of training.
Otherwise, we say that $\rbr{\xb,\yb}$ is \emph{label-dense}.
Formally, we describe such variation as a concept shift in the data distribution.
\begin{definition}[Mixture of label-sparse and label-dense subpopulations]
We assume that \emph{label-sparse and label-dense samples} are drawn from $P_0$ and $P_1$ with distinct conditional distributions $P_0\rsep{\yb}{\xb}$ and $P_1\rsep{\yb}{\xb}$ but common marginal distribution $P\rbr{\xb}$ such that $P_i\rbr{\xb,\yb} = P_i\rsep{\yb}{\xb} P\rbr{\xb}$ ($i=0,1$).
For $\xi \in [0,1]$, we define $P_\xi$ as a data distribution where $\rbr{\xb,\yb} \sim P_\xi$ is drawn either from $P_1$ with probability $\xi$ or from $P_0$ with probability $1-\xi$.
\end{definition}

We aim to learn a pixel-wise classifier from a function class $\Hcal \ni h_\theta = \argmax_{k \in [K]} \iloc{f_\theta\rbr{\xb}}{j,:}$ for all $j \in [d]$ where the underlying function $f_\theta \in \Fcal$, parameterized by some $\theta \in \Fcal_\theta$, admits an encoder-decoder structure:
\begin{align}\label{eq:encoder_decoder}
    \csepp{f_\theta = \phi_\theta \circ \psi_\theta}{\phi_\theta: \Xcal \to \Zcal, \psi_\theta:\Zcal \to [0,1]^{d \times K}}.
\end{align}
Here $\phi_\theta, \psi_\theta$ correspond to the encoder and decoder functions, respectively. The parameter space $\Fcal_\theta$ is equipped with the norm $\nbr{\cdot}_\Fcal$ and its dual norm $\nbr{\cdot}_{\Fcal,*}$\footnote{
For \ours (\Cref{prop:convergence_srw_dac} in \Cref{sec:ada_wac}), $\Fcal_\theta$ is simply a subspace in the Euclidean space with dimension equal to the total number of parameters for each $\theta \in \Fcal_\theta$, with $\nbr{\cdot}_\Fcal$ and $\nbr{\cdot}_{\Fcal,*}$ both being the $\ell_2$-norm.
}. $\rbr{\Zcal, \varrho}$ is a latent metric space. 

To learn from segmentation labels, we consider the \emph{averaged cross-entropy loss}:
\begin{align}\label{eq:cross_entropy_loss}
\begin{split}
    \lossce\rbr{\theta;(\xb,\yb)} 
    = &-\frac{1}{d} \sum_{j=1}^d \sum_{k=1}^K \iffun{\iloc{\yb}{j}=k} \cdot \log\rbr{ \iloc{f_\theta\rbr{\xb}}{j,k} } \\
    = &-\frac{1}{d} \sum_{j=1}^d \log\rbr{\iloc{f_\theta\rbr{\xb}}{j, \iloc{\yb}{j}}}.
\end{split}
\end{align} 
We assume the proper learning setting: there exists $\theta^* \in \bigcap_{\xi \in [0,1]} \argmin_{\theta \in \Fcal_\theta} \E_{\rbr{\xb,\yb}\sim P_\xi}\sbr{\lossce\rbr{\theta;(\xb,\yb)} }$, which is invariant with respect to $\xi$.\footnote{
We assume proper learning only to \begin{enumerate*}[label=(\roman*)]
    \item highlight the invariance of the desired ground truth to $\xi$ that can be challenging to learn with finite samples in practice and
    \item provide a natural pivot for the convex and compact neighborhood $\Fnb{\gamma}$ of ground truth $\theta^*$ in \Cref{ass:separability_sparse_dense} granted by the pretrained initialization, where $\theta^*$ can also be replaced with the pretrained initialization weights $\theta_0 \in \Fnb{\gamma}$.
    In particular, neither our theory nor the \ours algorithm requires the function class $\Fcal$ to be expressive enough to truly contain such $\theta^*$.
\end{enumerate*}
}

\subsection{Augmentation Consistency Regularization}
Despite the invariance of $f_{\theta^*}$ to $P_\xi$ on the population loss, with a finite number of training samples in practice, the predominance of label-sparse samples would be problematic. As an extreme scenario for the pixel-wise classifier with encoder-decoder structure (\Cref{eq:encoder_decoder}), when the label-sparse samples are predominant ($\xi \ll 1$), a decoder function $\psi_\theta$ that predicts every pixel as background can achieve near-optimal cross-entropy loss, regardless of the encoder function $\phi_\theta$, considerably compromising the test performance (\cf \Cref{tab:synapse_sample_eff}). 
To encourage legit encoding even in the absence of sufficient dense labels, we leverage the unsupervised consistency regularization on the \emph{encoder function} $\phi_\theta$ based on data augmentations. 

Let $\Acal$ be a distribution over transformations on $\Xcal$ where for any $\xb \in \Xcal$, each $A \sim \Acal$ ($A:\Xcal \to \Xcal$) induces an augmentation $A\rbr{\xb}$ of $\xb$ that perturbs low-level information in $\xb$. 
We aim to learn an encoder function $\phi_\theta:\Xcal \to \Zcal$ that is capable of filtering out low-level information from $\xb$ and therefore provides similar encodings for augmentations of the same sample. 
Recalling the metric $\varrho$ (\eg, the Euclidean distance) on $\Zcal$, for a given scaling hyperparameter $\lambdac>0$, we measure the similarity between augmentations with a consistency regularization term on $\phi_\theta\rbr{\cdot}$: for any $A_1,A_2 \sim \Acal^2$,
\begin{align}\label{eq:dac}
    \regdac\rbr{\theta;\xb,A_1,A_2} \dfeq \lambdac \cdot \varrho\Big( \phi_\theta\rbr{A_1(\xb)}, \phi_\theta\rbr{A_2(\xb)} \Big).
\end{align}

For the $n$ training samples $\cbr{\rbr{\xb_i, \yb_i}}_{i \in [n]}\sim P_\xi^n$, we consider $n$ pairs of data augmentation transformations $\cbr{\rbr{A_{i,1}, A_{i,2}}}_{i \in [n]} \sim \Acal^{2n}$.
In the basic version, we encourage the similar encoding $\phi_\theta\rbr{\cdot}$ of the augmentation pairs $\rbr{{A_{i,1}\rbr{\xb_i}}, {A_{i,2}\rbr{\xb_i}}}$ for all $i \in [n]$ via consistency regularization:
\begin{align}\label{eq:objective_dac_encoder_plain}
    \min_{\theta \in \Fnb{\gamma}} \frac{1}{n} \sum_{i=1}^n 
    \lossce\rbr{\theta;\rbr{\xb_i,\yb_i}} + 
    \regdac\rbr{\theta;\xb_i,A_{i,1},A_{i,2}}.
\end{align}

We enforce consistency on $\phi_\theta\rbr{\cdot}$ in light of the encoder-decoder architecture: the encoder is generally designed to abstract essential information and filters out low-level non-semantic perturbations (\eg, those introduced by augmentations), while the decoder recovers the low-level information for the pixel-wise classification.
Therefore, with different $A_1, A_2 \sim \Acal$, the encoder output $\phi_\theta\rbr{\cdot}$ tends to be more consistent than the other intermediate layers, especially for label-dense samples.

\section{Weighted Augmentation Consistency (WAC) Regularization}\label{sec:wac}

As the motivation, we begin with a key observation about the averaged cross-entropy:  
\begin{remark}[Separation of averaged cross-entropy loss on $P_0$ and $P_1$]\label{rmk:separation_average_cross_entropy}
    As demonstrated in \Cref{fig:synapse_case40_sparse_dense_ce_weights}, the sparse labels from $P_0$ tend to be much easier to learn than the dense ones from $P_1$, leading to considerable separation of averaged cross-entropy losses on the sparse and dense labels after a sufficient number of training epochs.  In other words, $\lossce\rbr{\theta;\rbr{\xb,\yb}} \ll \lossce\rbr{\theta;\rbr{\xb',\yb'}}$ for label-sparse samples $\rbr{\xb,\yb} \sim P_0$ and label-dense samples $\rbr{\xb',\yb'} \sim P_1$ with high probability.
\end{remark}

Although \Cref{eq:objective_dac_encoder_plain} with consistency regularization alone can boost the segmentation accuracy during testing (\cf \Cref{tab:ablation}), it does not take the separation between label-sparse and label-dense samples into account. In \Cref{sec:experiments_adawac}, we will empirically demonstrate that proper exploitation of such separation, like the formulation introduced below, can lead to improved classification performance.

We formalize the notion of separation between $P_0$ and $P_1$ with the consistency regularization (\Cref{eq:dac}) as a reference in the following assumption
\footnote{
Although \Cref{ass:separability_sparse_dense} may seem to be rather strong, it is only required for the separation guarantee of label-sparse and label-dense samples with high probability in \Cref{prop:spontaneous_separation_sparse_dense}, but not for the adaptive weighting algorithm introduced in \Cref{sec:ada_wac} or in practice for the experiments.
}.

\begin{assumption}[$n$-separation between $P_0$ and $P_1$]\label{ass:separability_sparse_dense}
    Given a sufficiently small $\gamma>0$, let $\Fnb{\gamma} = \csepp{\theta \in \Fcal_\theta}{\nbr{\theta - \theta^*}_{\Fcal} \le \gamma}$ be a compact and convex neighborhood of well-trained pixel-wise classifiers\footnote{With pretrained initialization, we assume that the optimization algorithm is always probing in $\Fnb{\gamma}$.}. We say that \emph{$P_0$ and $P_1$ are $n$-separated over $\Fnb{\gamma}$} if there exists $\omega>0$ such that with probability $1-\Omega\rbr{n^{1+\omega}}^{-1}$ over $\rbr{\rbr{\xb,\yb},\rbr{A_1,A_2}} \sim P_\xi \times \Acal^2$, the following hold:
    \begin{enumerate}[label=(\roman*)]
        \item $\lossce\rbr{\theta;\rbr{\xb,\yb}} < \regdac\rbr{\theta;\xb,A_1,A_2}$ for all $\theta \in \Fnb{\gamma}$ given $\rbr{\xb,\yb} \sim P_0$;
        \item $\lossce\rbr{\theta;\rbr{\xb,\yb}} > \regdac\rbr{\theta;\xb,A_1,A_2}$ for all $\theta \in \Fnb{\gamma}$ given $\rbr{\xb,\yb} \sim P_1$.
    \end{enumerate}
\end{assumption}
This assumption is motivated by the empirical observation that the perturbation in $\phi_\theta\rbr{\cdot}$ induced by $\Acal$ is more uniform across $P_0$ and $P_1$ than the averaged cross-entropy losses, as instantiated in \Cref{fig:synapse-weights}. 

Under \Cref{ass:separability_sparse_dense}, up to a proper scaling hyperparameter $\lambdac$, the consistency regularization (\Cref{eq:dac}) can separate the averaged cross-entropy loss (\Cref{eq:cross_entropy_loss}) on $n$ label-sparse and label-dense samples with probability $1-\Omega\rbr{n^{\omega}}^{-1}$ (as explained formally in \Cref{subapx:spontaneous_separation_sparse_dense}). 
In particular, the larger $n$ corresponds to the stronger separation between $P_0$ and $P_1$. 

With \Cref{ass:separability_sparse_dense}, we introduce a minimax formulation that incentivizes the separation of label-sparse and label-dense samples automatically 
by introducing a flexible weight $\iloc{\betab}{i} \in [0,1]$ that balances $\lossce\rbr{\theta;\rbr{\xb_i,\yb_i}}$ and $\regdac\rbr{\theta;\xb_i,A_{i,1},A_{i,2}}$ for each of the $n$ samples.
\begin{align}\label{eq:objective_dac_encoder_weighted_adawac}
\begin{split}
    &\wh\theta^\wdac, \wh\betab \in 
    \argmin_{\theta \in \Fnb{\gamma}}~
    \argmax_{\betab \in [0,1]^n}~ 
    \wh L^\wdac\rbr{\theta,\betab} 
    \\
    \wh L^\wdac\rbr{\theta,\betab} \dfeq &\frac{1}{n} \sum_{i=1}^n \iloc{\betab}{i} \cdot \lossce\rbr{\theta;\rbr{\xb_i,\yb_i}} + (1-\iloc{\betab}{i}) \cdot \regdac\rbr{\theta;\xb_i,A_{i,1},A_{i,2}}. 
\end{split}
\end{align}

With convex and continuous loss and regularization terms (formally in \Cref{prop:spontaneous_separation_sparse_dense}), \Cref{eq:objective_dac_encoder_weighted_adawac} admits a saddle point corresponding to $\wh\betab$ which separates the label-sparse and label-dense samples under \Cref{ass:separability_sparse_dense}.
\begin{proposition}[Formal proof in \Cref{subapx:spontaneous_separation_sparse_dense}]\label{prop:spontaneous_separation_sparse_dense}
    Assume that 
    $\lossce\rbr{\theta;\rbr{\xb,\yb}}$ and $\regdac\rbr{\theta;\xb,A_{1},A_{2}}$ are convex and continuous in $\theta$ for all $(\xb,\yb) \in \Xcal \times [K]^d$ and $A_1,A_2 \sim \Acal^2$; $\Fnb{\gamma} \subset \Fcal_\theta$ is compact and convex.
    If $P_0$ and $P_1$ are $n$-separated (\Cref{ass:separability_sparse_dense}), then there exists $\wh\betab \in \cbr{0,1}^n$ and $\wh\theta^\wdac \in \argmin_{\theta \in \Fnb{\gamma}} \wh L^\wdac\rbr{\theta, \wh\betab}$ such that 
    \begin{align}\label{eq:saddle_point_def}
    \begin{split}
        \min_{\theta \in \Fnb{\gamma}} \wh L^\wdac\rbr{\theta, \wh\betab} 
        = &\wh L^\wdac\rbr{\wh\theta^\wdac, \wh\betab} 
        = \max_{\betab \in [0,1]^n} \wh L^\wdac\rbr{\wh\theta^\wdac, \betab}.
    \end{split}
    \end{align}
    Further, $\wh\betab$ separates the label-sparse and label-dense samples---$\iloc{\wh\betab}{i}=\iffun{\rbr{\xb_i,\yb_i} \sim P_1}$---\whp. 
\end{proposition}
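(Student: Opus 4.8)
The plan is to establish the existence of the saddle point first, and then identify its structure. For the existence of a saddle point of $\wh L^\wdac(\theta,\betab)$ over $\Fnb{\gamma} \times [0,1]^n$, I would invoke the classical Sion minimax theorem (or von Neumann's minimax theorem specialized to the bilinear-in-$\betab$ case). The hypotheses are exactly the ones listed: $\Fnb{\gamma}$ is compact and convex, $[0,1]^n$ is compact and convex, $\wh L^\wdac(\cdot,\betab)$ is convex and continuous in $\theta$ for each fixed $\betab$ (since it is a nonnegative convex combination of the convex continuous maps $\lossce(\theta;(\xb_i,\yb_i))$ and $\regdac(\theta;\xb_i,A_{i,1},A_{i,2})$), and $\wh L^\wdac(\theta,\cdot)$ is affine — hence concave and continuous — in $\betab$ for each fixed $\theta$. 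Sion's theorem then yields the existence of $(\wh\theta^\wdac, \wh\betab) \in \Fnb{\gamma} \times [0,1]^n$ satisfying the saddle-point identity \Cref{eq:saddle_point_def}, with $\wh\theta^\wdac \in \argmin_{\theta \in \Fnb{\gamma}} \wh L^\wdac(\theta,\wh\betab)$.

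Next I would pin down the structure of $\wh\betab$. The key point is that $\wh\betab$ maximizes $\betab \mapsto \wh L^\wdac(\wh\theta^\wdac, \betab)$, and since this objective is separable and affine in the coordinates of $\betab$,
\begin{align*}
    \wh L^\wdac(\wh\theta^\wdac, \betab) = \frac{1}{n}\sum_{i=1}^n \Big( \iloc{\betab}{i}\big(\lossce(\wh\theta^\wdac;(\xb_i,\yb_i)) - \regdac(\wh\theta^\wdac;\xb_i,A_{i,1},A_{i,2})\big) + \regdac(\wh\theta^\wdac;\xb_i,A_{i,1},A_{i,2}) \Big),
\end{align*}
the maximization decouples across $i$: the optimal $\iloc{\betab}{i}$ is $1$ if $\lossce(\wh\theta^\wdac;(\xb_i,\yb_i)) > \regdac(\wh\theta^\wdac;\xb_i,A_{i,1},A_{i,2})$ and $0$ if the inequality is reversed (the boundary case of equality is non-generic and can be resolved to a corner without loss). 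Under \Cref{ass:separability_sparse_dense}, applied with $\theta = \wh\theta^\wdac \in \Fnb{\gamma}$, the event that every one of the $n$ samples has its cross-entropy/consistency comparison determined purely by its subpopulation label occurs with probability $1-\Omega(n^{1+\omega})^{-1} \cdot n = 1 - \Omega(n^\omega)^{-1}$ after a union bound over the $n$ draws. On that event, $\lossce(\wh\theta^\wdac;(\xb_i,\yb_i)) < \regdac(\cdot)$ exactly when $(\xb_i,\yb_i)\sim P_0$ and $\lossce(\wh\theta^\wdac;(\xb_i,\yb_i)) > \regdac(\cdot)$ exactly when $(\xb_i,\yb_i)\sim P_1$, so the coordinatewise maximizer is precisely $\iloc{\wh\betab}{i} = \iffun{(\xb_i,\yb_i)\sim P_1}$, which is the claimed separation in $\cbr{0,1}^n$.

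The main obstacle — or at least the point requiring care — is the logical order in the argument: \Cref{ass:separability_sparse_dense} quantifies over all $\theta \in \Fnb{\gamma}$, so it applies uniformly and in particular to the (data-dependent) minimizer $\wh\theta^\wdac$; I should make explicit that the high-probability event in the assumption is stated for a single draw and that taking a union bound over the $n$ training samples only degrades the failure probability from $\Omega(n^{1+\omega})^{-1}$ to $n \cdot \Omega(n^{1+\omega})^{-1} = \Omega(n^\omega)^{-1}$, which is still $o(1)$. A secondary subtlety is that Sion's theorem only guarantees some saddle point, whereas I want a saddle point whose $\betab$-component is $\cbr{0,1}$-valued; this is handled by the observation above that on the separation event the $\betab$-maximizer is unique and integral for any fixed $\wh\theta^\wdac$, so every saddle point automatically has the desired form. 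I would also remark that convexity/continuity of $\lossce$ and $\regdac$ in $\theta$ is exactly the standing hypothesis of the proposition, so no additional regularity needs to be verified.
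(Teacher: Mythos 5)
Your proposal is correct and follows essentially the same route as the paper: Sion's minimax theorem on the compact convex domains $\Fnb{\gamma}\times[0,1]^n$ for the saddle point, coordinatewise analysis of the affine-in-$\betab$ objective to force $\wh\betab$ onto a vertex, and a union bound over the $n$ samples under \Cref{ass:separability_sparse_dense} to get the separation $\iloc{\wh\betab}{i}=\iffun{(\xb_i,\yb_i)\sim P_1}$ with probability $1-\Omega(n^\omega)^{-1}$. The only cosmetic difference is that the paper spells out attainment of the optimizers via Weierstrass before concluding the saddle point exists, a step you fold into the compactness/continuity hypotheses.
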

That is, for $n$ samples drawn from a mixture of $n$-separated $P_0$ and $P_1$, the saddle point of $L^\wdac_i \rbr{\theta,\betab}$ in \Cref{eq:objective_dac_encoder_weighted_adawac} corresponds to $\iloc{\betab}{i} = 0$ on label-sparse samples (\ie, learning from the unsupervised consistency regularization), and $\iloc{\betab}{i} = 1$ on label-dense samples (\ie, learning from the supervised averaged cross-entropy loss).

\begin{remark}[Connection to hard-thresholding algorithms]\label{rmk:relation_hard_thresholding}
    The saddle point of \Cref{eq:objective_dac_encoder_weighted_adawac} is closely related to hard-thresholding algorithms like Ordered SGD~\citep{kawaguchi2020ordered} and iterative trimmed loss~\citep{shen2019learning}. 
    In each iteration, these algorithms update the model only on a proper subset of training samples based on the (ranking of) current empirical risks. 
    Compared to hard-thresholding algorithms,  
    \begin{enumerate*}[label=(\roman*)]
        \item \Cref{eq:objective_dac_encoder_weighted_adawac} additionally leverages the unused samples (\eg, label-sparse samples) for unsupervised consistency regularization on data augmentations;
        \item meanwhile, it does not require prior knowledge of the sample subpopulations (\eg, $\xi$ for $P_\xi$) which is essential for hard-thresholding algorithms. 
    \end{enumerate*}

    \Cref{eq:objective_dac_encoder_weighted_adawac} further facilitates the more flexible optimization process. As we will empirically show in \Cref{tab:trim}, despite the close relation between \Cref{eq:objective_dac_encoder_weighted_adawac} and the hard-thresholding algorithms (\Cref{rmk:relation_hard_thresholding}), such updating strategies may be suboptimal for solving \Cref{eq:objective_dac_encoder_weighted_adawac}.
\end{remark}

\section{Adaptively Weighted Augmentation Consistency (\ours)}\label{sec:ada_wac}

Inspired by the breakthrough made by \cite{sagawa2019distributionally} in the distributionally-robust optimization (DRO) setting where gradient updating on weights is shown to enjoy better convergence guarantees than hard thresholding, we introduce an adaptively weighted online optimization algorithm (\Cref{alg:srw_dac}) for solving \Cref{eq:objective_dac_encoder_weighted_adawac} based on online mirror descent.

In contrast to the commonly used stochastic gradient descent (SGD), the flexibility of online mirror descent in choosing the associated norm space not only allows gradient updates on sample weights but also grants distinct learning dynamics to sample weights $\betab_t$ and model parameters $\theta_t$, which leads to the following convergence guarantee.
\begin{proposition}[Formally in \Cref{prop:convergence_srw_dac_formal}, proof in \Cref{subapx:convergence_wdac}, assumptions instantiated in \Cref{ex:convex_continuous}]\label{prop:convergence_srw_dac}
    Assume that 
    $\lossce\rbr{\theta;\rbr{\xb,\yb}}$ and $\regdac\rbr{\theta;\xb,A_{1},A_{2}}$ are convex and continuous in $\theta$ for all $(\xb,\yb) \in \Xcal \times [K]^d$ and $A_1,A_2 \sim \Acal^2$.  Assume moreover that  $\Fnb{\gamma} \subset \Fcal_\theta$ is convex and compact. If there exist
    \footnote{
        Following the convention, we use $*$ in subscription to denote the dual spaces. For instance, recalling the parameter space $\Fcal_\theta$ characterized by the norm $\nbr{\cdot}_{\Fcal}$ from \Cref{subsec:pixel-wise-classification}, we use $\nbr{\cdot}_{\Fcal,*}$ to denote its dual norm; while $C_{\theta,*}, C_{\betab,*}$ upper bound the dual norms of the gradients with respect to $\theta$ and $\betab$.
    } 
    $C_{\theta,*} > 0$ and $C_{\betab,*} > 0$ such that 
    \begin{align*}
        &\frac{1}{n} \sum_{i=1}^n \nbr{\nabla_\theta \wh L_i^\wdac\rbr{\theta,\betab}}_{\Fcal,*}^2 \le C_{\theta,*}^2 \\
        &\frac{1}{n} \sum_{i=1}^n \max\cbr{\lossce\rbr{\theta;\rbr{\xb_i,\yb_i}}, \regdac\rbr{\theta;\xb_i,A_{i,1},A_{i,2}}}^2 
        \le C_{\betab,*}^2
    \end{align*}
    for all $\theta \in \Fnb{\gamma}$, $\betab \in [0,1]^n$,
    then with $\eta_\theta = \eta_{\betab} = \frac{2}{\sqrt{5T \rbr{\gamma^2 C_{\theta,*}^2 + 2 n C_{\betab,*}^2}}}$, \Cref{alg:srw_dac} provides
    \begin{align*}
        \E\sbr{\max_{\betab \in [0,1]^n} \wh L^\wdac\rbr{\overline{\theta}_T, \betab} - \min_{\theta \in \Fnb{\gamma}} \wh L^\wdac\rbr{\theta, \overline{\betab}_T}}& \\
        \le 2 \sqrt{5 \rbr{\gamma^2 C_{\theta,*}^2 + 2 n C_{\betab,*}^2} \Big/ T}&
    \end{align*}
    where $\overline{\theta}_T = \frac{1}{T} \sum_{t=1}^T \theta_t$ and $\overline{\betab}_T = \frac{1}{T} \sum_{t=1}^T \betab_t$. 
\end{proposition}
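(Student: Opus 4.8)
The plan is to recognize the minimax objective $\wh L^\wdac(\theta,\betab)$ as a convex-concave saddle point problem and apply the standard regret analysis for online mirror descent run simultaneously on the two blocks of variables $\theta \in \Fnb{\gamma}$ and $\betab \in [0,1]^n$. First I would verify that $\wh L^\wdac(\theta,\betab)$ is convex in $\theta$ (a nonnegative combination of the convex terms $\lossce$ and $\regdac$ assumed in the hypothesis) and linear --- hence concave --- in $\betab$, and that the feasible sets $\Fnb{\gamma}$ and $[0,1]^n$ are convex and compact, so that a saddle point exists by Sion's minimax theorem. This justifies interpreting the left-hand quantity $\E[\max_\betab \wh L^\wdac(\overline\theta_T,\betab) - \min_\theta \wh L^\wdac(\theta,\overline\betab_T)]$ as the expected duality gap of the ergodic averages, which is the canonical object controlled by mirror-descent regret bounds.

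Next I would set up the two mirror maps matching the two norm geometries: on the $\theta$-block, since $\Fcal_\theta$ is Euclidean with $\nbr{\cdot}_\Fcal = \nbr{\cdot}_2$, use the squared-Euclidean potential, whose Bregman divergence is bounded by $\gamma^2$ over the radius-$\gamma$ ball $\Fnb{\gamma}$; on the $\betab$-block, use a potential on the box $[0,1]^n$ (e.g. the quadratic $\frac12\nbr{\betab}_2^2$, or a negative-entropy-type potential) whose divergence is bounded by $O(n)$. The key per-step inequality is the standard one-step mirror-descent bound: summing the telated descent/ascent inequalities over $t=1,\dots,T$, the alternating-sign linearization terms $\sum_t \langle \nabla_\theta \wh L^\wdac(\theta_t,\betab_t), \theta_t - \theta\rangle$ and $\sum_t \langle \nabla_\betab \wh L^\wdac(\theta_t,\betab_t), \betab - \betab_t\rangle$ combine, via convexity-concavity, into an upper bound on $T$ times the duality gap at the averaged iterates. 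Each regret term is bounded by $\frac{D}{\eta} + \frac{\eta}{2}\sum_t \nbr{g_t}_*^2$ where $D$ is the divergence bound ($\gamma^2$ or $O(n)$) and $g_t$ is the corresponding (stochastic sub)gradient. Here I would plug in the gradient-norm bounds from the hypothesis: $\frac1n\sum_i \nbr{\nabla_\theta \wh L_i^\wdac}_{\Fcal,*}^2 \le C_{\theta,*}^2$ controls the $\theta$-gradient, and because $\nabla_\betab \wh L_i^\wdac$ in coordinate $i$ equals $\frac1n(\lossce_i - \regdac_i)$, the assumed bound $\frac1n\sum_i \max\{\lossce_i,\regdac_i\}^2 \le C_{\betab,*}^2$ controls $\nbr{\nabla_\betab \wh L^\wdac}_2^2$ by $2nC_{\betab,*}^2/n \cdot n = 2nC_{\betab,*}^2$ (up to the bookkeeping constant). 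Optimizing the common stepsize $\eta_\theta = \eta_\betab = \eta$ against the combined constant $5(\gamma^2 C_{\theta,*}^2 + 2nC_{\betab,*}^2)$ yields the stated $\eta = \frac{2}{\sqrt{5T(\gamma^2 C_{\theta,*}^2 + 2nC_{\betab,*}^2)}}$ and the final $O(\sqrt{(\gamma^2 C_{\theta,*}^2 + 2nC_{\betab,*}^2)/T})$ rate.

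The main obstacle I anticipate is the careful bookkeeping of constants and the handling of stochasticity: \Cref{alg:srw_dac} presumably uses sampled (single-sample or minibatch) gradients of $\wh L^\wdac$, so I would need to take expectations and use that the stochastic gradients are unbiased, turning the deterministic regret bound into a bound on $\E[\text{duality gap}]$; one must be slightly delicate because the "gap at averaged iterates" argument uses Jensen's inequality on both the $\max$ over $\betab$ and the $\min$ over $\theta$, and the expectation must be inserted in the right place relative to these. A secondary subtlety is confirming that projection/prox steps onto $\Fnb{\gamma}$ and $[0,1]^n$ are well-defined and that the mirror-descent iterates stay feasible --- this is where compactness and convexity of $\Fnb{\gamma}$ (and the pretrained-initialization assumption that the iterates remain in $\Fnb{\gamma}$) are used. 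Once the two-block regret decomposition is in place, everything else is the routine mirror-descent telescoping; I would state the one-step lemma, sum it, bound the residual Bregman terms by $D/\eta$, bound the gradient-square sums using the hypotheses, and optimize $\eta$, deferring the precise constant-tracking to an appendix as the paper does.
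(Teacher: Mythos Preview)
Your high-level plan is correct: the paper's proof is indeed a direct application of the standard online-mirror-descent regret bound for convex--concave saddle point problems (specifically, Nemirovski et al.\ 2009, equation (3.11)), with a Euclidean mirror map on the $\theta$-block and an entropic mirror map on the $\betab$-block. Where your proposal drifts from the paper is in the treatment of the $\betab$-block, and this matters because the proof must match the actual updates in \Cref{alg:srw_dac}.

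Look at lines 5--8 of \Cref{alg:srw_dac}: the weight update is $\bb\gets[\beta_{i_t},1-\beta_{i_t}]$, followed by multiplicative updates $\iloc{\bb}{1}\gets\iloc{\bb}{1}\exp(\eta_\betab\lossce)$, $\iloc{\bb}{2}\gets\iloc{\bb}{2}\exp(\eta_\betab\regdac)$, and renormalization. This is exactly the exponentiated-gradient (entropy mirror descent) update on the $2$-simplex, applied coordinatewise. The paper therefore \emph{lifts} the domain from $[0,1]^n$ to the product of simplices $\Delta_2^n$ via $\Bb_{i,:}=(\beta_i,1-\beta_i)$, equips it with the norm $\nbr{\Bb}_{1,2}=(\sum_i(\sum_j|\Bb_{i,j}|)^2)^{1/2}$, and uses the negative-entropy potential $\varphi_\Bb(\Bb)=\sum_{i,j}\Bb_{i,j}\log\Bb_{i,j}$; this choice is $1$-strongly convex in $\nbr{\cdot}_{1,2}$, has radius $R_{\Delta_2^n}^2=n$, and its dual norm of the stochastic gradient is $\max\{\lossce_{i_t},\regdac_{i_t}\}$, which is exactly why the hypothesis is phrased as $\frac{1}{n}\sum_i\max\{\lossce_i,\regdac_i\}^2\le C_{\betab,*}^2$. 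Plugging $R_\Ucal^2=\gamma^2/2$, $R_\Vcal^2=n$, $\rho_u=\rho_v=1$ into the Nemirovski lemma yields the stated constants (the $2n$ factor is $2R_\Vcal^2/\rho_v$).

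Your proposal's primary suggestion of the quadratic potential $\frac12\nbr{\betab}_2^2$ on $[0,1]^n$ would \emph{not} analyze \Cref{alg:srw_dac}: the resulting update would be additive, $\beta_{i_t}\gets\mathrm{clip}_{[0,1]}(\beta_{i_t}+\eta(\lossce_{i_t}-\regdac_{i_t}))$, which is a different algorithm. Your fallback mention of ``a negative-entropy-type potential'' is on the right track, but you never carry out the lift to $\Delta_2^n$ or identify the correct norm/dual-norm pair, and your gradient bookkeeping (treating $\nabla_\betab$ as the scalar difference $\lossce_i-\regdac_i$ and arriving at $2nC_{\betab,*}^2$ via ``$2nC_{\betab,*}^2/n\cdot n$'') does not correspond to either geometry. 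The missing idea is precisely this simplex reformulation; once you make it, the rest is, as you say, the routine telescoping plus Jensen on the averaged iterates.
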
 

\begin{algorithm}[h]
\caption{Adaptively Weighted Augmentation Consistency (\ours)}\label{alg:srw_dac}
\begin{algorithmic}
    \STATE {\bfseries Input:} 
    Training samples $\cbr{\rbr{\xb_i, \yb_i}}_{i \in [n]}\sim P_\xi^n$, 
    augmentations $\cbr{\rbr{A_{i,1}, A_{i,2}}}_{i \in [n]} \sim \Acal^{2n}$,
    maximum number of iterations $T \in \N$, 
    learning rates $\eta_\theta, \eta_{\betab} > 0$, 
    pretrained initialization for the pixel-wise classifier $\theta_0 \in \Fnb{\gamma}$.
    
    \STATE Initialize the sample weights $\betab_0 = \b{1}/2 \in [0,1]^n$.
    
    \FOR{$t = 1,\dots,T$}
        \STATE Sample $i_t \sim [n]$ uniformly
        \STATE $\bb \gets \sbr{\iloc{\rbr{\betab_{t-1}}}{i_t}, 1-\iloc{\rbr{\betab_{t-1}}}{i_t}}$
        \STATE $\iloc{\bb}{1} \gets \iloc{\bb}{1} \cdot \exp\rbr{\eta_{\betab} \cdot \lossce\rbr{\theta_{t-1};\rbr{\xb_{i_t},\yb_{i_t}}}}$
        \STATE $\iloc{\bb}{2} \gets \iloc{\bb}{2} \cdot \exp\rbr{ \eta_{\betab} \cdot \regdac\rbr{\theta_{t-1};\xb_{i_t},A_{i_t,1},A_{i_t,2}} }$
        \STATE $\betab_t \gets \betab_{t-1}$, $\iloc{\rbr{\betab_t}}{i_t} \gets \iloc{\bb}{1}/\nbr{\bb}_1$
        \STATE $\theta_t \gets \theta_{t-1} - \eta_\theta \cdot \Big(
        \iloc{\rbr{\betab_t}}{i_t} \cdot \nabla_\theta\lossce\rbr{\theta_{t-1};\rbr{\xb_{i_t},\yb_{i_t}}} + \rbr{1-\iloc{\rbr{\betab_t}}{i_t}} \cdot \nabla_\theta\regdac\rbr{\theta_{t-1};\xb_{i_t},A_{i_t,1},A_{i_t,2}} \Big)$
    \ENDFOR
\end{algorithmic}
\end{algorithm}

In addition to the convergence guarantee, \Cref{alg:srw_dac} also demonstrates superior performance over hard-thresholding algorithms for segmentation problems in practice (\Cref{tab:trim}).
An intuitive explanation is that instead of filtering out all the label-sparse samples via hard thresholding, the adaptive weighting allows the model to learn from some sparse labels at the early epochs, while smoothly down-weighting $\lossce$ of these samples since learning sparse labels tends to be easier (\Cref{rmk:separation_average_cross_entropy}). 
With the learned model tested on a mixture of label-sparse and label-dense samples, learning sparse labels at the early stage is crucial for accurate segmentation.

\section{Experiments}\label{sec:experiments_adawac}
In this section, we investigate the proposed \ours algorithm (\Cref{alg:srw_dac}) on different medical image segmentation tasks with different UNet-style architectures. We first demonstrate the performance improvements brought by \ours in terms of sample efficiency and robustness to concept shift (\Cref{tab:synapse_sample_eff}). Then, we verify the empirical advantage of \ours compared to the closely related hard-thresholding algorithms as discussed in \Cref{rmk:relation_hard_thresholding} (\Cref{tab:trim}). Our ablation study (\Cref{tab:ablation}) further illustrates the indispensability of both sample reweighting and consistency regularization, the deliberate combination of which leads to the superior performance of \ours\footnote{We release our code anonymously at \href{https://anonymous.4open.science/r/adawac-F5F8}{https://anonymous.4open.science/r/adawac-F5F8.}}.

\paragraph{Experiment setup.} 
We conduct experiments on two medical image segmentation tasks: abdominal CT segmentation for Synapse multi-organ dataset (Synapse)\footnote{\href{https://www.synapse.org/\#!Synapse:syn3193805/wiki/217789}{{https://www.synapse.org/\#!Synapse:syn3193805/wiki/217789}}} and cine-MRI segmentation for Automated cardiac diagnosis challenge dataset (ACDC)\footnote{\href{https://www.creatis.insa-lyon.fr/Challenge/acdc/}{https://www.creatis.insa-lyon.fr/Challenge/acdc/}}, with two UNet-like architectures: TransUNet~\citep{chen2021transunet} and UNet~\cite{ronneberger2015unet} (deferred to \Cref{subsec:exp_unet}). For the main experiments with TransUNet in \Cref{sec:experiments_adawac}, we follow the official implementation in \citep{chen2021transunet} and use ERM+SGD as the baseline. We evaluate segmentations with two standard metrics---the average Dice-similarity coefficient (DSC) and the average $95$-percentile of Hausdorff distance (HD95). Dataset and implementation details are deferred to \Cref{app:imp}. Given the sensitivity of medical image semantics to perturbations, our experiments only involve simple augmentations (\ie, rotation and mirroring) adapted from \citep{chen2021transunet}.

It is worth highlighting that, in addition to the information imbalance among samples caused by the concept shift discussed in this work, the pixel-wise class imbalance (\eg, the predominance of background pixels) is another well-investigated challenge for medical image segmentation, where coupling the dice loss~\citep{wong2018segmentation,taghanaki2019combo,yeung2022unified} in the objective is a common remedy used in many state-of-the-art methods~\citep{chen2021transunet,cao2021swin}. The implementation of \ours also leverages the dice loss to alleviate pixel-wise class imbalance. We defer the detailed discussion to \Cref{apx:dice}.

\subsection{Segmentation Performance of \ours with TransUNet}\label{subsec:exp_transunet}
\paragraph{Segmentation on Synapse.}
\Cref{fig:synapse} visualizes the segmentation predictions on $6$ Synapse test slices given by models trained via \ours(ours) and via the baseline (ERM+SGD) with TransUNet~\citep{chen2021transunet}. We observe that \ours provides more accurate predictions on the segmentation boundaries and captures small organs better than the baseline.

\begin{figure}[ht]
    \centering
    \includegraphics[width=\linewidth]{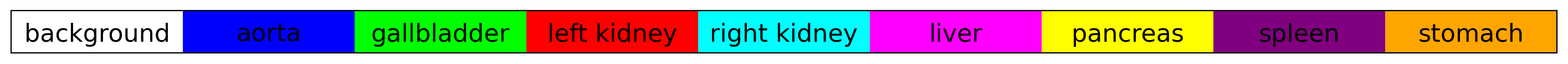}
    \includegraphics[width=\linewidth]{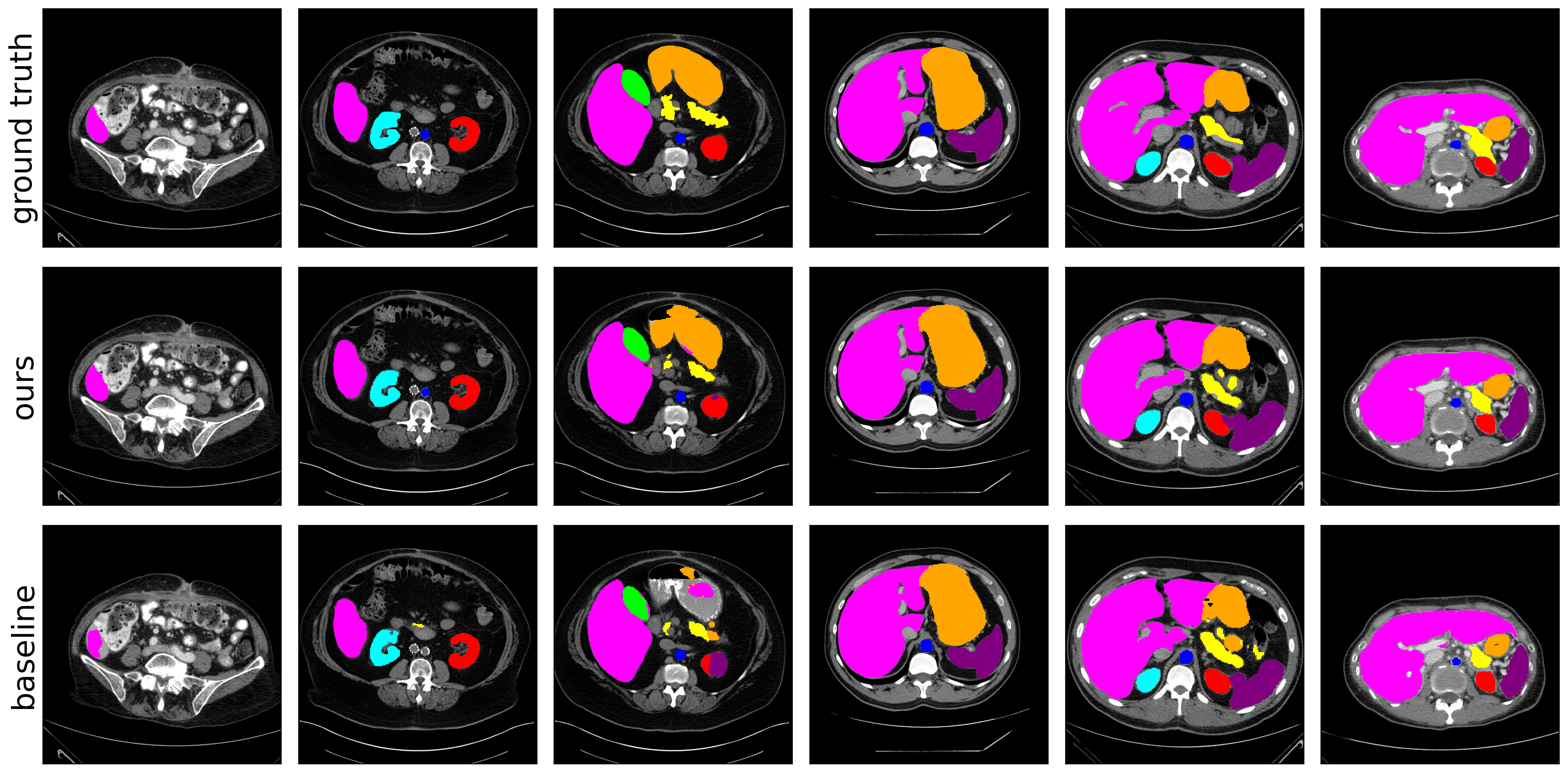}
    \caption{Visualization of segmentation predictions with TransUNet~\citep{chen2021transunet} on Synapse. Top to bottom: ground truth, ours (\ours), baseline.}
    \label{fig:synapse}
\end{figure}

\paragraph{Visualization of \ours.} 
As shown in \Cref{fig:synapse-weights}, with $\lossce\rbr{\theta_t;(\xb_i,\yb_i)}$ (\Cref{eq:cross_entropy_loss}) of label-sparse versus label-dense slices weakly separated in the early epochs, the model further learns to distinguish $\lossce\rbr{\theta_t;(\xb_i,\yb_i)}$ of label-sparse/label-dense slices during training. By contrast, $\regdac\rbr{\theta_t;\xb_i,A_{i,1},A_{i,2}}$ (\Cref{eq:dac}) remains mixed for all slices throughout the entire training process. As a result, the CE weights of label-sparse slices are much smaller than those of label-dense ones, pushing \ours to learn more image representations but fewer pixel classifications for slices with sparse labels and learn more pixel classifications for slices with dense labels.

\begin{figure}[ht]
    \centering
    \includegraphics[width=\linewidth]{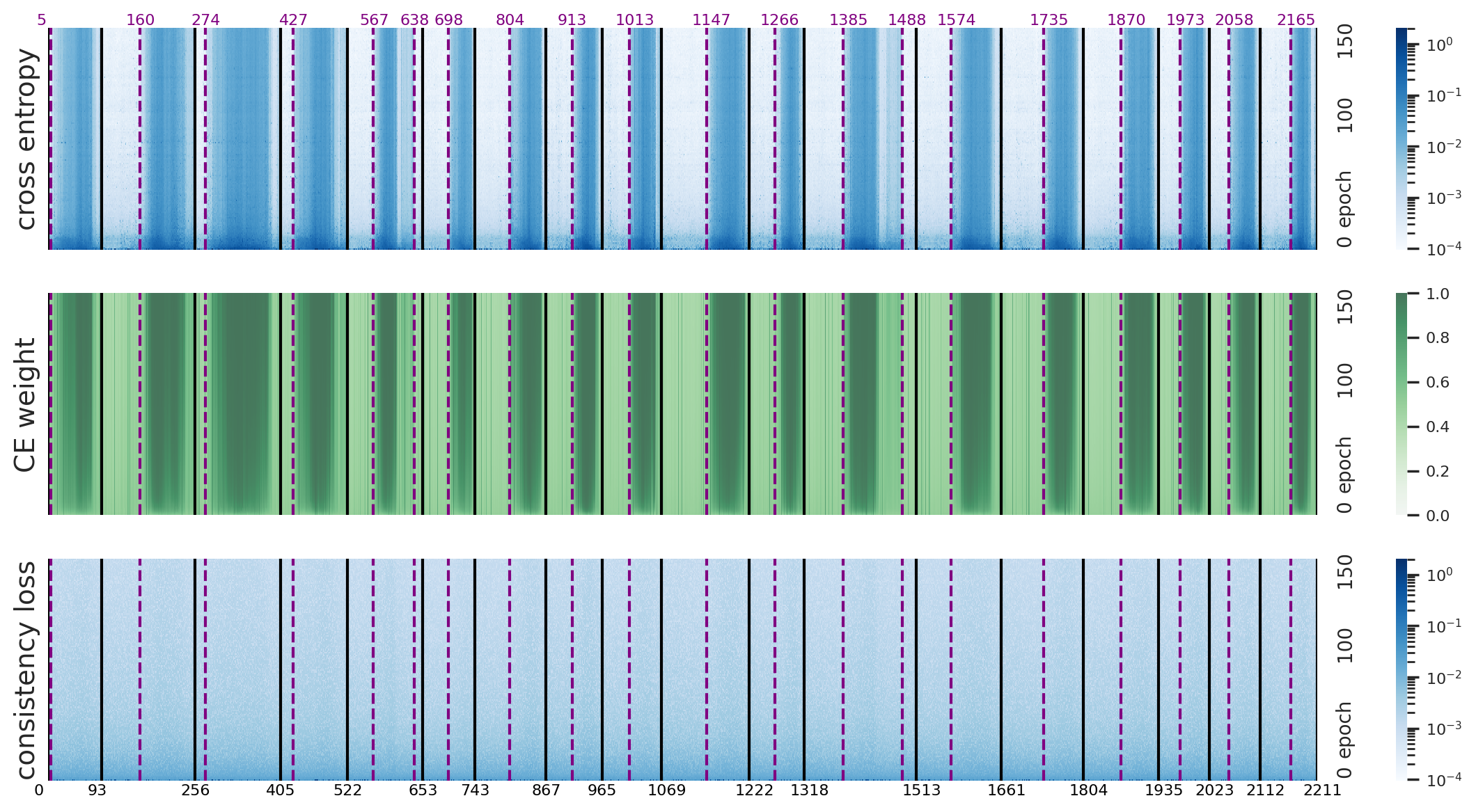}
    \caption{$\lossce\rbr{\theta_t;(\xb_i,\yb_i)}$ (top), CE weights $\betab_t$ (middle), and $\regdac\rbr{\theta_t;\xb_i,A_{i,1},A_{i,2}}$ (bottom) of the entire Synapse training process. The x-axis indexes slices 0--2211. The y-axis enumerates epochs 0--150. Individual cases (patients) are partitioned by black lines, while purple lines separate slices with/without non-background pixels.}
    \label{fig:synapse-weights}
\end{figure}

\paragraph{Sample efficiency and robustness.} 
We first demonstrate the \emph{sample efficiency} of \ours in comparison to the baseline (ERM+SGD) when training only on different subsets of the full Synapse training set (``\b{full}'' in \Cref{tab:synapse_sample_eff}). Specifically, 
\begin{enumerate*}[label=(\roman*)]
    \item  \b{half-slice} contains slices with even indices only in each case (patient)\footnote{Such sampling is equivalent to doubling the time interval between two consecutive scans or halving the scanning frequency in practice, resulting in the halving of sample size.};
    \item \b{half-vol} consists of 9 cases uniformly sampled from the total 18 cases in \b{full} where different cases tend to have distinct $\xi$s (\ie, ratios of label-dense samples);
    \item \b{half-sparse} takes the first half slices in each case, most of which tend to be label-sparse (\ie, $\xi$s are made to be small).
\end{enumerate*}
As shown in \Cref{tab:synapse_sample_eff}, the model trained with \ours on \b{half-slice} generalizes as well as a baseline model trained on \b{full}, if not better.
Moreover, the \b{half-vol} and \b{half-sparse} experiments illustrate the \emph{robustness} of \ours to concept shift. 
Furthermore, such sample efficiency and distributional robustness of \ours extend to the more widely used UNet architecture. We defer the detailed results and discussions on UNet to \Cref{subsec:exp_unet}.

\begin{table*}[!ht]
    \caption{\ours with TransUNet trained on the full Synapse and its subsets. 
    }\vspace{-.5em}
    \label{tab:synapse_sample_eff}
    \centering
    \begin{adjustbox}{width=1\textwidth}  
    \begin{tabular}{ll|cc|cccccccc}
    \toprule
    Training & Method &  DSC $\uparrow$ & HD95 $\downarrow$ & Aorta & Gallbladder & Kidney (L) & Kidney (R) & Liver & Pancreas & Spleen & Stomach 
    \\
    \midrule
    \multirow{2}{*}{full} 
    & baseline & 76.66 $\pm$ 0.88 & 29.23 $\pm$ 1.90 & 87.06 & 55.90 & 	81.95 & 75.58 & 94.29 & 56.30 & 86.05 & 76.17
    \\
    & \ours & \best{79.04 $\pm$ 0.21} & \best{27.39 $\pm$ 1.91} & 87.53 & 56.57 & 83.23 & 81.12 & 94.04 & 62.05 & 89.51 & 78.32
    \\
    \midrule
    \multirow{2}{*}{half-slice} 
    & baseline & 74.62 $\pm$ 0.78 & 31.62 $\pm$ 8.37 & 86.14 & 44.23 & 79.09 & 78.46 & 93.50 & 55.78 & 84.54 & 75.24
    \\
    & \ours & \best{77.37 $\pm$ 0.40} & \best{29.56 $\pm$ 1.09} & 86.89 & 55.96 & 82.15 & 78.63 & 94.34 & 57.36 & 86.60 & 77.05
    \\
    \midrule
    \multirow{2}{*}{half-vol} 
    & baseline & 71.08 $\pm$ 0.90 & 46.83 $\pm$ 2.91 & 84.38 & 46.71 & 78.19 & 74.55 & 92.02 & 48.03 & 76.28 & 68.47
    \\
    & \ours & \best{73.81 $\pm$ 0.94} & \best{35.33 $\pm$ 0.92} & 84.37 & 48.14 & 80.32 & 77.39 & 93.23 & 52.78 & 83.50 & 70.79
    \\
    \midrule
    \multirow{2}{*}{half-sparse} 
    & baseline & 31.74 $\pm$ 2.78 & 69.72 $\pm$ 1.37 & 65.71 & 8.33 & 59.46 & 51.59 & 51.18 & 10.72 & 6.92 & 0.00
    \\
    & \ours & \best{41.03 $\pm$ 2.12} & \best{59.04 $\pm$ 12.32} & 71.27 & 8.33 & 69.14 & 63.09 & 64.29 & 17.74 & 30.77 & 3.57
    \\
    \bottomrule
    \end{tabular}
    \end{adjustbox}
\end{table*}

\paragraph{Comparison with hard-thresholding algorithms.}
\Cref{tab:trim} illustrates the empirical advantage of \ours over the hard-thresholding algorithms, as suggested in \Cref{rmk:relation_hard_thresholding}. In particular, we consider the following hard-thresholding algorithms:
\begin{enumerate*}[label=(\roman*)]
    \item \b{trim-train} learns only from slices with at least one non-background pixel and trims the rest in each iteration on the fly;
    \item \b{trim-ratio} ranks the cross-entropy loss $\lossce\rbr{\theta_t;(\xb_i,\yb_i)}$ in each iteration (mini-batch) and trims samples with the lowest cross-entropy losses at a fixed ratio -- the ratio of all-background slices in the full training set ($1-\frac{1280}{2211} \approx 0.42$); 
    \item \b{ACR} further incorporates the data augmentation consistency regularization directly via the addition of $\regdac\rbr{\theta_t;\xb_i,A_{i,1},A_{i,2}}$ without reweighting;
    \item \b{pseudo-\ours} simulates the sample weights $\betab$ at the saddle point and learns via $\lossce\rbr{\theta_t;(\xb_i,\yb_i)}$ on slices with at least one non-background pixel while via $\regdac\rbr{\theta_t;\xb_i,A_{i,1},A_{i,2}}$ otherwise.
\end{enumerate*}
We see that naive incorporation of \b{ACR} brings less observable boosts to the hard-thresholding methods. Therefore, the deliberate combination via reweighting in \ours is essential for performance improvement.

\begin{table*}[!ht]
    \caption{\ours versus hard-thresholding algorithms with TransUNet on Synapse.}\vspace{-.5em}
    \label{tab:trim}
    \centering
    \begin{adjustbox}{width=\textwidth}
    \begin{tabular}{l|ccccccc}
    \toprule
    \multirow{2}{*}{Method} & \multirow{2}{*}{baseline} & \multicolumn{2}{c}{trim-train} & \multicolumn{2}{c}{trim-ratio} & \multirow{2}{*}{pseudo-\ours}  & \multirow{2}{*}{\ours} 
    \\ 
    \cline{3-6} 
    & & & +ACR &  & +ACR & \\
    \midrule
    DSC $\uparrow$ & 76.66 $\pm$ 0.88 & 76.80 $\pm$ 1.13 & 78.42 $\pm$ 0.17 & 76.49 $\pm$ 0.16 & 77.71 $\pm$ 0.56 & 77.72 $\pm$ 0.65 & \best{79.04 $\pm$ 0.21} \\
    HD95 $\downarrow$ & 29.23 $\pm$ 1.90 & 32.05 $\pm$ 2.34 & 27.84 $\pm$ 1.16 & 31.96 $\pm$ 2.60 & 28.51 $\pm$ 2.66 & 28.45 $\pm$ 1.18 & \best{27.39 $\pm$ 1.91} \\
    \bottomrule
    \end{tabular}
    \end{adjustbox}
\end{table*}

\paragraph{Segmentation on ACDC.}
Performance improvements granted by \ours are also observed on the ACDC dataset (\Cref{tab:acdc}). We defer detailed visualization of ACDC segmentation to \Cref{apx:additional_experiment}.
\begin{table*}[!ht]
    \caption{\ours with TransUNet trained on ACDC.}\vspace{-.5em}
    \label{tab:acdc}
    \centering
    \begin{adjustbox}{width=.6\textwidth} 
    \begin{tabular}{l|cc|ccc}
    \toprule
    Method & DSC $\uparrow$ & HD95 $\downarrow$  & RV & Myo & LV \\
    \midrule
    TransUNet &	89.40 $\pm$ 0.22 & 2.55 $\pm$ 0.37 & 89.17 & 83.24 & 95.78 \\
    \ours (ours) &	\best{90.67 $\pm$ 0.27} & \best{1.45 $\pm$ 0.55} & 90.00 & 85.94 & 96.06 \\
    \bottomrule
    \end{tabular}
    \end{adjustbox}
\end{table*}

\subsection{Ablation Study}
\paragraph{On the influence of consistency regularization.} 
To illustrate the role of consistency regularization in \ours, we consider the \b{reweight-only} scenario with $\lambdac=0$ such that $\regdac\rbr{\theta_t;\xb_i,A_{i,1},A_{i,2}} \equiv 0$ and therefore $\iloc{\bb}{2}$ (\Cref{alg:srw_dac} line 7) remains intact. With zero consistency regularization in \ours, reweighting alone brings little improvement (\Cref{tab:ablation}).

\paragraph{On the influence of sample reweighting.} 
We then investigate the effect of sample reweighting under different reweighting learning rates $\eta_{\betab}$ (recall \Cref{alg:srw_dac}):
\begin{enumerate*}[label=(\roman*)]
    \item \b{ACR-only} for $\eta_{\betab} = 0$ (equivalent to the naive addition of $\regdac\rbr{\theta_t;\xb_i,A_{i,1},A_{i,2}}$),
    \item \b{\ours-0.01} for $\eta_{\betab} = 0.01$, and 
    \item \b{\ours-1.0} for $\eta_{\betab} = 1.0$.
\end{enumerate*} 
As \Cref{tab:ablation} implies, when removing reweighting from \ours, augmentation consistency regularization alone improves DSC slightly from $76.28$ (baseline) to $77.89$ (ACR-only), whereas \ours boosts DSC to $79.12$ (\ours-1.0) with a proper choice of $\eta_{\betab}$.

\begin{table*}[!ht]
    \caption{Ablation study of \ours with TransUNet trained on Synapse.}\vspace{-.5em}
    \label{tab:ablation}
    \centering
    \begin{adjustbox}{width=1\textwidth}  
    \begin{tabular}{l|cc|cccccccc}
    \toprule
    Method &  DSC $\uparrow$ & HD95 $\downarrow$ & Aorta & Gallbladder & Kidney (L) & Kidney (R) & Liver & Pancreas & Spleen & Stomach \\
    \midrule
    baseline &	76.66 $\pm$ 0.88 & 29.23 $\pm$ 1.90 & 87.06 & 55.90 & 	81.95 & 75.58 & 94.29 & 56.30 & 86.05 & 76.17 \\
    reweight-only & 76.91 $\pm$ 0.88 & 30.92 $\pm$ 2.37 & 87.18 & 52.89	& 82.15 & 77.11	& 94.15 & 58.35	& 86.36 & 77.08 \\
    ACR-only & 78.01 $\pm$ 0.62 & 27.78 $\pm$ 2.80 & 87.51 & 58.79 & 83.39 & 79.26 & 94.70 & 58.99 & 86.02 & 75.43 \\
    \ours-0.01 & 77.75 $\pm$ 0.23 &	28.02 $\pm$ 3.50 & 87.33 & 56.68 & 83.35 & 78.53 & 94.45 & 57.02 & 87.72 & 76.94 \\
    \ours-1.0 & \best{79.04 $\pm$ 0.21} & \best{27.39 $\pm$ 1.91} & 87.53 & 56.57 & 83.23 & 81.12 & 94.04 & 62.05 & 89.51 & 78.32 \\
    \bottomrule
    \end{tabular}
    \end{adjustbox}
\end{table*}


\section{Discussion}

In this paper, we explore the information imbalance commonly observed in medical image segmentation and exploit the information in features of label-sparse samples via \ours, an adaptively weighted online optimization algorithm. 
\ours can be viewed as a careful combination of adaptive sample reweighting and data augmentation consistency regularization.
By casting the information imbalance among samples as a concept shift in the data distribution, we leverage the unsupervised data augmentation consistency regularization on the encoder layer outputs (of UNet-style architectures) as a natural reference for distinguishing the label-sparse and label-dense samples via the comparisons against the supervised average cross-entropy loss.
We formulate such comparisons as a weighted augmentation consistency (WAC) regularization problem and propose \ours for iterative and smooth separation of samples from different subpopulations with a convergence guarantee.
Our experiments on various medical image segmentation tasks with different UNet-style architectures empirically demonstrate the effectiveness of \ours not only in improving the segmentation performance and sample efficiency but also in enhancing the distributional robustness to concept shifts.

\paragraph{Limitations and future directions.}
From an algorithmic perspective, a limitation of this work is the utilization of the encoder layer outputs $\phi_\theta\rbr{\cdot}$ for data augmentation consistency regularization, which resulted in \ours being specifically tailored to UNet-style backbones. However, our method can be generalized to other architectures in principle by selecting a representation extractor in the network that
\begin{enumerate*}[label=(\roman*)]
    \item well characterizes the marginal distribution of features $P\rbr{\xb}$
    \item while being robust to the concept shift in $P\rsep{\yb}{\xb}$.
\end{enumerate*}
Further investigation into such generalizations is a promising avenue for future research.

Meanwhile, noticing the prevalence of concept shifts in natural data, especially for dense prediction tasks like segmentation and detection, we hope to extend the application/idea of \ours beyond medical image segmentation as a potential future direction.


\appendices     
\chapter{Appendix for \Cref{ch:svra}}

\section{Technical Lemmas}\label{subapx:technical_proofs}
\begin{lemma}\label{lemma:sample_to_population_covariance}
Let $\xb \in \R^d$ be a random vector with $\E[\xb]=\b{0}$, $\E[\xb\xb^{\top}] = \Sigmab$, and $\overline{\xb} = \Sigmab^{-1/2} \xb$ 
\footnote{In the case where $\Sigmab$ is rank-deficient, we slightly abuse the notation such that $\Sigmab^{-1/2}$ and $\Sigmab^{-1}$ refer to the respective pseudo-inverses.}
being $\rho^2$-subgaussian\footnote{A random vector $\vb \in \R^d$ is $\rho^2$-subgaussian if for any unit vector $\ub \in \mathbb{S}^{d-1}$, $\ub^{\top} \vb$ is $\rho^2$-subgaussian, $\E \sbr{\exp(s \cdot \ub^{\top} \vb)} \leq \exp\rbr{s^2 \rho^2/2}$ for all $s \in \R$.}.
Given a set of $\iid$ samples of $\xb$, $\Xb=[\xb_1;\dots;\xb_n]$, and a set of weights corresponding to the samples, $\csepp{w_i>0}{i \in [n]}$, let $\Wb=\diag\rbr{w_1,\dots,w_n}$.
If $n \ge \frac{\tr\rbr{\Wb}^2}{\tr\rbr{\Wb^2}} \ge \frac{20736 \rho^4 d}{\eps^2} + \frac{10368\rho^4 \log(1/\delta)}{\eps^2}$, then with probability at least $1-\delta$,
\begin{align*}
    \rbr{1-\eps} \tr\rbr{\Wb} \Sigmab \aleq \Xb^{\top} \Wb \Xb \aleq \rbr{1+\eps} \tr\rbr{\Wb} \Sigmab
\end{align*}
Concretely, with $\Wb = \Ib_n$, $n = \frac{\tr\rbr{\Wb}^2}{\tr\rbr{\Wb^2}} = \Omega\rbr{\rho^4 d}$, and $\eps=\Theta\rbr{\rho^2 \sqrt{\frac{d}{n}}}$, $(1-\eps) \Sigmab \aleq \frac{1}{n}\Xb^{\top} \Xb \aleq (1+\eps) \Sigmab$ with high probability (at least $1-e^{-\Theta(d)}$).
\end{lemma}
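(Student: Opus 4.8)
\textbf{Proof proposal for Lemma~\ref{lemma:sample_to_population_covariance}.}

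The plan is to reduce to the isotropic case by whitening, then establish the weighted covariance concentration via a matrix concentration argument (Bernstein-type or a net argument combined with subgaussian tail bounds). First I would set $\overline{\xb}_i = \Sigmab^{-1/2} \xb_i$, so each $\overline{\xb}_i$ is a $\rho^2$-subgaussian isotropic random vector with $\E[\overline{\xb}_i \overline{\xb}_i^\top] = \Ib_d$. Writing $\Xb^\top \Wb \Xb = \Sigmab^{1/2} \rbr{\sum_{i=1}^n w_i \overline{\xb}_i \overline{\xb}_i^\top} \Sigmab^{1/2}$, the two-sided inequality $\rbr{1-\eps}\tr\rbr{\Wb} \Sigmab \aleq \Xb^\top \Wb \Xb \aleq \rbr{1+\eps}\tr\rbr{\Wb}\Sigmab$ is equivalent to the statement that $\Mb \dfeq \sum_{i=1}^n w_i \overline{\xb}_i \overline{\xb}_i^\top$ satisfies $\nbr{\Mb - \tr\rbr{\Wb} \Ib_d}_2 \le \eps \tr\rbr{\Wb}$. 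So it suffices to prove concentration of $\Mb$ around its mean $\E[\Mb] = \tr\rbr{\Wb} \Ib_d$ in spectral norm.

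Next I would apply a concentration inequality for sums of independent, self-adjoint, but unbounded (subgaussian) random matrices $w_i \overline{\xb}_i \overline{\xb}_i^\top$. The cleanest route is a truncation-plus-matrix-Bernstein argument: one truncates the event $\nbr{\overline{\xb}_i}_2^2 \gtrsim \rho^2 (d + \log(n/\delta))$, which by the subgaussian norm tail bound fails only with probability $\le \delta/(2n)$ per sample, handles the truncated part with matrix Bernstein (where the relevant variance proxy is $v \dfeq \nbr{\sum_i w_i^2 \E[(\overline{\xb}_i^\top \overline{\xb}_i)\overline{\xb}_i \overline{\xb}_i^\top]}_2 \lesssim \rho^4 d \tr\rbr{\Wb^2}$ and the per-term bound is $B \lesssim \rho^2 (d + \log(n/\delta)) \max_i w_i$), and absorbs the (rare) untruncated part by a union bound. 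Matrix Bernstein then yields, with probability $\ge 1 - \delta$,
\begin{align*}
    \nbr{\Mb - \tr\rbr{\Wb} \Ib_d}_2 \lesssim \sqrt{\rho^4 d \tr\rbr{\Wb^2} \log(d/\delta)} + \rho^2 (d + \log(n/\delta)) \max_i w_i.
\end{align*}
Dividing by $\tr\rbr{\Wb}$ and using the hypothesis $n \ge \tr\rbr{\Wb}^2 / \tr\rbr{\Wb^2} \gtrsim \rho^4 (d + \log(1/\delta)) / \eps^2$ (together with $\tr\rbr{\Wb}^2 / \tr\rbr{\Wb^2} \le n$, which controls the effective sample size, and the observation that the second term is lower order under these conditions) gives the relative error $\eps$. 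Tracking constants carefully recovers the stated numerical thresholds $20736$ and $10368$. The specialization to $\Wb = \Ib_n$ with $\eps = \Theta\rbr{\rho^2 \sqrt{d/n}}$ and failure probability $e^{-\Theta(d)}$ follows by taking $\delta = e^{-\Theta(d)}$ and simplifying, since $\tr\rbr{\Ib_n}^2/\tr\rbr{\Ib_n^2} = n$.

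The main obstacle I anticipate is the unbounded nature of the summands: a naive application of matrix Bernstein requires a uniform operator-norm bound on each term, which subgaussian vectors do not satisfy, so the truncation step must be executed with care to ensure (i) the truncated matrices still have (approximately) the correct mean — the mean shift from truncation must be shown negligible relative to $\eps \tr\rbr{\Wb}$ — and (ii) the variance proxy is not inflated by conditioning. An alternative that avoids truncation is an $\eps$-net argument over $\mathbb{S}^{d-1}$ combined with a scalar Bernstein inequality for $\sum_i w_i \rbr{(\ub^\top \overline{\xb}_i)^2 - 1}$, using that $(\ub^\top \overline{\xb}_i)^2 - 1$ is a centered subexponential random variable with parameter $O(\rho^2)$; the net has size $e^{O(d)}$, which is why the sample-size requirement scales linearly in $d$, and a union bound over the net absorbs the $e^{\Theta(d)}$ factor. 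I would likely present the net argument as the primary proof since it more transparently produces the exact form of the hypotheses (the $d$ and $\log(1/\delta)$ terms, and the $\rho^4$ dependence coming from squaring the subgaussian parameter), relegating the bookkeeping of explicit constants to a direct computation with standard subexponential tail bounds.
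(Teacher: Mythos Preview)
Your proposal is correct, and the $\eps$-net-plus-scalar-Bernstein argument you ultimately choose as primary is precisely the paper's approach: whiten, apply scalar Bernstein to the weighted sum of centered $16\rho^2$-subexponential variables $(\vb^\top\overline{\xb}_i)^2-1$ for each fixed $\vb$ (with weights $\beta_i = w_i/\tr(\Wb)$, so that $\|\betab\|_2^2 = \tr(\Wb^2)/\tr(\Wb)^2$ is exactly the inverse effective sample size), union-bound over a $\tfrac{1}{3}$-net of size $7^d$, and lift to the full sphere with an inflation factor of $\tfrac{9}{2}$, which is where the constants $20736$ and $10368$ come from. Your matrix-Bernstein-with-truncation alternative would also go through but would incur an extra $\log d$ factor in the sample-size requirement from the dimension-dependent tail in matrix Bernstein, so it would not reproduce the stated constants; you are right to prefer the net argument.
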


\begin{proof}
We first denote $\Pb_{\Xcal} \triangleq \Sigmab \Sigmab^{\pinv}$ as the orthogonal projector onto the subspace $\Xcal \subseteq \R^d$ supported by the distribution of $\xb$.
With the assumptions $\E[\xb]=\b{0}$ and $\E[\xb\xb^{\top}] = \Sigmab$, we observe that $\E \sbr{\overline{\xb}} = \b{0}$ and $\E\sbr{\overline{\xb} \overline{\xb}^{\top}} = \E \sbr{\rbr{\Sigmab^{-1/2}\xb} \rbr{\Sigmab^{-1/2}\xb}^{\top}} = \Pb_{\Xcal}$.
Given the sample set $\Xb$ of size $n \gg \rho^4\rbr{d+\log(1/\delta)}$ for any $\delta \in (0,1)$, we let
\begin{align*}
    \Ub = \frac{1}{\tr\rbr{\Wb}} \sum_{i=1}^n w_i \rbr{\Sigmab^{-1/2}\xb} \rbr{\Sigmab^{-1/2}\xb}^{\top} - \Pb_{\Xcal}.
\end{align*}
Then the problem can be reduced to showing that, for any $\eps>0$, with probability at least $1-\delta$, $\norm{\Ub}_2 \leq \eps$.
For this, we leverage the $\eps$-net argument as follows.

For an arbitrary $\vb \in \Xcal \cap\ \mathbb{S}^{d-1}$, we have
\begin{align*}
    \vb^{\top} \Ub \vb = 
    \frac{1}{\tr\rbr{\Wb}} \sum_{i=1}^n 
    w_i \rbr{\vb^{\top} \rbr{\Sigmab^{-1/2}\xb} \rbr{\Sigmab^{-1/2}\xb}^{\top} \vb - 1} = 
    \frac{1}{\tr\rbr{\Wb}} \sum_{i=1}^n 
    w_i \rbr{\rbr{\vb^{\top} \overline{\xb}_i}^2 - 1},
\end{align*}
where, given $\overline{\xb}_i$ being $\rho^2$-subgaussian, 
$\vb^{\top} \overline{\xb}_i$ is $\rho^2$-subgaussian. 
Since 
\begin{align*}
    \E \sbr{\rbr{\vb^{\top} \overline{\xb}_i}^2} = \vb^{\top} \E \sbr{\overline{\xb}_i \overline{\xb}_i^{\top}} \vb = 1,
\end{align*}
we know that $\rbr{\vb^{\top} \overline{\xb}_i}^2-1$ is $16\rho^2$-subexponential\footnote{We abbreviate $\rbr{\nu,\nu}$-subexponential (\ie, recall that a random variable $X$ is $\rbr{\nu,\alpha}$-subexponential if $\E\sbr{\exp\rbr{s X}} \le \exp\rbr{s^2 \nu^2/2}$ for all $\abbr{s} \le 1/\alpha$) simply as $\nu$-subexponential.}.
With $\beta_i \dfeq \frac{w_i}{\tr\rbr{\Wb}}$ for all $i \in [n]$ such that $\betab = \sbr{\beta_1;\dots;\beta_n}$, we recall Bernstein's inequality \cite[Theorem 2.8.2]{vershynin2018high}\cite[Section 2.1.3]{wainwright2019high},
\begin{align*}
    \PP\sbr{\abbr{\vb^{\top} \Ub \vb} = \abbr{\sum_{i=1}^n \beta_i \cdot \rbr{\rbr{\vb^{\top} \overline{\xb}_i}^2-1}} > t} \leq 
    2 \exp \rbr{-\frac{1}{2} 
    \min\rbr{\frac{t^2}{\rbr{16 \rho^2}^2 \nbr{\betab}_2^2}, 
    \frac{t}{16 \rho^2 \nbr{\betab}_\infty}}},
\end{align*}
where $\nbr{\betab}_2^2 = \frac{\tr\rbr{\Wb^2}}{\tr\rbr{\Wb}^2}$ and $\nbr{\betab}_\infty = \frac{\max_{i \in [n]} w_i}{\tr\rbr{\Wb}}$.

Let $N \subset \Xcal \cap\ \mathbb{S}^{d-1}$ be an $\eps_1$-net such that $\abbr{N} = \rbr{1+\frac{2}{\eps_1}}^d$. 
Then for some $0 < \eps_2 \leq 16 \rho^2 \frac{\nbr{\betab}_2^2}{\nbr{\betab}_\infty}$, by the union bound,
\begin{align*}
    \PP \sbr{\underset{\vb \in N}{\max}: \abbr{\vb^{\top} \Ub \vb} > \eps_2} 
    \leq \ 
    & 2 \abbr{N} \exp \rbr{-\frac{1}{2}
    \min\rbr{\frac{\eps_2^2}{\rbr{16 \rho^2}^2 \nbr{\betab}_2^2}, 
    \frac{\eps_2}{16 \rho^2 \nbr{\betab}_\infty}} } 
    \\ \leq \ 
    & \exp \rbr{d\log\rbr{1+\frac{2}{\eps_1}} -\frac{1}{2} \cdot \frac{\eps_2^2}{\rbr{16 \rho^2}^2 \nbr{\betab}_2^2} } = \delta
\end{align*}
whenever $\frac{1}{\nbr{\betab}_2^2} = \frac{\tr\rbr{\Wb}^2}{\tr\rbr{\Wb^2}} = 2 \rbr{\frac{16 \rho^2}{\eps_2}}^2 \rbr{d\log\rbr{1+\frac{2}{\eps_1}} + \log\frac{1}{\delta}}$ where $1 < \frac{\tr\rbr{\Wb}^2}{\tr\rbr{\Wb^2}} \le n$. 

Now for any $\vb \in \Xcal \cap\ \mathbb{S}^{d-1}$, there exists some $\vb' \in N$ such that $\norm{\vb - \vb'}_2 \leq \eps_1$. 
Therefore,
\begin{align*}
    \abbr{\vb^{\top} \Ub \vb} 
    \ = \
    & \abbr{\vb'^{\top} \Ub \vb' + 
    2 \vb'^{\top} \Ub \rbr{\vb - \vb'} + 
    \rbr{\vb - \vb'}^{\top} \Ub \rbr{\vb - \vb'} }
    \\ \leq \ 
    & \rbr{ \underset{\vb \in N}{\max}: \abbr{\vb^{\top} \Ub \vb} } + 
    2 \norm{\Ub}_2 \norm{\vb'}_2 \norm{\vb-\vb'}_2 + 
    \norm{\Ub}_2 \norm{\vb-\vb'}_2^2
    \\ \leq \ 
    & \rbr{ \underset{\vb \in N}{\max}: \abbr{\vb^{\top} \Ub \vb} } + 
    \norm{\Ub}_2
    \rbr{2 \eps_1 + \eps_1^2}.
\end{align*}
Taking the supremum over $\vb \in \mathbb{S}^{d-1}$, with probability at least $1-\delta$,
\begin{align*}
    \underset{\vb \in \Xcal \cap\ \mathbb{S}^{d-1}}{\max}: \abbr{\vb^{\top} \Ub \vb}
    = 
    \norm{\Ub}_2
    \leq 
    \eps_2 + \norm{\Ub}_2 \rbr{2 \eps_1 + \eps_1^2},
    \qquad
    \norm{\Ub}_2
    \leq 
    \frac{\eps_2}{2 - \rbr{1+\eps_1}^2}.
\end{align*}
With $\eps_1 = \frac{1}{3}$, we have $\eps = \frac{9}{2}\eps_2$.

Overall, if $n \ge \frac{\tr\rbr{\Wb}^2}{\tr\rbr{\Wb^2}} \ge \frac{1024 \rho^4 d}{\eps_2^2} + \frac{512 \rho^4}{\eps_2^2}\log\frac{1}{\delta}$, then with probability at least $1-\delta$, we have $\norm{\Ub}_2 \leq \eps$.

As a concrete instance, when $\Wb=\Ib_n$ and $n = \frac{\tr\rbr{\Wb}^2}{\tr\rbr{\Wb^2}} \ge 9^2 \cdot 1025 \cdot \rho^4 d$, by taking $\eps_2 = \sqrt{\frac{1025 \rho^4 d}{n}}$, we have $\nbr{\Ub}_2 \le \frac{1}{2} \sqrt{\frac{9^2 \cdot 1025 \cdot \rho^4 d}{n}}$ with high probability (at least $1-\delta$ where $\delta = \exp\rbr{-\frac{d}{512}}$).
\end{proof}

\begin{lemma}[Cauchy interlacing theorem]\label{lemma:Cauchy_interlacing_theorem}
Given an arbitrary matrix $\Ab \in \C^{m \times n}$ and an orthogonal projection $\Qb \in \C^{n \times k}$ with orthonormal columns, for all $i=1,\dots,k$,
\begin{align*}
    \sigma_{i}\rbr{\Ab \Qb} \leq \sigma_{i}\rbr{\Ab}.
\end{align*}
\end{lemma}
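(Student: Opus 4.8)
The plan is to deduce the inequality from the Courant--Fischer min--max characterization of singular values. For any matrix $\Mb \in \C^{d \times l}$ and any $i \le l$ one has
\begin{align*}
    \sigma_i\rbr{\Mb} = \max_{\substack{\Scal \subseteq \C^{l} \\ \dim\Scal = i}}~ \min_{\substack{\vb \in \Scal \\ \nbr{\vb}_2 = 1}}~ \nbr{\Mb \vb}_2 ,
\end{align*}
which follows by applying the usual min--max theorem to the Hermitian matrix $\Mb^* \Mb$ and taking square roots. I would first state this characterization (citing it as standard, or as a one-line consequence of the spectral theorem), and then apply it to both $\Ab\Qb \in \C^{m \times k}$ and $\Ab \in \C^{m \times n}$.

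The key observation is that, since $\Qb$ has orthonormal columns, the linear map $\vb \mapsto \Qb\vb$ is an isometry from $\C^k$ into $\C^n$: $\nbr{\Qb\vb}_2 = \nbr{\vb}_2$ for all $\vb$. Consequently it is injective, so for any $i$-dimensional subspace $\Scal \subseteq \C^k$ the image $\Qb\Scal \triangleq \csepp{\Qb\vb}{\vb \in \Scal}$ is an $i$-dimensional subspace of $\C^n$, and the isometry restricts to a bijection $\Scal \to \Qb\Scal$ preserving norms. Therefore, for each fixed $\Scal$,
\begin{align*}
    \min_{\substack{\vb \in \Scal \\ \nbr{\vb}_2 = 1}} \nbr{\Ab\Qb\vb}_2
    = \min_{\substack{\wb \in \Qb\Scal \\ \nbr{\wb}_2 = 1}} \nbr{\Ab\wb}_2
    \le \max_{\substack{\Tcal \subseteq \C^n \\ \dim\Tcal = i}}~ \min_{\substack{\wb \in \Tcal \\ \nbr{\wb}_2 = 1}} \nbr{\Ab\wb}_2
    = \sigma_i\rbr{\Ab}.
\end{align*}
Taking the maximum of the left-hand side over all $i$-dimensional subspaces $\Scal \subseteq \C^k$ yields $\sigma_i\rbr{\Ab\Qb} \le \sigma_i\rbr{\Ab}$, as claimed. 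I would also note in passing that the cases $i > \min\rbr{m,k}$ are trivial since then $\sigma_i\rbr{\Ab\Qb} = 0$.

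There is essentially no hard step here; the only point requiring a moment of care is the dimension bookkeeping in the middle paragraph --- verifying that $\Qb$ maps an $i$-dimensional subspace onto an $i$-dimensional subspace (injectivity) and does so isometrically, so that the inner minimum is genuinely unchanged. An alternative route, which I would mention as a remark, is to write $\sigma_i\rbr{\Ab\Qb}^2 = \lambda_i\rbr{\Qb^* \Ab^* \Ab \Qb}$, recognize $\Qb^* \Ab^* \Ab \Qb$ as a principal compression of the positive semidefinite matrix $\Ab^* \Ab$, and invoke the Hermitian Cauchy interlacing inequality $\lambda_i\rbr{\Qb^* \Hb \Qb} \le \lambda_i\rbr{\Hb}$; but since that inequality is itself proved by the same min--max argument, the direct proof above is the most self-contained.
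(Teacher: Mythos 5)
Your proposal is correct and follows essentially the same route as the paper's proof: the paper also invokes the Courant--Fischer min--max theorem (applied to the Rayleigh quotients of $\Qb^*\Ab^*\Ab\Qb$ versus $\Ab^*\Ab$, with the span of the top-$i$ right singular vectors of $\Ab\Qb$ as the test subspace), with the isometry of $\Qb$ supplying the same dimension/norm preservation you spell out. Your write-up just makes that bookkeeping explicit, so no substantive difference.
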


\begin{proof}[Proof of \Cref{lemma:Cauchy_interlacing_theorem}]
Let $\csepp{\vb_j \in \C^k}{j=1,\dots,k}$ be the right singular vectors of $\Ab\Qb$. By the min-max theorem (\cf \cite{golub2013} Theorem 8.6.1), 
\begin{align*}
    \sigma_{i}\rbr{\Ab \Qb}^2
    = \min_{\xb \in \spn\cbr{\vb_1,\dots,\vb_i}\setminus\rbr{\b0}} \frac{\xb^\top \Qb^\top \Ab^\top \Ab \Qb \xb}{\xb^\top \xb}
    \leq \max_{\dim\rbr{\Vcal}=i} \min_{\xb \in \Vcal\setminus\rbr{\b0}} \frac{\xb^\top \Ab^\top \Ab \xb}{\xb^\top \xb} = \sigma_i\rbr{\Ab}^2.
\end{align*}  
\end{proof}

\begin{lemma}[\citep{horn_johnson_2012} (7.3.14)]\label{lemma:individual_sval_of_product}
For arbitrary matrices $\Ab,\Bb \in \C^{m \times n}$,
\begin{align}\label{eq:lemma_individual_sval_of_product}
    \sigma_i\rbr{\Ab \Bb^*} \le \sigma_i\rbr{\Ab} \sigma_j\rbr{\Bb}
\end{align} 
for all $i \in [\rank\rbr{\Ab}]$, $j \in [\rank\rbr{\Bb}]$ such that $i+j-1 \in [\rank\rbr{\Ab\Bb^*}]$.
\end{lemma}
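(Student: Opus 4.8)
The plan is to derive the bound from the Eckart--Young--Mirsky variational characterization of singular values combined with submultiplicativity of the spectral norm. The only fact I need is that for any matrix $\Mb$ and any integer $t \ge 1$, $\sigma_t(\Mb) \le \nbr{\Mb - \Nb}_2$ for every $\Nb$ with $\rank(\Nb) \le t-1$ (with equality attained by the rank-$(t-1)$ truncated SVD of $\Mb$).

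First I would fix spectral-norm-optimal low-rank truncations of the two factors: let $\Ab_{i-1}$ be a best rank-$(i-1)$ approximation of $\Ab$, so $\nbr{\Ab - \Ab_{i-1}}_2 = \sigma_i(\Ab)$, and let $\Bb_{j-1}$ be a best rank-$(j-1)$ approximation of $\Bb$, so $\nbr{\Bb - \Bb_{j-1}}_2 = \sigma_j(\Bb) = \sigma_j(\Bb^*)$; note that $\rbr{\Bb_{j-1}}^*$ is then a best rank-$(j-1)$ approximation of $\Bb^*$. Next I would expand
\[
\Ab\Bb^* = \rbr{\Ab - \Ab_{i-1}}\rbr{\Bb - \Bb_{j-1}}^* + \Fb, \qquad \Fb \dfeq \Ab_{i-1}\Bb^* + \rbr{\Ab - \Ab_{i-1}}\Bb_{j-1}^*,
\]
(which one checks by multiplying out the right-hand side) and record the rank estimate $\rank(\Fb) \le \rank\rbr{\Ab_{i-1}\Bb^*} + \rank\rbr{\rbr{\Ab - \Ab_{i-1}}\Bb_{j-1}^*} \le \rank(\Ab_{i-1}) + \rank(\Bb_{j-1}) \le (i-1) + (j-1) = i + j - 2$, using subadditivity of rank under sums and $\rank(\Xb\Yb) \le \min\cbr{\rank(\Xb), \rank(\Yb)}$.

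Finally, applying the variational bound to $\Mb = \Ab\Bb^*$ at index $t = i+j-1$ with the competitor $\Fb$ of rank at most $i+j-2$ yields
\[
\sigma_{i+j-1}\rbr{\Ab\Bb^*} \le \nbr{\Ab\Bb^* - \Fb}_2 = \nbr{\rbr{\Ab - \Ab_{i-1}}\rbr{\Bb - \Bb_{j-1}}^*}_2 \le \nbr{\Ab - \Ab_{i-1}}_2 \nbr{\Bb - \Bb_{j-1}}_2 = \sigma_i(\Ab)\,\sigma_j(\Bb),
\]
which is the asserted inequality; the hypothesis $i + j - 1 \in [\rank(\Ab\Bb^*)]$ merely ensures the left-hand side is a genuine singular value, the inequality being trivial otherwise since then $\sigma_{i+j-1}(\Ab\Bb^*) = 0$.

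No step is genuinely hard; the only place that needs attention is the rank bookkeeping in the decomposition of $\Ab\Bb^*$ --- in particular, grouping the cross terms as $\Ab_{i-1}\Bb^*$ together with $\rbr{\Ab - \Ab_{i-1}}\Bb_{j-1}^*$ (rather than, say, as $\Ab_{i-1}\Bb^* + \Ab\Bb_{j-1}^* - \Ab_{i-1}\Bb_{j-1}^*$) is what keeps $\rank(\Fb)$ at $i+j-2$ instead of something larger. As an alternative to the Eckart--Young route one can argue directly via Courant--Fischer: take the optimal $(i+j-1)$-dimensional subspace $V$ for $\sigma_{i+j-1}(\Ab\Bb^*) = \max_{\dim V = i+j-1} \min_{0 \ne \xb \in V} \nbr{\Ab\Bb^*\xb}/\nbr{\xb}$, intersect it with the span of the trailing right singular vectors of $\Bb^*$ (those of singular value at most $\sigma_j(\Bb)$) and with the preimage under $\Bb^*$ of the span of the trailing right singular vectors of $\Ab$, observe by a dimension count that this triple intersection is nonzero, and note that on any unit vector $\xb$ of it $\nbr{\Ab\Bb^*\xb} \le \sigma_i(\Ab)\sigma_j(\Bb)$. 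I would present the Eckart--Young argument as the proof, since it is shorter and avoids the dimension-counting step.
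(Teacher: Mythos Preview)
The paper does not prove this lemma; it is simply quoted from Horn and Johnson. Your argument via the Eckart--Young variational characterization together with rank subadditivity is correct and is essentially the standard proof. One remark: what you establish is $\sigma_{i+j-1}(\Ab\Bb^*) \le \sigma_i(\Ab)\sigma_j(\Bb)$, whereas the displayed statement has $\sigma_i$ on the left-hand side; the index hypothesis $i+j-1 \in [\rank(\Ab\Bb^*)]$ and the way the lemma is invoked elsewhere in the paper both confirm that the intended left-hand index is $i+j-1$, so this is a typo in the statement rather than a gap in your proof.
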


\begin{lemma}[[\citep{horn_johnson_2012} (7.3.13)] \label{lemma:individual_sval_of_sum}
For arbitrary matrices $\Ab,\Bb \in \C^{m \times n}$,
\begin{align}\label{eq:lemma_individual_sval_of_sum}
    \sigma_{i+j-1}\rbr{\Ab + \Bb} \le \sigma_i\rbr{\Ab} + \sigma_j\rbr{\Bb}
\end{align}    
for all $i \in [\rank\rbr{\Ab}]$, $j \in [\rank\rbr{\Bb}]$ such that $i+j-1 \in [\rank\rbr{\Ab+\Bb}]$.
\end{lemma}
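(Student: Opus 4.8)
The statement to prove is \Cref{lemma:individual_sval_of_sum}, the classical Weyl-type inequality $\sigma_{i+j-1}(\Ab + \Bb) \le \sigma_i(\Ab) + \sigma_j(\Bb)$. Since the companion result \Cref{lemma:individual_sval_of_product} is already stated in the excerpt and available for use, I would aim for the shortest self-contained route, which is a min-max (Courant--Fischer--Weyl) argument applied directly to singular values. The plan is to first recall the variational characterization of singular values: for a matrix $\Mb \in \C^{m \times n}$ and any index $k$,
\begin{align*}
    \sigma_k(\Mb) = \min_{\substack{\Scal \subseteq \C^n \\ \dim \Scal = n - k + 1}} \max_{\substack{\xb \in \Scal \\ \nbr{\xb}_2 = 1}} \nbr{\Mb \xb}_2,
\end{align*}
equivalently obtained from the min-max theorem for the Hermitian matrix $\Mb^* \Mb$ (as used in the proof of \Cref{lemma:Cauchy_interlacing_theorem}).

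Next I would choose near-optimal subspaces for $\Ab$ and $\Bb$ separately. Let $\Scal_A$ be an $(n - i + 1)$-dimensional subspace achieving $\sigma_i(\Ab) = \max_{\xb \in \Scal_A, \nbr{\xb}_2 = 1} \nbr{\Ab \xb}_2$, and let $\Scal_B$ be an $(n - j + 1)$-dimensional subspace achieving $\sigma_j(\Bb) = \max_{\xb \in \Scal_B, \nbr{\xb}_2 = 1} \nbr{\Bb \xb}_2$. The intersection $\Scal_A \cap \Scal_B$ has dimension at least $(n - i + 1) + (n - j + 1) - n = n - (i + j - 1) + 1$. Then for any unit vector $\xb \in \Scal_A \cap \Scal_B$, the triangle inequality gives $\nbr{(\Ab + \Bb)\xb}_2 \le \nbr{\Ab \xb}_2 + \nbr{\Bb \xb}_2 \le \sigma_i(\Ab) + \sigma_j(\Bb)$. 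Feeding this subspace into the min-max formula (choosing any $(n - (i+j-1) + 1)$-dimensional subspace inside $\Scal_A \cap \Scal_B$) yields $\sigma_{i+j-1}(\Ab + \Bb) \le \sigma_i(\Ab) + \sigma_j(\Bb)$, which is exactly \Cref{eq:lemma_individual_sval_of_sum}.

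The only genuine subtlety — and the step I would be most careful about — is the bookkeeping of the rank/index constraints: the claim is stated for $i \in [\rank(\Ab)]$, $j \in [\rank(\Bb)]$ with $i + j - 1 \in [\rank(\Ab + \Bb)]$, so I must check the dimension count $n - (i+j-1) + 1 \ge 1$ holds (it does, precisely because $i + j - 1 \le n$, which is implied by the rank constraints) and that the relevant subspaces are nonempty. Degenerate cases where $\Ab$, $\Bb$, or $\Ab + \Bb$ are rank-deficient are handled uniformly by the min-max formula provided the indices stay in the stated ranges; I would note that the inequality also holds trivially (both sides appropriately interpreted) outside those ranges, but restricting to the stated hypotheses avoids any edge-case discussion. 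An alternative I would mention in passing is to deduce it from \Cref{lemma:individual_sval_of_product} via dilations or from the additive Weyl inequality for eigenvalues of the Hermitian dilation $\bmat{\b0 & \Mb \\ \Mb^* & \b0}$, but the direct min-max argument above is cleanest and needs no auxiliary constructions.
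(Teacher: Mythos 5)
Your proposal is correct. Note, however, that the paper itself gives no proof of this lemma at all: it is quoted verbatim as a known result from Horn and Johnson (their inequality (7.3.13)), exactly as \Cref{lemma:individual_sval_of_product} is quoted as (7.3.14), and both are used only as black-box tools in the proof of \Cref{thm:separate_gap_bounds}. So there is no paper argument to compare against; what you have written is essentially the standard textbook derivation that the citation points to. Your steps are all sound: the variational characterization $\sigma_k(\Mb) = \min_{\dim \Scal = n-k+1} \max_{\xb \in \Scal,\, \nbr{\xb}_2 = 1} \nbr{\Mb\xb}_2$ is the min-max theorem applied to $\Mb^*\Mb$ (the same device the paper uses for \Cref{lemma:Cauchy_interlacing_theorem}); the subspaces $\Scal_A$ and $\Scal_B$ attaining the minima are spanned by the trailing right singular vectors of $\Ab$ and $\Bb$; the dimension count $\dim(\Scal_A \cap \Scal_B) \ge n - (i+j-1) + 1$ is correct; and the triangle inequality on a unit vector in the intersection closes the argument. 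Your bookkeeping remark is also right: the hypothesis $i+j-1 \in [\rank(\Ab+\Bb)]$ forces $i+j-1 \le \min(m,n) \le n$, so the required subspace dimension is at least one and no degenerate case arises. One small clarification worth making explicit if you write this up: after restricting to a subspace of dimension exactly $n-(i+j-1)+1$ inside $\Scal_A \cap \Scal_B$, you bound the minimum in the variational formula by the value at that particular subspace, which is what yields $\sigma_{i+j-1}(\Ab+\Bb) \le \sigma_i(\Ab) + \sigma_j(\Bb)$; as stated your text implies this but does not say it in one line. The alternative route via the Hermitian dilation and Weyl's eigenvalue inequality that you mention in passing is equally valid but, as you say, unnecessary here.
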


\section{Supplementary Experiments}

\subsection{Upper and Lower Space-agnostic Bounds}\label{subapx:sup_exp_upper_lower_space_agnostic}

In this section, we visualize and compare the upper and lower bounds in \Cref{thm:space_agnostic_bounds} under the sufficient multiplicative oversampling regime (\ie, $l = 4k$. Recall that $k < l < r = \rank\rbr{\Ab}$ where $k$ is the target rank, $l$ is the oversampled rank, and $r$ is the full rank of the matrix $\Ab$).

\begin{figure}[!ht]
    \centering
    \includegraphics[width=\linewidth]{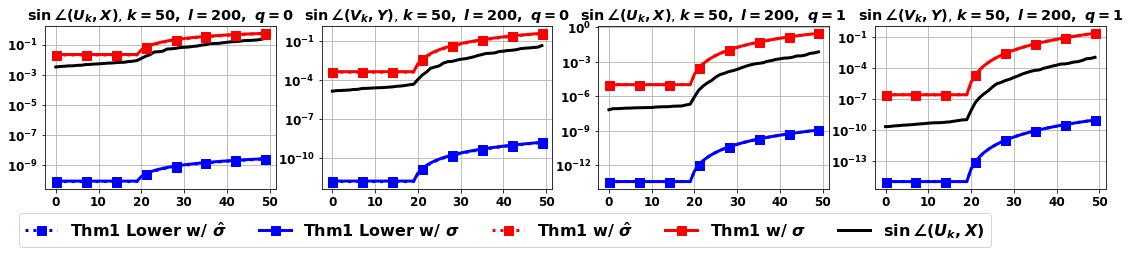}
    \caption{Synthetic Gaussian with the slower spectral decay. $k=50$, $l=200$, $q=0,1$.}
    \label{fig:lower_Gaussian-poly1-kl_k50_l200}
\end{figure}

\begin{figure}[!ht]
    \centering
    \includegraphics[width=\linewidth]{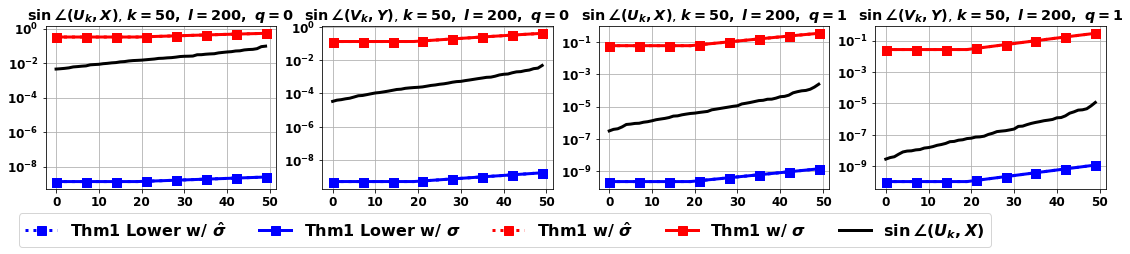}
    \caption{Synthetic Gaussian with the faster spectral decay. $k=50$, $l=200$, $q=0,1$.}
    \label{fig:lower_Gaussian-exp-kl_k50_l200}
\end{figure}

\begin{figure}[!ht]
    \centering
    \includegraphics[width=\linewidth]{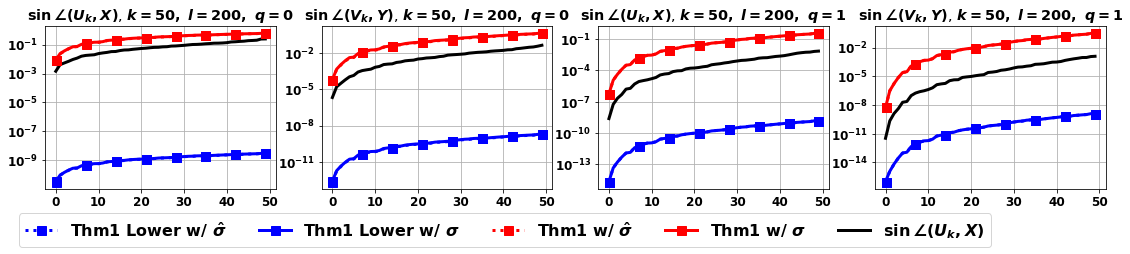}
    \caption{SNN with $r_1=20$, $a=1$. $k=50$, $l=200$, $q=0,1$.}
    \label{fig:lower_SNN-m500n500r20a1-kl_k50_l200}
\end{figure}

\begin{figure}[!ht]
    \centering
    \includegraphics[width=\linewidth]{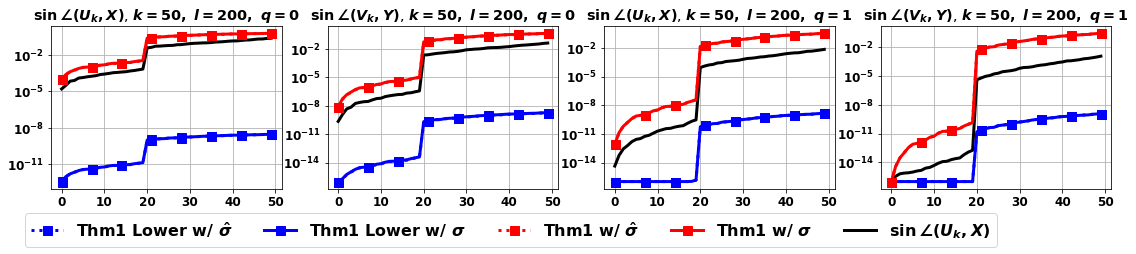}
    \caption{SNN with $r_1=20$, $a=100$. $k=50$, $l=200$, $q=0,1$.}
    \label{fig:lower_SNN-m500n500r20a100-kl_k50_l200}
\end{figure}

\begin{figure}[!ht]
    \centering
    \includegraphics[width=\linewidth]{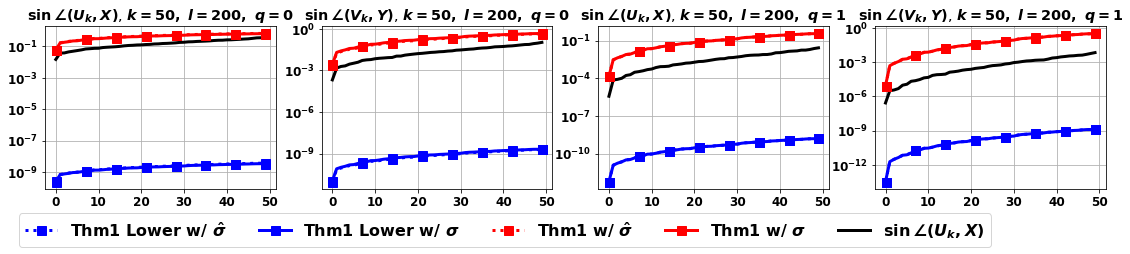}
    \caption{$800$ randomly sampled images from the MNIST training set. $k=50$, $l=200$, $q=0,1$.}
    \label{fig:lower_mnist-train800-kl_k50_l200}
\end{figure}

With the same set of target matrices described in \Cref{subsec:target_matrix}, from \Cref{fig:lower_Gaussian-poly1-kl_k50_l200} to \Cref{fig:lower_mnist-train800-kl_k50_l200},
\begin{enumerate}
    \item {\color{red} Red lines and dashes} represent the upper bounds in \Cref{eq:space_agnostic_left} and \Cref{eq:space_agnostic_left} evaluated with the true (lines) and approximated (dashes) singular values, $\Sigmab$, and $\wt\Sigmab$, respectively, where we simply ignore tail decay and suppress constants for the distortion factors and set 
    \begin{align*}
        \eps_1 = \sqrt{\frac{k}{l}}
        \quad\text{and}\quad
        \eps_2 = \sqrt{\frac{l}{r-k}}.
    \end{align*}
    
    \item {\color{blue} Blue lines and dashes} present the lower bounds in \Cref{eq:space_agnostic_lower_left} and \Cref{eq:space_agnostic_lower_right} evaluated with $\Sigmab$ and $\wt\Sigmab$, respectively, and slightly larger constants associated with the distortion factors 
    \begin{align*}
        \eps'_1 = 2\sqrt{\frac{k}{l}}
        \quad\text{and}\quad
        \eps'_2 = 2\sqrt{\frac{l}{r-k}}.
    \end{align*}
\end{enumerate}

The numerical observations imply that the empirical validity of lower bounds requires more aggressive oversampling than that of upper bounds. In particular, we recall from \Cref{subsec:experiment_canonical_angle} that $l \ge 1.6k$ is usually sufficient for the upper bounds to hold numerically. In contrast, the lower bounds generally require at least $l \ge 4k$, with slightly larger constants associated with the distortion factors $\eps_1 = \Theta\rbr{\sqrt{k/l}}$ and $\eps_2 = \Theta\rbr{\sqrt{l/\rbr{r-k}}}$.
\chapter{Appendix for \Cref{ch:dac}}

\section{Linear Regression Models}\label{apdx:general}

In this section, we present formal proofs for the results on linear regression in the fixed design where the training samples $\rbr{\Xb,\yb}$ and their augmentations $\wt\Acal\rbr{\Xb}$ are considered to be fixed. We discuss two types of augmentations: the label invariant augmentations in \Cref{sec:linear_regression_label_invariant} and the misspecified augmentations in \Cref{subsec:finite_lambda}.

\subsection{Linear Regression with Label Invariant Augmentations}\label{apdx:linear_regression}
 
For fixed $\wt\Acal(\Xb)$, let $\Deltab \triangleq \widetilde \Acal(\Xb) - \widetilde \Mb \Xb$ in this section. We recall that $\dau = \rank\rbr{\Deltab}$ since there is no randomness in $\widetilde \Acal, \Xb$ in fix design setting. Assuming that $\widetilde \Acal(\Xb)$ admits full column rank, we have the following theorem on the excess risk of DAC and ERM:
\begin{theorem}[Formal restatement of \Cref{thm:informal_linear_regression} on linear regression.]\label{thm:formal_linear_regression}
Learning with DAC regularization, we have $\E\sbr{L(\widehat \thetab^{dac}) - L(\thetab^*)} = \frac{(d - \dau)\sigma^2}{N}$, while learning with ERM directly on the augmented dataset, we have $\E\sbr{L(\widehat \thetab^{\herm}) - L(\thetab^*)} = \frac{(d - \dau + d')\sigma^2}{N}$. $d'$ is defined as
\begin{align*}
    d' \triangleq \frac{\tr\rbr{\widetilde\Mb^\top\rbr{\projAX - \Pb_{\Scal}}\widetilde\Mb}}{1+\alpha},
\end{align*}
where $d' \in [0, \dau]$ with $\projAX = \widetilde \Acal(\Xb)\rbr{\widetilde \Acal(\Xb)^\top\widetilde \Acal(\Xb)}^{-1}\widetilde \Acal(\Xb)^\top$ and $\Pb_\Scal \in \R^{(\alpha+1)N \times (\alpha+1)N}$ is the orthogonal projector onto $\Scal \triangleq \cbr{\widetilde \Mb \Xb \thetab~|~\forall \thetab \in \R^d, s.t. \rbr{\widetilde\Acal(\Xb) - \widetilde \Mb \Xb}\thetab = \b0}$.
\end{theorem}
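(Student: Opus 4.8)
The plan is to treat both estimators as least-squares solutions and compute their fixed-design excess risks explicitly by decomposing the error along the perturbation subspace $\range(\Deltab^\top)$ and its orthogonal complement. First I would record the key geometric fact: because the labels of augmentations are unchanged, $\yb^* \dfeq \wt\Acal(\Xb)\thetab^*$ satisfies $\wt\Acal(\Xb)\thetab^* = \wt\Mb\Xb\thetab^*$ only on the part of $\thetab^*$ orthogonal to $\range(\Deltab^\top)$; more precisely, the DAC constraint $\Deltab\thetab = \b0$ forces $\wh\thetab^{dac}$ to lie in $\kernel(\Deltab)$, a subspace of dimension $d - \dau$. Restricting the regression to this $(d-\dau)$-dimensional subspace, $\wh\thetab^{dac}$ is the ordinary least-squares solution of $\yb = \Xb\thetab + \epsb$ over $\kernel(\Deltab)$; since $\thetab^* \in \kernel(\Deltab)$ as well (label invariance implies $\Deltab\thetab^* = \b0$), the standard OLS variance computation gives $\E_{\epsb}[L(\wh\thetab^{dac}) - L(\thetab^*)] = \frac{(d-\dau)\sigma^2}{N}$, where the factor $\sigma^2/N$ comes from the noise covariance on the $N$ original samples and the dimension count $d-\dau$ is the trace of the relevant hat matrix. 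I would make this rigorous by writing $\Xb|_{\kernel(\Deltab)}$ via an orthonormal basis $\Qb \in \R^{d\times(d-\dau)}$ of $\kernel(\Deltab)$, so $\wh\thetab^{dac} - \thetab^* = \Qb(\Qb^\top\Xb^\top\Xb\Qb)^{-1}\Qb^\top\Xb^\top\epsb$, and then $L(\wh\thetab^{dac}) - L(\thetab^*) = \frac{1}{(1+\alpha)N}\|\wt\Acal(\Xb)(\wh\thetab^{dac}-\thetab^*)\|_2^2$; taking expectation and using $\E[\epsb\epsb^\top] = \sigma^2\Ib_N$ collapses this to $\frac{\sigma^2}{N}\tr(\Pb)$ for the appropriate rank-$(d-\dau)$ projector $\Pb$, after verifying that $\frac{1}{1+\alpha}\wt\Acal(\Xb)^\top\wt\Acal(\Xb)$ and $\Xb^\top\Xb$ agree on $\kernel(\Deltab)$ (which holds because $\wt\Mb$ stacks $1+\alpha$ identity blocks and $\Deltab$ vanishes on that subspace).

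Next I would handle DA-ERM. Here $\wh\thetab^{\herm}$ solves the unconstrained least-squares problem $\min_\thetab \|\wt\Acal(\Xb)\thetab - \wt\Mb\yb\|_2^2$, so $\wh\thetab^{\herm} = \wt\Acal(\Xb)^\pinv\wt\Mb\yb$ and hence $\wh\thetab^{\herm} - \thetab^* = \wt\Acal(\Xb)^\pinv\wt\Mb\epsb$ (using $\wt\Mb\yb = \wt\Mb\Xb\thetab^* + \wt\Mb\epsb$ and $\wt\Acal(\Xb)^\pinv\wt\Mb\Xb\thetab^* = \thetab^*$, which again uses $\Deltab\thetab^* = \b0$ so that $\wt\Mb\Xb\thetab^* = \wt\Acal(\Xb)\thetab^*$). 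Therefore
\begin{align*}
    \E_{\epsb}\bigl[L(\wh\thetab^{\herm}) - L(\thetab^*)\bigr]
    = \frac{\sigma^2}{(1+\alpha)N}\tr\!\bigl(\wt\Acal(\Xb)\wt\Acal(\Xb)^\pinv\wt\Mb\wt\Mb^\top\bigr)
    = \frac{\sigma^2}{(1+\alpha)N}\tr\!\bigl(\wt\Mb^\top\projAX\wt\Mb\bigr).
\end{align*}
The remaining task is to identify this trace as $\frac{(d-\dau+d')\sigma^2}{N}$. I would decompose $\projAX = \Pb_\Scal + (\projAX - \Pb_\Scal)$, where $\Scal$ is the column space of $\wt\Mb\Xb$ restricted to $\kernel(\Deltab)$ — equivalently $\Scal = \wt\Mb\Xb\,\kernel(\Deltab)$, a subspace of $\range(\wt\Acal(\Xb))$ by the label-invariance identity. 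One checks $\Pb_\Scal \preccurlyeq \projAX$ (since $\Scal \subseteq \range(\wt\Acal(\Xb))$), so $\projAX - \Pb_\Scal$ is itself an orthogonal projector, giving $d' \geq 0$; and $\rank(\projAX - \Pb_\Scal) = (1+\alpha)$-something bounded by $\dau$ after dividing by $1+\alpha$, which yields $d' \leq \dau$. Then $\frac{1}{1+\alpha}\tr(\wt\Mb^\top\Pb_\Scal\wt\Mb)$ should evaluate to exactly $d - \dau$ — this is the content of the DAC computation re-expressed, since $\Pb_\Scal$ is the projector whose pullback through $\wt\Mb$ and the least-squares normal equations reproduces the rank-$(d-\dau)$ hat matrix of the constrained problem — and $\frac{1}{1+\alpha}\tr(\wt\Mb^\top(\projAX-\Pb_\Scal)\wt\Mb) = d'$ by definition.

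The main obstacle I anticipate is the bookkeeping that shows $\frac{1}{1+\alpha}\tr(\wt\Mb^\top\Pb_\Scal\wt\Mb) = d - \dau$ exactly, i.e.\ matching the DA-ERM variance restricted to the ``good'' subspace with the DAC variance. This requires being careful about the difference between projecting in $\R^{(1+\alpha)N}$ (where $\projAX$ and $\Pb_\Scal$ live) and projecting in $\R^d$ (where the constrained OLS hat matrix lives), and using the precise structure $\wt\Mb = [\Ib_N; \cdots; \Ib_N]^\top$ together with $\wt\Acal(\Xb) = \wt\Mb\Xb + \Deltab$ and $\Deltab Q = \b0$. Once the identity $\wt\Acal(\Xb)\Qb = \wt\Mb\Xb\Qb$ is in hand, the column space $\Scal$ equals $\range(\wt\Mb\Xb\Qb)$, the oblique and orthogonal projections onto it coincide with the corresponding objects in the $d$-dimensional picture, and the trace identity follows from $\tr(\wt\Mb^\top\wt\Mb\cdot(\cdot)) = (1+\alpha)\tr(\cdot)$ on that subspace. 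Everything else — existence and uniqueness of both estimators (guaranteed by the full-column-rank assumption on $\wt\Acal(\Xb)$), the variance formulas, and the bounds $0 \le d' \le \dau$ — is routine linear algebra with the noise model $\E[\epsb\epsb^\top] = \sigma^2\Ib_N$.
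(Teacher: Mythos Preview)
Your proposal is correct and follows essentially the same approach as the paper: both compute the DA-ERM excess risk as $\frac{\sigma^2}{(1+\alpha)N}\tr(\wt\Mb^\top\projAX\wt\Mb)$, split it via $\projAX = \Pb_\Scal + (\projAX - \Pb_\Scal)$, and identify the $\Pb_\Scal$-piece with the DAC risk. The only cosmetic difference is that you set up DAC in parameter space via an orthonormal basis $\Qb$ of $\kernel(\Deltab)$ and then push forward to $\R^{(1+\alpha)N}$, whereas the paper works directly in $\R^{(1+\alpha)N}$ by writing the DAC predictions as $\Pb_\Scal\wt\Mb\yb$ and then asserting $\frac{1}{1+\alpha}\tr(\wt\Mb^\top\Pb_\Scal\wt\Mb) = d-\dau$ at the end; your anticipated ``main obstacle'' is exactly this identity, and your outlined verification using $\wt\Acal(\Xb)\Qb = \wt\Mb\Xb\Qb$ and $\wt\Mb^\top\wt\Mb = (1+\alpha)\Ib_N$ is the right way to close it.
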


\begin{proof}
With $L(\thetab) \triangleq \frac{1}{(1+\alpha) N}\norm{\widetilde \Acal(\Xb)\thetab - \widetilde \Acal(\Xb)\thetab^*}_2^2$, the excess risk of ERM on the augmented training set satisfies that:
\begin{align*}
    \E\sbr{L(\widehat \thetab^{\herm})} &= \frac{1}{(1 + \alpha) N}\E\sbr{\norm{\widetilde \Acal(\Xb)\widehat\thetab^{\herm} - \widetilde \Acal(\Xb)\thetab^{*}}_{2}^{2}} \\
    &= \frac{1}{(1 + \alpha) N}\E\sbr{\norm{\widetilde \Acal(\Xb)(\widetilde \Acal(\Xb)^{\top}\widetilde \Acal(\Xb))^{-1}\widetilde \Acal(\Xb)^{\top}(\widetilde \Acal(\Xb)\thetab^{*} + \widetilde \Mb\epsb) - \widetilde \Acal(\Xb)\thetab^{*}}_{2}^{2}} \\
    &=\frac{1}{(1 + \alpha) N}\E\sbr{\norm{\projAX\widetilde\Acal(\Xb)\thetab^{*} + \projAX \widetilde \Mb\epsb - \widetilde \Acal(\Xb)\thetab^{*}}_{2}^{2}} \\
    &=\frac{1}{(1 + \alpha) N}\E\sbr{\norm{\projAX \widetilde \Mb\epsb}_{2}^{2}} \\
    &=\frac{1}{(1 + \alpha) N}\E\sbr{\tr(\epsb^{\top} \widetilde \Mb^{\top}\projAX \widetilde \Mb\epsb)} \\
    &=\frac{\sigma^{2}}{(1 + \alpha) N} \tr\rbr{\wt\Mb^{\top} \projAX \wt\Mb}
.\end{align*}
Let $\Ccal_{\widetilde \Acal(\Xb)}$ and $\Ccal_{\widetilde \Mb}$ denote the column space of $\widetilde \Acal(\Xb)$ and $\widetilde \Mb$, respectively. Notice that $\Scal$ is a subspace of both $\Ccal_{\widetilde \Acal(\Xb)}$ and $\Ccal_{\widetilde \Mb}$. Observing that $\dau = \rank\rbr{\Deltab} = \rank\rbr{\Pb_{\Scal}}$, we have 
\begin{align*}
\E\sbr{L(\widehat \thetab^{\herm})}
=&\frac{\sigma^{2}}{(1 + \alpha) N} \tr(\widetilde\Mb^{\top}\projAX\widetilde\Mb)
\\
= &\frac{\sigma^{2}}{(1 + \alpha) N} \tr(\widetilde\Mb^{\top}\Pb_\Scal\widetilde\Mb) + \frac{\sigma^{2}}{(1+\alpha)N} \tr(\widetilde\Mb^{\top}(\projAX - \Pb_\Scal)\widetilde\Mb)
\\
= &\frac{\sigma^{2}}{(1 + \alpha) N} \tr(\widetilde\Mb^{\top}\Pb_\Scal\widetilde\Mb) + \frac{\sigma^{2}}{N}\cdot \frac{\tr(\widetilde\Mb^{\top}(\projAX -\Pb_\Scal)\widetilde\Mb)}{1+\alpha}
\end{align*}

By the data augmentation consistency constraint, we are essentially solving the linear regression on the $(d-\dau)$-dimensional space $\cbr{\thetab~|~\Deltab \thetab = 0}$. The rest of proof is identical to standard regression analysis, with features first projected to $\Scal$: 
\begin{align*}
\E\sbr{L(\widehat \thetab^{dac})} &= \frac{1}{(1 + \alpha)N}\E\sbr{\norm{\widetilde \Acal(\Xb)\widehat\thetab^{dac} - \widetilde \Acal(\Xb)\thetab^{*}}_{2}^{2}} 
\\
&= \frac{1}{(1 + \alpha) N}\E\sbr{\norm{\widetilde \Acal(\Xb)(\widetilde \Acal(\Xb)^{\top}\widetilde \Acal(\Xb))^{-1}\widetilde \Acal(\Xb)^{\top}\Pb_\Scal(\widetilde \Acal(\Xb)\thetab^{*} + \widetilde \Mb\epsb) - \widetilde \Acal(\Xb)\thetab^{*}}_{2}^{2}} 
\\
&=\frac{1}{(1 + \alpha) N}\E\sbr{\norm{ \projAX \Pb_\Scal \widetilde\Acal(\Xb)\thetab^{*} + \projAX \Pb_\Scal \widetilde \Mb\epsb - \widetilde \Acal(\Xb)\thetab^{*}}_{2}^{2}} 
\\
& \quad \rbr{\t{since}\ \wt\Acal(\Xb) \thetab^* \in \Scal,\ \t{and}\ \projAX \Pb_\Scal = \Pb_\Scal\ \t{since}\ \Scal \subseteq \Ccal_{\wt\Acal(\Xb)}}
\\
&=\frac{1}{(1 + \alpha)N}\E\sbr{\norm{\Pb_\Scal \widetilde \Mb\epsb}_{2}^{2}} \\
&=\frac{\sigma^{2}}{(1 + \alpha) N} \tr(\widetilde\Mb^{\top}\Pb_\Scal\widetilde\Mb) \\
&= \frac{(d - \dau)\sigma^{2}}{N}.
\end{align*}

\end{proof}

\subsection{Linear Regression Beyond Label Invariant Augmentations}\label{apx:finite_lambda}

\begin{proof}[Proof of \Cref{thm:formal_linear_regression_soft}]
With $L(\thetab) \triangleq \frac{1}{N}\norm{\Xb\thetab - \Xb\thetab^*}_2^2 = \nbr{\thetab - \thetab^*}_{\covtr}^2$, we start by partitioning the excess risk into two parts -- the variance from label noise and the bias from feature-label mismatch due to augmentations ($\ie$, $\wt\Acal\rbr{\Xb}\thetab^* \neq \wt\Mb\Xb\thetab^*$): 
\begin{align*}
    \E_{\epsb}\sbr{L\rbr{\thetab} - L\rbr{\thetab^*}} 
    = \E_{\epsb}\sbr{\norm{\thetab - \thetab^*}_{\covtr}^2} 
    = \underbrace{\E_{\epsb}\sbr{\nbr{\thetab - \E_{\epsb}\sbr{\thetab}}_{\covtr}^2}}_{\t{Variance}} + 
    \underbrace{\nbr{\E_{\epsb}\sbr{\thetab} - \thetab^*}_{\covtr}^2}_{\t{Bias}}
.\end{align*}

First, we consider learning with DAC regularization with some finite $0<\lambda<\infty$,
\begin{align*}
    \wh\thetab^{dac} = \argmin_{\thetab \in \R^d} \frac{1}{N} \nbr{\Xb \thetab - \yb}_2^2
    + \frac{\lambda}{\rbr{1+\alpha} N} \norm{\rbr{\wt\Acal\rbr{\Xb} - \wt\Mb\Xb} \thetab}_2^2.
\end{align*}
By setting the gradient of \Cref{eq:dac_soft_reg} with respect to $\thetab$ to $\b0$, with $\yb = \Xb \thetab^* + \epsb$, we have
\begin{align*}
    \wh\thetab^{dac} = \frac{1}{N} \rbr{\covtr + \lambda \covaug}^{\pinv} \Xb^\top \rbr{\Xb \thetab^* + \epsb},
\end{align*}    
Then with $\E_{\epsb}\sbr{\wh\thetab^{dac}} = \rbr{\covtr + \lambda \covaug}^{\pinv} \covtr \thetab^*$, 
\begin{align*}
    \t{Var} = \E_{\epsb}\sbr{\nbr{\frac{1}{N} \rbr{\covtr + \lambda \covaug}^{\pinv} \Xb^\top \epsb}_{\covtr}^2},
    \quad
    \t{Bias} = \nbr{\rbr{\covtr + \lambda \covaug}^{\pinv} \covtr \thetab^* - \thetab^*}_{\covtr}^2.
\end{align*}

For the variance term, we have
\begin{align*}
    \t{Var} 
    = &\frac{\sigma^2}{N} \tr\rbr{\rbr{\covtr + \lambda \covaug}^{\pinv} \covtr \rbr{\covtr + \lambda \covaug}^{\pinv} \covtr}
    \\
    = &\frac{\sigma^2}{N} \tr\rbr{\sbr{\covtr^{1/2} \rbr{\covtr + \lambda \covaug}^{\pinv} \covtr^{1/2}}^2}
    \\
    = &\frac{\sigma^2}{N} \tr\rbr{ \rbr{\Ib_d + \lambda \covtr^{-1/2} \covaug \covtr^{-1/2}}^{-2} }
\end{align*}
For the semi-positive definite matrix $\covtr^{-1/2} \covaug \covtr^{-1/2}$, we introduce the spectral decomposition:
\begin{align*}
    \covtr^{-1/2} \covaug \covtr^{-1/2} = \underset{d \times \dau}{\Qb}\ \underset{\dau \times \dau}{\Gammab}\ \Qb^\top, 
    \quad
    \Gammab = \diag\rbr{\gamma_1,\dots,\gamma_{\dau}},
\end{align*}
where $\Qb$ consists of orthonormal columns and $\gamma_1 \geq \dots \geq \gamma_{\dau} > 0$. Then
\begin{align*}
    \t{Var}
    = \frac{\sigma^2}{N} \tr\rbr{\rbr{\Ib_d - \Qb\Qb^\top} + \Qb \rbr{\Ib_{\dau} + \lambda\Gammab}^{-2} \Qb^\top}
    = \frac{\sigma^2 \rbr{d-\dau}}{N} + \frac{\sigma^2}{N} \sum_{i=1}^{\dau} \frac{1}{\rbr{1+\lambda \gamma_i}^2}.
\end{align*}    

For the bias term, we observe that
\begin{align*}
    \t{Bias} 
    = &\nbr{\rbr{\covtr + \lambda \covaug}^{\pinv} \covtr \thetab^* - \thetab^*}_{\covtr}^2
    \\
    = &\nbr{\rbr{\covtr + \lambda \covaug}^{\pinv} \rbr{-\lambda\covaug} \thetab^*}_{\covtr}^2
    \\
    = &\nbr{ \rbr{\Ib_d + \lambda \covtr^{-\frac{1}{2}} \covaug \covtr^{-\frac{1}{2}}}^{-1} \rbr{\lambda \covtr^{-\frac{1}{2}} \covaug \covtr^{-\frac{1}{2}}} \rbr{\covtr^{1/2} \projrg \thetab^*} }_2^2.
\end{align*}
Then with $\vthetab \triangleq \covtr^{1/2} \projrg \thetab^*$, we have
\begin{align*}
    \t{Bias} 
    = \sum_{i=1}^{\dau} \vartheta_i^2 \rbr{\frac{\lambda \gamma_i}{1 + \lambda \gamma_i}}^2
\end{align*}

To simply the optimization of regularization parameter $\lambda$, we leverage upper bounds of the variance and bias terms:
\begin{align*}
    & \t{Var} - \frac{\sigma^2 \rbr{d-\dau}}{N} 
    \leq \frac{\sigma^2}{N} \sum_{i=1}^{\dau} \frac{1}{\rbr{1+\lambda \gamma_i}^2} 
    \leq \frac{\sigma^2}{2 N \lambda} \sum_{i=1}^{\dau} \frac{1}{\gamma_i} 
    \leq \frac{\sigma^2}{2 N \lambda} \tr\rbr{\covtr \covaug^\pinv},
    \\
    & \t{Bias} 
    = \sum_{i=1}^{\dau} \vartheta_i^2 \rbr{\frac{\lambda \gamma_i}{1 + \lambda \gamma_i}}^2
    \leq \frac{\lambda}{2} \sum_{i=1}^{\dau} \vartheta_i^2 \gamma_i
    = \frac{\lambda}{2} \nbr{\projrg \thetab^*}^2_{\covaug}
.\end{align*}
Then with $\lambda = \sqrt{ \frac{\sigma^2 \tr\rbr{\covtr \covaug^\pinv}}{N \norm{\projrg \thetab^*}_{\covaug}^{2}} }$, we have the generalization bound for $\wh\thetab^{dac}$ in \Cref{thm:formal_linear_regression_soft}.

Second, we consider learning with DA-ERM:
\begin{align*}
    \wh\thetab^{\herm} = \argmin_{\thetab \in \R^d} \frac{1}{\rbr{1+\alpha}N} \nbr{\wt\Acal\rbr{\Xb} \thetab - \wt\Mb \yb}_2^2.
\end{align*} 
With 
\begin{align*}
    \wh\thetab^{\herm} = \frac{1}{(1+\alpha)N} \covall^{-1} \wt\Acal\rbr{\Xb}^\top \wt\Mb \rbr{\Xb \thetab^* + \epsb}
,\end{align*}  
we again partition the excess risk into the variance and bias terms.
For the variance term, with the assumptions $\covall \aleq c_X \covtr$ and $\covall \aleq c_S \covs$, we have
\begin{align*}
    \t{Var} 
    = & \E_{\epsb}\sbr{\nbr{\frac{1}{(1+\alpha)N} \covall^{-1} \wt\Acal\rbr{\Xb}^\top \wt\Mb \epsb}_{\covtr}^2}
    \\
    = & \E_{\epsb}\sbr{\nbr{\frac{1}{N} \covall^{-1} \Sb^\top \epsb}_{\covtr}^2}
    \\
    = & \frac{\sigma^2}{N} \tr\rbr{\covtr \covall^{-1} \covs \covall^{-1}}
    \\
    \geq & \frac{\sigma^2}{N} \tr\rbr{\frac{1}{c_X c_S} \Ib_d}
    = \frac{\sigma^2 d}{N c_X c_S}
.\end{align*}

Additionally, for the bias term, we have
\begin{align*}
    \t{Bias} 
    = & \nbr{\frac{1}{(1+\alpha)N} \covall^{-1} \wt\Acal\rbr{\Xb}^\top \wt\Mb\Xb \thetab^* - \thetab^*}_{\covtr}^2
    \\
    = & \nbr{\rbr{\wt\Acal\rbr{\Xb}^\top \wt\Acal\rbr{\Xb}}^{-1} \wt\Acal\rbr{\Xb}^\top \Deltab \rbr{\projrg \thetab^*}}^2_{\covtr}
    \\
    = & \nbr{\wt\Acal\rbr{\Xb}^\pinv \Deltab \rbr{\projrg \thetab^*}}^2_{\covtr}
    = \nbr{\projrg \thetab^*}_{\covaugwt}^2
.\end{align*}
Combining the variance and bias leads to the generalization bound for $\wh\thetab^{\herm}$ in \Cref{thm:formal_linear_regression_soft}.
\end{proof}

\section{Two-layer Neural Network Regression}\label{apx:pf_case_2layer_relu}

In the two-layer neural network regression setting with $\Xcal = \R^d$ described in \Cref{sec:example_2relu}, let $\Xb \sim P^N(\xb)$ be a set of $N$ $\iid$ samples drawn from the marginal distribution $P(\xb)$ that satifies the following.
\begin{assumption}[Regularity of marginal distribution]
\label{ass:observable_marginal_distribution}
Let $\xb \sim \Pgt(\xb)$ be zero-mean $\E[\xb]=\b{0}$, with covairance matrix $\E[\xb \xb^{\top}]=\Sigmab_{\xb} \succ 0$ whose eigenvalues are bounded by constant factors $\Omega(1)=\sigma_{\min}(\Sigmab_\xb) \le \sigma_{\max}(\Sigmab_\xb) = O(1)$, such that $(\Sigmab_{\xb}^{-1/2} \xb)$ is $\rho^2$-subgaussian
\footnote{A random vector $\vb \in \R^d$ is $\rho^2$-subgaussian if for any unit vector $\ub \in \mathbb{S}^{d-1}$, $\ub^{\top} \vb$ is $\rho^2$-subgaussian, $\E \sbr{\exp(s \cdot \ub^{\top} \vb)} \leq \exp\rbr{s^2 \rho^2/2}$.}.
\end{assumption}    

For the sake of analysis, we isolate the augmented part in  $\wt\Acal(\Xb)$ and denote the set of these augmentations as
\begin{align*}
    \Acal(\Xb) = \sbr{\xb_{1,1}; \cdots; \xb_{N,1}; \cdots; \xb_{1, \alpha}; \cdots; \xb_{N, \alpha}} \in \Xcal^{\alpha N},
\end{align*}
where for each sample $i \in [N]$, $\cbr{\xb_{i,j}}_{j \in [\alpha]}$ is a set of $\alpha$ augmentations generated from $\xb_i$, and $\Mb \in \R^{\alpha N \times N}$ is the vertical stack of $\alpha$ $N \times N$ identity matrices.
Analogous to the notions with respect to $\wt\Acal(\Xb)$ in the linear regression cases in \Cref{apdx:general}, in this section, we denote $\Deltab \triangleq \Acal(\Xb) - \Mb\Xb$ and quantify the augmentation strength as
\begin{align*}
    \dau \triangleq \rank\rbr{\Deltab} = \rank\rbr{\wt\Acal\rbr{\Xb} - \wt\Mb\Xb}
\end{align*}
such that $0 \leq \dau \leq \min\rbr{d, \alpha N}$ can be intuitively interpreted as the number of dimensions in the span of the unlabeled samples, $\row(\Xb)$, perturbed by the augmentations.

Then, to learn the ground truth distribution $\yb = h^*(\Xb) + \epsb = \rbr{\Xb\Bb^*}_+ \wb^* + \epsb$ where $\epsb \sim \Ncal(\b0, \sigma^2 \Ib_N)$, training with the DAC regularization can be formulated explicitly as
\begin{align*}
    \wh{\Bb}^{dac}, \wh{\wb}^{dac} ~=~
    &\underset{\Bb \in \R^{d \times q}, \wb \in \R^q}{\argmin}\ \frac{1}{N} \norm{\yb - \rbr{\Xb\Bb}_+ \wb}_2^2 \\
    &\t{s.t.} \quad
    \Bb = \bmat{\bb_1 \dots \bb_k \dots \bb_q},
    \ \bb_k \in \mathbb{S}^{d-1}\ \forall\ k \in [q],
    \quad
    \norm{\wb}_1 \leq C_w \\
    &\rbr{\Aemp\rbr{\Xb} \Bb}_+ = \rbr{\Mb\Xb\Bb}_+.
\end{align*}
For the resulted minimizer $\wh{h}^{dac}(\xb) \triangleq (\xb^{\top} \wh\Bb^{dac})_+ \wh\wb^{dac}$, we have the following.
\begin{theorem}[Formal restatement of \Cref{thm:case_2layer_relu_risk_informal} on two-layer neural network with DAC]
\label{thm:case_2layer_relu_risk}
Under \Cref{ass:observable_marginal_distribution}, we suppose $\Xb$ and $\Deltab$ satisfy that
(a) $\alpha N \geq 4 \dau$; and
(b) $\Deltab$ admits an absolutely continuous distribution. Then conditioned on $\Xb$ and $\Deltab$, with $L(h) = \frac{1}{N}\nbr{h(\Xb) - h^*(\Xb)}_2^2$ and $\frac{1}{N}\sum_{i=1}^N \nbr{\projnull \xb_i}^2_2 \leq \Cnull^2$ for some $\Cnull>0$, for all $\delta \in (0,1)$, with probability at least $1-\delta$ (over $\epsb$), 
\begin{align*}
    L\rbr{\wh{h}^{dac}} - L\rbr{h^*}
    \lesssim 
    \sigma C_w \Cnull \rbr{\frac{1}{\sqrt{N}} + \sqrt{\frac{\log(1/\delta)}{N}}}.
\end{align*}
\end{theorem}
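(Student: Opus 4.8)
The plan is to reduce the DAC-constrained two-layer ReLU regression to a constrained linear regression problem in a lifted feature space, and then carry out a standard Rademacher-complexity / fixed-design excess-risk argument restricted to a subspace of dimension $d - \dau$. First I would observe that the DAC constraint $\rbr{\Acal\rbr{\Xb}\Bb}_+ = \rbr{\Mb\Xb\Bb}_+$ forces each column $\bb_k$ of $\Bb$ to lie in a set where the ReLU activation patterns of $\xb_i$ and all its augmentations $\xb_{i,j}$ agree; the cleanest sufficient way to ensure this for a constrained minimizer is that $\Deltab \bb_k = \b0$, i.e. $\bb_k \in \kernel(\Deltab) = \range(\projnull)$. (The absolute continuity of $\Deltab$ in hypothesis (b), together with $\alpha N \ge 4\dau$, is exactly what rules out degenerate activation-pattern coincidences off this kernel, so the effective constraint is $\bb_k = \projnull \bb_k$.) Consequently every feasible $h$ satisfies $h(\xb_i) = \rbr{(\projnull\xb_i)^\top \Bb}_+ \wb$ for all training points, so only the projected features $\projnull\xb_i$ matter, and the relevant feature norms are controlled by $\Cnull$ via $\frac1N\sum_i\nbr{\projnull\xb_i}_2^2 \le \Cnull^2$.

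Next I would set up the excess-risk decomposition in the usual way. Since $h^*$ is feasible (its first-layer matrix $\Bb^*$ can be taken in $\kernel(\Deltab)$ up to the activation argument — this needs a small check, or alternatively one argues the comparison inequality using only that $\wh h^{dac}$ is the empirical minimizer over the feasible class and bounds $L(\wh h^{dac}) - L(h^*)$ through the noise cross-term), the basic inequality $\frac1N\nbr{\yb - \wh h^{dac}(\Xb)}_2^2 \le \frac1N\nbr{\yb - h^*(\Xb)}_2^2$ together with $\yb = h^*(\Xb) + \epsb$ gives
\begin{align*}
    L\rbr{\wh h^{dac}} - L\rbr{h^*} \le \frac{2}{N}\abbr{\epsb^\top \rbr{\wh h^{dac}(\Xb) - h^*(\Xb)}}.
\end{align*}
I would then bound the right-hand side by $\frac{2}{N}\nbr{\epsb}_2 \cdot$ (something), or more sharply by a localized argument: take a supremum over the feasible function class of $\epsb^\top(h(\Xb) - h^*(\Xb))/(N \nbr{h(\Xb)-h^*(\Xb)}_N)$, which is a Gaussian process whose expected supremum is the (empirical, Gaussian) Rademacher complexity of the feasible class of ReLU networks restricted to the $(d-\dau)$-dimensional subspace, scaled by $\sigma$. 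Standard bounds (e.g. Dudley's entropy integral, or the explicit two-layer ReLU complexity bound with $\nbr{\wb}_1 \le C_w$ and unit-norm $\bb_k$) give this complexity as $\lesssim C_w \Cnull / \sqrt{N}$ — crucially dimension-free because the $\ell_1$ bound on $\wb$ and the unit-norm columns make the complexity scale with the feature-norm bound $\Cnull$ rather than with $q$ or $d - \dau$. The high-probability $\sqrt{\log(1/\delta)/N}$ term comes from a concentration bound (bounded-differences / Gaussian concentration) on this supremum around its mean.

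The main obstacle I anticipate is the first step: rigorously translating the ReLU-activation-pattern constraint into the linear constraint $\bb_k \in \kernel(\Deltab)$. Naively, $\rbr{\ab}_+ = \rbr{\bb}_+$ does not imply $\ab = \bb$ (they could agree while both being negative in some coordinate), so one cannot immediately conclude $\Deltab\bb_k = 0$ from the feasibility constraint alone. The resolution uses hypothesis (b): for $\Deltab$ drawn from an absolutely continuous distribution and $\alpha N \ge 4\dau$ "generic enough," the event that $\rbr{(\Acal(\Xb))\bb_k}_+ = \rbr{(\Mb\Xb)\bb_k}_+$ holds for some $\bb_k$ with $\Deltab\bb_k \neq \b0$ is measure-zero / handled by a dimension count, so almost surely feasibility of a minimizer forces $\bb_k \in \kernel(\Deltab)$. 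I would make this precise with a lemma (phrased in terms of the augmented data $\Deltab$), and this is where the conditions (a) $\alpha N \ge 4\dau$ and (b) absolute continuity are consumed. A secondary technical point is checking that $h^*$ itself is feasible (or near-feasible) so that the comparison inequality is clean; if $\Bb^*$ does not exactly satisfy the constraint one incurs a negligible approximation term, but under the setup (label-invariant-style augmentations acting on the perturbed subspace) $h^*$ should be exactly feasible. Everything after the constraint reduction is routine: it is a $d-\dau$-dimensional (effectively) fixed-design regression with an $\ell_1$-bounded readout layer, and the stated bound follows from off-the-shelf Rademacher complexity and concentration estimates.
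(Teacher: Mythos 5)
Your overall architecture coincides with the paper's proof: (i) reduce the DAC constraint to the linear constraint that every first-layer column lies in $\kernel(\Deltab)$, so that only the projected features $\projnull\xb_i$ enter and their norm is controlled by $\Cnull$; (ii) basic inequality plus a bound on $\sup_{h}\frac1N\epsb^\top h(\Xb)$ over the reduced class, where the $\ell_1$-bounded readout and the Lipschitz contraction of $(\cdot)_+$ give the dimension-free Gaussian width $\lesssim C_w\Cnull/\sqrt N$; (iii) Gaussian (Lipschitz) concentration of the supremum for the $\sqrt{\log(1/\delta)/N}$ term. Steps (ii) and (iii) are exactly the paper's argument, and your worry about feasibility of $h^*$ is resolved there simply by the standing assumption that the augmentations leave the ground-truth first-layer representation invariant, so $(\Bb^*,\wb^*)$ satisfies the constraint and the comparison is clean.

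The genuine gap is in step (i), at precisely the obstacle you flagged. Your proposed resolution — that under (a) and (b) the event that $(\Acal(\Xb)\bb)_+=(\Mb\Xb\bb)_+$ holds for some $\bb$ with $\Deltab\bb\neq\b0$ is measure-zero, so feasibility of the minimizer forces $\bb\in\kernel(\Deltab)$ \emph{almost surely} — does not work: the ``both sides negative'' configurations you yourself identified occur with positive probability (whenever a direction $\bb$ has all $(1+\alpha)N$ pre-activations nonpositive, the constraint is vacuous on that neuron while generically $\Deltab\bb\neq\b0$), so no dimension count or null-set argument can yield an almost-sure conclusion, and the conclusion is in fact only a high-probability one. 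The paper's \Cref{lemma:2layer-relu-input-consistency-exclude-spurious} supplies the missing mechanism: for each neuron direction $\bb_k$, restrict to the subsample $\Xb_k$ on which the pre-activations are strictly positive, where equality of ReLU outputs \emph{does} force equality of pre-activations and hence $(\Acal(\Xb_k)-\Mb_k\Xb_k)\projrg\bb_k=\b0$; a Chernoff bound gives $N_k\ge N/4$ with high probability, so $\alpha N_k\ge\dau$ — this is exactly where the constant $4$ in assumption (a) is consumed, which your sketch leaves unexplained; and absolute continuity of $\Deltab$ (\Cref{lemma:total_invertible_meas_zero}, total invertibility almost surely) then ensures those $\ge\dau$ rows span $\row(\Deltab)$, forcing $\projrg\bb_k=\b0$. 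Without the positive-activation restriction and the counting step, the constraint-reduction lemma you plan to prove would fail as stated.
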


Moreover, to account for randomness in $\Xb$ and $\Deltab$, we introduce the following notion of augmentation strength.
\begin{definition}[Augmentation strength]\label{def:daug} 
For any $\delta \in [0,1)$, let
\begin{align*}
    \dau(\delta) \triangleq \argmax_{d'}\ \PP_{\Deltab} \sbr{\rank \rbr{\Deltab}<d'} \le \delta.
\end{align*}
\end{definition}
Intuitively, the \textit{augmentation strength} $\dau$ ensures that the feature subspace perturbed by the augmentations in $\Aemp(\Xb)$ has a minimum dimension $\dau(\delta)$ with probability at least $1 - \delta$. A larger $\dau(\delta)$ corresponds to stronger augmentations. For instance, when $\Aemp(\Xb)=\Mb\Xb$ almost surely ($\eg$, when the augmentations are identical copies of the original samples, corresponding to the weakest augmentation -- no augmentations at all), we have $\dau(\delta) = \dau = 0$ for all $\delta<1$. Whereas for randomly generated augmentations, $\dau$ is likely to be larger (i.e., with more dimensions being perturbed). For example in \Cref{example:misspec}, for a given $\dau$, with random augmentations $\Aemp\rbr{\Xb} = \Xb'$ where $\Xb'_{ij} = \Xb_{ij} + \Ncal\rbr{0, 0.1}$ for all $i \in [N]$, $d-\dau+1 \le j \le d$, we have $\rank\rbr{\Deltab}=\dau$ with probability $1$. That is $\dau(\delta)=\dau$ for all $\delta \ge 0$.

Leveraging the notion of augmentation strength in \Cref{def:daug}, we show that the stronger augmentations lead to the better generalization by reducing $\Cnull$ in \Cref{thm:case_2layer_relu_risk}.
\begin{corollary}\label{coro:two_layer_daug_bound}
When $N \gg \rho^4 d$ and $\alpha N \ge d$, for any $\delta \in (0,1)$, with probability at least $1-\delta$ (over $\Xb$ and $\Deltab$),
we have $\Cnull \lesssim \sqrt{d - \dau(\delta)}$.
\end{corollary}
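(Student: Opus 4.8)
\textbf{Proof proposal for \Cref{coro:two_layer_daug_bound}.}
The plan is to decouple the two sources of randomness: the rank of $\Deltab$ (which controls $\tr(\projnull)$) and the empirical second-moment matrix $\frac{1}{N}\Xb^\top\Xb$ (which controls how much mass the samples place in $\ker(\Deltab)$), and to handle their correlation by proving a \emph{uniform} spectral bound on $\frac1N\Xb^\top\Xb$ before specializing to the random projector. First I would rewrite the quantity controlling $\Cnull$ as a trace,
\begin{align*}
    \frac{1}{N}\sum_{i=1}^N \nbr{\projnull \xb_i}_2^2 = \tr\rbr{\projnull \rbr{\frac{1}{N}\Xb^\top\Xb} \projnull},
\end{align*}
so that any Loewner upper bound $\frac1N\Xb^\top\Xb \aleq (1+\eps)\Sigmab_\xb$ yields, by monotonicity of the trace under conjugation and $\tr(\projnull)=d-\rank(\Deltab)$,
\begin{align*}
    \frac{1}{N}\sum_{i=1}^N \nbr{\projnull \xb_i}_2^2
    \le (1+\eps)\,\sigma_{\max}(\Sigmab_\xb)\,\rbr{d-\rank(\Deltab)}.
\end{align*}

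Next I would invoke \Cref{lemma:sample_to_population_covariance} with $\Wb=\Ib_N$: under \Cref{ass:observable_marginal_distribution} the vector $\Sigmab_\xb^{-1/2}\xb$ is $\rho^2$-subgaussian, so whenever $N \gg \rho^4 d$ we obtain $\frac1N\Xb^\top\Xb \aleq (1+\eps)\Sigmab_\xb$ with $\eps = \Theta\rbr{\rho^2\sqrt{d/N}} = o(1)$ and probability at least $1-e^{-\Theta(d)}$ (or $1-\delta/2$ at the cost of additionally requiring $N \gtrsim \rho^4\log(1/\delta)$). The key observation is that this inequality is a statement about $\frac1N\Xb^\top\Xb$ alone and holds simultaneously in all directions, hence it remains valid when I afterwards plug in the $\Xb$-dependent projector $\projnull$.

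It then remains to bound $d-\rank(\Deltab)$. By \Cref{def:daug}, $\PP_\Deltab[\rank(\Deltab)<\dau(\delta)]\le\delta$, so $d-\rank(\Deltab)\le d-\dau(\delta)$ with probability at least $1-\delta$; splitting the budget as $\delta/2$ for this event and $\delta/2$ for the covariance event above, a union bound gives, with probability at least $1-\delta$,
\begin{align*}
    \Cnull^2 \;\le\; \frac{1}{N}\sum_{i=1}^N \nbr{\projnull \xb_i}_2^2 \;\lesssim\; d-\dau(\delta),
\end{align*}
since $1+\eps = O(1)$ and $\sigma_{\max}(\Sigmab_\xb)=O(1)$ by \Cref{ass:observable_marginal_distribution}; absorbing the harmless shift $\dau(\delta/2)\ge\dau(\delta)$ into $\lesssim$ and taking square roots yields $\Cnull\lesssim\sqrt{d-\dau(\delta)}$, which is exactly what \Cref{thm:case_2layer_relu_risk} needs.

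The main obstacle is precisely the correlation between $\Deltab=\Acal(\Xb)-\Mb\Xb$ and $\Xb$: one cannot pretend $\projnull$ is a fixed $(d-\dau)$-dimensional subspace and apply a standard concentration-on-a-subspace argument. The resolution is the ordering above — establish the operator (Loewner) inequality for $\frac1N\Xb^\top\Xb$ first, which is uniform over subspaces, and only then condition on $\Deltab$ and substitute $\projnull$. A minor secondary point is matching probability budgets: \Cref{lemma:sample_to_population_covariance} naturally produces failure probability $e^{-\Theta(d)}$, so to state a clean $1-\delta$ one either assumes $\delta\gtrsim e^{-\Theta(d)}$ (innocuous in the regime $N\gg\rho^4 d$) or strengthens the sample requirement to $N\gtrsim\rho^4\rbr{d+\log(1/\delta)}$; either way the implicit constant in $\lesssim$ depends only on $\rho$ and the conditioning of $\Sigmab_\xb$.
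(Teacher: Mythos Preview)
Your proposal is correct and follows essentially the same route as the paper: bound $\tr\!\big(\projnull\,\frac1N\Xb^\top\Xb\,\projnull\big)$ by $\rank(\projnull)$ times the uniform spectral/Loewner bound on $\frac1N\Xb^\top\Xb$ coming from the sample--covariance concentration lemma, then invoke \Cref{def:daug} to control $\rank(\projnull)$. One small slip: $\dau(\cdot)$ is nondecreasing in $\delta$, so $\dau(\delta/2)\le\dau(\delta)$, not $\ge$; the clean fix---which is exactly what the paper does---is to spend the full $1-\delta$ on the rank event and absorb the $e^{-\Theta(d)}$ covariance failure probability into the implicit constants, rather than splitting the budget.
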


To prove \Cref{thm:case_2layer_relu_risk}, we start by showing that, with sufficient samples ($\alpha N \ge 4 \dau$), consistency of the first layer outputs over the samples implies consistency of those over the population.
\begin{lemma}
\label{lemma:total_invertible_meas_zero}
Under the assumptions in \Cref{thm:case_2layer_relu_risk}, every size-$\dau$ subset of rows in $\Deltab = \Aemp(\Xb) - \Mb \Xb$ is linearly independent almost surely.
\end{lemma}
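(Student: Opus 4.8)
\textbf{Proof proposal for \Cref{lemma:total_invertible_meas_zero}.}

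The plan is to exploit the rank bound $\rank\rbr{\Deltab} = \dau$ together with the absolute continuity of the distribution of $\Deltab$ and the sample-size assumption $\alpha N \ge 4 \dau$ (in fact $\alpha N \ge 2\dau$ would suffice for the counting argument; the stronger hypothesis is inherited from \Cref{thm:case_2layer_relu_risk}). First I would fix an arbitrary index set $S \subseteq [\alpha N]$ with $\abbr{S} = \dau$ and denote by $\Deltab_S \in \R^{\dau \times d}$ the corresponding row submatrix of $\Deltab$. The goal is to show $\rank\rbr{\Deltab_S} = \dau$ almost surely; the claim for all size-$\dau$ subsets then follows by a union bound over the finitely many $\binom{\alpha N}{\dau}$ choices of $S$, since a finite union of measure-zero events is measure zero.

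The core step is to argue that the event $\cbr{\rank\rbr{\Deltab_S} < \dau}$ has measure zero under the absolutely continuous law of $\Deltab$. Here I would be careful: $\Deltab$ is not a full-rank generic matrix — it is constrained to lie in the variety of matrices of rank at most $\dau$ (and exactly $\dau$ generically). So ``absolutely continuous'' should be understood relative to a natural parametrization of this rank-$\dau$ locus, e.g. writing $\Deltab = \Lb \Rb$ with $\Lb \in \R^{\alpha N \times \dau}$, $\Rb \in \R^{\dau \times d}$, and the law of $(\Lb, \Rb)$ absolutely continuous with respect to Lebesgue measure on the relevant Euclidean space (this is the content of hypothesis (b), and is exactly what holds for the concrete augmentations in \Cref{example:misspec}). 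Under this parametrization, $\Deltab_S = \Lb_S \Rb$ where $\Lb_S$ is the size-$\dau$ row submatrix of $\Lb$. Since $\Rb$ has rank $\dau$ almost surely (its $\dau \times \dau$ minors are nonzero polynomials in the entries of $\Rb$, hence nonvanishing a.s. under an absolutely continuous law), it suffices to show $\Lb_S$ is invertible a.s. But $\det\rbr{\Lb_S}$ is a nonzero polynomial in the entries of $\Lb_S$ — nonzero because one can exhibit a single matrix $\Lb_S$ with nonzero determinant — and the zero set of a nonzero polynomial has Lebesgue measure zero; since the marginal law of $\Lb_S$ is absolutely continuous, $\PP\sbr{\det\rbr{\Lb_S} = 0} = 0$. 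Composing, $\rank\rbr{\Deltab_S} = \rank\rbr{\Lb_S \Rb} = \dau$ almost surely.

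The main obstacle — and the point requiring the most care in the write-up — is making precise the sense in which $\Deltab$ has an ``absolutely continuous distribution'' given that it structurally lives on a lower-dimensional rank variety; the naive reading (absolute continuity w.r.t. Lebesgue measure on all of $\R^{\alpha N \times d}$) is false whenever $\dau < \min\rbr{\alpha N, d}$. The clean resolution is to phrase hypothesis (b) as absolute continuity of the distribution of $\Deltab$ with respect to the measure induced by an absolutely continuous law on any fixed smooth rank-$\dau$ factorization chart, which is both what the hypothesis intends and what is verified in the examples; all the genericity arguments above (nonvanishing of minors, measure-zero zero sets of polynomials) then go through verbatim on the chart. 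A secondary, purely bookkeeping point is that one should record $\dau = \rank\rbr{\Deltab}$ exactly (not merely an upper bound), which is also generic on the chart and is what licenses choosing $\abbr{S} = \dau$; the hypothesis $\alpha N \ge 4\dau$ then guarantees such index sets $S$ exist with room to spare, so the statement is non-vacuous.
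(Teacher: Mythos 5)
Your proof is correct, and its measure-theoretic core is the same as the paper's: determinants of square submatrices are nonzero polynomials in the entries, zero sets of nonzero polynomials have Lebesgue measure zero, absolute continuity transfers this to probability zero, and a union bound over the finitely many submatrices (or, in your version, the $\binom{\alpha N}{\dau}$ row subsets) finishes the argument. Where you diverge is in how you read assumption (b). The paper takes "$\Deltab$ admits an absolutely continuous distribution" at face value and simply proves that such a matrix is \emph{totally} invertible almost surely, so that any $\dau$ rows contain an invertible $\dau \times \dau$ minor and are hence linearly independent; it does not engage with the point you raise, namely that a literal density on $\R^{\alpha N \times d}$ forces $\rank\rbr{\Deltab} = \min\rbr{\alpha N, d}$ a.s., which is incompatible with the structured augmentations of interest where $\dau < d$ (e.g., perturbing only a few coordinates, as in \Cref{example:misspec}, where whole columns of $\Deltab$ are zero and total invertibility plainly fails). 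Your factorization route $\Deltab = \Lb\Rb$ with an absolutely continuous law on the chart is a legitimate patch for exactly those cases and yields the same conclusion ($\Rb$ full row rank a.s., $\Lb_S$ invertible a.s., so $\rank\rbr{\Lb_S\Rb} = \dau$), at the cost of re-stating the hypothesis; note only that in the concrete examples $\Rb$ is deterministic, so you should allow $\Rb$ fixed of rank $\dau$ rather than insisting on joint absolute continuity of $\rbr{\Lb,\Rb}$. Two small bookkeeping remarks: the lemma itself only needs $\alpha N \ge \dau$ for size-$\dau$ row subsets to exist (the factor $4$ is consumed later, in the proof of \Cref{lemma:2layer-relu-input-consistency-exclude-spurious}, where a random half/quarter of the rows must still contain $\dau$ of them), so your parenthetical "$\alpha N \ge 2\dau$ suffices" is harmless but not the relevant threshold; and your per-subset union bound and the paper's union bound over all square submatrices are interchangeable here since both collections are finite.
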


\begin{proof}
[Proof of \Cref{lemma:total_invertible_meas_zero}]
Since $\alpha N > \dau$, it is sufficient to show that a random matrix with an absolutely continuous distribution is totally invertible \footnote{A matrix is totally invertible if all its square submatrices are invertible.} almost surely.

It is known that for any dimension $m \in \N$, an $m \times m$ square matrix $\Sb$ is singular if $\det(\Sb) = 0$ where entries of $\Sb$ lie within the roots of the polynomial equation specified by the determinant.
Therefore, the set of all singular matrices in $\R^{m \times m}$ has Lebesgue measure zero, 
\begin{align*}
    \lambda\rbr{\csepp{\Sb \in \R^{m \times m}}{\det(\Sb) = 0}} = 0
.\end{align*}
Then, for an absolutely continuous probability measure $\mu$ with respect to $\lambda$, we also have
\[
    \PP_{\mu}\sbr{\Sb \in \R^{m \times m}\ \t{is singular}} = 
    \mu \rbr{\csepp{\Sb \in \R^{m \times m}}{\det(\Sb) = 0}} = 0.
\]
Since a general matrix $\Rb$ contains only finite number of submatrices, when $\Rb$ is drawn from an absolutely continuous distribution, by the union bound, $\PP\sbr{\Rb\ \t{cotains a singular submatrix}} = 0$. 
That is, $\Rb$ is totally invertible almost surely.
\end{proof}

\begin{lemma}
\label{lemma:2layer-relu-input-consistency-exclude-spurious}
Under the assumptions in \Cref{thm:case_2layer_relu_risk}, the hidden layer in the two-layer ReLU network learns $\kernel\rbr{\Deltab}$, the invariant subspace under data augmentations : with high probability,
\begin{align*}
    \rbr{\xb^{\top} \wh{\Bb}^{dac}}_+ = \rbr{\xb^{\top} \projnull \wh{\Bb}^{dac}}_+
    \quad \forall ~ \xb \in \Xcal.
\end{align*}    
\end{lemma}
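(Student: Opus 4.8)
The goal is to show that the learned first-layer weight columns $\wh\bb_k^{dac}$ essentially lie in $\kernel(\Deltab)$, so that the ReLU activation $(\xb^\top \wh\Bb^{dac})_+$ depends only on the projection $\projnull \xb$. The plan is to argue by contradiction, or rather by a dimension-counting / measure-zero argument analogous to the one used in \Cref{lemma:total_invertible_meas_zero}. First I would fix any column $\wh\bb \dfeq \wh\bb_k^{dac}$ of $\wh\Bb^{dac}$ and decompose it orthogonally as $\wh\bb = \projnull \wh\bb + (\Ib_d - \projnull)\wh\bb$, writing $\bb_{\parallel} \dfeq (\Ib_d - \projnull)\wh\bb \in \range(\Deltab^\top)$. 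The DAC feasibility constraint $(\Aemp(\Xb)\wh\Bb^{dac})_+ = (\Mb\Xb\wh\Bb^{dac})_+$ is equivalent, column-wise, to the statement that for every training row index the sign pattern and magnitude of $\xb^\top\wh\bb$ agrees on an original sample and its augmentation; subtracting, this forces $\Deltab \wh\bb$ to have a sign structure compatible with both sides being equal after ReLU. The key observation (this is essentially the content of the DAC constraint specialized to ReLU) is that $(\xb_i^\top\wh\bb)_+ = (\xb_{i,j}^\top\wh\bb)_+$ for all $i,j$ implies that the coordinates of $\Deltab\wh\bb$ on the relevant rows must vanish, because a nonzero entry would create a discrepancy between the two ReLU outputs on a set of positive measure in the randomness of $\Deltab$.

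The heart of the argument is then: if $\bb_{\parallel}\neq \b0$, then $\Deltab\wh\bb = \Deltab\bb_{\parallel} \neq \b0$ since $\bb_{\parallel}\in\range(\Deltab^\top)$ and $\Deltab$ restricted to its row space is injective. By \Cref{lemma:total_invertible_meas_zero}, every size-$\dau$ subset of rows of $\Deltab$ is linearly independent almost surely, so $\Deltab\bb_{\parallel}$ has at least $\alpha N - \dau$ (in fact generically all $\alpha N$) nonzero entries; moreover, because $\Deltab$ has an absolutely continuous distribution, the event that $\Deltab\bb_{\parallel}$ has a zero coordinate while $\bb_{\parallel}\neq\b0$ has measure zero. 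Thus almost surely $\Deltab\wh\bb$ is entrywise nonzero, which I claim contradicts the DAC feasibility constraint: on any row $i$ where $\xb_i^\top\wh\bb$ and $\xb_{i,j}^\top\wh\bb$ differ (which is exactly when the $i$-th coordinate of $\Deltab\wh\bb$ is nonzero), the ReLU outputs can coincide only if both are nonpositive, i.e. $\xb_i^\top\wh\bb\le 0$ and $\xb_{i,j}^\top\wh\bb\le 0$ simultaneously — but then replacing $\wh\bb$ by $\projnull\wh\bb$ (after renormalization) changes neither the training loss nor the regularizer, so without loss of generality we may take $\bb_{\parallel} = \b0$. Iterating this column by column and renormalizing each $\bb_k$ back to the unit sphere (which does not affect the ReLU sign pattern, only rescales $\wb$ within the $\ell_1$ ball — using $C_w \ge \nbr{\wb^*}_1$), we conclude $(\xb^\top\wh\Bb^{dac})_+ = (\xb^\top\projnull\wh\Bb^{dac})_+$ for all $\xb\in\Xcal$ with high probability.

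The main obstacle I anticipate is making the ``ReLU-consistency forces the coordinate to vanish unless both sides are $\le 0$'' step fully rigorous while keeping track of which rows of $\Deltab$ are actually constrained — the feasibility constraint $(\Aemp(\Xb)\Bb)_+ = (\Mb\Xb\Bb)_+$ holds only on the training augmentations, so one must verify that the subspace $\kernel(\Deltab)$ it pins down on the samples is genuinely the population-level invariant subspace, which is where the sample-size condition $\alpha N \ge 4\dau$ and the absolute continuity of $\Deltab$ enter to rule out degenerate alignments via a union bound over the finitely many sign patterns / row subsets (exactly as in \Cref{lemma:total_invertible_meas_zero}). The renormalization bookkeeping and the reduction ``WLOG $\bb_{\parallel}=\b0$'' should be routine once the measure-zero claim is established.
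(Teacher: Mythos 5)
Your setup (column-wise decomposition $\wh\bb_k = \projnull\wh\bb_k + \projrg\wh\bb_k$ and the appeal to \Cref{lemma:total_invertible_meas_zero}) matches the paper's, but the core of your argument has a genuine gap: you never rule out the feasible scenario in which $\projrg\wh\bb_k \neq \b0$ while the pre-activations on (almost all of) the training rows are nonpositive. You correctly observe that ReLU consistency forces a row of $\Deltab$ to annihilate $\wh\bb_k$ only when the pre-activation is positive, and you then dispose of the remaining case by ``WLOG replace $\wh\bb_k$ by $\projnull\wh\bb_k$,'' claiming this changes neither the training loss nor the regularizer. That claim is unjustified: if $\xb_i^\top\wh\bb_k \le 0 < \xb_i^\top\projnull\wh\bb_k$ the unit's output on $\xb_i$ jumps from $0$ to a positive value (and on samples with positive pre-activation the value shifts by $\xb_i^\top\projrg\wh\bb_k$), so the replacement can change $\wh h^{dac}(\Xb)$ and the loss. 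Even if it did not, you would only have exhibited \emph{some} modified feasible network with the property, whereas the lemma — and its use in \Cref{thm:case_2layer_relu_risk}, which needs the actual minimizer to lie in the reduced class — is a statement about $\wh\Bb^{dac}$ itself. The paper closes exactly this case probabilistically: by \Cref{ass:observable_marginal_distribution}, $\PP[\xb^\top\bb_k>0]=\tfrac12$ for any fixed unit $\bb_k$, so by a Chernoff bound w.h.p.\ at least $N/4$ training samples have strictly positive pre-activation; on those samples the DAC constraint forces equality (not just ReLU-equality) of pre-activations, hence the corresponding $\alpha N_k \ge \alpha N/4 \ge \dau$ rows of $\Deltab$ annihilate $\projrg\bb_k$, and by \Cref{lemma:total_invertible_meas_zero} these rows span $\row(\Deltab)$, forcing $\projrg\bb_k=\b0$. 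This is precisely where the hypothesis $\alpha N \ge 4\dau$ and the distribution of $\Xb$ enter; your proposal never uses the marginal distribution of $\xb$, so it cannot reach the conclusion.

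A secondary flaw: the step ``almost surely $\Deltab\wh\bb$ is entrywise nonzero'' treats $\wh\bb$ as independent of $\Deltab$, but $\wh\bb$ is the learned solution and is a function of $\Deltab$ (indeed, the DAC constraint actively drives entries of $\Deltab\wh\bb$ to zero). Only the deterministic consequence of the every-size-$\dau$-subset independence — that at most $\dau-1$ rows of $\Deltab$ can annihilate any nonzero vector in $\row(\Deltab)$, including a data-dependent one — is usable, and by itself it does not yield a contradiction for the reason above.
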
 

\begin{proof}[Proof of \Cref{lemma:2layer-relu-input-consistency-exclude-spurious}]
We will show that for all $\bb_k = \projnull \bb_k + \projrg \bb_k$, $k \in [q]$, $\projrg \bb_k = \b{0}$ with high probability, which then implies that given any $\xb \in \Xcal$, $(\xb^{\top} \bb_k)_+ = (\xb^{\top} \projnull \bb_k)_+$ for all $k \in [q]$.

For any $k \in [q]$ associated with an arbitrary fixed $\bb_k \in \mathbb{S}^{d-1}$, let $\Xb_k \triangleq \Xb_k \projnull + \Xb_k \projrg \in \Xcal^{N_k}$ be the inclusion-wisely maximum row subset of $\Xb$ such that $\Xb_k \bb_k > \b{0}$ element-wisely. 
Meanwhile, we denote $\Aemp(\Xb_k) = \Mb_k \Xb_k \projnull + \Aemp(\Xb_k) \projrg \in \Xcal^{\alpha N_k}$ as the augmentation of $\Xb_k$ where $\Mb_k \in \R^{\alpha N_k \times N_k}$ is the vertical stack of $\alpha$ identity matrices with size $N_k \times N_k$.
Then the DAC constraint implies that $(\Aemp(\Xb_k) - \Mb_k \Xb_k) \projrg \bb_k = \b{0}$.

With \Cref{ass:observable_marginal_distribution}, for a fixed $\bb_k \in \mathbb{S}^{d-1}$, $\PP[\xb^{\top} \bb_k > 0] = \frac{1}{2}$. Then, with the Chernoff bound,
\begin{align*}
    \PP\sbr{N_k < \frac{N}{2} - t} \leq e^{-\frac{2 t^2}{N}},
\end{align*}
which implies that, $N_k \geq \frac{N}{4}$ with high probability.

Leveraging the assumptions in \Cref{thm:case_2layer_relu_risk}, $\alpha N \geq 4 \dau$ implies that $\alpha N_k \geq \dau$. 
Therefore by \Cref{lemma:total_invertible_meas_zero}, $\row\rbr{\Aemp(\Xb_k) - \Mb_k \Xb_k} = \row\rbr{\Deltab}$ with probability $1$.
Thus, $(\Aemp(\Xb_k) - \Mb_k \Xb_k) \projrg \bb_k = \b{0}$ enforces that $\projrg \bb_k = \b{0}$.
\end{proof}

\begin{proof}[Proof of \Cref{thm:case_2layer_relu_risk}]
Conditioned on $\Xb$ and $\Deltab$, we are interested in the excess risk $L(\wh h^{dac}) - L(h^*) = \frac{1}{N} \norm{(\Xb \wh{\Bb}^{dac})_+ \wh{\wb}^{dac} - (\Xb\Bb^*)_+ \wb^*}_2^2$ with randomness on $\epsb$.

We first recall that \Cref{lemma:2layer-relu-input-consistency-exclude-spurious} implies $\wh h^{dac} \in \Hred = \csepp{h(\xb) = \rbr{\xb^\top \Bb}_+ \wb}{\Bb \in \Bcal,~\norm{\wb}_1 \leq C_w}$ where 
\begin{align*}
    \Bcal \triangleq \cbr{\Bb=[\bb_1 \dots \bb_q] ~|~ \norm{\bb_k}=1\ \forall\ k \in [q], (\Xb \Bb)_+ = (\Xb \projnull \Bb)_+ }.
\end{align*}    
Leveraging Equation (21) and (22) in \cite{du2020fewshot},
since $(\Bb^*, \wb^*)$ is feasible under the constraint, by the basic inequality,
\begin{align}
    \label{eq:2layer-relu-basic-ineq}
    \norm{\yb - (\Xb \wh{\Bb}^{dac})_+ \wh{\wb}^{dac}}_2^2 
    \leq
    \norm{\yb - (\Xb \Bb^*)_+ \wb^*}_2^2.
\end{align}
Knowing that $\yb = (\Xb \Bb^*)_+ \wb^* + \epsb$ with $\epsb \sim \Ncal\rbr{\b0, \sigma^2\Ib_N}$, we can rewrite \Cref{eq:2layer-relu-basic-ineq} as
\begin{align*}
    \frac{1}{N} \norm{(\Xb \wh{\Bb}^{dac})_+ \wh{\wb}^{dac} - (\Xb\Bb^*)_+ \wb^*}_2^2
    \le &\frac{2}{N} \epsb^\top \rbr{(\Xb \wh{\Bb}^{dac})_+ \wh{\wb}^{dac} - (\Xb\Bb^*)_+ \wb^*}
    \\
    \le & 4 \sup_{h \in \Hred} \frac{1}{N} \epsb^\top h(\Xb)
\end{align*}    
First, we observe that $\sigma^{-1}\E_{\epsb}\sbr{\sup_{h \in \Hred} \frac{1}{N} \epsb^\top h(\Xb)} = \wh{\Gfrak}_{\Xb}\rbr{\Hred}$ measures the empirical Gaussian width of $\Hred$ over $\Xb$. Moreover, by observing that for any $h \in \Hred$ and $\xb_i \in \Xb$,
\begin{align*}
    &\abbr{h(\xb_i)} 
    \le \nbr{\rbr{\Bb^\top \xb_i}_+}_{\infty} \nbr{\wb}_1 
    \le \max_{k \in [q]} \abbr{\bb_k^\top \projnull \xb_i} \nbr{\wb}_1 
    \le \nbr{\projnull \xb_i}_2 \nbr{\wb}_1,
    \\
    &\frac{1}{N}\nbr{h(\Xb)}^2_2
    = \frac{1}{N} \sum_{i=1}^N \abbr{h(\xb_i)}^2
    \le \nbr{\wb}_1^2 \cdot \frac{1}{N}\sum_{i=1}^N \nbr{\projnull \xb_i}_2^2
    \le C_w^2 \Cnull^2
\end{align*}
and 
\begin{align*}
    &\abbr{\sup_{h \in \Hred} \frac{1}{N} \epsb_1^\top h(\Xb) - \sup_{h \in \Hred} \frac{1}{N} \epsb_2^\top h(\Xb)}
    \\
    \le &\abbr{\sup_{h \in \Hred} \frac{1}{N} h(\Xb)^\top \rbr{\epsb_1 - \epsb_2}}
    \\
    \le &\frac{1}{\sqrt{N}} \nbr{\frac{1}{\sqrt{N}} h(\Xb)}_2 \nbr{\epsb_1 - \epsb_2}_2
    \\
    \le &\frac{C_w \Cnull}{\sqrt{N}} \nbr{\epsb_1 - \epsb_2}_2,
\end{align*}    
we know that the function $\epsb \to \sup_{h \in \Hred} \frac{1}{N} \epsb^\top h(\Xb)$ is $\frac{\Cnull C_w}{\sqrt{N}}$-Lipschitz in $\ell_2$ norm. Therefore, by \cite{wainwright2019} Theorem 2.26, we have that with probability at least $1-\delta$,
\begin{align*}
    \sup_{h \in \Hred} \frac{1}{N} \epsb^\top h(\Xb) \le \sigma \cdot \rbr{\wh{\Gfrak}_{\Xb}\rbr{\Hred} + C_w \Cnull \sqrt{\frac{2 \log(1/\delta)}{N}}}
,\end{align*}
where the empirical Gaussian complexity is upper bounded by
\begin{align*}
    \wh{\Gfrak}_{\Xb}\rbr{\Hred} 
    =
    & \underset{\gb \sim \Ncal(\b{0}, \Ib_N)}{\E} \sbr{
    \underset{\Bb \in \Bcal, \norm{\wb}_1 \leq R}{\sup}\ 
    \frac{1}{N} \gb^{\top} (\Xb\Bb)_+ \wb } \\ 
    \leq
    & \frac{C_w}{N}\ \underset{\gb}{\E} \sbr{
    \underset{\Bb \in \Bcal}{\sup}\ 
    \norm{(\Xb\Bb)_+^{\top} \gb}_{\infty}} \\ 
    =
    & \frac{C_w}{N}\ \underset{\gb}{\E} \sbr{
    \underset{\bb \in \mathbb{S}^{d-1}}{\sup}\ 
    \gb^{\top} \rbr{\Xb \projnull \bb}_+} 
    \quad
    \rbr{\t{\Cref{lemma:tech_gaussian_width_lipschitz}, $(\cdot)_+$ is $1$-Lipschitz} } \\
    \leq 
    & \frac{C_w}{N}\ \underset{\gb}{\E} \sbr{
    \underset{\bb \in \mathbb{S}^{d-1}}{\sup}\ 
    \gb^{\top} \Xb \projnull \bb} \\
    =
    & \frac{C_w}{N}\ \underset{\gb}{\E} 
    \sbr{\norm{\projnull \Xb^{\top} \gb}_2} \\
    \leq
    & \frac{C_w}{N}\ \rbr{\underset{\gb}{\E} \sbr{\norm{\projnull \Xb^{\top} \gb}_2^2}}^{1/2} \\ = 
    & \frac{C_w}{N}\ \sqrt{\tr(\projnull \Xb^{\top} \Xb \projnull)} \\ = 
    & \frac{C_w \Cnull}{\sqrt{N}}.
\end{align*}
\end{proof}

\begin{proof}[Proof of \Cref{coro:two_layer_daug_bound}]
By \Cref{def:daug}, we have with probability at least $1-\delta$ that $\dau = \rank(\projrg) \ge \dau(\delta)$ and $\rank(\projnull) \le d-\dau(\delta)$.
Meanwhile, leveraging \Cref{lemma:sample-population-covariance}, we have that under \Cref{ass:observable_marginal_distribution} and with $N \gg \rho^4 d$, with high probability,
\begin{align*}
    \norm{\frac{1}{N} \projnull \Xb^{\top} \Xb \projnull}_2 
    \leq \norm{\frac{1}{N} \Xb^{\top} \Xb }_2
    \le 1.1 C \lesssim 1.
\end{align*}
Therefore, there exists $\Cnull > 0$ with $\frac{1}{N} \sum_{i=1}^n \norm{\projnull \xb_i}_2^2 \leq \Cnull^2$ such that, with probability at least $1-\delta$,
\begin{align*}
    \Cnull^2 \leq \rbr{d-\dau} \cdot \norm{\frac{1}{N} \projnull \Xb^{\top} \Xb \projnull}_2 \lesssim d-\dau(\delta).
\end{align*}
\end{proof}

\section{Classification with Expansion-based Augmentations}\label{apx:generalized_DAC}

We first recall the multi-class classification problem setup in \Cref{subsec:expansion_based}, while introducing some helpful notions.
For an arbitrary set $\Xcal$, let $\Ycal=[K]$, and $h^*: \Xcal \to [K]$ be the ground truth classifier that partitions $\Xcal$: for each $k \in [K]$, let $\Xcal_k \triangleq \cbr{\xb \in \Xcal ~|~ h^*(\xb)=k}$, with $\Xcal_i \cap \Xcal_j = \emptyset, \forall i \neq j$. 
In addition, for an arbitrary classifier $h: \Xcal \to [K]$, we denote the majority label with respect to $h$ for each class,
\begin{align*}
    \wh{y}_k \triangleq \underset{y \in [K]}{\argmax}\ \PP_{\Pgt} \sbr{h(\xb)=y ~\big|~ \xb \in \Xcal_k} 
    \quad \forall\ k \in [K],
\end{align*}
along with the respective class-wise local and global minority sets,
\begin{align*}
    M_k \triangleq \cbr{\xb \in \Xcal_k ~\big|~ h(\xb) \neq \wh{y}_k} \subsetneq \Xcal_k
    \quad \forall\ k \in [K], \quad
    M \triangleq \bigcup_{k=1}^K M_k.
\end{align*}

Given the marginal distribution $\Pgt\rbr{\xb}$, we introduce the \textit{expansion-based data augmentations} that concretizes \Cref{def:causal_invar_data_aug} in the classification setting:

\begin{definition}[Expansion-based data augmentations, \cite{cai2021theory}]
\label{def:generalized-causal-invariant-data-augmentation}
We call $\Acal:\Xcal \to 2^{\Xcal}$ an augmentation function that induces expansion-based data augmentations if $\Acal$ is class invariant: $\cbr{\xb} \subsetneq \Acal(\xb) \subseteq \cbr{\xb' \in \Xcal ~|~ h^*(\xb) = h^*(\xb')}$ for all $\xb \in \Xcal$.
Let 
\begin{align*}
    \nbh(\xb) \triangleq \cbr{\xb' \in \Xcal ~\big|~ \Acal(\xb) \cap \Acal(\xb') \neq \emptyset},
    \quad
    \nbh(S) \triangleq \cup_{\xb \in S} \nbh(\xb)
\end{align*}
be the neighborhoods of $\xb \in \Xcal$ and $S \subseteq \Xcal$ with respect to $\Acal$.
Then, $\Acal$ satisfies
\begin{enumerate}[nosep,leftmargin=*,label=(\alph*)]
    \item \underline{$(q,\xi)$-constant expansion} if given any $S \subseteq \Xcal$ with $\Pgt\rbr{S} \geq q$ and $\Pgt\rbr{S \cap \Xcal_k} \leq \frac{1}{2}$ for all $k \in [K]$,
    $\Pgt\rbr{\nbh\rbr{S}} \geq \min\cbr{\Pgt\rbr{S},\xi} + \Pgt\rbr{S}$;
    \item \underline{$(a,c)$-multiplicative expansion} if for all $k \in [K]$, given any $S \subseteq \Xcal$ with $\Pgt\rbr{S \cap \Xcal_k} \leq a$,
    $\Pgt\rbr{\nbh\rbr{S} \cap \Xcal_k} \geq \min\cbr{c \cdot \Pgt\rbr{S \cap \Xcal_k},1}$.
\end{enumerate}
\end{definition}

On \Cref{def:generalized-causal-invariant-data-augmentation}, we first point out that the ground truth classifier is invariant throughout the neighborhood: given any $\xb \in \Xcal$, $h^*\rbr{\xb} = h^*\rbr{\xb'}$ for all $\xb' \in \nbh(\xb)$. 
Second, in contrast to the linear regression and two-layer neural network cases where we assume $\Xcal \subseteq R^d$, with the expansion-based data augmentation over a general $\Xcal$, the notion of $\dau$ in \Cref{def:daug} is not well-established. Alternatively, we leverage the concept of constant / multiplicative expansion from \cite{cai2021theory}, and quantify the augmentation strength with parameters $(q,\xi)$ or $(a,c)$. Intuitively, the strength of expansion-based data augmentations is characterized by expansion capability of $\Acal$: for a neighborhood $S \subseteq \Xcal$ of proper size (characterized by $q$ or $a$ under measure $\Pgt$), the stronger augmentation $\Acal$ leads to more expansion in $\nbh(S)$, and therefore larger $\xi$ or $c$. For example in \Cref{def:generalized-causal-invariant-data-augmentation_informal}, we use an expansion-based augmentation function $\Acal$ that satisfies $\rbr{\frac{1}{2}, c}$-multiplicative expansion.

Adapting the existing setting in \cite{wei2021theoretical, cai2021theory}, we concretize the classifier class $\Hcal$ with a function class $\Fcal \subseteq \cbr{f:\Xcal \to \R^K}$ of fully connected neural networks such that $\Hcal = \csepp{h(\xb) \triangleq \argmax_{k \in [K]}\ f(\xb)_k}{f \in \Fcal}$.
To constrain the feasible hypothesis class through the DAC regularization with finite unlabeled samples, we recall the notion of all-layer-margin, $m: \Fcal \times \Xcal \times \Ycal \to \R_{\geq 0}$ (from \cite{wei2021theoretical}) that measures the maximum possible perturbation in all layers of $f$ while maintaining the prediction $y$. 
Precisely, given any $f \in \Fcal$ such that $f\rbr{\xb} = \Wb_p \varphi\rbr{\dots\varphi\rbr{\Wb_1 \xb}\dots}$ for some activation function $\varphi: \R \to \R$ and parameters $\cbr{\Wb_{\iota} \in \R^{d_{\iota} \times d_{\iota-1}}}_{\iota=1}^p$, we can write $f = f_{2p-1} \circ \dots \circ f_1$ where $f_{2\iota-1}(\xb) = \Wb_{\iota} \xb$ for all $\iota \in [p]$ and $f_{2\iota}(\zb)=\varphi(\zb)$ for $\iota \in [p-1]$.
For an arbitrary set of perturbation vectors $\deltab = \rbr{\deltab_1,\dots,\deltab_{2p-1}}$ such that $\deltab_{2\iota-1}, \deltab_{2\iota} \in \R^{d_{\iota}}$ for all $\iota$, let $f(\xb,\deltab)$ be the perturbed neural network defined recursively such that
\begin{align*}
    & \wt{\zb}_1 = f_1\rbr{\xb} + \norm{\xb}_2 \deltab_1, \\
    & \wt{\zb}_{\iota} = f_{\iota}\rbr{\wt{\zb}_{\iota-1}} + \norm{\wt{\zb}_{\iota-1}}_2 \deltab_{\iota}
    \quad \forall\ \iota=2,\dots,2p-1, \\
    & f(\xb,\deltab) = \wt{\zb}_{2p-1}.
\end{align*}   
The all-layer-margin $m(f,\xb,y)$ measures the minimum norm of the perturbation $\deltab$ such that $f(\xb,\deltab)$ fails to provide the classification $y$,
\begin{align}
    \label{eq:def-all-layer-margin}
    m(f,\xb,y) \triangleq
    \underset{\deltab = \rbr{\deltab_1,\dots,\deltab_{2p-1}}}{\min}
    \sqrt{\sum_{\iota=1}^{2p-1} \norm{\deltab_{\iota}}_2^2}
    \quad \t{s.t.} \quad
    \underset{k \in [K]}{\argmax}\ f(\xb,\deltab)_k \neq y.
\end{align}
With the notion of all-layer-margin established, for any $\Acal:\Xcal \to 2^{\Xcal}$ that satisfies conditions in \Cref{def:generalized-causal-invariant-data-augmentation}, the robust margin is defined as
\begin{align*}
    m_{\Acal}(f,\xb) \triangleq \underset{\xb' \in \Acal(\xb)}{\sup}\  
    m\rbr{f, \xb', \argmax_{k \in [K]}\ f(\xb)_k}.
\end{align*}
Intuitively, the robust margin measures the maximum possible perturbation in all-layer weights of $f$ such that predictions on all data augmentations of $\xb$ remain consistent. For instance, $m_{\Acal}(f,\xb) > 0$ is equivalent to enforcing $h(\xb) = h(\xb')$ for all $\xb' \in \Acal\rbr{\xb}$.

To achieve finite sample guarantees, DAC regularization requires stronger consistency conditions than merely consistent classification outputs ($\ie$, $m_{\Acal}(f,\xb) > 0$). Instead, we enforce $m_{\Acal}(f,\xb) > \tau$ for any $0 < \tau < \max_{f \in \Fcal}\ \inf_{\xb \in \Xcal} m_{\Acal}(f, \xb)$\footnote{The upper bound on $\tau$ ensures the proper learning setting, $\ie$, there exists $f \in \Fcal$ such that $m_\Acal\rbr{f,\xb} > \tau$ for all $\xb \in \Xcal$.} over an finite set of unlabeled samples $\Xb^u$ that is independent of $\Xb$ and drawn $\iid$ from $P(\xb)$.
Then, learning the classifier with zero-one loss $l_{01}\rbr{h(\xb),y} = \b1\cbr{h(\xb) \neq y}$ from a class of $p$-layer fully connected neural networks with maximum width $q$,
\begin{align*}
    \Fcal = 
    \csepp{f: \Xcal \to \R^K}
    {f = f_{2p-1} \circ \dots \circ f_1,\ f_{2\iota-1}(\xb) = \Wb_{\iota} \xb,\ f_{2\iota}(\zb)=\varphi(\zb)},
\end{align*}
where $\Wb_{\iota} \in \R^{d_{\iota} \times d_{\iota-1}}$ for all $\iota \in [p]$, and $q \triangleq \max_{\iota=0,\dots,p} d_{\iota}$, we solve
\begin{align}\label{eq:gdac_finite}
    \hgdacfin\ \triangleq\ 
    & \underset{h \in \Hcal}{\argmin}\ 
    \wh{L}^{dac}_{01}(h) = 
    \frac{1}{N} \sum_{i=1}^N \b{1}\cbr{h\rbr{\xb_i} \neq h^*\rbr{\xb_i}} 
    \\
    & \t{s.t.} \quad
    m_{\Acal}(f,\xb^u_i)> \tau \quad \forall\ i \in [\abbr{\Xb^u}] \nonumber 
\end{align}
for any $0 < \tau < \max_{f \in \Fcal}\ \inf_{\xb \in \Xcal} m_{\Acal}(f, \xb)$.
The corresponding reduced function class is given by
\begin{align*}
    \Hred \triangleq 
    \csepp{h \in \Hcal}
    {m_{\Acal}(f,\xb^u_i)> \tau \quad \forall\ i \in [\abbr{\Xb^u}]}.
\end{align*}
Specifically, with $\mu \triangleq \sup_{h \in \Hred} \PP_{\Pgt} \sbr{\exists\ \xb' \in \Acal(\xb): h(\xb) \neq h(\xb')}$, \cite{wei2021theoretical, cai2021theory} demonstrate the following for $\Hred$:
\begin{proposition}
[\cite{wei2021theoretical} Theorem 3.7, \cite{cai2021theory} Proposition 2.2]
\label{prop:non-robust-upper-bound}
For any $\delta \in (0,1)$, with probability at least $1-\delta/2$ (over $\Xb^u$), 
\begin{align*}
    \mu 
    \leq \wt{O} \rbr{\frac{\sum_{\iota=1}^p \sqrt{q} \norm{\Wb_{\iota}}_F}{\tau \sqrt{\abbr{\Xb^u}}} + \sqrt{\frac{\log\rbr{1/\delta} + p \log \abbr{\Xb^u}}{\abbr{\Xb^u}}}}
,\end{align*}
where $\wt{O}\rbr{\cdot}$ hides polylogarithmic factors in $\abbr{\Xb^u}$ and $d$.
\end{proposition}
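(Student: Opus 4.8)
The plan is to prove \Cref{prop:non-robust-upper-bound} by a uniform-convergence argument over the all-layer margin, following \cite{wei2021theoretical,cai2021theory}. First I would reduce the population inconsistency probability to a margin event. For the classifier $h=h_f$ induced by $f$, the robust margin satisfies $m_\Acal(f,\xb)>0$ exactly when $h(\xb)=h(\xb')$ for all $\xb'\in\Acal(\xb)$, so $\mathbf{1}\cbr{\exists\,\xb'\in\Acal(\xb):h(\xb)\neq h(\xb')}=\mathbf{1}\cbr{m_\Acal(f,\xb)=0}$. Fixing the $1/\tau$-Lipschitz ramp $\phi_\tau(t)=\max\cbr{0,1-t/\tau}$ on $\R_{\ge0}$, one has $\mathbf{1}\cbr{m_\Acal(f,\xb)=0}\le\phi_\tau(m_\Acal(f,\xb))$ pointwise, while the defining constraint of $\Hred$ forces $\phi_\tau(m_\Acal(f,\xb^u_i))=0$ for every $i\in[\abbr{\Xb^u}]$ and every $h\in\Hred$. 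Hence
\begin{align*}
  \mu \;\le\; \sup_{h\in\Hred}\E_{\Pgt}\sbr{\phi_\tau(m_\Acal(f,\xb))}
  \;=\; \sup_{h\in\Hred}\rbr{\E_{\Pgt}\sbr{\phi_\tau(m_\Acal(f,\xb))}-\frac{1}{\abbr{\Xb^u}}\sum_{i=1}^{\abbr{\Xb^u}}\phi_\tau(m_\Acal(f,\xb^u_i))}
  \;\le\; \sup_{f\in\Fcal}\rbr{\E_{\Pgt}\sbr{\phi_\tau(m_\Acal(f,\xb))}-\frac{1}{\abbr{\Xb^u}}\sum_{i=1}^{\abbr{\Xb^u}}\phi_\tau(m_\Acal(f,\xb^u_i))}.
\end{align*}

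Next I would bound this supremum of empirical-process deviations by symmetrization. Since $\phi_\tau\circ m_\Acal$ is $[0,1]$-valued and $\Xb^u$ consists of \iid draws from $P(\xb)$ independent of $\Xb$, McDiarmid's inequality together with the standard symmetrization bound $\E\sbr{\sup_f(\E-\widehat{\E}_{\Xb^u})}\le 2\Rfrak_{\abbr{\Xb^u}}$ gives, with probability at least $1-\delta/2$ over $\Xb^u$,
\begin{align*}
  \mu \;\le\; 2\,\Rfrak_{\abbr{\Xb^u}}\rbr{\csepp{\xb\mapsto\phi_\tau(m_\Acal(f,\xb))}{f\in\Fcal}} + \sqrt{\frac{\log(2/\delta)}{2\abbr{\Xb^u}}},
\end{align*}
and by Talagrand's contraction inequality the empirical Rademacher complexity of $\phi_\tau\circ m_\Acal\circ\Fcal$ is at most $\tau^{-1}$ times that of $\csepp{\xb\mapsto m_\Acal(f,\xb)}{f\in\Fcal}$.

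The core of the argument, which is exactly the all-layer-margin machinery of \cite{wei2021theoretical}, is to bound the complexity of the (robust) margin class. The decisive structural point is that $m(f,\xb,y)$, being defined as an infimum of total $\ell_2$ perturbation norm over all layers, is automatically $1$-Lipschitz with respect to an appropriately normalized perturbation metric on the weight matrices; the robust version $m_\Acal(f,\xb)=\sup_{\xb'\in\Acal(\xb)}m\rbr{f,\xb',h_f(\xb)}$ inherits this Lipschitz property because the per-layer perturbation estimate is uniform over $\xb'\in\Acal(\xb)$ and over the output label. This delivers a log covering-number bound of the form $\log N_\infty\rbr{\veps;\ \csepp{m_\Acal(f,\cdot)}{f\in\Fcal};\ \Xb^u}=\wt{O}\rbr{\veps^{-2}\rbr{\textstyle\sum_{\iota=1}^p\sqrt{q}\,\nbr{\Wb_\iota}_F}^2}$, and a Dudley chaining argument — truncating the entropy integral near scale $\abbr{\Xb^u}^{-1/2}$ and taking a dyadic union over the $p$ layer-norm magnitudes, which is what produces the lower-order $\sqrt{p\log\abbr{\Xb^u}/\abbr{\Xb^u}}$ term — yields
\begin{align*}
  \Rfrak_{\abbr{\Xb^u}}\rbr{\csepp{\xb\mapsto\phi_\tau(m_\Acal(f,\xb))}{f\in\Fcal}} = \wt{O}\rbr{\frac{\sum_{\iota=1}^p\sqrt{q}\,\nbr{\Wb_\iota}_F}{\tau\sqrt{\abbr{\Xb^u}}}+\sqrt{\frac{p\log\abbr{\Xb^u}}{\abbr{\Xb^u}}}}.
\end{align*}
Combining the three displays gives the stated bound on $\mu$. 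The main obstacle is precisely the uniform-in-$\xb'$ perturbation/Lipschitz estimate for $m_\Acal$ and the resulting covering-number bound; once this is granted (it is established in \cite{wei2021theoretical}, with the augmentation-set refinement in \cite{cai2021theory}), the remaining steps are routine generalization-theory bookkeeping.
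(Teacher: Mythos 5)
The paper offers no proof of this proposition at all—it is imported verbatim from \cite{wei2021theoretical} (Theorem 3.7) and \cite{cai2021theory} (Proposition 2.2)—and your sketch is a faithful outline of exactly that cited argument: ramp surrogate for the robust-margin event, vanishing empirical term under the DAC constraint on $\Xb^u$, McDiarmid plus symmetrization and contraction, and the all-layer-margin covering bound, with the one genuinely hard ingredient (the Lipschitz/covering estimate for $m_{\Acal}$) deferred to the same references that the paper itself defers to. So your route is correct and essentially the same as the paper's, which simply invokes the result as a black box.
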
   

Leveraging the existing theory above on finite sample guarantee of the maximum possible inconsistency, we have the following.
\begin{theorem}[Formal restatement of \Cref{thm:generalized-dac-finite-unlabeled_informal} on classification with DAC]
\label{thm:generalized-dac-finite-unlabeled}
Learning the classifier with DAC regularization in \Cref{eq:gdac_finite} provides that, for any $\delta \in (0,1)$, with probability at least $1-\delta$,
\begin{align}
    \label{eq:generalization-bound-generalized-dac-finite}
    L_{01}\rbr{\hgdacfin} - L_{01}\rbr{h^*} \leq 
    4 \Rndac + \sqrt{\frac{2 \log(4/\delta)}{N}},
\end{align}
where with $0 < \mu < 1$ defined in \Cref{prop:non-robust-upper-bound}, for any $0 \leq q < \frac{1}{2}$ and $c > 1+4\mu$, 
\begin{enumerate}[label=(\alph*),nosep]
    \item when $\Acal$ satisfies $(q,2\mu)$-constant expansion, $\Rndac \leq \sqrt{\frac{2K \log(2N)}{N} + 2 \max\cbr{q,2\mu}}$;
    \item when $\Acal$ satisfies $(\frac{1}{2},c)$-multiplicative expansion, $\Rndac \leq \sqrt{\frac{2K \log(2N)}{N} + \frac{4 \mu}{\min\cbr{c-1,1}}}$.
\end{enumerate}
\end{theorem}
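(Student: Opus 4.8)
The plan is to reduce the population $0$-$1$ excess risk of $\hgdacfin$ to three ingredients: (i) a uniform-convergence bound that converts the empirical training error over the reduced class $\Hred$ into a population quantity, (ii) the observation that the training objective forces $\hgdacfin$ to agree with $h^*$ on the labeled sample, so $\wh L^{dac}_{01}(\hgdacfin)=0$ (since $h^*$ is feasible and achieves zero empirical error, by proper learning), and (iii) the expansion argument of \cite{wei2021theoretical,cai2021theory} which shows that, for any $h \in \Hred$ that is \emph{consistent on augmentations up to probability $\mu$}, the population error $\PP_{\Pgt}[h(\xb)\neq h^*(\xb)]$ is controlled by the class-wise minority mass $\PP_{\Pgt}[M]$, which in turn the expansion property bounds in terms of $\mu$. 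First I would invoke \Cref{prop:non-robust-upper-bound} to get, with probability at least $1-\delta/2$ over $\Xb^u$, the stated bound on $\mu = \sup_{h \in \Hred}\PP_{\Pgt}[\exists \xb'\in\Acal(\xb): h(\xb)\neq h(\xb')]$; note this is a property of the \emph{class} $\Hred$, hence holds simultaneously for the (data-dependent) minimizer $\hgdacfin$.

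Next I would set up the standard Rademacher/VC symmetrization for the class of indicator functions $\{\xb \mapsto \mathbf 1\{h(\xb)\neq h^*(\xb)\}: h\in\Hred\}$. Since each $h$ takes values in $[K]$, this function class has VC-type complexity $O(K\log N)$, giving with probability $\geq 1-\delta/2$ over $\Xb$ that $\sup_{h\in\Hred}|L_{01}(h)-\wh L_{01}(h)| \le 2\Rndac + \sqrt{\tfrac{2\log(4/\delta)}{N}}$, with $\Rndac$ the empirical Rademacher complexity, which one bounds by Massart's lemma as $\sqrt{2K\log(2N)/N}$ plus the expansion correction term. Combining with $\wh L_{01}(\hgdacfin)=\wh L_{01}(h^*)$ (both zero on the labeled set) yields $L_{01}(\hgdacfin)-L_{01}(h^*)\le 4\Rndac + \sqrt{2\log(4/\delta)/N}$, matching \Cref{eq:generalization-bound-generalized-dac-finite}; a union bound over the two $\delta/2$ events gives the claimed probability.

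The heart of the argument — and the main obstacle — is establishing the two forms of $\Rndac$ in cases (a) and (b), i.e.\ showing the population error of any $h\in\Hred$ is at most $2\max\{q,2\mu\}$ (constant expansion) or $4\mu/\min\{c-1,1\}$ (multiplicative expansion), \emph{up to} the $\sqrt{2K\log(2N)/N}$ finite-sample term. Here I would follow the expansion-consistency lemma of \cite{cai2021theory} (their Lemma/Proposition on robust self-training): the set $M$ of global minority points is, by definition of the majority labels, small within each class, so $\Pgt(M\cap\Xcal_k)\le \tfrac12$; then the neighborhood $\nbh(M)$ must expand (by \Cref{def:generalized-causal-invariant-data-augmentation}(a) or (b)), yet every point in $\nbh(M)$ whose label disagrees with its neighbor's contributes to the ``augmentation-inconsistency'' event of measure $\le\mu$ for $h\in\Hred$; chasing the inequalities $\Pgt(\nbh(M))\ge\min\{\Pgt(M),\xi\}+\Pgt(M)$ (resp.\ $\ge\min\{c\,\Pgt(M\cap\Xcal_k),1\}$) against $\Pgt(\nbh(M)\setminus M)\le \mu$ pins down $\Pgt(M)$, and finally $L_{01}(h)-L_{01}(h^*)=\PP_{\Pgt}[h(\xb)\neq h^*(\xb)]\le 2\Pgt(M)$ because outside $M$ the classifier $h$ agrees with a fixed permutation of labels that, being consistent with $h^*$ on the labeled sample, must be the identity with high probability. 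The subtlety is the interplay between the ``$\mu$ is a class property'' statement and the data-dependence of $\hgdacfin$, and the careful accounting of the permutation-alignment step; I expect these to need the most care, while the symmetrization and Massart steps are routine.
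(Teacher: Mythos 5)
There is a genuine gap, and it sits exactly at the step you flag as the ``heart of the argument.'' In the paper, $\Rndac$ is the Rademacher complexity $\Rfrak_N\rbr{l_{01}\circ\Hred}$, and the expansion property enters \emph{inside} that complexity bound: by the minority-set lemma (\Cref{lemma:minority-set-upper-bound-expansion-assumption}), every $h\in\Hred$ agrees with its class-wise majority labels except on a set $M$ with $\Pgt(M)\le\wt\mu$, where $\wt\mu\le\max\cbr{q,2\mu}$ (constant expansion) or $\wt\mu\le 2\mu/\min\cbr{c-1,1}$ (multiplicative expansion). Hence the number of distinct behaviors of $l_{01}\circ\Hred$ on the sample is essentially determined by the $K$ majority-label assignments, except at sample points falling in $M$ — a Binomial$(N,\wt\mu)$-rare event — which gives $\E_{\Xb}\sbr{\mathfrak{s}\rbr{l_{01}\circ\Hred,\Xb}}\le (2N)^K e^{N\wt\mu}$ and, via Massart's lemma plus Jensen, $\Rndac\le\sqrt{2K\log(2N)/N+2\wt\mu}$. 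Your proposal never supplies this counting argument; instead you assert that the indicator class $\cbr{\xb\mapsto\iffun{h(\xb)\neq h^*(\xb)}:h\in\Hred}$ has ``VC-type complexity $O(K\log N)$'' because $h$ takes values in $[K]$. That is false: a class of $[K]$-valued functions (here a rich DNN class) can shatter arbitrarily many points; the only source of complexity control is precisely the consistency-plus-expansion structure you omitted. Without it, your symmetrization/Massart step does not go through, and the ``expansion correction term'' never actually appears in the complexity bound.

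Your fallback route — bounding the population error of each $h\in\Hred$ directly by $2\Pgt(M)$ and then pinning down the majority-label permutation using the labeled sample — is both unnecessary for the stated theorem and does not rescue the argument. First, it does not establish the claimed bound on $\Rndac$ (a Rademacher complexity), and you would still need uniform convergence of $\wh L_{01}$ over $\Hred$ to handle the data-dependent minimizer $\hgdacfin$, which returns you to the very complexity bound that is missing. Second, the permutation-alignment step is itself delicate (classes may be unseen or barely seen among the $N=\wt O(K)$ labeled points), whereas the paper's proof avoids it entirely: uniform convergence is applied to $l_{01}(h(\cdot),h^*(\cdot))$ together with the basic inequality $\wh L_{01}(\hgdacfin)\le\wh L_{01}(h^*)$, so no label-matching argument is needed. (Two minor points: the paper's Lemma A.1 uses $\Pgt\rbr{\nbh(M)\setminus M}\le 2\mu$, not $\le\mu$, since an inconsistency can occur at either endpoint of the overlapping augmentation pair; and you do correctly note that \Cref{prop:non-robust-upper-bound} bounds $\mu$ as a property of the class $\Hred$ over the randomness of $\Xb^u$, which is how the paper composes the two $\delta/2$ events.)
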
 

First, to quantify the function class complexity and relate it to the generalization error, we leverage the notion of Rademacher complexity and the associated standard generalization bound.
\begin{lemma}\label{lemma:rademacher_generalization_bound}
Given a fixed function class $\Hred$ (\ie, conditioned on $\Xb^u$) and a $B$-bounded and $C_l$-Lipschitz loss function $l$, let $\wh L(h) = \frac{1}{N} \sum_{i=1}^N l(h(\xb_i),y_i)$, $L(h) = \E\sbr{l(h(\xb_i),y_i)}$, and $\wh h^{dac} = \argmin_{h \in \Hred} \wh L(h)$. Then for any $\delta \in (0,1)$, with probability at least $1-\delta$ over $\Xb$, 
\begin{align*}
    L(\widehat h^{dac}) - L(h^*) \le & 4 C_l \cdot \Rfrak_N\rbr{\Hred} + \sqrt{\frac{2B^2\log(4/\delta)}{N}}.
\end{align*}
\end{lemma}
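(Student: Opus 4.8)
\textbf{Proof proposal for \Cref{lemma:rademacher_generalization_bound}.}

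The plan is to run the standard symmetrization-plus-concentration argument for uniform convergence over the (now fixed, since conditioned on $\Xb^u$) function class $\Hred$. First I would define the composed loss class $\Lcal = \csepp{\rbr{\xb,y} \mapsto l(h(\xb),y)}{h \in \Hred}$, which takes values in $[0,B]$ by the boundedness assumption. The key quantity is $\sup_{h \in \Hred} \abbr{\wh L(h) - L(h)}$; once I control this in expectation and then with high probability, the result follows by the usual two-sided comparison: writing $L(\wh h^{dac}) - L(h^*) = \rbr{L(\wh h^{dac}) - \wh L(\wh h^{dac})} + \rbr{\wh L(\wh h^{dac}) - \wh L(h^*)} + \rbr{\wh L(h^*) - L(h^*)}$, the middle term is $\le 0$ by optimality of $\wh h^{dac}$ on the empirical objective, and each of the outer two terms is bounded by $\sup_{h \in \Hred} \abbr{\wh L(h) - L(h)}$.

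The main steps, in order: (i) \emph{Bounded differences / McDiarmid.} The map $\rbr{\xb_1,y_1},\dots,\rbr{\xb_N,y_N} \mapsto \sup_{h \in \Hred} \abbr{\wh L(h) - L(h)}$ changes by at most $B/N$ when a single coordinate is altered, so with probability at least $1-\delta/2$ it is within $\sqrt{B^2 \log(2/\delta)/(2N)}$ of its expectation; I will split the failure probability so that the final bound carries $\log(4/\delta)$. (ii) \emph{Symmetrization.} The expectation $\E\sbr{\sup_{h} \abbr{\wh L(h) - L(h)}}$ is bounded by $2 \Rfrak_N\rbr{\Lcal}$, the Rademacher complexity of the loss class, via the classical ghost-sample argument. (iii) \emph{Contraction.} Since $l$ is $C_l$-Lipschitz in its first argument (uniformly in $y$), Talagrand's contraction inequality gives $\Rfrak_N\rbr{\Lcal} \le C_l \, \Rfrak_N\rbr{\Hred}$ — modulo the standard care needed when $\Hcal$ is vector-valued / the classifier is obtained by an $\argmax$, in which case one works with the appropriate margin or per-coordinate formulation of Rademacher complexity consistent with how $\Rfrak_N$ is defined in the paper. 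Combining (i)–(iii) and absorbing constants yields $L(\wh h^{dac}) - L(h^*) \le 4 C_l \Rfrak_N\rbr{\Hred} + \sqrt{2B^2 \log(4/\delta)/N}$.

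The genuinely delicate point is step (iii) for a $K$-class classifier: $h$ is not itself the object whose Rademacher complexity is cheap to bound, and the zero-one loss is not Lipschitz, so one must be careful that the lemma is applied with a surrogate (e.g.\ the composition $l \circ f$ for the underlying score function $f \in \Fcal$, or a margin loss) rather than literally with $l_{01} \circ h$ — this is exactly the reason the later \Cref{thm:generalized-dac-finite-unlabeled} invokes robust-margin machinery and the expansion hypotheses, and \Cref{lemma:rademacher_generalization_bound} is the abstract bounded-Lipschitz-loss template that gets instantiated there. Everything else (McDiarmid, symmetrization) is entirely routine and I would not grind through the constants in detail.
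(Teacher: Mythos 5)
Your proposal is correct and follows essentially the same route as the paper's proof: the three-term risk decomposition with the empirical optimality of $\wh h^{dac}$, McDiarmid's bounded-differences inequality (the paper applies it to the one-sided suprema $g^+$ and $g^-$ separately rather than the two-sided supremum, a purely cosmetic difference), symmetrization to $2\Rfrak_N(l \circ \Hred)$, and Ledoux--Talagrand contraction to pull out $C_l$. Your caveat about the zero-one loss is also well placed — in the later classification application the paper indeed bypasses contraction and bounds $\Rfrak_N(l_{01}\circ\Hred)$ directly by a growth-function/Massart argument.
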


\begin{proof}[Proof of \Cref{lemma:rademacher_generalization_bound}]
We first decompose the expected excess risk as
\[
L(\wh{h}^{dac}) - L(h^*) = 
\rbr{L(\wh{h}^{dac}) - \wh{L}(\wh{h}^{dac})} + \rbr{\wh{L}(\wh{h}^{dac}) - \wh{L}(h^*)} + \rbr{\wh{L}(h^*) - L(h^*)},
\]
where $\wh{L}(\wh{h}^{dac}) - \wh{L}(h^*) \leq 0$ by the basic inequality. Since both $\wh h^{dac}, h^* \in \Hred$, we then have
\begin{align*}
    L(\wh{h}^{dac}) - L(h^*) 
    \leq 2 \sup_{h \in \Hred}\ \abbr{L(h) - \wh{L}(h)}.
\end{align*}    
Let $g^+(\Xb,\yb) = \sup_{h \in \Hred}: L(h) - \wh{L}(h)$ and 
$g^-(\Xb,\yb) = \sup_{h \in \Hred}: - L(h) + \wh{L}(h)$. Then,
\[
\PP\sbr{L(\wh{h}^{dac}) - L(h^*) \geq \epsilon} \leq
\PP\sbr{g^+(\Xb,\yb) \geq \frac{\eps}{2}} + 
\PP\sbr{g^-(\Xb,\yb) \geq \frac{\eps}{2} }.
\]
We will derive a tail bound for $g^+(\Xb,\yb)$ with the standard inequalities and symmetrization argument \cite{wainwright2019,bartlett2003}, while the analogous statement holds for $g^-(\Xb,\yb)$.

Let $(\Xb^{(1)}, \yb^{(1)})$ be a sample set generated by replacing an arbitrary sample in $(\Xb, \yb)$ with an independent sample $(\xb, y) \sim P(\xb,y)$. Since $l$ is $B$-bounded, we have $\abbr{g^+(\Xb, \yb) - g^+(\Xb^{(1)}, \yb^{(1)})} \leq B/N$. Then, via McDiarmid's inequality \cite{bartlett2003},
\[
    \PP\sbr{g^+(\Xb, \yb) \geq \E[g^+(\Xb, \yb)] + t} 
    \leq \exp\rbr{-\frac{2 N t^2}{B^2}}.
\]
For an arbitrary sample set $\rbr{\Xb,\yb}$, let $\wh{L}_{\rbr{\Xb,\yb}}\rbr{h} = \frac{1}{N} \sum_{i=1}^N l\rbr{h(\xb_i),y_i}$ be the empirical risk of $h$ with respect to $\rbr{\Xb,\yb}$.
Then, by a classical symmetrization argument (e.g., proof of \cite{wainwright2019} Theorem 4.10), we can bound the expectation: for an independent sample set $\rbr{\Xb',\yb'} \in \Xcal^N \times \Ycal^N$ drawn $\iid$ from $\Pgt$, 
\begin{align*}
    \E\sbr{g^+(\Xb, \yb)} 
    = & \E_{(\Xb,\yb)} \sbr{ \sup_{h \in \Hred}\ L(h) - \wh{L}_{(\Xb,\yb)}(h)}
    \\
    = & \E_{(\Xb,\yb)} \sbr{ \sup_{h \in \Hred}\ \E_{(\Xb',\yb')} \sbr{\wh L_{(\Xb',\yb')}(h)} - \wh{L}_{(\Xb,\yb)}(h)}
    \\
    = & \E_{(\Xb,\yb)} \sbr{ \sup_{h \in \Hred}\ \E_{(\Xb',\yb')} \sbr{\wh L_{(\Xb',\yb')}(h) - \wh{L}_{(\Xb,\yb)}(h) ~\middle|~ \rbr{\Xb,\yb}} }
    \\
    \leq & \E_{(\Xb,\yb)} \sbr{ \E_{(\Xb',\yb')} \sbr{ \sup_{h \in \Hred}\ \wh L_{(\Xb',\yb')}(h) - \wh{L}_{(\Xb,\yb)}(h) ~\middle|~ \rbr{\Xb,\yb}} } 
    \\
    & \rbr{\t{Law of iterated conditional expectation}}
    \\
    = & \E_{\rbr{\Xb,\yb,\Xb',\yb'}} \sbr{\sup_{h \in \Hred}\ \wh L_{(\Xb',\yb')}(h) - \wh{L}_{(\Xb,\yb)}(h) }
\end{align*}
Since $\rbr{\Xb,\yb}, \rbr{\Xb',\yb'}$ are drawn $\iid$ from $\Pgt$, we can introduce $\iid$ Rademacher random variables $\rb = \cbr{r_i \in \cbr{-1,+1} ~|~ i \in [N]}$ (independent of both $\rbr{\Xb,\yb}$ and $\rbr{\Xb',\yb'}$) such that
\begin{align*}
    \E\sbr{g^+(\Xb, \yb) } 
    \leq & \E_{\rbr{\Xb,\yb,\Xb',\yb',\rb}} \sbr{\sup_{h \in \Hred}\ \frac{1}{N} \sum_{i=1}^N r_i \cdot \rbr{l\rbr{h\rbr{\xb'_i},y'_i} - l\rbr{h\rbr{\xb_i},y_i}} }
    \\
    \leq & 2\ E_{\rbr{\Xb,\yb,\rb}} \sbr{ \sup_{h \in \Hred}\ \frac{1}{N} \sum_{i=1}^N r_i \cdot l\rbr{h\rbr{\xb_i},y_i} }
    \\
    \leq & 2\ \Rfrak_N\rbr{l \circ \Hred}
\end{align*}
where $l \circ \Hred = \cbr{l(h(\cdot), \cdot): \Xcal \times \Ycal \to \R: h \in \Hred}$ is the loss function class, and 
\begin{align*}
    \Rfrak_N\rbr{\Fcal} 
    \triangleq E_{\rbr{\Xb,\yb,\rb}} \sbr{ \sup_{f \in \Fcal}\ \frac{1}{N} \sum_{i=1}^N r_i \cdot f\rbr{\xb_i,y_i}}
\end{align*}
denotes the Rademacher complexity. 
Analogously, $\E[g^-(\Xb, \yb)] \leq 2\Rfrak_N\rbr{l \circ \Hred}$. 
Therefore, assuming that $\dacop(\Hcal) \subseteq \Hred(\Hcal)$ holds, with probability at least $1-\delta/2$,
\[
L(\wh{h}^{dac}) - L(h^*) \leq 4 \Rfrak_N\rbr{l \circ \Hred} + \sqrt{\frac{2 B^2 \log(4/\delta)}{N}}
\]

Finally, since $l(\cdot, y)$ is $C_l$-Lipschitz for all $y \in \Ycal$, by \cite{ledoux2013} Theorem 4.12, we have $\Rfrak_N\rbr{l \circ \Hred} \leq C_l \cdot \Rfrak_N\rbr{\Hred} $.
\end{proof}

\begin{lemma}
[\cite{cai2021theory}, Lemma A.1]
\label{lemma:minority-set-upper-bound-expansion-assumption}
For any $h \in \Hred$, when $\Pgt$ satisfies
\begin{enumerate}[label=(\alph*), nosep]
    \item $\rbr{q, 2\mu}$-constant expansion with $q < \frac{1}{2}$, $\Pgt\rbr{M} \leq \max\cbr{q, 2\mu}$;
    \item $\rbr{\frac{1}{2},c}$-multiplicative expansion with $c>1+4\mu$, $\Pgt\rbr{M} \leq \max\cbr{\frac{2\mu}{c-1},2\mu}$.
\end{enumerate}
\end{lemma}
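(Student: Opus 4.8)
\textbf{Proof proposal for \Cref{lemma:minority-set-upper-bound-expansion-assumption}.}
The plan is to run the standard expansion-versus-robustness argument (as in \cite{cai2021theory,wei2021theoretical}). Fix $h \in \Hred$, let $\cbr{M_k}_{k \in [K]}$, $M$, and $\wh{y}_k$ be as defined above, and introduce the \textbf{non-robust set}
$R \dfeq \csepp{\xb \in \Xcal}{\exists\, \xb' \in \Acal(\xb),\ h(\xb) \neq h(\xb')}$.
Since $h \in \Hred$ enforces consistency on the unlabeled samples and $\mu$ is defined as the supremal population inconsistency probability over $\Hred$, we immediately have $\Pgt(R) \le \mu$. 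First I would record the elementary closure fact: by class-invariance of $\Acal$ (\Cref{def:generalized-causal-invariant-data-augmentation}), $\xb'' \in \nbh(\xb)$ forces $h^*(\xb'') = h^*(\xb)$; and if moreover $\xb, \xb'' \notin R$, then choosing $\zb \in \Acal(\xb) \cap \Acal(\xb'')$ gives $h(\xb) = h(\zb) = h(\xb'')$. Consequently, for each $k$, every point of $\nbh(M_k \setminus R) \cap \Xcal_k$ is either still a robust minority point of class $k$ or non-robust, i.e.
\begin{align*}
    \nbh(M_k \setminus R) \cap \Xcal_k \ \subseteq\ \rbr{M_k \setminus R} \sqcup \rbr{R \cap \Xcal_k},
\end{align*}
so that $\Pgt\rbr{\nbh(M_k \setminus R) \cap \Xcal_k} \le \Pgt(M_k \setminus R) + \Pgt(R \cap \Xcal_k)$. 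This ``upper bound on how far the minority set can expand'' is the only place the structure of $h$ and $\Acal$ enters; everything afterward is combining it with the expansion hypothesis.

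For part (b), I would apply $\rbr{\tfrac{1}{2}, c}$-multiplicative expansion to $S_k \dfeq M_k \setminus R$ for each $k$ with $\Pgt(S_k) \le \tfrac12$; the degenerate classes with $\Pgt(M_k) > \tfrac12$, of which there is at most one, are handled separately using that $\wh{y}_k$ is by definition the $h$-majority label on $\Xcal_k$. Expansion yields $\min\cbr{c\,\Pgt(S_k), 1} \le \Pgt\rbr{\nbh(S_k) \cap \Xcal_k}$, which together with the closure bound gives $\min\cbr{c\,\Pgt(S_k), 1} \le \Pgt(S_k) + \Pgt(R \cap \Xcal_k)$. Summing over $k$ in the non-saturated regime gives $(c-1)\,\Pgt(M \setminus R) \le \Pgt(R) \le \mu$, hence $\Pgt(M) \le \Pgt(M \setminus R) + \Pgt(R) \le \tfrac{\mu}{c-1} + \mu$, and a short case split on whether $c \le 2$ shows $\tfrac{\mu}{c-1} + \mu \le \max\cbr{\tfrac{2\mu}{c-1}, 2\mu}$; the saturated regime (some $\min$ equal to $1$) forces $\Pgt(R \cap \Xcal_k)$ to be large, again giving $\Pgt(M) \le 2\mu$. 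For part (a) I would instead apply $\rbr{q, 2\mu}$-constant expansion to the single global set $S \dfeq M \setminus R$: assuming $\Pgt(M) > q$ (else the claim is trivial) and, toward a contradiction, $\Pgt(M) > 2\mu$, one verifies the hypotheses $\Pgt(S) \ge q$ and $\Pgt(S \cap \Xcal_k) \le \tfrac12$, obtains $\Pgt\rbr{\nbh(S)} \ge \min\cbr{\Pgt(S), 2\mu} + \Pgt(S)$, and compares with the closure bound $\Pgt\rbr{\nbh(S)} = \sum_{k} \Pgt\rbr{\nbh(S) \cap \Xcal_k} \le \Pgt(S) + \Pgt(R)$; since $\Pgt(S) \ge \Pgt(M) - \mu > \mu \ge \Pgt(R)$, this is a contradiction, so $\Pgt(M) \le \max\cbr{q, 2\mu}$.

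The routine parts (the closure fact, the arithmetic, the two-case split on $c$) are short. The main obstacle is the careful bookkeeping of the \emph{degenerate and saturated cases} --- verifying the per-class hypothesis $\Pgt(S_k) \le \tfrac12$ and the total-mass hypothesis $\Pgt(S) \ge q$ needed to invoke the expansion conditions, which is precisely where the majority-label structure of $\wh{y}_k$ must be exploited --- together with reconciling the slightly different constants these cases produce into the stated single bound $\max\cbr{\cdot,\cdot}$.
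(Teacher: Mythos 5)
Your route is genuinely different from the paper's, but it has a concrete gap in part (a). You apply the $\rbr{q,2\mu}$-constant expansion to the robust minority set $S \triangleq M\setminus R$ and control its neighborhood by the (correct) closure fact $\nbh\rbr{M_k\setminus R}\cap\Xcal_k \subseteq \rbr{M_k\setminus R}\cup\rbr{R\cap\Xcal_k}$, paying only one $\mu$. However, the hypothesis $\Pgt\rbr{S}\ge q$ that you say "one verifies" does not follow from your contradiction assumption: $\Pgt\rbr{M}>\max\cbr{q,2\mu}$ only gives $\Pgt\rbr{S}\ge \Pgt\rbr{M}-\Pgt\rbr{R} > q-\mu$. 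In the regime $q-\mu < \Pgt\rbr{M\setminus R} < q \le \Pgt\rbr{M}$ (e.g.\ $q=0.4$, $\mu=0.1$, $\Pgt\rbr{M}=0.45$, $\Pgt\rbr{R}=0.1$) the expansion property cannot be invoked on $S$ at all, and your argument only yields the weaker bound $\Pgt\rbr{M}\le\max\cbr{q+\mu,\,2\mu}$. You also cannot repair this by applying expansion to $M$ itself, because your one-$\mu$ neighborhood bound fails for the non-robust minority points $M\cap R$, whose neighbors may be robust majority points. This is exactly the step the paper handles differently: it applies the expansion directly to $M$ (whose mass is $\ge q$ in the nontrivial case) and instead bounds the new mass by $\Pgt\rbr{\nbh\rbr{M}\setminus M}\le 2\mu$ via the observation that for each such point either the point itself or its minority witness in $M_k$ is inconsistent---which is precisely where the factor $2$ in $2\mu$ comes from, and which your closure fact (valid only for robust minority points) cannot reproduce.

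For part (b) the paper does not argue directly at all: it invokes \cite{wei2021theoretical} Lemma B.6 to convert $\rbr{\tfrac12,c}$-multiplicative expansion (with $c>1+4\mu$) into $\rbr{\tfrac{2\mu}{c-1},2\mu}$-constant expansion and then reuses part (a) with $q=\tfrac{2\mu}{c-1}$. Your direct per-class argument is fine in the generic case---summing $(c-1)\Pgt\rbr{M_k\setminus R}\le\Pgt\rbr{R\cap\Xcal_k}$ over $k$ gives $\Pgt\rbr{M}\le\tfrac{\mu}{c-1}+\mu\le\max\cbr{\tfrac{2\mu}{c-1},2\mu}$, and your saturated case is indeed harmless since $c\,\Pgt\rbr{S_k}\ge1$ together with $\Pgt\rbr{S_k}\le\tfrac12$ forces $\Pgt\rbr{R\cap\Xcal_k}\ge\tfrac12$. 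But the ``degenerate'' class with $\Pgt\rbr{M_k\setminus R}>\tfrac12$, where the expansion hypothesis fails, is not dispatched by the majority-label property alone: that property only guarantees $\Pgt\rbr{\Xcal_k\setminus M_k}\ge\Pgt\rbr{\Xcal_k}/K$, and after applying expansion to a measure-$\tfrac12$ subset of $S_k$ one only reaches $\mu\ge\Pgt\rbr{\Xcal_k}/K$, which for $K\ge 4$ does not imply the stated bound. So both halves of your proposal are missing an idea precisely at the tight constant, even though the closure fact and the arithmetic are correct.
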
 

\begin{proof}
[Proof of \Cref{lemma:minority-set-upper-bound-expansion-assumption}]
We start with the proof for \Cref{lemma:minority-set-upper-bound-expansion-assumption} (a).
By definition of $M_k$ and $\wh{y}_k$, we know that $M_k = M \cap \Xcal_k \leq \frac{1}{2}$. Therefore, for any $0 < q < \frac{1}{2}$, one of the following two cases holds:
\begin{enumerate}[label=(\roman*), nosep]
    \item $\Pgt\rbr{M} < q$;
    \item $\Pgt\rbr{M} \geq q$.
    Since $\Pgt\rbr{M \cap \Xcal_k} < \frac{1}{2}$ for all $k \in [K]$ holds by construction, with the $\rbr{q, 2\mu}$-constant expansion, $\Pgt\rbr{\nbh\rbr{M}} \geq \min\cbr{\Pgt\rbr{M}, 2\mu} + \Pgt\rbr{M}$.
    
    Meanwhile, since the ground truth classifier $h^*$ is invariant throughout the neighborhoods, $\nbh\rbr{M_k} \cap \nbh\rbr{M_{k'}} = \emptyset$ for $k \neq k'$, and therefore $\nbh\rbr{M} \backslash M = \bigcup_{k=1}^K \nbh\rbr{M_k} \backslash M_k$ with each $\nbh\rbr{M_k} \backslash M_k$ disjoint.
    Then, we observe that for each $\xb \in \nbh\rbr{M} \backslash M$, here exists some $k = h^*\rbr{\xb}$ such that $\xb \in \nbh\rbr{M_k} \backslash M_k$.
    $\xb \in \Xcal_k \backslash M_k$ implies that $h\rbr{\xb} = \wh{y}_k$, while
    $\xb \in \nbh\rbr{M_k}$ suggests that there exists some $\xb' \in \Acal\rbr{\xb} \cap \Acal\rbr{\xb''}$ where $\xb'' \in M_k$ such that either 
    $h\rbr{\xb'} = \wh{y}_k$ and $h\rbr{\xb'} \neq h\rbr{\xb''}$ for $\xb' \in \Acal\rbr{\xb''}$, 
    or 
    $h\rbr{\xb'} \neq \wh{y}_k$ and $h\rbr{\xb'} \neq h\rbr{\xb}$ for $\xb' \in \Acal\rbr{\xb}$. Therefore, we have 
    \[
    \Pgt\rbr{\nbh\rbr{M} \backslash M} \leq 
    2 \PP_{\Pgt}\sbr{\exists\ \xb' \in \Acal(\xb)\ \t{s.t.}\ h(\xb) \neq h(\xb')} \leq 2\mu.
    \]
    Moreover, since $\Pgt\rbr{\nbh\rbr{M}} - \Pgt\rbr{M} \leq \Pgt\rbr{\nbh\rbr{M} \backslash M} \leq 2 \mu$, we know that 
    \[
    \min\cbr{\Pgt\rbr{M}, 2\mu} + \Pgt\rbr{M} \leq 
    \Pgt\rbr{\nbh\rbr{M}} \leq 
    \Pgt\rbr{M} + 2 \mu. 
    \]
    That is, $\Pgt\rbr{M} \leq 2 \mu$.
\end{enumerate}
Overall, we have $\Pgt\rbr{M} \leq \max\cbr{q, 2\mu}$.

To show \Cref{lemma:minority-set-upper-bound-expansion-assumption} (b), we recall from \cite{wei2021theoretical} Lemma B.6 that for any $c>1+4\mu$, $\rbr{\frac{1}{2},c}$-multiplicative expansion implies $\rbr{\frac{2\mu}{c-1}, 2\mu}$-constant expansion. 
Then leveraging the proof for \Cref{lemma:minority-set-upper-bound-expansion-assumption} (a), with $q = \frac{2\mu}{c-1}$, we have $\Pgt\rbr{M} \leq \max\cbr{\frac{2\mu}{c-1}, 2\mu}$.
\end{proof}

\begin{proof}
[Proof of \Cref{thm:generalized-dac-finite-unlabeled}]
To show \Cref{eq:generalization-bound-generalized-dac-finite}, we leverage \Cref{lemma:rademacher_generalization_bound} and observe that $B = 1$ with the zero-one loss. Therefore, conditioned on $\Hred$ (which depends only on $\Xb^u$ but not on $\Xb$), for any $\delta \in (0,1)$, with probability at least $1-\delta/2$,
\begin{align*}
    L_{01}\rbr{\hgdacfin} - L_{01}\rbr{h^*} \leq 
    4 \Rfrak_N\rbr{l_{01} \circ \Hred} + \sqrt{\frac{2 \log(4/\delta)}{N}}
.\end{align*}
For the upper bounds of the Rademacher complexity, let $\wt{\mu} \triangleq \sup_{h \in \Hred} \Pgt\rbr{M}$ where $M$ denotes the global minority set with respect to $h \in \Hred$.
\Cref{lemma:minority-set-upper-bound-expansion-assumption} suggests that
\begin{enumerate}[label=(\alph*), nosep]
    \item when $\Pgt$ satisfies $(q,2\mu)$-constant expansion for some $q < \frac{1}{2}$, $\wt{\mu} \leq \max\cbr{q,2\mu}$; while
    \item when $\Pgt$ satisfies $(\frac{1}{2},c)$-multiplicative expansion for some $c > 1+4\mu$,
    $\wt{\mu} \leq \frac{2\mu}{\min\cbr{c-1,1}}$.
\end{enumerate}
Then, it is sufficient to show that, conditioned on $\Hred$, 
\begin{align}\label{eq:pf_Rndac}
    \Rfrak_N\rbr{l_{01} \circ \Hred} \leq \sqrt{\frac{2K \log(2N)}{N} + 2\wt{\mu}}. 
\end{align}    

To show this, we first consider a fixed set of $n$ observations in $\Xcal$, $\Xb = \sbr{\xb_1,\dots,\xb_N}^{\top} \in \Xcal^N$. 
Let the number of distinct behaviors over $\Xb$ in $\Hred$ be
\begin{align*}
    \mathfrak{s}\rbr{l_{01} \circ \Hred, \Xb} \triangleq
    \abbr{\cbr{\sbr{l_{01} \circ h\rbr{\xb_1},\dots,l_{01} \circ h\rbr{\xb_N}} ~\big|~ h \in \Hred}}.
\end{align*}
Then, by the Massart's finite lemma, the empirical rademacher complexity with respect to $\Xb$ is upper bounded by
\begin{align*}
    \wh{\Rfrak}_{\Xb}\rbr{l_{01} \circ \Hred} \leq \sqrt{\frac{2 \log \mathfrak{s}\rbr{l_{01} \circ \Hred, \Xb}}{N}}.
\end{align*}
By the concavity of $\sqrt{\log\rbr{\cdot}}$, we know that, 
\begin{align}
\label{eq:rademacher-upper-from-shatter}
    \Rfrak_N\rbr{l_{01} \circ \Hred} = 
    & \E_{\Xb} \sbr{\wh{\Rfrak}_{\Xb}\rbr{l_{01} \circ \Hred}} \leq 
    \E_{\Xb} \sbr{\sqrt{\frac{2 \log \mathfrak{s}\rbr{l_{01} \circ \Hred, \Xb}}{N}}} 
    \nonumber \\ \leq & 
    \sqrt{\frac{2 \log \E_{\Xb} \sbr{\mathfrak{s}\rbr{l_{01} \circ \Hred, \Xb}}}{N}}.
\end{align}    
Since $\Pgt\rbr{M} \leq \wt{\mu} \leq \frac{1}{2}$ for all $h \in \Hred$, we have that, conditioned on $\Hred$,
\begin{align*}
    \E_{\Xb} \sbr{\mathfrak{s}\rbr{l_{01} \circ \Hred, \Xb}} \leq & 
    \sum_{r=0}^N \binom{N}{r} \wt{\mu}^r \rbr{1-\wt{\mu}}^{N-r} \cdot \binom{N-r-1}{\min(K,N-r)-1} 2^{K+r}
    \\ \leq &
    (2N)^K \sum_{r=0}^N \binom{N}{r} \rbr{2\wt{\mu}}^r \rbr{1-\wt{\mu}}^{N-r}
    \\ = &
    (2N)^K \rbr{1-\wt{\mu} + 2\wt{\mu}}^N
    \\ \leq &
    (2N)^K \cdot e^{N \wt{\mu}}.
\end{align*}    
Plugging this into \Cref{eq:rademacher-upper-from-shatter} yields \Cref{eq:pf_Rndac}. Finally, the randomness in $\Hred$ is quantified by $\wt\mu, \mu$, and upper bounded by \Cref{prop:non-robust-upper-bound}. 
\end{proof}

\section{Supplementary Application: Domain Adaptation}\label{apx:case_ood}

As a supplementary example, we demonstrate the possible failure of DA-ERM, and alternatively how DAC regularization can serve as a remedy. Concretely, we consider an illustrative linear regression problem in the domain adaptation setting: with training samples drawn from a source distribution $P^s$ and generalization (in terms of excess risk) evaluated over a related but different target distribution $P^t$. 
With distinct $\E_{P^s}\sbr{y|\xb}$ and $\E_{P^t}\sbr{y|\xb}$, we assume the existence of an unknown but unique inclusionwisely maximal invariant feature subspace $\Xcal_r \subset \Xcal$ such that $P^s\sbr{y|\xb \in \Xcal_r} = P^t\sbr{y|\xb \in \Xcal_r}$, we aim to demonstrate the advantage of the DAC regularization over the ERM on augmented training set, with a provable separation in the respective excess risks.

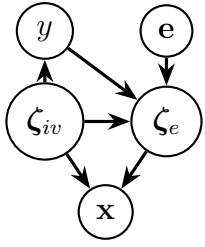
\begin{figure}[!ht]
    \begin{center}
    \begin{tikzpicture}
    \begin{scope}[every node/.style={circle,thick,draw}]
        \node (x) at (0,0) {$\xb$};
        \node (xc) at (-0.8,1.2) {$\zetab_{iv}$};
        \node (xe) at (0.8,1.2) {$\zetab_e$};
        \node (e) at (0.8,2.4) {$\eb$};
        \node (y) at (-0.8,2.4) {$y$};
    \end{scope}
    
    \begin{scope}[>={Stealth[black]},
                  every node/.style={fill=white,circle},
                  every edge/.style={draw=black,very thick}]
        \path [->] (xc) edge (y);
        \path [->] (y) edge (xe);
        \path [->] (xc) edge (xe);
        \path [->] (e) edge (xe);
        \path [->] (xc) edge (x);
        \path [->] (xe) edge (x);
    \end{scope}
    \end{tikzpicture}
    \end{center}
    \caption{Causal graph shared by $P^s$ and $P^t$.}
    \label{fig:data-distribution-causal-graph}
\end{figure}

\paragraph{Source and target distributions.}
Formally, the source and target distributions are concretized with the causal graph in \Cref{fig:data-distribution-causal-graph}. For both $P^s$ and $P^t$, the observable feature $\xb$ is described via a linear generative model in terms of two latent features, the `invariant' feature $\zetab_{iv} \in \R^{d_{iv}}$ and the `environmental' feature $\zetab_e \in \R^{d_e}$:
\begin{align*}
    \xb = g(\zetab_{iv}, \zetab_e) \triangleq \Sb \bmat{\zetab_{iv}; \zetab_e} = \Sb_{iv} \zetab_{iv}  + \Sb_e \zetab_e,
\end{align*}
where $\Sb = \bmat{\Sb_{iv},\Sb_e} \in \R^{d \times (d_{iv} + d_e)}$ ($d_{iv}+d_e \leq d$) consists of orthonormal columns.
Let the label $y$ depends only on the invariant feature $\zetab_{iv}$ for both domains,
\begin{align*}
    y = \rbr{\thetab^{*}}^{\top} \xb + z 
    = \rbr{\thetab^{*}}^{\top} \Sb_{iv} \zetab_{iv} + z,
    \quad 
    z \sim \Ncal\rbr{0, \sigma^2}, 
    \quad
    z \perp \zetab_{iv},
\end{align*}
for some $\thetab^* \in \range\rbr{\Sb_{iv}}$ such that $P^s\sbr{y|\zetab_{iv}} = P^t\sbr{y|\zetab_{iv}}$, while the environmental feature $\zetab_e$ is conditioned on $y$, $\zetab_{iv}$, (along with the Gaussian noise $z$), and varies across different domains $\eb$ with $\E_{P^s}\sbr{y|\xb} \neq \E_{P^t}\sbr{y|\xb}$. In other words, with the square loss $l(h(\xb), y) = \frac{1}{2}(h(\xb)-y)^2$, the optimal hypotheses that minimize the expected excess risk over the source and target distributions are distinct. Therefore, learning via the ERM with training samples from $P^s$ can overfit the source distribution, in which scenario identifying the invariant feature subspace $\range\rbr{\Sb_{iv}}$ becomes indispensable for achieving good generalization in the target domain.

For $P^s$ and $P^t$, we assume the following regularity conditions: 
\begin{assumption}[Regularity conditions for $P^s$ and $P^t$]
\label{ass:case_ood_distribution}
Let $P^s$ satisfy \Cref{ass:observable_marginal_distribution}. While $P^t$ satisfies that $\E_{P^t}[\xb\xb^{\top}] \succ 0$, and
\begin{enumerate}[label=(\alph*),nosep,leftmargin=*]
    \item for the invariant feature, $c_{t,iv} \Ib_{d_{iv}} \aleq \E_{P^t}[\zetab_{iv} \zetab_{iv}^{\top}] \aleq C_{t,iv} \Ib_{d_{iv}}$ for some $C_{t,iv} \geq c_{t,iv} = \Theta(1)$; 
    \item for the environmental feature, $\E_{P^t}[\zetab_{e} \zetab_{e}^{\top}] \ageq c_{t,e} \Ib_{d_e}$ for some $c_{t,e} > 0$, and $\E_{P^t}\sbr{z \cdot \zetab_e} = \b{0}$.
\end{enumerate}
\end{assumption}

\paragraph{Training samples and data augmentations.}
Let $\Xb=\sbr{\xb_1;\dots;\xb_N}$ be a set of $N$ samples drawn $\iid$ from $P^s(\xb)$ such that $\yb=\Xb\thetab^* + \zb$ where $\zb \sim \Ncal(\b0,\sigma^2\Ib_N)$.
Recall that we denote the augmented training sets, including/excluding the original samples, respectively, with
\begin{align*}
    &\wt\Acal(\Xb) = \sbr{\xb_{1}; \cdots; \xb_{N}; \xb_{1,1}; \cdots; \xb_{N,1}; \cdots; \xb_{1, \alpha}; \cdots; \xb_{N, \alpha}} \in \Xcal^{(1+\alpha) N}, 
    \\
    &\Aemp(\Xb) = \sbr{\xb_{1,1}; \cdots; \xb_{N,1}; \cdots; \xb_{1, \alpha}; \cdots; \xb_{N, \alpha}} \in \Xcal^{\alpha N}.
\end{align*}
In particular, we consider a set of augmentations that only perturb the environmental feature $\zetab_e$, while keep the invariant feature $\zetab_{iv}$ intact:
\begin{align}\label{eq:def_envi_data_aug}
    \Sb_{iv}^{\top} \xb_i = \Sb_{iv}^{\top} \xb_{i,j},
    \quad
    \Sb_{e}^{\top} \xb_i \neq \Sb_{e}^{\top} \xb_{i,j}
    \quad
    \forall\ i \in [n],\ j \in [\alpha].
\end{align}
We recall the notion $\Deltab\triangleq\Aemp\rbr{\Xb}-\Mb\Xb$ such that $\dau \triangleq \rank\rbr{\Deltab} = \rank\rbr{\wt\Acal\rbr{\Xb}-\wt\Mb\Xb}$ ($0 \leq \dau \leq d_e$), and assume that $\Xb$ and $\Aemp(\Xb)$ are representative enough:
\begin{assumption}[Diversity of $\Xb$ and $\Aemp(\Xb)$]
\label{ass:case_ood_sample}
$(\Xb,\yb) \in \Xcal^n \times \Ycal^n$ is sufficiently large with $n \gg \rho^4 d$, $\thetab^* \in \row(\Xb)$, and $\dau = d_e$.
\end{assumption}

\paragraph{Excess risks in target domain.}
Learning from the linear hypothesis class $\Hcal = \csepp{h(\xb) = \xb^{\top}\thetab}{\thetab \in \R^{d}}$, with the DAC regularization on $h\rbr{\xb_i}=h\rbr{\xb_{i,j}}$, we have
\begin{align*}
    &\wh{\thetab}^{dac}\ =\ 
    \underset{\thetab \in \Hred}{\argmin}\
    \frac{1}{2N} \norm{\yb - \Xb\thetab}_2^2,
    \quad 
    \Hred = 
    \csepp{h\rbr{\xb}= \thetab^{\top} \xb} 
    {\Deltab\thetab = \b0},
\end{align*}
while with the ERM on augmented training set,
\begin{align*}
    \wh{\thetab}^{\herm}\ =\ 
    & \underset{\thetab \in \R^{d}}{\argmin}\ 
    \frac{1}{2 (1+\alpha) N} \norm{\wt\Mb\yb - \wt\Acal(\Xb)\thetab}_2^2,
\end{align*}    
where $\Mb$ and $\wt\Mb$ denote the vertical stacks of $\alpha$ and $1+\alpha$ identity matrices of size $n \times n$, respectively as denoted earlier.

We are interested in the excess risk on $P^t$: $L_t\rbr{\thetab} - L_t\rbr{\thetab^*}$ where $L_t\rbr{\thetab} \triangleq \E_{P^t\rbr{\xb,y}} \sbr{\frac{1}{2} (y - \xb^{\top} \thetab)^2}$.
\begin{theorem}[Domain adaptation with DAC]
\label{thm:domain-adaption-data-aug-consistency}
Under \Cref{ass:case_ood_distribution}(a) and \Cref{ass:case_ood_sample}, $\wh{\thetab}^{dac}$ satisfies that, with constant probability,
\begin{align}
    \label{eq:domain-adaption-data-aug-consistency}
    \E_{P^s}\sbr{L_t(\wh{\thetab}^{dac}) - L_t(\thetab^*)} 
    \ \lesssim\ 
    \frac{\sigma^2 d_{iv}}{N}.
\end{align}
\end{theorem}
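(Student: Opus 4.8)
The plan is to reduce the target-domain excess risk of $\wh\thetab^{dac}$ to a standard fixed-design linear-regression analysis carried out \emph{on the invariant feature subspace only}. First I would use the DAC constraint $\Deltab \thetab = \b0$ together with \Cref{ass:case_ood_sample} to show that $\wh\thetab^{dac}$ is forced to lie in $\range\rbr{\Sb_{iv}}$. Indeed, since the augmentations perturb only the environmental feature (\Cref{eq:def_envi_data_aug}), the row space of $\Deltab = \Aemp\rbr{\Xb} - \Mb\Xb$ lies in $\range\rbr{\Sb_e}$; and because $\dau = \rank\rbr{\Deltab} = d_e$ under \Cref{ass:case_ood_sample}, we get $\row\rbr{\Deltab} = \range\rbr{\Sb_e}$ exactly. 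Hence the constraint $\Deltab\thetab = \b0$ is equivalent to $\Sb_e^\top \thetab = \b0$, i.e.\ $\thetab \in \range\rbr{\Sb_{iv}}$ (using $\range(\Sb_{iv}) \oplus \range(\Sb_e) \subseteq \R^d$ and that $\thetab \in \row(\Xb)$). Since $\thetab^* \in \range\rbr{\Sb_{iv}}$ by construction, $\thetab^*$ is feasible for the DAC problem.

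Second, I would reparametrize: write $\thetab = \Sb_{iv}\vbd$ for $\vbd \in \R^{d_{iv}}$ and $\thetab^* = \Sb_{iv}\vbd^*$, and observe that on this subspace the regression problem becomes $\yb = \Xb\Sb_{iv}\vbd^* + \zb$ with design matrix $\Xb_{iv} \dfeq \Xb\Sb_{iv} \in \R^{N \times d_{iv}}$ of rank $d_{iv}$ (almost surely, given $n \gg \rho^4 d$ and the nondegeneracy of the invariant feature). The DAC estimator is then exactly the ordinary least-squares solution in the $\vbd$-coordinates, $\wh\vbd^{dac} = \Xb_{iv}^{\pinv}\yb = \vbd^* + \Xb_{iv}^{\pinv}\zb$. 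Therefore $\wh\thetab^{dac} - \thetab^* = \Sb_{iv}\Xb_{iv}^{\pinv}\zb$, a vector supported in $\range\rbr{\Sb_{iv}}$ whose only randomness (conditionally on $\Xb$) is through $\zb \sim \Ncal(\b0,\sigma^2\Ib_N)$.

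Third, I would bound the target excess risk. Since $\thetab^* \in \range(\Sb_{iv})$ and $\wh\thetab^{dac}-\thetab^* \in \range(\Sb_{iv})$, and since on the invariant subspace $P^s$ and $P^t$ agree on $\E[y\mid\zetab_{iv}]$, a direct computation gives $L_t\rbr{\thetab} - L_t\rbr{\thetab^*} = \tfrac12 \rbr{\thetab - \thetab^*}^\top \E_{P^t}\sbr{\xb\xb^\top} \rbr{\thetab-\thetab^*}$ for any $\thetab \in \range(\Sb_{iv})$ (the cross terms involving $\zetab_e$ vanish because the perturbation lives in the invariant directions and $\E_{P^t}[z\cdot\zetab_e]=\b0$). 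Restricted to $\range(\Sb_{iv})$, \Cref{ass:case_ood_distribution}(a) bounds $\Sb_{iv}^\top \E_{P^t}[\xb\xb^\top] \Sb_{iv} = \E_{P^t}[\zetab_{iv}\zetab_{iv}^\top] \aleq C_{t,iv}\Ib_{d_{iv}}$. Then
\begin{align*}
    \E_{\zb}\sbr{L_t(\wh\thetab^{dac}) - L_t(\thetab^*)}
    &= \tfrac12 \E_{\zb}\sbr{\rbr{\wh\vbd^{dac}-\vbd^*}^\top \rbr{\Sb_{iv}^\top \E_{P^t}[\xb\xb^\top]\Sb_{iv}} \rbr{\wh\vbd^{dac}-\vbd^*}} \\
    &\le \tfrac{C_{t,iv}}{2}\, \E_{\zb}\sbr{\nbr{\Xb_{iv}^{\pinv}\zb}_2^2}
    = \tfrac{C_{t,iv}\sigma^2}{2}\, \tr\rbr{\rbr{\Xb_{iv}^\top\Xb_{iv}}^{-1}}.
\end{align*}
Finally, taking expectation over $\Xb \sim (P^s)^N$ and invoking \Cref{lemma:sample_to_population_covariance} (the source marginal satisfies \Cref{ass:observable_marginal_distribution}), with constant probability $\tfrac1N\Xb_{iv}^\top\Xb_{iv} = \tfrac1N\Sb_{iv}^\top\Xb^\top\Xb\Sb_{iv} \ageq \tfrac12\,\Sb_{iv}^\top\E_{P^s}[\xb\xb^\top]\Sb_{iv} \ageq \tfrac12\sigma_{\min}(\Sigmab_\xb)\Ib_{d_{iv}} = \Omega(1)\Ib_{d_{iv}}$, so $\tr\rbr{\rbr{\Xb_{iv}^\top\Xb_{iv}}^{-1}} = O\rbr{d_{iv}/N}$, yielding \Cref{eq:domain-adaption-data-aug-consistency}.

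The main obstacle I anticipate is the second step: pinning down precisely why $\row(\Deltab)$ is \emph{all} of $\range(\Sb_e)$ rather than merely contained in it — this is exactly where \Cref{ass:case_ood_sample} ($\dau = d_e$, together with $\thetab^* \in \row(\Xb)$ so that the feasible set intersects $\range(\Sb_{iv})$ nontrivially) does the work, and the argument must be careful about the ambient dimension $d$ possibly exceeding $d_{iv}+d_e$ (directions orthogonal to $\range(\Sb)$ carry no signal but could in principle appear in $\wh\thetab^{dac}$; the assumption $\thetab^*\in\row(\Xb)$ and the least-squares characterization rule this out). The remaining pieces — the bias/variance split, the target-domain covariance bound on the invariant subspace, and the sample-to-population covariance concentration — are routine given the lemmas already in the excerpt.
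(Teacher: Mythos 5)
Your proposal is correct and follows essentially the same route as the paper's proof: the DAC constraint together with $\dau=d_e$ collapses the hypothesis class onto $\range\rbr{\Sb_{iv}}$, the estimator becomes ordinary least squares in the invariant coordinates so that $\wh\thetab^{dac}-\thetab^*=\Sb_{iv}\Xb_{iv}^{\pinv}\zb$, the cross term in the target risk vanishes because the error lies in $\col\rbr{\Sb_{iv}}$, and the variance is bounded via $C_{t,iv}$ and sample-to-population covariance concentration. Two cosmetic notes: the paper invokes its Chapter~4 covariance lemma (\Cref{lemma:sample-population-covariance}) rather than \Cref{lemma:sample_to_population_covariance}, and your appeal to $\E_{P^t}\sbr{z\cdot\zetab_e}=\b0$ is unnecessary (and not assumed here) since $\Pb_e\rbr{\wh\thetab^{dac}-\thetab^*}=\b0$ already kills that term.
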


\begin{theorem}[Domain adaptation with ERM on augmented samples]
\label{thm:domain-adaption-plain-data-aug}
Under \Cref{ass:case_ood_distribution} and \Cref{ass:case_ood_sample}, $\wh{\thetab}^{dac}$ and $\wh{\thetab}^{\herm}$ satisfies that, 
\begin{align}
    \label{eq:domain-adaption-plain-data-aug-excess-risk}
    \E_{P^s}\sbr{L_t(\wh{\thetab}^{\herm}) - L_t(\thetab^*)} 
    \ \geq \ 
    \E_{P^s}\sbr{L_t(\wh{\thetab}^{dac}) - L_t(\thetab^*)}
    + c_{t,e} \cdot \EER_e,
\end{align}
for some $\EER_e > 0$.
\end{theorem}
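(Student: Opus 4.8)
The plan is to prove Theorem~\ref{thm:domain-adaption-plain-data-aug} by comparing the two estimators $\wh{\thetab}^{dac}$ and $\wh{\thetab}^{\herm}$ on the invariant and environmental subspaces separately, exploiting the orthogonal decomposition $\R^d = \range(\Sb_{iv}) \oplus \range(\Sb_e) \oplus (\range(\Sb))^{\perp}$ induced by the generative model $\xb = \Sb_{iv}\zetab_{iv} + \Sb_e\zetab_e$. First I would observe that, under \Cref{ass:case_ood_sample} (in particular $\dau = d_e$), the DAC constraint $\Deltab\thetab = \b{0}$ forces $\wh{\thetab}^{dac}$ to lie in $\range(\Sb_{iv})^{\perp,\text{env}}$, i.e. to have no component along $\range(\Sb_e)$; intuitively DAC projects out exactly the $d_e$-dimensional environmental subspace that augmentations perturb, reducing the effective linear regression to a $d_{iv}$-dimensional problem on the invariant feature, which yields \Cref{eq:domain-adaption-data-aug-consistency} in Theorem~\ref{thm:domain-adaption-data-aug-consistency} directly via \Cref{lemma:sample_to_population_covariance} and a standard fixed-design-to-random-design argument (with the $P^t$ covariance controlled by \Cref{ass:case_ood_distribution}(a)). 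For Theorem~\ref{thm:domain-adaption-plain-data-aug}, the point is that $\wh{\thetab}^{\herm}$ does \emph{not} enforce this constraint, so it carries a nonzero component $\Sb_e^{\top}\wh{\thetab}^{\herm}$ learned from the (source-domain) environmental feature, which is spurious for the target domain.

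The key steps in order: (1) Decompose the target excess risk of any $\thetab$ as $L_t(\thetab) - L_t(\thetab^*) = \tfrac{1}{2}\nbr{\thetab - \thetab^*}^2_{\E_{P^t}[\xb\xb^\top]}$ and split $\thetab - \thetab^*$ into its projection onto $\range(\Sb_{iv})$ and onto $\range(\Sb_e)$ (the component in $\range(\Sb)^{\perp}$ contributes nothing since $\xb \in \range(\Sb)$ a.s.); crucially $\thetab^* \in \range(\Sb_{iv})$ and $\Sb_e^{\top}\wh{\thetab}^{dac} = \b{0}$, so only $\wh{\thetab}^{\herm}$ has an environmental component. (2) Show that the invariant-subspace contribution of $\wh{\thetab}^{\herm}$ matches that of $\wh{\thetab}^{dac}$ up to lower-order terms: because the augmentations fix $\Sb_{iv}^{\top}\xb_{i,j} = \Sb_{iv}^{\top}\xb_i$ and the labels are unchanged, the normal equations for $\wh{\thetab}^{\herm}$ restricted to $\range(\Sb_{iv})$ reduce to the same least-squares system (reweighted by $1+\alpha$, which cancels) as for $\wh{\thetab}^{dac}$, giving the first term on the right of \Cref{eq:domain-adaption-plain-data-aug-excess-risk}. (3) Lower-bound the leftover environmental term $\tfrac{1}{2}\nbr{\Sb_e\Sb_e^{\top}\wh{\thetab}^{\herm}}^2_{\E_{P^t}[\xb\xb^\top]} \ge \tfrac{c_{t,e}}{2}\nbr{\Sb_e^{\top}\wh{\thetab}^{\herm}}_2^2$ using \Cref{ass:case_ood_distribution}(b) ($\E_{P^t}[\zetab_e\zetab_e^\top] \ageq c_{t,e}\Ib_{d_e}$ together with $\E_{P^t}[z\cdot\zetab_e]=\b0$ to remove cross terms), and identify $\EER_e \dfeq \tfrac{1}{2}\nbr{\Sb_e^{\top}\wh{\thetab}^{\herm}}_2^2 > 0$, which is strictly positive a.s. because $\wh{\thetab}^{\herm}$ genuinely fits the source-domain environmental feature (its conditional mean given $\xb$ on $P^s$ differs from $(\thetab^*)^\top\xb$) — here I would invoke \Cref{ass:case_ood_sample} and the absolute continuity of the sample distribution to rule out the measure-zero event $\Sb_e^\top\wh\thetab^{\herm}=\b0$.

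The main obstacle I anticipate is step (2): carefully decoupling the invariant and environmental components of the ERM solution $\wh{\thetab}^{\herm}$. The design matrix $\wt{\Acal}(\Xb)$ does not have block-orthogonal structure in the $\Sb_{iv}/\Sb_e$ split unless $\Sb_{iv}^{\top}\xb$ and $\Sb_e^{\top}\xb$ are (empirically) uncorrelated, which they are \emph{not} in general — the causal graph has $\zetab_{iv} \to \zetab_e$. So the honest statement is that the invariant component of $\wh{\thetab}^{\herm}$ picks up a bias-correction coupling term relative to $\wh{\thetab}^{dac}$, and I will need to absorb this into the $\EER_e$ term or argue it is dominated; the cleanest route is probably to write $\wh{\thetab}^{\herm}$ via the block-matrix inversion formula for the normal equations, identify $\Sb_e^{\top}\wh{\thetab}^{\herm}$ as a Schur-complement expression, and then lower-bound the target risk by \emph{only} the part orthogonal (in the $\E_{P^t}$ metric) to $\range(\Sb_{iv})$, so that the invariant-subspace cross terms are handled by a Pythagorean inequality rather than exact matching. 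This gives \Cref{eq:domain-adaption-plain-data-aug-excess-risk} as an inequality, which is exactly what the theorem claims, and sidesteps the need for an exact decomposition. The remaining routine work — bounding the $P^t$ covariance in terms of $c_{t,e}$, and verifying $\EER_e>0$ via absolute continuity — follows the same template as the fixed-design proofs in \Cref{apdx:linear_regression} and \Cref{apx:pf_case_2layer_relu}.
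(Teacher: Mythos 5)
Your overall skeleton (measure the target excess risk in the $\Sigmab_{\xb,t}$ metric, split over the invariant and environmental subspaces, weight the environmental component by $c_{t,e}$, argue $\EER_e>0$) is in the same spirit as the paper, but your step (2) contains a genuine gap that your own fallback does not repair. The theorem needs the invariant-subspace contribution of the ERM error to be at least $\E_{P^s}\sbr{L_t(\wh\thetab^{dac})-L_t(\thetab^*)}$. Your only argument for this is that the normal equations restricted to $\range(\Sb_{iv})$ coincide with those of $\wh\thetab^{dac}$, which fails precisely because $\wh{\Sigmab}_{\wt\Acal\rbr{\Xb}}$ has nonzero cross blocks between $\range(\Sb_{iv})$ and $\range(\Sb_e)$ — as you concede. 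The proposed remedy of keeping only the component $\Sigmab_{\xb,t}$-orthogonal to $\range(\Sb_{iv})$ via Pythagoras discards exactly the term that must produce $\E_{P^s}\sbr{L_t(\wh\thetab^{dac})-L_t(\thetab^*)}$ on the right-hand side of \Cref{eq:domain-adaption-plain-data-aug-excess-risk}; at best it yields $\E_{P^s}\sbr{L_t(\wh\thetab^{\herm})-L_t(\thetab^*)} \ge c_{t,e}\cdot\EER_e$, which is strictly weaker than the claim, and "absorbing the coupling into $\EER_e$" is not an argument since the cross term has no definite sign. A secondary omission of the same kind: \Cref{ass:case_ood_distribution} does not force $\E_{P^t}\sbr{\zetab_{iv}\zetab_e^\top}=\b0$, so even at the population level splitting $\thetab-\thetab^*$ into its $\Pb_{iv}$ and $\Pb_e$ components does not give an exact sum of two squared $\Sigmab_{\xb,t}$-norms, and your clean bound $\ge \frac{c_{t,e}}{2}\nbr{\Sb_e^\top\wh\thetab^{\herm}}_2^2$ applies only after the cross terms are controlled.

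The paper's proof avoids matching the two estimators on $\range(\Sb_{iv})$ altogether. It first exploits that the augmentations are label-invariant here: since $\thetab^*\in\col(\Sb_{iv})$ and $\wt\Acal(\Xb)\Pb_{iv}=\wt\Mb\Xb\Pb_{iv}$, one has $\wt\Acal(\Xb)\thetab^*=\wt\Mb\Xb\thetab^*$, so $\wh\thetab^{\herm}-\thetab^* = \frac{1}{(1+\alpha)N}\wh{\Sigmab}_{\wt\Acal\rbr{\Xb}}^{\dagger}\wt\Acal(\Xb)^\top\wt\Mb\zb$ is a pure noise term with no bias. It then lower-bounds the expected error-covariance matrix of this noise term by the sum of its $\Pb_{iv}$- and $\Pb_e$-projected blocks: the invariant block collapses (again via $\wt\Acal(\Xb)\Pb_{iv}=\wt\Mb\Xb\Pb_{iv}$) to $\sigma^2(1+\alpha)^2 N\,\E_{P^s}\sbr{\wh{\Sigmab}_{\Xb_{iv}}^{\dagger}}$, \ie exactly the error covariance of $\wh\thetab^{dac}$ from \Cref{thm:domain-adaption-data-aug-consistency}, while the environmental block traced against $\Sigmab_{\xb,t}$ is bounded below using $\E_{P^t}\sbr{\zetab_e\zetab_e^\top}\ageq c_{t,e}\Ib_{d_e}$, producing $c_{t,e}\cdot\EER_e$ with $\EER_e$ the expected squared norm of the environmental noise component. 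To close your gap you need a matrix/PSD inequality of this type that delivers the DAC term and the environmental term simultaneously from the single error expression, rather than exact agreement of the restricted least-squares systems; your step (3) and the positivity argument for $\EER_e$ are then fine in spirit.
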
   
In contrast to $\wh{\thetab}^{dac}$ where the DAC constraints enforce $\Sb_e^{\top} \wh{\thetab}^{dac} = \b{0}$ with a sufficiently diverse $\Aemp\rbr{\Xb}$ (\Cref{ass:case_ood_sample}), the ERM on augmented training set fails to filter out the environmental feature in $\wh{\thetab}^{\herm}$: $\Sb_e^{\top} \wh{\thetab}^{\herm} \neq \b{0}$. As a consequence, the expected excess risk of $\wh{\thetab}^{\herm}$ in the target domain can be catastrophic when $c_{t,e} \to \infty$, as instantiated by \Cref{example:ood}.

\paragraph{Proofs and instantiation.}
Recall that for $\Deltab \triangleq \Aemp(\Xb) - \Mb\Xb$, $\projnull \triangleq \Ib_d - \Deltab^{\dagger} \Deltab$ denotes the orthogonal projector onto the dimension-$(d-\dau)$ null space of $\Deltab$. 
Furthermore, let $\Pb_{iv} \triangleq \Sb_{iv} \Sb_{iv}^{\top}$ and $\Pb_{e} \triangleq \Sb_{e} \Sb_{e}^{\top}$ be the orthogonal projectors onto the invariant and environmental feature subspaces, respectively, such that $\xb = \Sb_{iv} \zetab_{iv}  + \Sb_e \zetab_e = \rbr{\Pb_{iv}+\Pb_e} \xb$ for all $\xb$.

\begin{proof}[Proof of \Cref{thm:domain-adaption-data-aug-consistency}]

By construction \Cref{eq:def_envi_data_aug}, $\Deltab\Pb_{iv} = \b{0}$, and it follows that $\Pb_{iv} \aleq \projnull$. 
Meanwhile from \Cref{ass:case_ood_sample}, $\dau=d_e$ implies that $\dim\rbr{\projnull} = d_{iv}$. 
Therefore, $\Pb_{iv} = \projnull$, and the data augmentation consistency constraints can be restated as
\begin{align*}
\Hred = \csepp{h\rbr{\xb} = \thetab^{\top} \xb}{\projnull\thetab = \thetab}
= 
\csepp{h\rbr{\xb} = \thetab^{\top} \xb}{\Pb_{iv} \thetab = \thetab}   
\end{align*}
Then with $\thetab^* \in \row(\Xb)$ from \Cref{ass:case_ood_sample}, 
\begin{align*}
    \wh{\thetab}^{dac} - \thetab^*
    = 
    \frac{1}{N} \wh{\Sigmab}_{\Xb_{iv}}^{\dagger} \Pb_{iv} \Xb^{\top} (\Xb \Pb_{iv} \thetab^* + \zb) - \thetab^*
    =
    \frac{1}{N} \wh{\Sigmab}_{\Xb_{iv}}^{\dagger} \Pb_{iv} \Xb^{\top} \zb,
\end{align*}
where $\wh{\Sigmab}_{\Xb_{iv}} \triangleq \frac{1}{N} \Pb_{iv} \Xb^{\top} \Xb \Pb_{iv}$.
Since $\wh{\thetab}^{dac} - \thetab^* \in \col\rbr{\Sb_{iv}}$, we have
$\E_{P^t}\sbr{z \cdot \xb^{\top} \Pb_e (\wh{\thetab}^{dac} - \thetab^*)} = 0$.
Therefore, let $\bs{\Sigma}_{\xb,t} \triangleq \E_{P^t}[\xb\xb^{\top}]$, with high probability,
\begin{align*}
    E_{P^s} \sbr{L_t(\wh{\thetab}^{dac}) - L_t(\thetab^*)} 
    = \ 
    & E_{P^s} \sbr{\frac{1}{2} \norm{\wh{\thetab}^{dac} - \thetab^*}_{\Sigmab_{\xb,t}}^2} 
    \\ = \ 
    & \tr \rbr{
    \frac{1}{2N} \E_{P^s}\sbr{\zb \zb^{\top}}\ 
    \E_{P^s}\sbr{\rbr{\frac{1}{N} \Pb_{iv} \Xb^{\top} \Xb \Pb_{iv}}^{\dagger}}
    \ \Sigmab_{\xb,t}}
    \\ = \ 
    & \tr \rbr{
    \frac{\sigma^2}{2N}\ 
    \E_{P^s} \sbr{\wh{\Sigmab}_{\Xb_{iv}}^{\dagger}}\ 
    \Sigmab_{\xb,t} }
    \\ \leq \ 
    & C_{t,iv}\ 
    \frac{\sigma^2}{2N}\ 
    tr \rbr{\E_{P^s} \sbr{\wh{\Sigmab}_{\Xb_{iv}}^{\dagger}}} \quad \rbr{\t{\Cref{lemma:sample-population-covariance}},\ \emph{w.h.p.}}
    \\ \lesssim \ 
    & \frac{\sigma^2}{2N} \tr\rbr{\rbr{\E_{P^s} \sbr{\Pb_{iv} \xb \xb^{\top} \Pb_{iv}}}^{\dagger}}
    \\ \leq \ 
    & \frac{\sigma^2 d_{iv}}{2N c} 
    \ \lesssim \  
    \frac{\sigma^2 d_{iv}}{2N}.
\end{align*}
\end{proof}

\begin{proof}[Proof of \Cref{thm:domain-adaption-plain-data-aug}]
Let $\wh{\Sigmab}_{\wt\Acal\rbr{\Xb}} \triangleq \frac{1}{(1+\alpha)N} \wt\Acal\rbr{\Xb}^{\top} \wt\Acal\rbr{\Xb}$.
Then with $\thetab^* \in \row(\Xb)$ from \Cref{ass:case_ood_sample}, we have $\thetab^* = \wh{\Sigmab}_{\wt\Acal\rbr{\Xb}}^{\dagger} \wh{\Sigmab}_{\wt\Acal\rbr{\Xb}} \thetab^*$. 
Since $\thetab^* \in \col\rbr{\Sb_{iv}}$, 
$\wt\Mb \Xb \thetab^* = \wt\Mb \Xb \Pb_{iv} \thetab^* = \wt\Acal(\Xb) \thetab^*$.
Then, the ERM on the augmented training set yields 
\begin{align*}
    \wh{\thetab}^{\herm} - \thetab^*
    \ = \ 
    & \frac{1}{(1+\alpha) N} \wh{\Sigmab}_{\wt\Acal\rbr{\Xb}}^{\dagger} \wt\Acal(\Xb)^{\top} \wt\Mb (\Xb \thetab^* + \zb) 
    -
    \wh{\Sigmab}_{\wt\Acal\rbr{\Xb}}^{\dagger} \wh{\Sigmab}_{\wt\Acal\rbr{\Xb}} \thetab^*
    \\ =  
    & \frac{1}{(1+\alpha) N} \wh{\Sigmab}_{\wt\Acal\rbr{\Xb}}^{\dagger} \wt\Acal(\Xb)^{\top} \wt\Mb \zb.
\end{align*}

Meanwhile with $\E_{P^t}\sbr{z \cdot \zetab_e} = \b{0}$ from \Cref{ass:case_ood_distribution}, we have $\E_{P^t}\sbr{z \cdot \Pb_e\xb} = \b{0}$. Therefore, by recalling that $\bs{\Sigma}_{\xb,t} \triangleq \E_{P^t}[\xb\xb^{\top}]$,
\begin{align*}
    L_t(\thetab) - L_t(\thetab^*)
    \ = \  
    \E_{P^t\rbr{\xb}}\sbr{\frac{1}{2} \rbr{\xb^{\top} (\thetab - \thetab^*)}^2 + 
    z \cdot \xb^{\top} \Pb_e (\thetab - \thetab^*)}
    \ = \ 
    \frac{1}{2} \norm{\thetab^* - \thetab}_{\Sigmab_{\xb,t}}^2,
\end{align*}
such that the expected excess risk can be expressed as
\begin{align*}
    \E_{P^s} \sbr{L_t(\wh{\thetab}^{\herm}) - L_t(\thetab^*)} 
    =  
    \frac{1}{2 (1+\alpha)^2 N^2} \tr \rbr{
    \E_{P^s}\sbr{
    \wh{\Sigmab}_{\wt\Acal\rbr{\Xb}}^{\dagger} 
    \rbr{\wt\Acal(\Xb)^{\top} \wt\Mb \zb \zb^{\top} \wt\Mb^{\top} \wt\Acal(\Xb) }
    \wh{\Sigmab}_{\wt\Acal\rbr{\Xb}}^{\dagger} }
    \Sigmab_{\xb,t} },
\end{align*}
where let $\wh{\Sigmab}_{\wt\Acal\rbr{\Xb_e}} \triangleq \Pb_e \wh{\Sigmab}_{\wt\Acal\rbr{\Xb}} \Pb_e$,
\begin{align*}
    & \E_{P^s}\sbr{ 
    \wh{\Sigmab}_{\wt\Acal\rbr{\Xb}}^{\dagger} 
    \rbr{\wt\Acal(\Xb)^{\top} \wt\Mb \zb \zb^{\top} \wt\Mb^{\top} \wt\Acal(\Xb) }
    \wh{\Sigmab}_{\wt\Acal\rbr{\Xb}}^{\dagger} }
    \\ \ageq \ 
    & \E_{P^s}\sbr{ \rbr{
    \Pb_{iv} \wh{\Sigmab}_{\wt\Acal\rbr{\Xb}}^{\dagger} \Pb_{iv}
    + \Pb_e \wh{\Sigmab}_{\wt\Acal\rbr{\Xb}}^{\dagger} \Pb_e }
    \wt\Acal(\Xb)^{\top} \wt\Mb\zb \zb^{\top} \wt\Mb^{\top} \wt\Acal(\Xb) 
    \rbr{ \Pb_{iv} \wh{\Sigmab}_{\wt\Acal\rbr{\Xb}}^{\dagger} \Pb_{iv} +
    \Pb_e \wh{\Sigmab}_{\wt\Acal\rbr{\Xb}}^{\dagger} \Pb_e
    } }
    \\ \ageq \ 
    & \sigma^2 (1+\alpha)^2 N \cdot 
    \E_{P^s}\sbr{\wh{\Sigmab}_{\Xb_{iv}}^{\dagger}}
    + 
    \E_{P^s}\sbr{ 
    \wh{\Sigmab}_{\wt\Acal\rbr{\Xb_e}}^{\dagger} 
    \wt\Acal(\Xb_e)^{\top} \wt\Mb \zb\zb^{\top} \wt\Mb^{\top} \wt\Acal(\Xb_e) 
    \wh{\Sigmab}_{\wt\Acal\rbr{\Xb_e}}^{\dagger} }.
\end{align*}
We denote 
\begin{align*}
    \EER_e \triangleq \ 
    \tr\rbr{
    \E_{P^s} \sbr{ \frac{1}{2 (1+\alpha)^2 N^2}
    \wh{\Sigmab}_{\wt\Acal\rbr{\Xb_e}}^{\dagger} 
    \wt\Acal(\Xb_e)^{\top} \wt\Mb \zb\zb^{\top} \wt\Mb^{\top} \wt\Acal(\Xb_e) 
    \wh{\Sigmab}_{\wt\Acal\rbr{\Xb_e}}^{\dagger} } },
\end{align*}
and observe that
\begin{align*}
    \EER_e = \E_{P^s} \sbr{ 
    \frac{1}{2} \norm{\frac{1}{(1+\alpha) N} 
    \wh{\Sigmab}_{\wt\Acal\rbr{\Xb_e}}^{\dagger} 
    \wt\Acal(\Xb_e)^{\top} \wt\Mb \zb }_2^2 } 
    > 0.
\end{align*}
Finally, we complete the proof by partitioning the lower bound for the target expected excess risk of $\wh{\thetab}^{\herm}$ into the invariantand environmental parts such that
\begin{align*}
    & \E_{P^s}\sbr{L_t(\wh{\thetab}^{\herm}) - L_t(\thetab^*)} 
    \\ \geq \ 
    & \underbrace{ \tr \rbr{\frac{\sigma^2}{2N}\ 
    \E_{P^s} \sbr{\wh{\Sigmab}_{\Xb_{iv}}^{\dagger}} 
    \Sigmab_{\xb,t} } 
    }_{=\E\sbr{L_t(\wh{\thetab}^{dac}) - L_t(\thetab^*)} }
    \ \\
    & + \
    \underbrace{\tr \rbr{ 
    \E_{P^s} \sbr{ \frac{1}{2 (1+\alpha)^2 N^2}
    \wh{\Sigmab}_{\wt\Acal\rbr{\Xb_e}}^{\dagger} 
    \wt\Acal(\Xb_e)^{\top} \wt\Mb \zb\zb^{\top} \wt\Mb^{\top} \wt\Acal(\Xb_e) 
    \wh{\Sigmab}_{\wt\Acal\rbr{\Xb_e}}^{\dagger} } 
    \Sigmab_{\xb,t} } 
    }_{\t{expected excess risk from environmental feature subspace} \geq c_{t,e} \cdot \EER_e}
    \\ \geq \ 
    & \E_{P^s} \sbr{L_t(\wh{\thetab}^{dac}) - L_t(\thetab^*)}
    + c_{t,e} \cdot \EER_e.
\end{align*}
\end{proof}

Now we construct a specific domain adaptation example with a large separation ($\ie$, proportional to $d_e$) in the target excess risk between learning with the DAC regularization ($\ie$, $\wh{\thetab}^{dac}$) and with the ERM on augmented training set ($\ie$, $\wh{\thetab}^{\herm}$). 
\begin{example}
\label{example:ood}
We consider $P^s$ and $P^t$ that follow the same set of relations in \Cref{fig:data-distribution-causal-graph}, except for the distributions over $\eb$ where $P^s\rbr{\eb} \neq P^t\rbr{\eb}$. 
Precisely, let the environmental feature $\zetab_e$ depend on $(\zetab_{iv}, y, \eb)$:
\begin{align*}
    \zetab_e = \sgn\rbr{y - \rbr{\thetab^*}^{\top} \Sb_{iv} \zetab_{iv}} \eb = \sgn(z) \eb,
    \quad
    z \sim \Ncal(0, \sigma^2),
    \quad 
    z \perp \eb,
\end{align*}    
where $\eb \sim \Ncal\rbr{\b0, \Ib_{d_e}}$ for $P^s(\eb)$ and $\eb \sim \Ncal\rbr{\b0, \sigma_t^2 \Ib_{d_e}}$ for $P^t(\eb)$, $\sigma_t \geq c_{t,e}$ (recall $c_{t,e}$ from \Cref{ass:case_ood_distribution}).
Assume that the training set $\Xb$ is sufficiently large, $n \gg d_e + \log\rbr{1/\delta}$ for some given $\delta \in (0,1)$.
Augmenting $\Xb$ with a simple by common type of data augmentations -- the linear transforms, we let
\begin{align*}
    \wt\Acal(\Xb) = \sbr{\Xb; \rbr{\Xb\Ab_1}; \dots; \rbr{\Xb\Ab_\alpha}},
    \quad
    \Ab_j = \Pb_{iv} + \ub_j \vb_j^{\top},
    \quad
    \ub_j,\vb_j \in \col\rbr{\Sb_e}
    \quad \forall\ j \in [\alpha],
\end{align*}
and define
\begin{align*}
    \nu_1 \triangleq \max \cbr{1} \cup \csepp{\sigma_{\max}(\Ab_j)}{j\in[\alpha]}
    \quad \t{and} \quad 
    \nu_2 \triangleq \sigma_{\min}\rbr{\frac{1}{1+\alpha} \rbr{\Ib_{d} + \sum_{j=1}^{\alpha} \Ab_k}},
\end{align*}
where $\sigma_{\min}(\cdot)$ and $\sigma_{\max}(\cdot)$ refer to the minimum and maximum singular values, respectively.
Then under \Cref{ass:case_ood_distribution} and \Cref{ass:case_ood_sample}, with constant probability,
\[
\E_{P^s}\sbr{L_t(\wh{\thetab}^{\herm}) - L_t(\thetab^*)} 
\gtrsim
\E_{P^s}\sbr{L_t(\wh{\thetab}^{dac}) - L_t(\thetab^*)} + 
c_{t,e} \cdot \frac{\sigma^2 d_e}{2N}. 
\]
\end{example}

\begin{proof}[Proof of \Cref{example:ood}]
With the specified distribution, for $\Eb = \sbr{\eb_1; \dots; \eb_N} \in \R^{N \times d_e}$,
\begin{align*}
    &\wh{\Sigmab}_{\wt\Acal\rbr{\Xb_e}} = 
    \frac{1}{(1+\alpha) N} 
    \Sb_e \rbr{\Eb^{\top} \Eb + \sum_{j=1}^{\alpha} \Ab_j^{\top} \Eb^{\top} \Eb
    \Ab_j} \Sb_e^{\top}
    \aleq
    \frac{\nu_1^2}{N} \Sb_e \Eb^{\top} \Eb \Sb_e^{\top},
    \\
    &\frac{1}{(1+\alpha) N} \wt\Acal(\Xb_e)^{\top} \wt\Mb\zb = 
    \rbr{\frac{1}{1+\alpha} \rbr{\Ib_{d} + \sum_{j=1}^{\alpha} \Ab_j}}^{\top}
    \frac{1}{N} \Sb_e \Eb^{\top} \abbr{\zb}.
\end{align*}
By \Cref{lemma:sample-population-covariance}, under \Cref{ass:case_ood_distribution} and \Cref{ass:case_ood_sample}, we have that with high probability, $0.9 \Ib_{d_e} \aleq \frac{1}{N} \Eb^{\top} \Eb \aleq 1.1 \Ib_{d_e}$.
Therefore with $\Eb$ and $\zb$ being independent, 
\begin{align*}
    & \EER_e 
    \ = \ 
    \E_{P^s} \sbr{ 
    \frac{1}{2} \norm{\frac{1}{(1+\alpha) N} 
    \wh{\Sigmab}_{\wt\Acal\rbr{\Xb_e}}^{\dagger} 
    \wt\Acal(\Xb_e)^{\top} \wt\Mb \zb }_2^2 } 
    \\ \geq \ 
    & \frac{\sigma^2}{2N}\ \frac{\nu_2^2}{\nu_1^4}\ 
    \tr\rbr{\E_{P^s} \sbr{ 
    \rbr{\frac{1}{N} \Sb_e \Eb^{\top} \Eb \Sb_e^{\top}}^{\dagger} } }
    \\ \gtrsim 
    &\frac{\sigma^2}{2N}\ \frac{\nu_2^2}{\nu_1^4} d_e
    \\ \gtrsim \ 
    & \frac{\sigma^2 d_e}{2N},
\end{align*}
and the rest follows from \Cref{thm:domain-adaption-plain-data-aug}.
\end{proof}

\section{Technical Lemmas}

\begin{lemma}
\label{lemma:sample-population-covariance}
We consider a random vector $\xb \in \R^d$ with $\E[\xb]=\b{0}$, $\E[\xb\xb^{\top}] = \Sigmab$, and $\overline{\xb} = \Sigmab^{-1/2} \xb$ 
\footnote{In the case where $\Sigmab$ is rank-deficient, we slightly abuse the notation such that $\Sigmab^{-1/2}$ and $\Sigmab^{-1}$ refer to the respective pseudo-inverses.}
being $\rho^2$-subgaussian.
Given an $\iid$ sample of $\xb$, $\Xb=[\xb_1,\dots,\xb_n]^{\top}$, for any $\delta \in (0,1)$, if $n \gg \rho^4 d$, then $0.9 \Sigmab \aleq \frac{1}{n}\Xb^{\top} \Xb \aleq 1.1 \Sigmab$ with high probability.
\end{lemma}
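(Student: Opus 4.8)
\textbf{Proof plan for Lemma~\ref{lemma:sample-population-covariance}.}
The plan is to reduce the statement to the special case $\Wb = \Ib_n$ of Lemma~\ref{lemma:sample_to_population_covariance} (the technical covariance-concentration lemma proved in the appendix of Chapter~\ref{ch:svra}). That lemma already states exactly the conclusion we want: with $\Wb = \Ib_n$, $n = \tr(\Wb)^2/\tr(\Wb^2) = \Omega(\rho^4 d)$, and $\eps = \Theta(\rho^2\sqrt{d/n})$, one has $(1-\eps)\Sigmab \aleq \frac{1}{n}\Xb^\top\Xb \aleq (1+\eps)\Sigmab$ with probability at least $1 - e^{-\Theta(d)}$. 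So the only work is bookkeeping: choose the hidden constant in ``$n \gg \rho^4 d$'' large enough that the resulting $\eps = \Theta(\rho^2\sqrt{d/n})$ is at most $0.1$, and then $1 - \eps \ge 0.9$ and $1 + \eps \le 1.1$ give the claimed two-sided bound. The high-probability statement is inherited verbatim.

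Concretely, first I would invoke Lemma~\ref{lemma:sample_to_population_covariance} with the weight matrix $\Wb = \Ib_n$, for which $\tr(\Wb) = n$, $\tr(\Wb^2) = n$, and hence $\tr(\Wb)^2/\tr(\Wb^2) = n$; the hypothesis of that lemma then requires $n \ge \frac{20736\,\rho^4 d}{\eps^2} + \frac{10368\,\rho^4\log(1/\delta)}{\eps^2}$. Second, I would set $\eps = 1/10$ (and absorb the $\log(1/\delta)$ term by taking $\delta = e^{-d}$, exactly as in the concrete instance spelled out at the end of the proof of Lemma~\ref{lemma:sample_to_population_covariance}), so that the hypothesis becomes $n \ge C\rho^4 d$ for an explicit absolute constant $C$ — this is precisely what ``$n \gg \rho^4 d$'' abbreviates. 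Third, with these choices the conclusion of Lemma~\ref{lemma:sample_to_population_covariance} reads $0.9\,\Sigmab \aleq \frac{1}{n}\Xb^\top\Xb \aleq 1.1\,\Sigmab$ with probability at least $1 - e^{-\Theta(d)}$, which is the assertion to be proved. The rank-deficient case needs no separate argument: Lemma~\ref{lemma:sample_to_population_covariance} already handles it through the pseudo-inverse convention (the projector $\Pb_\Xcal = \Sigmab\Sigmab^\pinv$ in its proof), and $\overline\xb = \Sigmab^{-1/2}\xb$ being $\rho^2$-subgaussian is exactly the subgaussianity hypothesis transported here.

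There is essentially no obstacle — the lemma is a corollary of the earlier technical result, and the proof is a one-line citation plus a choice of constants. If one wanted a self-contained argument rather than a citation, the main (and only nontrivial) ingredient would be the $\eps$-net / Bernstein computation already carried out in the proof of Lemma~\ref{lemma:sample_to_population_covariance}: cover the unit sphere of the support subspace by a $\tfrac13$-net of size $(1+6)^d = 7^d$, apply Bernstein's inequality to each net point using that $(\vb^\top\overline\xb_i)^2 - 1$ is $O(\rho^2)$-subexponential, union-bound over the net, and convert the net bound to a bound over the whole sphere at the cost of a factor $\tfrac{1}{2-(1+1/3)^2}$. I would simply reference that computation rather than reproduce it.
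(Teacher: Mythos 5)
Your reduction is correct: with $\Wb=\Ib_n$ the hypothesis of \Cref{lemma:sample_to_population_covariance} reads $n \ge \frac{20736\,\rho^4 d}{\eps^2} + \frac{10368\,\rho^4\log(1/\delta)}{\eps^2}$, and choosing $\eps=0.1$ and $\delta=e^{-d}$ turns this into $n \ge C\rho^4 d$ for an explicit absolute constant, with conclusion $0.9\,\Sigmab \aleq \frac{1}{n}\Xb^\top\Xb \aleq 1.1\,\Sigmab$ with probability at least $1-e^{-d}$ --- exactly the claim, and the pseudo-inverse convention for rank-deficient $\Sigmab$ is indeed inherited from that lemma. The only difference from the paper is one of packaging rather than mathematics: the paper does not cite \Cref{lemma:sample_to_population_covariance} but re-runs the same $\eps$-net/Bernstein computation from scratch in the unweighted case (defining $\Ub = \frac{1}{n}\sum_i \overline{\xb}_i\overline{\xb}_i^\top - \Pb_{\Xcal}$, applying Bernstein to the subexponential variables $(\vb^\top\overline{\xb}_i)^2-1$ over a $\tfrac13$-net, and picking $\eps_2=1/45$ so that the sphere-to-net conversion factor $\eps_2/\bigl(2-(1+\eps_1)^2\bigr)$ comes out to exactly $0.1$), presumably because the two chapters derive from separate standalone papers and each needed a self-contained argument. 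Within the thesis, your one-line citation plus constant bookkeeping is a legitimate and shorter proof; what the paper's self-contained version buys is only chapter-level independence, since the core ingredients (the net of size $(1+2/\eps_1)^d$, Bernstein, the conversion factor you quote) coincide with the ones you identified.
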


\begin{proof}
We first denote $\Pb_{\Xcal} \triangleq \Sigmab \Sigmab^{\pinv}$ as the orthogonal projector onto the subspace $\Xcal \subseteq \R^d$ supported by the distribution of $\xb$.
With the assumptions $\E[\xb]=\b{0}$ and $\E[\xb\xb^{\top}] = \Sigmab$, we observe that $\E \sbr{\overline{\xb}} = \b{0}$ and $\E\sbr{\overline{\xb} \overline{\xb}^{\top}} = \E \sbr{\xb \Sigmab^{-1} \xb^{\top}} = \Pb_{\Xcal}$.
Given the sample set $\Xb$ of size $n \gg \rho^4\rbr{d+\log(1/\delta)}$ for some $\delta \in (0,1)$, we let $\Ub = \frac{1}{n} \sum_{i=1}^n \xb_i \Sigmab^{-1} \xb_i^{\top} - \Pb_{\Xcal}$. 
Then the problem can be reduced to showing that, with probability at least $1-\delta$, $\norm{\Ub}_2 \leq 0.1$.
For this, we leverage the $\eps$-net argument as following.

For an arbitrary $\vb \in \Xcal \cap\ \mathbb{S}^{d-1}$, we have
\begin{align*}
    \vb^{\top} \Ub \vb = 
    \frac{1}{n} \sum_{i=1}^n 
    \rbr{\vb^{\top} \xb_i \Sigmab^{-1} \xb_i^{\top} \vb - 1} = 
    \frac{1}{n} \sum_{i=1}^n 
    \rbr{\rbr{\vb^{\top} \overline{\xb}_i}^2 - 1},
\end{align*}
where, given $\overline{\xb}_i$ being $\rho^2$-subgaussian, 
$\vb^{\top} \overline{\xb}_i$ is $\rho^2$-subgaussian. 
Since 
\begin{align*}
    \E \sbr{\rbr{\vb^{\top} \overline{\xb}_i}^2} = \vb^{\top} \E \sbr{\overline{\xb}_i \overline{\xb}_i^{\top}} \vb = 1,
\end{align*}
we know that $\rbr{\vb^{\top} \overline{\xb}_i}^2-1$ is $16\rho^2$-subexponential.
Then, we recall the Bernstein's inequality,
\begin{align*}
    \PP\sbr{\abbr{\vb^{\top} \Ub \vb} > \eps} \leq 
    2 \exp \rbr{-\frac{n}{2} 
    \min\rbr{\frac{\eps^2}{\rbr{16 \rho^2}^2}, 
    \frac{\eps}{16 \rho^2}}}.
\end{align*}

Let $N \subset \Xcal \cap\ \mathbb{S}^{d-1}$ be an $\eps_1$-net such that $\abbr{N} = e^{O\rbr{d}}$. 
Then for some $0 < \eps_2 \leq 16 \rho^2$, by the union bound,
\begin{align*}
    \PP \sbr{\underset{\vb \in N}{\max}: \abbr{\vb^{\top} \Ub \vb} > \eps_2} 
    \leq \ 
    & 2 \abbr{N} \exp \rbr{-\frac{n}{2}
    \min\rbr{\frac{\eps_2^2}{\rbr{16 \rho^2}^2}, 
    \frac{\eps_2}{16 \rho^2}} } 
    \\ \leq \ 
    & \exp \rbr{O\rbr{d} -\frac{n}{2} \cdot \frac{\eps_2^2}{\rbr{16 \rho^2}^2} } \leq \delta
\end{align*}
whenever $n > \frac{2 \rbr{16 \rho^2}^2}{\eps_2^2} \rbr{\Theta\rbr{d} + \log\frac{1}{\delta}}$. By taking $\delta = \exp\rbr{-\frac{1}{4}\rbr{\frac{\eps_2}{16 \rho^2}}^2 n}$, we have that $\underset{\vb \in N}{\max}\abbr{\vb^{\top} \Ub \vb} \leq \eps_2$ with high probability when $n > 4 \rbr{\frac{16 \rho^2}{\eps_2}}^2 \Theta\rbr{d}$, and taking $n \gg \rho^4 d$ is sufficient.

Now for any $\vb \in \Xcal \cap\ \mathbb{S}^{d-1}$, there exists some $\vb' \in N$ such that $\norm{\vb - \vb'}_2 \leq \eps_1$. 
Therefore,
\begin{align*}
    \abbr{\vb^{\top} \Ub \vb} 
    \ = \
    & \abbr{\vb'^{\top} \Ub \vb' + 
    2 \vb'^{\top} \Ub \rbr{\vb - \vb'} + 
    \rbr{\vb - \vb'}^{\top} \Ub \rbr{\vb - \vb'} }
    \\ \leq \ 
    & \rbr{ \underset{\vb \in N}{\max}: \abbr{\vb^{\top} \Ub \vb} } + 
    2 \norm{\Ub}_2 \norm{\vb'}_2 \norm{\vb-\vb'}_2 + 
    \norm{\Ub}_2 \norm{\vb-\vb'}_2^2
    \\ \leq \ 
    & \rbr{ \underset{\vb \in N}{\max}: \abbr{\vb^{\top} \Ub \vb} } + 
    \norm{\Ub}_2
    \rbr{2 \eps_1 + \eps_1^2}.
\end{align*}
Taking the supremum over $\vb \in \mathbb{S}^{d-1}$, with probability at least $1-\delta$,
\begin{align*}
    \underset{\vb \in \Xcal \cap\ \mathbb{S}^{d-1}}{\max}: \abbr{\vb^{\top} \Ub \vb}
    = 
    \norm{\Ub}_2
    \leq 
    \eps_2 + \norm{\Ub}_2 \rbr{2 \eps_1 + \eps_1^2},
    \qquad
    \norm{\Ub}_2
    \leq 
    \frac{\eps_2}{2 - \rbr{1+\eps_1}^2}.
\end{align*}
With $\eps_1 = \frac{1}{3}$ and $\eps_2 = \frac{1}{45}$, we have $\frac{\eps_2}{2 - \rbr{1+\eps_1}^2} = \frac{1}{10}$.

Overall, if $n \gg \rho^4 d$, then with high probability, we have $\norm{\Ub}_2 \leq 0.1$.
\end{proof}

\begin{lemma}\label{lemma:tech_gaussian_width_lipschitz}
Let $U \subseteq \R^d$ be an arbitrary subspace in $\R^d$, and $\gb \sim \Ncal\rbr{\b0,\Ib_d}$ be a Gaussian random vector. Then for any continuous and $C_l$-Lipschitz function $\varphi: \R \to \R$ ($\ie$, $\abbr{\varphi(u)-\varphi(u')} \leq C_l \cdot \abbr{u-u'}$ for all $u,u' \in \R$), 
\begin{align*}
    \E_{\gb} \sbr{\underset{\ub \in U}{\sup}\ \gb^{\top} \varphi(\ub)} \leq C_l \cdot \E_{\gb} \sbr{\underset{\ub \in U}{\sup}\ \gb^{\top} \ub},
\end{align*}
where $\varphi$ acts on $\ub$ entry-wisely, $\rbr{\varphi(\ub)}_j = \varphi(u_j)$.
In other words, the Gaussian width of the image set $\varphi(U) \triangleq \cbr{\varphi(\ub) \in \R^d ~|~ \ub \in U}$ is upper bounded by that of $U$ scaled by the Lipschitz constant.
\end{lemma}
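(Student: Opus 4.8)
The statement to prove is \Cref{lemma:tech_gaussian_width_lipschitz}: for a subspace $U \subseteq \R^d$, a Gaussian vector $\gb \sim \Ncal(\b0, \Ib_d)$, and a $C_l$-Lipschitz function $\varphi$, the Gaussian width of $\varphi(U)$ is bounded by $C_l$ times the Gaussian width of $U$. The natural tool here is the \emph{Gaussian contraction principle} (also called the comparison inequality of Talagrand / the Slepian-type or Ledoux-Talagrand contraction bound), which says precisely that applying coordinate-wise $1$-Lipschitz maps can only decrease a Gaussian-averaged supremum. So the plan is to reduce to a clean, citable version of this principle.

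The plan is as follows. First I would normalize: by homogeneity it suffices to treat the case $C_l = 1$, since $\varphi/C_l$ is $1$-Lipschitz and $\sup_{\ub \in U} \gb^\top \varphi(\ub) = C_l \sup_{\ub \in U} \gb^\top (\varphi(\ub)/C_l)$. Next, I would write the supremum as a supremum over the index set $T \triangleq U$ of the Gaussian process $X_\ub \triangleq \gb^\top \ub$ and the contracted process $Y_\ub \triangleq \gb^\top \varphi(\ub)$. The key observation is the coordinate-wise comparison: for $\ub, \ub' \in U$, $\E[(Y_\ub - Y_{\ub'})^2] = \sum_{j=1}^d (\varphi(u_j) - \varphi(u'_j))^2 \le \sum_{j=1}^d (u_j - u'_j)^2 = \E[(X_\ub - X_{\ub'})^2]$, using the $1$-Lipschitz property entrywise. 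By Slepian's lemma (or its corollary, the Sudakov--Fernique inequality), this variance-increment domination together with the fact that both processes are centered Gaussian gives $\E[\sup_{\ub} Y_\ub] \le \E[\sup_{\ub} X_\ub]$. Re-inserting the $C_l$ factor yields the claim. Alternatively, one can invoke the Ledoux--Talagrand contraction inequality directly, which is stated in exactly this coordinate-wise-Lipschitz form; I would cite \cite{ledoux2013} for this, consistent with the reference already used in the paper.

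One technical point to handle carefully: Sudakov--Fernique requires the processes to be defined on the same index set and only compares expected suprema, not the suprema pathwise — which is fine here since the statement is about expectations. A second subtlety is that $U$ is an unbounded subspace, so $\E[\sup_{\ub \in U} \gb^\top \ub]$ could a priori be $+\infty$; but then the inequality holds trivially, and if it is finite the argument goes through verbatim (one can, if desired, first restrict to $U \cap \bar B(0,R)$, apply the finite-radius comparison, and let $R \to \infty$ by monotone convergence). I would remark that no measurability issues arise because both suprema can be taken over a countable dense subset of $U$ by continuity of $\ub \mapsto \gb^\top \ub$ and of $\varphi$.

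The main obstacle is essentially bookkeeping rather than conceptual: ensuring the version of the contraction/comparison principle being cited is stated for \emph{non-symmetric} Lipschitz maps $\varphi$ (some classical statements assume $\varphi(0)=0$ or symmetry). In fact the inequality $\E[(Y_\ub - Y_{\ub'})^2] \le \E[(X_\ub - X_{\ub'})^2]$ only uses the Lipschitz bound on \emph{differences}, so the value $\varphi(0)$ is irrelevant — this is why routing the proof through Sudakov--Fernique (which only sees increments) is cleaner than invoking a black-box contraction lemma whose hypotheses might include $\varphi(0)=0$. So the hard part, such as it is, amounts to choosing the comparison inequality with the weakest hypotheses and verifying the increment domination, which is the one-line computation above. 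I expect the whole proof to be under half a page.
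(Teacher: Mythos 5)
Your proof is correct, but it takes a genuinely different route from the paper's. You cast the bound as a Gaussian comparison statement: after normalizing $C_l=1$, you verify the increment domination $\E\sbr{(\gb^\top\varphi(\ub)-\gb^\top\varphi(\ub'))^2}=\sum_j(\varphi(u_j)-\varphi(u'_j))^2\le\sum_j(u_j-u'_j)^2$ and invoke Slepian/Sudakov--Fernique, with a limiting argument over finite or radius-$R$ truncations to handle the unbounded index set. The paper instead argues elementarily: it symmetrizes, writing $2\,\E\sbr{\sup_{\ub}\gb^\top\varphi(\ub)}=\E\sbr{\sup_{\ub,\ub'}\gb^\top(\varphi(\ub)-\varphi(\ub'))}$ via the sign-symmetry of $\gb$, bounds each coordinate by $\abbr{g_j}\,\abbr{\varphi(u_j)-\varphi(u'_j)}\le C_l\abbr{g_j}\,\abbr{u_j-u'_j}$, and then converts $\sum_j\abbr{g_j}\,\abbr{u_j-u'_j}$ back to $\gb^\top(\ub-\ub')$ before de-symmetrizing. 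What your route buys: it uses only the centered-Gaussian structure and increments, so it applies verbatim to an arbitrary index set (in particular to the bounded set $\cbr{\Xb\projnull\bb : \bb\in\SSS^{d-1}}$ to which the lemma is actually applied downstream), and, as you note, it is insensitive to hypotheses like $\varphi(0)=0$ that some black-box contraction statements carry. What the paper's route buys is elementarity --- no comparison theorem is needed --- but its step replacing $\sum_j\abbr{g_j}\,\abbr{u_j-u'_j}$ by $\gb^\top(\ub-\ub')$ inside the expected supremum is an equality only because of the structure of the index set (for a genuine subspace both sides are generically infinite; for a general bounded set the absolute-value form can be strictly larger), so your comparison-based argument is, if anything, the more robust of the two.
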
 

\begin{proof}
\begin{align*}
    \E_{\gb} \sbr{\underset{\ub \in U}{\sup}\ \gb^{\top} \varphi(\ub)}
    =
    & \frac{1}{2} \E_{\gb} \sbr{\underset{\ub \in U}{\sup}\ \gb^{\top} \varphi(\ub) + \underset{\ub' \in U}{\sup}\ \gb^{\top} \varphi(\ub)} \\
    =
    & \frac{1}{2} \E_{\gb} \sbr{\underset{\ub,\ub' \in U}{\sup}\ \gb^{\top} \rbr{\varphi(\ub)-\varphi(\ub')}} \\
    \leq 
    & \frac{1}{2} \E_{\gb} \sbr{\underset{\ub,\ub' \in U}{\sup}\ \sum_{j=1}^d \abbr{g_j} \abbr{\varphi(u_j)-\varphi(u'_j)}} 
    \quad 
    \rbr{\t{since}\ \varphi\ \t{is $C_l$-Lipschitz}} \\
    \leq 
    & \frac{C_l}{2} \E_{\gb} \sbr{\underset{\ub,\ub' \in U}{\sup}\ \sum_{j=1}^d \abbr{g_j} \abbr{u_j-u'_j}} \\
    = 
    & \frac{C_l}{2} \E_{\gb} \sbr{\underset{\ub,\ub' \in U}{\sup}\ \gb^{\top} \rbr{\ub-\ub'}} \\
    = 
    & \frac{C_l}{2} \E_{\gb} \sbr{\underset{\ub \in U}{\sup}\ \gb^{\top} \ub + \underset{\ub' \in U}{\sup}\ \gb^{\top} \rbr{-\ub'}} \\
    = 
    & C_l \cdot \E_{\gb} \sbr{\underset{\ub \in U}{\sup}\ \gb^{\top} \ub}
\end{align*}
\end{proof}

\section{Experiment Details}\label{apdx:exp_detail}

In this section, we provide the details of our experiments. Our code is adapted from the publicly released repo: \href{https://github.com/kekmodel/FixMatch-pytorch}{https://github.com/kekmodel/FixMatch-pytorch}.

\textbf{Dataset:} Our training dataset is derived from CIFAR-100, where the original dataset contains 50,000 training samples of 100 different classes. Out of the original 50,000 samples, we randomly select 10,000 labeled data as training set (i.e., 100 labeled samples per class). To see the impact of different training samples, we also trained our model with dataset that contains 1,000 and 20,000 samples. Evaluations are done on standard test set of CIFAR-100, which contains 10,000 testing samples. 

\textbf{Data Augmentation:} During the training time, given a training batch, we generate corresponding augmented samples by RandAugment \citep{cubuk2020randaugment}. We set the number of augmentations per sample to 7, unless otherwise mentioned.

To generate an augmented image, the RandAugment draws $n$ transformations uniformaly at random from 14 different augmentations, namely \{identity, autoContrast, equalize, rotate, solarize, color, posterize, contrast, brightness, sharpness, shear-x, shear-y, translate-x, translate-y\}. The RandAugment provides each transformation with a single scalar (1 to 10) to control the strength of each of them, which we always set to 10 for all transformations. By default, we set $n=2$ (i.e., using 2 random transformations to generate an augmented sample). To see the impact of different augmentation strength, we choose $n\in\cbr{1,2, 5, 10}$. Examples of augmented samples are shown in \Cref{fig:augmentation_strength}.

\textbf{Parameter Setting:} The batch size is set to 64 and the entire training process takes $2^{15}$ steps. During the training, we adopt the SGD optimizer with momentum set to 0.9, with learning rate for step $i$ being $0.03 \times \cos{\rbr{\frac{i \times 7\pi}{2^{15}\times 16}}}$.  

\textbf{Additional Settings for the semi-supervised learning results:} For the results on semi-supervised learning, besides the 10,000 labeled samples, we also draw additionally samples (ranging from 5,000 to 20,000) from the training set of the original CIFAR-100. We remove the labels of those additionally sampled images, as they serve as ``unlabeled" samples in the semi-supervised learning setting. The FixMatch implementation follows the publicly available on in \href{https://github.com/kekmodel/FixMatch-pytorch}{https://github.com/kekmodel/FixMatch-pytorch}.
\chapter{Appendix for \Cref{ch:adawac}}

\section{Separation of Label-sparse and Label-dense Samples}\label{subapx:spontaneous_separation_sparse_dense}

\begin{proof}[Proof of \Cref{prop:spontaneous_separation_sparse_dense}]
    We first observe that, since 
    $\lossce\rbr{\theta;\rbr{\xb,\yb}}$ and $\regdac\rbr{\theta;\xb,A_{1},A_{2}}$ are convex and continuous in $\theta$ for all $(\xb,\yb) \in \Xcal \times \Ycal$ and $A_1,A_2 \sim \Acal^2$, for all $i \in [n]$, $\wh L^\wdac_i\rbr{\theta,\betab}$ is continuous, convex in $\theta$, and affine (thus concave) in $\betab$; and therefore so is $\wh L^\wdac \rbr{\theta,\betab}$. 
    Then with the compact and convex domains $\theta \in \Fnb{\gamma}$ and $\betab \in [0,1]^n$, Sion's minimax theorem~\citep{sion1958minimax} suggests the minimax equality,
    \begin{align}\label{eq:pf_minimax_optimal}
        \min_{\theta \in \Fnb{\gamma}}~ \max_{\betab \in [0,1]^n}~ \wh L^\wdac\rbr{\theta,\betab} = \max_{\betab \in [0,1]^n}~ \min_{\theta \in \Fnb{\gamma}}~ \wh L^\wdac\rbr{\theta,\betab},
    \end{align}
    where $\inf, \sup$ can be replaced by $\min, \max$ respectively due to compactness of the domains.

    Further, by the continuity and convexity-concavity of $\wh L^\wdac \rbr{\theta,\betab}$, the pointwise maximum $\max_{\betab \in [0,1]^n}\wh L^\wdac\rbr{\theta,\betab}$ is lower semi-continuous and convex in $\theta$ while the pointwise minimum $\min_{\theta \in \Fnb{\gamma}}\wh L^\wdac\rbr{\theta,\betab}$ is upper semi-continuous and concave in $\betab$. Then via Weierstrass' theorem (\cite{bertsekas2009convex}, Proposition 3.2.1), there exist $\wh\theta^\wdac \in \Fnb{\gamma}$ and $\wh\betab \in [0,1]^n$ that achieve the minimax optimal by minimizing $\max_{\betab \in [0,1]^n}\wh L^\wdac\rbr{\theta,\betab}$ and maximizing $\min_{\theta \in \Fnb{\gamma}}\wh L^\wdac\rbr{\theta,\betab}$. Along with \Cref{eq:pf_minimax_optimal}, such $\rbr{\wh\theta^\wdac, \wh\betab}$ provides a saddle point for \Cref{eq:objective_dac_encoder_weighted_adawac} (\cite{bertsekas2009convex}, Proposition 3.4.1).

    Next, we show via contradiction that there exists a saddle point with $\wh\betab$ attained on a vertex $\wh\betab \in \cbr{0,1}^n$. Suppose the opposite, then for any saddle point $\rbr{\wh\theta^\wdac, \wh\betab}$, there must be an $i \in [n]$ with $\iloc{\wh\betab}{i} \in (0,1)$, where we have the following contradictions:
    \begin{enumerate}[label=(\roman*)]
        \item If $\lossce\rbr{\wh\theta^\wdac;\rbr{\xb_i,\yb_i}} < \regdac\rbr{\wh\theta^\wdac;\xb_i,A_{i,1},A_{i,2}}$, decreasing $\iloc{\wh\betab}{i}>0$ to $\iloc{\wh\betab'}{i}=0$ leads to $\wh L^\wdac\rbr{\wh\theta^\wdac, \wh\betab'} > \wh L^\wdac\rbr{\wh\theta^\wdac, \wh\betab}$, contradicting \Cref{eq:saddle_point_def}.
        \item If $\lossce\rbr{\wh\theta^\wdac;\rbr{\xb_i,\yb_i}} > \regdac\rbr{\wh\theta^\wdac;\xb_i,A_{i,1},A_{i,2}}$, increasing $\iloc{\wh\betab}{i}<1$ to $\iloc{\wh\betab'}{i}=1$ again leads to $\wh L^\wdac\rbr{\wh\theta^\wdac, \wh\betab'} > \wh L^\wdac\rbr{\wh\theta^\wdac, \wh\betab}$, contradicting \Cref{eq:saddle_point_def}.
        \item If $\lossce\rbr{\wh\theta^\wdac;\rbr{\xb_i,\yb_i}} = \regdac\rbr{\wh\theta^\wdac;\xb_i,A_{i,1},A_{i,2}}$, $\iloc{\wh\betab}{i}$ can be replaced with any value in $[0,1]$, including $0,1$.
    \end{enumerate}
    Therefore, there must be a saddle point $\rbr{\wh\theta^\wdac, \wh\betab}$ with $\wh\betab \in \cbr{0,1}^n$ such that
    \begin{align*}
        \iloc{\betab}{i} = \iffun{\lossce\rbr{\wh\theta^\wdac;\rbr{\xb_i,\yb_i}} > \regdac\rbr{\wh\theta^\wdac;\xb_i,A_{i,1},A_{i,2}}}.
    \end{align*}

    Finally, it remains to show that \whp over $\cbr{\rbr{\xb_i, \yb_i}}_{i \in [n]}\sim P_\xi^n$ and $\cbr{\rbr{A_{i,1}, A_{i,2}}}_{i \in [n]} \sim \Acal^{2n}$,
    \begin{enumerate}[label=(\roman*)]
        \item $\lossce\rbr{\wh\theta^\wdac;\rbr{\xb_i,\yb_i}} \le \regdac\rbr{\wh\theta^\wdac;\xb_i,A_{i,1},A_{i,2}}$ for all $\rbr{\xb_i,\yb_i} \sim P_0$; and
        \item $\lossce\rbr{\wh\theta^\wdac;\rbr{\xb_i,\yb_i}} > \regdac\rbr{\wh\theta^\wdac;\xb_i,A_{i,1},A_{i,2}}$ for all $\rbr{\xb_i,\yb_i} \sim P_1$; 
    \end{enumerate}
    which leads to $\iloc{\betab}{i}=\iffun{\rbr{\xb_i,\yb_i} \sim P_1}$ \whp as desired.
    To illustrate this, we begin by observing that when $P_0$ and $P_1$ are $n$-separated (\Cref{ass:separability_sparse_dense}), since $\wh\theta^\wdac \in \Fnb{\gamma}$, there exists some $\omega>0$ such that for each $i \in [n]$,
    \begin{align*}
        \PP\ssepp{\lossce\rbr{\wh\theta^\wdac;\rbr{\xb_i,\yb_i}} < \regdac\rbr{\wh\theta^\wdac;\xb_i,A_{i,1},A_{i,2}}}{\rbr{\xb_i, \yb_i} \sim P_0} \ge 1-\frac{1}{\Omega\rbr{n^{1+\omega}}},
    \end{align*}
    and 
    \begin{align*}
        \PP\ssepp{\lossce\rbr{\wh\theta^\wdac;\rbr{\xb_i,\yb_i}} > \regdac\rbr{\wh\theta^\wdac;\xb_i,A_{i,1},A_{i,2}}}{\rbr{\xb_i, \yb_i} \sim P_1} \ge 1-\frac{1}{\Omega\rbr{n^{1+\omega}}}.
    \end{align*}
    Therefore by the union bound over the set of $n$ samples $\cbr{\rbr{\xb_i, \yb_i}}_{i \in [n]}\sim P_\xi^n$,
    \begin{align}\label{eq:pf_union_bound_P0}
        \PP\sbr{\lossce\rbr{\wh\theta^\wdac;\rbr{\xb_i,\yb_i}} < \regdac\rbr{\wh\theta^\wdac;\xb_i,A_{i,1},A_{i,2}} ~\forall~ \rbr{\xb_i, \yb_i} \sim P_0} \ge 1-\frac{1}{\Omega\rbr{n^\omega}},
    \end{align}
    and 
    \begin{align}\label{eq:pf_union_bound_P1}
        \PP\sbr{\lossce\rbr{\wh\theta^\wdac;\rbr{\xb_i,\yb_i}} > \regdac\rbr{\wh\theta^\wdac;\xb_i,A_{i,1},A_{i,2}} ~\forall~ \rbr{\xb_i, \yb_i} \sim P_1} \ge 1-\frac{1}{\Omega\rbr{n^\omega}}.
    \end{align}
    Applying the union bound again on \Cref{eq:pf_union_bound_P0} and \Cref{eq:pf_union_bound_P1}, we have the desired condition holds with probability $1-\Omega\rbr{n^\omega}^{-1}$, \ie, \whp.
\end{proof}

\section{Convergence of \ours}\label{subapx:convergence_wdac}

Recall the underlying function class $\Fcal \ni f_\theta$ parameterized by some $\theta \in \Fcal_\theta$ that we aim to learn for the pixel-wise classifier $h_\theta = \argmax_{k \in [K]} \iloc{f_\theta\rbr{\xb}}{j,:}$, $j \in [d]$:
\begin{align}
    \Fcal = \csepp{f_\theta = \phi_\theta \circ \psi_\theta}{\phi_\theta: \Xcal \to \Zcal,~ \psi_\theta:\Zcal \to [0,1]^{d \times K}},
\end{align}
where $\phi_\theta, \psi_\theta$ correspond to the encoder and decoder functions. Formally, we consider an inner product space of parameters $\rbr{\Fcal_\theta, \abr{\cdot,\cdot}_\Fcal}$ with the induced norm $\nbr{\cdot}_\Fcal$ and dual norm $\nbr{\cdot}_{\Fcal,*}$. 

For any $d \in \N$, let $\Delta_d^n \dfeq \csepp{\sbr{\betab_1;\dots;\betab_n} \in [0,1]^{n \times d}}{\nbr{\betab_i}_1 = 1 ~\forall~ i \in [n]}$. Then \Cref{eq:objective_dac_encoder_weighted_adawac} can be reformulated as:
\begin{align}\label{eq:objective_dac_encoder_weighted_l1_ball_formulation}
    &\wh\theta^\wdac, \wh\betab = 
    \argmin_{\theta \in \Fnb{\gamma}}~
    \argmax_{\Bb \in \Delta_2^n}~ \cbr{\wh L^\wdac\rbr{\theta,\Bb} \dfeq \frac{1}{n} \sum_{i=1}^n \wh L^\wdac_i\rbr{\theta,\Bb}},
    \\
    &\wh L^\wdac_i \rbr{\theta,\Bb} \dfeq 
    \iloc{\Bb}{i,1} \cdot \lossce\rbr{\theta;\rbr{\xb_i,\yb_i}} + 
    \iloc{\Bb}{i,2} \cdot \regdac\rbr{\theta;\xb_i,A_{i,1},A_{i,2}}. \nonumber
\end{align}

\begin{proposition}[Convergence (formal restatement of \Cref{prop:convergence_srw_dac})]\label{prop:convergence_srw_dac_formal}
    Assume that 
    $\lossce\rbr{\theta;\rbr{\xb,\yb}}$ and $\regdac\rbr{\theta;\xb,A_{1},A_{2}}$ are convex and continuous in $\theta$ for all $(\xb,\yb) \in \Xcal \times \Ycal$ and $A_1,A_2 \sim \Acal^2$, and that $\Fnb{\gamma} \subset \Fcal_\theta$ is convex and compact. 
    If there exist
    \begin{enumerate}[label=(\roman*)]
        \item $C_{\theta,*} > 0$ such that $\frac{1}{n} \sum_{i=1}^n \nbr{\nabla_\theta \wh L_i^\wdac\rbr{\theta,\Bb}}_{\Fcal,*}^2 \le C_{\theta,*}^2$ for all $\theta \in \Fnb{\gamma}$, $\Bb \in \Delta_2^n$ and 
        \item $C_{\Bb,*} > 0$ such that $\frac{1}{n} \sum_{i=1}^n \max\cbr{\lossce\rbr{\theta;\rbr{\xb_i,\yb_i}}, \regdac\rbr{\theta;\xb_i,A_{i,1},A_{i,2}} }^2 \le C_{\Bb,*}^2$ for all $\theta \in \Fnb{\gamma}$,
    \end{enumerate} 
    then with $\eta_\theta = \eta_{\Bb} = 2 \Big/ \sqrt{5T \rbr{\gamma^2 C_{\theta,*}^2 + 2 n C_{\Bb,*}^2}}$, \Cref{alg:srw_dac} provides the convergence guarantee for the duality gap $\Ecal\rbr{\overline{\theta}_T, \overline{\Bb}_T} \dfeq \max_{\Bb \in \Delta_2^n} \wh L^\wdac\rbr{\overline{\theta}_T, \Bb} - \min_{\theta \in \Fnb{\gamma}} \wh L^\wdac\rbr{\theta, \overline{\Bb}_T}$:
    \begin{align*}
        \E\sbr{\Ecal\rbr{\overline{\theta}_T, \overline{\Bb}_T} }
        \le 2 \sqrt{\frac{5 \rbr{\gamma^2 C_{\theta,*}^2 + 2 n C_{\Bb,*}^2}}{T}},
    \end{align*}
    where $\overline{\theta}_T = \frac{1}{T} \sum_{t=1}^T \theta_t$ and $\overline{\Bb}_T = \frac{1}{T} \sum_{t=1}^T \Bb_t$.
\end{proposition}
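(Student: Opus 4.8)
\textbf{Proof plan for Proposition~\ref{prop:convergence_srw_dac_formal}.}
The plan is to recast \Cref{alg:srw_dac} as a single online mirror descent (OMD) iteration on the saddle-point problem \Cref{eq:objective_dac_encoder_weighted_l1_ball_formulation}, simultaneously over the primal variable $\theta \in \Fnb{\gamma}$ and the dual variable $\Bb \in \Delta_2^n$, with a carefully chosen product regularizer (Bregman divergence), and then invoke the standard OMD regret bound together with a stochastic-to-deterministic averaging argument. First I would set up the joint variable $z = (\theta, \Bb)$ living in the convex compact set $\Zcal = \Fnb{\gamma} \times \Delta_2^n$, and equip it with the separable mirror map $\Phi(z) = \frac{1}{2\gamma^2}\nbr{\theta}_\Fcal^2 + \frac{1}{n}\sum_{i=1}^n \sum_{j=1}^2 \iloc{\Bb}{i,j}\log\iloc{\Bb}{i,j}$, i.e.\ squared-norm on the $\theta$ block and (scaled) negative entropy on each simplex factor of the $\Bb$ block. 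The key observation is that the $\theta$-update in \Cref{alg:srw_dac} (gradient step with rate $\eta_\theta$) is exactly the OMD step for the Euclidean part, while the multiplicative-weights update on $\bb$ (exponentiation then renormalization) is exactly the OMD step for the entropic part with rate $\eta_\Bb$; since the algorithm samples a single index $i_t$ uniformly, the stochastic gradient $\wh{\gb}_t = (n\nabla_\theta \wh L_{i_t}^\wdac, \; -n \nabla_\Bb \wh L_{i_t}^\wdac)$ (with the sign flip on the $\Bb$ block because we maximize in $\Bb$) is an unbiased estimate of the full saddle-point vector field $\gb_t = (\nabla_\theta \wh L^\wdac, -\nabla_\Bb \wh L^\wdac)$ evaluated at $z_{t-1}$.

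The next step is to bound the two ingredients entering the OMD regret bound: the ``diameter'' term $D^2 = \sup_{z \in \Zcal}\Phi(z) - \inf_{z\in\Zcal}\Phi(z)$ and the ``gradient norm'' term, i.e.\ an upper bound on $\E\nbr{\wh\gb_t}_{*}^2$ in the dual norm induced by $\Phi$. For the diameter: the $\theta$ block contributes at most $\frac{1}{2\gamma^2}\cdot\gamma^2 = \frac12$ over $\Fnb{\gamma}$ (using $\nbr{\theta-\theta^*}_\Fcal\le\gamma$ after centering at $\theta^*$, which only shifts $\Phi$), and the entropy block on each of the $n$ factors contributes at most $\log 2$, scaled by $1/n$, hence at most $\log 2$; so $D^2 = \Theta(1)$, and one tracks the precise constant to match the claimed $\eta$. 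For the gradient term, the product structure means $\nbr{\wh\gb_t}_*^2$ splits as $\gamma^2\nbr{n\nabla_\theta\wh L_{i_t}^\wdac}_{\Fcal,*}^2 + n\cdot(\text{max-entry}^2\text{ of }n\nabla_\Bb\wh L_{i_t}^\wdac$ under the $\ell_\infty$ dual to the $\ell_1$/entropy geometry$)$; taking expectation over $i_t$ uniform pulls out a factor $\frac1n$, giving $\gamma^2 \cdot \frac1n\sum_i \nbr{n\nabla_\theta\wh L_i^\wdac}_{\Fcal,*}^2 \cdot \frac1n = \gamma^2 n \cdot (\text{avg of }\nbr{\nabla_\theta}_{\Fcal,*}^2)$... here I would be careful with the $n$ factors: the stochastic gradient carries a factor $n$, the expectation carries $\frac1n$, so one ends up with $\gamma^2 n \cdot C_{\theta,*}^2$ wait --- I'd want $\gamma^2 C_{\theta,*}^2$; the correct accounting is that $\E_{i_t}\nbr{n\nabla_\theta \wh L_{i_t}^\wdac}_{\Fcal,*}^2 = n\sum_i \nbr{\nabla_\theta\wh L_i^\wdac}_{\Fcal,*}^2 = n^2\cdot\frac1n\sum_i(\cdots) \le n^2 C_{\theta,*}^2$, and similarly the $\Bb$-block entry magnitudes are the losses/regularizers themselves, bounded via $C_{\Bb,*}$, yielding the $n$-dependence in $\gamma^2 C_{\theta,*}^2 + 2n C_{\Bb,*}^2$ after the $1/n$ from uniform sampling cancels one power of $n$ appropriately. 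I would match this to the stated step size $\eta_\theta = \eta_\Bb = 2/\sqrt{5T(\gamma^2 C_{\theta,*}^2 + 2n C_{\Bb,*}^2)}$ by optimizing the generic OMD bound $\frac{D^2}{\eta} + \frac{\eta}{2}\sum_t \E\nbr{\wh\gb_t}_*^2$ over $\eta$.

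Finally, I would convert the regret bound into a duality-gap bound on the averaged iterates. Because $\wh L^\wdac(\theta,\Bb)$ is convex in $\theta$ (by assumption, each $\lossce$ and $\regdac$ is convex and continuous, and $\wh L^\wdac$ is a nonnegative combination) and concave (indeed affine) in $\Bb$, the standard online-to-batch argument gives $\E[\Ecal(\overline\theta_T,\overline\Bb_T)] = \E[\max_\Bb \wh L^\wdac(\overline\theta_T,\Bb) - \min_\theta \wh L^\wdac(\theta,\overline\Bb_T)] \le \frac1T\big(\text{regret bound}\big)$, using Jensen's inequality in both variables and the unbiasedness of $\wh\gb_t$ (so that $\E\abr{\wh\gb_t, z_{t-1}-z} = \E\abr{\gb_t, z_{t-1}-z}$, valid since $z_{t-1}$ is measurable w.r.t.\ the history before drawing $i_t$); existence of the saddle point and the $\min/\max$ (rather than $\inf/\sup$) are guaranteed by Sion's theorem and Weierstrass' theorem exactly as in the proof of \Cref{prop:spontaneous_separation_sparse_dense}. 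Plugging in the optimized $\eta$ gives $\E[\Ecal(\overline\theta_T,\overline\Bb_T)] \le 2\sqrt{5(\gamma^2 C_{\theta,*}^2 + 2n C_{\Bb,*}^2)/T}$. The main obstacle I anticipate is the bookkeeping of the various factors of $n$ (from the single-sample stochastic gradient versus the $\frac1n$-averaged objective versus uniform sampling) and of $\gamma$ (from the rescaled quadratic mirror map), together with pinning down the dual-norm on the product space so that the entropic block genuinely produces the $\log 2$ diameter and the $\ell_\infty$-type gradient bound controlled by $C_{\Bb,*}$; everything else is a direct application of textbook OMD analysis.
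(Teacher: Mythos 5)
Your overall route---viewing \Cref{alg:srw_dac} as stochastic mirror descent on the convex--concave problem over $\Fnb{\gamma}\times\Delta_2^n$, bounding a diameter term and the dual norms of the stochastic gradients, and converting the regret into a duality-gap bound on the averaged iterates via convexity--concavity, Jensen, and unbiasedness---is the same as the paper's, which instantiates the two-block stochastic saddle-point mirror-descent bound (\Cref{lemma:mirror_descent_convergence_saddle_point}) with the \emph{unscaled} per-block mirror maps $\varphi_\theta(\theta)=\frac12\nbr{\theta-\theta^*}_{\Fcal}^2$ and the row-wise negative entropy on $\Delta_2^n$. However, as written your plan has two concrete problems. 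First, the factor $n$ you attach to the stochastic gradient is spurious: the objective is the \emph{average} $\wh L^\wdac=\frac1n\sum_{i}\wh L_i^\wdac$, so $\nabla\wh L_{i_t}^\wdac$ with $i_t$ uniform is already an unbiased estimator of $\nabla\wh L^\wdac$, and assumptions (i)--(ii) are precisely bounds on $\E_{i_t}\nbr{\nabla_\theta\wh L_{i_t}^\wdac}_{\Fcal,*}^2$ and on $\E_{i_t}\nbr{\nabla_\Bb\wh L_{i_t}^\wdac}_{1,2,*}^2$ (the latter because the $\Bb$-gradient is supported on row $i_t$ with entries $\lossce$ and $\regdac$, so its dual norm is their maximum). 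With your extra $n$, the accounting you attempt gives $\E_{i_t}\nbr{n\nabla_\theta\wh L_{i_t}^\wdac}_{\Fcal,*}^2\le n^2C_{\theta,*}^2$, not $C_{\theta,*}^2$; the claim that ``the $1/n$ from uniform sampling cancels one power of $n$'' is not correct, and the resulting bound would be inflated by a factor of order $n$ relative to the stated one.

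Second, folding both blocks into the single mirror map $\frac{1}{2\gamma^2}\nbr{\theta}_{\Fcal}^2+\frac1n\sum_{i,j}\iloc{\Bb}{i,j}\log\iloc{\Bb}{i,j}$ with one shared step size does not describe \Cref{alg:srw_dac}: with that rescaling, a joint OMD step of rate $\eta$ is a plain gradient step of effective size $\eta\gamma^2$ on $\theta$ and a multiplicative-weights step of effective size $n\eta$ on $\Bb$, whereas the algorithm uses literally $\eta_\theta=\eta_{\Bb}$ equal to the stated value. So either you are analyzing a different algorithm, or you must move the scalings out of the mirror maps: keep $\frac12\nbr{\theta-\theta^*}_{\Fcal}^2$ (1-strongly convex w.r.t.\ $\nbr{\cdot}_{\Fcal}$, diameter $R^2_{\Fnb{\gamma}}\le\gamma^2/2$) and the unscaled entropy (1-strongly convex w.r.t.\ $\nbr{\cdot}_{1,2}$ by a row-wise Pinsker argument, diameter $R^2_{\Delta_2^n}\le n$), and let the two-block lemma balance the blocks through the common learning rate. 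That is exactly the paper's proof, and it is what reproduces both the stated $\eta_\theta=\eta_{\Bb}$ and the constant $2\sqrt{5\rbr{\gamma^2C_{\theta,*}^2+2nC_{\Bb,*}^2}/T}$; your plan needs these two corrections before it yields the proposition as stated.
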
 

\begin{proof}[Proof of \Cref{prop:convergence_srw_dac_formal}]
The proof is an application of the standard convergence guarantee for the online mirror descent on saddle point problems, as recapitulated in \Cref{lemma:mirror_descent_convergence_saddle_point}.  

Specifically, for $\Bb \in \Delta_2^n$, we use the norm $\nbr{\Bb}_{1,2} \dfeq \sqrt{\sum_{i=1}^n \rbr{\sum_{j=1}^2 \abbr{\iloc{\Bb}{i,j}}}^2}$ with its dual norm $\nbr{\Bb}_{1,2,*} \dfeq \sqrt{\sum_{i=1}^n \rbr{\max_{j \in [2]} \abbr{\iloc{\Bb}{i,j}}}^2}$. 
We consider a mirror map $\varphi_{\Bb}: [0,1]^{n \times 2} \to \R$ such that $\varphi_{\Bb}\rbr{\Bb} = \sum_{i=1}^n \sum_{j=1}^2 \iloc{\Bb}{i,j} \log \iloc{\Bb}{i,j}$. We observe that, since $\iloc{\Bb}{i,:}, \iloc{\Bb'}{i,:} \in \Delta_2$ for all $i \in [n]$,
\begin{align*}
    D_{\varphi_{\Bb}}\rbr{\Bb, \Bb'} 
    = \sum_{i=1}^n \sum_{j=1}^2 \iloc{\Bb}{i,j} \log \frac{\iloc{\Bb}{i,j}}{\iloc{\Bb'}{i,j}} 
    \ge \frac{1}{2} \sum_{i=1}^n \rbr{\sum_{j=1}^2 \abbr{\iloc{\Bb}{i,j} - \iloc{\Bb'}{i,j}}}^2
    = \frac{1}{2} \nbr{\Bb - \Bb'}_{1,2}^2,
\end{align*}
and therefore $\varphi_{\Bb}$ is $1$-strongly convex with respect to $\nbr{\cdot}_{1,2}$. With such $\varphi_\Bb$, we have the associated Fenchel dual $\varphi^*_\Bb\rbr{\Gb} = \sum_{i=1}^n \log \rbr{\sum_{j=1}^2 \exp\rbr{\iloc{\Gb}{i,j}}}$, along with the gradients
\begin{align*}
    \iloc{\nabla \varphi_\Bb\rbr{\Bb}}{i,j} = 1 + \log \iloc{\Bb}{i,j},
    \quad
    \iloc{\nabla \varphi^*_\Bb\rbr{\Gb}}{i,j} = \frac{\exp\rbr{\iloc{\Gb}{i,j}}}{\sum_{j=1}^2 \exp\rbr{\iloc{\Gb}{i,j}}},
\end{align*}
such that the mirror descent update on $\Bb$ is given by
\begin{align*}
    \iloc{\rbr{\Bb_{t+1}}}{i,j} 
    = &\nabla\varphi_\Bb^*\rbr{\nabla\varphi_\Bb\rbr{\Bb_t} - \eta_\Bb \cdot \nabla_\Bb\wh L^\wdac_{i_t}\rbr{\theta_t, \Bb_{t}}}
    \\
    = &\frac{\iloc{\rbr{\Bb_{t}}}{i,j} \exp\rbr{\eta_\Bb \cdot \iloc{\rbr{\nabla_\Bb \wh L^\wdac_{i_t}\rbr{\theta_t, \Bb_t}}}{i,j}}}{\sum_{j=1}^2 \iloc{\rbr{\Bb_{t}}}{i,j} \exp\rbr{\eta_\Bb \cdot \iloc{\rbr{\nabla_\Bb \wh L^\wdac_{i_t}\rbr{\theta_t, \Bb_t}}}{i,j}}}.
\end{align*}    
For $i_t \sim [n]$ uniformly, the stochastic gradient with respect to $\Bb$ satisfies
\begin{align*}
    &\E_{i_t \sim [n]}\sbr{\nbr{\nabla_\Bb \wh L^\wdac_{i_t}\rbr{\theta_t, \Bb_t}}_{1,2,*}^2}
    \\
    = &\frac{1}{n} \sum_{i_t=1}^n \max\cbr{ \lossce\rbr{\theta_t;\rbr{\xb_{i_t},\yb_{i_t}}}, \regdac\rbr{\theta_t;\xb_{i_t},A_{i_t,1},A_{i_t,2}} }^2
    \le C_{\Bb,*}^2.
\end{align*}
Further, in the distance induced by $\varphi_\Bb$, we have
\begin{align*}
    R_{\Delta_2^n}^2 
    \dfeq \max_{\Bb \in \Delta_2^n} \varphi_{\Bb}\rbr{\Bb} - \min_{\Bb \in \Delta_2^n} \varphi_{\Bb}\rbr{\Bb}
    = 0 - \sum_{i=1}^n \sum_{j=1}^2 \frac{1}{2} \log \frac{1}{2}
    = n.
\end{align*}

Meanwhile, for $\theta \in \Fnb{\gamma}$, we consider the norm $\nbr{\theta}_\Fcal \dfeq \sqrt{\abr{\theta,\theta}_\Fcal}$ induced by the inner product that characterizes $\Fcal_\theta$, with the associated dual norm $\nbr{\cdot}_{\Fcal,*}$. We use a mirror map $\varphi_\theta:\Fcal_\theta \to \R$ such that $\varphi_\theta\rbr{\theta} = \frac{1}{2}\nbr{\theta-\theta^*}^2_{\Fcal}$. By observing that
\begin{align*}
    D_{\varphi_\theta}\rbr{\theta,\theta'} = \frac{1}{2} \nbr{\theta-\theta'}_{\Fcal}^2 \quad \forall~ \theta,\theta' \in \Fcal.
\end{align*}
we have $\varphi_\theta$ being $1$-strongly convex with respect to $\nbr{\cdot}_{\Fcal}$.
With the gradient of $\varphi_\theta$, $\nabla\varphi_\theta(\theta) = \theta-\theta^*$, and that of its Fenchel dual $\nabla\varphi_\theta^*(g)=g+\theta^*$, at the $(t+1)$-th iteration, we have
\begin{align*}
    \theta_{t+1} 
    = \nabla\varphi_\theta^*\rbr{\nabla\varphi_\theta\rbr{\theta_t} - \eta_\theta \cdot \nabla_\theta\wh L^\wdac_{i_t}\rbr{\theta_t, \Bb_{t+1}}}
    = \theta_t - \eta_\theta \cdot \nabla_\theta\wh L^\wdac_{i_t}\rbr{\theta_t, \Bb_{t+1}}.
\end{align*}
For $i_t \sim [n]$ uniformly, the stochastic gradient with respect to $f$ satisfies that
\begin{align*}
    \E_{i_t \sim [n]}\sbr{\nbr{\nabla_\theta \wh L^\wdac_{i_t}\rbr{\theta_t, \Bb_{t+1}}}_{\Fcal,*}^2}
    = \frac{1}{n} \sum_{i_t=1}^n \nbr{\nabla_\theta \wh L_{i_t}^\wdac\rbr{\theta_t,\Bb_{t+1}}}_{\Fcal,*}^2 
    \le C_{\theta,*}^2.
\end{align*}
Further, in light of the definition of $\Fnb{\gamma}$, since $\theta^* \in \Fnb{\gamma}$, with $\theta^* = \argmin_{\theta \in \Fnb{\gamma}} \varphi_\theta(\theta)$ and $\theta' = \argmax_{\theta \in \Fnb{\gamma}} \varphi_\theta(\theta)$, we have 
\begin{align*}
    R_{\Fnb{\gamma}}^2 
    \dfeq \max_{\theta \in \Fnb{\gamma}} \varphi_\theta\rbr{\theta} - \min_{\theta \in \Fnb{\gamma}} \varphi_\theta\rbr{\theta}
    = \frac{1}{2}\nbr{\theta' - \theta^*}_{\Fcal}^2 
    \le \frac{\gamma^2}{2}.
\end{align*}

Finally, leveraging \Cref{lemma:mirror_descent_convergence_saddle_point} completes the proof.
\end{proof}

We recall the standard convergence guarantee for online mirror descent on saddle point problems. In general, we consider a stochastic function $F:\Ucal \times \Vcal \times \Ical \to \R$ with the randomness of $F\rbr{u,v;i}$ on $i \in \Ical$. 
Overloading notation $\Ical$ both as the distribution of $i$ and as the support, we are interested in solving the saddle point problem on the expectation function
\begin{align}\label{eq:apx_omd_saddle_point_problem}
    \min_{u \in \Ucal} \max_{v \in \Vcal} f\rbr{u,v}
    \quad \t{where} \quad
    f\rbr{u,v} \dfeq \E_{i \sim \Ical}\sbr{F\rbr{u,v;i}}.
\end{align}

\begin{assumption}\label{ass:apx_omd_objective}
Assume that the stochastic objective satisfies the following:
\begin{enumerate}[label=(\roman*)]
    \item For every $i \in \Ical$, $F\rbr{\cdot,v,i}$ is convex for all $v \in \Vcal$ and $F\rbr{u,\cdot,i}$ is concave for all $u \in \Ucal$.
    \item The stochastic subgradients $G_u\rbr{u,v;i} \in \partial_u F\rbr{u,v;i}$ and $G_v\rbr{u,v;i} \in \partial_v F\rbr{u,v;i}$ with respect to $u$ and $v$ evaluated at any $\rbr{u,v} \in \Ucal \times \Vcal$ provide unbiased estimators for some respective subgradients of the expectation function: for any $\rbr{u,v} \in \Ucal \times \Vcal$, there exist some $g_u\rbr{u,v} \dfeq \E_{i \sim \Ical}\sbr{G_u\rbr{u,v;i}} \in \partial_u f\rbr{u,v}$ and $g_v\rbr{u,v} \dfeq \E_{i \sim \Ical}\sbr{G_v\rbr{u,v;i}} \in \partial_v f\rbr{u,v}$.
    \item Let $\nbr{\cdot}_\Ucal$ and $\nbr{\cdot}_\Vcal$ be arbitrary norms that are well-defined on $\Ucal$ and $\Vcal$, while $\nbr{\cdot}_{\Ucal,*}$ and $\nbr{\cdot}_{\Vcal,*}$ be their respective dual norms. There exist constants $C_{u,*}, C_{v,*} > 0$ such that
    \begin{align*}
        \E_{i \sim \Ical}\sbr{\nbr{G_u\rbr{u,v;i}}_{\Ucal,*}^2} \le C_{u,*}^2
        \quad\t{and}\quad
        \E_{i \sim \Ical}\sbr{\nbr{G_v\rbr{u,v;i}}_{\Vcal,*}^2} \le C_{v,*}^2
        \quad\forall~ \rbr{u,v} \in \Ucal \times \Vcal.
    \end{align*}    
\end{enumerate} 
\end{assumption}   

For online mirror descent, we further introduce two mirror maps that induce distances on $\Ucal$ and $\Vcal$, respectively.
\begin{assumption}\label{ass:apx_omd_mirror_map}
Let $\varphi_u: \Dcal_u \to \R$ and $\varphi_v: \Dcal_v \to \R$ satisfy the following:
\begin{enumerate}[label=(\roman*)]
    \item $\Ucal \subseteq \Dcal_u \cup \partial \Dcal_u$, $\Ucal \cap \Dcal_u \neq \emptyset$ and $\Vcal \subseteq \Dcal_v \cup \partial \Dcal_v$, $\Vcal \cap \Dcal_v \neq \emptyset$.
    \item $\varphi_u$ is $\rho_u$-strongly convex with respect to $\nbr{\cdot}_\Ucal$; $\varphi_v$ is $\rho_v$-strongly convex with respect to $\nbr{\cdot}_\Vcal$.
    \item $\lim_{u \to \partial \Dcal_u} \nbr{\nabla \varphi_u(u)}_{\Ucal,*} = \lim_{v \to \partial \Dcal_v} \nbr{\nabla \varphi_v(v)}_{\Vcal,*} = +\infty$.
\end{enumerate}
\end{assumption}
Given the learning rates $\eta_u, \eta_v$, in each iteration $t=1,\dots,T$, the online mirror descent samples $i_t \sim \Ical$ and updates 
\begin{align}\label{eq:apx_omd_update}
    &v_{t+1} = \argmin_{v \in \Vcal}\ -\eta_v \cdot G_v\rbr{u_t,v_t;i_t}^\top v + D_{\varphi_v}\rbr{v, v_t},
    \nonumber \\
    &u_{t+1} = \argmin_{u \in \Ucal}\ \eta_u \cdot G_u\rbr{u_t,v_{t+1};i_t}^\top u + D_{\varphi_u}\rbr{u, u_t},
\end{align}
where $D_{\varphi}\rbr{w,w'} = \varphi(w)-\varphi(w')-\nabla\varphi(w')^\top(w-w')$ denotes the Bregman divergence.

We measure the convergence of the saddle point problem in the duality gap:
\begin{align*}
    \Ecal\rbr{\overline{u}_T, \overline{v}_T} \dfeq \max_{v \in \Vcal} f\rbr{\overline{u}_T, v} - \min_{u \in \Ucal} f\rbr{u, \overline{v}_T}
\end{align*}
such that, with
\begin{align*}
    R_\Ucal \dfeq \sqrt{\max_{u \in \Ucal \cap \Dcal_u} \varphi_u(u) - \min_{u \in \Ucal \cap \Dcal_u} \varphi_u(u)}
    \quad\t{and}\quad
    R_\Vcal \dfeq \sqrt{\max_{v \in \Vcal \cap \Dcal_v} \varphi_v(v) - \min_{v \in \Vcal \cap \Dcal_v} \varphi_v(v)},
\end{align*}
the online mirror descent converges as follows.
\begin{lemma}[\citep{nemirovski2009omd} (3.11)]\label{lemma:mirror_descent_convergence_saddle_point}
Under \Cref{ass:apx_omd_objective} and \Cref{ass:apx_omd_mirror_map}, when taking constant learning rates $\eta_u = \eta_v = 2 \Big/ \sqrt{5T \rbr{\frac{2 R_\Ucal^2}{\rho_u} C_{u,*}^2 + \frac{2 R_\Vcal^2}{\rho_v} C_{v,*}^2}}$, with $\overline{u}_T = \frac{1}{T} \sum_{t=1}^T u_t$ and $\overline{v}_T = \frac{1}{T} \sum_{t=1}^T v_t$,
\begin{align*}
    \E\sbr{\Ecal\rbr{\overline{u}_T, \overline{v}_T} }
    \le 2 \sqrt{\frac{10 \rbr{\rho_v R_\Ucal^2 C_{u,*}^2 + \rho_u R_\Vcal^2 C_{v,*}^2}}{\rho_u \rho_v \cdot T}}.
\end{align*}
\end{lemma}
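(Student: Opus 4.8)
The plan is to reproduce the classical stochastic mirror descent argument for monotone saddle point problems (this is exactly the bound of Nemirovski–Juditsky–Lan–Shapiro, whose proof the argument below follows). First I would collapse the primal and dual iterates into a single variable $z_t = \rbr{u_t, v_t} \in \Zcal \dfeq \Ucal \times \Vcal$ and equip $\Zcal$ with the product mirror map $\varphi\rbr{z} = a\,\varphi_u\rbr{u} + b\,\varphi_v\rbr{v}$, where the positive weights $a,b$ are chosen so that, in the resulting norm $\nbr{z}^2 = a\rho_u\nbr{u}_\Ucal^2 + b\rho_v\nbr{v}_\Vcal^2$, the squared radius $R_\Zcal^2 = \max\varphi - \min\varphi$ over $\Zcal$ and the stochastic second moment $\E\nbr{\cdot}_{\Zcal,*}^2$ combine into the balanced quantity $\tfrac{R_\Ucal^2}{\rho_u}C_{u,*}^2 + \tfrac{R_\Vcal^2}{\rho_v}C_{v,*}^2$ appearing in the statement. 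Define the stochastic operator $\Phi\rbr{z;i} = \rbr{G_u\rbr{u,v;i},\, -G_v\rbr{u,v;i}}$ and its expectation $\phi\rbr{z} = \rbr{g_u\rbr{u,v},\, -g_v\rbr{u,v}}$; by \Cref{ass:apx_omd_objective} the map $\phi$ is monotone and $\Phi\rbr{z;i}$ is an unbiased estimator of it, so the updates \Cref{eq:apx_omd_update} constitute one mirror descent step on $\Zcal$ against $\Phi\rbr{z_t;i_t}$, the Gauss–Seidel ordering ($v$ first, then $u$ at the new $v$) contributing only an extra Bregman term $D_{\varphi_v}\rbr{v,v_{t+1}}$ to be absorbed.

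Second I would prove the one-step inequality: from the optimality condition of the proximal update, the three-point identity for the Bregman divergence, and $\rho$-strong convexity of $\varphi$, for every fixed $z \in \Zcal$,
\begin{equation*}
    \eta \abr{\Phi\rbr{z_t;i_t},\, z_t - z} \le D_\varphi\rbr{z, z_t} - D_\varphi\rbr{z, z_{t+1}} + \frac{\eta^2}{2\rho}\nbr{\Phi\rbr{z_t;i_t}}_{\Zcal,*}^2,
\end{equation*}
with the sequential ordering producing an additional term that rearranges to something nonpositive. Splitting $\abr{\Phi\rbr{z_t;i_t}, z_t - z} = \abr{\phi\rbr{z_t}, z_t - z} + \abr{\Phi\rbr{z_t;i_t} - \phi\rbr{z_t}, z_t - z}$, convexity–concavity gives, for $z = \rbr{u,v}$,
\begin{equation*}
    \abr{\phi\rbr{z_t},\, z_t - z} = \abr{g_u\rbr{u_t,v_t},\, u_t - u} - \abr{g_v\rbr{u_t,v_t},\, v_t - v} \ge f\rbr{u_t, v} - f\rbr{u, v_t},
\end{equation*}
so the deterministic part already controls the per-step gap.

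Third I would sum over $t = 1,\dots,T$, telescope the Bregman terms (bounded by $R_\Zcal^2$), divide by $\eta T$, and use Jensen's inequality on $f$ to pass from $\tfrac1T\sum_t\rbr{f\rbr{u_t,v} - f\rbr{u,v_t}}$ to $\Ecal\rbr{\overline u_T, \overline v_T}$ after taking the supremum over $z$. The remaining noise term $\tfrac1T\sum_t \abr{\Phi\rbr{z_t;i_t} - \phi\rbr{z_t},\, z_t - z}$ is the delicate point: for fixed $z$ it is a martingale-difference sum with zero conditional mean given the past, but the supremizing $z$ is data-dependent. I would handle this with the standard ghost-iterate trick — run a second mirror-descent recursion driven only by the increments $\Phi\rbr{z_t;i_t} - \phi\rbr{z_t}$ — which bounds its expected contribution by a constant times $\tfrac{\eta}{\rho}\sum_t \E\nbr{\Phi\rbr{z_t;i_t} - \phi\rbr{z_t}}_{\Zcal,*}^2$ plus $R_\Zcal^2/\eta$. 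Combining these pieces with the uniform second-moment bounds from \Cref{ass:apx_omd_objective} and optimizing the constant stepsize $\eta_u = \eta_v$ yields exactly the prefactor $2/\sqrt{5T\rbr{\cdots}}$ and the final bound $2\sqrt{10\rbr{\cdots}/T}$, the numeric constants $5$ and $10$ arising from the $1{:}4$ split between the deterministic regret and the (doubled) noise contribution.

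The main obstacle is the noise control with a data-dependent comparator together with the Gauss–Seidel coupling of the $u$- and $v$-updates; both are standard but require care. Moreover, recovering the precise $\sqrt{5}$ / $2\sqrt{10}$ constants (rather than a generic $O\rbr{1/\sqrt{T}}$ rate) relies on the optimal rescaling of $\varphi_u,\varphi_v$ in the product mirror map and on tuning $\eta$ against the combined radius-times-variance quantity, so the bookkeeping of constants is where most of the work lies.
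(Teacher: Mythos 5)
The paper does not actually prove this lemma: it is imported verbatim from Nemirovski et al.~\citep{nemirovski2009omd}, eq.~(3.11), and your outline is essentially a reconstruction of that reference's argument --- the weighted product mirror map on $\Ucal \times \Vcal$, the monotone operator with the one-step three-point/strong-convexity inequality, the ghost-iterate (auxiliary sequence) control of the martingale noise against a data-dependent comparator, and the constant-stepsize tuning that produces the factors $\sqrt{5}$ and $2\sqrt{10}$ from the $1{:}4$ split you describe. The only place your write-up must go beyond the citation is the point you yourself flag: the bound in \citep{nemirovski2009omd} is derived for a simultaneous update, whereas \Cref{eq:apx_omd_update} evaluates the $u$-gradient at $v_{t+1}$, which depends on $i_t$, so the Gauss--Seidel coupling costs you conditional unbiasedness of the $u$-component of the noise as well as an extra Bregman term, and handling it rigorously requires more than the remark that the additional term ``rearranges to something nonpositive''; that is the step to carry out carefully (using the bounded second moments in \Cref{ass:apx_omd_objective}) rather than assert.
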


\begin{example}[Binary linear pixel-wise classifiers with convex and continuous objectives]\label{ex:convex_continuous}
    We consider a pixel-wise binary classification problem with $\Xcal=[0,1]^d$, augmentations $A:\Xcal \to \Xcal$ for all $A \sim \Acal$, and a class of linear ``UNets'',
    \begin{align*}
        \Fcal = \csepp{f_\theta:\Xcal \to [0,1]^d}{f_\theta\rbr{\xb} = \sigma\rbr{\thetab_d \thetab_e^\top \xb} = \psi_\theta\rbr{\phi_\theta\rbr{\xb}},~ \phi_\theta\rbr{\xb} = \frac{1}{\sqrt{d}}\thetab_e^\top \xb},
    \end{align*}
    where the parameter space $\theta = \rbr{\thetab_e, \thetab_d} \in \Fcal_\theta = \SSS^{d-1} \times \SSS^{d-1}$ is equipped with the $\ell_2$ norm $\nbr{\theta}_{\Fcal} = \rbr{\nbr{\thetab_e}_2^2 + \nbr{\thetab_d}_2^2}^{1/2}$; 
    $\sigma:\R^d \to [0,1]^d$ denotes entry-wise application of the sigmoid function $\sigma(z) = (1+e^{-z})^{-1}$; and 
    the latent space of encoder outputs $\rbr{\Zcal,\varrho}$ is simply the real line. 
    Given the data distribution $P_\xi$, we recall that $\theta^* = \argmin_{\theta \in \Fcal_\theta} \E_{\rbr{\xb,\yb}\sim P_\xi}\sbr{\lossce\rbr{\theta;(\xb,\yb)} }$ for all $\xi \in [0,1]$ and let $\Fnb{\gamma} = \csepp{\theta \in \Fcal_\theta}{\nbr{\theta - \theta^*}_{\Fcal} \le \gamma}$ for some $\gamma = O\rbr{1/\sqrt{d}}$.
    We assume that $\abbr{\xb^\top \thetab_e^*} = O(1)$ for all $\xb \in \Xcal$. 
    Then, $\lossce\rbr{\theta;\rbr{\xb,\yb}}$ and $\regdac\rbr{\theta;\xb,A_1,A_2}$ are convex and continuous in $\theta$ for all $(\xb,\yb) \in \Xcal \times [K]^d$, $A_1,A_2 \sim \Acal^2$; while $C_{\theta,*} \le \max\rbr{2\sqrt{2}, 2\lambdac}$ and $C_{\betab,*} \le \max\rbr{O(1), 2\lambdac}$.
\end{example}

\begin{proof}[Rationale for \Cref{ex:convex_continuous}]
    Let $\yb_k = \iffun{\yb=k}$ entry-wise for $k=0,1$. We would like to show that, for any given $(\xb,\yb) \in \Xcal \times [K]^d$, $A_1,A_2 \sim \Acal^2$,
    \begin{align*}
        &\lossce\rbr{\theta} 
        = -\frac{1}{d} \rbr{\yb_1^\top \log \sigma\rbr{\thetab_d \thetab_e^\top \xb} + \yb_0^\top \log \sigma\rbr{-\thetab_d \thetab_e^\top \xb}},
        \\
        &\regdac\rbr{\theta} = \frac{\lambdac}{\sqrt{d}} \cdot \rbr{A_1(\xb)-A_2(\xb)}^\top \thetab_e
    \end{align*}
    are convex and continuous in $\theta = \rbr{\thetab_e,\thetab_d}$.

    First, we observe that $\regdac\rbr{\theta}$ is linear (and therefore convex and continuous) in $\theta$ for all $\xb \in \Xcal$, $A_1,A_2 \sim \Acal^2$, with
    \begin{align*}
        \nabla_{\thetab_e} \regdac\rbr{\theta} = \frac{\lambdac}{\sqrt{d}} \cdot \rbr{A_1(\xb)-A_2(\xb)},
        \quad
        \nabla_{\thetab_d} \regdac\rbr{\theta} = \b{0}
    \end{align*}
    such that $\nbr{\nabla_{\theta} \regdac\rbr{\theta}}_{\Fcal,*} \le 2 \lambdac$.
    
    Meanwhile, with $\zb\rbr{\theta} = \thetab_d \thetab_e^\top \xb$, we have $\lossce\rbr{\theta} = -\frac{1}{d} \rbr{\yb_1^\top \log \sigma\rbr{\zb\rbr{\theta}} + \yb_0^\top \log \sigma\rbr{-\zb\rbr{\theta}}}$ being convex and continuous in $\zb\rbr{\theta}$:
    \begin{align*}
        \nabla_{\zb}^2 \lossce\rbr{\theta} = \frac{1}{d} \diag\rbr{\sigma\rbr
        {\zb\rbr{\theta}}} \diag\rbr{1-\sigma\rbr
        {\zb\rbr{\theta}}} \ageq 0.
    \end{align*}
    Therefore, $\lossce\rbr{\theta}$ is convex and continuous in $\theta$ for all $\rbr{\xb,\yb} \in \Xcal \times [K]^d$:
    \begin{align*}
        \underbrace{\nabla_{\theta}^2 \lossce\rbr{\theta}}_{2d \times 2d} = 
        \bmat{\xb \thetab_d^\top \\ \rbr{\thetab_e^\top \xb} \Ib_{d}}
        \rbr{\frac{1}{d} \diag\rbr{\sigma\rbr
        {\zb\rbr{\theta}}} \diag\rbr{1-\sigma\rbr
        {\zb\rbr{\theta}}}}
        \bmat{\xb \thetab_d^\top & \rbr{\thetab_e^\top \xb} \Ib_{d}}
        \ageq 0,
    \end{align*}
    where $\Ib_d$ denotes the $d \times d$ identity matrix.
    Further, from the derivation, we have
    \begin{align*}
        &\nabla_{\thetab_e} \lossce\rbr{\theta} = \frac{1}{d} \thetab_d^\top \rbr{\sigma\rbr{\thetab_d \thetab_e^\top \xb}-\yb} \xb,
        \quad
        \nabla_{\thetab_d} \lossce\rbr{\theta} = \frac{\thetab_e^\top \xb}{d} \rbr{\sigma\rbr{\thetab_d \thetab_e^\top \xb}-\yb}
    \end{align*}
    such that $\nbr{\nabla_{\theta} \lossce\rbr{\theta}}_{\Fcal,*} = \sqrt{\nbr{\nabla_{\thetab_e} \lossce\rbr{\theta}}_2^2 + \nbr{\nabla_{\thetab_d} \lossce\rbr{\theta}}_2^2} \le 2 \sqrt{2}$.

    Finally, knowing $\nbr{\nabla_{\theta} \lossce\rbr{\theta}}_{\Fcal,*}  \le 2 \sqrt{2}$ and $\nbr{\nabla_{\theta} \regdac\rbr{\theta}}_{\Fcal,*} \le 2 \lambdac$, we have
    \begin{align*}
        \nbr{\nabla_\theta \wh L_i^\wdac\rbr{\theta,\betab}}_{\Fcal,*} 
        \le \iloc{\betab}{i} \nbr{\nabla_{\theta} \lossce\rbr{\theta}}_{\Fcal,*} + (1-\iloc{\betab}{i}) \nbr{\nabla_{\theta} \regdac\rbr{\theta}}_{\Fcal,*} \le \max\rbr{2\sqrt{2}, 2\lambdac}
    \end{align*}
    for all $i \in [n]$, and therefore,
    \begin{align*}
        C_{\theta,*} \le \max\rbr{2\sqrt{2}, 2\lambdac}.
    \end{align*}
    Besides, with
    \begin{align*}
        \regdac\rbr{\theta} \le \frac{\lambdac}{\sqrt{d}} \nbr{A_1(\xb)-A_2(\xb)}_2 \nbr{\thetab_e}_2 \le 2 \lambdac,
    \end{align*} 
    and since 
    \begin{align*}
        \iloc{\rbr{\thetab_d \thetab_e^\top \xb}}{j} 
        \le &\abbr{\xb^\top \thetab_e} \le \abbr{\xb^\top \rbr{\thetab_e - \thetab_e^*}} + \abbr{\xb^\top \thetab_e^*} \le \nbr{\xb}_2 \nbr{\thetab_e - \thetab_e^*}_2 + O(1)
        \\
        \le & \gamma \sqrt{d} + O(1) = O(1)
    \end{align*}
    for all $j \in [d]$, $\lossce\rbr{\theta} \le \log\rbr{1+e^{O(1)}} = O(1)$, we have 
    \begin{align*}
        C_{\betab,*} \le \max\rbr{O(1), 2\lambdac}.
    \end{align*}
\end{proof}

\section{Dice Loss for Pixel-wise Class Imbalance}\label{apx:dice}

With finite samples in practice, since the averaged cross-entropy loss (\Cref{eq:cross_entropy_loss}) weights each pixel in the image label equally, the pixel-wise class imbalance can become a problem. For example, the background pixels can be dominant in most of the segmentation labels, making the classifier prone to predict pixels as background. 

To cope with such vulnerability, \cite{chen2021transunet,cao2021swin,wong2018segmentation,taghanaki2019combo,yeung2022unified} propose to combine the cross-entropy loss with the \emph{dice loss}---a popular segmentation loss based on the overlap between true labels and their corresponding predictions in each class:
\begin{align}\label{eq:dice_loss}
    \lossdice\rbr{\theta;\rbr{\xb,\yb}} = 1 - \frac{1}{K} \sum_{k=1}^K \dsc\rbr{\iloc{f_\theta\rbr{\xb}}{:,k}, \iffun{\yb=k}},
\end{align}
where for any $\pb \in [0,1]^d$, $\qb \in \cbr{0,1}^d$, $\dsc\rbr{\pb,\qb} = \frac{2 \pb^\top \qb}{\nbr{\pb}_1 + \nbr{\qb}_1} \in [0,1]$ denotes the dice coefficient~\citep{milletari2016vnet,taghanaki2019deep}. Notice that by measuring the bounded dice coefficient for each of the $K$ classes individually, the dice loss tends to be robust to class imbalance. 

\cite{taghanaki2019combo} merges both dice and averaged cross-entropy losses via a convex combination. It is also a common practice to add a smoothing term in both the nominator and denominator of the DSC~\citep{russell2016artificial}.

Combining the dice loss (\Cref{eq:dice_loss}) with the weighted augmentation consistency regularization formulation (\Cref{eq:objective_dac_encoder_weighted_adawac}), in practice, we solve
\begin{align}\label{eq:objective_wac_dice}
    &\wh\theta^\wdac, \wh\betab \in 
    \argmin_{\theta \in \Fnb{\gamma}}~
    \argmax_{\betab \in [0,1]^n}~ 
    \cbr{\wh L^\wdac\rbr{\theta,\betab} \dfeq \frac{1}{n} \sum_{i=1}^n \wh L^\wdac_i\rbr{\theta,\betab}}
    \\
    &\wh L^\wdac_i \rbr{\theta,\betab} \dfeq 
    \lossdice\rbr{\theta;\rbr{\xb_i,\yb_i}} + 
    \iloc{\betab}{i} \cdot \lossce\rbr{\theta;\rbr{\xb_i,\yb_i}} + 
    (1-\iloc{\betab}{i}) \cdot \regdac\rbr{\theta;\xb_i,A_{i,1},A_{i,2}} \nonumber
\end{align}
with a slight modification in \Cref{alg:srw_dac} line 9:
\begin{align*}
    \theta_t \gets \theta_{t-1} - \eta_\theta \cdot \Big(
    \nabla_\theta\lossdice\rbr{\theta_{t-1};\rbr{\xb_{i_t},\yb_{i_t}}} + 
    \iloc{\rbr{\betab_t}}{i_t} \cdot \nabla_\theta\lossce\rbr{\theta_{t-1};\rbr{\xb_{i_t},\yb_{i_t}}}&
    \\
    + \rbr{1-\iloc{\rbr{\betab_t}}{i_t}} \cdot \nabla_\theta\regdac\rbr{\theta_{t-1};\xb_{i_t},A_{i_t,1},A_{i_t,2}}& \Big).
\end{align*}

\paragraph{On the influence of incorporating dice loss in experiments.} 
We note that, in the experiments, the dice loss $\lossdice$ is treated independently of \ours in \Cref{alg:srw_dac} via standard stochastic gradient descent. In particular for the comparison with hard-thresholding algorithms in \Cref{tab:trim}, we keep the updating on $\lossdice$ of the original untrimmed batch intact for both \b{trim-train} and \b{trim-ratio} to exclude the potential effect of $\lossdice$ that is not involved in reweighting.

\section{Implementation Details and Datasets}\label{app:imp}
We follow the official implementation of TransUNet\footnote{\href{https://github.com/Beckschen/TransUNet}{https://github.com/Beckschen/TransUNet}} for model training. We use the same optimizer (SGD with learning rate $0.01$, momentum $0.9$, and weight decay 1e-4). 
For the Synapse dataset, we train TransUNet for 150 epochs on the training dataset and evaluate the last-iteration model on the test dataset. For the ACDC dataset, we train TransUNet for 360 epochs in total, while validating models on the ACDC validation dataset for every 10 epochs and testing on the best model selected by the validation. 
The total number of training iterations (\ie, total number of batches) is set to be the same as that in the vanilla TransUNet~\citep{chen2021transunet} experiments. In particular, the results in \Cref{tab:synapse_sample_eff} are averages (and standard deviations) over $3$ arbitrary random seeds. The results in \Cref{tab:trim}, \Cref{tab:acdc}, and \Cref{tab:ablation} are given by the original random seed used in the TransUNet experiments.

\paragraph{Synapse multi-organ segmentation dataset (Synapse).}The Synapse dataset\footnote{See detailed description at \href{https://www.synapse.org/\#!Synapse:syn3193805/wiki/217789}{{https://www.synapse.org/\#!Synapse:syn3193805/wiki/217789}}} is multi-organ abdominal CT scans for medical image segmentation in the MICCAI 2015 Multi-Atlas Abdomen Labelling Challenge~\citep{chen2021transunet}. There are 30 cases of CT scans with variable sizes $(512 \times 512 \times 85 - 512 \times 512 \times 198)$, and slice thickness ranges from $2.5$mm to $5.0$mm. We use the pre-processed data provided by \cite{chen2021transunet} and follow their train/test split to use 18 cases for training and 12 cases for testing on 8 abdominal organs---aorta, gallbladder, left kidney (L), right kidney (R), liver, pancreas, spleen, and stomach. The abdominal organs were labeled by experience undergraduates and verified by a radiologist using MIPAV software according to the information from Synapse wiki page.

\paragraph{Automated cardiac diagnosis challenge dataset (ACDC).}The ACDC dataset\footnote{See detailed description at \href{https://www.creatis.insa-lyon.fr/Challenge/acdc/}{https://www.creatis.insa-lyon.fr/Challenge/acdc/}} is cine-MRI scans in the MICCAI 2017 Automated Cardiac Diagnosis Challenge. There are $200$ scans from $100$ patients, and each patient has two frames with slice thickness from $5$mm to $8$mm. We use the pre-processed data also provided by \cite{chen2021transunet} and follow their train/validate/test split to use $70$ patients' scans for training, $10$ patients' scans for validation, and 20 patients' scans for testing on three cardiac structures---left ventricle (LV), myocardium (MYO), and right ventricle (RV). The data were labeled by one clinical expert according to the description on ACDC dataset website.

\section{Additional Experimental Results}\label{apx:additional_experiment}

\subsection{Sample Efficiency and Robustness of \ours with UNet}\label{subsec:exp_unet}

In addition to the empirical evidence on TransUNet presented in \Cref{tab:synapse_sample_eff}, here, we demonstrate that the sample efficiency and distributional robustness of \ours extend to the more widely used UNet architecture. 
In \Cref{tab:synapse_sample_eff_unet}, analogous to \Cref{tab:synapse_sample_eff}, the experiments on the \b{full} and \b{half-slice} datasets provide evidence for the \emph{sample efficiency} of \ours compared to the baseline (ERM+SGD) on UNet. Meanwhile, the \emph{distributional robustness} of \ours with UNet is well illustrated by the \b{half-vol} and \b{half-sparse} experiments.

\begin{table}[ht]
    \caption{\ours with UNet trained on the full Synapse and its subsets
    }
    \label{tab:synapse_sample_eff_unet}
    \centering
    \begin{adjustbox}{width=1\textwidth}  
    \begin{tabular}{ll|cc|cccccccc}
    \toprule
    Training & Method &  DSC $\uparrow$ & HD95 $\downarrow$ & Aorta & Gallbladder & Kidney (L) & Kidney (R) & Liver & Pancreas & Spleen & Stomach 
    \\
    \midrule
    \multirow{2}{*}{full} 
    & baseline & 74.04 $\pm$ 1.52 & 36.65 $\pm$ 0.33 & 84.93 & 55.59 & 77.59 & 70.92 & 92.21 & 55.01 & 82.87 & 73.21
    \\
    & \ours & \best{76.71 $\pm$ 0.62} & \best{30.67 $\pm$ 2.85}
    & 85.68 & 55.19 & 80.15 & 75.45 & 94.11 & 56.19 & 87.54 & 81.39
    \\
    \midrule
    \multirow{2}{*}{half-slice} 
    & baseline & 73.09 $\pm$ 0.10 & 40.05 $\pm$ 4.99 & 83.23 & 53.18 & 74.69 & 71.51 & 92.74 & 52.81 & 83.85 & 72.71 
    \\
    & \ours & \best{75.12 $\pm$ 0.78} & \best{29.26 $\pm$ 2.16} & 85.15 & 55.77 & 79.29 & 72.47 & 93.71 & 54.93 & 86.09 & 73.53
    \\
    \midrule
    \multirow{2}{*}{half-vol} 
    & baseline & 63.21 $\pm$ 2.53 & 64.20 $\pm$ 4.46 & 79.46 & 45.79 & 55.79 & 54.91 & 88.65 & 41.61 & 71.68 & 67.77
    \\
    & \ours & \best{71.09 $\pm$ 1.14} & \best{39.95 $\pm$ 7.76} & 83.15 & 49.14 & 75.74 & 70.33 & 90.47 & 44.81 & 82.34 & 72.75
    \\
    \midrule
    \multirow{2}{*}{half-sparse} 
    & baseline & 37.30 $\pm$ 1.32 & 69.67 $\pm$ 2.89 & 61.57 & 8.33 & 57.45 & 50.44 & 60.28 & 23.51 & 17.83 & 18.99
    \\
    & \ours & \best{44.85 $\pm$ 1.03} & \best{62.40 $\pm$ 5.17} & 71.56 & 8.40 & 65.42 & 62.73 & 74.02 & 24.16 & 36.65 & 15.88
    \\
    \bottomrule
    \end{tabular}
    \end{adjustbox}
\end{table}

\paragraph{Implementation details of UNet experiments.}
For the backbone architecture of experiments in \Cref{tab:synapse_sample_eff_unet}, we use a UNet with a ResNet-34 encoder initialized with ImageNet pre-trained weights. We leverage the implementation of UNet and load the pre-trained model via the PyTorch API for segmentation models~\citep{Iakubovskii2019}. For training, we use the same optimizer (SGD with learning rate $0.01$, momentum $0.9$, and weight decay 1e-4) and the total number of epochs (150 epochs on Synapse training set) as the TransUNet experiments, evaluating the last-iteration model on the test dataset. As before, the results in \Cref{tab:synapse_sample_eff_unet} are averages (and standard deviations) over $3$ arbitrary random seeds.

\subsection{Visualization of Segmentation on ACDC dataset} 
As shown in Figure \ref{fig:acdc}, the model trained by \ours segments cardiac structures with more accurate shapes (column 1), identifies organs missed by baseline TransUNet (column 2-3) and circumvents the false-positive pixel classifications (\ie, fake predictions of background pixels as organs) suffered by the TransUNet baseline (column 4-6).

\begin{figure}[ht]
    \centering
    \includegraphics[width=.7\linewidth]{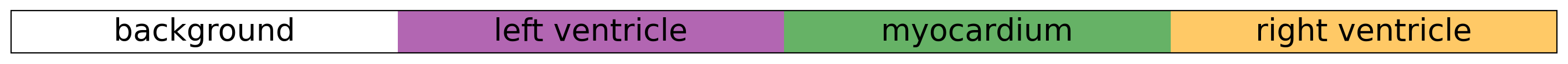}
    \includegraphics[width=.75\linewidth]{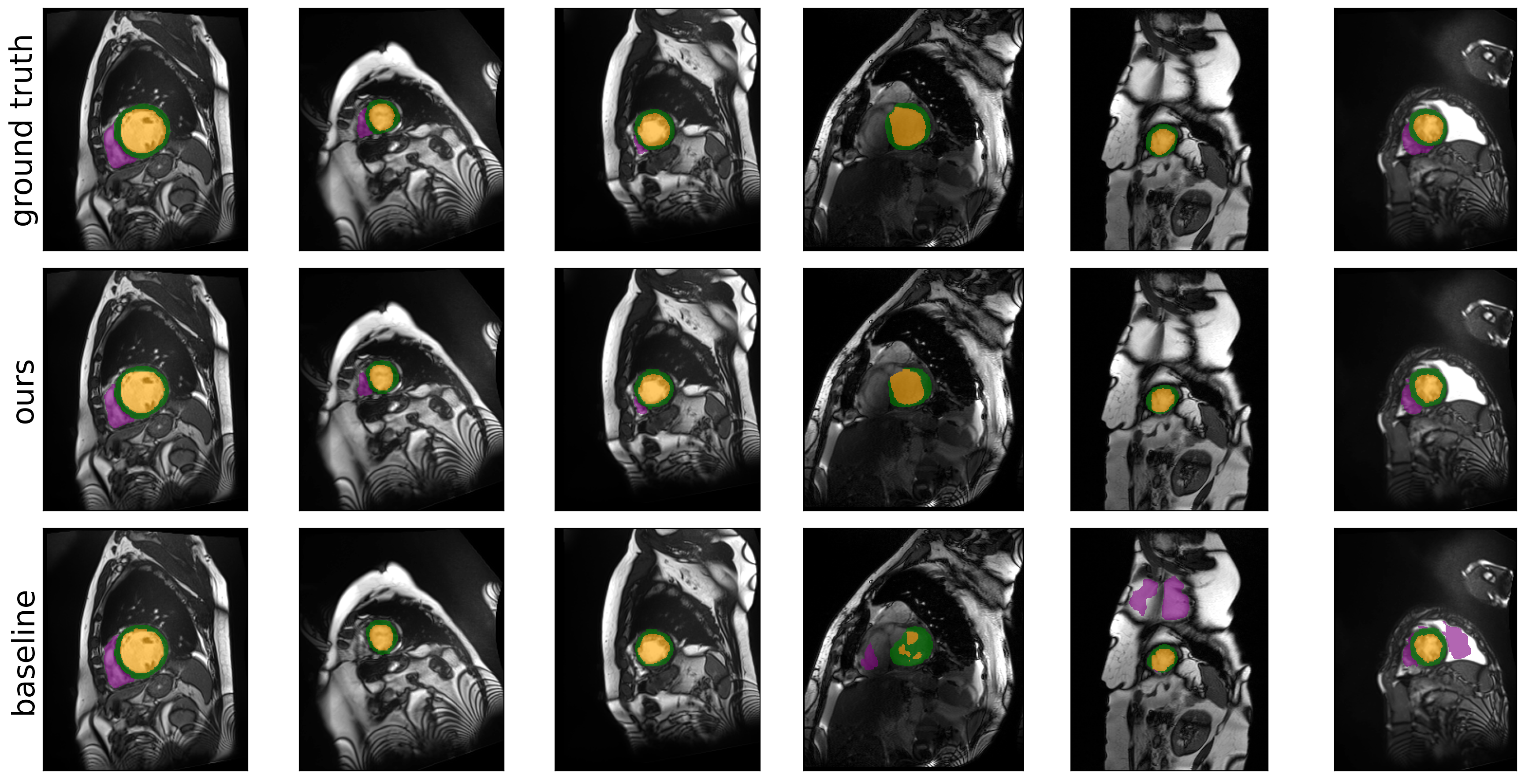}
    \caption{Visualization of segmentation results on ACDC dataset. From top to bottom: ground truth, ours, and baseline method.}
    \label{fig:acdc}
\end{figure}

\subsection{Visualization of Segmentation on Synapse with Distributional Shift} 
\Cref{fig:vis_synapse_half_slice_sparse} visualizes the segmentation predictions on $6$ Synapse test slices made by models trained via \ours(ours) and via the baseline (ERM+SGD) with TransUNet~\citep{chen2021transunet} on the \b{half-sparse} subset of the Synapse training set. We observe that, although the segmentation performances of both the baseline and \ours are compromised by the extreme scarcity of label-dense samples and the severe distributional shift, \ours provides more accurate predictions on the relative positions of organs, as well as less misclassification of organs (\eg, the baseline tends to misclassify other organs and the background as the left kidney). Nevertheless, due to the scarcity of labels, both the model trained with \ours and that trained with the baseline fail to make good predictions on the segmentation boundaries.

\begin{figure}[ht]
    \centering
    \includegraphics[width=.7\linewidth]{fig_adawac/synpase_legend.png}
    \includegraphics[width=.7\linewidth]{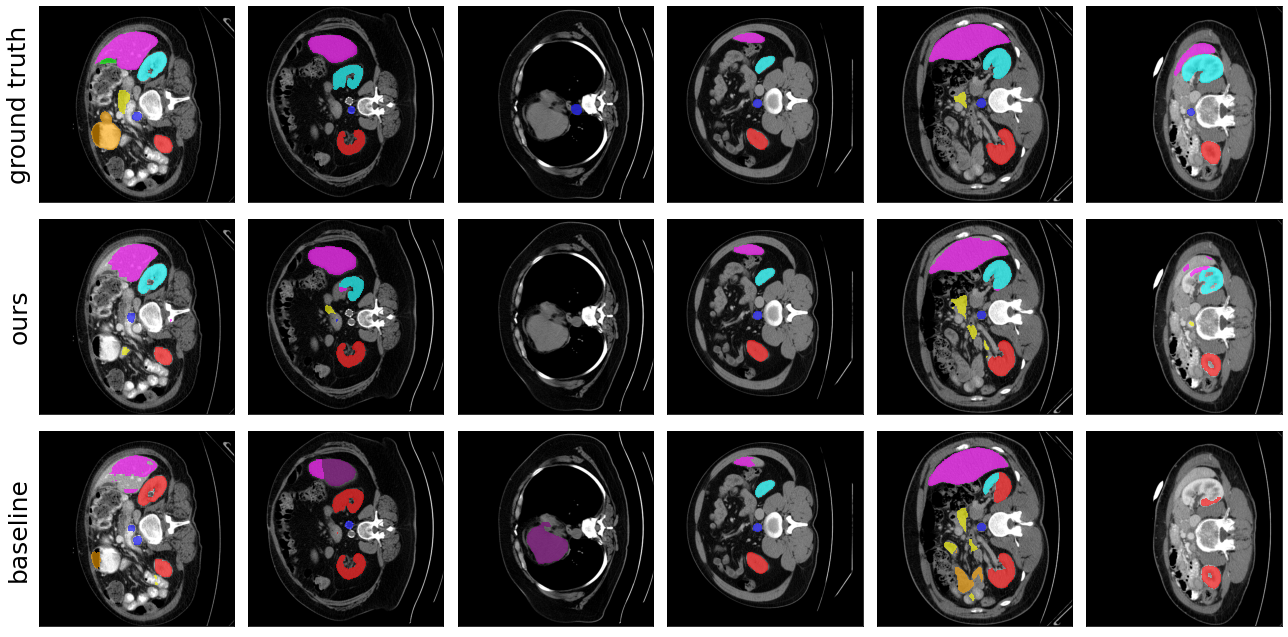}
    \caption{Visualization of segmentation predictions made by models trained via \ours(ours) and via the baseline (ERM+SGD) with TransUNet~\citep{chen2021transunet} on the \b{half-sparse} subset of the Synapse training set. Top to bottom: ground truth, ours (\ours), baseline.}
    \label{fig:vis_synapse_half_slice_sparse}
\end{figure}

\subsection{Experimental Results on Previous Metrics}
In this section, we include the results of experiments on Synapse\footnote{Note that the numbers of correct metrics and metrics in TransUNet~\citep{chen2021transunet} on ACDC dataset are the same.} dataset with metrics defined in TransUNet~\citep{chen2021transunet} for reference. In TransUNet~\citep{chen2021transunet}, DSC is 1 when the sum of ground truth labels is zero (i.e., gt.sum() == 0) while the sum of predicted labels is nonzero (i.e., pred.sum() $>$ 0). However, according to the definition of dice scores, $DSC=2|A \cap B| / (|A|+|B|), \forall A, B$, the DSC for the above case should be 0 since the intersection is 0 and the denominator is non-zero. In our evaluation, we change the special condition for DSC as 1 to pred.sum == 0 and gt.sum() == 0 instead, in which case the denominator is 0.

\begin{table}[ht]
    \caption{\ours with TransUNet trained on the full Synapse and its subsets, measured by metrics in TransUNet~\citep{chen2021transunet}.}
    \centering
    \begin{adjustbox}{width=1\textwidth}  
    \begin{tabular}{ll|cc|cccccccc}
    \toprule
    Training & Method &  DSC $\uparrow$ & HD95 $\downarrow$ & Aorta & Gallbladder & Kidney (L) & Kidney (R) & Liver & Pancreas & Spleen & Stomach 
    \\
    \midrule
    \multirow{2}{*}{full} & baseline &	77.32 &	29.23 &	87.46 &	63.54 &	82.06 &	77.76 &	94.10 &	54.06 &	85.07 &	74.54 
    \\
    & \ours &	80.16 &	25.79 &	87.23 &	63.27 &	84.58 &	81.69 &	94.62 &	58.29 &	90.63 &	81.01 
    \\
    \midrule
    \multirow{2}{*}{half-slice} & baseline & 76.24 &	24.66 &	86.26 &	57.61 &	79.32 &	76.55 &	94.34 &	54.04 &	86.20 &	75.57 
    \\
    & \ours &	78.14 &	29.75 &	86.66 &	62.28 &	81.36 &	78.84 &	94.60 &	57.95 &	85.38 &	78.01 
    \\
    \midrule
    \multirow{2}{*}{half-vol} & baseline &	72.65 &	35.86 &	83.29 &	43.70 &	78.25 &	77.25 &	92.92 &	51.32 &	83.80 &	70.66 
    \\
    & \ours &	75.93 &	34.95 &	84.45 &	60.40 &	79.59 &	76.06 &	93.19 &	54.46 &	84.91 &	74.37 
    \\
    \midrule
    \multirow{2}{*}{half-sparse} & baseline &	0.00 &	0.00 &	0.00 &	0.00 &	0.00 &	0.00 &	0.00 &	0.00 &	0.00 &	0.00 \\
    & \ours &	39.68 &	80.93 &	76.59 &	0.00 &	66.53 &	62.11 &	49.69 &	31.09 &	12.30 &	19.11 \\
    \bottomrule
    \end{tabular}
    \end{adjustbox}
    \label{tab:synapse_sample_eff_prev}
\end{table}

\begin{table}[ht]
    \centering
        \caption{AdaWAC versus hard-thresholding algorithms with TransUNet on Synapse, measured by metrics in TransUNet~\citep{chen2021transunet}.}
    \begin{adjustbox}{width=1\textwidth}  
    \begin{tabular}{l|cc|cccccccc}
    \toprule
    Method &  DSC $\uparrow$ & HD95$\downarrow$ & Aorta & Gallbladder & Kidney (L) & Kidney (R) & Liver & Pancreas & Spleen & Stomach \\
    \midrule
    baseline &	77.32 &	29.23 &	87.46 &	63.54 &	82.06 &	77.76 &	94.10 &	54.06 &	85.07 &	74.54 \\
    trim-train & 77.05 & 26.94 &  86.70 &	60.65 &	80.02 &	76.64 &	94.25 &	54.20 &	86.44 &	77.49 \\
    trim-ratio &	75.30 &	28.59 &	87.35 &	57.29 &	78.70 &	72.22 &	94.18 &	52.32 &	86.31 &	74.03 \\
    \midrule
    trim-train+ACR &	76.70 &	35.06 &	87.11 &	62.22 &	74.19 &	75.25 &	92.19 &	57.16 &	88.21 &	77.30 \\
    trim-ratio+ACR &	79.02 &	33.59 &	86.82 &	61.67 &	83.52 &	81.22 &	94.07 &	59.06 &	88.08 &	77.71 \\
    \ours (ours) &	80.16 &	25.79 &	87.23 &	63.27 &	84.58 &	81.69 &	94.62 &	58.29 &	90.63 &	81.01 \\
    \bottomrule
    \end{tabular}
    \end{adjustbox}
    \label{tab:trim_prev}
\end{table}

\begin{table}[ht]
    \caption{Ablation study of AdaWAC with TransUNet trained on Synapse, measured by metrics in TransUNet~\citep{chen2021transunet}.}
    \centering
    \begin{adjustbox}{width=1\textwidth}  
    \begin{tabular}{l|cc|cccccccc}
    \toprule
    Method &  DSC $\uparrow$ & HD95$\downarrow$ & Aorta & Gallbladder & Kidney (L) & Kidney (R) & Liver & Pancreas & Spleen & Stomach \\
    \midrule
    baseline &	77.32 &	29.23 &	87.46 &	63.54 &	82.06 &	77.76 &	94.10 &	54.06 &	85.07 &	74.54 \\
    reweight-only & 77.72 &	29.24 &	86.15 &	62.31 &	82.96 &	80.28 &	93.42 &	55.86 &	85.29 &	75.49 \\
    ACR-only &	78.93 &	31.65 &	87.96 &	62.67 &	81.79 &	80.21 &	94.52 &	60.41 &	88.07 &	75.83 \\
    \ours-0.01 &	78.98 &	27.81 &	87.58 &	61.09 &	82.29 &	80.22 &	94.90 &	55.92 &	91.63 &	78.23 \\
    \ours-1.0 &	80.16 &	25.79 &	87.23 &	63.27 &	84.58 &	81.69 &	94.62 &	58.29 &	90.63 &	81.01 \\
    \bottomrule
    \end{tabular}
    \end{adjustbox}
    \label{tab:synapse_prev}
\end{table}


\cleardoublepage
\addcontentsline{toc}{chapter}{Bibliography}
\bibliographystyle{plain}
\bibliography{ref}

\printindex

\begin{vita} 
\noindent
Yijun Dong was born in Shanghai, China in October 1995 and grew up there before college. She obtained a Bachelor of Science in Applied Mathematics and Engineering Science from Emory University in May 2018 where she got her first taste of research in biophysics and soft matter physics. In August 2018, she began her doctoral study in Computational Science, Engineering, and Mathematics at the Oden Institute of the University of Texas at Austin. Her primary research interests are randomized numerical linear algebra and statistical learning theory. In 2021 and 2022, she spent two summers at Dell Technologies working on edge computing and semi-supervised tabular learning. Upon completing her graduate study in May 2023, she expects to continue her postdoctoral research as a Courant Instructor at the Courant Institute of New York University.
\end{vita}

\end{document}